\newcommand{\highlight}[2]{\colorbox{#1!17}{$\displaystyle #2$}}
\renewcommand{\highlight}[2]{\colorbox{#1!17}{#2}}
\crefname{section}{Sec.}{Secs.}
\Crefname{section}{Section}{Sections}
\Crefname{table}{Table}{Tables}
\crefname{table}{Tab.}{Tabs.}
\crefname{property}{Property}{Properties}
\crefname{appproperty}{Property}{Properties}
\crefname{appendix}{App.}{Apps.}
\crefname{chapter}{Chap.}{Chaps.}
\Crefname{chapter}{Chapter}{Chapters}
\crefname{part}{Part}{Parts}
\useunder{\uline}{\ul}{}
\newcolumntype{P}[1]{>{\centering\arraybackslash}m{#1}}
\newcommand{\midsepremove}{\aboverulesep = 0mm \belowrulesep = 0mm}
\newcommand{\midsepdefault}{\aboverulesep = 0.605mm \belowrulesep = 0.984mm}
\theoremstyle{break}
\newtheorem{property}{Property}
\newtheorem{appproperty}{Property}
\newtheorem{proposition}{Proposition}
\newtheorem{appproposition}{Proposition}
\theoremstyle{nonumberplain}
\newtheorem{proof}{Proof}
\definecolor{linkColor}{HTML}{000000}
\definecolor{hrefcolor}{HTML}{750000}
\definecolor{footerhighlight}{HTML}{750000}
\definecolor{secnum}{RGB}{13,151,225}
\definecolor{ptcbackground}{RGB}{240,240,240}
\definecolor{ptctitle}{RGB}{0,177,235}
\definecolor{THEMECOLOR}{HTML}{750000}
\newcommand{\rbf}[1]{\textcolor{red}{\textbf{#1}}}
\newcommand{\bbf}[1]{\textcolor{blue}{\textbf{#1}}}
\newcommand{\rtd}{\textbf{\textcolor{BrickRed}{\faThumbsDown}}}
\newcommand{\gtu}{\textbf{\textcolor{OliveGreen}{\faThumbsUp}}}
\newcommand{\bfsc}[1]{\textsc{\textbf{#1}}}
\titleformat{\chapter}[display]
{\normalfont\color{THEMECOLOR}}
{\Huge\color{THEMECOLOR}\bfseries\sffamily\textsc\chaptertitlename\hspace*{2mm}%
	\begin{tikzpicture}[baseline={([yshift=-0.7ex]current bounding box.center)}]
		\node[fill=THEMECOLOR,rectangle,text=white, rounded corners=0.8mm] {\thechapter};
	\end{tikzpicture}
}
{1ex}
{\titlerule[1.5pt]\vspace*{1.ex}\Huge\color{black}\sffamily\textsc}
[]
\normalfont\color{black}}
\titleformat{\part}[display]
{\huge\bfseries}
{	
	\begin{tikzpicture}[overlay,remember picture,shift=(current page.north west)]
		\begin{scope}[x={(current page.north east)},y={(current page.south west)}]
			\draw[THEMECOLOR, line width=8pt] ($(1,1)+(-3.7cm,+1cm)$) -- ($(0,1)+(1cm,1cm)$) -- ($(0,0)+(1cm,-1cm)$) -- ($(1,0)+(-1cm,-1cm)$) -- ($(1,1)+(-1cm,+1cm)$) -- ($(1,1)+(-2.5cm,+1cm)$);
			\end{scope}
	\end{tikzpicture}%
	\Huge\color{THEMECOLOR}\sffamily\textsc\partname\nobreakspace\thepart\\
	\vspace{5mm}\\
	\color{THEMECOLOR}\titlerule[3pt]}
{0mm}
{\centering\huge\bfseries\color{THEMECOLOR}\sffamily\textsc}
[\vspace{2ex}{\titlerule[3pt]}]
\titlespacing{\section}{0mm}{1.5em}{-0.5\parskip}
\titlespacing{\subsection}{0mm}{0.5em}{-\parskip}
\titlespacing{\subsubsection}{0mm}{0.5em}{-\parskip}
\titlespacing{\paragraph}{0mm}{0.5em}{-\parskip}
\let\oldmainmatter\mainmatter
 \renewcommand{\mainmatter}{%
  \addtocontents{toc}{\protect\addvspace{15pt}}%
  \oldmainmatter%
 }
\bfseries\hspace{5mm} \rlap{
\newcommand\PartialToC{%
\startcontents[chapters]%
\hypersetup{linkcolor=linkColor,linktoc=all}

\begin{mdframed}[backgroundcolor=ptcbackground,hidealllines=true]
\printcontents[chapters]{p}{1}[2]{
	\begin{tcolorbox}[width=\textwidth+3\fboxsep\relax, colframe=THEMECOLOR, colback=THEMECOLOR, arc=3mm, sharp corners=east]
		\color{white}\bfseries\sffamily \large\textsc{Contents}
	\end{tcolorbox}
	\vspace{-10mm}
	
	}
\end{mdframed}%
}
\newcommand\PartialToCCh{%
\startcontents[chapters]%
\hypersetup{linkcolor=hrefcolor,linktoc=all}%

\begin{mdframed}[backgroundcolor=ptcbackground,hidealllines=true]
	\begin{tcolorbox}[width=\textwidth+3\fboxsep\relax, colframe=THEMECOLOR, colback=THEMECOLOR, arc=3mm, sharp corners=east]
			
		\color{white}\bfseries\sffamily \large\textsc{Contents}
	\end{tcolorbox}
	\vspace{-10mm}
	\begingroup
	\let\pagebreak\relax
	\printcontents[chapters]{p}{1}[2]{}
	\endgroup
	\end{mdframed}%

}
\newcommand{\chapabstract}[1]{
	\vspace{-1cm}
    \begin{quote}
        \singlespacing\small
		\large{\textbf{\textcolor{THEMECOLOR}{\textsc{Abstract}}}}\\
        \textcolor{THEMECOLOR}{\rule{13.5cm}{0.5pt}}
        \small{#1}
        \vskip-4mm
        \rule{13.5cm}{0.5pt}
\end{quote}}
\newtcolorbox{rmk}[1][]{
	breakable,
	colback=white,
	left skip=1cm,
	coltitle=black,
	fonttitle=\bfseries,
	bottomrule=0pt,
	toprule=0pt,
	leftrule=4pt,
	rightrule=0pt,
	titlerule=0pt,
	arc=0pt,
	outer arc=0pt,
	colframe=THEMECOLOR   
}	
\newcommand{\HRule}{\rule{\linewidth}{0.7mm}}
\newcommand{\ie}{\emph{i.e.},~}
\newcommand{\eg}{\emph{e.g.},~}
\DeclareMathOperator*{\argmax}{arg\,max}
\DeclareMathOperator*{\argmin}{arg\,min}
  \renewcommand{\contentsname}%
    {\rmfamily\mdseries\textsc{Table of Contents}\\ \color{THEMECOLOR}\rule{\textwidth}{2pt}}%
  \renewcommand{\listfigurename}%
    {\rmfamily\mdseries\textsc{List of Figures}\\ \color{THEMECOLOR}\rule{\textwidth}{2pt}}%
  \renewcommand{\listtablename}%
    {\rmfamily\mdseries\textsc{List of Tables}\\ \color{THEMECOLOR}\rule{\textwidth}{2pt}}%
\assignpagestyle{\part}{partplain}
\providecommand{\keywords}[1]{\textbf{\textit{Keywords---}} #1}
\providecommand{\keywordsfr}[1]{\textbf{\textit{Mots-Clé---}} #1}
\newglossaryentry{aaa_Model}{
    type=symbols,
    sort={aaa_Model},
    name={$\mathcal{F}(\cdot, \theta)$},
    description={General purpose model with parameters $\theta$}}
\newglossaryentry{aaa_params}{
    type=symbols,
    sort={aaa_params},
    name={$\theta$},
    description={Parameters of a model, \eg a neural network}}  
\newglossaryentry{aaa_image}{
    type=symbols,
    sort={aaa_image},
    name={$I$},
    description={An image}
}
\newglossaryentry{aaa_class_set}{
    type=symbols,
    sort={aaa_class_set},
    name={$\mathcal{C}$},
    description={A set of classes}
}
\newglossaryentry{aaa_class_set_base}{
    type=symbols,
    sort={aaa_class_set_base},
    name={$\mathcal{C}_{\mathrm{base}}$},
    description={Base class set}
}
\newglossaryentry{aaa_class_set_novel}{
    type=symbols,
    sort={aaa_class_set_novel},
    name={$\mathcal{C}_{\mathrm{novel}}$},
    description={Novel class set}
}
\newglossaryentry{aaa_novel_set}{
    type=symbols,
    sort={aaa_novel_set},
    name={$\mathcal{D}_{\mathrm{novel}}$},
    description={A set containing examples of the novel classes and their annotations, used for adaptation and/or fine-tuning, a.k.a support set}
}
\newglossaryentry{aaa_base_set}{
    type=symbols,
    sort={aaa_base_set},
    name={$\mathcal{D}_{\mathrm{base}}$},
    description={A set containing training examples of the base classes}
}
\newglossaryentry{aaa_bbox}{
    type=symbols,
    sort={aaa_bbox},
    name={$b$},
    description={A ground truth bounding box: $b=[x,y,w,h]$}
}
\newglossaryentry{aaa_bbox_pred}{
    type=symbols,
    sort={aaa_bbox_pred},
    name={$\hat{b}$},
    description={A predicted bounding box:  $\hat{b}=[\hat{x},\hat{y},\hat{w},\hat{h}]$}
}
\newglossaryentry{aaa_class}{
    type=symbols,
    sort={aaa_class},
    name={$c$},
    description={A semantic class, $c \in \mathcal{C}$}
}
\newglossaryentry{aaa_class_scores}{
    type=symbols,
    sort={aaa_class_scores},
    name={$l$},
    description={A vector of classification scores, $l \in [0,1]^{\mathcal{C}}$}
}
\newglossaryentry{aaa_query_features}{
    type=symbols,
    sort={aaa_query_features},
    name={$F_q$},
    description={Features extracted with a backbone from a query image $I_q$}
}
\newglossaryentry{aaa_support_features}{
    type=symbols,
    sort={aaa_support_features},
    name={$F_s^c$},
    description={Features extracted with a backbone from a support image $I_s^c$, it represents examples from the class $c$}
}
\newglossaryentry{aaa_k_shot}{
    type=symbols,
    sort={aaa_k_shot},
    name={$K$},
    description={Number of shots, \ie the number of examples available for a novel class}
}
\newglossaryentry{aaa_n_ways}{
    type=symbols,
    sort={aaa_n_ways},
    name={$N$},
    description={Number of novel classes in the few-shot setting}
}
\newglossaryentry{aaa_backbone}{
    type=symbols,
    sort={aaa_backbone},
    name={$f$},
    description={Backbone of a visual model (for classification or detection). Often implemented as a large CNN or Transformer model}
}
\newglossaryentry{aaa_neck}{
    type=symbols,
    sort={aaa_neck},
    name={$g$},
    description={Neck of a detection model, it generally consists of a Feature Pyramidal Network}
}
\newglossaryentry{aaa_head}{
    type=symbols,
    sort={aaa_head},
    name={$h$},
    description={Detection or classification head, often implemented as a lightweight CNN or MLP}
}
\newglossaryentry{aaa_proposal_number}{
    type=symbols,
    sort={aaa_proposal_number},
    name={$N_p$},
    description={Number of proposals boxes in DiffusionDet}
}
\newglossaryentry{siou_gamma}{
    type=symbols,
    sort={siou_gamma},
    name={$\gamma$},
    description={Scale-Adaptative Intersection over Union strength parameter}
}
\newglossaryentry{siou_kappa}{
    type=symbols,
    sort={siou_kappa},
    name={$\kappa$},
    description={Scale-Adaptative Intersection over Union scaling parameter}
}
\newglossaryentry{aaa_detect_label}{
    type=symbols,
    sort={aaa_detect_label},
    name={$\boldsymbol{\mathrm{y}}$},
    description={A detection label, constituted of a bounding box and a class label (or a classification score vector)}
}
\newglossaryentry{pfrcn_objectness}{
    type=symbols,
    sort={pfrcn_objectness},
    name={$o$},
    description={Objectness score for an object}
}
\newglossaryentry{pfrcn_roi_features}{
    type=symbols,
    sort={pfrcn_roi_features},
    name={$\xi_i$},
    description={Features extracted from Region of Interest $i$}
}
\newglossaryentry{pfrcn_z}{
    type=symbols,
    sort={pfrcn_z},
    name={$z$},
    description={Intermediary features within a network, \eg embedding vector or latent variable}
}
\newglossaryentry{pfrcn_phipsi}{
    type=symbols,
    sort={pfrcn_phipsi},
    name={$\Phi^c \mathrm{\,and\, } \Psi^c$},
    description={Prototype vectors for class $c$ in the RPN and the detection of Prototypical Faster R-CNN, respectively}
}
\newglossaryentry{aaf_gamma_s}{
    type=symbols,
    sort={aaf_gamma_s},
    name={$\gamma_s \mathrm{\,and\, } \gamma_q$},
    description={Global attention operators in AAF framework for query and support images, respectively}
}
\newglossaryentry{aaf_lambda_s}{
    type=symbols,
    sort={aaf_lambda_s},
    name={$\lambda_s \mathrm{\,and\, } \lambda_q$},
    description={Spatial alignment operators in AAF framework for query and support images, respectively}
}
\newglossaryentry{aaf_omega}{
    type=symbols,
    sort={aaf_omega},
    name={$\Omega$},
    description={Fusion operators in AAF framework}
}
\newglossaryentry{aaf_algined}{
    type=symbols,
    sort={aaf_algined},
    name={$A_q \mathrm{\,and\, } A_s^c$},
    description={Aligned features in AAF framework for a query image and a support image of class $c$, respectively}
}
\newglossaryentry{aaf_highlighted}{
    type=symbols,
    sort={aaf_highlighted},
    name={$H_q \mathrm{\,and\, } H_s^c$},
    description={Highlighted features in AAF framework for a query image and a support image of class $c$, respectively}
}
\newglossaryentry{aaf_merged}{
    type=symbols,
    sort={aaf_merged},
    name={$M_q^c$},
    description={Merged query features with support features for class $c$ in AAF framework}
}
\newglossaryentry{diff_alphabeta}{
    type=symbols,
    sort={diff_alphabeta},
    name={$\alpha_t \mathrm{\,and\, } \beta_t$},
    description={Diffusion noise variance parameters, $\alpha_t = 1- \beta_t$}
}
\newglossaryentry{diff_varsigma}{
    type=symbols,
    sort={diff_varsigma},
    name={$\varsigma$},
    description={DiffusionDet noise clamping parameter for box generation}
}
\newglossaryentry{siou_criterion}{
    type=symbols,
    sort={siou_criterion},
    name={$\mathfrak{C}$},
    description={A box similarity criterion, \eg IoU, GIoU, or SIoU}
}
\newglossaryentry{fsod}
{
    name=FSOD,
    description={Few-Shot Object Detection}
}
\newglossaryentry{od}
{
    name=OD,
    description={Object Detection}
}
\newglossaryentry{fsl}
{
    name=FSL,
    description={Few-Shot Learning}
}
\newglossaryentry{fsc}
{
    name=FSC,
    description={Few-Shot Classification}
}
\newglossaryentry{rsi}
{
    name=RSI,
    description={Remote Sensing Image}
}
\newglossaryentry{geoint}
{
    name=GEOINT,
    description={Geospatial Intelligence}
}
\newglossaryentry{sgd}
{
    name=SGD,
    description={Stochastic Gradient Descent}
}
\newglossaryentry{pdf}
{
    name=PDF,
    description={Probability Density Function}
}
\newglossaryentry{elbo}
{
    name=ELBO,
    description={Evidence Lower BOund}
}
\newglossaryentry{ode}
{
    name=ODE,
    description={Ordinary Differential Equation}
}
\newglossaryentry{llm}
{
    name=LLM,
    description={Large Language Model}
}
\newglossaryentry{mlp}
{
    name=MLP,
    description={Multi Layer Perceptron}
}
\newglossaryentry{gfsod}
{
    name=G-FSOD,
    description={Generalized Few-Shot Object Detection}
}
\newglossaryentry{fpn}
{
    name=FPN,
    description={Feature Pyramidal Network, introduced in \cite{lin2017feature}}
}
\newglossaryentry{cnn}
{
    name=CNN,
    description={Convolutional Neural Network}
}
\newglossaryentry{dl}
{
    name=DL,
    description={Deep Learning}
}
\newglossaryentry{ssl}
{
    name=SSL,
    description={Self-Supervised Learning}
}
\newglossaryentry{knn}
{
    name=$k$-NN,
    description={$k$-Nearest Neighbors}
}
\newglossaryentry{nms}
{
    name=NMS,
    description={Non-Maximal Suppression}
}
\newglossaryentry{gsd}
{
    name=GSD,
    description={Ground Sampling Distance}
}
\newglossaryentry{iou}
{
    name=IoU,
    description={Intersection over Union}
}
\newglossaryentry{siou}
{
    name=SIoU,
    description={Scale-Adaptative Intersection over Union}
}
\newglossaryentry{api}
{
    name=API,
    description={Application Programming Interface}
}
\newglossaryentry{onnx}
{
    name=ONNX,
    description={Open Neural Network Exchange}
}
\newglossaryentry{pca}
{
    name=PCA,
    description={Principal Component Analysis}
}
\newglossaryentry{cmos}
{
    name=CMOS,
    description={Complementary Metal-Oxide-Semiconductor}
}
\newglossaryentry{lidar}
{
    name=LIDAR,
    description={Light Detection and Ranging}
}
\newglossaryentry{dm}
{
    name=DM,
    description={Diffusion Model}
}
\newglossaryentry{ldm}
{
    name=LDM,
    description={Latent Diffusion Model}
}
\newglossaryentry{kd}
{
    name=KD,
    description={Knowledge Distillation}
}
\newglossaryentry{yolo}
{
    name=YOLO,
    description={You Only Look Once}
}
\newglossaryentry{pfrcnn}
{
    name=PFRCNN,
    description={Prototypical Faster R-CNN }
}
\newglossaryentry{xqsa}
{
    name=XQSA,
    description={Cross-Scale Query-Support Alignment}
}
\newglossaryentry{aaf}
{
    name=AAF,
    description={Alignment Attention Fusion framework}
}
\newglossaryentry{lr}
{
    name=LR,
    description={Learning Rate}
}
\newglossaryentry{cd}
{
    name=CD,
    description={Cross-Domain}
}
\newglossaryentry{fpga}
{
    name=FPGA,
    description={Field-Programmable Gate Array}
}
\newglossaryentry{gpu}
{
    name=GPU,
    description={Graphical Processing Unit}
}
\newglossaryentry{rpn}
{
    name=RPN,
    description={Region Proposal Network}
}
\newglossaryentry{roialign}
{
    name=RoI Align,
    description={Region of Interest Alignment}
}
\newglossaryentry{smb}
{
    name=SMB,
    description={Small and Midsize Business}
}
\newglossaryentry{uav}
{
    name=UAV,
    description={Unmanned Aerial Vehicle}
}
\begin{document}
\fancypagestyle{empty}{
	\fancyhf{}
	\renewcommand{\headrulewidth}{0 pt}
	\renewcommand{\footrulewidth}{0 pt}
}
\pagestyle{empty}
\newpage
\includepdf[noautoscale, width=\paperwidth, pages=-]{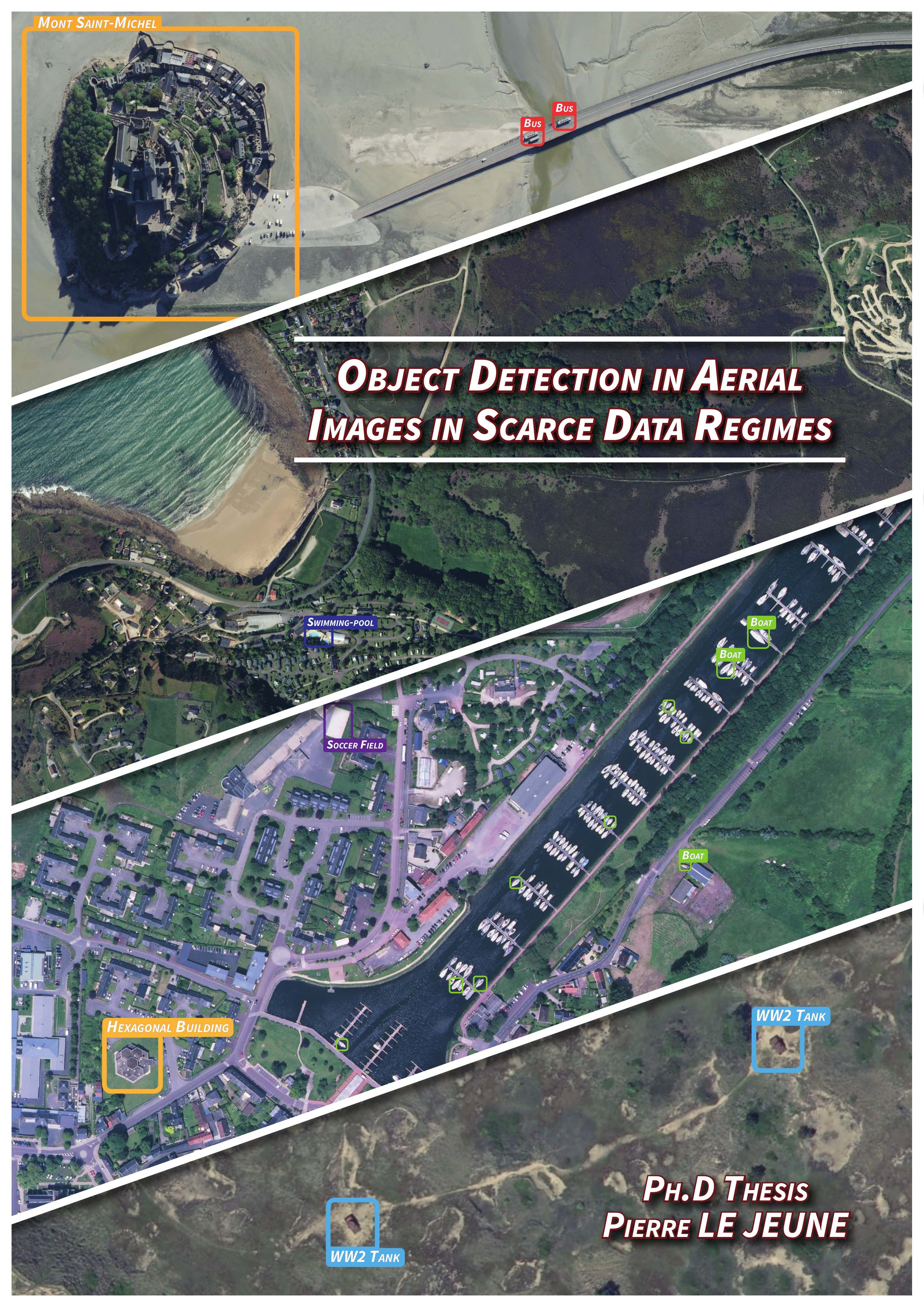}

\begin{titlepage}
    \begin{center}
        \begin{tabular}{@{\hskip -1cm}c@{\hskip 1cm}c@{\hskip 1cm}c@{\hskip 1cm}c}
            \includegraphics[height=1.2cm]{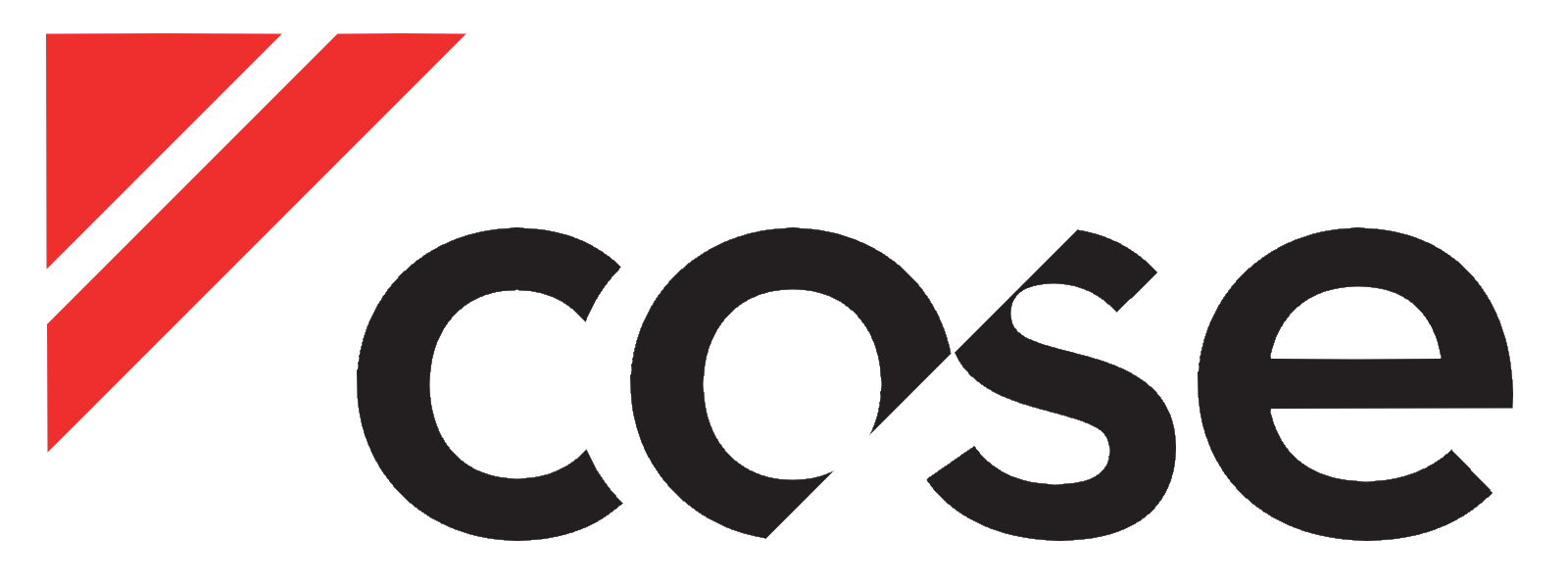} &
            \includegraphics[height=1.2cm]{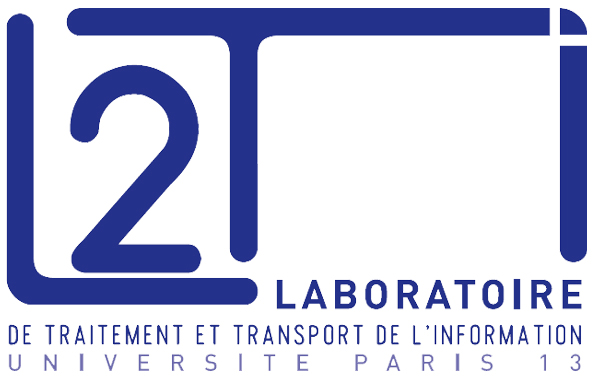} &
            \includegraphics[height=1.2cm]{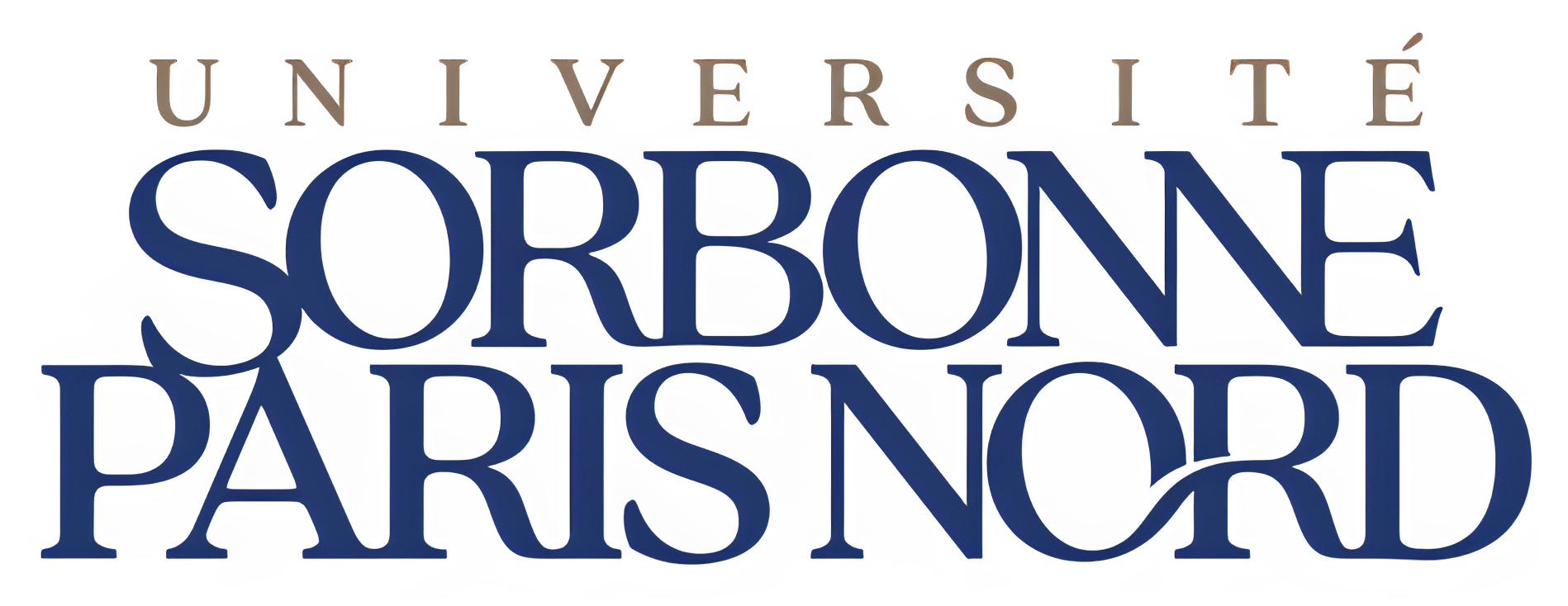} &
            \includegraphics[height=1.2cm]{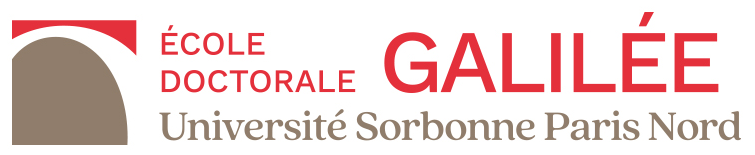}\\
        \end{tabular}
    \end{center}

    \begin{center}

        \vfill
        \vspace{6mm}

        \HRule \\[4mm]
        { \huge \bfseries \textsc{Object Detection in Aerial Images in Scarce Data Regimes} }\\[4mm]
        {\LARGE \textsc{Détection d'objets dans des images aériennes en cas de faible supervision}}\\
        \HRule \\
    \end{center}
    
    \vfill
    
    \begin{center}
        {\Large \textsc{Thèse de Doctorat}}\\[.5cm]
        Presentée et soutenue le 03-10-2023 par :\\
        \textsc{\Large Pierre LE JEUNE}\\[0.5cm] 

        en vue de l'obtention du grade de\\
        \textsc{\Large Docteur en Informatique}\\[0.8cm]

        \large{
            École doctorale Galilée (ED 146), Université Sorbonne Paris Nord\\
            Laboratoire de Traitement et Transport de l'Information (L2TI, UR 3043)\\
            COSE\\[0.5cm]
        }

        \large{\textbf{Directrice : } \textsc{Anissa MOKRAOUI}}
        
    \end{center}
    
    \vspace{1cm}

    \begin{center}
        \textbf{Membres du Jury}\\[0.2cm]
        
        \begin{tabular}{llr}
            \bfsc{Céline HUDELOT}  &  Professeur --- \textit{CentraleSupélec} & \bfsc{Rapportrice}\\
            \bfsc{Paul HONEINE}  &  Professeur --- \textit{Université de Rouen Normandie} & \bfsc{Rapporteur} \\
            \bfsc{Emmanuel DELLANDREA}  &  MCF HDR --- \textit{\'Ecole Centrale de Lyon} & \bfsc{Examinateur}\\
            \bfsc{Ismail BEN AYED}  &  Professeur --- \textit{ETS Montréal} & \bfsc{Examinateur}\\
            \bfsc{Fangchen FENG}  &  MCF --- \textit{Université Sorbonne Paris Nord} & \bfsc{Examinateur}\\
            \bfsc{Hervé GUIOT}  &  Directeur\,  --- \textit{COSE} & \bfsc{Invité} \\               
            \bfsc{Anissa MOKRAOUI}  &  Professeur --- \textit{Université Sorbonne Paris Nord} & \bfsc{Directrice} \\
        \end{tabular}
        \vspace{10mm}
        
    \end{center}	
    
    \newpage
\end{titlepage}

\pagestyle{fancy}

\renewcommand{\sectionmark}[1]{\markright{\thesection~- ~#1}}
\renewcommand{\chaptermark}[1]{\markboth{\chaptername~\thechapter~-~ #1}{}}

\fancypagestyle{main}{
\fancyhf{} 
\fancyhead[LE]{\scshape\leftmark}
\fancyhead[RO]{\scshape\rightmark}
\fancyfoot[RO]{\vspace{8mm}\vfootline\hskip\linepagesep\hspace{1mm}\thepage}
\fancyfoot[LE]{\vspace{8mm}\thepage\hskip\linepagesep\hspace{1mm}\vfootline}
\newskip\linepagesep \linepagesep 5pt\relax
  \def\vfootline{%
	\begingroup\color{footerhighlight}\rule[-990pt]{2pt}{1000pt}\endgroup}
\renewcommand{\headrulewidth}{0 pt}
\renewcommand{\footrulewidth}{0 pt}
}

\fancypagestyle{plain}{%
  \fancyhf{}%
  \fancyfoot[RO]{\vspace{8mm}\vfootline\hspace{1mm}\hskip\linepagesep\thepage}
	\fancyfoot[LE]{\vspace{8mm}\thepage\hspace{1mm}\hskip\linepagesep\vfootline}
	\newskip\linepagesep \linepagesep 5pt\relax
	\def\vfootline{%
		\begingroup\color{footerhighlight}\rule[-990pt]{2pt}{1000pt}\endgroup}
	\renewcommand{\headrulewidth}{0 pt}
	\renewcommand{\footrulewidth}{0 pt}
}


\fancypagestyle{partplain}{%
  \fancyhf{}%
  \fancyfoot[RO]{\vspace{8mm}\vfootline\hskip\linepagesep\color{THEMECOLOR}\thepage}
	\fancyfoot[LE]{\vspace{8mm}\color{THEMECOLOR}\thepage\hskip\linepagesep\vfootline}
	\newskip\linepagesep \linepagesep 5pt\relax
	\def\vfootline{%
		\begingroup\color{THEMECOLOR}\rule[-990pt]{2pt}{1000pt}\endgroup}
	\renewcommand{\headrulewidth}{0 pt}
	\renewcommand{\footrulewidth}{0 pt}
}

\pagestyle{plain}
\frontmatter
{
	\setlength{\parskip}{.7em}
	
	\titlespacing*{\section}{0pt}{.9em}{.8em}
	\renewcommand{\baselinestretch}{1.1}
	\sloppy

	\phantomsection
\clearpage

\section*{\huge \rmfamily\mdseries\textsc{Abstract}}
\addcontentsline{toc}{chapter}{Abstract} 
\vspace{-5mm}
Most contributions on Few-Shot Object Detection (FSOD) evaluate their methods on
natural images only, yet the transferability of the announced performance is not
guaranteed for applications on other kinds of images. We demonstrate this with
an in-depth analysis of existing FSOD methods on aerial images and observed a
large performance gap compared to natural images. Small objects, more numerous
in aerial images, are the cause for the apparent performance gap between natural
and aerial images. As a consequence, we improve FSOD performance on small
objects with a carefully designed attention mechanism. In addition, we also
propose a scale-adaptive box similarity criterion, that improves the training
and evaluation of FSOD methods, particularly for small objects. We also
contribute to generic FSOD with two distinct approaches based on metric learning
and fine-tuning. Impressive results are achieved with the fine-tuning method,
which encourages tackling more complex scenarios such as Cross-Domain FSOD. We
conduct preliminary experiments in this direction and obtain promising results.
Finally, we address the deployment of the detection models inside COSE's
systems. Detection must be done in real-time in extremely large images (more
than 100 megapixels), with limited computation power. Leveraging existing
optimization tools such as TensorRT, we successfully tackle this engineering
challenge.

\keywords{Object Detection, Few-Shot Learning, Few-Shot Object Detection,
Cross-Domain Adaptation, Deep Learning, Computer Vision, Intersection over
Union, Attention Mechanism, Diffusion, Query-Support Alignment}

\section*{\huge \rmfamily\mdseries\textsc{Résumé}}
\addcontentsline{toc}{chapter}{Résumé} 
\vspace{-5mm}
La plupart des contributions en Détection d'Objets \textit{Few-Shot} (FSOD)
évaluent leurs méthodes uniquement sur des images naturelles, ne garantissant
pas la transférabilité de leur performance à d'autres types d'images. Nous
démontrons ceci avec une analyse des méthodes FSOD existantes sur des images
aériennes et observons un large écart comparé aux images naturelles. Les petits
objets, plus nombreux dans les images aériennes, sont responsables de cet écart.
Ainsi, nous proposons d'améliorer la détection des petits objets avec un
mécanisme d'attention dédié. En plus, nous proposons un nouveau critère de
similarité pour boîtes englobantes, adaptatif à la taille. Il améliore
l'entraînement et l'évaluation des modèles FSOD, en particulier pour les petits
objets. Nous contribuons aussi au FSOD classique avec deux approches distinctes
basées sur le \textit{metric learning} et le \textit{fine-tuning}. Des résultats
impressionnants sont obtenus avec cette dernière méthode, ce qui encourage son
application à des scénarios plus complexes comme la détection \textit{Few-Shot}
\textit{Cross-Domain}. Finalement, nous abordons le déploiement de modèles de
détection au sein des systèmes de COSE qui doivent détecter les objets en temps
réel sur de très grandes images (plus de 100 mégapixels), avec des ressources de
calcul limitées.


\keywordsfr{Détection d'objet, Apprentissage profond, Apprentissage frugal,
Adaptation au domaine, Mécanisme d'attention, Diffusion, \textit{Intersection
over Union}, Alignement query-support}


	\cleardoublepage
	
}

{
	\hypersetup{linkcolor=black}
	
	{
		\phantomsection
		\addcontentsline{toc}{chapter}{Table of Contents}
        \tableofcontents
    }
	
	{
		\cleardoublepage
		\phantomsection
		\addcontentsline{toc}{chapter}{List of Figures}
		\listoffigures
	}

	{
		\cleardoublepage
		\phantomsection
		\addcontentsline{toc}{chapter}{List of Tables}
		\listoftables
		
	}

}

\cleardoublepage
\phantomsection
\pagebreak
\chapter*{Glossary}
\addcontentsline{toc}{chapter}{Glossary}

\printnoidxglossary[type=symbols, title={List of Symbols},nonumberlist,style=alttree]
\addcontentsline{toc}{section}{List of Symbols}

\printnoidxglossary[title={List of Acronyms},nonumberlist,style=alttree,nogroupskip=true]
\addcontentsline{toc}{section}{List of Acronyms}

\setlength{\parskip}{.7em}

\titlespacing*{\section}{0pt}{.9em}{.8em}
\renewcommand{\baselinestretch}{1.1}
\sloppy



\mainmatter

\pagestyle{main}


\markboth{\textsc{Introduction}}{\textsc{Introduction}}
\chapter*{Introduction}
\addcontentsline{toc}{chapter}{Introduction} 
\epigraph{\itshape  If a machine is expected to be infallible, it cannot also be intelligent.}{-- Alan Turing}

As an introduction to this thesis manuscript, we present the industrial context
and the motivation behind this project. First, we introduce the company COSE and
the \textit{Laboratoire de Traitement et Transport de l'Information} (L2TI) that
collaborated on this CIFRE PhD project. Then, we briefly describe what object
detection is and how the industrial constraints that weigh upon COSE influenced our
study toward low-data regimes and few-shot learning. Next, we carry out an
overview of the structure of the manuscript, with an individual summary
describing each chapter. Finally, we gather the contributions that came out of
this project. This includes research articles, accepted or submitted to peer
review conferences and journals, as well as open-source code contributions. 

\section{Industrial Context, Motivation and Objectives}
\vspace{-1em}
This PhD thesis originates from a collaboration between the L2TI laboratory from
\textit{Université Sorbonne Paris Nord} (USPN) and the company COSE. The L2TI
was founded in 1998 and is a member of the CNRS Research Federation MathSTIC (FR
3734) which includes the \textit{Laboratoire Analyse, Géométrie et Applications}
(LAGA), UMR 7539 and the \textit{Laboratoire d'Informatique de Paris Nord}
(LIPN), UMR 7030. Two main research teams coexist in the L2TI. The Multimedia
team focuses on visual information analysis and processing, while the Network
team targets information transport and network questions. This thesis falls
within the scope of the Multimedia team. 

COSE\footnote{\href{https://www.cose.fr/}{https://www.cose.fr/}} is a
highly innovative SMB with around 20 employees. It is a first-tier government
provider in the aeronautic and defense sector. COSE was born from an INRIA
start-up in the 1990s and has integrated research excellence at the heart of its
industrial process. While being relatively small, COSE has multidisciplinary
teams with expertise in various fields such as mechanic, electronic, navigation,
automation and embedded software. Its size gives COSE remarkable agility and
cost-effectiveness in comparison to its main competitors. This competitive
advantage has allowed COSE to build strong partnerships with major actors in the
aeronautic and defense areas. 

COSE develops, produces and supports aerial observation camera systems and
onboard equipment. These products are mainly designed for military use and must
therefore conform to strict quality criteria. The relationship with military
forces is handled by the Directorate General of Armaments (DGA), which is one of
the main clients of COSE. Among others, COSE currently relies on three products
that are in use by French military forces around the world (see \cref{fig:cose_product_intro}): 

\begin{itemize}[nolistsep]
    \item[-] \textbf{GlobalScanner}: a high-resolution imaging embedded system
    that provides real-time and georeferenced images. It consists of a
    high-resolution, stabilized linear sensor that can be integrated into various
    carriers such as helicopters, aircraft or UAVs. It comes with powerful
    software to operate the camera and manage image streams.  
    \item[-] \textbf{Strike}: a stabilization arm for helicopters to improve
    high-precision rifle accuracy. It improves shot accuracy and drastically
    reduces collateral damage.
    \item[-] \textbf{POD Xplorer}: a multifunctional pod for various carriers. Its
    purpose is to embed various types of payloads such as optical sensors,
    LIDAR, scientific equipment or inertial sensors.
\end{itemize}

\begin{figure}
    \centering
    \begin{subfigure}[c]{0.31\textwidth}
        \includegraphics[width=\textwidth]{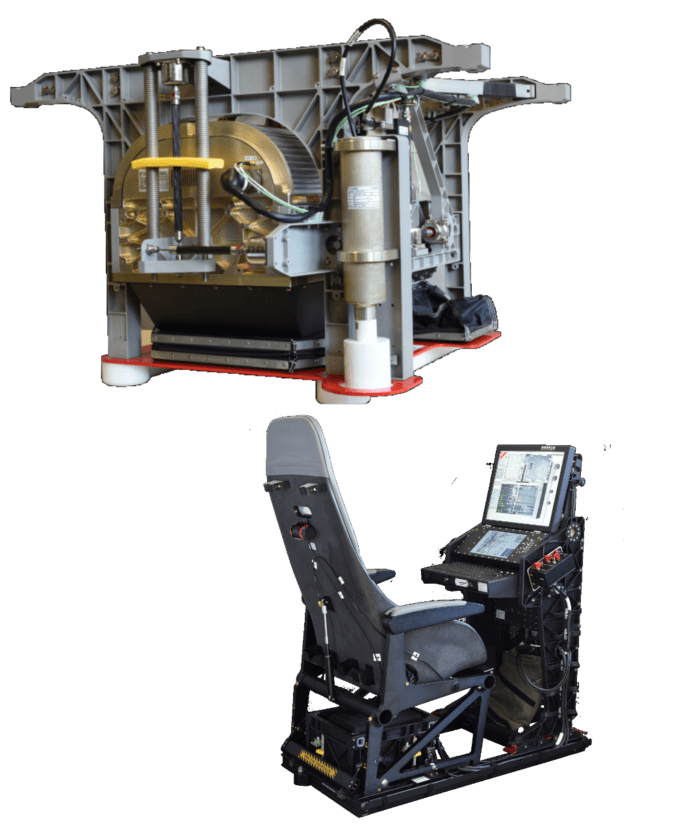}
        \caption{GlobalScanner Camera and Observation Unit.}
    \end{subfigure}
    \hfill
    \begin{subfigure}[c]{0.31\textwidth}
        \vspace{4.5mm}
        \includegraphics[width=\textwidth]{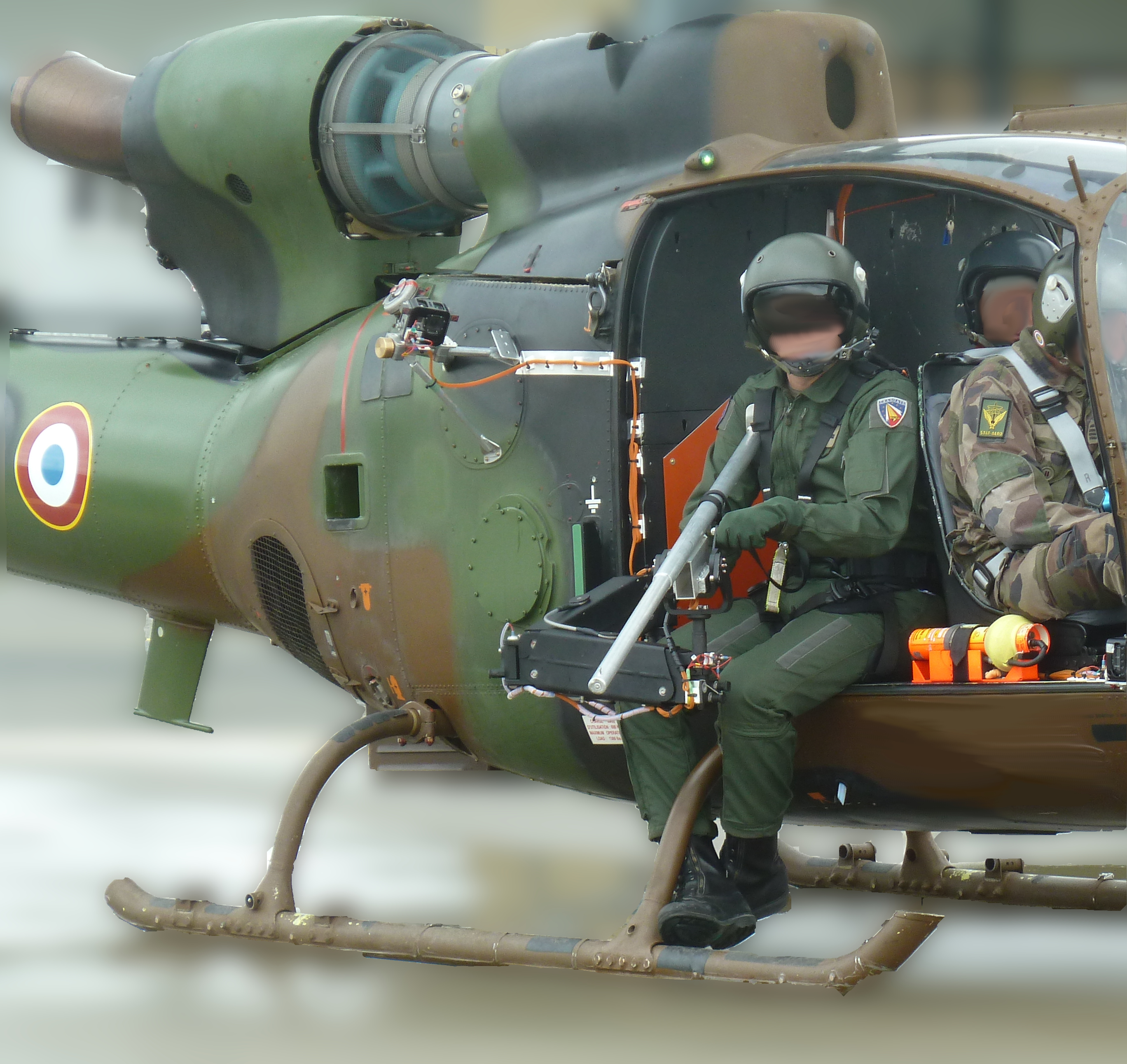}
        \vspace{3.5mm}
        \caption{Strike stabilization arm mounted on a Gazelle helicopter.}
        
    \end{subfigure}
    \hfill
    \begin{subfigure}[c]{0.31\textwidth}
        \vspace{11mm}
        \includegraphics[width=\textwidth]{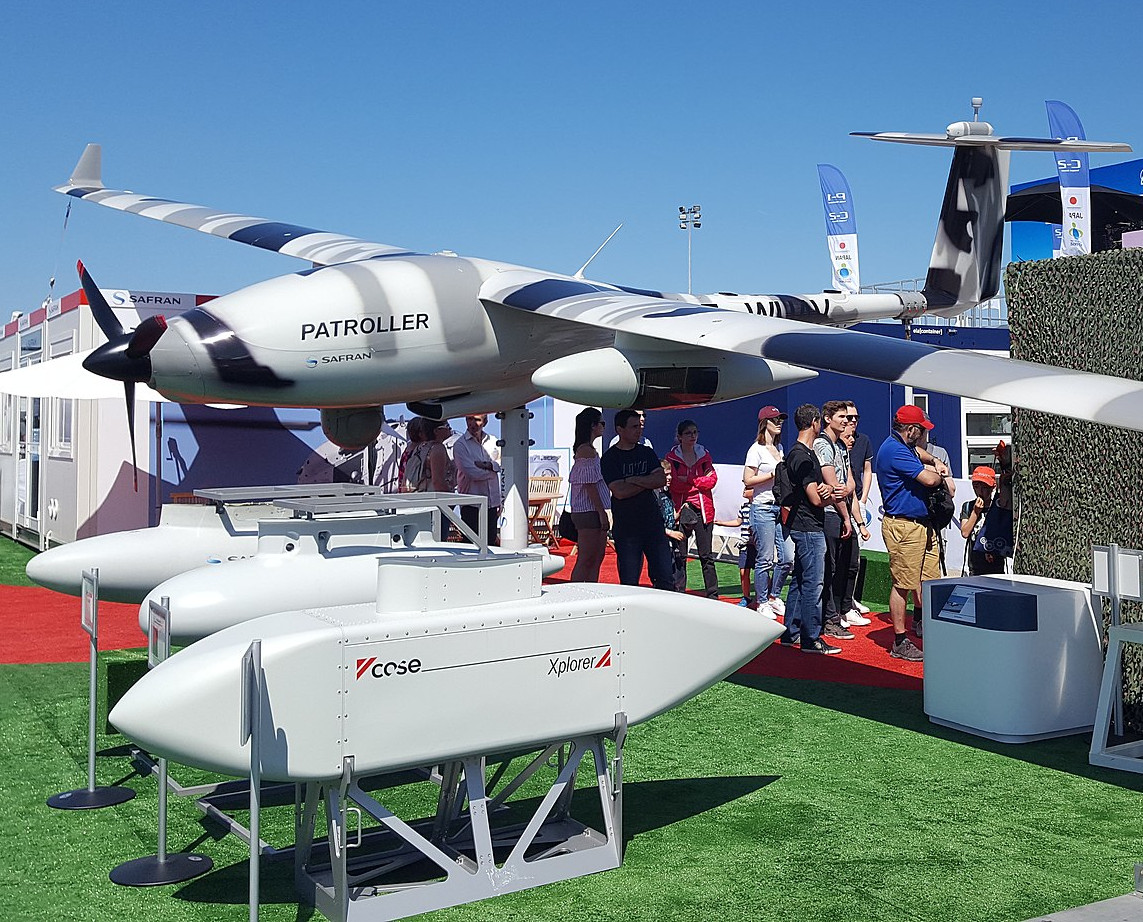}
        \vspace{7mm}
        \caption{POD Xplorer next to SAFRAN's Patroller at 2019 Paris Air Show.}
    \end{subfigure}
    \caption{Illustration of the three main products developed at COSE: GlobalScanner, Strike and POD Xplorer.}
    \label{fig:cose_product_intro}
\end{figure}

Recently, COSE started the CAMELEON project to replace the decades-old
GlobalScanner system. Its objective is to improve GlobalScanner in every aspect.
First, the linear sensor will be replaced by high-resolution CMOS matrix
sensors. With up to six sensors per system, CAMELON will be able to cover
extremely large areas with high resolution. The images will also largely
overlap, enabling precise 3D reconstruction of the flown-over areas, which is
especially important for mission preparation and risk analysis. CAMELEON will
also come with an improved software stack from mission planning to image
analysis and visualization. This PhD project is part of this software redesign.
The amount of image data acquired each second by the new system will be
overwhelming for a single photo interpreter as done with GlobalScanner.
Furthermore, standard communication streams will not be sufficient to send
entire images in real-time. Therefore, relevant information must be extracted
from the images, automatically and on edge. To this end, CAMELEON must integrate
intelligent algorithms able to find relevant structures and information inside
the mass of pixels acquired each second. These pieces of information are often
called Geospatial Intelligence (GEOINT). They consist of evidence of human
activity precisely georeferenced, with any kind of supplementary metadata (\eg
weather conditions or user annotation). Such evidence can be buildings, crop
fields, vehicles, or even animals. It is illustrated in \cref{fig:geoint_intro}.
In most cases, these are salient objects and can be detected in the images. Once
an object has been localized in an image, its precise location can be derived
from the carrier position, the direction of the camera and the digital elevation
model used, which produces a GEOINT. The GEOINT is then enriched with relevant
information about the object: what is the object? Is it dangerous? Is it moving?
Even though this seems to require a human appraisal, some of these questions can
be answered automatically. The main objective of this PhD project is to develop
models that will be able to produce GEOINT automatically. It will need to
localize objects and infer relevant metadata about them. It should drastically
increase the efficiency of photo-interpreters who will then be able to manage
the ever-increasing amount of data generated by aerial intelligence systems, in
particular within the CAMELEON project. 

\begin{figure}
    \centering
    \includegraphics[width=0.9\textwidth]{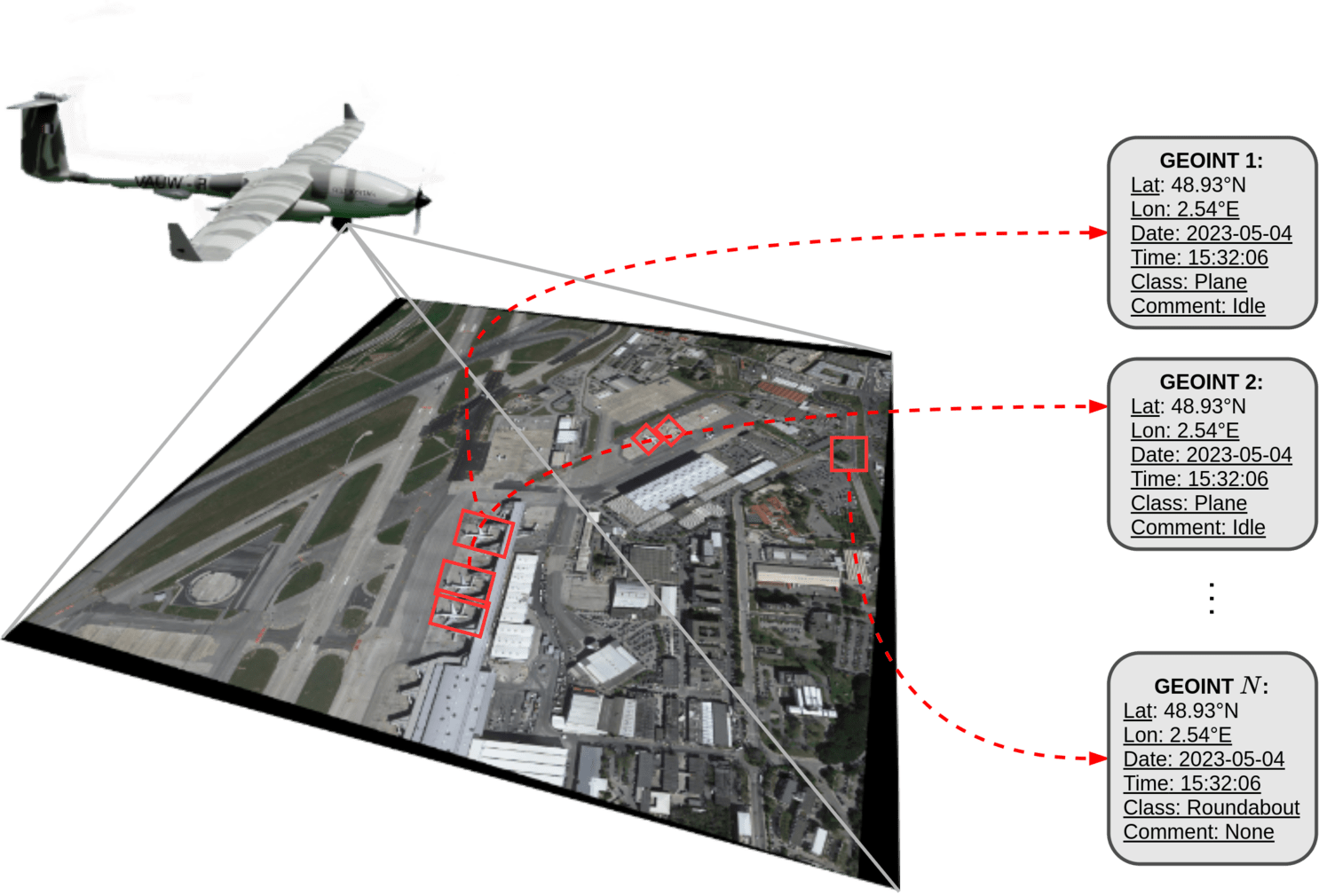}
    \caption{Description of Geospatial Intelligence (GEOINT).}
    \label{fig:geoint_intro}
\end{figure}

Object Detection (OD) is a crucial part of creating GEOINTs. In computer vision,
object detection is the task of localizing and classifying all objects visible
in an image. Of course, the notion of an object needs to be defined more precisely,
otherwise anything in the image can be considered of interest. A pre-defined set
of semantic classes $\mathcal{C}$ is fixed so that a clear distinction can be
made between objects of interest (\ie the ones we want to detect, also called
foreground objects) and background objects (\ie those we are not interested in).
Based on this distinction, the task of detecting objects can be split into two
sub-tasks. \textbf{1)} Localizing all the objects (foreground and background):
this can be done by finding the coordinates of the center of the objects, a
rectangular bounding box or even a precise segmentation mask for each object. In
general, the object detection task in computer vision is associated with
bounding box localization. \textbf{2)} Classifying the objects localized in
step 1). It consists in first filtering out background objects and then,
assigning a class label $c \in \mathcal{C}$ to each foreground object. Research
interest in the detection task dates back to the early 2000s when the
Viola-Jones object detector \cite{viola2001rapid} was first introduced. Since
then, plenty of algorithms have been proposed to improve both the speed and
quality of the detection. A breakthrough occurred in 2013 with the
first uses of deep convolutional networks for detection, namely OverFeat
\cite{sermanet2013overfeat} and R-CNN \cite{girshick2014rich}. These methods
paved the way for more elaborated deep-learning-based detectors.
Deep-learning detectors are often referred to as learning-based approaches as
they mainly rely on the \textit{learning from data} paradigm and supervised
learning. They contrast with earlier detection methods (also called traditional
methods) which often build upon hand-crafted features. A thorough review of both
traditional and learning-based object detectors is available in
\cref{sec:review_od}. Learning-based approaches have now established complete
dominance over traditional methods in terms of detection quality while having
reasonable speed performance. Therefore, most of this project will focus on
learning-based algorithms.   

\vspace{3mm}
The choice of deep-learning-based detectors may seem puzzling for COSE on-edge
applications. Computing resources are limited inside the carrier. The payload
must be as light as possible, so we cannot afford to embed heavy Graphical
Processing Units (GPU) enabled machines, designed to run deep learning models.
In addition, on-board power supplies cannot provide enough energy to run such
hardware. However, light-weight, energy-efficient GPUs exist, such as the Nvidia
Xavier and Orin Series, which are perfectly suited for deep-learning inference.
Nevertheless, another constraint remains, images must be processed in real-time.
The CAMELEON system is designed to take about 1 image every second. It is a
rather low frame rate but due to the sensor size and the number of cameras, this
represents a data stream of several hundreds of megapixels per second. To
process such a massive amount of images every second, the detection models must
be as light and efficient as possible. Fortunately, tools capable of optimizing
the inference of deep learning models exist. \cref{chap:integration} will
present these tools and how they can be leveraged to build detection models fast
enough for COSE's applications. This solves the issues related to the deployment
and inference of such models; however, a major concern remains: how to train
these object detectors? 

\vspace{3mm}
Learning-based methods and especially deep learning models heavily rely on data
to be trained. In general, the overall performance of a model highly depends on
the amount and quality of annotated data available during the training. For the
detection task, collecting large annotated datasets is time-consuming and
expensive. In some cases, it is even impossible. In the medical domain, for
instance, privacy-preserving regulations often prevent the use of personal data.
For military applications, this is even harder as potential training data are
classified. This is problematic for the training of data-hungry methods such as
deep learning. Fortunately, there are some learning strategies much more
data-efficient. These methods are usually referred to as \textit{few-shot} or
\textit{low-shot learning}, and thorough reviews of these methods will be
presented in \cref{sec:fsl}. While there are plenty of approaches to Few-Shot
Learning (FSL), all follow the same basic principle. First, learn generic
knowledge about a related task (source task), second, adapt to the target task.
These two training phases are referred to as base training and fine-tuning. In
the case of detection, a \textit{task} can designate a set of classes to be
detected, this problem is then called Few-Shot Object Detection (FSOD). A large
annotated dataset containing annotations of objects belonging to
$\mathcal{C}_{\text{base}}$ is available. The source task is to detect these
objects. Then, the target task is to detect objects from the so-called
\textit{novel classes}, only provided with a limited number of annotations.
\cref{chap:fsod} provides an in-depth review of existing work in this area.
Generally, the target task is performed on similar images as the ones seen
during base training. However, the target task can also be done with different
kinds of images, \eg the source task can be learned from natural images while
the target task on aerial or medical images. This is called \textit{Cross-Domain
Adaptation}. It complexifies significantly the problem, but it is a much more
realistic scenario in the industry. Collected datasets can only approximate the
real data distributions. Discrepancies between the acquisition settings (\ie
camera, lights etc.) and the application settings almost always produce a
performance drop. In medical imagery, this is a typical issue as different
scanners will not produce exactly similar images. This prevents training models
on scan collection from one hospital and deploying them in another. The military
use case is another critical example. The confidentiality of the images, and the
ever-changing environment and objects of interest make it difficult to build
robust detection algorithms. 

Given the constraints of COSE, the main objective of this project is
to develop data-efficient object detection methods based on few-shot learning.
We orient our research on the Few-Shot Object Detection
problem, \ie the adaptation to novel classes. While a detailed overview of the
thesis will be presented in the next section, we outline here the main parts of
this work. First, we conduct in \cref{part:literature} a thorough review of the
literature about object detection, few-shot learning and finally few-shot object
detection. Then, we propose three distinct FSOD approaches in
\cref{part:contributions_fsod}. This part includes experiments in the
Cross-Domain setting, inside \cref{chap:diffusion}. Our experiments mainly focus on publicly available
aerial datasets due to the lack of private datasets inside the company. These
datasets contain detection annotations and will be presented in
\cref{chap:od}. \cref{part:iou} presents an alternative to the Intersection over
Union, a bounding box similarity measure extensively employed in Object
Detection. Its use for both model evaluation and training are discussed in
\cref{chap:siou_metric}. Finally, \cref{chap:integration} provides details about
the deployment of object detection models according to the needs of COSE.

\section{Overview of the thesis}
\vspace{-1em}

This section outlines the content of each chapter of this thesis. This takes the
form of a small abstract per chapter. These abstracts will be repeated for
convenience at the beginning of the corresponding chapters.  
\subsection*{\cref{part:literature}: Literature Review on Object Detection,
Few-Shot Learning and Few-Shot Object Detection} 

The first part of this thesis is composed of three chapters. The two first
present the literature about Object Detection, Few-Shot Learning and Few-Shot
Object Detection. Then, the third chapter explores the challenges of applying
Few-Shot Object Detection on aerial images and presents our first contribution:
an analysis of these difficulties.

\noindent
\textbf{\cref{chap:od}: \nameref{chap:od}} \\
Object Detection and Few-Shot
Learning are two relevant subfields from the Computer Vision and Machine
Learning fields. Both are necessary to build detection techniques able to
generalize from limited data. Hence, this chapter reviews both Object Detection
and Few-Shot Learning. Both problems are defined, and detailed reviews of the
respective literature are conducted.

\noindent
\textbf{\cref{chap:fsod}: \nameref{chap:fsod}} \\
This chapter presents the task of detection in the few-shot regime
and reviews the existing literature about it. Few-Shot Object Detection (FSOD)
is at the crossroads of Object Detection and Few-Shot Learning, and therefore,
extensively relies on these two fields explored in \cref{chap:od}. Just as for
classification, various directions are explored in the literature to tackle the
detection task in the few-shot regime which will be presented in detail.
Finally, this chapter focuses on the aerial image application of FSOD methods
and extensions of the few-shot setting. 

\noindent
\textbf{\cref{chap:aerial_diff}: \nameref{chap:aerial_diff}} \\
The detection task becomes extremely challenging when limited annotated data is
available. In this chapter, we explore the reasons behind this difficulty. In
particular, we focus on the case of aerial images for which it is even harder to
apply FSOD techniques. It turns out that small objects are especially
challenging for the FSOD task and are the main source of error in remote sensing
images.

\textit{\textbf{Chapter's contributions}:
\begin{itemize}[nolistsep,topsep=2pt]
    \item[\faFileTextO] P. Le Jeune and A. Mokraoui, "Improving Few-Shot Object Detection through a Performance Analysis on Aerial and Natural Images," 2022 30th European Signal Processing Conference (EUSIPCO), Belgrade, Serbia, 2022, pp. 513-517, doi: 10.23919/EUSIPCO55093.2022.9909878.
    \item[\faFileTextO] P. Le Jeune and A. Mokraoui, "Amélioration de la détection d’objets few-shot à travers une analyse de performances sur des images aériennes et naturelles." GRETSI 2022, XXVIIIème Colloque Francophone de Traitement du Signal et des Images, Nancy, France
\end{itemize}
}

\subsection*{\cref{part:contributions_fsod}: Improving Few-Shot Object Detection through Various Approaches}
The second part of this thesis presents our main contributions to the Few-Shot
Object Detection (FSOD) field. Each chapter proposes a novel approach to
addressing the FSOD problem and discusses its pros and cons compared to
existing methods. These contributions led to several accepted articles in
international and national conferences and journals.

\noindent
\textbf{\cref{chap:prcnn}: \nameref{chap:prcnn}} \\
Prototypical Faster
R-CNN (PFRCNN) is a novel approach for FSOD based on metric learning. It embeds
prototypical networks inside the Faster R-CNN detection framework, specifically
in place of the classification layers in the RPN and the detection head. PFRCNN
is applied to synthetic images generated from the MNIST dataset and to real
aerial images with DOTA dataset. The detection performance of PFRCNN is slightly
disappointing but sets a first baseline on DOTA. However, the experiments
conducted with PFRCNN provide relevant information about the design choices for
FSOD approaches.

\textit{\textbf{Chapter's contributions}:
\begin{itemize}[nolistsep,topsep=2pt]
    \item[\faFileTextO] P. L. Jeune, M. Lebbah, A. Mokraoui and H. Azzag, "Experience feedback using Representation Learning for Few-Shot Object Detection on Aerial Images," 2021 20th IEEE International Conference on Machine Learning and Applications (ICMLA), Pasadena, CA, USA, 2021, pp. 662-667, doi: 10.1109/ICMLA52953.2021.00110.
\end{itemize}
}

\noindent
\textbf{\cref{chap:aaf}: \nameref{chap:aaf}}\\
Fair comparison is extremely challenging in the Few-Shot Object Detection task
as plenty of architectural choices differ from one method to another.
Attention-based approaches are no exception, and it is difficult to assess which
mechanisms are the most efficient for FSOD. In this chapter, we propose a highly
modular framework to implement existing techniques and design new ones. It
allows for fixing all hyperparameters except for the choice of the attention
mechanism. Hence, a fair comparison between various mechanisms can be made.
Using the framework, we also propose a novel attention mechanism specifically
designed for small objects.

\textit{\textbf{Chapter's contributions}:
\begin{itemize}[nolistsep,topsep=2pt]
    \item[\faPaperPlaneO] P. Le Jeune and A. Mokraoui, "A Comparative Attention Framework for Better Few-Shot Object Detection on Aerial Images", Submitted at the Elsevier Pattern Recognition journal.
    \item[\faFileTextO] P. Le Jeune and A. Mokraoui, "Cross-Scale Query-Support Alignment Approach for Small Object Detection in the Few-Shot Regime", Accepted at the IEEE International Conference on Image Processing 2023 (ICIP).
\end{itemize}
}

\noindent
\textbf{\cref{chap:diffusion}: \nameref{chap:diffusion}}\\
Previous chapters explore few-shot object detection with metric
learning and attention-based techniques. This chapter logically focuses on the
last major approach for FSOD: fine-tuning. Based on DiffusionDet, a recent
detection model leveraging diffusion models, we build a simple but efficient
fine-tuning strategy. The resulting method, called FSDiffusionDet, achieves
state-of-the-art FSOD on aerial datasets and competitive performance on natural
images. Extensive experimental studies explore the design choices of the
fine-tuning strategy to better understand the key components required to achieve
such quality. Finally, these impressive results allow considering more complex
settings such as cross-domain scenarios, which are especially relevant for COSE.

\textit{\textbf{Chapter's contributions}: This chapter describes very
recent work, and we plan to submit research articles to present these results.}

\vspace{1em}
\subsection*{\cref{part:iou}: Rethinking Intersection Over Union}

This part contains only one chapter which presents a contribution orthogonal to
the approaches proposed in \cref{part:contributions_fsod} as it questions the
relevance of the Intersection over Union, a key component of object detection
pipelines. 

\noindent
\textbf{\cref{chap:siou_metric}: \nameref{chap:siou_metric}}\\
Intersection over Union (IoU) is not an optimal box similarity measure
for evaluating and training object detectors. For evaluation, it is too strict
with small objects and does not align well with human perception. For training,
it provides a poor balance between small and large objects to the detriment of
small ones. We propose Scale-adaptative Intersection over Union (SIoU), a
parametric alternative that solves the shortcomings of IoU. We provide
empirical and theoretical arguments for the superiority of SIoU through in-depth
analysis of various criteria.

\textit{\textbf{Chapter's contributions}:
\begin{itemize}[nolistsep,topsep=2pt]
    \item[\faPaperPlaneO] P. Le Jeune and A. Mokraoui, "Rethinking Intersection Over Union for Small Object Detection in Few-Shot Regime", Submitted at the International Conference on Computer Vision 2023 (ICCV).
    \item[\faFileTextO] P. Le Jeune and A. Mokraoui, "Extension de l'\textit{Intersection over Union} pour améliorer la détection d'objets de petite taille en régime d'apprentissage few-shot", Accepted at GRETSI 2023.
\end{itemize}
}

\subsection*{\cref{part:application}: Prototyping and Industrial Application}
Finally, the last part of this thesis presents our industrial contributions.
This part is crucial for COSE as it bridges the gap between research
advancements and real-world applications. Therefore, the only chapter of this
part discusses the engineering aspects of object detection and is not associated
with any academic contribution. 

\noindent
\textbf{\cref{chap:integration}: \nameref{chap:integration}}\\
Detection models are often heavy and are not well suited for
    COSE's application. In this chapter, we first present in detail the CAMELEON
    system and its constraints. Then, we study the influence of the model size
    on the performance and present useful tools and tricks to accelerate the
    inference. Finally, we explain how the detection models are deployed inside
    the CAMELEON prototype and how they perform on aerial images.

\section{Summary of the Contributions}
\vspace{-1em}

\textbf{International Conference Articles}
\begin{itemize}[nolistsep]
    \item[\faFileTextO] P. Le Jeune, M. Lebbah, A. Mokraoui and H. Azzag, "Experience feedback using Representation Learning for Few-Shot Object Detection on Aerial Images," 2021 20th IEEE International Conference on Machine Learning and Applications (ICMLA), Pasadena, CA, USA, 2021, pp. 662-667, doi: 10.1109/ICMLA52953.2021.00110.
    \item[\faFileTextO] P. Le Jeune and A. Mokraoui, "Improving Few-Shot Object Detection through a Performance Analysis on Aerial and Natural Images," 2022 30th European Signal Processing Conference (EUSIPCO), Belgrade, Serbia, 2022, pp. 513-517, doi: 10.23919/EUSIPCO55093.2022.9909878.
    \item[\faFileTextO] P. Le Jeune and A. Mokraoui, "Cross-Scale Query-Support Alignment Approach for Small Object Detection in the Few-Shot Regime", Accepted at the IEEE International Conference on Image Processing 2023 (ICIP).
\end{itemize}

\textbf{National Conference Articles}
\begin{itemize}[nolistsep]
    \item[\faFileTextO] P. Le Jeune and A. Mokraoui, "Amélioration de la détection d’objets few-shot à travers une analyse de performances sur des images aériennes et naturelles." GRETSI 2022, XXVIIIème Colloque Francophone de Traitement du Signal et des Images, Nancy, France.
    \item[\faFileTextO] P. Le Jeune and A. Mokraoui, "Extension de l'\textit{Intersection over Union} pour améliorer la détection d'objets de petite taille en régime d'apprentissage few-shot", GRETSI 2023, XXIXème Colloque Francophone de Traitement du Signal et des Images, Grenoble, France.
\end{itemize}

\textbf{Submitted Articles}
\begin{itemize}[nolistsep]
    \item[\faPaperPlaneO] P. Le Jeune and A. Mokraoui, "A Comparative Attention Framework for Better Few-Shot Object Detection on Aerial Images", Submitted at the Elsevier Pattern Recognition journal.
    \item[\faPaperPlaneO] P. Le Jeune and A. Mokraoui, "Rethinking Intersection Over Union for Small Object Detection in Few-Shot Regime", Submitted at the International Conference on Computer Vision 2023 (ICCV).
\end{itemize}

\textbf{Oral Presentations}\\
During the PhD, I had the opportunity to give talks in various occasions listed
below:
\begin{itemize}[nolistsep]
    \item[-] L2TI's scientific day (Dec. 2020).
    \item[-] \textit{Prototypical Faster R-CNN for Few-Shot Object Detection on Aerial Images}, DeepLearn Summer School 2021, Las Palmas de Gran Canaria (Jul. 29, 2021).
    \item[-] \textit{Prototypical Faster R-CNN for Few-Shot Object Detection on Aerial Images} at a GDR-ISIS meeting: \textit{Vers un apprentissage pragmatique dans
    un contexte de données visuelles labellisées limitées}, Paris, (Nov. 26, 2021).
    \item[-] L2TI's Doctoral seminar (Mar. 2022 and Feb. 2023).
    \item[-] \textit{Few-Shot Object Detection on Aerial Images}, Seminar at ETS Montreal (Sep. 28, 2022).
\end{itemize}

\textbf{Internships Supervision}\\
I supervised four internships over the three years of this PhD, three inside the
company and one at within the L2TI:
\begin{itemize}[nolistsep]
    \item[-] Conception et mise en oeuvre d’algorithmes de suivi d’objets dans des images
    aériennes (March-August 2021 -- COSE).
    \item[-] Optimisation et intégration d’algorithmes de détection d’objets dans un système
    embarqué (March-August 2022 -- COSE).
    \item[-] Self-supervised learning for Few-shot Object Detection (April-August 2022 – L2TI au
    travers du LabCom IRISER).
    \item[-] Détection d'objets few-shot par visual transformers sur des images
    Aériennes (March-August 2023 -- COSE and L2TI through the LabCom IRISER). 
\end{itemize}

In addition to the supervision of two internships, I am actively involved inside
the LabCom IRISER\footnote{\href{https://www-l2ti.univ-paris13.fr/iriser/}{Link to the LabCom IRISIER's website}} which is a
joint laboratory between COSE, the L2TI and the LIPN. It was created one year
after the beginning of my PhD at the instigation of my academic and industrial
supervisors.

\textbf{Open-source Software}\\
In the course of the various project I conducted during this PhD, I wrote
multiple open-source Python packages that can be found on GitHub: 
\begin{itemize}[nolistsep]
    \item[\faGithub] \href{https://github.com/pierlj/proto_faster_rcnn}{Prototypical Faster R-CNN}
    \item[\faGithub] \href{https://github.com/pierlj/aaf_framework}{AAF framework}
    \item[\faGithub] \href{https://github.com/pierlj/pycocosiou}{Pycocosiou}
    \item[\faGithub] \href{https://github.com/pierlj/fs_diffusiondet}{FSDiffusionDet}
\end{itemize}

\addtocounter{chapter}{-1}  
\stepcounter{chapter}

\renewcommand{\theHchapter}{FR\thechapter}

\chapter*{Introduction (Français)}
\addcontentsline{toc}{chapter}{Introduction (Français)} 
\epigraph{\itshape  Si une machine doit être infaillible, alors elle ne peut pas aussi être intelligente.}{-- Alan Turing}

Pour introduire ce manuscrit de thèse, le contexte industriel et les motivations
de ce projet sont présentés. D'abord, sont introduits l'entreprise COSE et le
Laboratoire de Traitement et Transport de l’Information (L2TI) qui ont
collaboré sur cette thèse CIFRE. Ensuite, nous décrivons ce qu'est la détection
d'objets dans le cadre de la vision par ordinateur et comment les contraintes
industrielles liées à COSE ont orienté la thèse vers l'apprentissage frugal (dit
\textit{few-shot}). Dans un second temps, la structure de ce
manuscrit est exposée en présentant un résumé individuel pour chaque chapitre. Enfin, une
dernière partie liste les différentes contributions apportées au cours de ce
projet, cela inclut des articles de recherche publiés ou soumis dans des
conférences nationales et internationales.

\phantomsection
\section{Contexte industriel, motivation et objectifs}
\vspace{-1em}

Cette thèse a pour origine la collaboration entre le laboratoire L2TI de
l'Université Sorbonne Paris Nord (USPN) et la société COSE. Le L2TI a été fondé
en 1998 et est un membre de la Fédération de Recherche MathSTIC du CNRS (FR
3734) qui inclut également deux laboratoires CNRS: le Laboratoire Analyse,
Géométrie et Applications (LAGA), UMR 7539 et le Laboratoire d'Informatique de
Paris Nord (LIPN), UMR 7030. Ces laboratoires sont tous rattachés à l'Institut
Galilée. Deux équipes de recherche cohabitent dans le L2TI. D'abord, l'équipe
Multimédia, qui se concentre sur le traitement et l'analyse de l'information
visuelle et audio. Ensuite, l'équipe Réseaux, qui travaille sur le transport de
l'information et les communications. Ce projet de thèse s'inscrit logiquement
dans l'équipe Multimédia.

COSE \footnote{\href{https://www.cose.fr/home/}{https://www.cose.fr/}} est une
PME innovante d'environ 20 salariés. C'est un fournisseur de rang 1 de l'état
dans le secteur de l'aéronautique et de la défense. COSE est né en tant que
startup de l'INRIA dans les années 90 et la recherche est toujours au cœur de
son processus industriel. Bien que relativement petite, COSE possède des équipes
pluridisciplinaires de haut niveau dans des domaines tels que la mécanique,
l'électronique, la navigation, l'automatique et les systèmes embarqués. La
taille de COSE lui confère une agilité et une efficacité remarquable comparée à
ces principaux compétiteurs. Cet avantage permet à l'entreprise de créer des
partenariats forts avec les acteurs majeurs de l'aéronautique et de la défense. 

COSE développe, produit et maintient des systèmes de renseignements aéroportés
et des équipements embarqués en tout genre. Ces produits sont principalement
destinés à un usage militaire et sont donc soumis à des critères de qualité
stricts. La relation entre les forces armées et COSE est gérée par la Direction
Générale de l'Armement (DGA) qui est de fait l'un des principaux clients de
COSE. COSE a pour l'instant trois produits principaux dans sa gamme que les forces
françaises utilisent pour différentes missions (voir
\cref{fig:cose_product_intro_fr}) :

\begin{itemize}[nolistsep]
    \item[-] \textbf{GlobalScanner} : un système de caméra embarquée qui produit
    des images haute résolution et géoréférencées en temps réel. Le système est
    constitué d'un capteur linéaire de très grande résolution. Ce capteur est
    stabilisé et intégré au sein d'une enceinte mécanique qui peut être intégrée
    sous différents types d'aéronefs (hélicoptère, avion, drone, etc.). Le
    capteur est connecté à un poste de contrôle et une suite logicielle
    permettant de piloter la caméra et de gérer les flux d'images.
    \item[-] \textbf{Strike} : un bras de stabilisation d'arme à feu pour
    hélicoptère. Il améliore sensiblement la précision des tireurs et réduit les
    risques de dommages collatéraux. 
    \item[-] \textbf{POD Xplorer} : un pod multifonction pouvant être attaché en
    dessous de différents types de porteurs. Il permet d'embarquer simplement
    des charges utiles variées comme des capteurs optiques, des LIDARs, ou des
    équipements scientifiques.
\end{itemize}

\begin{figure}
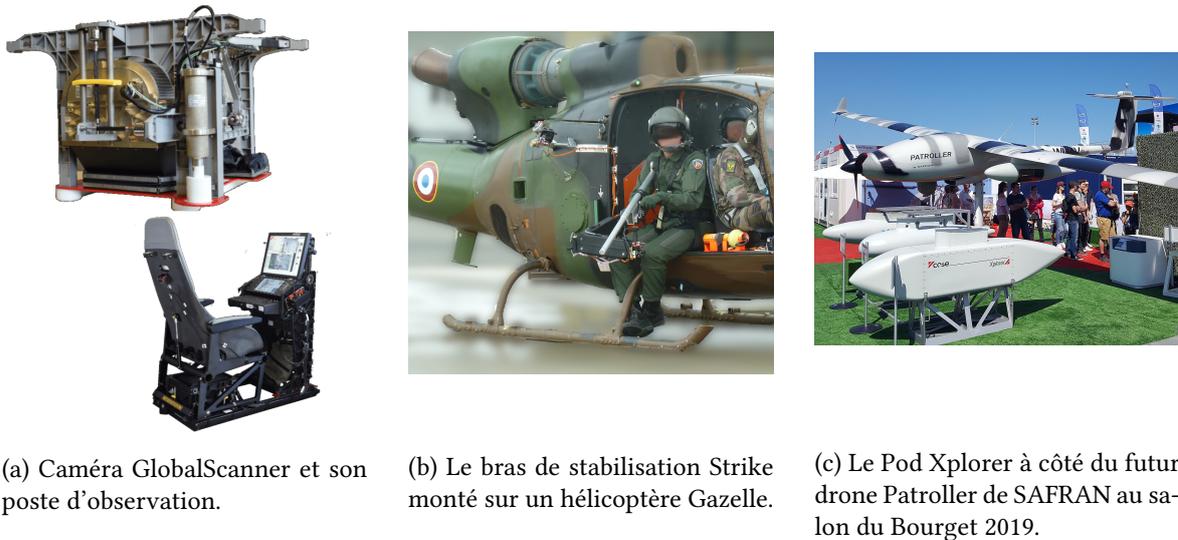

    \centering
    \begin{subfigure}[c]{0.31\textwidth}
        \includegraphics[width=\textwidth]{images/global_scan.png}
        \caption{Caméra GlobalScanner et son poste d'observation.}
    \end{subfigure}
    \hfill
    \begin{subfigure}[c]{0.31\textwidth}
        \vspace{4.5mm}
        \includegraphics[width=\textwidth]{images/strike_flou.JPG}
        \vspace{3.5mm}
        \caption{Le bras de stabilisation Strike monté sur un hélicoptère Gazelle.}
    \end{subfigure}
    \hfill
    \begin{subfigure}[c]{0.31\textwidth}
        \vspace{11mm}
        \includegraphics[width=\textwidth]{images/pod+patroller.jpg}
        \vspace{7mm}
        \caption{Le Pod Xplorer à côté du futur drone Patroller de SAFRAN au salon du Bourget 2019.}
    \end{subfigure}
    \caption[]{Illustration des trois produits phares de COSE: GlobalScanner, Strike et le POD Xplorer.}
    \label{fig:cose_product_intro_fr}
\end{figure}

Récemment, COSE a lancé le projet CAMELEON afin de remplacer GlobalScanner. Son
objectif premier est de surpasser GlobalScanner dans tous les aspects.
Premièrement, le capteur linéaire sera remplacé par un capteur matriciel CMOS de
haute résolution. CAMELEON pourra embarquer jusqu'à six capteurs avec des
orientations différentes afin de couvrir des très grandes zones au sol tout en
conservant une grande définition. Les images ainsi acquises auront beaucoup de
recouvrement afin de permettre la reconstruction 3D des zones survolées. C'est
un aspect extrêmement important de la préparation de mission et la gestion des
risques pour les forces armées. CAMELEON proposera également une amélioration
complète du logiciel d'observation et notamment en ce qui concerne l'analyse et
le traitement des images. La quantité d'images obtenues chaque seconde par le
système sera trop importante pour être analysée par un seul photo-interprète. De
plus, les moyens de communication standards n'ont pas un débit suffisant pour
transmettre les images en temps réel. Ainsi, il est nécessaire d'extraire les
informations stratégiques des images, automatiquement et à bord. CAMELEON doit
donc être doté d'algorithmes intelligents et efficaces afin d'extraire les
informations pertinentes en temps réel. Cette thèse s'inscrit dans la refonte
logicielle de CAMELEON et tente de répondre aux contraintes du projet. Les
informations extraites des images sont souvent appelées \textit{GEOspaital
INTelligence} (GEOINT). Il s'agit principalement de preuve d'activité humaine,
précisément géoréférencées ainsi que de méta-données en tout genre (\eg les
conditions météorologiques ou des annotations de l'interprète). Le concept de
GEOINT est illustré dans la Figure \ref{fig:geoint_intro_fr}. Il peut s'agir de
bâtiments, de champs, de véhicules ou même d'animaux. Dans la plupart des cas,
ce sont des objets saillants qui sont visibles dans les images aériennes. Une
fois qu'un objet a été localisé dans l'image, sa géolocalisation précise peut
être calculée en fonction de la position du porteur, de l'angle de la caméra et
du modèle numérique de terrain utilisé, cela produit ainsi un GEOINT. Il peut
ensuite être enrichi avec des informations supplémentaires, pertinentes pour
l'opération : quel est cet objet ? Est-ce une menace ? Est-il en mouvement ?
Même si ces questions semblent requérir le jugement humain, on peut en réalité
souvent y répondre automatiquement. L'objectif principal de cette thèse est de
produire des modèles capables d'automatiser la création de GEOINT. Pour cela,
ces modèles devront localiser les objets d'intérêt et inférer les méta-données
pertinentes en lien avec ces objets. Aidés par ces outils, les photo-interprètes
seront bien plus efficaces et pourront gérer des masses d'images toujours plus
grandes. 

\begin{figure}
    \centering
    \includegraphics[width=0.95\textwidth]{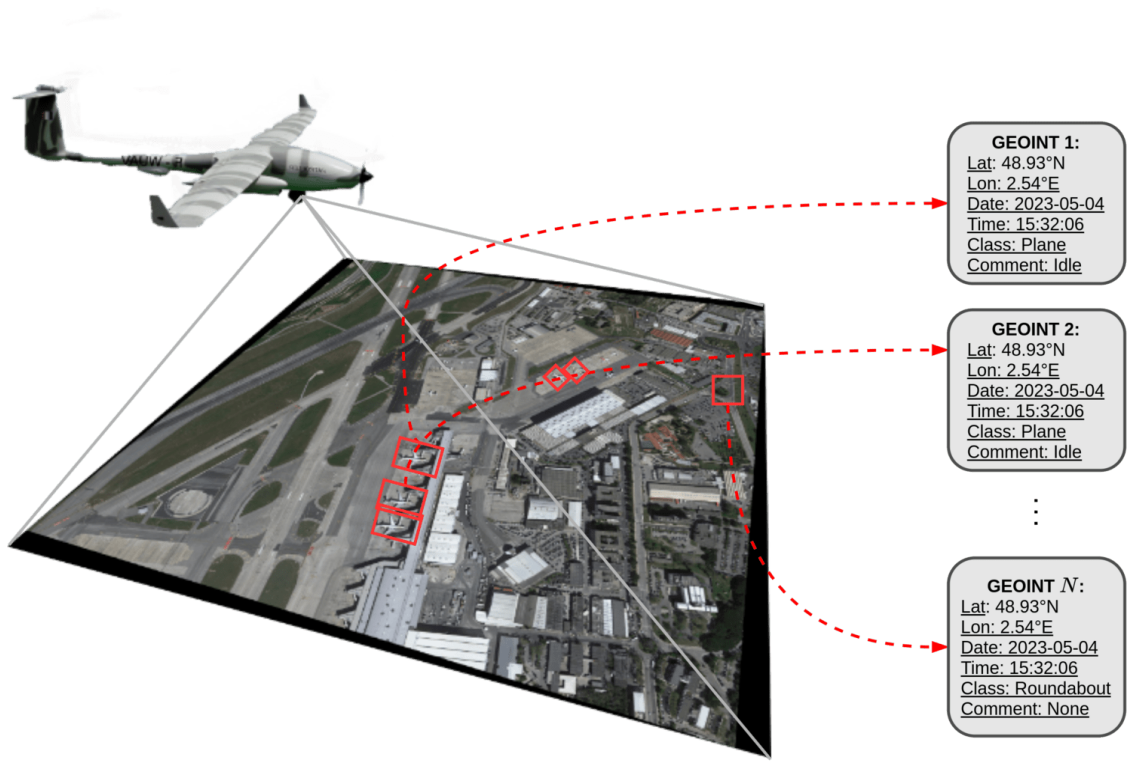}
    \caption[]{Illustration d'un renseignement géospatial (GEOINT).}
    \label{fig:geoint_intro_fr}
\end{figure}

La détection d'objets est une étape cruciale de la création de GEOINTs. En
vision par ordinateur, la détection d'objets consiste à localiser et classifier
tous les objets visibles dans une image. Bien sûr, la notion d'objet doit être
définie de manière plus précise, sinon tout ce qui se trouve dans l'image peut
être considéré comme étant d'intérêt. Un ensemble prédéfini de classes
sémantiques $\mathcal{C}$ est fixé afin d'établir une distinction claire entre
les objets d'intérêt (ceux que l'on souhaite détecter) et les objets de
l'arrière-plan (ceux qui ne nous intéressent pas). Sur la base de cette
distinction, la tâche de détection d'objets peut être divisée en deux
sous-tâches. \textbf{1)} Localiser tous les objets (objets d'intérêt et objets
de l'arrière-plan) : cela peut être fait en trouvant les coordonnées du centre
des objets, d'une boîte englobante rectangulaire ou même un masque de
segmentation précis pour chaque objet. En général, la tâche de détection
d'objets en vision par ordinateur est associée à la localisation par boîte
englobante. \textbf{2)} Classer les objets localisés à l'étape 1). Il s'agit
d'abord de filtrer les objets de l'arrière-plan, puis d'attribuer une classe $c
\in \mathcal{C}$ à chaque objet d'intérêt. La recherche sur la détection
d'objets a pour origine le début des années 2000, lorsque le détecteur d'objets
Viola-Jones \cite{viola2001rapid} a été introduit pour la première fois. Depuis,
de nombreux algorithmes ont été proposés pour améliorer à la fois la vitesse et
la qualité de la détection. Une avancée remarquable s'est produite en 2013 avec
les premières utilisations de réseaux de neurones convolutifs pour la détection,
notamment avec OverFeat \cite{sermanet2013overfeat} et R-CNN
\cite{girshick2014rich}. Ces méthodes ont ouvert la voie à des détecteurs de
plus en plus élaborés basés sur l'apprentissage profond. Ces détecteurs reposent
principalement sur le paradigme de l'apprentissage machine et notamment
l'apprentissage supervisé. Ils diffèrent des méthodes de détection antérieures
(appelées méthodes traditionnelles) qui s'appuient souvent sur des
caractéristiques manuelles. Une revue détaillée des détecteurs d'objets
traditionnels et basés sur l'apprentissage est disponible dans la section
\labelcref{sec:review_od}. Les approches basées sur l'apprentissage ont désormais
établi une domination complète sur les méthodes traditionnelles en termes de
qualité de détection, tout en offrant des temps d'exécution plus rapides.
Par conséquent, la majeure partie de ce projet se concentrera sur les
algorithmes basés sur l'apprentissage.

Le choix de détecteurs basés sur l'apprentissage profond peut sembler compliqué
pour les applications embarquées telles que celles de COSE. Les
ressources de calcul sont limitées une fois en vol. La charge utile doit être
aussi légère que possible, COSE ne peut donc pas nous permettre d'intégrer de
lourdes machines dotées de cartes graphiques (GPU) conçues pour exécuter des
modèles d'apprentissage profond. De plus, les alimentations embarquées ne
peuvent pas fournir suffisamment d'énergie pour faire fonctionner un tel
matériel. Cependant, il existe des GPU légers et économes en énergie, tels que
les gammes Nvidia Xavier et Orin, qui conviennent parfaitement à des systèmes
embarqués. Néanmoins, une autre contrainte subsiste : les images doivent être
traitées en temps réel. Le système CAMELEON est conçu pour prendre environ une
image par seconde. Il s'agit d'une fréquence d'image assez faible, mais en
raison de la taille du capteur et du nombre de caméras, cela représente un flux
de données de plusieurs centaines de mégapixels par seconde. Pour traiter une
telle quantité d'images par seconde, les modèles de détection doivent être aussi
légers et efficaces que possible. Heureusement, il existe des outils capables
d'optimiser l'inférence des modèles d'apprentissage profond. Le
chapitre \ref{chap:integration} présentera ces outils et comment ils peuvent être
utilisés pour construire des modèles de détection suffisamment rapides pour les
applications de COSE. Cela résout les problématiques liées au déploiement et à
l'inférence des modèles, cependant, une préoccupation majeure subsiste : comment
entraîner ces détecteurs d'objets ?

Les méthodes basées sur l'apprentissage, et en particulier les modèles
d'apprentissage profond, reposent fortement sur les données pour leur
entraînement. En général, les performances globales d'un modèle dépendent de la
quantité et de la qualité des données annotées disponibles lors de
l'entraînement. Pour la détection, la collecte de grands ensembles de données
annotées est chronophage et coûteuse. Dans certains cas, il est même impossible
de rassembler de tels ensembles de données pour l'entraînement. Dans le domaine
médical, par exemple, les réglementations empêchent souvent l'utilisation de
données personnelles. Dans le domaine militaire, cela est encore plus difficile
car les données d'entraînement sont classifiées et ne peuvent être divulguées en
aucune circonstance. Cela est problématique pour l'entraînement des méthodes
d'apprentissage profond, gourmandes en données. Heureusement, il existe des
stratégies d'apprentissage beaucoup plus efficaces en termes de données. Ces
méthodes sont généralement désignées comme l'apprentissage frugal
(\textit{few-shot learning} (FSL) en anglais). Une revue détaillée de ces
méthodes sera présentée dans la section \labelcref{sec:fsl}. Bien qu'il existe
de nombreuses approches différentes pour l'apprentissage \textit{few-shot},
elles suivent souvent le même principe de base. Premièrement, elles apprennent
des connaissances générales sur une tâche connexe (tâche source), puis elles
s'adaptent à une tâche cible. Ces deux phases d'entraînement sont appelées
entraînement de base et \textit{fine-tuning}. Dans le cas de la détection, une
\textit{tâche} désigne un ensemble de classes à détecter, et ce problème est
alors appelé détection d'objets \textit{few-shot} (FSOD en anglais). Un grand
ensemble de données annotées contenant des annotations d'objets appartenant à
$\mathcal{C}_{\text{base}}$ est disponible. La tâche source consiste à détecter
ces objets. Ensuite, la tâche cible consiste à détecter des objets des
\textit{classes nouvelles}, en disposant uniquement d'un nombre limité
d'annotations. Le chapitre \labelcref{chap:fsod} fournit une revue approfondie
des travaux existants dans ce domaine. Dans le cas général, la tâche cible est
réalisée sur des images similaires à celles vues pendant l'entraînement de base.
Cependant, la tâche cible peut aussi être réalisée avec différents types
d'images. Par exemple, la tâche source peut être apprise à partir d'images
naturelles tandis que la tâche cible porte sur des images aériennes ou
médicales. On appelle cela l'adaptation au domaine. Cela complexifie
considérablement le problème, mais c'est un scénario beaucoup plus réaliste dans
l'industrie. Les ensembles de données collectés ne peuvent qu'approximer la
distribution réelle des données pour un problème spécifique. Les divergences
entre les paramètres d'acquisition (appareil photo, éclairage, etc.) et les
conditions réelles entraînent presque toujours une baisse des performances. En
imagerie médicale, il s'agit d'un problème classique car différents scanners ne
produiront pas exactement les mêmes images. Cela empêche de former des modèles
sur une collection d'images provenant d'un hôpital et de les déployer dans un
autre. Le cas d'utilisation militaire est un autre exemple critique. La
confidentialité des images, l'environnement et les objets d'intérêt en constante
évolution rendent difficile la construction d'algorithmes de détection robustes.

Compte tenu des contraintes industrielles de COSE, l'objectif principal de ce
projet de thèse est de développer des méthodes de détection d'objets efficaces
en termes de données, basées sur des stratégies d'apprentissage
\textit{few-shot}. Nous avons choisi d'orienter nos recherches principalement
sur le problème de la détection d'objets \textit{few-shot}, c'est-à-dire
l'adaptation aux nouvelles classes. Bien qu'un aperçu détaillé de la thèse soit
présenté dans la prochaine section, ici les principales parties de ce travail
sont décrites. Tout d'abord, dans la partie \labelcref{part:literature} présente
une revue approfondie de la littérature sur la détection d'objets,
l'apprentissage frugal et enfin la détection d'objets \textit{few-shot}.
Ensuite, trois approches distinctes sont proposées pour la détection d'objets
\textit{few-shot} dans la partie \ref{part:contributions_fsod}. Cette partie
comprend également des expériences dans le cadre de l'adaptation au domaine,
principalement dans la section \ref{sec:cda_analysis}. Nos expériences se
concentrent principalement sur des jeux de données d'images aériennes disponibles
publiquement, faute d'ensembles de données privées disponibles au sein de
l'entreprise. Ces jeux de données contiennent des annotations de détection et seront
présentés dans le chapitre \ref{chap:od}. La partie \ref{part:iou} présente une
alternative à l'\textit{Intersection over Union} (IoU), une mesure de similarité
des boîtes englobantes largement utilisée en détection d'objets. Son utilisation
pour l'évaluation et l'entraînement des modèles est discutée dans le chapitre
\ref{chap:siou_metric}. Enfin, le chapitre \ref{chap:integration} fournit des
détails sur le déploiement des modèles de détection d'objets en fonction des
contraintes de COSE.

\section{Plan de la thèse}
\vspace{-1em}

Cette section présente un aperçu du contenu de chaque chapitre de cette thèse.
Cela prend la forme d'un court résumé par chapitre. Ces résumés seront répétés
pour plus de commodité au début des chapitres correspondants. 

\subsection*{Partie \ref{part:literature} : Revue de la littérature sur la détection
d'objets, l'apprentissage \textit{few-shot} et la détection
\textit{few-shot}} 

La première partie de cette thèse est composée de trois chapitres. Les deux
premiers présentent la littérature sur la détection d'objets, l'apprentissage
\textit{few-shot} et la détection d'objets à faible échantillonnage. Le
troisième chapitre, quant à lui, explore les défis liés à l'application de la
détection d'objets \textit{few-shot} à des images aériennes et présente notre
première contribution : une analyse détaillée de ces difficultés.

\noindent
\textbf{\cref{chap:od} : Détection d'objets, apprentissage \textit{few-shot} et adaptation aux domaines} \\
La détection d'objets et l'apprentissage \textit{few-shot} sont des
sous-domaines de la vision par ordinateur et de
l'apprentissage automatique. Les deux sont nécessaires pour développer des
techniques de détection capables de généraliser à partir de données limitées.
Par conséquent, ce chapitre passe en revue à la fois la détection d'objets et
l'apprentissage \textit{few-shot}. Les deux problèmes sont définis et des
revues détaillées des littératures respectives sont réalisées.

\noindent
\textbf{\cref{chap:fsod} : Détection d'objets \textit{few-shot}} \\
Ce chapitre présente la tâche de détection dans le régime \textit{few-shot} et
passe en revue la littérature existante sur ce sujet. La détection d'objets
\textit{few-shot} (FSOD) se situe à l'intersection de la détection d'objets et
de l'apprentissage \textit{few-shot}, et repose donc largement sur ces deux
domaines explorés dans le chapitre \ref{chap:od}. Tout comme pour la
classification, différentes approches sont explorées dans la littérature pour
aborder la tâche de détection en régime \textit{few-shot}. Enfin, ce chapitre se
concentre sur l'application de la détection d'objets \textit{few-shot} sur des
images aériennes et sur les extensions du régime \textit{few-shot}. 

\noindent
\textbf{\cref{chap:aerial_diff} : Analyse des difficultés liées à la détection \textit{few-shot}} \\
La tâche de détection devient extrêmement difficile lorsque les données annotées
sont limitées. Dans ce chapitre, les raisons derrière ces
difficultés sont explorées. En particulier, nous nous concentrons sur le cas des images
aériennes pour lesquelles il est encore plus difficile d'appliquer des
techniques de détection \textit{few-shot}. Il s'avère que les petits objets sont
particulièrement difficiles à localiser en régime \textit{few-shot} et
sont la principale source d'erreur dans les images aériennes.

\textit{\textbf{Contributions liées à ce chapitre }:
\begin{itemize}[nolistsep,topsep=2pt]
    \item[\faFileTextO] P. Le Jeune and A. Mokraoui, "Improving Few-Shot Object Detection through a Performance Analysis on Aerial and Natural Images," 2022 30th European Signal Processing Conference (EUSIPCO), Belgrade, Serbia, 2022, pp. 513-517, doi: 10.23919/EUSIPCO55093.2022.9909878.
    \item[\faFileTextO] P. Le Jeune and A. Mokraoui, "Amélioration de la détection d’objets few-shot à travers une analyse de performances sur des images aériennes et naturelles." GRETSI 2022, XXVIIIème Colloque Francophone de Traitement du Signal et des Images, Nancy, France.
\end{itemize}
}

\vspace{2em}
\subsection*{\cref{part:contributions_fsod} : Amélioration de la détection
\textit{few-shot} à travers plusieurs approches} La deuxième partie de cette
thèse présente nos principales contributions dans le domaine de la détection
d'objets \textit{few-shot} (FSOD). Chaque chapitre propose une nouvelle approche
pour aborder le problème de FSOD et discute de ses avantages et inconvénients
par rapport aux méthodes existantes. Ces contributions ont abouti à plusieurs
articles acceptés et soumis dans des conférences et des revues internationales et
nationales.

\noindent
\textbf{Partie \ref{chap:prcnn} : Retour d'expérience sur l'apprentissage de métrique pour FSOD} \\
Prototypical Faster R-CNN (PFRCNN) est une approche innovante pour la détection
d'objets \textit{few-shot} (FSOD) basée sur l'apprentissage de métriques. Elle
intègre des réseaux  de prototypes (\textit{prototypical networks}) à
l'intérieur de Faster R-CNN, plus précisément à la place
des couches de classification dans le RPN et la tête de détection. PFRCNN est
appliqué à des images synthétiques générées à partir de l'ensemble de données
MNIST et à des images aériennes réelles avec le jeu de données DOTA. Les
performances de détection de PFRCNN sont légèrement décevantes, mais elles
établissent un premier point de repère sur DOTA. Les expériences
menées avec PFRCNN fournissent des informations pertinentes sur les choix de
conception pour les approches FSOD.

\textit{\textbf{Contributions liées à ce chapitre } :
\begin{itemize}[nolistsep,topsep=2pt]
    \item[\faFileTextO] P. L. Jeune, M. Lebbah, A. Mokraoui and H. Azzag, "Experience feedback using Representation Learning for Few-Shot Object Detection on Aerial Images," 2021 20th IEEE International Conference on Machine Learning and Applications (ICMLA), Pasadena, CA, USA, 2021, pp. 662-667, doi: 10.1109/ICMLA52953.2021.00110.
\end{itemize}
}

\noindent
\textbf{\cref{chap:aaf} : Un environnement modulaire pour la détection \textit{few-shot} basée sur des mécanismes d'attention}\\
Comparer de manière équitable différents modèles est extrêmement difficile en
détection d'objets \textit{few-shot} car de nombreuses options architecturales
diffèrent d'une méthode à une autre. Les approches basées sur l'attention ne
font pas exception, et il est difficile d'évaluer quels mécanismes sont les plus
efficaces pour le FSOD. Ce chapitre présente un environnement modulaire
pour réimplémenter les techniques existantes et concevoir de nouvelles
approches. Il permet de fixer tous les hyperparamètres à l'exception du
mécanisme d'attention et de les comparer de manière équitable. En
utilisant cet environnement, nous proposons également un nouveau mécanisme
d'attention spécifiquement conçu pour les petits objets.

\textit{\textbf{Contributions liées à ce chapitre} :
\begin{itemize}[nolistsep,topsep=2pt]
    \item[\faPaperPlaneO] P. Le Jeune and A. Mokraoui, "A Comparative Attention Framework for Better Few-Shot Object Detection on Aerial Images", Soumis à Elsevier Pattern Recognition journal.
    \item[\faFileTextO] P. Le Jeune and A. Mokraoui, "Cross-Scale Query-Support Alignment Approach for Small Object Detection in the Few-Shot Regime", Accepté à the IEEE International Conference on Image Processing 2023 (ICIP).
\end{itemize}
}

\noindent
\textbf{\cref{chap:diffusion} : FSDiffusionDet: un détecteur \textit{few-shot} basé sur les modèles de diffusion et une stratégie de \textit{fine-tuning}}\\
Les chapitres précédents explorent la détection d'objets \textit{few-shot} en
utilisant l'apprentissage métrique et les techniques basées sur l'attention. Ce
chapitre se concentre logiquement sur la dernière grande approche pour le FSOD :
le \textit{fine-tuning}. En nous basant sur DiffusionDet, un récent modèle de
détection utilisant des modèles de diffusion, nous construisons une stratégie de
\textit{fine-tuning} simple et efficace, baptisée FSDiffusionDet. FSDiffusionDet
surpasse état de l'art en FSOD sur des jeux de données aériens et obtient des
performances compétitives sur les images naturelles. Des études expérimentales
approfondies explorent les choix de conception de la stratégie de
\textit{fine-tuning} afin de mieux comprendre les composantes clés nécessaires
pour atteindre une telle qualité. Enfin, ces résultats impressionnants
permettent de considérer des scénarios plus complexes comme l'adaptation à de
nouveaux domaines, ce qui est particulièrement pertinent pour COSE.

\textit{\textbf{Contributions liées à ce chapitre} : Ce chapitre décrit des
travaux très récents et nous planifions de soumettre des articles de recherche
qui les présenterons.}

\subsection*{\cref{part:iou}: Repenser l'\textit{Intersection over Union}} 

Cette partie ne contient qu'un seul chapitre qui présente une contribution
indépendante des approches proposées dans la partie précédente. Ce chapitre
remet en question la pertinence de l'\textit{Intersection sur Union} (IoU), un élément
clé des modèles de détection d'objets.

\noindent
\textbf{Partie \ref{chap:siou_metric}: \textit{Intersection over Union} adaptable à la taille des objets}\\
L'\textit{Intersection sur Union} (IoU) n'est pas une mesure de similarité de
boîte englobante optimale pour l'évaluation et l'entraînement des détecteurs
d'objets. Pour l'évaluation, elle est trop stricte avec les petits objets et ne
correspond pas bien à la perception humaine. Pour l'entraînement, elle crée un
déséquilibre entre les petits et grands objets souvent au détriment des petits.
Nous proposons l'Intersection sur Union adaptative à l'échelle (appelée SIoU),
une alternative paramétrable qui résout les lacunes de l'IoU. Des arguments
empiriques et théoriques sont avancés pour démontrer la supériorité de la SIoU
grâce à une analyse approfondie de celle-ci et d'autres critères existants.

\textit{\textbf{Contributions liées à ce chapitre} :
\begin{itemize}[nolistsep,topsep=2pt]
    \item[\faPaperPlaneO] P. Le Jeune and A. Mokraoui, "Rethinking Intersection Over Union for Small Object Detection in Few-Shot Regime", Soumis à International Conference on Computer Vision 2023 (ICCV).
    \item[\faFileTextO] P. Le Jeune and A. Mokraoui, "Extension de l'\textit{Intersection over Union} pour améliorer la détection d'objets de petite taille en régime d'apprentissage few-shot", GRETSI 2023, XXIXème Colloque Francophone de Traitement du Signal et des Images, Grenoble, France.
\end{itemize}
}

\subsection*{Partie \ref{part:application}: Prototypage et déploiement industriel}

Enfin, la dernière partie de cette thèse présente nos contributions
industrielles. Cette partie est cruciale pour COSE car elle comble l'écart entre
les avancées de la recherche et les applications du monde réel. Par conséquent,
le seul chapitre de cette partie aborde les aspects techniques de la détection
d'objets et n'est associé à aucune contribution académique.

\noindent
\textbf{\cref{chap:integration}: Intégration dans les prototypes de COSE}\\
Les modèles de détection sont souvent lourds et ne conviennent pas bien à
l'application de COSE. Ce chapitre présente d'abord en détail le
système CAMELEON et ses contraintes. Ensuite, nous étudions l'influence de la
taille du modèle sur ses performances et présentons des outils et des astuces
utiles pour accélérer l'inférence. Enfin, nous expliquons comment les modèles de
détection sont déployés à l'intérieur du prototype CAMELEON et comment ils se
comportent sur des images aériennes.

\section{Résumé des contributions}
\vspace{-1em}

\textbf{Articles de conférences internationales}
\begin{itemize}[nolistsep,topsep=2pt]
    \item[\faFileTextO] P. L. Jeune, M. Lebbah, A. Mokraoui and H. Azzag, "Experience feedback using Representation Learning for Few-Shot Object Detection on Aerial Images," 2021 20th IEEE International Conference on Machine Learning and Applications (ICMLA), Pasadena, CA, USA, 2021, pp. 662-667, doi: 10.1109/ICMLA52953.2021.00110.
    \item[\faFileTextO] P. Le Jeune and A. Mokraoui, "Improving Few-Shot Object Detection through a Performance Analysis on Aerial and Natural Images," 2022 30th European Signal Processing Conference (EUSIPCO), Belgrade, Serbia, 2022, pp. 513-517, doi: 10.23919/EUSIPCO55093.2022.9909878.
    \item[\faFileTextO] P. Le Jeune and A. Mokraoui, "Cross-Scale Query-Support Alignment Approach for Small Object Detection in the Few-Shot Regime", Accepté à the IEEE International Conference on Image Processing 2023 (ICIP).
\end{itemize}

\textbf{Articles de conférences nationales}

\begin{itemize}[nolistsep,topsep=2pt]
    \item[\faFileTextO] P. Le Jeune and A. Mokraoui, "Amélioration de la détection d’objets few-shot à travers une analyse de performances sur des images aériennes et naturelles." GRETSI 2022, XXVIIIème Colloque Francophone de Traitement du Signal et des Images, Nancy, France.
    \item[\faFileTextO] P. Le Jeune and A. Mokraoui, "Extension de l'\textit{Intersection over Union} pour améliorer la détection d'objets de petite taille en régime d'apprentissage few-shot", GRETSI 2023, XXIXème Colloque Francophone de Traitement du Signal et des Images, Grenoble, France.
\end{itemize}

\textbf{Articles soumis}

\begin{itemize}[nolistsep,topsep=2pt]
    \item[\faPaperPlaneO] P. Le Jeune and A. Mokraoui, "A Comparative Attention Framework for Better Few-Shot Object Detection on Aerial Images", Soumis à Elsevier Pattern Recognition journal.
    \item[\faPaperPlaneO] P. Le Jeune and A. Mokraoui, "Rethinking Intersection Over Union for Small Object Detection in Few-Shot Regime", Soumis à International Conference on Computer Vision 2023 (ICCV).
\end{itemize}

\textbf{Présentations orales}\\
Au cours de cette thèse, j'ai eu l'opportunité de donner plusieurs présentations orales :
\begin{itemize}[nolistsep,topsep=2pt]
    \item[-] Journée scientifique du L2TI (Déc. 2020).
    \item[-] \textit{Prototypical Faster R-CNN for Few-Shot Object Detection on
    Aerial Images} DeepLearn Summer School 2021, Las Palmas de Gran Canaria
    (29 Juil. 2021).
    \item[-] Prototypical Faster R-CNN for Few-Shot Object Detection on Aerial
    Images à la journée du GDR-ISIS : \textit{Vers un apprentissage pragmatique dans
    un contexte de données visuelles labellisées limitées}, Paris, (26 Nov. 2021).
    \item[-] Séminaires des doctorants (Mar. 2022 and Fév. 2023).
    \item[-] Séminaire à l'ETS Montreal: \textit{Few-Shot Object Detection on Aerial Images} (28 Sep. 2022). 
\end{itemize}

\textbf{Supervision de stages}\\
J'ai supervisé 4 stages au cours de ces trois ans de thèse, 3 au sein de l'entreprise et un au L2TI :
\begin{itemize}[nolistsep,topsep=2pt]
    \item[-] Conception et mise en oeuvre d’algorithmes de suivi d’objets dans des images
    aériennes (Mars-Août 2021 -- COSE).
    \item[-] Optimisation et intégration d’algorithmes de détection d’objets dans un système
    embarqué (Mars-Août 2022 -- COSE).
    \item[-] Apprentissage auto-supervisée pour la détection d'objets few-shot (Avril-Août 2022 – L2TI au
    travers du LabCom IRISER).
    \item[-] Détection d'objets few-shot par visual transformers sur des images
    aériennes (Mars-Août 2023 -- COSE et L2TI via le LabCom IRISER). 
\end{itemize}

En plus des supervisions de deux stages, je suis activement impliqué dans le
LabCom IRISER\footnote{\href{https://www-l2ti.univ-paris13.fr/iriser/}{Lien vers
le site du LabCom IRISER}}, un laboratoire commun entre COSE, le L2TI et le
LIPN. Ce laboratoire commun a été crée un an environ après le début de ma thèse
sous l'impulsion de mes superviseurs académique et industriel. 

\textbf{Logiciels libres}\\
Au travers des différents projets qui ont constitués cette thèse, j'ai développé
plusieurs package Python qui se trouvent en accès libre sur GitHub :  
\begin{itemize}
    \item[\faGithub] \href{https://github.com/pierlj/proto_faster_rcnn}{Prototypical Faster R-CNN}
    \item[\faGithub] \href{https://github.com/pierlj/aaf_framework}{AAF framework}
    \item[\faGithub] \href{https://github.com/pierlj/pycocosiou}{Pycocosiou}
    \item[\faGithub] \href{https://github.com/pierlj/fs_diffusiondet}{FSDiffusionDet}
\end{itemize}


\part{Literature Review on Object Detection, \\Few-Shot Learning, and Few-Shot Object~Detection}%
\label{part:literature}
\chapter{Object Detection, Few-Shot Learning and Cross-Domain Adaptation}
\label{chap:od}


\chapabstract{Object Detection and Few-Shot Learning are two relevant challenges
from the Computer Vision and Machine Learning fields. Both are necessary to
build detection techniques able to generalize from limited data. Hence, this
chapter reviews both Object Detection and Few-Shot Learning. The two problems
are defined and detailed reviews of the respective literature are conducted.} 

\PartialToC

As briefly presented in the introduction, this PhD project lies at the
intersection of three sub-domains of Machine Learning: Object Detection (OD),
Few-Shot Learning (FSL), and Cross-Domain Adaptation (CDA). In this chapter, we
define more precisely what these three fields are and outline the main existing
contributions in the literature. We start by introducing the
main computer vision problems these fields address and the related notations
adopted in this manuscript. Then, we provide a review of existing
works in each area, and finally, we link them with the industrial needs of COSE.  

\section{Object Detection}
\label{sec:review_od}
\vspace{-0.5em}
\subsection{Problem Definition}
\label{sec:od_problem}
Object detection consists in localizing and classifying all objects of interest
visible in an image $I$. There are multiple terms to explain this statement.
First, the notion of the object of interest is defined according to a fixed set
of semantic classes $\mathcal{C}$. The objects of interest are the ones that
belong to one class $c \in \mathcal{C}$. Of course, one can question the
belonging of an object to a class. A class could be ambiguous for multiple
reasons. Given the quality of the image, it can be difficult to determine the
class of the object depicted. For instance, in a satellite photograph, a car
could be so small that it cannot be perceived by a human observer. Another issue
is the slackness of our concept of objects, one word can refer to multiple
objects (\eg spring, game, or chest), and our vocabulary is organized
hierarchically (\eg the class \textit{vehicle} contains many classes including
\textit{car} and \textit{truck}). One could go even further by questioning the
very concept of objects in our mind (see for instance exemplar-based vs.
prototype-based concept theories), but it would have more to do with cognitive
science than computer vision. Generally, for object detection, these
complications are not considered. One object can only belong to one class and
whether it belongs to the class or not falls under the common sense of the
observer. Most of the time, this is established with the ground truth
annotations of human experts. This explanation is rather obvious, but keep in
mind that this is a simplification, this will be useful when the notion of an
object gets blurrier in the case of few-shot learning and few-shot object
detection.

The detection task consists in finding all occurrences of the objects of
interest in the images, \ie the image coordinates of each object. This can be
done in several ways, by locating a single pixel inside each object, by drawing
a rectangular bounding box, or by computing a precise mask around it. The former
setting is barely used as it can be quite ambiguous, all points in an object are
equivalent and no supplementary information (size and shape) can be inferred
from this representation. Traditionally, object detection leverages bounding
boxes to localize the objects, and precise masks are reserved for the Instance
Segmentation task. A bounding box is generally determined by four coordinates,
it can be the coordinates of two diagonally opposed points on the box or the
coordinates of one point plus the width and height of the box, see
\cref{fig:od_bbox_form} for more details about boxes representations. In the
following, we adopt the latter definition of a bounding box: the first two
coordinates denote the $x$ and $y$ image coordinates of the top-left corner of
the box, while the last two represent the width $w$ and height $h$ of the box. A
bounding box $b$ is then denoted as follows:%
\begin{equation}
    b = [x, y, w, h]^T.
\end{equation}
\begin{figure}
    \centering
    \includegraphics[width=\textwidth]{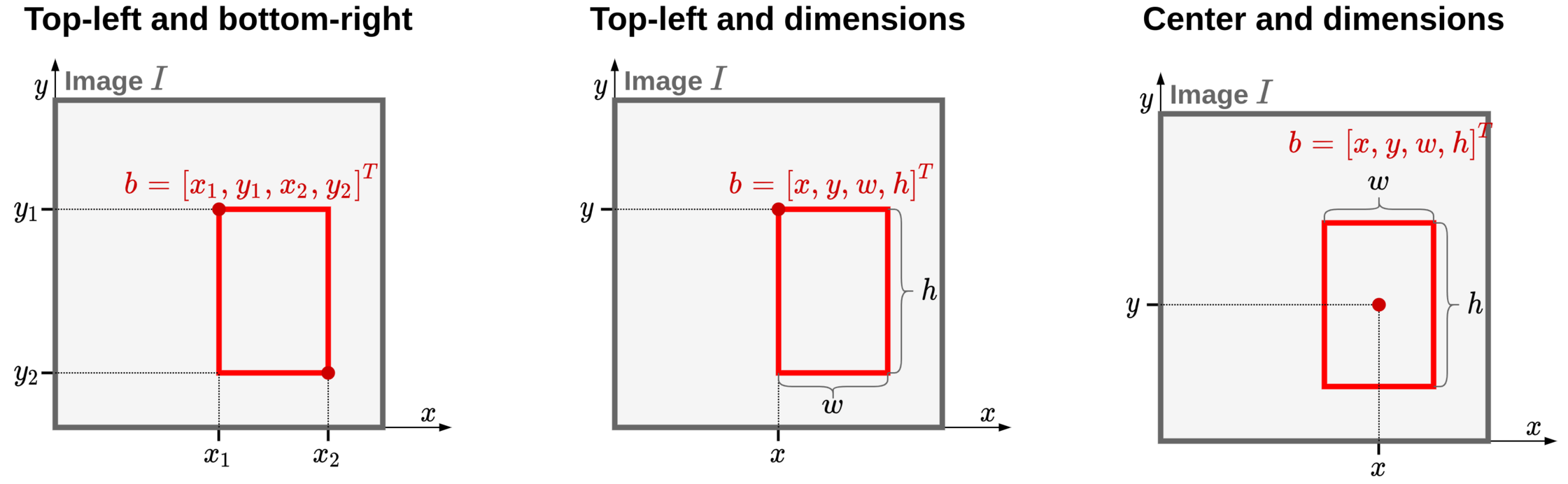}
    \caption[Bounding boxes representations]{Three different box representations: top-left and bottom-right
    points, top-left point, width and height, and center point width and height.
    Many more representations exist but they will not be presented here. In this
    manuscript, the second representation, top-left point width and height, will
    be used exclusively.}
    \label{fig:od_bbox_form}
\end{figure}%
As the goal of object detection is to localize \textit{and} classify the objects of
interest, each bounding box must be associated with a class $c \in \mathcal{C}$.
Therefore, we define the detection label as a tuple of a bounding box
and a class label:%
\begin{equation}
    \boldsymbol{\mathrm{y}} = (b, c).
\end{equation}
An image may contain more than one object and therefore, each image $I$ is
associated with a set of detection labels $\mathcal{Y} = \{\boldsymbol{\mathrm{y}}_i\}_{i=1}^{N_I}$,
where $N_I$ is the number of objects in image $I$. Hence, solving an object
detection task is to find a detector model $\mathcal{F}(\cdot, \theta)$ with
parameters $\theta$ able to output a set of predicted labels given an input
image $I$: %
\begin{equation}
    \mathcal{F}(I, \theta) = \hat{\mathcal{Y}} = \{\hat{\boldsymbol{\mathrm{y}}}_i\}_{i=1}^{M_I}.
\end{equation}
We employ here a slight abuse of notation by calling the output of a detector
$\boldsymbol{\mathrm{y}}$. Indeed, a detector predicts a classification score
$l^c$ for each class $c \in \mathcal{C}$. Therefore,
$\hat{\boldsymbol{\mathrm{y}}} = (\hat{b}, \{\hat{l}^c\}_{c \in \mathcal{C}})$
and $\hat{c} = \argmax_{c \in \mathcal{C}} l^c$. For convenience, we denote $l
\in [0,1]^{|\mathcal{C}|}$ the vector of classification scores. Hatted symbols
represent the model's outputs. Note that the number of detections found by the
model $M_I$ may not equal the number of objects present in the image as the
detector can either miss some objects or output false detections. The proximity
between the predicted labels $\hat{\mathcal{Y}}$ and the ground truth labels
$\mathcal{Y}$ determines the performance of the model $\mathcal{F}(\cdot,
\theta)$. Hence, finding an optimal detection model is to find a set of optimal
parameters, which minimizes the distance between predicted and ground truth
labels: 
\useshortskip
\begin{equation}
    \theta^* = \argmin_{\theta} d(\hat{\mathcal{Y}}_{\theta}, \mathcal{Y}),
\end{equation}
where $d$ is a distance measure between $\hat{\mathcal{Y}}_{\theta}$, the labels
predicted by $\mathcal{F}(\cdot, \theta)$ and the ground truth labels
$\mathcal{Y}$. Of course, there are plenty of valid approaches to
measure the proximity between two sets of detection labels, some will be
introduced in \cref{sec:od_evaluation}.

For COSE, detecting the objects of interest in an image is a crucial step. This
step is necessary for the creation of GEOINTs. From the bounding boxes
coordinates in the image, and the carrier position (latitude, longitude, and
altitude) and attitude (pitch, roll, and yawn), one can determine the precise
locations of the objects on Earth. These computations also involve camera
properties and orientations, but they will not be addressed in this manuscript.

\subsection{Evaluation of Object Detectors}
\label{sec:od_evaluation}
Before jumping into the Object Detection literature, let's introduce the most
commonly used metrics employed to assess the quality of the detection models. As
mentioned in the previous section assessing the detection performance of a model
consists in comparing the set of predicted detection labels $\hat{\mathcal{Y}}$
with the set of ground truth labels $\mathcal{Y}$ (typically made by a human
observer). In the previous section, we defined the set of detection labels over
an image. However, to better assess the generalization capabilities of the
detectors, their evaluation is always conducted on a relatively large
\textit{test set} of images. Therefore, we extend the notation
$\hat{\mathcal{Y}}$ and $\mathcal{Y}$ as the sets of predicted and ground truth
labels (respectively) over all test images. 

\subsubsection{Average Precision and mean Average Precision}
The most commonly used metrics for Object Detection are the Average Precision
(AP) and its extension in the multiclass setting, the mean Average Precision
(mAP). The AP is formally defined as the area under the precision-recall curve:
\begin{equation}
    \text{AP} = \int_{0}^{1} \text{Prec}(r) \,dr , 
\end{equation}
where $\text{Prec}$ denotes the Precision and $r$, the Recall. The Precision and
Recall are two well-known metrics often used to evaluate classification
problems. They are defined respectively as the ratio of true positive labels
over the positive predicted labels and the ratio of the positive labels over the
positive true labels. \Cref{fig:od_precision_recall} clearly illustrates these
definitions with a classification confusion matrix. Note that the notion of True
Positive (TP), True Negative (TN), False Positive (FP) and False Negative (FN)
introduced in the figure will be leveraged throughout the manuscript.

\begin{figure}
    \centering
    \begin{subfigure}[c]{0.75\textwidth}
        \includegraphics[width=\textwidth]{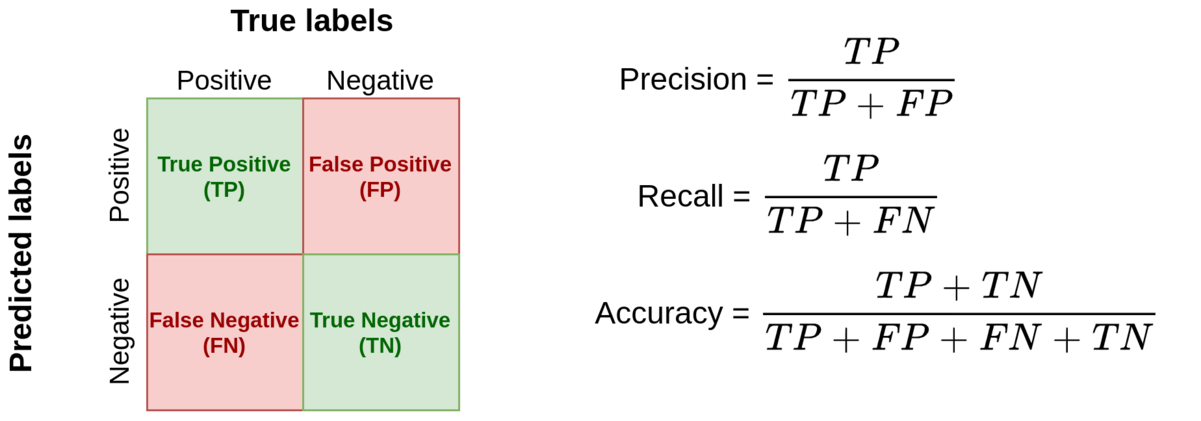} 
        \caption{Classification confusion matrix and definition of the Precision, Recall and Accuracy metrics.}
        \label{fig:od_precision_recall}
    \end{subfigure}
    \hfill
    \begin{subfigure}[c]{0.2\textwidth}
        \includegraphics[width=\textwidth]{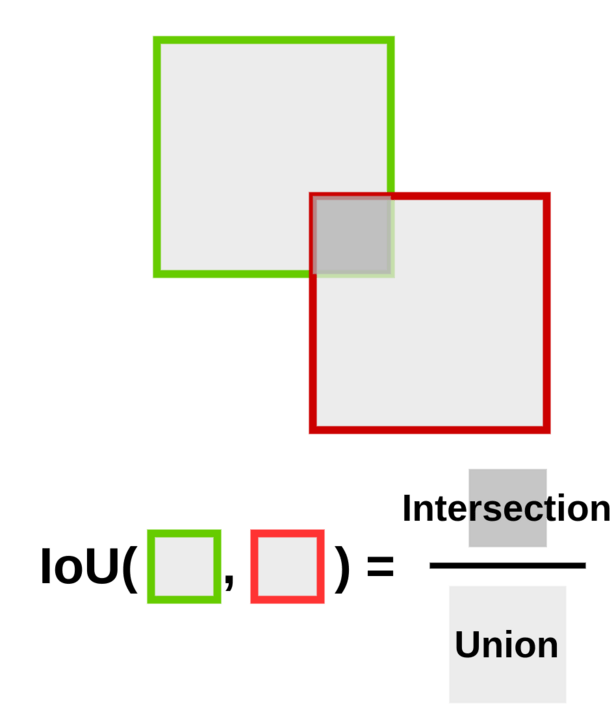} 
        \caption{IoU for rectangular bounding boxes.}
        \label{fig:od_IoU}
    \end{subfigure}
    \caption[Precision, Recall, Accuracy and IoU definitions]{Definition of the
    Precision, Recall and Accuracy metrics (a) as well as the box similarity
    criterion Intersection over Union (b).}
    \label{fig:od_IoU_PR}
\end{figure}

For now, forget about the classification part of the detection and consider only
the localization problem. How can we define the four corners of the confusion
matrix for bounding boxes? An approach is to consider a detection as TP if the
predicted bounding box has the same coordinates as one true detection label.
However, it is extremely challenging to get a perfect positioning of the
predicted boxes. First, the detector generally leverages regression techniques
to predict box coordinates and outputs continuous box coordinates. This is
incompatible with the annotated box coordinates, which are usually discrete.
Rounding errors can cause TP to become FP. Then, from an application viewpoint,
pixel-perfect bounding boxes are not necessary. Therefore, it is generally
admitted that a true positive detection is a box close enough to a ground truth
box. The similarity between two bounding boxes is almost always computed with
the Intersection over Union (IoU). Then, TPs are the boxes that have an IoU with
a true box above a fixed threshold (typically 0.5). However, the IoU may not be an
optimal criterion in certain cases as we will discuss in \cref{chap:siou_metric}.

The Intersection over Union, also known as the Jaccard index, is a well-known
similarity measure between sets $A$ and $B$: 
\begin{equation}
    \text{IoU}(A, B) = \frac{|A \cap B|}{|A \cup B|}.
\end{equation}
Besides its application in statistics, the IoU is widely used in computer vision to
assess the quality of visual tasks such as detection and segmentation. The IoU
can compute how close two sets of pixels are and thus gives a similarity measure
between ground truth and the model prediction. Here, we focus on the detection
task, therefore the IoU can be written in terms of coordinates of the boxes $b_1
=[x_1, y_1, w_1, h_1]^T$ and $b_2 =[x_2, y_2, w_2, h_2]^T$: 
\begin{align}
    \label{eq:iou_bbox}
    &\begin{aligned}
        \mathcal{A}_{\text{inter}} = &\max\big(0, \max(x_1, x_2) - \min(x_1 + w_1, x_2 + w_2)\big)\\
          &\qquad  * \max\big(0, \max(y_1, y_2) - \min(y_1 + h_1, y_2 + h_2)\big),
      \end{aligned} \\
    &\text{IoU}(b_1, b_2) = \frac{\mathcal{A}_{\text{inter}}}{w_1h_1 + w_2h_2 - \mathcal{A}_{\text{inter}}}.
\end{align}

The IoU is, therefore, a crucial part of the evaluation protocol of the object
detectors as it conditions which predicted bounding boxes are TP, and which are
FP. The IoU threshold limit determines how close to the ground truth the
predicted boxes must be to be considered TP. In the Pascal VOC detection
challenge \cite{everingham2010pascal}, this threshold was set to 0.5, this has
been the gold standard for a few years. However, it changed when the more
challenging Microsoft COCO dataset \cite{lin2014microsoft} was proposed. The
authors of the COCO challenge compute the AP at several thresholds (ranging from
0.5 to 0.95) and average the values. While this is the current standard for
object detection evaluation, the few-shot object detection literature still uses
the former Pascal VOC AP as it is an easier metric. Hence, we will use both of
these metrics in the manuscript. We will denote them as $\text{AP}_{0.5}$ and
$\text{AP}_{0.5:0.95}$ respectively. 

Before computing the precision-recall curve, the predicted labels must be ranked
by confidence scores. It is usually possible to derive a confidence score along
with the bounding box coordinates and labels from a detection model. This can be
for instance the highest class probability score. Once the detections are ranked
according to confidence scores, one can compute the running precision
$\text{Prec}_k$ and recall $\text{r}_k$ by taking only the top-$k$ bounding
boxes. Then, it is possible to plot the precision as a function of the recall by
plotting the points $(\text{r}_k, \text{Prec}_k)$, for $1 \leq k \leq
|\hat{\mathcal{Y}}|$. This generally gives a zig-zag shaped curve
$\text{Prec}(\text{r})$ as visible in \cref{fig:od_pr_curve}. Therefore, it may
not be easy to compute the area under the curve, \ie the AP. A few tricks were
introduced in \cite{salton1983introduction}, and later popularized with the
Pascal VOC challenge \cite{everingham2010pascal}. They consist in taking an
interpolated precision-recall curve:
\begin{equation}
    \text{Prec}_{\text{interpolated}}(\text{r}) = \max_{\tilde{r} \geq r} \, \text{Prec}(\tilde{r}), 
\end{equation} 

and discretize the area computation over 11 equally spaced points along the
recall axis. Hence, the original AP definition becomes:
\begin{equation}
    \text{AP} = \frac{1}{11} \sum\limits_{i=0}^{10} \text{Prec}_{\text{interpolated}}(0.1 \times i).
\end{equation}

So far, we only discuss the evaluation without taking into account the class of
the bounding boxes. In order to take this into account, the AP is computed
independently for each class and noted $\text{AP}_c$. The predicted boxes are
now considered true positive only if they have a sufficient IoU with a ground
truth box and if they have the same class. The mean Average Precision is defined
as follows:
\begin{equation}
    \text{mAP} = \frac{1}{|\mathcal{C}|} \sum\limits_{c \in \mathcal{C}} \text{AP}_c.
\end{equation}

The mAP is largely the most employed metric for object detection and most of our
analysis will be based on it. However, there exist complementary metrics that
are worth presenting here.

\subsubsection{mean Average Precision per object size}
The AP and mAP can be computed only on certain object sizes. The principle is
simple, simply filter the sets of predicted and true labels to keep boxes of a
certain size before the AP computation. This distinction was introduced in the
COCO challenge \cite{lin2014microsoft}, with three different object sizes: 
\begin{itemize}[nolistsep]
    \item[-] \textbf{Small}: boxes with an area $a=wh$ smaller than $a \leq 32^2$ pixels.
    \item[-] \textbf{Medium}: boxes with $32^2 < a \leq 96^2$.
    \item[-] \textbf{Large}: boxes with $a > 96^2$.  
\end{itemize}
\begin{figure}
    \centering
    \begin{subfigure}[c]{0.49\textwidth}
        \includegraphics[width=\textwidth]{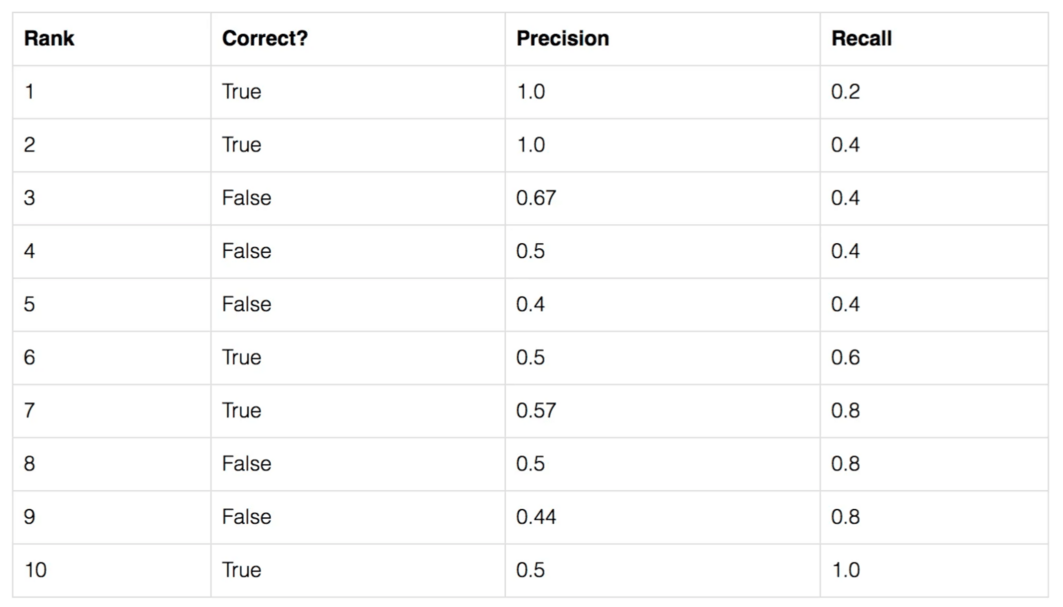}
    \end{subfigure}
    \hfill
    \begin{subfigure}[c]{0.49\textwidth}
        \includegraphics[width=\textwidth]{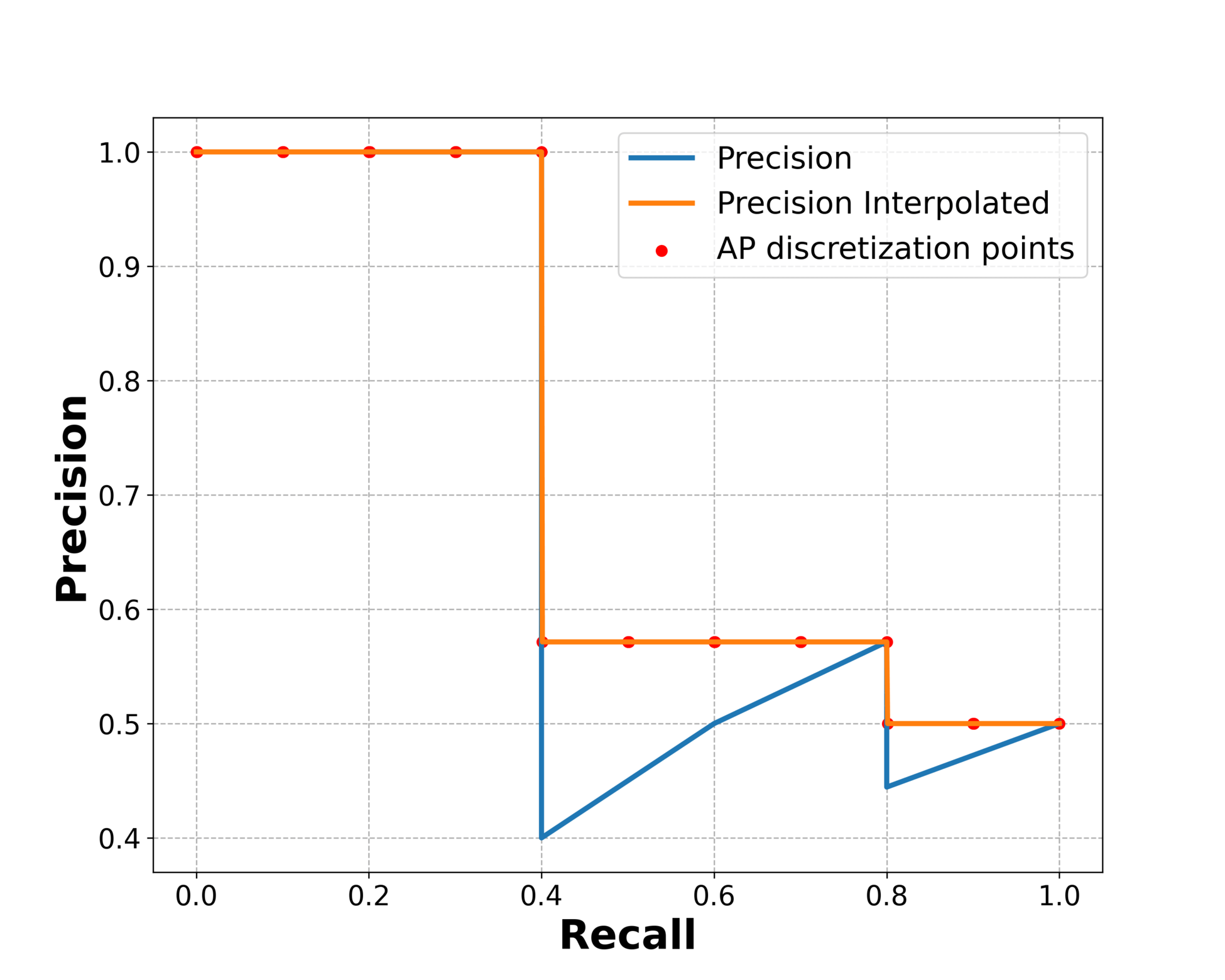}

    \end{subfigure}
    \caption[Precision-Recall curve computation]{Example of a Precision-Recall curve and its interpolated variant.}
    \label{fig:od_pr_curve}
\end{figure}

This distinction is extremely relevant for COSE applications as the objects of
interest in aerial images are often small and because most detectors struggle to
detect them. Even worse, this issue is reinforced in the few-shot regime as we
will see in the following chapters. Therefore, $\text{mAP}^S$, $\text{mAP}^M$
and $\text{mAP}^L$ will be extensively employed in our analysis. 

\subsubsection{Average Recall}

Similar to the average precision, the Average Recall (AR) is computed as the
area under the Recall-IoU curve: 
\begin{equation}
    \text{AR} = 2 \int_{0.5}^1 \text{Recall}(\upsilon) \,d\upsilon, 
\end{equation}

where $\upsilon$ denotes the IoU threshold (ranging from 0.5 to 1) for the
recall computation. Similarly to the AP for different object sizes, the average
recall can be declined according to the maximum number of detection allowed.
Basically, this controls the size of $\hat{\mathcal{Y}}$, the more detection the
model can output, the less likely it is to miss an object of interest. Although
it can be a critical metric in some applications (\eg lesion detection), this is
not essential for COSE's applications.

\subsubsection{Average Precision shortcomings and Alternative}
Even though AP is widely used in the computer vision community, it has several
shortcomings. Similar values of AP can be obtained from very different
precision-recall curves, hiding different detectors' behavior. The ranking of
the confidence scores makes the AP sensitive to the prediction confidence.
Finally, the interpolation trick from \cite{salton1983introduction} may cause
large errors when the number of instances of the class is small. These drawbacks
were highlighted in \cite{oksuz2018localization}, which proposes an alternative
metric, the Localization-Recall-Precision (LRP). This metric is an aggregation
of three metrics based on the box regression error, the precision, and the
recall, under a certain confidence threshold. Hence, LRP fixes some of the AP's
shortcomings.

More recently, \cite{jena2023beyond} also outlined two detection issues that are
not spotted by AP, namely \textit{spatial hedging} and \textit{category
hedging}. Spatial hedging comes from the fact that low-confidence duplicates
(slightly perturbated spatially) of a box do not degrade the AP value, instead
having a lot of these duplicates generally improves the AP. However, these
duplicates are mostly burdensome from an application viewpoint. The authors even
highlight some tricks in recent object detectors that boost AP while increasing
the number of duplicates. Category hedging comes from duplicated boxes with
different classes. Consequently, the authors proposed two novel metrics to
specifically assess spatial and category hedging: Duplicate Confusion (DC) and
Naming Error (NE). Note that LRP partly assesses spatial hedging as well.

\subsection{Literature review about Object Detection}
\label{sec:od_literature}

In this section, we review the Object Detection literature but only present the
most ground-breaking works. For an exhaustive review of object detection, we
defer the reader to two popular surveys \cite{zaidi2022survey,zou2023object}.
This section is divided into three parts, traditional object detection,
CNN-based OD and Transformer based OD. These correspond to three phases in the
development of object detection techniques. This is highlighted by the timeline
in \cref{fig:od_timeline}, which summarizes the history of the object detection
field.

\begin{figure}[h]
    \centering
    \includegraphics[width=\textwidth]{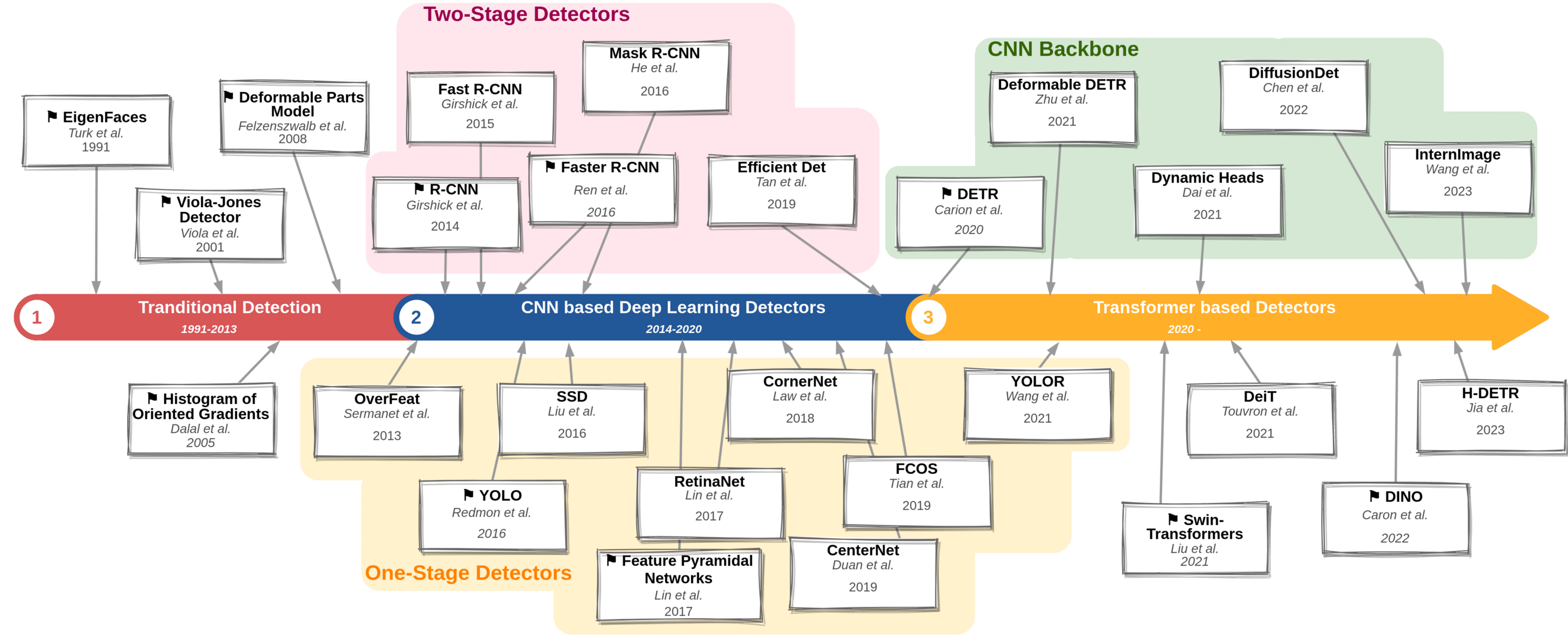}
    \caption[Object Detection timeline]{Timeline of the Object Detection literature, it includes some of
    the most relevant works in the field of Object Detection. Papers marked with a flag are
    the most ground-breaking works, some will be detailed in \cref{sec:od_literature}}
    \label{fig:od_timeline}
\end{figure}

\subsubsection{Traditional Object Detection Approaches}
\label{sec:od_traditional}

The very beginnings of the object detection field date back to the early 1990s.
It began with an easier one-class problem: face detection. Of course, there were
prior works addressing this task, but they mainly focused on face recognition
and not detection. The difference is slight, the recognition task only asks
whether there is a face or not in an image. This field gained substantial
interest over the 1970s and 1980s with seminal works such as
\cite{goldstein1971identification,goldstein1972man,kaya1972basic,Yuille1989FeatureEF}.
However, it was only in 1991 that the detection task was first tackled by
EigenFaces \cite{turk1991eigenfaces}. In this work, the authors perform a
Principal Component Analysis (PCA) on a set of face images. The PCA
returns a set of eigenvectors (denoted as \textit{EigenFaces}) that span the
face space. Applying the EigenFaces on a sliding window over the images
allows creating \textit{faceness} maps and therefore localizing faces. Following
this pioneer contribution, many face detectors were proposed in the 1990s, for
instance
\cite{bichsel1994human,sung1998example,vaillant1994original,rowley1998neural}.
We called this section "Traditional Object Detection Approaches" in contrast
with the two following sections that are deep learning-based approaches.
However, note that a significant proportion of the methods developed during the
1990s actually leverage neural networks. In EigenFaces
\cite{turk1991eigenfaces}, for instance, the authors discuss an implementation
of their system using Multi-Layer Perceptron (MLP) for fast parallel
computation. Similarly, \cite{vaillant1994original,rowley1998neural} exploit
and train neural networks for the task of face detection. We will see in the next
section that plenty of these ideas will be re-used 20 years later by current
deep-learning-based detectors. The missing pieces in these early days of object
detection were large annotated datasets and dedicated hardware such as Graphical
Processing Units (GPUs).

In the late 1990s, the object detection task as we know it today with multiple
classes of interest was still far off, but some research groups began to apply
it to objects other than faces. As an example, \cite{moghaddam1995probabilistic}
introduced a general probabilistic model for the object of interest used for the
visual search of faces and hands. Later, \cite{papageorgiou1998general} stepped
aside from the PCA-like object representations and proposed generic learnable
features based on Haar wavelets transform. This was successfully applied to
pedestrian detection. An extension with slightly more elaborated features
proposed the Viola-Jones detector \cite{viola2001rapid}. But it mainly provided
several tricks for fast computation, achieving robust real-time detection. 

In the 2000s, plenty of works shared the same strategy as the Viola-Jones
detector: learning a set of elaborated features and classifying regions in the
input images by comparison with the set of features. Improvements were made
using more and more complex feature sets
\cite{ronfard2002learning,mikolajczyk2004human} and the popularization of
Support Vector Machines (SVM) classifiers\cite{papageorgiou2000trainable}. This
strategy led to the well-known Histogram of Orient Gradient (HOG)
\cite{dalal2005histograms}, which was first applied to pedestrian detection.
However, this method was one of the first to tackle multi-class object detection
in the first Pascal VOC challenge \cite{everingham2010pascal} in 2005. 

The Pascal VOC challenge quickly became a reference in the Object Detection
field, with increasing difficulty over the years (more classes and more
images). The winners of the following editions 2006, 2007 and 2008 all took
inspiration from the HOG features. In particular,
\cite{felzenszwalb2008discriminatively} employs several tricks to improve the
detection quality based on HOG features. Among those, Deformable Part Models (DPM),
\ie representing each object as a set of its parts provide significant
improvements. It also leverages pyramid features and hard examples mining which
are common components of recent object detectors. DPM were then
refined with for instance the Grammar Models \cite{girshick2011object} and
Star Models \cite{girshick2011object}.

\subsubsection{Object Detection in the Deep Learning Era}
\label{sec:od_dl_era}

While there were a few attempts to solve object detection with neural networks
during the 1990s, all were limited to single-class problems and lagged behind
state-of-the-art in terms of detection quality and speed. However, this changed
with the popularization of fast parallel processing units (GPUs) and the
creation of large image datasets. In 2012, AlexNet \cite{krizhevsky2017imagenet}
was introduced for image classification with deep convolutional networks (CNNs).
Since AlexNet, deep learning was successfully applied to most tasks in computer
vision including Object Detection.  

From the beginning of the deep object detection era, two schools of thought
emerged: one-stage detectors and two-stage detectors. As two-stage detectors
were proposed first, we will present them first here.

\paragraph*{Two-Stage Detectors}
Regions with CNN features or R-CNN \cite{girshick2014rich} is one of the first
attempts to tackle the task of Object Detection with CNNs. This marks a
significant performance improvement over the previous methods (about 20\% mAP
improvement over the best DPMs on the 2010 Pascal VOC challenge). The idea
behind R-CNN is to leverage the classification power of CNNs such as AlexNet for
the detection task. It first employs Selective Search
\cite{uijlings2013selective}, a class-agnostic object locator, to generate
region proposals. For each region proposal, the corresponding part in the input
image is cropped and fed to a CNN pre-trained on ImageNet. The CNN outputs
high-dimensional feature vectors which are then classified by an SVM for each
class. The CNN is fine-tuned for the detection task by replacing its final
classification layer with N+1 class logits (one additional class for the
background) and training with a Cross-Entropy (CE) loss function. Proposal
regions with an IoU of 0.5 with a ground truth box are selected as
\textit{positive proposals} and the model is trained to classify these regions
with the label of their corresponding ground truth. The other proposals (denoted
\textit{negative proposals}) are selected as background examples. The authors
train classification SVMs instead of using the classification outputs of the CNN
as they observe higher performance with SVMs. In addition, they train a linear
bounding box regressor to refine the coordinates of each region of interest,
following the most recent advances with DPM. OverFeat
\cite{sermanet2013overfeat} was another attempt to solve detection with CNN.
Although it did win the ImageNet Detection Challenge in 2013, it is largely
outperformed by R-CNN and the corresponding paper was never published. Following
R-CNN, the first author Ross Girscick proposed two successive extensions.
Firstly, Fast R-CNN \cite{girshick2015fast}, mainly improves over R-CNN in terms
of speed. It introduces a Region of Interest (RoI) Pooling layer which extracts
RoI features directly from the features maps of the entire image. This is
largely inspired by Spatial Pyramid Pooling \cite{he2015spatial} which consists
in pooling the features of an RoI with multiple binning resolutions and
concatenating the outputs. RoI Pooling saves a lot of unnecessary forward passes
through the CNN (R-CNN performs a forward pass for each RoI). Then, they dropped
the SVM classifiers for the CNN outputs and integrated a bounding box regressor
at the end of the CNN as a parallel branch to the classification layer. The
training is done in a similar fashion as in R-CNN, they simply added a
regression loss function, computed only on the positive proposals to train the
bounding box regression branch. Secondly, Faster R-CNN \cite{ren2015faster}
introduced the Region Proposal Network (RPN) to replace Selective Search as the
proposal generation algorithm. Selective Search provides almost exhaustive
proposals but is slow. The RPN is a lightweight CNN that densely predicts
proposal coordinates and an \textit{objectness} score at each position in the
feature map. The box predictions are done as coordinate shifts from a set of
pre-defined \textit{anchor} boxes. After objectness filtering, this produces a
reasonable number of proposals that can be processed with Fast R-CNN. The RPN is
trained like Fast R-CNN with a similar loss function. A binary classification
loss as the RPN classifies each location between objects or background (with the
objectness score), and a regression loss. Example selection remains unchanged
compared to Fast R-CNN. The training of both the RPN and Fast R-CNN is done in
an end-to-end manner. Faster R-CNN achieves superior performance compared to its
predecessors, but most importantly, it unlocks real-time detection with deep
learning-based two-stage approaches. 

\paragraph*{One-Stage Detectors}
One-stage detectors appeared slightly later than two-stage ones. The reason for
this is probably because two-stage models were the logical continuity of the
sliding-window-based older detectors. These approaches are highly inefficient as
they process parts of the input images many times. While this is reduced in
modern two-stage detectors, they still have redundancies that limit their
inference speed. In 2015, Redmon et al. proposed You Only Look Once (YOLO)
\cite{redmon2016you}, a first detector to avoid all redundancy as it only needs
to look once at each part of the image. The main idea behind YOLO is to
reformulate OD as a regression problem and not a classification one. Prior
detectors solve OD by classifying regions of the input image, \ie classifying
given a region of interest. YOLO instead regresses the box coordinates and
classifies the object jointly. 

The main principle of YOLO is to split the input image into an $S\times S$ grid
and predict bounding boxes, confidence scores and class probabilities for each
cell in the grid. Each cell is "in charge" of detecting objects that are located
within its boundaries. To deal with cases where more than one object is visible
in one cell, $B$ bounding boxes and confidence scores are predicted per cell
($B=2$ in the original paper). To keep model size constrained, the class
probabilities are predicted only once per cell. This assumption limits the model
to predict boxes of one class only per cell. The YOLO architecture is based on a
deep CNN followed by two fully connected layers. The grid separation is directly
implemented inside the architecture since the input size is fixed. YOLO is
trained in an end-to-end fashion with a typical detection loss function. This
loss function includes a regression part for box coordinates and a
classification part, both implemented as L2 loss functions. Just like other
object detectors, YOLO has an example selection strategy to compute its loss.
Each ground truth box is attributed to the cell where its center is located and
then to the box with the highest IoU. Thus, YOLO is extremely fast compared to
the two-stage approaches  (50 to 100 fps depending on the configuration for YOLO
compared to less than 15 fps for Faster R-CNN). However, this speed improvement
comes at the cost of slightly lower detection quality. 

Just like the R-CNN family, YOLO was extended several times by its original
authors and even later perpetuated by other research groups. In YOLOv2
\cite{redmon2017yolo9000}, several improvements are introduced, including a
lighter and fully convolutional architecture, decoupled class probabilities for
each box, anchors boxes as in Faster R-CNN and a hierarchical word structure for
refined classification. It also proposes several tricks and loss function
adjustments to stabilize training. YOLOv2 hence achieves both higher detection
performance and speed. YOLOv3 \cite{redmon2018yolov3} is then introduced in an
unpublished paper by the same authors, presenting several incremental
improvements. Again, it outperforms its previous version both in quality and
speed. After YOLOv3, its authors decided to quit the Object Detection research
for ethical reasons, but many other groups continued to refine the YOLO
framework. The race for the best performance and speed includes numerous
versions of YOLO: YOLOv4 \cite{bochkovskiy2020yolov4}, PP-YOLO
\cite{long2020pp}, YOLOX \cite{yolox2021}, YOLOv6 \cite{li2022yolov6}, YOLOR
\cite{wang2021you} and YOLOv7 \cite{wang2022yolov7}. Each of these works has its
share of marginal changes involving elaborated loss design, structure change,
augmentation techniques and refined anchors generation. Note that YOLOv5
\footnote{https://github.com/ultralytics/yolov5} and
YOLOv8\footnote{https://github.com/ultralytics/ultralytics} also exist but only
as popular code repositories on GitHub, without any detailed report about their
contributions. 

YOLO models have a rich development history but are now reduced to marginal
changes and implementation tricks. This is extremely useful from an engineering
view, but it is less relevant from a research perspective. Nevertheless, the
YOLO framework inspired plenty of other one-stage detectors. In particular, some
detectors drop the use of anchors boxes and instead detect objects with
keypoints. CornerNet \cite{law2018cornernet} for instance produces heatmaps to
determine the position of two corner points for each object, preventing the use
of boxes at all. CenterNet \cite{duan2019centernet} is a refinement of CornerNet
that only outputs a center point and infer the box dimensions from the keypoint
features. Both CornerNet and CenterNet output a keypoint heatmap for each class
and involve sophisticated post-processing to obtain the predicted bounding
boxes. FCOS \cite{tian2019fcos} simplifies this by directly predicting boxes
from each feature location.

It is also worth mentioning Single Shot Detector (SSD) \cite{liu2016ssd}, which
was proposed slightly after YOLO. It is also a one-stage detector, but unlike
the first YOLO version, it is fully convolutional. This has several advantages
as it predicts boxes densely on the images (higher recall, better detection of
small objects) and it adapts better to different input image sizes. But most
importantly, SSD leverages features from various scales for the predictions
which dramatically improves the detection of small targets. Although this idea
was introduced by SSD, it was popularized later with Feature Pyramid Networks
(FPN) \cite{lin2017feature}. We will discuss these advancements in the following
section as well as the choice of the CNN architecture choice.

One-stage object detectors were at first lagging behind two-stage detectors in
terms of detection performance. However, the recent progress tends to close this
gap, making the one-stage detectors the standard choice in the industry as they
offer the best speed/performance tradeoff.  

\paragraph*{Backbone network choice}
In the OD literature, it is common to denote the main features' extractor CNN as
the \textit{backbone} of the network. Then, the lightweight module designed for
classification and box regression on top of the backbone is logically called the
\textit{detection head}. What is placed between the backbone and the head (\eg
FPN and RPN) are sometimes referred to as the \textit{neck}.
\cref{fig:od_architecture_choice} highlights these three main components of the
object detector structure and outlines some design choices for each component.  
The backbone has an extremely important role in object detection as it extracts
the features on which the classification and regression modules work. The choice
of the backbone has been driven by the advances in classification, specifically
the most common backbones have largely proven their capacities on ImageNet.
First AlexNet \cite{krizhevsky2017imagenet} was used by R-CNN, then VGG networks
\cite{simonyan2014very} for Fast R-CNN, Faster R-CNN, YOLO, SSD and many others.
These were quickly replaced by Residual Networks (ResNets) \cite{he2016deep}
which provide a large improvement in ImageNet classification, and consequently
in the detection task. Following the extensions upon ResNets, object detectors
successfully adopted WideResNet \cite{zagoruyko2016wide}, ResNext
\cite{xie2017aggregated} or EfficientNet \cite{tan2019efficientnet}. More
recently, the backbone network shifted from CNN to visual transformers, but this
review will be conducted in \cref{sec:od_recent_advance}. 

Now, the backbones alone are not sufficient to extract relevant features for
Object Detection. Backbone networks are originally designed to deal with curated
images where one main object is visible and often located at the center of the
image. Thus, backbones are not well-suited to deal with objects of various sizes
and locations. An alternative to this issue is to leverage pyramidal features.
This is not a very innovative idea as this was largely employed by face
detectors during the 1990s and HOG models later (see \cref{sec:od_traditional}).  
The actual innovation is to integrate this inside the CNN architecture. This was
introduced first in SSD \cite{liu2016ssd}, but it was popularized with Feature
Pyramid Networks (FPN) \cite{lin2017feature}. FPNs combine features with various
resolutions from the input image at a single resolution (\ie it is not necessary
to perform the forward pass on multiple rescaled versions of the same image). FPNs
extract intermediate feature maps in the backbone and aggregate them in a
bottom-up manner (in contrast to the top-down processing of the forward pass).
This bottom-up computation path is generally implemented with deconvolution
layers, such as in Deconvolution SSD \cite{fu2017dssd}. FPNs are plug-and-play
modules that can be attached to most backbone networks and significantly
improves the detection performance, especially for small objects. In two-stage
detectors, FPN are often combined with Region of Interest Alignment Layer (RoI
Align) to extract RoI features from the most relevant feature level (\ie
according to the RoI size). RoI Align was introduced in Mask R-CNN
\cite{he2017mask}, an extension of Faster R-CNN for Instance Segmentation, that
uses a FPN. In one-stage detectors, the detection is carried out on all feature
levels output by the FPN, deeper levels are responsible for detecting larger
objects. 

Of course, plenty of contributions were proposed to improve upon FPNs. Path
Aggregation Net (PANet) \cite{liu2018path} for instance adds another top-down
path before aggregating features from multiple levels with an Adaptive Feature
Pooling layer. However, the design of the FPN architecture is not trivial and
requires lots of trial and error. To find optimal FPN designs, NAS-FPN
\cite{ghiasi2019fpn} proposed to apply principles of Neural Architecture
Search for the design of FPNs and achieved superior detection performance.
However, these questions are not so relevant to us as they mainly focus on
Auto-ML problematics.  

\begin{figure}
    \centering
    \includegraphics[width=\textwidth]{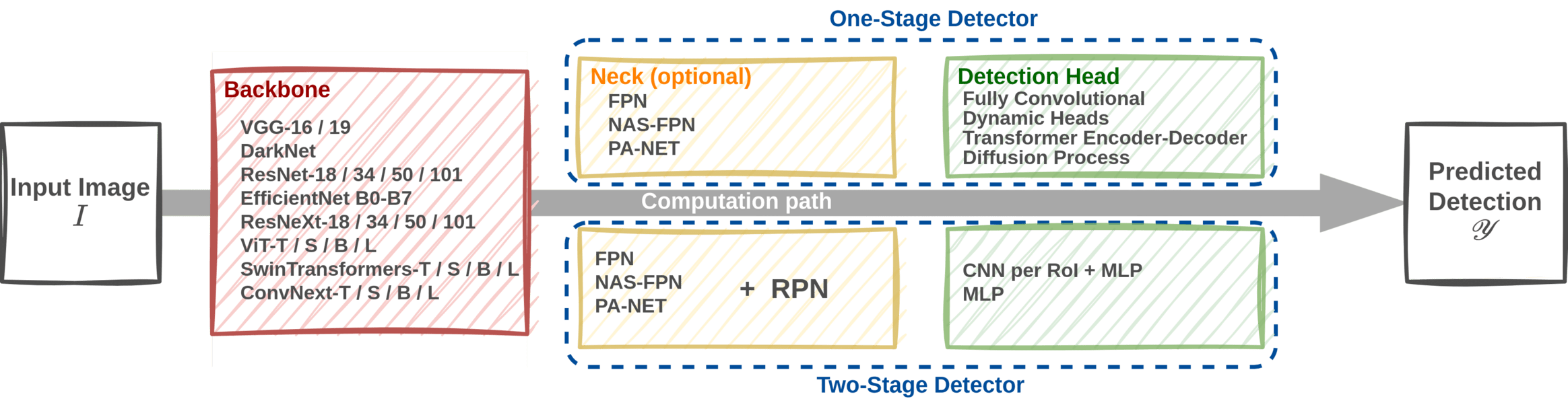}
    \caption[Object Detection architectural choices]{Possible architectural
     choices for Object Detector design.}
    \label{fig:od_architecture_choice}
\end{figure}

\paragraph*{Non-Maximal Suppression}
Note that these methods often output numerous detections and require elaborated
filtering schemes to prevent duplicates. Among others the Non-Maximal
Suppression (NMS) became quite popular. In the
case of largely overlapping boxes (\ie when the IoU is above a fixed threshold),
NMS keeps only the most confident box to prevent duplicates. This is done
separately for each class so that objects from different classes may
overlap. It improves the visual quality of the detection, but it
can slightly degrade the detection performance when dealing with crowded scenes.
For instance, Faster R-CNN employs the NMS both on the outputs of the RPN and
on the final set of bounding boxes.

\subsubsection{Recent Advances in Object Detection}
\label{sec:od_recent_advance}

A Transformer is a network architecture based on a multi-head self-attention
mechanism. It was proposed in the context of Natural Language Processing (NLP)
in 2017 by Vaswani et al. \cite{vaswani2017attention}. Since then, it became an
essential component of most NLP applications. The original idea behind
transformers is to represent the relations between different words in a
sentence. For instance, the subject and a pronoun in a phrase must be strongly
connected as they designate the same object. The self-attention mechanism from
the Transformers is specifically built to adaptively compute these relations
between words. As Transformers reformed the entire NLP field, vision models
started to embed similar mechanisms to model long-range dependencies between
parts of an image. Vision Transformers (ViT) \cite{dosovitskiy2020vit} and Image
Transformers \cite{parmar2018image} are one of the first attempts to solve image
classification with Transformers and achieve considerable improvements over CNN
baselines. They achieve this by dividing an input image into several
patches that they treat just like words in a sentence.

Consequently, most vision tasks were quickly influenced by this new
architecture. Object detection is no exception and in 2020, DEtection with
TRansformers (DETR) \cite{carion2020end} is introduced as a first attempt to
solve object detection using visual Transformers. Due to the time complexity of
the transformers blocks ($\mathcal{O}(H^2W^2)$, where $H$ and $W$ are the height
and width of the image respectively), it is unreasonable to directly apply ViT
for OD as input images are generally quite large. Instead, DETR leverages a
ResNet backbone to extract relevant features but implements the detection head
with transformers blocks. The head is divided into two parts, an encoder and a
decoder. The encoder combines backbone features with positional encodings (\ie
fixed vector whose role is to keep track of the location of the patch
processed). The decoder takes as input a set of object queries, learnable
vectors that represent various positions in the images (similarly to positional
encodings). Their role is to condition the detection toward a specific part of
the image. Encoded image features are integrated inside the decoder through
cross-attention layers. Finally, a light MLP predicts the box coordinates and
class for each object query. The training of DETR is similar to prior object
detectors (\ie a classification and a regression loss function). However, the
matching between predicted and ground truth boxes differs. DETR tackles OD as a
set prediction problem, \ie it predicts a set of bounding boxes as a whole and
compares it with the set of ground truth boxes. This differs from prior
detectors which often employ box-to-box matching. In DETR, the matching is done
by finding an optimal permutation of the sets (with the Hungarian algorithm)
according to a cost involving boxes' positions and class labels. 

Although DETR is a significant milestone in the detection landscape, it does not
yield impressive performance gains over existing work. It even has considerable
drawbacks, its inference is slow, it struggles with small objects and its
training is one order of magnitude slower than prior detectors. Fortunately,
several extensions mitigate these issues. First Deformable DETR
\cite{zhu2021deformable}, reduces the convergence time and improves detection
performance with a deformable attention module. Deformable attention drastically
reduces the amount of computation required and can process images with higher
resolution. Deformable Attention Module is the twin of Deformable Convolutions
\cite{dai2017deformable} but for transformers. Similarly, H-Deformable-DETR
\cite{jia2022detrs} builds upon Deformable DETR with improved matching
techniques which accelerate training further.

While the previous methods leverage transformers to make detections, they still
rely on large CNN models as the backbone. This choice is also questioned by
recent advances in visual transformer architectures. On the one hand,
Data-efficient image Transformers DeiT \cite{touvron2021training} and
Bidirectional Encoder representation from Image Transformers (BEiT)
\cite{bao2022beit} both improve ViT's accuracy and training strategy. Both DeiT
and BEiT show similar fine-tuning properties as CNNs, unlocking their
application for various downstream tasks including object detection. On the
other hand, ConViTs \cite{d2021convit} and Swin Transformers
\cite{liu2021swin,liu2022swin} make the attention computation much faster with
spatial inductive bias and hierarchical structure respectively. 

Another source of improvement for the detection backbones comes from
self-supervised training. Recent advances in large-scale self-supervised
training for classification are now being adapted to other visual tasks. DINO
\cite{caron2021emerging,oquab2023dinov2} pre-trains both CNN and Transformer
based backbones in a contrastive way with carefully designed augmentation
schemes to obtain more robust visual features. Using these pre-trained backbones
generally boosts the performance on many visual tasks, at least when applied to
sufficiently similar images. The very recent Segment Anything Model (SAM)
\cite{kirillov2023segment} also falls under the same category called \textit{foundation
models}. Even though no derivative work has been published yet, the capacities
of SAM are promising for object detection.

Thanks to this progress, transformers-based backbones are about to replace CNN
in most computer vision applications, including Object Detection.
Nevertheless, some CNNs backbones are still proposed and seem to keep up with
the rapid progress of transformers-based backbones. Among these works, ConvNeXt
\cite{liu2022convnet} brings several improvements over the original ResNet
architecture to outperform Swin Transformer backbones. Closely related,
InternImage \cite{wang2022internimage} proposes an extension of Deformable
Convolution to scale CNNs architectures as much as transformers (which was
limited before). Even if the current hype is directed toward Transformers-based
backbones, CNNs are not defeated yet. As an example, DynamicHeads
\cite{dai2021dynamic} is a recent object detector based on a ResNext backbone
achieving very competitive results on the COCO dataset. It leverages attention
mechanisms inside the detection head, but not Transformers modules as in DETR.
Another recent detector based on a CNN backbone is DiffusionDet
\cite{chen2022diffusiondet}. It adapts the diffusion models (currently very
popular for image generation) to box prediction and obtains convincing
performance as well.

To summarize, recent advances in Object Detection have largely followed the
Transformer "revolution". First, with improved detection heads (the DETR
family), and then, with improved backbones, based on Transformers but also
revamped CNNs.

\subsubsection{Object Detection on Aerial Images}
\label{sec:od_rsi}

Most of the works presented in the above sections focus on the object detection
task applied to natural images. Aerial images differ significantly from natural
images, they do not contain any perspective, objects can be arbitrarily rotated,
and they have a greater object size variance. Given this, it seems obvious that
some adjustments are required to adapt popular detectors to aerial images. 

\paragraph*{Oriented Bounding Boxes}
As objects can be oriented in any direction, some aerial object detection
datasets give annotations as oriented bounding boxes. This slightly changes the
problem, but most detectors can easily be extended to deal with rotated boxes.
The bounding box formulation can be extended so that it is rotated, the
regression layer must then be adapted to predict a rotation angle
\cite{liu2017learning,xie2021oriented}, more than four coordinates
\cite{xia2018dota,xu2020gliding}, or to use rotated RoI, for instance with RoI
Transform \cite{ding2018learning}.  

\paragraph*{Small Object Detection}
Aerial images contain objects with great size variance due to discrepancies in
the shot conditions (altitude, sensor resolution, camera focal length, etc.). In
addition, they also have smaller objects than natural images. To deal with the
object size variance, it is possible to leverage supplementary information such
as the ground sample distance (GSD). The ground sample distance represents the
size of one pixel on the ground. Based on GSD, a model can infer the size of the
RoI and therefore refine its predictions, as done by GSDet \cite{li2021gsdet}. 
However, object size variance is generally a limited issue compared to detecting
small objects, which remains an open challenge in object detection. Many
attempts were made to solve this issue using specific architecture design
\cite{deng2022extended}, multiscale training \cite{singh2018sniper,
singh2021scale}, data-augmentation \cite{kisantal2019augmentation} or
super-resolution \cite{rabbi2020small, Bai2018SODMTGANSO,Ferdous2019SuperRD}.
Additionally, Normalized Wasserstein Distance (NWD) \cite{wang2021nwd} proposes
an alternative to the IoU loss specifically designed for detecting small
objects. It consists in computing the Wasserstein distance between two Gaussian
distributions fitted on the two bounding boxes compared. Moreover, NWD is not
only used as a loss function but also as an example selection criterion.

\subsubsection{Training Object Detectors}
\label{sec:training_od}

While we briefly presented how object detectors are trained in the previous
sections, we did not give much detail about the loss functions and the
optimization process. We remedy this here by reviewing the loss functions of
several popular object detection benchmarks.

\paragraph*{Loss functions for Object Detection}
As object detectors must solve both classification and regression tasks, most
detection loss functions are divided into two components, a classification loss and a
regression loss. Plenty of choices exist for both components. For the
classification loss, the most common choice is the Cross-Entropy loss:
\begin{equation}
    \mathcal{L}^{\text{cls}}_{\text{CE}}(\hat{\boldsymbol{\mathrm{y}}_i}, \boldsymbol{\mathrm{y}}_i) = - \log(\hat{l}_i^{c_i}), 
\end{equation}

where $\hat{l}_i^{c_i}$ is the predicted probability that the box $i$ contains an
object of class $c_i$, $c_i$ being the true label ($\boldsymbol{\mathrm{y}}_i = (b_i, c_i)$).
However, some alternatives such as the L1 or L2 losses over the class probabilities are
also employed:
\begin{equation}
    \mathcal{L}^{\text{cls}}_{\text{L1}}(\hat{\boldsymbol{\mathrm{y}}_i}, \boldsymbol{\mathrm{y}}_i) = \| \hat{l}_i - l_i \|_2^2,
\end{equation}

where $\hat{l}_i$ denotes the class probability vector
($\hat{l}_i=\{\hat{l}_i^c\}_{c \in \mathcal{C}}$) and $l_i$ is the one-hot encoded
true probability vector of box $i$. Another very common classification loss
function in recent detector is the Focal Loss (FL) function \cite{lin2017focal},
which was introduced with the RetinaNet object detector. Focal loss is designed
to address the class imbalance issue that is inherent to dense object detectors
(the background class is much more represented than other classes). Focal loss
reduces the relative loss of well-classified examples so that the learning
process focuses on misclassified objects. It is defined as follows:
\begin{equation}
    \mathcal{L}^{\text{cls}}_{\text{FL}}(\hat{\boldsymbol{\mathrm{y}}_i}, \boldsymbol{\mathrm{y}}_i) = - \alpha_{c_i} (1-\hat{l}_i^{c_i})^{\gamma}\log(\hat{l}_i^{c_i}),
\end{equation}

where $\alpha_{c_i}$ is an inverse class-frequency parameter and $\gamma$
controls how much FL reduces the contribution of well-classified examples to the
loss. In RetinaNet, the authors leverage only a binary version of FL as they
tackle the multi-class classification problem as $|\mathcal{C}|$ binary
classification problems. They replaced the Softmax activation function on the
classification layer with a Sigmoid activation and classified the box as either
background or foreground for each class independently. This binary formulation
of the classification task will be extensively re-used by derivative detectors.  

For the regression part, a greater variety of loss functions exists in the
literature. L1 and L2 losses are common in early CNN-based detectors. The
Smooth L1 (or Huber Loss \cite{huber1992robust}) is a variant of the L1 loss
leveraged by the Faster R-CNN family of detectors. It combines the L1 and L2
norms to get a smooth loss function around 0. Next, UnitBox \cite{yu2016unitbox}
introduced the IoU loss function as the new standard for box regression
training: 
\begin{align}
    \mathcal{L}^{\text{reg}}_{\text{IoU}}(\hat{\boldsymbol{\mathrm{y}}_i}, \boldsymbol{\mathrm{y}}_i) &= - \log(\text{IoU}(\hat{b_i}, b_i)), \quad &&\text{Log version}\\
    \mathcal{L}^{\text{reg}}_{\text{IoU}}(\hat{\boldsymbol{\mathrm{y}}_i}, \boldsymbol{\mathrm{y}}_i) &=  1 - \text{IoU}(\hat{b_i}, b_i). \quad &&\text{Linear version}
\end{align}

Following the IoU loss, several extensions were proposed, \eg Generalized IoU
\cite{rezatofighi2019generalized}, Distance-IoU \cite{zheng2020distance}, or
$\alpha$-IoU \cite{he2021alpha}, these will be reviewed later in
\cref{chap:siou_metric}. To summarize, \cref{tab:od_losses} gives an overview of
the loss functions used by common object detectors.

\begin{table}[h]
    \centering
    
    \resizebox{\columnwidth}{!}{%
    \begin{tabular}{@{\hspace{3mm}}lll@{\hspace{3mm}}}
    \toprule[1pt]
                 & \textbf{Classification}                                                                                                                                              & \textbf{Regression}                                                                                              \\ \midrule
    \textbf{Faster R-CNN} \cite{ren2015faster} & \begin{tabular}[c]{@{}l@{}}Cross-Entropy for the detection head\\ Binary Cross Entropy for the RPN\end{tabular}                                                      & SmoothL1 Loss for head and RPN                                                                          \\ \midrule
    \textbf{YOLO} \cite{redmon2016you}        & \begin{tabular}[c]{@{}l@{}}L2 Loss on class probability vector (for grid cell containing an object)\\ L2 Loss on true class probability (for all cells)\end{tabular} & \begin{tabular}[c]{@{}l@{}}L2 Loss on box center\\ L2 Loss on square root of box dimensions\end{tabular} \\ \midrule
    \textbf{RetinaNet} \cite{lin2017focal}    & Binary Focal Loss                                                                                                                                                    & SmoothL1 Loss                                                                                           \\ \midrule
    \textbf{UnitBox} \cite{yu2016unitbox}     & Binary Cross Entropy                                                                                                                                                 & IoU Loss (log)                                                                                          \\ \midrule
    \textbf{FCOS} \cite{tian2019fcos}       & Binary Focal Loss                                                                                                                                                    & GIoU Loss (linear)                                                                                      \\ \midrule
    \textbf{DETR}  \cite{carion2020end}       & Cross Entropy                                                                                                                                                        & L1 Loss and GIoU Loss                                                                                   \\ \midrule
    \textbf{DiffusionDet} \cite{chen2022diffusiondet} & Binary Focal Loss                                                                                                                                                    & L1 Loss and GIoU Loss                                                                                   \\ \bottomrule[1pt]
    \end{tabular}%
    }
    \caption[Loss functions of Object Detectors]{Summary of the loss functions used in several object detection frameworks.}
    \label{tab:od_losses}
    \end{table}

\paragraph*{Example selection strategy}
In the previous paragraph, we presented the various loss functions employed in
the Object Detection literature. For simplicity, we defined these losses for a
couple of predicted and ground truth detection labels
$(\hat{\boldsymbol{\mathrm{y}}_i}, \boldsymbol{\mathrm{y}}_i)$. In reality, the
losses are computed as the sum of all such couples (over one or multiple
images). However, it is not straightforward to build these couples as there may
be more predictions than ground truths, missed objects, or false detection. Each
detector has its own strategy to operate the matching between prediction and
ground truth. These strategies were briefly presented in the previous sections,
but we regrouped them inside \cref{tab:od_matching_strat} for clarity.

\begin{table}[h]
    \centering
    \resizebox{\columnwidth}{!}{%
    \begin{tabular}{@{\hspace{3mm}}ll@{\hspace{3mm}}}
    \toprule[1pt]
                 & \textbf{Matching Strategy} \\ \midrule
    \textbf{Faster R-CNN} \cite{ren2015faster}          & \begin{tabular}{@{}l@{}}- Select RoI with at least 0.5 IoU with a GT as Positive Samples (PS) and RoI with low IoU (< 0.1) \\\hspace{2mm}as Negative Samples (NS). \\- \textbf{Classification loss} is computed on all selected samples \\\hspace{2mm}(PS with the corresponding GT class and NS with the background class). \\- \textbf{Regression Loss} is only computed with PS. \end{tabular}               \\ \midrule
    \textbf{YOLO} \cite{redmon2016you}                  & \begin{tabular}{@{}l@{}}- Select PS as grid cells in which there is a GT center point and assign the highest IoU boxes in \\\hspace{2mm}case of multiple GT in one cell. \\\hspace{2mm}All others are NS. \\- Classification done separately on PS and NS. \\-Regression loss with PS only.  \end{tabular}                 \\ \midrule
    \textbf{FCOS} \cite{tian2019fcos}                   & \begin{tabular}{@{}l@{}}- Select PS as feature map locations that fall inside a GT, all others are NS. \\\hspace{2mm}If multiple GT for the same location take the smallest one. \\- Classification on PS and NS. \\- Regression on PS only. \end{tabular}                \\ \midrule
    \textbf{DETR}  \cite{carion2020end}                 & \begin{tabular}{@{}l@{}}- 1-to-1 optimal prediction and GT assignment according to localization and classification cost. \\ - \textit{No-object} are added to the GT set when the predictions are more numerous. \\- Classification loss is computed for all matched couples. \\- Regression loss only for couples with an actual GT. \end{tabular}               \\ \midrule
    \textbf{H-DETR} \cite{jia2022detrs}                 & \begin{tabular}{@{}l@{}}- DETR matching. \\- Supplementary 1-to-many matching with duplicated and augmented GT for training. \end{tabular}                  \\ \midrule
    \textbf{DiffusionDet} \cite{chen2022diffusiondet}   & - DETR matching                  \\ \bottomrule[1pt]
    \end{tabular}%
    }
    \caption[Prediction-Ground truth matching of Object Detectors]{Brief description of some existing prediction ground truth matching strategy in existing object detectors.}
    \label{tab:od_matching_strat}
    \end{table}

\subsection{Datasets Presentation}
There exist numerous object detection datasets in the literature. We present in
this section four of them in detail as they will be extensively used in this
manuscript. These datasets are Pascal VOC \cite{everingham2010pascal}, MS COCO
\cite{lin2014microsoft}, DOTA \cite{xia2018dota} and DIOR \cite{li2020object}.
We choose these datasets because they were the most relevant and widespread
datasets of natural (Pascal VOC and COCO) and aerial (DOTA and DIOR) images at
the beginning of this project. Some other datasets will be punctually used,
especially for the cross-domain experiments in \cref{chap:diffusion}, they will
be presented in detail there. However, we draw up a non-exhaustive list in
\cref{tab:od_dataset_overview} of the most well-known object detection datasets in
the literature.

\begin{table}[h]
    \centering
    \resizebox{\columnwidth}{!}{%
    \begin{tabular}{@{}ccccc@{}}
    \toprule[1pt]
    \textbf{Image Type / Application}    & \textbf{Dataset Name} & \textbf{\# Classes}   & \textbf{\# Images}    & \textbf{\# Instances}   \\ \midrule
    \multirow{4}{*}{Natural}             & Pascal VOC \cite{everingham2010pascal}              & 20                   & 11.5k                & 27k                  \\
                                         & COCO  \cite{lin2014microsoft}                   & 80                   & 117k                 & 1.5M                 \\
                                         & LVIS  \cite{gupta2019lvis}                   & 1203                 & 100k                 & 1.3M                 \\
                                         & Object365  \cite{Shao_2019_ICCV}              & 365                  & 2M                   & 30M                  \\ \midrule
    \multirow{2}{*}{Autonomous Vehicle}  & KITTI  \cite{Geiger2012CVPR}                  & 11                   & 7k                   & 80k                  \\
                                         & BDD100k  \cite{yu2020bdd100k}                & 10                   & 400k                 & 3M                   \\ \midrule
    \multirow{3}{*}{Pedestrian}          & CityPerson  \cite{Shanshan2017CVPR}             & 1                    & 3k                   & 19k                  \\
                                         & TinyPerson  \cite{yu2020scale}             & 1                    & 1610                 & 72k                  \\
                                         & CrowdHuman  \cite{shao2018crowdhuman}             & 1                    & 15k                  & 340k                 \\ \midrule
    \multirow{5}{*}{Aerial}              & COWC   \cite{mundhenk2016large}                  & 1                    & \multicolumn{1}{l}{} & 33k                  \\
                                         & DOTA  \cite{xia2018dota}                   & 16                   & 2.8k (megapixels)    & 220k                 \\
                                         & DIOR   \cite{li2020object}                  & 20                   & 23k                  & 190k                 \\
                                         & xView  \cite{lam2018xview}                  & 60                   & 1.1k (megapixels)    & 1M                   \\
                                         & FAIR -1M \cite{sun2022fair1m}               & 37                   & 15k (megapixels)     & 1M                   \\ \midrule
    \multirow{2}{*}{Agricultural / Food} & DeepFruits                               &7                      & 457                       & 2.5k \\
                                         & Oktobeerfest \cite{tum2019oktoberfest}      & 15                   & 1k                   & 2.5k                 \\ \midrule
    \multirow{4}{*}{Other Modalities}    & ClipArt \cite{inoue2018cross}            & 32                   & 5k                   & 13k                  \\
                                         & LogoDet  \cite{wang2022logodet}           & 3000                 & 159k                 & 194k                 \\
                                         & SIXray \cite{miao2019sixray}             & 6                    & 9k                   & 1M                   \\
                                         & DroneVehicle  \cite{sun2022drone}      & 5                    & 56k                  & 191k                 \\ \bottomrule[1pt]
    \end{tabular}%
    }
    \caption[Overview of existing detection datasets]{Overview of existing detection datasets}
    \label{tab:od_dataset_overview}
    \end{table}

\subsubsection{Natural Images}
Natural images are the kind of image humans are the most familiar with,
therefore it is logically the most common application in Computer Vision. Object
detection is no exception and most proposed detectors are developed to process
natural images. Hence, our analysis must be conducted as well on natural images
even though the industrial interest of COSE is more towards aerial imagery. To
this end, we choose Pascal VOC and MS COCO as our main sources of natural images.

\textbf{Pascal VOC \cite{everingham2010pascal} --}
The Pascal VOC challenge took place every year between 2005 and 2012. This
competition defined the object detection problem as we know it today. The last
version of the dataset Pascal VOC 2012 includes images of various sizes and
aspect ratios. Each image is annotated with horizontal bounding boxes around
objects belonging to 20 classes. Examples of images and a list of classes,
ordered by the number of occurrences, are available in \cref{fig:od_pascal_voc}. 

\textbf{MS COCO \cite{lin2014microsoft} --}
MS COCO is an extension of Pascal VOC which includes much more images and
classes. The set of images is completely distinct from Pascal VOC, but all
classes in Pascal VOC are included in COCO. Similarly, \cref{fig:od_coco}
presents image examples and a list of COCO classes.

\subsubsection{Aerial Images}
The overall goal of this project is to detect objects from aerial images. Aerial
images are sometimes associated with low-altitude drone images. These images are
halfway between natural and aerial images as they often preserve some
perspective. Remote Sensing Images (RSI), \ie acquired from planes or satellites
with nadir-oriented cameras are much closer to COSE's application. In this
manuscript, we refer to this kind of image both as aerial or RSI images. There
exist a few publicly available datasets of such images. We have chosen two of
them based on the ground resolution of the images (in agreement with COSE
systems) and their availability at the beginning of this project. 

\textbf{DOTA \cite{xia2018dota} --}
DOTA contains images coming from Google Earth and distinct satellites Jilin-1
and Gaofen-2 (with roughly 1m spatial resolution GSD). Images from DOTA are
large, ranging from 800 to 4000 pixels in width and objects are annotated with
oriented bounding boxes. To ease the handling of the images, we prepared DOTA by
tiling all images into $512 \times 512$ patches with a 50\% overlap and
converted the annotations to horizontal bounding boxes. \cref{fig:od_dota}
presents images and the class list for DOTA. 

\textbf{DIOR \cite{li2020object} --}
DIOR is very similar to DOTA. It contains only images scrapped from Google Earth
and has slightly more classes than DOTA. The images are already tiled at $800
\times 800$ pixels and boxes are horizontal. \cref{fig:od_dior} presents images
from DIOR and the list of classes.

\vspace{1em}
\section{Few-Shot Learning: Learning with Limited Data}
\label{sec:fsl}
\vspace{-1em}
As presented in the introduction, COSE faces a substantial challenge in the
design of its imaging systems: the lack of real-case images and unknown objects
of interest. All methods described in \cref{sec:review_od} require large
annotated training sets to achieve reasonable detection performance, which is
misaligned with COSE's constraints. This issue is common in the industry, most
computer vision problems lack large annotated datasets, and therefore the direct
application of research contributions is often challenging. Fortunately, there
exists an entire research field dedicated to learning with limited annotated
data. The main paradigm in this field is to learn a closely related task with
sufficient data and adapt to the real task with limited annotations. Two kinds
of adaptation can be considered: class adaptation and domain adaptation. Given a
computer vision task such as classification, the former consists in learning to
classify objects or images among a set of classes and then adapt to another set
of classes. This is usually called Few-Shot Classification (FSC). While
classification is not the primary interest of COSE, it is worth exploring the
FSC literature as it is an older field, much more developed than FSOD and
because FSC lays the foundation for tackling more complex tasks in the few-shot
regime. On the other hand, domain adaptation consists in adapting to different
kinds of images, \eg different seasons, weather conditions, general
environments, etc. In the strict definition of domain adaptation, the classes of
interest remain the same. However, the setting when both the classes and the
domain change is also studied in the literature. It is more challenging, but it
better reflects the industrial needs such as COSE's. In this section, we review
both kinds of adaptations for the classification problem. Even though it is not
a task of interest for COSE, understanding few-shot adaptation strategies is
crucial before addressing the more challenging problem of Few-Shot Object
Detection which we reserve for \cref{chap:fsod}.   

\subsection{Few-shot Classification}
\vspace{0.5em}
\subsubsection{Problem Definition}
Classification is a simpler problem than detection. Given a set of classes
$\mathcal{C}$ and an input image $I$, one wants to find the class $c \in
\mathcal{C}$ that is depicted by $I$. Of course, the higher considerations
briefly presented in \cref{sec:od_problem} about how to properly define the
membership of an image to a class still holds. For classification as well, the
class membership is determined by human appraisal and common sense. Solving a
classification task is to find a model $\mathcal{F}(\cdot, \theta)$ that outputs a class label for a given
input image $I$:
\begin{equation}
    \mathcal{F}(I, \theta) = \hat{c} \in \mathcal{C}.
\end{equation}
Deep Learning based models proved to be particularly adapted to the
classification task in a fully supervised setting (\ie provided with
sufficiently large annotated datasets). This was supported amongst others by
LeNet \cite{lecun1998gradient} for digit classification, and by AlexNet
\cite{krizhevsky2017imagenet} and ResNet \cite{he2016deep} for ImageNet
classification. However, the classification task in this form is not a topic for
this section, and we refer to \cite{rawat2017deep} for a complete review of
existing works in this field. 

\begin{figure}
    \centering
    \begin{subfigure}[position]{\textwidth}
        \includegraphics[width=\textwidth]{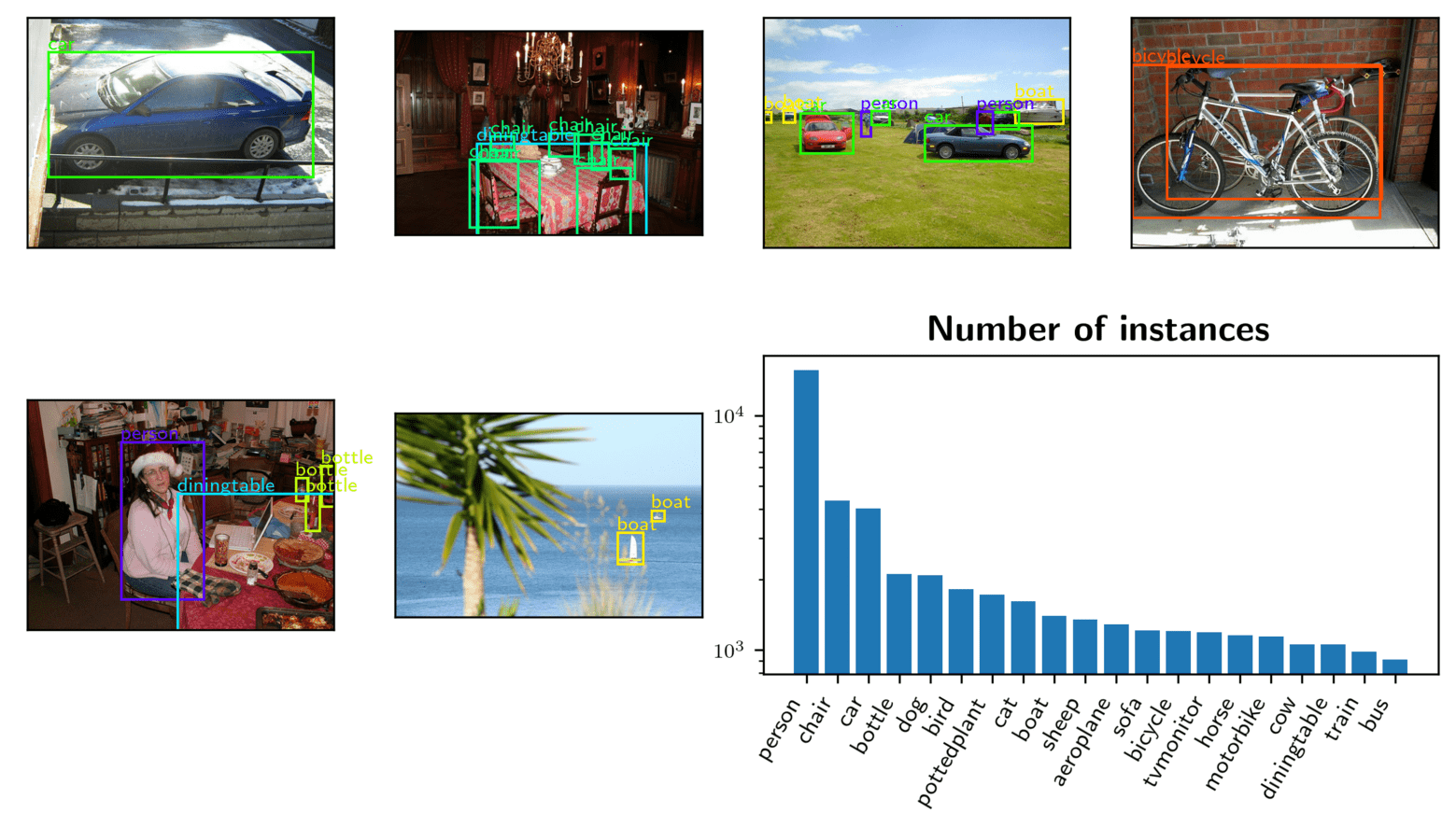}
        \caption[]{Examples of Pascal VOC images and class repartition on the training split.}
        \label{fig:od_pascal_voc}
    \end{subfigure}
    \begin{subfigure}[position]{\textwidth}
        \includegraphics[width=\textwidth]{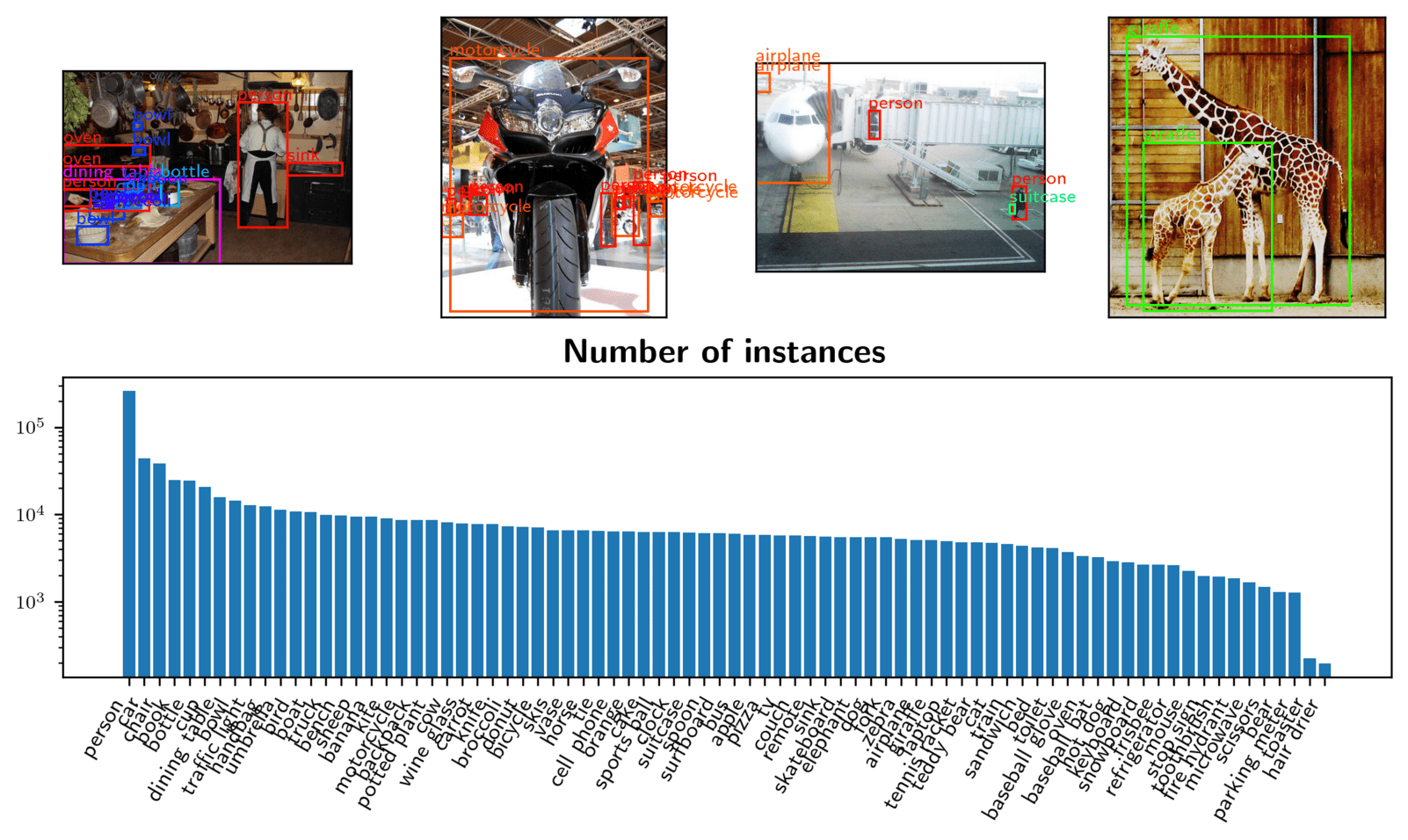}
        \caption[]{Examples of MS COCO images and class repartition on the training split.}
        \label{fig:od_coco}
    \end{subfigure}
    \caption[Overview of Pascal VOC and MS COCO datasets]{Image examples for the Natural images dataset Pascal VOC and MS COCO.}
    \label{fig:od_dataset_natural}
\end{figure}

\begin{figure}
    \centering
    \begin{subfigure}[position]{\textwidth}
        \includegraphics[width=\textwidth]{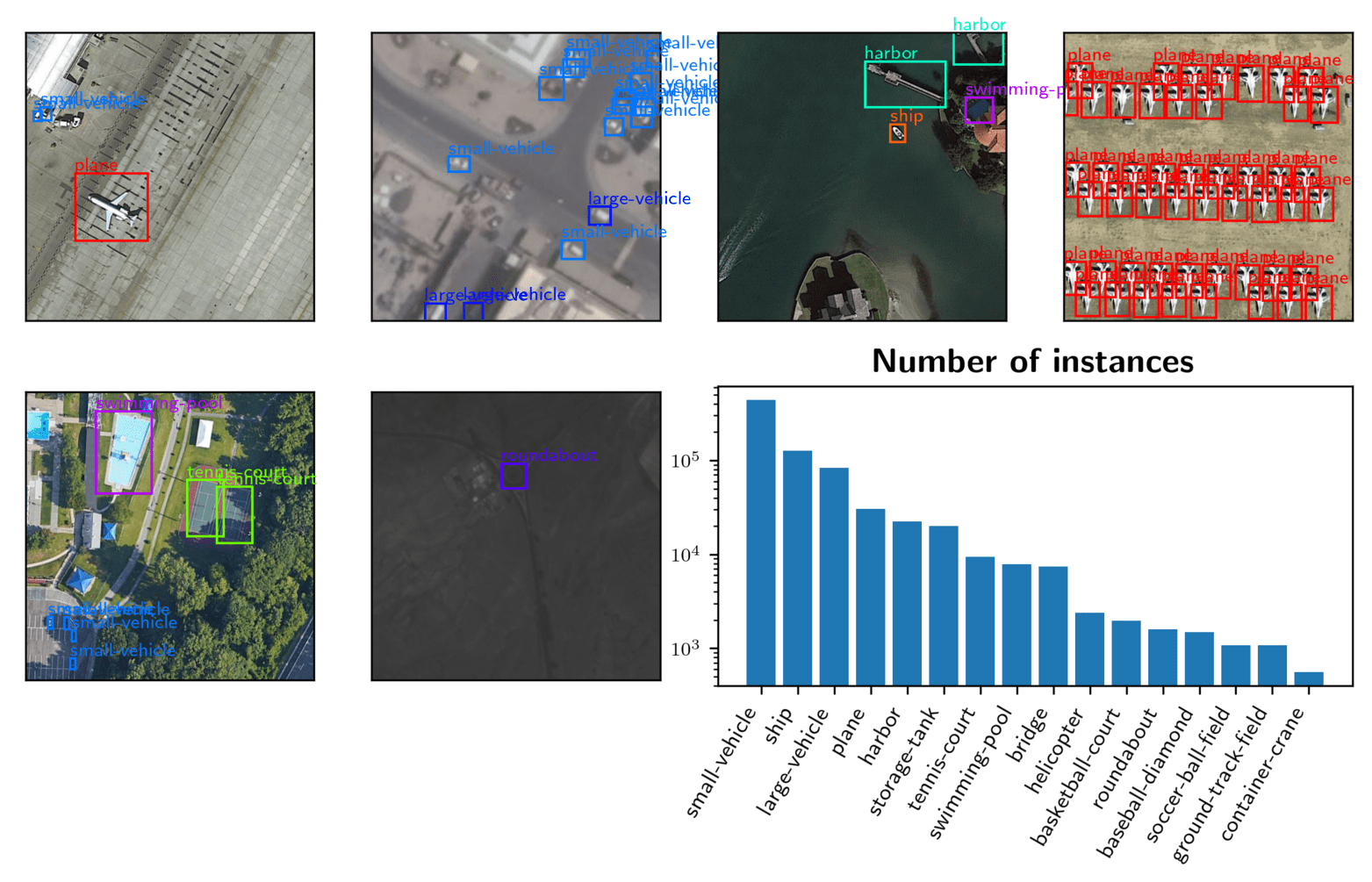}
        \caption[]{Examples of DOTA images and class repartition on the training split.}
        \label{fig:od_dota}
    \end{subfigure}
    \begin{subfigure}[position]{\textwidth}
        \includegraphics[width=\textwidth]{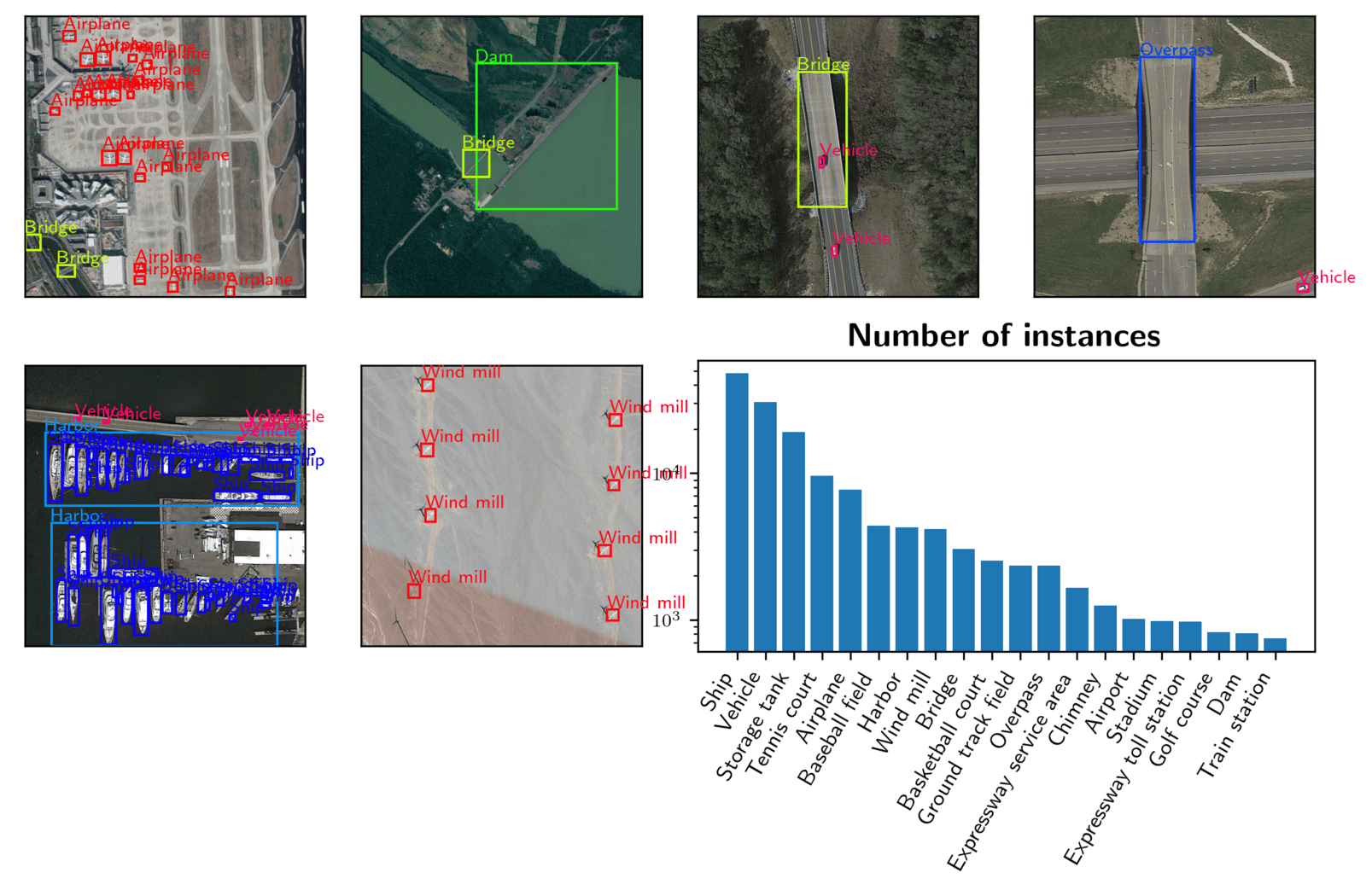}
        \caption[]{Examples of DIOR images and class repartition on the training split.}
        \label{fig:od_dior}
    \end{subfigure}
    \caption[Overview of DOTA and DIOR datasets]{Image examples for the Aerial images dataset DOTA and DIOR.}
    \label{fig:od_dataset_aerial}
\end{figure}

In the few-shot setting, the classification goal remains the same, predicting
the class of an image. The input image to an FSL model is usually denoted as a
\textit{query image}, and therefore, the test set is called the \textit{query
set}. What changes between the few-shot and regular settings is the amount of
annotated images available to train the model. In the literature, the expression
$N$-way $K$-shot classification designates the task of classifying images
amongst $N$ different classes only provided with $K$ annotated examples per
class. The $NK$ images constitute the \textit{novel dataset}, in contrast to the
\textit{base dataset} which contains an arbitrary number of annotations for
another set of classes. In the few-shot literature, the novel dataset is often
called the \textit{support set}, and its elements \textit{support examples}.
Similarly, the sets of classes of the base and novel datasets are called the
\textit{base classes} set (noted $\mathcal{C}_{\text{base}}$) and \textit{novel
classes} set (noted $\mathcal{C}_{\text{novel}}$) respectively. Specifically, we
have:
\begin{align}
    \mathcal{D}_{\text{base}} &= \left\{(I_i, c_i)\right\}_{1 \leq i \leq |\mathcal{D}_{\text{base}}|} \quad c_i \in \mathcal{C}_{\text{base}}, \\
    \mathcal{D}_{\text{novel}} &= \bigcup_{c \in \mathcal{C}_{\text{novel}}}\left\{(I_k^c, c)\right\}_{1 \leq k \leq K}.
\end{align}

As mentioned above, $\mathcal{D}_{\text{base}}$ is used to train the model
during a first phase called \textit{base training}. During this phase, the model
has access to plenty of annotated data and is trained in a supervised manner to
classify images within $\mathcal{C}_{\text{base}}$. It is noteworthy to point
out that this supervised base training is not the only possible choice. Recent
advances in Self-Supervised Learning (SSL)
\cite{chen2020simple,grill2020bootstrap,he2020momentum,caron2020unsupervised}
proved that SSL is a competitive alternative to supervised base training. 

After base training, the novel dataset is leveraged to adapt the model to
classify the novel classes. Hence, the few-shot classification task can be seen
as predicting the class label from the input image and the novel dataset: 
\useshortskip
\begin{equation}
    \mathcal{F}(I, \mathcal{D}_{\text{novel}}) = \hat{c} \in \mathcal{C}.
\end{equation}
The model adaptation generally starts with small architectural modifications,
such as replacing the final classification layer with a novel layer randomly
initialized and with the right number of outputs (\eg if the numbers of base and
novel classes differ). Then, several approaches exist for adjusting the model to
the novel classes given the novel dataset. We identify here four different
adaptation strategies and will present each of them in the next sections. These
strategies are: fine-tuning, metric-learning, meta-learning and
attention-mechanisms. However, there are no clear boundaries between these four
areas, \cref{fig:fsl_taxonomy} illustrates the interactions between the various
strategies and gives a few examples for each category. We propose this taxonomy
as it suits well the few-shot object detection field. Hence, reviewing FSC
through this lens helps to understand how these techniques could be extended for
detection. However, there exist much more detailed taxonomies and reviews about
FSC in the literature, \cite{wang2020generalizing,song2022comprehensive} are
worthy examples. Note that the novel dataset can also be used during inference,
so that adaptation is done "on the fly". This is called \textit{transductive
inference} and will be presented in \cref{sec:transductive_cls}. 

In the most common few-shot setting, we have $\mathcal{C}_{\text{base}} \cap
\mathcal{C}_{\text{novel}} = \emptyset$, meaning that there are no common
classes between the base and novel sets. Of course, some works focus on relaxing
these assumptions, we will outline some of them in \cref{sec:generalized_fs_setting}

\begin{figure}
    \centering
    \includegraphics[width=\textwidth]{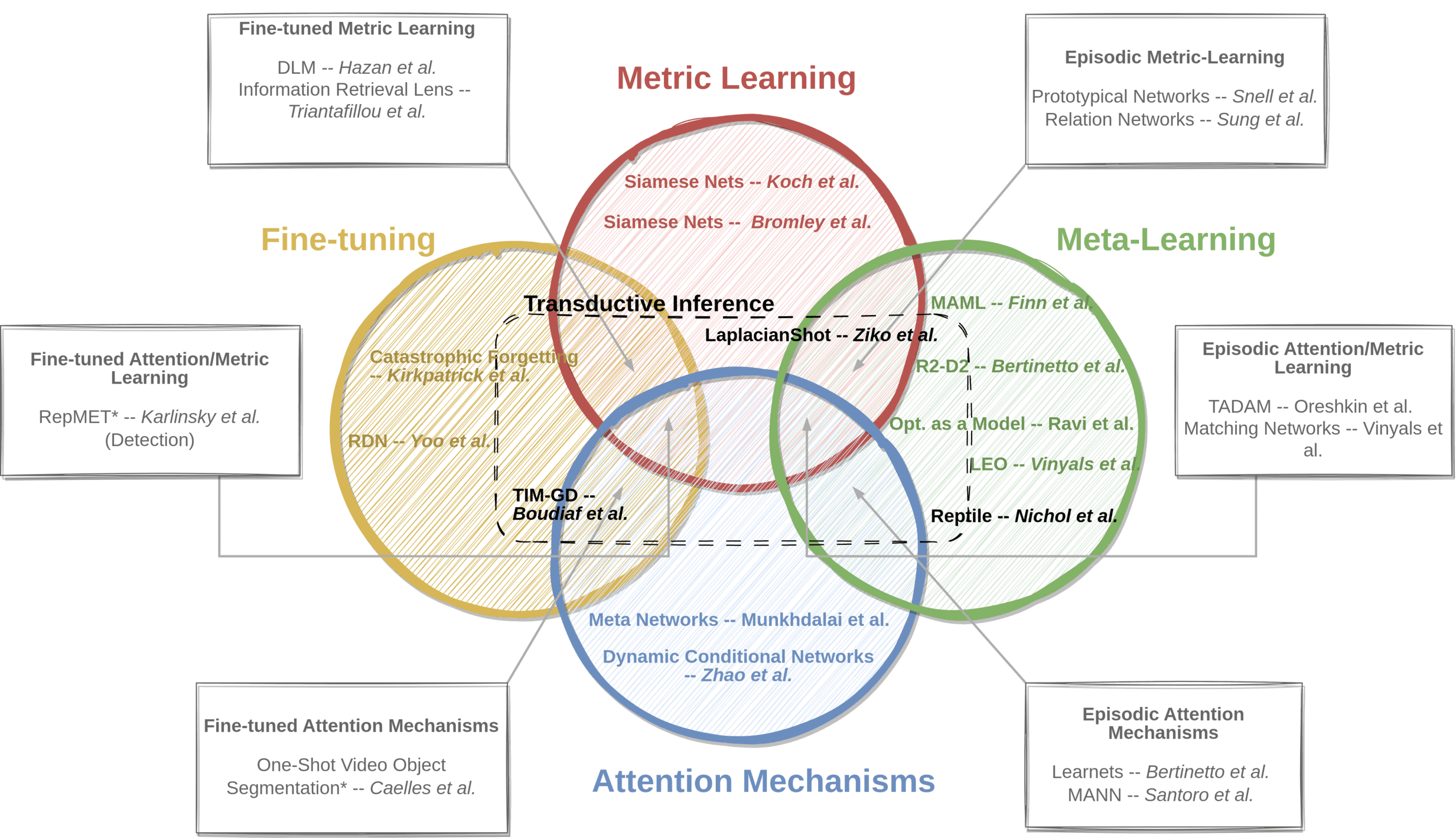}
    \caption[Taxonomy of Few-Shot Classification]{Taxonomy of the Few-Shot
    Classification literature. To illustrate each category of the taxonomy, a
    few papers are selected as representatives among others. Papers marked with a
    * are not solving the FSC task but are included in this figure as no
    contribution in the literature tackles classification from this
    perspective.}
    \label{fig:fsl_taxonomy}
\end{figure}

\subsubsection{Fine-tuning}
Probably the most straightforward way to tackle the FSC task is to employ
fine-tuning or transfer learning. This method trains the model on
$\mathcal{D}_{\text{base}}$ and then the model weights are fine-tuned using
$\mathcal{D}_{\text{novel}}$ with only the few examples available. It works well
but the fine-tuned models are prone to strong overfitting and catastrophic
forgetting \cite{kirkpatrick2017overcoming}. Overfitting on the novel set is
problematic as it means that the fine-tuned model will have poor generalization
capabilities, \ie its performance will be way lower on the test set than on the
training set. Catastrophic forgetting is a more subtle issue. It happens when
the performance of the fine-tuned model on the base classes drops. In the case
of the simple FSC it is a subtle issue, but it becomes more challenging when
dealing with extended setups such as generalized FSL and Continual Learning (see
\cref{sec:generalized_fs_setting} for more details). However, the authors of
\cite{kirkpatrick2017overcoming} propose regularization penalties to be applied
during fine-tuning that alleviate both the overfitting and catastrophic
forgetting. Specifically, the penalty prevents the fine-tuned weights from being too
far from the pre-trained weights. Similarly, \cite{yoo2018efficient} proposes
several regularization loss functions and a grouped parameter update during
fine-tuning to overcome the overfitting. Closely related,
\cite{triantafillou2017few} leverages Direct Loss Minimization's theorem
\cite{hazan2010direct} to optimize the model's weights with an Information
Retrieval Loss at inference. Although fine-tuning is a relatively simple
approach for FSL, it was not much explored in the case of classification. We
will see in \cref{chap:fsod} that it has gained more attention recently for more
complex tasks.

\subsubsection{Metric Learning}
Metric learning is a branch of deep learning which consists of learning
self-organized representation spaces, \ie similar inputs should have similar
representation in the \textit{embedding space}. It was first introduced with
Siamese Networks for signature verification \cite{bromley1993signature} and
later for face identification \cite{chopra2005learning}. The idea behind the
Siamese Networks is to leverage two copies of the same model and feed them two
different images. The output of the two networks should be similar if the input
images look similar as well. Siamese networks were then applied for one-shot
image classification by \cite{koch2015siamese}. It was one of the first attempts
to solve this task using deep neural networks. Features from the query and
support images are extracted by the siamese nets and then compared by a final
prediction layer. This final layer takes as input the difference between the
features of the query and support images. Its role is to assess whether the
features are similar enough so that the two images belong to the same class.
Following Siamese Networks, a series of works was proposed based on the same
principle. These contributions are embodied by Prototypical Networks
\cite{snell2017prototypical}. ProtoNets replace the final prediction layer of
the siamese networks with a linear classifier and extend the metric learning
framework for multi-class and increased number of shots. Specifically, the
features of all support examples of one class are aggregated to form
\textit{class prototypes} and query features are classified according to the
class of the closest prototype. Many variants of the ProtoNets were then
proposed. Inspired by Siamese Networks, Relation Networks
\cite{sung2018learning} replace the linear classifier of the Prototypical
Network with a small MLP trained to predict a similarity score based on the
query features and a prototype. The difference with Siamese Networks is that
this is done with each class prototype allowing Relation Nets to address
multi-class problems. Other extensions include prototypes rectification for
intra and extra-class variance \cite{liu2020prototype}, semi-supervised prototypes
refinement \cite{ren2018meta} and multiple prototypes per class
\cite{deuschel2021multi}. It is essential to note that Prototypical Networks and
their extensions leverage episodic training strategies borrowed to
meta-learning. This strategy consists in dividing the training into shorter
\textit{episodes}. During each episode, the model is trained for a random task,
generally a subset of the novel classes (only the $\mathcal{D}_{\text{novel}}$
dataset is considered by these approaches). The episodic strategy follows the
"learning to learn" paradigm and mimics the adaptation process the model will
undergo at test time. 

The episodic strategy in the context of metric learning is first used in
Matching Networks \cite{vinyals2016matching}, an earlier work than Prototypical
Network. Matching Networks are more inspired by the meta-learning techniques,
hence the episodic strategy. Two networks are trained jointly, one to extract
the support features and one for the query features. However, the way query and
support features are combined differs from other metric-learning methods. The
authors leverage an attention mechanism to compute the predicted class
probabilities as a similarity-aware weighting of the support examples labels. Closely
related, Task Dependent Adaptive Metric (TADAM) \cite{oreshkin2018tadam} learns a
task representation and adapts its embedding network through a task-conditioning
layer which resembles an attention mechanism. The class prototypes are then
task-dependent and an image is classified according to the most similar
prototype. Matching Network and TADAM are therefore at the intersection of three
approaches for FSL: metric-learning, meta-learning and attention mechanism.
These are reviewed in the two following sections.   

\subsubsection{Meta-Learning}
As hinted at the end of the previous section, meta-learning's paradigm is
"learning to learn”. This was the main motivation for the episodic training
strategy described there. Each episode forces the model to adapt to new classes,
repeating these episodes should overall increase the adaptation capacities of
the model. However, the concept of meta-learning goes even further. 

This concept was introduced and popularized during the 1990s
\cite{schmidhuber1987evolutionary,thrun1998learning}. At the time, meta-learning
was mostly applied in the context of policy learning, with evolutionary or
reinforcement learning methods. It was brought up-to-date for the few-shot
classification by Model-Agnostic Meta-Learning (MAML) \cite{finn2017model} which
proposes to directly train the initial weights of a classifier so that it will
quickly adapt to a given task. The optimization is done in a nested manner. At
the inner level, a task is sampled (like with the episodic training strategy)
and the classifier is initialized with the current initial weights. A few
gradient steps are performed on the classifier with respect to the task
objective function. Then, at the outer level, the initial weights are updated
through gradient descent on the task loss value computed with the trained
classifier. The meta-update converges to a set of initial weights that make the
classifier "easy to train" on any task. However, MAML does not take task
information into account for the weight initialization, and it is unrealistic to
find truly task-agnostic initializations. Therefore, \cite{lee2018gradient} extends
MAML to only choose a subset of the optimal initial parameters to initialize the
classifier based on task information. Orthogonally, some contributions integrate
uncertainty in the weight initialization
\cite{franceschi2018bilevel,kim2018bayesian}, others make the training easier
\cite{behl2019alpha,rusu2018meta} or provide a theoretical framework and
guarantees \cite{fallah2021generalization}.

Similarly, some meta-learning techniques propose learning the optimization
process instead of the weight initialization. This is the case of Optimization
as a Model \cite{ravi2016optimization} which trains a LSTM meta-learner to
output gradient updates for the classifier network. The meta-learner takes as
input the weights of the classifier and the gradients computed on a given task.
The recurrent nature of the meta-learner helps to keep track of the previous
error signals and update consequently the weights of the classifier. Close to
MAML, the meta-learner is updated after several weight updates (with different
tasks) based on the loss values of the classifier on a test set. 

Another meta-learning direction is introduced with Ridge Regression
Differentiable Discriminator (R2D2) \cite{bertinetto2018meta}. It consists in
teaching a model “to use” standard machine learning tools such as Ridge or Lasso
Regression. These techniques often have closed-form solutions and are fast to
compute when few data are available. In R2D2, a CNN is trained as a feature
extractor as a meta-learner, while the classifier's weights are computed with a
ridge regression from the support set. The meta-learner CNN is trained to
extract features that will generate optimal classifier weights through ridge
regression.  

\subsubsection{Attention-Based methods}
As an alternative to the rather complex and heavy meta-learning methods, a line
of work followed the MAML principle but focused only on some layers rather than
on the complete classifier. This originates with LearNets
\cite{bertinetto2016learning} that are trained to output the weights of a
convolution kernel from a support example. The kernel is then used in a
\textit{Dynamic Convolution Layer} (DCL) inside the classifier, which in the end
predicts a class-membership score (according to the class of the support
example). When multiple classes are available in the support set, the DCL is
applied with each class features independently and the classifications are done
in a binary fashion. This can be understood as an attention mechanism between
the query and support features. The dynamic convolution acts as a filter
responsive to the support features. To put it another way, the locations in the
query feature map that are similar to the support features will be highlighted.
The dynamic convolution sets the focus of the classifier on features from the
support class. Self-Attention (SA) is probably the most common form of attention
mechanism used in the current deep learning literature. It highlights similarity
links between subparts of the input (\eg different locations in an image, or two
words in a sentence). Here, with DCL, this is done with two distinct images: a
query and a support image. Several other works build upon this idea. Simple
Neural Attentive Learner (SNAIL) \cite{mishra2017simple} designs more complex
attention blocks, based on transformers, to perform the classifier's adaptation.
Although their primary goal is to tackle few-shot reinforcement learning with
temporal convolutional layers (to deal with causality), they apply it
successfully to FSC as well. The $NK$ images in the support set are fed in
random order and the query image is given last. Very similar to SNAIL,
CrossTransformers \cite{doersch2020crosstransformers} assemble an attention
module to combine query and support features. The crucial difference with SNAIL
is the preservation of the spatiality of the features. Most previous works
aggregate the support features to perform the adaptation, losing the spatial
information of the support image. Inspired by the recent progress of ViT,
CrossTransformers manage to adapt the classifier while preserving spatial
information. Just like many metric learning methods presented in the previous
section, some of the attention-based techniques discussed above borrow an
episodic training strategy from meta-learning. 

The query-support attention mechanism can also be interpreted as a conditioning
of the classifier's input based on the support features. That is the view adopted
by Dynamic Conditional Network \cite{zhao2018dynamic}. The general idea is very
similar to LearNet except for the training which is not done in an episodic
manner. Conditionally Shifted Neurons (CSN) \cite{munkhdalai2018rapid} see the
adaptation as the conditioning of the classifier activations. The meta-learner
outputs shift values that are added to the pre- or post-activation values in the
network. The shifts are computed from a \textit{task description} stored in a
memory. The task description regroups the activation of all the layers of the
classifier fed with the support images. The use of a memory bank is widespread
along with attention-based mechanisms for FSC. Memory Augmented Neural Network
(MANN) \cite{santoro2016meta} epitomizes this line of work. It leverages a
\textit{controller} (\ie a small network) to read and write in the memory. The
controller generates a key from an input, which is then used to either add a new
entry in the memory or retrieve already stored information. The retrieval is
done through an attention-like mechanism. The memory is built throughout a task
episode adaptively. When a new memory is added, if a similar memory is already
stored, the new memory refines the existing one to build more relevant
representations. Plenty of contributions took inspiration from MANN. Adaptive
Posterior Learning (APL) \cite{ramalho2019adaptive} refines the memory writing
process to store only "surprising" memories. Meta Networks
\cite{munkhdalai2017meta} also leverage external memory only accessible to the
meta-learner in charge of adapting the classifier. \cite{xu2017few} proposes a
second \textit{abstract memory} which stores refined information relevant for
the current task. All these memories are generally wiped when the task is
modified. However, \textit{life-long memory} \cite{kaiser2017learning} also
benefits the FSC even though it is mostly exploited for slightly more challenging
tasks such as Generalized FSL or Continual Learning (see
\cref{sec:generalized_fs_setting}). 

\subsubsection{Transductive Inference}
\label{sec:transductive_cls}
A recent line of work tackles the few-shot classification problem with
transductive inference. Transductive inference, in contrast to inductive
inference, consists of leveraging labeled and unlabeled images and classifying
all unlabeled data points at the same time. Conversely, inductive inference
deals with each data point independently. Of course, deploying such methods in
an industrial scenario requires having multiple data points available at test
time. For real-time applications, it is generally not practical. However, in the
case of COSE and the detection task, this is largely applicable. Very large
images cannot be processed as a whole, they must be divided into smaller images.
In addition, for the detection task, an image is often seen as a collection of
objects or regions of interest. COSE's use case is therefore rather well-suited
for transductive inference. Therefore, we briefly review the recent advances of
transductive learning for FSC in this section.

Transductive inference is an old concept of statistical learning that was
popularized under this name and for machine learning applications by Vladimir
Vapnik in the 1990s \cite{vapnik1999overview}. As mentioned above, in the
transductive setting, an entire unlabeled dataset (\eg a test dataset or a query
set in the few-shot context) is available at test time. Transductive methods
leverage information contained both in the support and query set to make
predictions. Before application on FSC, transduction was applied to regular
classification on small-size datasets, yielding significant improvements over
inductive methods. Amongst them, Transductive Support Vector Machines
\cite{joachims1999transductive} extends the well-known SVM
\cite{cortes1995support} to make use of unlabeled information to refine the
class separation margins. Another direction is taken by \cite{zhou2003learning}
which derives an iterative method that propagates known labels to unlabeled data
points according to their similarity. Recently, TransBoost
\cite{belhasin2022transboost} even applied transduction to the entire ImageNet
dataset with significant accuracy gain over inductive methods. The authors
propose a fine-tuning approach to refine trained neural networks to perform
better on a specific test set. It takes both the training set and the unlabeled
test set to compute a regularization loss function that penalizes similar images
to be classified differently by the network. 

Transductive inference is especially effective in the few-shot context as the
limited labeled data is often not enough to provide sufficient supervision.
Leveraging additional unlabeled data is therefore highly beneficial. Various
approaches were proposed to make use of this supplementary information within
the already existing few-shot frameworks. Probably the most straightforward
approach is to fine-tune pre-trained models with additional regularization loss
based on the labeled and unlabeled data. This is the direction taken by
\cite{dhillon2019baseline} which compares the few-shot performance of several
methods against a simple transductive fine-tuned baseline. Similarly,
Transductive Episodic-wise Adaptive Metric (TEAM) \cite{qiao2019transductive}
and Transductive Information Maximization with Gradient Descent (TIM-GD)
\cite{boudiaf2020information} also both leverage fine-tuning objectives to
refine the model before transductive inference. This resembles semi-supervised
learning which fine-tunes models with additional unlabeled data, \ie different
from the training and test set. However, the essence of transductive learning
lies more in adapting the inference based on the additional information rather
than fine-tuning the model. This can be done by direct optimization of an
objective function with regularization as in LaplacianShot
\cite{ziko2020laplacian}, TIM \cite{boudiaf2020information} or Cross-Attention
Networks \cite{hou2019cross}. Many propositions iteratively propagate the known
labels to unlabeled data points within a graph structure
\cite{kim2019edge,liu2018learning,hu2021graph,rodriguez2020embedding,lazarou2021iterative}.
But there also exist contributions that exploit transductive inference through
metric learning with \eg Prototype Rectification\cite{liu2020prototype} or
Meta-Confidence Transduction \cite{kye2020meta}, which meta-learn a distance
metric. Meta-learning based methods also get their transductive extension, such
as Reptile \cite{nichol2018first} which extends MAML to perform transductive
inference by leveraging information shared by test samples through the batch
normalization layers.

\subsubsection{Extending the Few-Shot Setting}
\label{sec:generalized_fs_setting}

The few-shot setup that we described in previous sections is limited and makes a few
assumptions: 
\begin{enumerate}[nolistsep]
    \item The set of base and novel classes are known in advance.
    \item At test time, only the performance on novel classes matters.
    \item Novel classes are only added once and all at the same time. 
\end{enumerate}

These assumptions significantly simplify the problem, but these are relaxed by
different sub-fields of few-shot learning. In some aspects, the few-shot
detection can be seen as a relaxation of these assumptions. Various tasks,
similar to few-shot classification, exist in the literature.
\cite{geng2020recent} provides a comprehensive taxonomy of these tasks. We will
briefly present in this section some relevant extensions of the few-shot
classification for COSE's application and the detection task.
\cref{tab:few-shot-task} provides an overview of these tasks and their
differences in terms of goal and available data.

\paragraph*{Few-shot Open-set Recognition}
Open-set classification assumes that some classes are unknown during training
(\ie the training dataset is incomplete) and deals with these classes. Instances
of unknown classes can be rejected or identified as unknown classes. It models
real use cases better as test data can be contaminated by classes not included
in the training set. Object detection can be assimilated as an open-set problem
as objects belonging to a fixed set of classes must be localized while rejecting
everything else as background. The training set can only contain a limited
variety of background examples and new instances of background will be presented
to the model at test time. There are plenty of approaches for Open-set
Recognition, but we will not review them here in detail and refer the reader to
a complete survey \cite{geng2020recent} about it. Instead, we simply outline the
general principle behind algorithms that tackle this problem. Two main
approaches coexist in the literature, discriminative and generative approaches.
The former ones propose techniques to distinguish between known and unknown
classes using discriminative information, \eg distance to class representations
\cite{bendale2016towards}. The latter leverage generating models to
hallucinate negative examples as additional training data \cite{jo2018open}. Of
course, transduction also helps in this case and \textit{outlierness} score
\cite{boudiaf2023open} can be computed using the unlabelled examples available
at test time. 

This holds for open-set recognition, but in few-shot there are additional
complexities. Not only the classes from the query set may be unknown (\ie not
even in the support set), but the support set could provide irrelevant
information for the current task. This setup, introduced in
\cite{martin2022towards}, is not common in the FSL literature even if it is of
great interest from an industrial perspective. It is also quite relevant from
the few-shot detection point of view as the detection support examples can embed
irrelevant information for the task.

\paragraph*{Generalized Few-Shot Classification}
Up to now, we presented the few-shot classification problem as only adapting a
model to classify novel classes. However, it can sometimes be relevant to keep
the possibility of classifying classes from the base dataset. Often the
adaptation significantly reduces the performance on base classes, this
phenomenon is known as the catastrophic forgetting
\cite{kirkpatrick2017overcoming}. When both base and novel classes are of
interest, the task is called \textit{generalized few-shot learning}. This can be
achieved with several tricks such as doing the inference with both base-trained and
fine-tuned models. But it is also possible with careful extension and
fine-tuning of the model, \eg via disentangling base and novel class predictions
\cite{gidaris2018dynamic}.

\paragraph*{Continual Learning}
Generalized few-shot is an intermediary step toward continual or life-long
learning which consists in continuously adapting the model with novel classes.
This is way more challenging but also resembles the industrial setting better.
While extremely relevant from COSE's perspective, we choose not to tackle this
problem in this PhD project as it seems more sensible to address first the
already challenging few-shot setting for the detection. In addition, continual
learning often leverages complex learning scheme such as task rehearsal
\cite{robins1995catastrophic} or adaptive model architectures \cite{li2019learn}
to prevent forgetting classes or tasks.

\begin{table}[]
    \centering
    \resizebox{\columnwidth}{!}{%
    \begin{tabular}{@{}cccc@{}}
    \toprule[1pt]
    \textbf{Task}           & \textbf{Classes of interest}  & \textbf{Novel supervision} & \textbf{Query-support interaction} \\ \midrule
    Regular Classification  & $\mathcal{C}_{\text{base}}$                                                                                           & None                       & None                               \\  \midrule
    Few-shot Classification & $\mathcal{C}_{\text{novel}}$                                                                                          & $K$ examples per novel class                          & $\mathcal{C}_{\text{query}} = \mathcal{C}_{\text{support}}$                            \\  \midrule
    Zero-shot Classification& $\mathcal{C}_{\text{novel}}$                                                                                          & External information (\eg class labels)             & None                               \\  \midrule
    Generalized FSC         & $\mathcal{C}_{\text{base}} \cup \mathcal{C}_{\text{novel}}$                                                           & $K$ examples per novel class                          & $\mathcal{C}_{\text{query}} = \mathcal{C}_{\text{support}}$                            \\  \midrule
    FS Open-set Recognition & $\mathcal{C}_{\text{novel}} \cup \mathcal{C}_{\text{unknown}}$                                                        & \begin{tabular}{@{}l@{}}$K$ examples per novel class \\ None for unknown classes\end{tabular}                & $\mathcal{C}_{\text{query}} \subset \mathcal{C}_{\text{support}}$  or $\mathcal{C}_{\text{query}} \supset \mathcal{C}_{\text{support}}$          \\  \midrule
    Continual Learning      & $\mathcal{C}_{\text{base}} \cup \left(\bigcup_i \mathcal{C}_{\text{novel}}^i\right)$                                      & $K$ examples per novel class                          & None                               \\  \midrule
    Few-Shot Object Detection& $\mathcal{C}_{\text{novel}} \backslash \mathcal{C}_{\text{background}}$                        & $K$ annotated images per novel class                  & $\mathcal{C}_{\text{query}} = \mathcal{C}_{\text{support}}$                            \\ \bottomrule[1pt]
    \end{tabular}%
    } \caption[Few-Shot Learning and related tasks]{Summary of the various flavors of classification tasks existing
    in the literature. The second column, classes of interest, denotes what is
    the overall goal of the task. The last column presents the possible class
    setup encountered both in the query and support set
    ($\mathcal{C}_{\text{query}} $ and $\mathcal{C}_{\text{support}}$
    respectively.). $\mathcal{C}_{\text{unknown}}$ represents additional classes
    that should be identified in the open-set setting. In the detection task,
    $\mathcal{C}_{\text{background}}$ denotes all object classes that can be
    present in the background and that should not be detected.}
    \label{tab:few-shot-task}
    \end{table}

\subsection{Cross Domain Adaptation}
\label{sec:cda}

Sometimes, there are significant discrepancies between images from train and test
datasets. We discussed in the previous section the discrepancies in terms of
classes: classes encountered at test time may differ from annotated training
classes.  However, training and test images can also have different aspects. For
instance, autonomous vehicle perception systems could be trained only with
daylight images and encounter nighttime images once deployed. The train and test
image spaces are denoted as \textit{source domain} and \textit{target domain} in
the Cross-Domain Adaptation (CDA) literature. Specifically, a domain consists in
an image space $\mathcal{I}$ and a marginal probability distribution $p(I)$ over
it: 
\begin{equation}
    \mathcal{M} = \left\{\mathcal{I}, p(I) \right\}, \quad\quad I \in \mathcal{I}.
\end{equation}

CDA aims at adapting a model trained for a specific task on a source domain
$\mathcal{M}_{\text{source}} = \left\{\mathcal{I}_{\text{source}},
p_{\text{source}}(I) \right\}$ to perform the same (or another) task on the
target domain $\mathcal{M}_{\text{target}} = \left\{\mathcal{I}_{\text{target}},
p_{\text{target}}(I) \right\}$. For simplicity, we restrain the scope of this
section to the classification task. Hence, when the task changes from source and
target, the set of classes changes as well. We denote these sets as
$\mathcal{C}_{\text{source}}$ and $\mathcal{C}_{\text{target}}$ to comply with
the CDA notations. Note that these sets of classes correspond to the base and
novel classes in the FSC context. Generally, in the CDA literature, a limited
amount of annotated data is available for the target domain which prevents
direct supervised training. However, if a closely related source domain with
sufficient available data is available, adaptation to the target domain is
possible with limited data. Accordingly, cross-domain adaptation and few-shot
learning are closely related problems. In this section, we review the two kinds
of CDA, with and without label shift. COSE's industrial application 
contains CDA's problematics as the imaging systems can be deployed to
different theaters of operations for which no images were available during
training.

\subsubsection{Domain Adaptation without class shift}
There exists a slight difference between Domain Adaptation (DA) and what is
sometimes called Few-shot Domain Adaptation (FSDA) in the literature. This
difference lies in the amount of available data in the target domain. FSDA
methods have access to fewer target examples than regular DA. This distinction
is not relevant as in both cases, there is not enough target data to perform
directly supervised training (although additional unlabeled target data is often
leveraged). Therefore, we choose to review both DA and FSDA at once. This review
is not exhaustive, and we refer the reader to \cite{wang2018deep} for a more
complete overview of Domain Adaptation. Following this survey, we divide our
review into two parts, discrepancy-based adaptation and generative modeling approaches. 

\paragraph*{Discrepancy-based Adaptation}
The simplest way to adapt a model to a target domain is to fine-tune it on the
few available target data. The model is first trained on the source domain to
learn the task. Then, fine-tuning is done on the target domain with some tricks
to avoid overfitting. These tricks consist in reducing the discrepancies
between source and target features. For instance,
\cite{tzeng2015simultaneous} fine-tunes on the target domain with a regular
cross-entropy loss but leverages additional loss functions to minimize domain
confusion with additional unlabeled target images. Similarly,
\cite{tzeng2015simultaneous} has been extended with semi-supervised consistency
\cite{gebru2017fine} and contrastive \cite{motiian2017unified} losses. Following
the same principle, a number of works
\cite{ghifary2014domain,long2015learning,tzeng2014deep,long2017deep}
leverage additional losses based on the Maximum Mean Discrepancy (MMD) or close
extensions. MMD is a distance measure between probability distributions. In the
context of DA, it can be leveraged to assess the shift from source to target
domain for a given class. Employing MMD-based loss functions allows these
methods to learn domain invariant features and therefore improve cross-domain
generalization. As an example, Central Moment Discrepancy (CMD)
\cite{zellinger2017central} proposes an approximation of MMD to derive a
discrepancy regularizer. This regularization is computed over all layers of the
model to enforce features from all levels to be domain invariant. Other
contributions developed relatively similar techniques based on other criteria
such as Kullback-Leiber divergence \cite{zhuang2015supervised}, or correlation
alignment \cite{sun2016return}. 

The methods presented above all fine-tune the models from feature discrepancies.
However, as the task remains the same, it is reasonable to assume that optimal
weights for the source and target domains are related. Following this idea,
\cite{rozantsev2018beyond} proposes a weight regularization to prevent
fine-tuning to find weights too different from source weights. Closely related,
\cite{li2016revisiting} proposes to only change Batch Normalization's statistics
to adapt to the target domain.

Finally, advances in adversarial learning provided new ways to address DA by
minimizing source and target discrepancies in an adversarial setup. This is
embodied by \cite{ganin2016domain} and \cite{tzeng2017adversarial} which both
jointly train a domain discriminator along with the target feature extractor in
an adversarial fashion. The trained extractor embeds images in a shared
source-target feature space on which the source classifier can perform well.

\paragraph*{Generative Modeling}
Another approach to domain adaptation is to artificially generate target data.
This is particularly easy with discriminative approaches based on Generative
Adversarial Networks \cite{goodfellow2020generative}. GANs were extended to
perform domain translation with CoGAN \cite{liu2016coupled}, Pix-2-Pix
\cite{isola2017image} and CycleGAN \cite{CycleGAN2017}. The source domain images
can then be converted into source-target image pairs which greatly facilitate
domain adaptation with methods similar to the ones described in the previous
paragraph. This is done for instance in CyCADA \cite{hoffman2018cycada}. Of
course, GANs are not the only available generative models suitable for this
task. Recent advances in image generation leveraging Diffusion Processes
\cite{sohl2015deep,ho2020denoising} unveil new possibilities for domain
adaptation following existing work about generative domain adaptation as
done very recently by \cite{zhang2023diffusion}.

Closely related, Deep Reconstruction Networks (DRCN) \cite{ghifary2016deep}
jointly learn to classify and reconstruct images from multiple domains. The
model is trained to classify source images and reconstruct target images. This
strategy enforces the learning of domain-invariant features and largely improves
domain adaptation. Similar approaches have been proposed with disentangled
domain-invariant and domain-specific representations \cite{bousmalis2016domain},
or adversarial reconstruction \cite{kim2017learning}.

\subsubsection{Cross-Domain Adaptation with class shift}

Cross-Domain Few-Shot Classification (CD-FSC) designates problems where both
classes and domain change at the same time. This complexifies further the
learning, but it is closer to real-case scenarios and developing such techniques
will ease the deployment of classification techniques. It is particularly
interesting for COSE as it solves two major issues regarding training visual
recognition systems for surveillance applications: undefined objects of interest
and changing image appearance. This setting is relatively new in the few-shot
literature and has been popularized in particular by the creation of
Meta-Dataset \cite{triantafillou2019meta}. Meta-Dataset is a benchmark for
CD-FSC. It gathers 10 existing classification datasets and proposes a simple
testing scenario: pre-train on ImageNet then fine-tune on each dataset
individually with limited annotations. 

Most of the proposed techniques for solving CD-FSC borrow from both the few-shot
learning and domain adaptation fields. Plenty of approaches are then based on
the meta-learning strategy, pre-training on the source dataset and fine-tuning
episodically on the target domain and novel classes. Meta-FDMixup
\cite{fu2021meta} for instance trains episodically a classifier with additional
domain discriminant losses computed on an augmented query set (mixing-up source
and target domain -- MixUp \cite{zhang2017mixup} is a well-known augmentation
technique). Meta-FDMixup, is later extended with a dynamic mixup strategy by
Target Guided Dynamic Mixup (TGDM) \cite{zhuo2022tgdm}. Another merger of FSL and
DA techniques is Domain-Adaptive Prototypical Networks (DAPN)
\cite{zhao2021domain}, which extends prototypical networks with a domain
adaptation module for prototype alignment, trained in an adversarial fashion.
Closely related, \cite{chen2022cross} proposes a bi-directional prototype
alignment. Another line of work tackles CD-FSC through the prism of
distillation, for instance, \cite{fu2022me} first trains two "experts" networks
to perform the FSC task on both domains independently. Then, a student network
is trained to match the output of both teachers using distillation techniques.
It results in a student network able to deal with both domains identically.
Similarly, Universal Representation Learning (URL) \cite{li2021universal}
distills knowledge learned from $K$ classifiers trained on $K$ distinct domains into
a single cross-domain model. This is achieved by adding lightweight domain
adaptation modules between the feature extraction module and the classification
layer. Overall these techniques all involve complex training strategies and
architectural designs which are not very convenient for industrial deployment,
replication, or future extensions. To counter this, ReFine \cite{oh2022refine}
proposes a simple fine-tuning strategy that only re-initializes the last layers
of the model before fine-tuning to facilitate domain adaptation. Much simpler
than concurrent approaches, it yields competitive results.

Finally, some other works \cite{yao2021cross,zhang2022few,hu2022adversarial}
study an even harder task when target domain data are completely unlabeled. We
will not review this kind of approach as it is out of the industrial scope of
COSE.

\section{Conclusion}
This chapter presents the object detection and few-shot learning fields, both
necessary to the conception of few-shot object detectors. For object detection,
notations and problem definitions are given in detail, as well as a list of
popular evaluation metrics and datasets. A thorough review of existing works
redraws decades of progress in this field and helps understand how
state-of-the-art detection has been achieved. Similarly, for Few-Shot Learning,
this chapter gives the key definitions to understand the stakes of the few-shot
problem. An overview of the few-shot literature also provides relevant insights
about how to adapt perception models in low-data regimes. This prospecting work
greatly helps in understanding what is relevant from a research perspective and
what directions to follow according to the industrial needs of COSE.  

\chapter{Few-Shot Object Detection}
\label{chap:fsod}

\chapabstract{This chapter presents the task of detection in the few-shot regime
and reviews the existing literature about it. Few-Shot Object Detection (FSOD)
is at the crossroads of Object Detection and Few-Shot Learning, and therefore,
extensively relies on these two fields explored in \cref{chap:od}. Just as for
classification, various directions are explored in the literature to tackle the
detection task in the few-shot regime which will be presented in detail.
Finally, this chapter focuses on the aerial image application of FSOD methods
and extensions of the few-shot setting. } \PartialToC

The company COSE is developing CAMELEON, an intelligent airborne surveillance system to
automatically detect objects of interest. The detection algorithm must be
adaptative as the objects can change from one operation to another. Therefore,
the most relevant direction to explore is the Few-Shot Object Detection (FSOD)
task. In this chapter, we properly define the FSOD setting and present an
exhaustive review of the current literature. We also explain how detection
datasets can be leveraged for FSOD and how the proposed methods are evaluated.

\section{Problem definition}
\vspace{-1em}
Unsurprisingly, the Few-shot Object Detection task aims to detect objects just
as regular object detection but under the few-shot constraints. Specifically,
given an input image $I$, FSOD's goal is to learn a detection model
$\mathcal{F}(\cdot, \theta)$, with parameters $\theta$, able to adapt to new
classes ($\mathcal{C}_{\text{novel}}$) from only a limited number of examples.
Just as for the few-shot classification problem, two datasets are available, a
base dataset with plenty on annotations of base classes instances
$\mathcal{C}_{\text{base}}$ and a novel dataset (also called support set) with
$K$ annotated images for each novel class:
\begin{align}
    \mathcal{D}_{\text{base}} &= \left\{(I_i^{c_i}, \mathcal{Y}_i^{c_i})\right\}_{1 \leq i \leq |\mathcal{D}_{\text{base}}|} \quad c_i \in \mathcal{C}_{\text{base}}, \\
    \mathcal{D}_{\text{novel}} &= \bigcup_{c \in \mathcal{C}_{\text{novel}}}\left\{(I_k^c, \mathcal{Y}_i^c)\right\}_{1 \leq k \leq K},
\end{align}
where $I_k^c$ is an image containing at least one instance of the class $c$, and
$\mathcal{Y}_k^c$ is the corresponding annotation set (bounding box and label)
for the image $I_k^c$, filtered to contain only class $c$ instances. Note that
there could be more than $K$ annotations per class as multiple objects of the
same class can be visible on one image. This setting is commonly used in the
FSOD literature and called $N$-ways $K$-shots object detection. Conversely,
keeping only one annotation to comply with the few-shot classification setting
can be problematic as it provides wrong supervision to the model. This issue
will be elaborated further in \cref{chap:aerial_diff}. Hence, based on the input
image and the support set, the few-shot detection model $\mathcal{F}(\cdot,
\theta)$ should predict bounding boxes and labels for all instances of classes
$\mathcal{C}_{\text{novel}}$:
\begin{equation}
    \mathcal{F}(I, \mathcal{D}_{\text{novel}}) = \hat{\mathcal{Y}} = \{\hat{\boldsymbol{\mathrm{y}}}_i\}_{i=1}^{M_I} = \{(\hat{b}_i, \hat{c}_i)\}_{i=1}^{M_I}, \quad\quad \text{with } \hat{c}_i \in \mathcal{C}_{\text{novel}}. 
\end{equation}

This setup resembles the FSC setting described in \cref{chap:od}, but brings
some complications. While the sets of base and novel classes are disjoint, FSOD
must deal with the background. Any object that does not belong to either the
base or novel class sets is considered background. Therefore, an object detector
can encounter unknown classes at test time and must be able to ignore them. No
information about the background classes is available which makes it even more
difficult to discriminate between classes of interest and background. From this
perspective, FSOD is closer to the few-shot open-set recognition problem than
FSC. In addition, multiple different classes of interest can be depicted within
a single image. Distinct objects can overlap in the image and their features
(potentially from different classes) can blend, making recognition challenging.
This is reinforced as the objects get smaller, their features get noisier and
can be misclassified more easily. This stands for the query images but also for
support images which increases the difficulty compared to FSC. 

\section{Review of the Few-Shot Object Detection Literature}
\vspace{-1em}
Even though FSOD is a natural extension of FSC, the difficulties mentioned above
prevent the direct use of FSC techniques, just as classification techniques may
be extended for detection. Of course, the main principles for adapting
classification models to the few-shot setting can be reused, but they need to be
carefully adjusted to take care of the supplementary challenges of the detection
task. Hence, as for FSC, the detection models are first trained on the base
dataset and then adapted to novel classes with the support set. This adaptation
can be done in many ways, often based on FSC approaches. Therefore, we adopt the
same organization as for \cref{sec:fsl} and divide our review into four
distinct parts: fine-tuning, metric learning, meta-learning and attention-based
approaches. \cref{fig:fsod_timeline} outlines the organization and the
temporality of the FSOD field. FSOD is a relatively new challenge and only
started 2 years before this PhD project. \cref{tab:fsod_comparison} provides an
almost exhaustive overview of the literature about FSOD. The reader can refer to
several surveys \cite{huang2021survey,kohler2021few,liu2022few} about FSOD for
more thorough reviews. However, note that these surveys are already a few years
old, which is already a lot compared to the recency of the field.  

\midsepremove
\begin{table}[]
    \centering
    \resizebox{\textwidth}{!}{%
    \rowcolors{2}{gray!25}{white}
    \begin{tabular}{@{\hspace{5pt}}lcccP{4cm}cP{5.5cm}P{3cm}@{\hspace{5pt}}}
    \toprule[1pt]
    \textbf{Approach}           & \textbf{Abbreviation} & \textbf{Venue}       & \textbf{Date}        & \textbf{Detection Framework}        & \textbf{Multiscale}  & \textbf{Datasets}                              & \textbf{Aerial / Natural Images}  \\ \midrule
    \cellcolor{white}                                                                                     & FRW    \cite{kang2019few}                                 & ICCV                 & 2019                 & YOLO                      & No                   & Pascal / COCO                                  & Natural                    \\
    \cellcolor{white}                                                                                     & OSOD-CACE     \cite{wallach2019one}                       & NEURIPS              & 2019                 & Faster RCNN               & Yes                  & Pascal / COCO                                  & Natural                    \\
    \cellcolor{white}                                                                                     & Meta R-CNN    \cite{han2021meta}                          & ICCV                 & 2019                 & Faster RCNN               & No                   & Pascal / COCO                                  & Natural                    \\\rowcolor{OliveGreen!50}
    \cellcolor{white}                                                                                     & FSOD-RSI      \cite{deng2020few}                          & TGRS                 & 2020                 & YOLO                      & Yes                  & DIOR / NWPU VHR                                & Aerial                     \\
    \cellcolor{white}                                                                                     & ARPN          \cite{fan2020few}                           & CVPR                 & 2020                 & Faster RCNN               & Yes                  & COCO                                           & Natural                    \\
    \cellcolor{white}                                                                                     & VEOW          \cite{xiao2020few}                          & ECCV                 & 2020                 & Faster RCNN               & Yes                  & Pascal / COCO                                  & Natural                    \\
    \cellcolor{white}                                                                                     & KT            \cite{kim2020few}                           & SMC                  & 2020                 & Faster RCNN               & Yes                  & Pascal                                         & Natural                    \\
    \cellcolor{white}                                                                                     & OSOD-WFT      \cite{li2020one}                            & Preprint             & 2020                 & FCOS                      & Yes                  & Pascal / COCO / ImageNet Loc                   & Natural                    \\
    \cellcolor{white}                                                                                     & ONCE          \cite{perez2020incremental}${\ddagger}$     & CVPR                 & 2020                 & Center Net                & No                   & Pascal / COCO / Deepfashion                    & Natural                    \\\rowcolor{OliveGreen!50}
    \cellcolor{white}                                                                                     & WSAAN         \cite{xiao2020fsod}                         & TAEORS               & 2021                 & Faster RCNN               & Yes                  & RSOD / NWPU VHR                                & Aerial                     \\\rowcolor{OliveGreen!50}
    \cellcolor{white}                                                                                     & FSOD-FPDI     \cite{gao2021fast}                          & MDPI                 & 2021                 & FCOS                      & Yes                  & DOTA /  NWPU VHR                               & Aerial                     \\
    \cellcolor{white}                                                                                     & Meta-FRCNN    \cite{han2022meta}                          & AAAI                 & 2022                 & Faster RCNN               & Yes                  & Pascal / COCO                                  & Natural                    \\
    \cellcolor{white}                                                                                     & Meta-DETR     \cite{zhang2021meta}                        & TPAMI                & 2021                 & DETR                      & No                   & Pascal / COCO                                  & Natural                    \\
    \cellcolor{white}                                                                                     & DRL           \cite{liu2021dynamic}                       & Preprint             & 2021                 & Faster RCNN               & Yes                  & Pascal / COCO                                  & Natural                    \\
    \cellcolor{white}                                                                                     & DANA          \cite{chen2021should}                       & TM                   & 2021                 & Faster RCNN               & Yes                  & Pascal / COCO                                  & Natural                    \\
    \cellcolor{white}                                                                                     & SP            \cite{xu2021few}                            & Access               & 2021                 & Faster RCNN               & Yes                  & Pascal / COCO                                  & Natural                    \\
    \cellcolor{white}                                                                                     & JCACR         \cite{chu2021joint}                         & ICIP                 & 2021                 & YOLO                      & Yes                  & Pascal / COCO                                  & Natural                    \\
    \cellcolor{white}                                                                                     & TI-FSOD       \cite{li2021transformation}                 & CVPR                 & 2021                 & Faster RCNN               & Yes                  & Pascal / COCO                                  & Natural                    \\ \rowcolor{OliveGreen!50}
    \cellcolor{white}                                                                                     & SAM           \cite{huang2021few}                         & MDPI                 & 2021                 & Faster RCNN               & No                   & NWPU VHR-10 / DIOR                             & Aerial                     \\
    \cellcolor{white}                                                                                     & FSOD-FCT      \cite{han2022few}                           & CVPR                 & 2022                 & Faster RCNN               & No                   & Pascal / COCO                                  & Natural                    \\
    \cellcolor{white}                                                                                     & SAR-DRM       \cite{chen2022few}                          & TGRS                 & 2022                 & Faster RCNN               & No                   & FUSAR-GEN                                      & Aerial ${\mathsection}$                    \\
    \cellcolor{white}                                                                                     & FSOD-PSI      \cite{ouyang2022few}                        & JDT                  & 2022                 & YOLO                      & Yes                  & Pascal / COCO                                  & Natural                    \\
    \cellcolor{white}                                                                                     & SAFT          \cite{zhao2022semantic}                     & CVPR                 & 2022                 & FCOS                      & Yes                  & Pascal / COCO                                  & Natural                    \\
    \cellcolor{white}                                                                                     & APSP          \cite{lee2022few}                           & WACV                 & 2022                 & Faster RCNN               & No                   & Pascal / COCO                                  & Natural                    \\
    \cellcolor{white}                                                                                     & KFSOD         \cite{zhang2022kernelized}                  & CVPR                 & 2022                 & Faster RCNN               & Yes                  & Pascal / COCO                                  & Natural                    \\\rowcolor{OliveGreen!50}
    \cellcolor{white}                                                                                     & FSODS         \cite{zhou2022fsods}                        & TGRS                 & 2022                 & YOLO                      & Yes                  & SMCDD-FS                                       & Aerial ${\mathsection}$                   \\\rowcolor{OliveGreen!50}
    \cellcolor{white}                                                                                     & TIN-FSOD      \cite{liu2023transformation}                & Arxiv                & 2023                 & Faster RCNN               & Yes                  & NWPU VHR/ DIOR / HRRSD                         & Aerial                     \\
    \multirow{-34}{*}[0mm]{\cellcolor{white}\textbf{Attention}}                                           & FSOD-ICF      \cite{jiang2023few}                         & WACV                 & 2023                 & Faster RCNN               & Yes                  & Pascal / COCO                                  & Natural     \\\midrule
    \cellcolor{white}                                                                                     & PNPDet        \cite{zhang2021pnpdet}                      & WACV                 & 2021                 & Center Net                & No                   & Pascal / COCO                                  & Natural                   \\
    \cellcolor{white}                                                                                     & UPE           \cite{wu2021universal}                      & ICCV                 & 2021                 & Faster RCNN               & Yes                  & Pascal / COCO                                  & Natural                   \\
    \multirow{-3}{*}[0mm]{\cellcolor{white}\parbox{3.5cm}{\textbf{Attention / \newline Metric Learning}}} & GenDet        \cite{liu2021gendet}                        & NNLS                 & 2021                 & FCOS                      & Yes                  & Pascal / COCO                                  & Natural                 \\\midrule
    \cellcolor{white}                                                                                     & RepMet        \cite{karlinsky2019repmet}                  & CVPR                 & 2018                 & Faster RCNN               & Yes                  & Pascal / ImageNet Loc                          & Natural                    \\
    \cellcolor{white}                                                                                     & RN-FSOD       \cite{yang2020restoring}                    & NEURIPS              & 2020                 & Faster RCNN               & Yes                  & Pascal / ImageNet Loc                          & Natural                    \\
    \cellcolor{white}                                                                                     & MDODD         \cite{zhao2021morphable}${\dagger}$         & ICCV                 & 2021                 & Faster RCNN               & No                   & Pascal / COCO                                  & Natural                    \\
    \cellcolor{white}                                                                                     & FSCE          \cite{sun2021fsce}                          & CVPR                 & 2021                 & Faster RCNN               & Yes                  & Pascal / COCO                                  & Natural                    \\
    \multirow{-5}{*}[0mm]{\cellcolor{white}\textbf{Metric learning}}                                      & GD-FSOD       \cite{wu2021generalized}                    & NEURIPS              & 2021                 & Faster RCNN               & Yes                  & Pascal / COCO                                  & Natural   \\\midrule
    \cellcolor{white}                                                                                     & LSTD          \cite{chen2018lstd}                         & AAAI                 & 2018                 & Faster RCNN               & Yes                  & Pascal / COCO / ImageNet Loc                   & Natural                    \\
    \cellcolor{white}                                                                                     & MSPSR         \cite{wu2020multi}                          & ECCV                 & 2020                 & Faster RCNN               & Yes                  & Pascal / COCO                                  & Natural                    \\
    \cellcolor{white}                                                                                     & TFA           \cite{wang2020frustratingly}                & ICML                 & 2020                 & Faster RCNN               & Yes                  & Pascal / COCO / LVIS                           & Natural                    \\
    \cellcolor{white}                                                                                     & WOFG          \cite{fan2021generalized}${\dagger}$        & CVPR                 & 2021                 & Faster RCNN               & Yes                  & Pascal / COCO                                  & Natural                    \\
    \cellcolor{white}                                                                                     & Hallu-FSOD    \cite{zhang2021hallucination}               & CVPR                 & 2021                 & Faster RCNN               & Yes                  & Pascal / COCO                                  & Natural                    \\\rowcolor{OliveGreen!50}
    \cellcolor{white}                                                                                     & DHP           \cite{wolf2021double}                       & ICCVW                & 2021                 & Faster RCNN               & Yes                  & iSAID /  NWPU VHR                              & Aerial                     \\
    \cellcolor{white}                                                                                     & LVC           \cite{kaul2022label}                        & CVPR                 & 2021                 & Faster RCNN               & No                   & Pascal / COCO                                  & Natural                    \\
    \cellcolor{white}                                                                                     & FSCN          \cite{li2021few}                            & CVPR                 & 2021                 & Faster RCNN               & Yes                  & Pascal / COCO                                  & Natural                    \\
    \cellcolor{white}                                                                                     & FADI          \cite{cao2021few}                           & NEURIPS              & 2021                 & Faster RCNN               & Yes                  & Pascal / COCO                                  & Natural                    \\
    \cellcolor{white}                                                                                     & DeFRCN        \cite{qiao2021defrcn}                       & ICCV                 & 2021                 & Faster RCNN               & Yes                  & Pascal / COCO                                  & Natural                    \\\rowcolor{OliveGreen!50}
    \cellcolor{white}                                                                                     & SIMPL         \cite{xu2022simpl}                          & TAEORS               & 2022                 & YOLO                      & No                   & xView (plane only)                             & Aerial                     \\
    \cellcolor{white}                                                                                     & DETReg        \cite{bar2022detreg}                        & CVPR                 & 2022                 & Deformable DETR           & Yes                  & COCO                                           & Natural                    \\
    \cellcolor{white}                                                                                     & CFA           \cite{guirguis2022cfa}${\dagger}$           & CVPRW                & 2022                 & Faster RCNN               & No                   & Pascal / COCO                                  & Natural                    \\\rowcolor{OliveGreen!50}
    \cellcolor{white}                                                                                     & CIR           \cite{wang2022context}                      & TGRS                 & 2022                 & Faster RCNN               & Yes                  & NWPU VHR-10 / DIOR                             & Aerial                     \\
    \cellcolor{white}                                                                                     & NIMPE         \cite{liu2022novel}                         & ICASSP               & 2022                 & Faster RCNN               & Yes                  & COCO                                           & Natural                    \\
    \cellcolor{white}                                                                                     & HDA           \cite{she2022fast}                          & IROS                 & 2022                 & Faster RCNN               & Yes                  & COCO                                           & Natural                    \\
    \cellcolor{white}                                                                                     & MDB           \cite{wu2022multi}                          & LNCS                 & 2022                 & Faster RCNN               & No                   & Pascal / COCO                                  & Natural                    \\
    \cellcolor{white}                                                                                     & DCB           \cite{gao2022decoupling}${\dagger}$         & NEURIPS              & 2022                 & Faster RCNN               & Yes                  & Pascal / COCO                                  & Natural                    \\\rowcolor{OliveGreen!50}
    \cellcolor{white}                                                                                     & CPP-FSOD      \cite{lin2023effective}                     & Preprint             & 2023                 & Faster RCNN               & Yes                  & Pascal / COCO                                  & Natural                    \\
    \multirow{-21}{*}[0mm]{\cellcolor{white}\parbox{3cm}{\textbf{Fine-tuning Strategy}}}                  & I-DETR        \cite{dong2022incremental}${\ddagger}$      & AAAI                 & 2023                 & Deformable DETR           & No                   & Pascal / COCO                                  & Natural      \\\midrule
    \cellcolor{white}                                                                                     & MetaDet       \cite{wang2019meta}                         & ICCV                 & 2019                 & Faster RCNN               & No                   & Pascal / COCO                                  & Natural                \\
    \multirow{-2}{*}[0mm]{\cellcolor{white}\textbf{Meta-Learning}}                                        & Sylph         \cite{yin2022sylph}${\ddagger}$             & CVPR                 & 2022                 & Faster RCNN               & No                   & COCO / LVIS                                    & Natural         \\\midrule
    \cellcolor{white}                                                                                     & TL-ZSOD       \cite{rahman2019transductive}               & ICCV                 & 2019                 & RetinaNet                 & Yes                  & COCO                                           & Natural         \\
    \multirow{-2}{*}[0mm]{\cellcolor{white}\parbox{3.5cm}{\textbf{Zero-shot \newline Object Detection}}}  & ML-CMP        \cite{han2022multimodal}                    & Preprint             & 2022                 & Faster RCNN               & No                   & Pascal / COCO                                  & Natural           \\ \midrule
    \cellcolor{white}                                                                                     & OA-FSUI2IT    \cite{zhao2022oa}                           & AAAI                 & 2022                 & Faster RCNN               & Yes                  & Multiple datasets                              & Natural                   \\
    \cellcolor{white}                                                                                     & Acro FOD     \cite{gao2022acrofod}                        & ECCV                 & 2022                 & YOLO                      & Yes                  & Multiple datasets                              & Natural                   \\
    \cellcolor{white}                                                                                     & CD-CutMix     \cite{nakamura2022few}                      & ACCV                 & 2022                 & Faster RCNN               & No                   & Multiple datasets                              & Natural                   \\\rowcolor{OliveGreen!50}
    \cellcolor{white}                                                                                     & CD-FSOD       \cite{xiong2022cd}                          & Preprint             & 2022                 & Faster RCNN               & Yes                  & Multiple datasets                              & Aerial                    \\\rowcolor{OliveGreen!50}
    \multirow{-5}{*}[0mm]{\cellcolor{white}\textbf{Cross-domain}}                                         & CD-MDB        \cite{lee2022rethinking}                    & ECCV                 & 2022                 & Faster RCNN               & Yes                  & Multiple datasets                              & Aerial         \\ \bottomrule[1pt]
    \end{tabular}%
    } \caption[FSOD literature review]{List of the most relevant contributions
    to the Few-Shot Object Detection field. These works are grouped according to
    the general approach employed to tackle FSOD and sorted by their year of
    publication. Green rows signify that the methods were applied to aerial
    images and $\mathsection$ indicates that these images are SAR
    images. $\dagger$ signals that it was applied to generalized FSOD while
    $\ddagger$ means that it was developed in an incremental setting.}
    \label{tab:fsod_comparison}
\end{table}
\midsepdefault

\begin{figure}
    \centering
    \includegraphics[width=\textwidth]{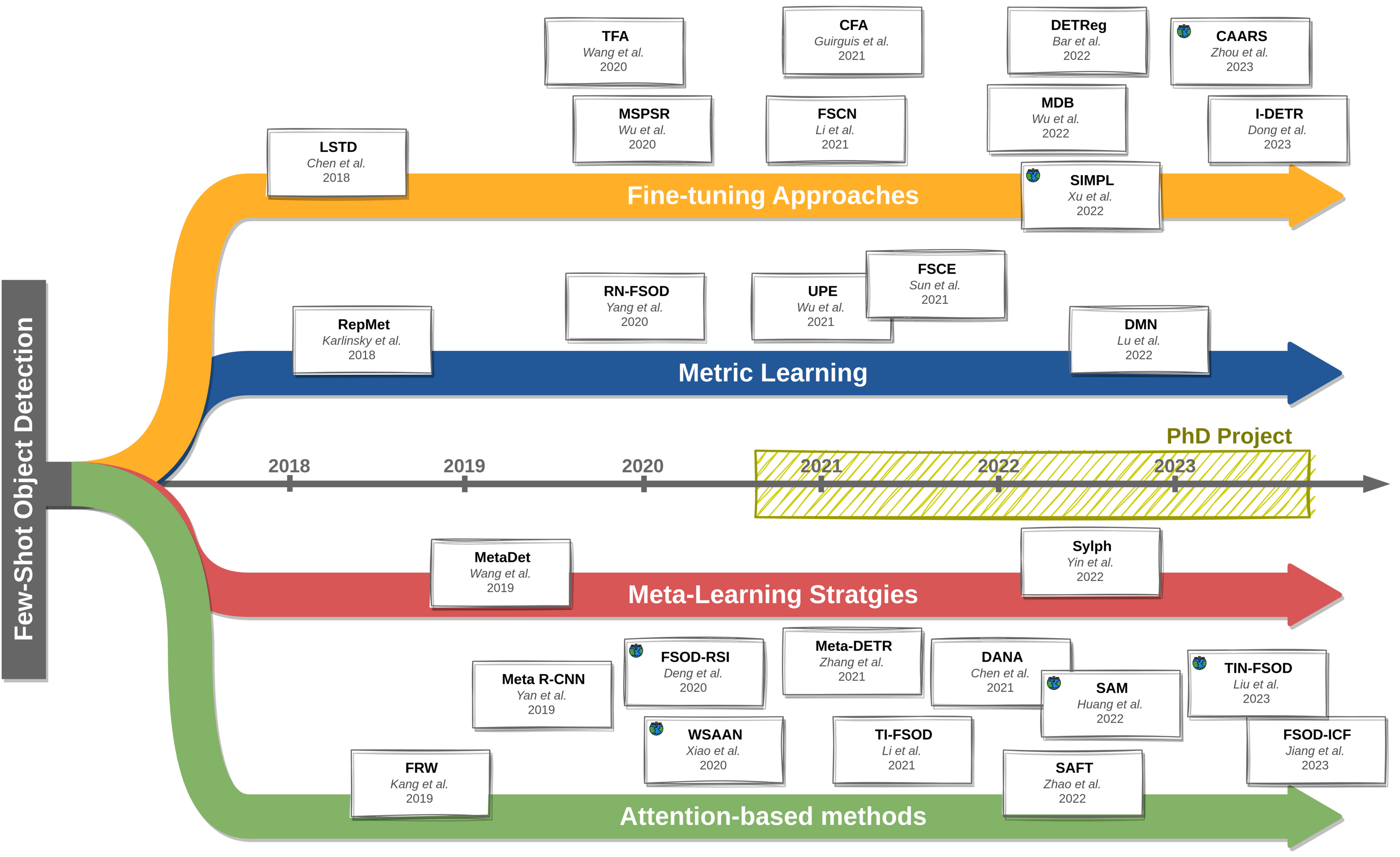}
    \caption[FSOD literature timeline]{Timeline of the FSOD literature, several
     works are included as milestones for each of the four kinds of approaches
     to FSOD: Fine-tuning, Metric Learning, Meta-Learning and Attention-based
     approaches. The yellow hatched rectangle represents the duration of this
     PhD project.}
    \label{fig:fsod_timeline}
\end{figure}

\subsection{Fine-tuning}

Fine-tuning is the simplest approach to tackle FSOD, the principle is quite
straightforward and similar to FSC: train a detection model to detect base
classes on a large dataset and then fine-tune it on novel classes with the few
available annotations. This is leveraged by Low Shot Transfer Detector (LSTD)
\cite{chen2018lstd}, a pioneer work on FSOD. It first trains a Faster R-CNN on a
base dataset and fine-tunes it on a support set containing only some examples of
the novel classes. Regularization losses are introduced to prevent overfitting.
Before fine-tuning, the last layer of the classifier branch is replaced with a
randomly initialized layer with the right number of outputs (\ie the number of
novel classes $|\mathcal{C}_{\text{novel}}|$). Closely related,
\cite{wang2020frustratingly} leverages the same idea without any additional loss.
Instead, they propose a Two-stage Fine-tuning Approach (TFA), which freezes most
of the network after base training. TFA is then extended by Constraint-based
Fine-tuning Approach (CFA) \cite{guirguis2022cfa} which leverages a technique
borrowed from Continual Learning: Average Gradient Episodic Memory. It applies
orthogonality constraints to the gradient during fine-tuning to prevent
forgetting base knowledge. This mostly helps for the generalized FSOD setting,
but it is also beneficial for regular FSOD. Another extension on top of the
basic fine-tuning approaches is to add a refinement step to filter the bounding
boxes predicted by the fine-tuned network. For instance, \cite{li2021few}
proposes a Few-Shot Correction Network (FSCN) whose goal is to assist the
detector classification branch. It is trained directly on the false positive of
the detector to specifically target challenging situations. Similarly,
\cite{kaul2022label} leverages a kNN classifier to “verify” the predicted labels
and lightweight bounding regressors to “correct” the predicted localizations.
Multi-scale Positive Sample Refinement (MSPSR) \cite{wu2020multi} also proposes
a proposal refinement strategy by leveraging a multiscale refinement branch. It
provides a better balance between positive and negative samples and makes both
base training and fine-tuning more efficient.

Another line of work addresses the FSOD problem through an augmentation
perspective. It circumvents the low few-annotated examples and the overfitting
risk by enriching the support set with more or less elaborated augmentations. An
easy and effective solution is to crop and paste novel instances directly inside
base images \cite{lin2023effective}. During fine-tuning, images from the base
dataset and support examples are randomly sampled. The support examples are
cropped and pasted into the base images. This significantly boosts the
fine-tuning procedure and improves FSOD performance. Similarly, Synthetic object
IMPLantation (SIMPL) \cite{xu2022simpl} leverages 3D models for each novel class
to generate high-quality augmented images. SIMPL completely blends the augmented
object inside the image, whereas \cite{lin2023effective} pastes some background
around the novel object as well. SIMPL leverages external information about the
classes and requires access to 3D models of the classes which is not always
possible. However, this opens an opportunity for addressing the even more
challenging zero-shot setting.  
Pushing even further, \cite{zhang2021hallucination} proposes a generative model
to enrich the support set and improve detection quality. The
\textit{hallucinator} model is trained jointly with the detector in an EM-like
procedure. First, the hallucinator is trained with the detector
classification loss (the detector is kept frozen). Then, the detector is trained
while the hallucinator provides more support examples (with the hallucinator now
frozen).

Finally, some other works leverage the fine-tuning strategy with other tricks.
Novel Instances Mining with Pseudo-Margin Evaluation (NIMPE) \cite{liu2022novel}
build a mining network to extract pseudo-labels from the base dataset. Reference
\cite{wu2022multi} fine-tunes both the classifier and the regressor of Faster
R-CNN with an additional distillation loss based on pseudo-labels. Pseudo-labels
are computed in a metric learning fashion between the query feature and the
prototype features stored in a memory bank. Few-shot object detection via
Association and DIscrimination (FADI) \cite{cao2021few} splits the fine-tuning
step into association and discrimination steps. During the association step, the
network is fine-tuned to map novel classes onto base classes. It leverages the
well-structured base class representation space learned during base training and
separates novel classes. Then, the discrimination step disentangles base and
novel class representations with a dedicated margin loss. 

Considering FSOD as a hierarchical refinement \cite{she2022fast} is also a
viable option as it breaks down base classes into novel classes. While this
setup is certainly relevant for many applications, it differs from the commonly
adopted FSOD setting.  

\subsection{Metric-learning based methods}
\label{sec:fsod_metric}
Metric-learning-based methods are extensively employed for few-shot
classification. Metric learning is designed especially for classification and
cannot handle bounding box regression. Thus, it cannot be directly applied to
object detection. However, several attempts were made to tackle FSOD with
metric-learning techniques, mostly replacing the classification branch of the
model with prototypical networks or closely related variants and keeping the
regression branch unchanged. Of course, even the classification adaptation is
not straightforward as object detection includes a special background class that
should be processed with care. Among these attempts, RepMet
\cite{karlinsky2019repmet} learns class representative vectors to classify
Regions of Interest (RoI) in Faster R-CNN according to their distance to the
closest class prototype. Class vectors are initialized with support image
representations and then fine-tuned via backpropagation. The fine-tuning is
based on a cross-entropy loss and a margin metric loss which favors tight
clusters in the embedding space. The background class probability is computed as
the complementary probability of the most probable class. Closely related,
Plug-and-Play Detectors (PNPDet) \cite{zhang2021pnpdet} learns prototype vectors
as well as scale factors. In addition, they replace the Euclidean distance from
RepMet with a Cosine similarity measure. Similarly, FSCE \cite{sun2021fsce} adds
a contrastive head on top of a pre-trained detector during fine-tuning. This
head outputs embedding for each RoI. A contrastive loss is optimized to bring
closer the representations of same-class RoI and repel RoI with no objects.
Likewise, \cite{zhao2021morphable} leverages prototypes as well but deals with the
background class separately with a learnable binary classifier. 

Plenty of other works leverage class prototypes for the classification part of
the detector. However, various tricks are proposed in the literature to improve
the quality and use of the prototypes. For instance, Universal Prototype
Enhancement (UPE) \cite{wu2021universal} refines prototypes with affine
transformation to convert image-level representations into object-level
prototypes much more adapted to the detection task. Also, it does not leverage
the prototypes directly as a classifier but rather uses them to adapt query
features before classification and regression. Similarly, GenDet (Generate
Detectors from Few Shots) \cite{liu2021gendet} combines the technique from
RepMet and UPE, \ie learnable prototypes to adapt the query features. Negative
prototypes can also be learned to better deal with the background class
\cite{yang2020restoring}. Finally, some contributions manage two sets of
prototypes, arguing that one set is not optimal for adapting features for both
the classification and the regression. Decoupled Metric Network
\cite{lu2022decoupled} introduces a decoupled representation transformation to
adapt class prototypes for either classification or regression. Likewise,
\cite{wu2021generalized} splits the representations using Singular Value
Decomposition. Eigenvectors corresponding to the largest singular values
represent the main source of variance. The authors argue that this accounts for
the general adaptation between base and novel classes. They are leveraged for
adapting query features both for regression and classification. Other
eigenvectors only represent the inter-class variance and therefore, are only
used in the classification branch. The methods described in this paragraph are
slightly different from the ones at the beginning of this section. They all use
their representation vectors to update query features before a learnable
classification and regression module, instead of using them for direct
classification (\eg distance to the closest prototype). They highly resemble
some attention-based methods that will be presented in the next section.

\subsection{Attention-based methods}
\label{sec:fsod_attention}
As we briefly broached at the end of the previous section, a common technique
for FSOD is to adapt the features from the query image based on the support
images. This can be understood as an attention mechanism between the query and
support features as it highlights locations in the query feature map that are
similar to the support images. Another way to see this is to think of the
attention mechanism as an adaptive filtering layer. It filters the query
features map according to the support features. Highlighted locations in the
query map show features similar to the support images. Following the attention
module, the regression and classification are performed independently per class,
often using a shared, class-agnostic detection head (see
\cref{fig:fsod_attention_principle}). The query-support combination module takes
the query feature maps and the features from all novel classes as input
and outputs class-specific query feature maps. This will be explained in more
detail in \cref{chap:aaf} which presents a general framework to subsume
existing attention mechanisms for FSOD. To summarize, what we call here
attention-based FSOD methods are techniques that leverage support information to
adapt query features before the classification and regression branches.
Following this definition the methods presented at the end of
\cref{sec:fsod_metric} can be interpreted as attention methods. However, they
are presented from a metric learning perspective which is why they are not
discussed in the current section (they will be classified as "Attention/Metric
Learning” methods in \cref{tab:fsod_comparison}). 

\begin{figure}[h]
    \centering
    \includegraphics[width=\textwidth]{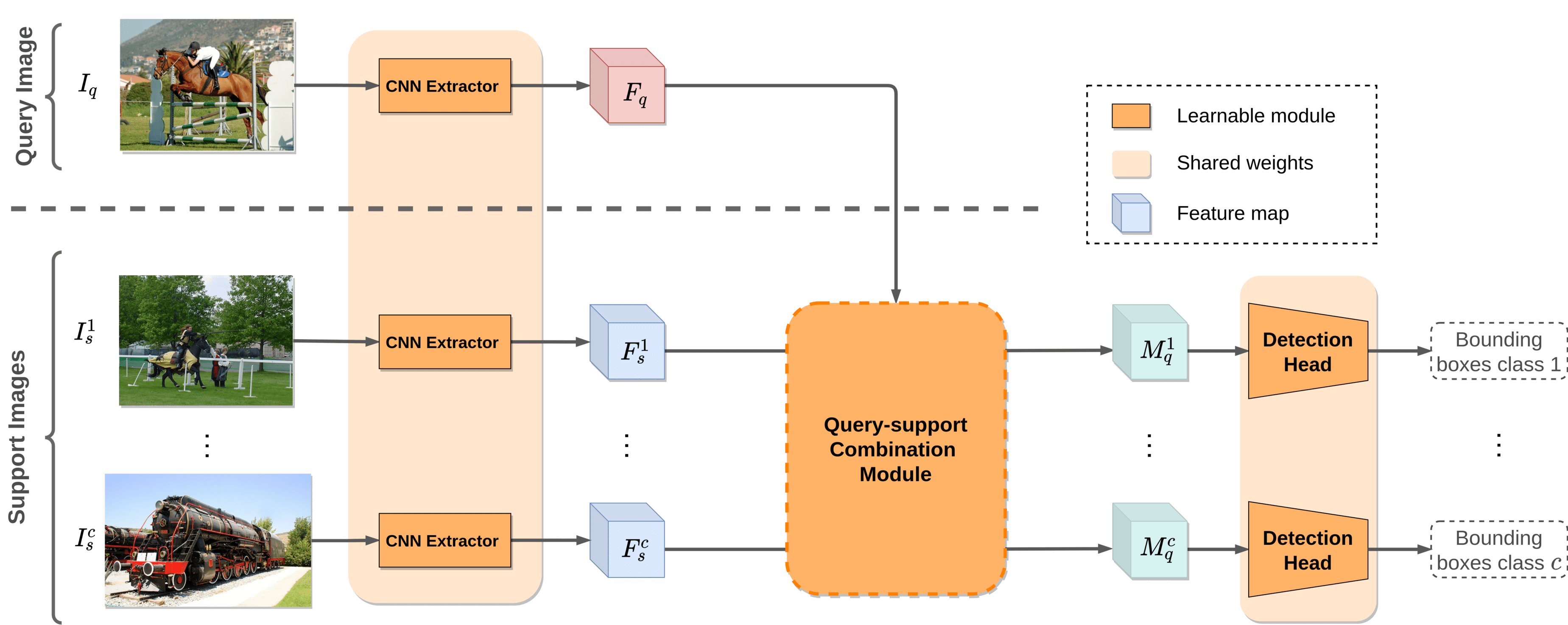}
    \caption{Attention-based FSOD principle}
    \label{fig:fsod_attention_principle}
\end{figure}

A seminal work in this field introduces Feature ReWeighting (FRW
) \cite{kang2019few}, which trains a reweighting module along with a YOLO
detector. The reweighting module produces class-specific feature vectors with a
Global Pooling layer (GP) applied on the support feature maps. These are then
channel-wise multiplied by the query features extracted by the backbone.
Hence, class-specific query features are generated, and the detection head
computes predictions for each class separately. This technique has been widely
re-used in the following literature, with other detection frameworks: Faster
R-CNN \cite{yan2019meta,fan2020fsod,xiao2020fsod,xiao2020few,kim2020few,wallach2019one},
CenterNet \cite{perez2020incremental,zhang2021pnpdet} or FCOS \cite{li2020one}.
The class reweighting vectors can be enriched by several tricks to
improve feature filtering. A few works
\cite{kim2020few,xiao2020fsod,liu2021dynamic} employ Graph Convolutional Networks
(GCNs) to combine and refine the reweighting vectors before the combination
modules. \cite{gao2021fast} finds optimal vectors through iterative
optimization. Others leverage multi-scale features to enrich class
representation \cite{deng2020few,gao2021fast}. 
 
This channel-wise multiplication between query and support features is a simple
form of attention. It can be thought of as an adaptive convolution layer, whose
weights depend on the support features. Incidentally, it is often interpreted
and implemented as such by existing works, approaching the meta-learning
paradigm. However, more complex attention mechanisms have been leveraged in the
literature. The incentive behind this improvement was the loss of spatial
support information and background feature contamination with the GP layer.
First, \cite{xu2021few} proposes a self-attention module to better highlight the
support object features and prevent background contamination. Very similar, Dual
AwareNess Attention (DANA) \cite{chen2021should} introduces a background
attenuation block for the same reason. However, DANA also leverages an alignment
mechanism to combine query and support features without losing spatial
information. This alignment module is quite close to the visual transformers'
attention. It encodes the features from the query images as \textit{queries} and
the features from a support image as \textit{keys} and \textit{values}.
\textit{Queries} here refer to the query-key-value (QKV) formulation of the
transformers, the correspondence with the query features is fortuitous. Queries
and keys are combined to form an attention map, which represents the similarity
between the query and support image patches. Then, the dot product between the
attention map and the values produces the aligned support features. It can be
understood as an alignment as it re-organizes spatially the support features to
match the spatial dimension of the query map, according to the similarity
between query and support. The underlying idea is that the same class objects in
the query and support images will likely have different aspects or poses.
Therefore, a direct comparison between the feature maps is often irrelevant.
The alignment procedure moves support features to similar locations in the query
map. This is illustrated in \cref{fig:fsod_spatial_align}, but more details will
be given in \cref{chap:aaf}. 

\begin{figure}
    \centering
    \includegraphics[width=0.9\textwidth,trim=50 0 0 0,clip]{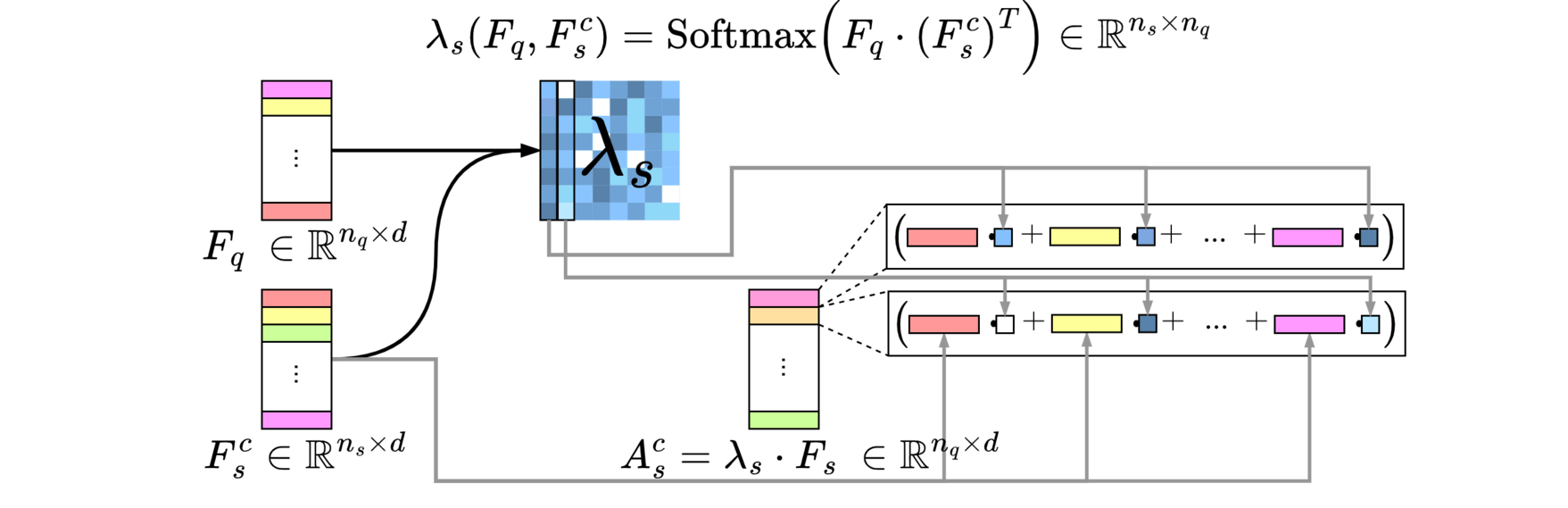}
    \caption[Query-Support Spatial Alignment]{Spatial alignment between query and support feature maps.
    Similarity matrix is computed as an outer product between the feature maps. For sake of
    clarity, maps are reshaped as 2-D matrix where the first dimension controls
    the spatial position in the map: $n_q$ positions for the query and $n_s$ for
    the support. $d$ is the number of channels. Similar colors mean that
    features are similar.}
    \label{fig:fsod_spatial_align}
\end{figure}

Hence, this alignment mechanism combines query and support features,
highlighting their similarity without losing spatial information. The same
technique is leveraged by several works
\cite{yan2019meta,han2022meta,han2022few,zhao2022semantic} with slight
variations. Similarly, \cite{lee2022few} uses QKV attention with globally pooled
support features. Instead, the authors propose to compute attention between the
query images and all support images for a class at once. In other works, the
attention is generally aggregated per class. Following the same idea, Meta-DETR
\cite{zhang2021meta} computes attention between a query image and all support
images at once. However, the authors do this for all classes at the same time
and replace the binary classification with a multi-class classification layer
(unlike most methods discussed above). To achieve this, a task encoding module
adapts the features for a specific task (\ie to the classes of interest) before
the classification head.

Of course, these are not the only attention mechanisms existing in the FSOD
literature. Some works derive other kinds of attention achieving
competitive results. Kernelized FSOD (KFSOD) \cite{zhang2022kernelized} proposes
elaborated kernel functions to combine query and support features in various
ways, which can be interpreted as attention. Differently, \cite{fan2020fsod}
trains three distinct branches that combine query and support in different
fashions, globally, locally, and patch-to-patch. Dynamic Relevance Learning
(DRL) \cite{liu2021dynamic} proposes a simpler way to combine query and support
features by simple point-wise operations (concatenation, multiplication, and
subtraction) after global pooling.   

In addition to these attention mechanisms, some works also propose additional
loss functions to improve the quality of the extracted features (query and
support). As an example, Transformation Invariant FSOD (TI-FSOD)
\cite{li2021transformation} leverages two losses to enforce robust feature
extraction. These losses are implemented as a distance between original and
augmented query or support features. The same principle is also proposed for
application on remote sensing images by \cite{liu2023transformation}. Another
technique is proposed by \cite{jiang2023few}, which computes a regularization
loss between two randomly sampled subsets of an RoI
feature. This regularization enforces consistency and robustness in the feature
space making the detection easier. Likewise, \cite{chu2021joint} derives a
reconstruction loss function by computing a low-rank matrix and reconstructing
the extracted query and support features. It enforces relevant latent structure
and alignment between query and support features. Finally, \cite{ouyang2022few}
introduces a loss function to promote orthogonality between classes in the
feature space.

\subsection{Meta-Learning}

While Meta-Learning methods are very common for FSC, they are much rarer in
the FSOD literature. It is explained easily by the difficulty of the task and
the complexity of meta-learning approaches. They often require training a
meta-learner model for generating weights or gradient updates to a smaller
classification network. However, detection models are much larger than the
classification ones, which makes the meta-learning approaches impractical for
FSOD. Nevertheless, the FSOD literature borrows some techniques from the
meta-learning field. In particular, most attention and metric-learning-based
methods for FSOD are trained using an episodic training strategy (see
\cref{tab:fsod_comparison}). In addition, many works are presented within the
meta-learning perspective because of the episodic training, here are some
examples: Meta Faster R-CNN \cite{han2022meta}, Meta R-CNN \cite{wu2020meta},
Meta-DETR \cite{zhang2021meta}, and GenDet \cite{liu2021gendet}. 

Yet, there are a few attempts at solving FSOD with meta-learning approaches.
MetaDet \cite{wang2019meta} extends Faster R-CNN with the MAML framework.
Specifically, they choose to generate weights only for the detection head of
Faster R-CNN, considering that the feature extractor and RPN are class-agnostic
and do not need adaptation. This significantly reduces the size of the generated
weights and makes it possible to use MAML. During base training, only the
detector is trained on the base dataset. Then a fine-tuning phase occurs, the
meta-learner is trained to predict the weight of the detection head only
provided with support examples for the novel classes. At the same time, the
detector is fine-tuned on the support set, with all class-agnostic parts frozen.
At test time, the meta-learner predicts weights for the novel classes to extend
the detector's head and the detector can be used as a regular detector.
Similarly, Sylph \cite{yin2022sylph} applies the same idea but only to the
classification branch of the detector, assuming that the regression is also
class-agnostic.

This section draws an almost exhaustive list of the contributions to the FSOD
field (see \cref{tab:fsod_comparison}). As for FSC, several research tracks
explore FSOD. However, meta-learning is a lot less popular approach for
detection compared to classification. Instead, attention-based methods (often
trained with an episodic strategy) are the mainstream approaches. Nevertheless,
there is no consensus about the best way to tackle FSOD, and fine-tuning or
metric-learning contributions are often simpler and still competitive. 

\section{Few-Shot Object Detection on Remote Sensing Images}
\vspace{-1em}
Few-Shot Object Detection is a relatively recent field in computer vision and so
far, it has been applied mostly to natural images and in particular on Pascal
VOC and MS COCO datasets (\cref{sec:fsod_dataset} explains how they are prepared
for the few-shot setting). However, there are a few contributions that apply
FSOD techniques on Remote Sensing Images (RSI), these are highlighted in green
in \cref{tab:fsod_comparison}. RSI are notoriously more challenging than natural
images for the detection task. Objects are smaller and more numerous, they can
be arbitrarily oriented, and the background is often more complex. Therefore,
object detection methods applied to RSI often comprise some tricks to better
deal with the specificities of RSI. FSOD techniques applied to RSI follow the
same trend. Among these tricks, the use of multiscale features is certainly the
most common. For instance, FSOD-RSI \cite{deng2020few} extends FRW with three
levels features map to better deal with small objects. Similarly,
\cite{gao2021fast,zhou2022fsods,wang2022context} leverage multiscale features
for either the query image, support images or both. FSOD applied to RSI is based
on attention-mechanisms
\cite{deng2020few,gao2021fast,zhou2022fsods,huang2021few,chen2022few,liu2023transformation}
or fine-tuning strategies
\cite{wolf2021double,xu2022simpl,wang2022context,zhou2022few}, but to our
knowledge, there is no FSOD method based completely on a metric-learning
approach.

Among these contributions, some tackle the detection problem with other
modalities than visible light. Indeed, this is a problem of interest as earth
observation is often conducted with non-visible light. Image quality is highly
dependent on the weather conditions, and half the earth at night is
unobservable with visible light. Therefore, a lot of applications rather use
infrared light or Synthetic-Aperture Radar (SAR). Two articles tackle few-shot
detection in SAR images \cite{chen2022few,zhou2022fsods}, yet without any
notable adjustment to account for the modality change.

These contributions are of particular interest for COSE as the goal is to design
efficient detection methods for high-resolution images. Some extensions for the
CAMELEON project are already planned with multi-spectral images and LIDAR.
Hence, methods able to adapt from one modality to another are especially valuable.

\section{Extension of the Few-Shot Object Detection Setting}
\vspace{-1em}
\label{sec:fsod_extensions}
As for Few-Shot Classification, the Few-Shot Object Detection setup has
several extensions. These settings are more challenging but reflect better
real-life use cases.

\paragraph*{One-shot and Zero-shot Object Detection}
First, in the case of extremely limited annotations, object detection is still
achievable. One-Shot OD has been addressed by several works
\cite{li2020one,wallach2019one,chu2021joint,zhao2022semantic} that we
present in the above section. These approaches are not different from the
few-shot setting, it simply is more difficult. However, in the zero-shot
setting, it becomes even more challenging as no image example is available for
the novel classes. The common approach in this setting is to leverage semantic
representations from the class labels and condition the detection on this
information \cite{bansal2018zero}. Recently introduced large
language-visual models such as CLIP \cite{radford2021learning} provide strong
improvements for various zero-shot tasks and object detection is no exception.
For instance, \cite{han2022multimodal} trains a prompt generator in a
meta-learning fashion to condition the detection on novel classes.
Alternatively, DINO \cite{caron2021emerging} and DETReg \cite{bar2022detreg}
conceive strong self-supervised pre-training schemes specifically adapted for
object detection, which translate into impressive performance in a low shot
setting. Finally, \cite{rahman2019transductive} leverage a transductive
pseudo-labeling approach to improving zero-shot detection. To our knowledge,
this is the only transductive method applied to few-shot detection.

\paragraph*{Generalized and Incremental FSOD}
The goal of FSOD is to adapt to novel classes; however, in many cases, the
performance on base classes matters as well. This is the case in the Generalized
FSOD (G-FSOD) setting where we are interested in detecting both base and novel
classes. The incremental setting extends G-FSOD with several adaptations to
novel classes without forgetting the previously seen classes. For G-FSOD, a
naive approach is to train two detectors: one on base classes and one on all
classes (base and novel), as a fine-tuned version of the first one. Outputs from
both detectors are combined at test time to achieve better performance on base
and novel classes. This is adopted by \cite{fan2021generalized} with two Faster
R-CNN. However, other contributions propose more sophisticated methods. CFA
\cite{guirguis2022cfa}, for instance, proposes a regularization loss to prevent
forgetting base classes. Alternatively, \cite{gao2022decoupling} duplicates the
detection head to process separately the foreground and background samples and
prevents classification bias toward base classes. In the incremental setting,
the naive approach from \cite{fan2021generalized} does not scale as it would
require duplicating the detector each time novel classes are added. Instead,
\cite{perez2020incremental,yin2022sylph} train a meta-network to generate
classifier weights for novel classes on-the-fly unlocking convenient adaptation.
Incremental DETR \cite{dong2022incremental} adopts a different strategy based on
fine-tuning and distillation to prevent forgetting previously seen classes. 

\paragraph*{Cross-domain Few-Shot Object Detection}
Last but not least, Cross-Domain FSOD (CD-FSOD) tackles the few-shot object
detection task in the context of domain adaptation. CD-FSOD aims at designing
methods able to generalize to new kinds of images. Just as for classification,
two sub-tasks have been explored in the literature: CD-FSOD with and without
class shift. For COSE, both tasks are relevant but CD-FSOD with class
shift precisely corresponds to their application. Indeed, once a system is in
operation, it will likely encounter new objects and domains. While the images
will always be taken from above, their general aspect may change a lot due to
weather conditions, different landscapes or carrier altitude. Therefore, solving
CD-FSOD with class shift is crucial for the CAMELEON system. 

However, this field remains barely untouched. To our knowledge, only a few
contributions tackle CD-FSOD. First, without class shift, several works address
this problem with augmentation-based approaches. The idea is to leverage the few
target examples to augment the source images so that they become plausible
samples from the target domain. For instance, \cite{gao2022acrofod} proposes a
directive data augmentation procedure that optimally augments the source
examples, so their features are close to the features of the target examples.
The detector is then trained as a regular detector on the augmented examples.
Likewise, \cite{wang2019few,zhao2022oa} propose source-to-target translation
networks that convert source images into target images. These networks are
trained adversarially with a discriminator that aims to distinguish between
domains. Closely related, Cross-Domain CutMix \cite{nakamura2022few} crafts an
augmentation technique that mixes two domains by cropping and pasting objects
from the target domain into source images and vice-versa. \\

The methods discussed above assume that the source and target domains share the
same label space, \ie they have the same classes. This assumption allows for
building simple domain augmentation approaches as once the source images are
translated into the target domain, their annotations can be directly used for
training. When classes shift between source and target domains, this is no
longer an option. In this very challenging setting, most FSOD methods showcase
poor results and adaptation capabilities. Naive fine-tuning is often the best
alternative in this case. To our knowledge, there are still no contributions
that tackle this task. Only two articles \cite{xiong2022cd,lee2022rethinking}
pave the way for future research in this direction with dedicated benchmarks and
baselines. Both propose to study the generalization capabilities of FSOD methods
on various datasets after a large base training on COCO dataset. First,
\cite{xiong2022cd} combines three datasets: 1) ArTaxOr \cite{artaxor2020}, a
close-up insect images dataset, 2) UODD \cite{jiang2021underwater}, an
underwater image dataset, and 3) DIOR. The authors propose a simple
self-distillation strategy, similar to self-supervised approaches, during the
fine-tuning on the new domains. Second, \cite{lee2022rethinking} builds a more
complete benchmark called Multi-dOmain FSOD (MoFSOD) with 10 different target
domains. However, the authors study the influence of two different source
datasets COCO and LVIS \cite{gupta2019lvis}. They also provide a domain distance
measure that assesses the similarity between a dataset and COCO. This measure is
the recall of a detector trained on COCO applied to a dataset in a
class-agnostic manner. Intuitively, if a dataset is close to COCO (in terms of
classes and aspects), the trained detector will detect a lot of objects (even if
the classes are wrong) and will have a high recall. Based on this similarity
measure they study the impact of freezing some layers of the detector during
the fine-tuning. Previous works recommend only fine-tuning the detection head
while keeping the backbone frozen. However, \cite{lee2022rethinking} shows that
this is true only for sufficiently similar datasets. In other words, when the
source-target gap is large, it is better to fine-tune the model entirely for
better adaptation. Unfortunately, the authors did not provide an easy-to-use
meta-dataset for future research and addressing CD-FSOD remains challenging
due to complex initial data processing.

\section{Dataset preparation and evaluation in the Few-Shot setting}
\label{sec:fsod_dataset}
\vspace{-1em}

\subsection{Adapting detection datasets in the Few-Shot setting}
There are no specific datasets for Few-Shot Object Detection. Instead, regular
detection datasets can be adapted to the few-shot setting. In this section, we
describe this process in the case of the four datasets on which this PhD project
mainly focuses: DOTA \cite{xia2018dota}, DIOR \cite{li2020object}, Pascal VOC
\cite{everingham2010pascal} and COCO \cite{lin2014microsoft}. 

The conversion of a dataset for the $N$-ways $K$-shots setting is
straightforward. First, the set of classes is divided into two sets: the base
and novel class sets (with $|\mathcal{C}_{\text{novel}}| = N$). The class split
for each dataset is fixed by common practices (for Pascal VOC and COCO) or taken
at random (for DOTA and DIOR) when no convention is set in the literature.
\cref{tab:fsod_class_splits} gives the class split that will be used throughout
this PhD thesis. Then, the instances of the novel classes are filtered from the
dataset to keep only $K$ images per novel class. This filtering operation is
performed in two steps:

\begin{enumerate}
    \item for each novel class $c\in \mathcal{C}_{\text{novel}}$, $K$ images
    containing at least one instance of class $c$ are selected as the support
    examples. They constitute the support set.
    \item the instances of the novel classes are removed from all other images
    in the dataset. 
\end{enumerate}

This choice is motivated by the presence of \textit{distractors}. This concept
and further explanations about how they influence the few-shot training in
detection will be presented in \cref{sec:fsod_distractors}. It is important to
note that as FSOD is a recent field, several preparation techniques coexist in
the literature. However, the one described above seems the most reasonable and
common in current FSOD works. 

Nevertheless, the existence of various preparation settings makes the comparison
with existing methods difficult as this choice is rarely discussed in the
articles. It may be challenging to figure out precisely how the datasets were
prepared, not to mention the choice of the training strategy (\ie episodic or
direct fine-tuning). The reported FSOD performance in the literature should then
be regarded with a critical eye.

\begin{table}[]
    \centering
    \resizebox{0.85\textwidth}{!}{%
    \rowcolors{2}{gray!25}{white}
    \begin{tabular}{@{\hspace{5pt}}lP{6cm}P{6cm}@{\hspace{5pt}}}
    \toprule[1pt]
                        & \textbf{Novel classes}                              & \textbf{Base classes}                                                                                                                                                              \\ \midrule 
    \textbf{Pascal VOC} & bird, bus, cow, motorbike, sofa                                        & aeroplane, bicycle, boat, bottle, car, cat, chair, diningtable, dog, horse, person, pottedplant, sheep, train, tvmonitor                                                                                                                                              \\ \addlinespace[0.5em]
    \textbf{MS COCO}    & person, bicycle, car, motorcycle, airplane, bus, train, boat, bird, cat, dog, horse, sheep, cow, bottle, chair, couch, potted plant, dining table, tv & truck, traffic light, fire hydrant, stop sign, parking meter, bench, elephant, bear, zebra, giraffe, backpack, umbrella, handbag, tie, suitcase, frisbee, skis, snowboard, sports ball, kite, baseball bat, baseball glove, skateboard, surfboard, tennis racket, wine glass, cup, fork, knife, spoon, bowl, banana, apple, sandwich, orange, broccoli, carrot, hot dog, pizza, donut, cake, bed, toilet, laptop, mouse, remote, keyboard, cell phone, microwave, oven, toaster, sink, refrigerator, book, clock, vase, scissors, teddy bear, hair drier, toothbrush \\ \addlinespace[0.5em]
    \textbf{DOTA}       & storage-tank, tennis-court, soccer-ball-field                                              & plane, ship, baseball-diamond, basketball-court, ground-track-field, harbor, bridge, small-vehicle, large-vehicle, roundabout, swimming-pool, helicopter, container-crane                                                                                                                                                    \\ \addlinespace[0.5em]
    \textbf{DIOR}       & airplane, baseball field, tennis court, train station, wind mill                                        & airport, basketball court, bridge, chimney, dam, expressway service area, expressway toll station, golf course, ground track field, harbor, overpass, ship, stadium, storage tank, vehicle                                                                                                                                              \\ \bottomrule[1pt]
    \end{tabular}%
    } 
    \caption[Base / Novel class splits]{Base / Novel class splits for the different datasets used
    throughout this thesis. The novel classes in the COCO dataset correspond to
    all classes in Pascal VOC.}
    \label{tab:fsod_class_splits}
    \end{table}

\subsection{Evaluation protocol for Few-Shot Object Detection}
\label{sec:fsod_evaluation}
The common practice in FSC is to randomly sample a support set from the training
split of the whole dataset, adapt the model with it (through fine-tuning or
direct adaptation), and finally, make the predictions and compute the relevant
metrics on the test split. This is repeated many times with different support
sets and the scores are finally averaged to give a robust evaluation of the
generalization capabilities of the models. 

For detection, the same principle should be applied to get a robust assessment
of the models' performance. However, the adaptation of such models is often
quite long compared to classification models. Indeed, detection models are much
larger than classification ones, thus they take more time to adapt to novel
classes. Before going further, we need to distinguish two approaches, on the
one hand, the fine-tuning strategy and on the other hand all other strategies
(\ie metric learning, meta-learning and attention-based methods). The main
distinction is that the latter use the support set during inference whereas
fine-tuning approaches only leverage it during the second phase of training.
\cref{fig:fsod_evaluation_strategies} illustrates the two different approaches
for FSOD model evaluation and exhibits a time estimation for training and
evaluating one model, following the general recommendations of FSC (\ie at least
100 repetitions with various support sets). 

\paragraph*{Evaluation of fine-tuning FSOD approaches}
Repeated evaluation requires fine-tuning the base model (\ie the model after
base training) with various support sets. Fine-tuning FSOD methods can take up
to a few hours and repeated evaluation may take days \footnote{a typical setup
in FSC is to repeat 100 times the adaptation, even if an FSOD model takes only
30 minutes to adapt to the novel classes through fine-tuning, the robust
evaluation would take almost 2 days.}, which is not practical. A reasonable
compromise is to perform a limited number of runs (between 10 and 30), which is
sufficient according to empirical studies in \cite{wang2020frustratingly}. Even
though, robust evaluation is still an intensive process in this setting. 

\paragraph*{Evaluation of other FSOD approaches}
Other methods adapt to novel classes at inference time given a support set.
Therefore, they can be more robustly evaluated (in a reasonable time) than their
fine-tuning counterparts. Adaptation is often fast compared to the fine-tuning
phase and can be more easily repeated. However, detection models that are based
on metric learning or attention still require a fine-tuning phase at least for
the regression branch. A support set must be used for this as
well and its choice influences the performance of the model, even if
adaptation is repeated multiple times after the fine-tuning. Yet this
fine-tuning step is often even more time-consuming than basic fine-tuning
approaches as the models are augmented with costly adaptation modules. In this
case, the common setting is to repeat the adaptation at inference multiple
times after only a single fine-tuning of the model. 
\vspace{-1mm}
These settings provide a sweet spot between evaluation robustness and
computation time. They will be employed in our all experiments unless specified,
both for fine-tuning, metric learning and attention-based approaches. 

\begin{figure}
    \centering
    \includegraphics[width=0.9\textwidth,trim=0 10 0 0,clip]{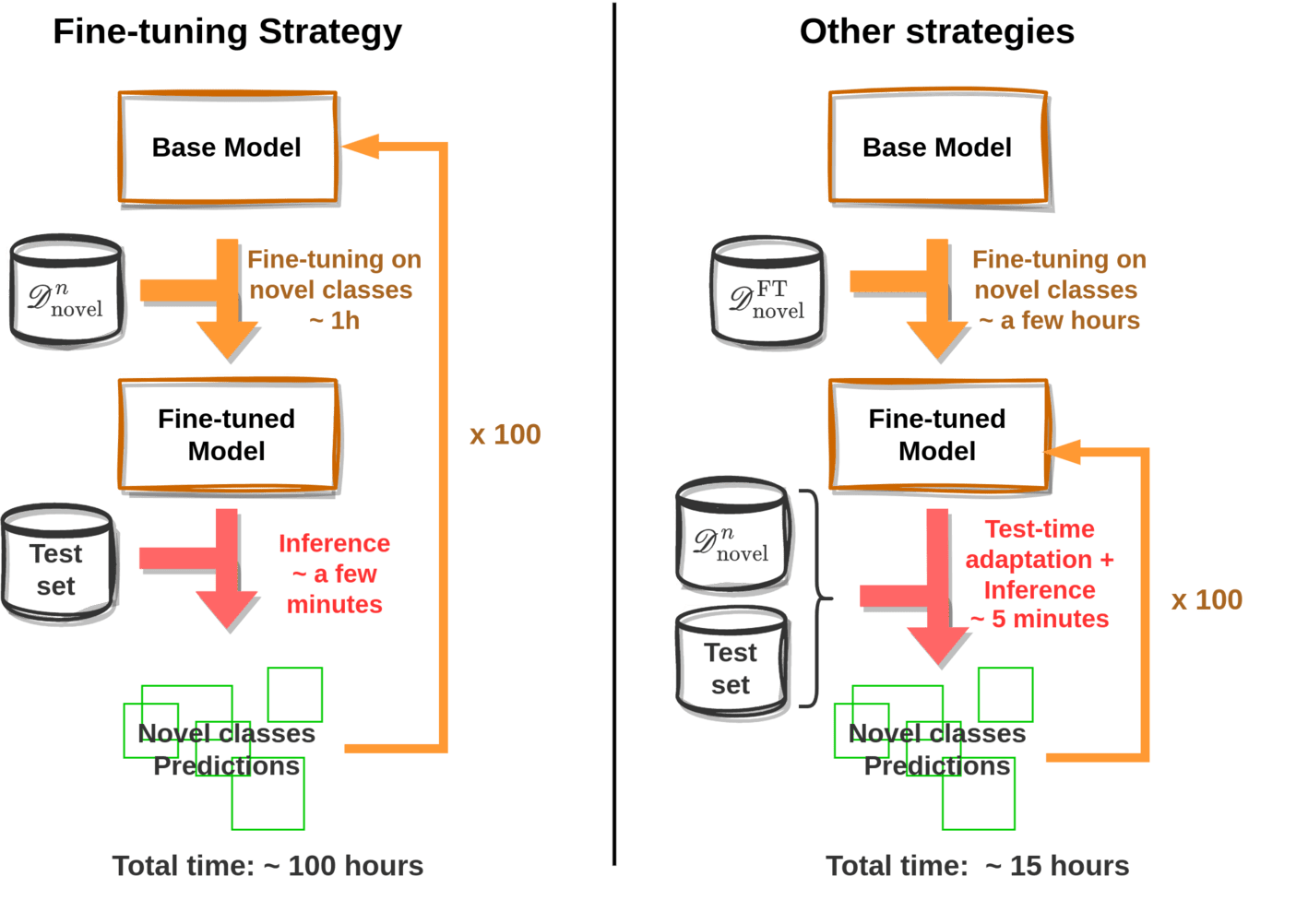}
    \caption[Distinct evaluation processes for fine-tuning and other
    methods]{Illustration of two evaluation processes existing in the
    literature. One for fine-tuning methods (left) and one available for all
    other methods (right).}
    \label{fig:fsod_evaluation_strategies}
    \vspace{-1.5em}
\end{figure}

\vspace{-2mm}
\section{Conclusion}
\vspace{-1em}
In this chapter, we reviewed the FSOD literature. This field is relatively
recent and fastly growing. It has significantly evolved since the beginning of
this project. In 2020, most FSOD works were based on attention-based approaches,
yet fine-tuning techniques are now getting more and more interest. This review
helps to understand the main directions that have already been explored and the
relevant tracks that need to be pursued.

\chapter{Understanding the Challenges of Few-Shot Object Detection}
\label{chap:aerial_diff}

\chapabstract{The detection task becomes extremely challenging when limited
annotated data is available. In this chapter, we explore the reasons behind this
difficulty. In particular, we focus on the case of aerial images for which it is
even harder to apply FSOD techniques. It turns out that small objects are
especially challenging for the FSOD task and are the main source of poor
performance in remote sensing images.} 
\vspace{-6mm}
\textit{
    \begin{itemize}[noitemsep]
        \item[\faFileTextO] P. Le Jeune and A. Mokraoui, "Improving Few-Shot Object Detection through a Performance Analysis on Aerial and Natural Images," 2022 30th European Signal Processing Conference (EUSIPCO), Belgrade, Serbia, 2022, pp. 513-517, doi: 10.23919/EUSIPCO55093.2022.9909878.
        \item[\faFileTextO] P. Le Jeune and A. Mokraoui, "Amélioration de la détection d’objets few-shot à travers une analyse de performances sur des images aériennes et naturelles." GRETSI 2022, XXVIIIème Colloque Francophone de Traitement du Signal et des Images, Nancy, France
    \end{itemize}
}

\PartialToC


In this chapter, we present our first contribution to the FSOD field.
Specifically, this section presents an analysis of the difficulties of going
from a regular to a few-shot data regime for the detection task, especially
for aerial images. 

\vspace{1em}
\section{Distractors in the Few-Shot Data Regime}
\label{sec:fsod_distractors}
\vspace{-1em}

First, changing the set of classes of interest during the training procedure
(\ie between base training and fine-tuning) is problematic. Classes considered
as background can become objects of interest, which goes against the knowledge
acquired during the base training phase. This is embodied by the concept of
\textit{distractors}, introduced in \cite{li2021few}. It refers to examples that
provide wrong supervision to a model during training. We choose to refine this
concept in two categories: \textit{self-distractors} and \textit{co-occurrence
distractors}. Self-distraction occurs when annotated and non-annotated instances
of a class are visible in the same image. The non-annotated instances are called
self-distractors. This can happen when there are annotation mistakes in a
dataset, but it can also happen in the few-shot settings. For instance, if
annotations of support examples are filtered (\eg to keep only one annotation
per support image), all the filtered instances will become self-distractors
during fine-tuning. When fine-tuning on the support examples, the annotated
instances of the novel classes will provide correct supervision to the model.
However, the non-annotated instances will be considered as background and wrong
supervision will be propagated in the model. This explains why it is more
sensible to keep all annotations of the novel classes in the support set even
though it does not fully comply with the original $N$-ways $K$-shots setting. In
the literature, this choice is barely discussed and early works in the field
employ either the strict one annotation per image sampling or the
\textit{self-distractor-free} sampling described above. Using
self-distractor-free sampling often results in improved performance; however, no
analysis was conducted to explain the origins of these gains (either coming from
more examples or thanks to more coherent supervision). In our experiments, both
setups were used as this issue was encountered in the middle of this project.
We will clearly specify what setting is used for all our experiments.

The second type of distraction, \textit{co-occurrence distraction}, happens when
novel class instances are visible in images during base training. Their
annotations have been filtered out, therefore they are considered as background.
Of course, this makes sense during base training as the novel classes are by
definition unknown at this point. However, in this setup, the model is
specifically trained to consider these objects as background, whereas if no
co-occurrences of the novel and base classes were allowed, no background
supervision would be given to novel class instances. This could be achieved by
removing all images containing such co-occurrences from the dataset. However,
this type of distractor is much less frequent than the self-distractors (see
\cref{fig:fsod_cooccurrences}). In addition, these mostly occur during the base
training phase, and even if they provide incorrect supervision, fine-tuning will
rectify it. Therefore, we choose to keep images with co-occurrence distractors
during base training.

\begin{figure}
    \centering
    \includegraphics[width=\textwidth]{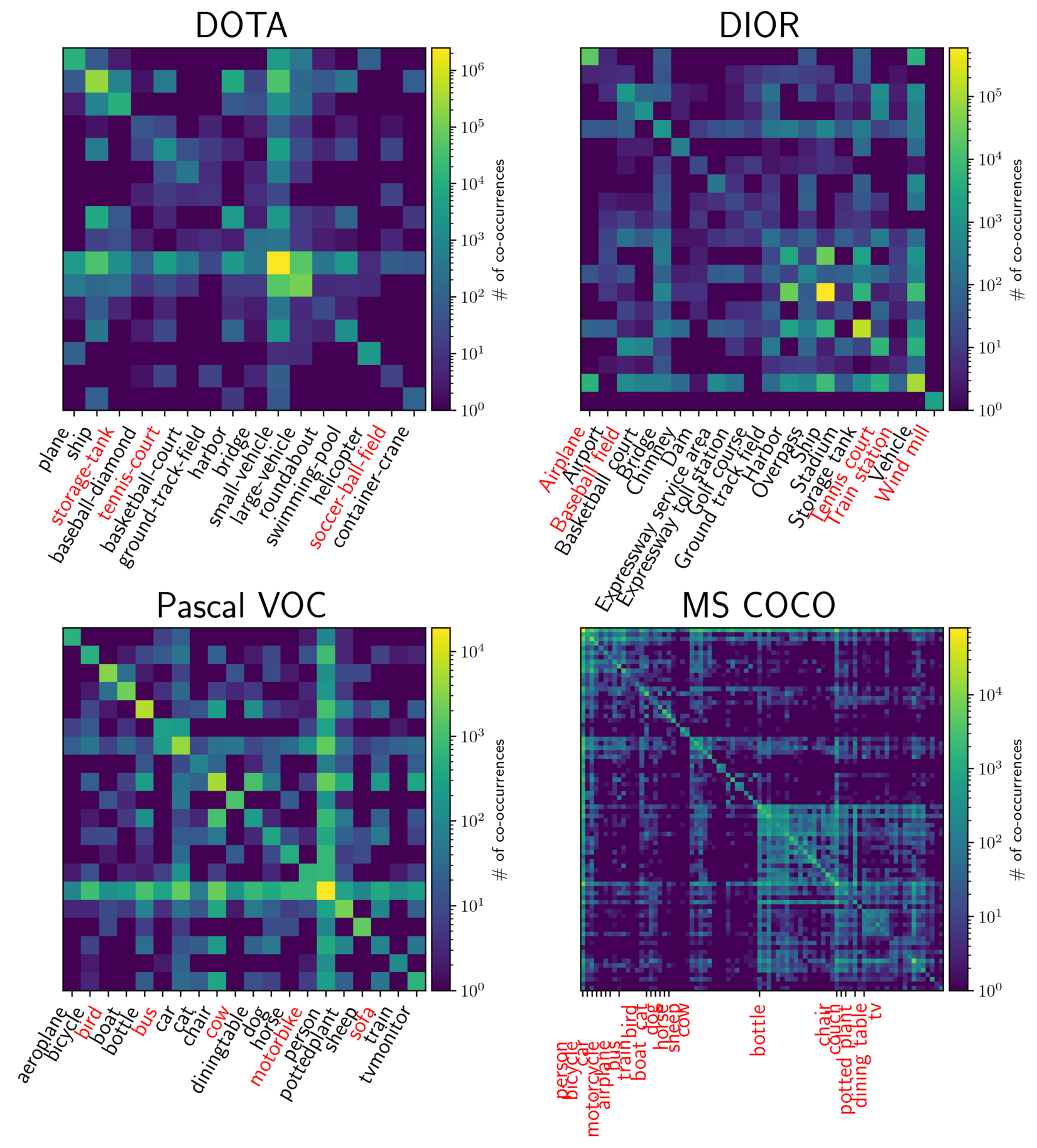}
    \caption[Class co-occurrences in DOTA, DIOR, Pascal VOC and MS COCO]{Co-occurrences between the classes of the four datasets of interest
     DOTA, DIOR, Pascal VOC and MS COCO. Novel classes are highlighted in red.
     For MS COCO only novel class labels are shown for clarity.}
    \label{fig:fsod_cooccurrences}
\end{figure}

\vspace{-0.5em}
\section{The Increased Challenge of Aerial Images}
\vspace{-1em}
The difficulties described in the previous section are not specific to any kind
of image. However, it appears from the scarce literature and our experiments
that applying FSOD on aerial images is much more difficult than on natural ones. 

At the beginning of this PhD, only very few works addressed FSOD in aerial
images. Among those, \textit{Few-Shot Object Detection via Feature Re-Weighting}
(FRW) \cite{xiao2020few} and  \textit{Few-Shot Object Detection With
Self-Adaptive Attention Network for Remote Sensing Images} (WSAAN)
\cite{deng2020few} were the most popular. These two works were not evaluated on
the same datasets, preventing any useful comparison. In addition, many
architectural choices differ from one another, for instance, the underlying
detection frameworks and the backbones. Therefore, we choose to implement these
methods within a single framework, preventing most architectural discrepancies.
Our proposed framework will be described thoroughly in \cref{chap:aaf}.   
We also re-implemented Dual AwareNess Attention (DANA) \cite{chen2021should} as
it was one of the best-performing methods on COCO dataset at that time. We
analyze here the behaviors of these three FSOD techniques both on natural and
aerial images. The main idea is to compare the performance of the three methods
in regular and few-shot data regimes. The regular data regime corresponds to the
vanilla detector (\ie without any modification for the few-shot setting) and
with full access to the novel class annotations in the dataset (\ie no
annotation filtering). In our re-implementation of FRW, WSAAN and DANA, the
underlying detector is FCOS \cite{tian2019fcos}. Thus, the regular baseline
is an FCOS detector trained on the full datasets. To conduct this experiment, we
only select DOTA, DIOR and Pascal VOC as they have roughly the same number of
classes. COCO however has 4 times more classes which brings
additional complexities. The performance results of the regular baseline are
available in \cref{tab:fsod_regular_baseline}. Specifically, FCOS is trained on
each dataset with full access to the annotations for both base and novel
classes. Then the mAP is computed on each class individually and averaged on
base and novel classes separately. This gives an overview of the performance of
the model respectively on base and novel classes in a regular data regime.

\begin{table}[]
    \centering
    \resizebox{0.85\textwidth}{!}{%
    \begin{tabular}{@{\hspace{3mm}}cclcclcc@{\hspace{3mm}}}
    \toprule[1pt]
    \multicolumn{2}{c}{\textbf{DOTA}} &  & \multicolumn{2}{c}{\textbf{DIOR}} &  & \multicolumn{2}{c}{\textbf{Pascal VOC}} \\ \midrule
    Base Classes mAP   & Novel mAP   &  & Base mAP   & Novel mAP   &  & Base mAP      & Novel mAP      \\
    60.87      & 69.69       &  & 72.82      & 81.48       &  & 65.47         & 68.02          \\ \bottomrule[1pt]
    \end{tabular}%
    } \caption[Regular baseline performance]{Regular baseline performance (mAP
    with a 0.5 IoU threshold) on DOTA, DIOR and Pascal VOC datasets (\ie trained
    with all annotations). The baseline model is FCOS \cite{tian2019fcos}, 
    trained on all classes (base and novel) and with all available annotations
    in each dataset. Then the mAP is computed on base and novel classes
    separately.}
    \vspace{-1em}
    \label{tab:fsod_regular_baseline}
\end{table}

It seems tempting here to extrapolate the FSOD performance on DOTA and DIOR from
the performance on Pascal VOC. The regular baseline (FCOS) achieves similar
performance on these two datasets, which contain the same number of classes and
roughly the same number of images. Thus, one could have expected close FSOD
performance on these datasets. This is quite different from the actual results
reported in \cref{tab:fsod_few_shot_baselines}. The FSOD performance on DOTA
and DIOR is significantly lower compared with the results on Pascal VOC. To
better visualize this finding, \cref{fig:fsod_baseline_res} represents the
few-shot performance as dark bars while regular baseline performance as lighter
rectangles. The height of the rectangle is set as the mAP on either the base or
novel classes (in blue and red respectively). This clearly illustrates the
different behaviors of FSOD methods applied on aerial or natural images.

\begin{table}[]
    \centering
    \resizebox{\textwidth}{!}{%
    \begin{tabular}{@{}cccccccccccccccccccccccccccccc@{}}
        \toprule[1pt]
        \multicolumn{1}{l}{} & \multicolumn{9}{c}{\textbf{DOTA}}                                                                                                                                                                                        &           &\multicolumn{9}{c}{\textbf{DIOR}}         &         & \multicolumn{9}{c}{\textbf{Pascal VOC}}                                                                                                                                                                                                                                                                                                                                                                                               \\ \cmidrule(lr){2-9} \cmidrule(lr){12-19} \cmidrule(lr){21-28}
                             & \multicolumn{2}{c}{\textbf{FRW}}                      & \textbf{} & \multicolumn{2}{c}{\textbf{WSAAN}}                                        & \textbf{} & \multicolumn{2}{c}{\textbf{DANA}}                                       & \textbf{}        &                      &\multicolumn{2}{c}{\textbf{FRW}}                      & \textbf{} & \multicolumn{2}{c}{\textbf{WSAAN}}                                 & \textbf{} & \multicolumn{2}{c}{\textbf{DANA}}                       &                      &\multicolumn{2}{c}{\textbf{FRW}}                      & \textbf{} & \multicolumn{2}{c}{\textbf{WSAAN}}                                 & \textbf{} & \multicolumn{2}{c}{\textbf{DANA}}                                         \\ \midrule
        $\boldsymbol{K}$     & \underline{Base} & \underline{Novel}                  &           & \underline{Base}                     & \underline{Novel}                  &           & \underline{Base}                   & \underline{Novel}                  &                  &                      & \underline{Base} & \underline{Novel}                 &           & \underline{Base}                    & \underline{Novel}&           & \underline{Base}                   & \underline{Novel}              &                      & \underline{Base} & \underline{Novel}                 &           & \underline{Base}                    & \underline{Novel}&           & \underline{Base}                   & \underline{Novel}                                \\
        1                    & 47.24             & \textcolor{red}{\textbf{13.35}}     &           & 45.55                                 & 12.19                               &           & \textcolor{blue}{\textbf{49.78}}    & \textcolor{red}{\textbf{12.52}}     & \textbf{}        &                      & 56.67             & 16.92                              &           & 56.41                                & 15.48             &           & \textcolor{blue}{\textbf{58.78}}    & \textcolor{red}{\textbf{20.64}} &                      & 59.92             & 28.22                              &           & 61.70                                & 30.94             &           & \textcolor{blue}{\textbf{62.58}}    & \textcolor{red}{\textbf{32.82}}                    \\
        3                    & 46.50             & \textcolor{red}{\textbf{25.32}}     &           & 44.18                                 & 24.42                               &           & \textcolor{blue}{\textbf{49.67}}    & 20.70                               &                  &                      & 58.05             & 25.08                              &           & 51.72                                & 13.84             &           & \textcolor{blue}{\textbf{59.14}}    & \textcolor{red}{\textbf{27.26}} &                      & 63.34             & 31.12                              &           & 63.52                                & \textcolor{red}{\textbf{42.19}}             &           & \textcolor{blue}{\textbf{64.18}}    & 33.95                   \\
        5                    & 48.60             & 29.57                               &           & 47.56                                 & \textcolor{red}{\textbf{31.44}}     &           & \textcolor{blue}{\textbf{53.49}}    & 24.96                               &                  &                      & 60.75             & 32.58                              &           & 60.79                                & 30.32             &           & \textcolor{blue}{\textbf{62.12}}    & \textcolor{red}{\textbf{34.16}} &                      & 64.35             & \textcolor{red}{\textbf{46.33}}                             &           & 64.68                                & 46.16             &           & \textcolor{blue}{\textbf{65.20}}    & 42.59                     \\
        10                   & 48.52             & \textcolor{red}{\textbf{37.10}}     &           & 46.72                                 & 35.12                               &           & \textcolor{blue}{\textbf{53.25}}    & 34.39                               &                  &                      & 61.47             & \textcolor{red}{\textbf{35.56}}    &           & \textcolor{blue}{\textbf{61.88}}     & 33.41             &           & \textcolor{blue}{\textbf{62.49}}    & \textcolor{red}{\textbf{36.43}} &                      & 63.16             & 48.71                              &           & \textcolor{blue}{\textbf{65.27}}     & \textcolor{red}{\textbf{51.70}}             &           & 65.03    & 50.30                    \\ \bottomrule[1pt]
        \end{tabular}%
    } \caption[Performance comparison between FRW, WSAAN and DANA on DOTA, DIOR
    and Pascal VOC datasets]{Comparison of $\text{mAP}_{0.5}$ of several methods
    on DOTA, DIOR and Pascal VOC datasets. For each method, mAP is reported for
    different numbers of shots $K \in \{1, 3, 5, 10\}$ and separately for base
    and novel classes. Blue and red values represent the best performance on
    base and novel classes respectively, for each dataset.}
    \label{tab:fsod_few_shot_baselines}
    \end{table}

\begin{figure}[t]
    \centering
    \includegraphics[width=0.9\textwidth]{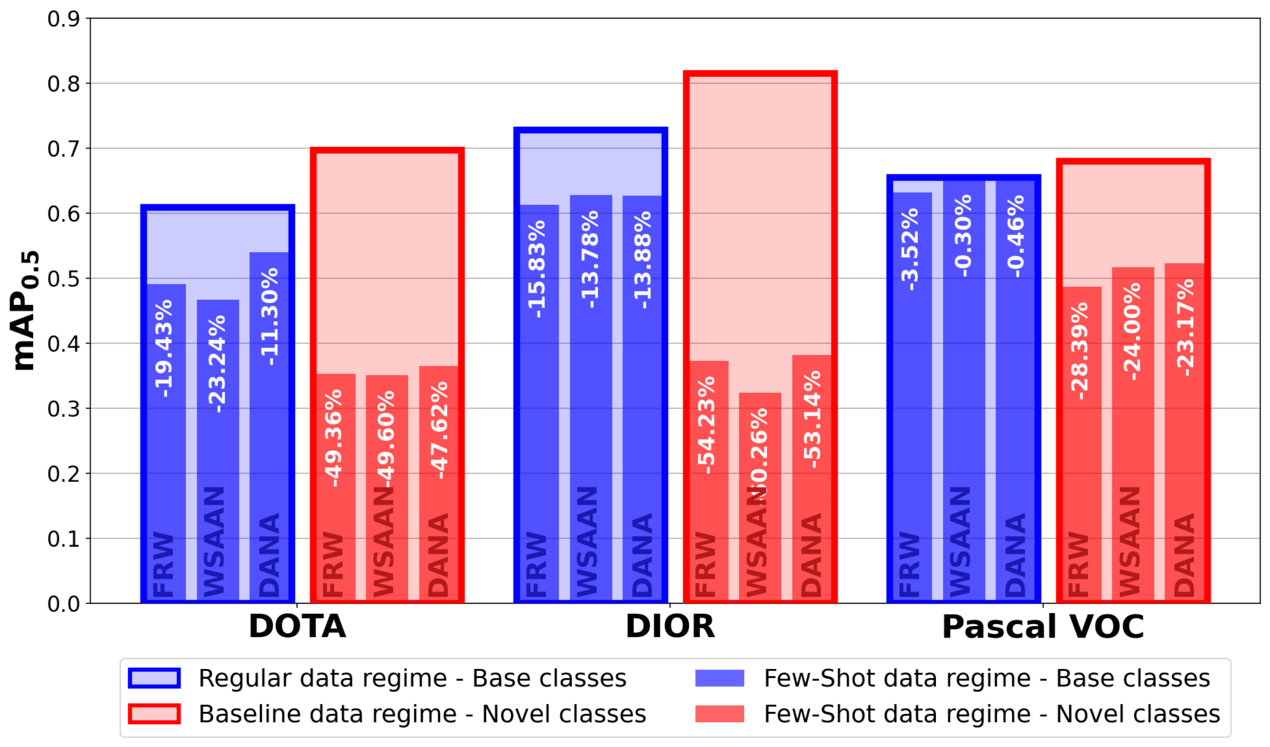}
    \caption[Baseline comparison between FRW, WSAAN and DANA on DOTA, DIOR
    and Pascal VOC datasets]{Performance comparison between FCOS trained in a regular data
    regime versus three few-shot baselines, FRW, WSAAN and DANA (all based on FCOS
    as well) on three datasets: DOTA, DIOR and Pascal VOC.}
    \label{fig:fsod_baseline_res}
\end{figure}

It is generally irrelevant to compare the performance of a method from one
dataset to another, especially with images of different natures. Each dataset
has its own characteristics (resolution, intra-class variety, color range, etc.)
and therefore a given model will not perform equally well on two distinct
datasets according to a pre-defined performance metric. Hence, we cannot
compare the absolute performance of a FSOD method on Pascal VOC and DOTA and the
previous extrapolation is not valid. Nevertheless, there is a pattern: FSOD
methods work consistently better on natural images compared to aerial images. To
understand this phenomenon, we need a way to fairly compare the FSOD performance
across several datasets. To this end, we propose to look at the relative
performance of the FSOD methods against the regular baseline (i.e. FCOS in
our case) using the following metric: 
\begin{equation}
    \label{eq:rmap}
    \text{RmAP} =
            \frac{\text{mAP}_{\text{FSOD}} -\text{mAP}_{\text{Baseline}}}{\text{mAP}_{\text{Baseline}}}.
\end{equation}

RmAP assesses how well a FSOD method is performing on different datasets
compared with the regular detection performance. Hence, it represents how much
performance is lost when switching from the regular to the few-shot regime. This
is exactly what is illustrated in \cref{fig:fsod_baseline_res}, white
percentages are $\text{RmAP}$ values. $\text{RmAP}$ is significantly lower on
DOTA and DIOR compared to Pascal VOC, both for base and novel classes. This way,
we can quantitatively confirm the intuition emerging from
\cref{tab:fsod_few_shot_baselines}: FSOD works better on natural images.

We hypothesize that this performance gap is mainly due to differences in the
object sizes within the datasets. In aerial images, objects are much smaller on
average. This is already an issue for object detection: small objects are
challenging to find. The paradigm of current vision models is to have deep
feature representations with increasing fields of view. The Field of View (FoV)
of a specific layer is the area in the input image that influences the value of
one location in the feature map of that layer. In deeper layers, the FoV is
often quite large compared to small objects' size and object features are
diluted with their irrelevant and noisy surroundings. Thus, it reduces the
activation strength at the object location, and the object can easily be missed.
Feature Pyramidal Networks \cite{lin2017feature} and various other tricks were
introduced to solve this issue, as discussed in \cref{sec:od_rsi}. However, this
problem is largely amplified for FSOD. It is still difficult to detect small
objects, but in addition, they are poor examples for adapting the model (either
through fine-tuning or direct adaptation).

\begin{figure}[h]
    \centering
    \includegraphics[width=0.95\textwidth]{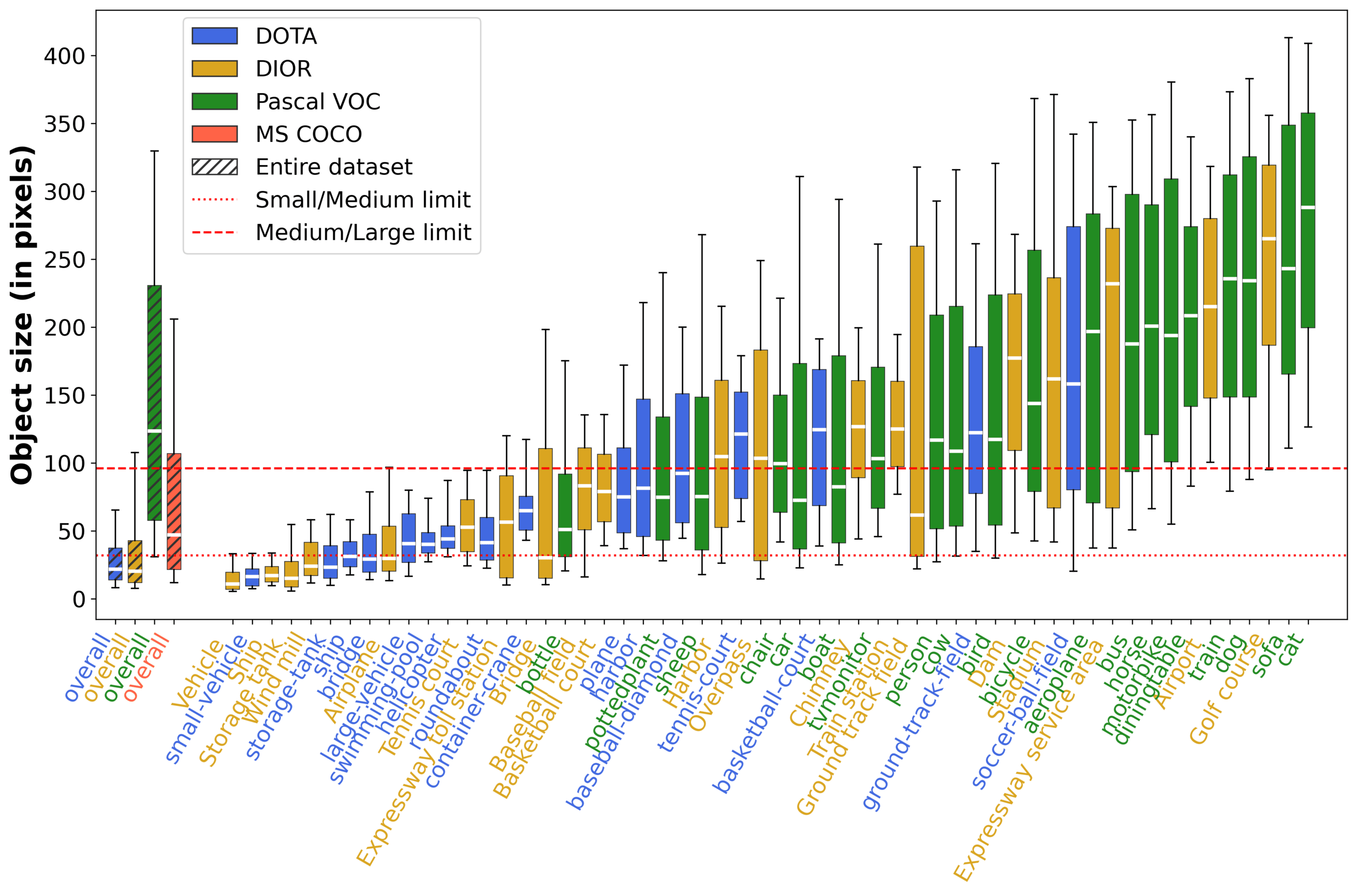}
    \caption[Per class object size analysis on DOTA, DIOR and Pascal VOC.]{Box
    plot of objects size in DOTA, DIOR and Pascal VOC and MS COCO. On the left
    side, boxes represent the overall size distribution in each dataset. On the
    right side, the distributions are split by class and ordered by average
    size. As MS COCO contains 80 classes, we choose not to include the per class
    box plots for it in this plot.}
    \label{fig:obj_sizes}
\end{figure}

To support this hypothesis, we first conduct a brief size analysis of the four
datasets DOTA, DIOR, Pascal VOC and MS COCO (see \cref{fig:obj_sizes}). Aerial
datasets contain far smaller objects than natural ones. Plus, in aerial
datasets, the size of objects in different classes differs a lot. Some classes
contain only small objects, while others only large objects. In Pascal VOC, this
class' size variance is limited. We argue that it is more difficult for the
model to extract relevant information from small support examples but also to
learn more diverse features to deal with greater objects' size variance.
Incidentally, this partly explains the greater difficulty of MS COCO.
\vspace{-1mm}
To support this claim, we conduct a per-class performance analysis on DOTA, DIOR
and Pascal VOC. The results of this comparison are available in
\cref{fig:perf_by_class}. In this figure, the performance is reported per class
against the average size of the class. The first row reports absolute mAP values
(with 0.5 IoU threshold) both for FRW and FCOS (baseline). In the second row,
the mAP gap between the FRW and the baseline is plotted against the objects'
size. We did not report RmAP values for the sake of visualization. RmAP can take
large values (e.g. when the regular baseline mAP is low) and this squeezes the
interesting part of the plot in a narrow band around 0. Larger objects are
easier to detect. It is true in both data regimes, but this trend is reinforced
in the few-shot regime (in the first row, the blue trend lines are steeper than
the black ones). This is observed for base classes but not always for novel
classes, probably because the trends on novel classes are not reliable due to
the limited number of points. \cref{fig:fsod_rmap} shows a more reliable trend
for novel classes when the results from the three datasets are aggregated.
Finally, the few-shot methods, which leverage support information to condition
the detection can surpass the baseline in some cases. All three methods here are
attention-based, and therefore, benefit from having support examples available
during inference to condition the detection. This would not be the case with
fine-tuning approaches. However, this seems advantageous only when the objects
are large. On the contrary, when the objects are small, the performance is
degraded. It confirms that small objects are poor examples to condition the
detection on. For novel classes; however, the performance is always below the
baseline, even if the gap shrinks with larger objects. This is expected as the
network only received weak supervision for these classes. 
\vspace{-1mm}
This comparative analysis confirms that detecting small objects is a very
difficult task in the few-shot regime. It is hard to extract useful information
from small support objects. Even worse, this information can be detrimental for
the detection. Existing FSOD methods are not designed to deal with small
objects, hence the application of these methods on aerial images does not yield
satisfactory results. It is therefore crucial to develop FSOD techniques that
target specifically small objects. Incidentally, we will address this point in
\cref{sec:xscale} and in \cref{chap:siou_metric}.    

\vspace{-1em}
\section{Conclusion}
\vspace{-1em}
In this chapter, we presented our first contribution to the FSOD field with an
analysis of the challenges raised by the few-shot regime for the detection task.
These difficulties are reinforced when FSOD is applied to aerial images as they
contain smaller objects. This gives a clear direction for this PhD project:
improving the handling of small objects in FSOD methods. To this end, we
dedicate \cref{sec:xscale} and the entire \cref{part:iou} of this thesis.

\begin{figure}[]
    \centering
    \includegraphics[width=\textwidth]{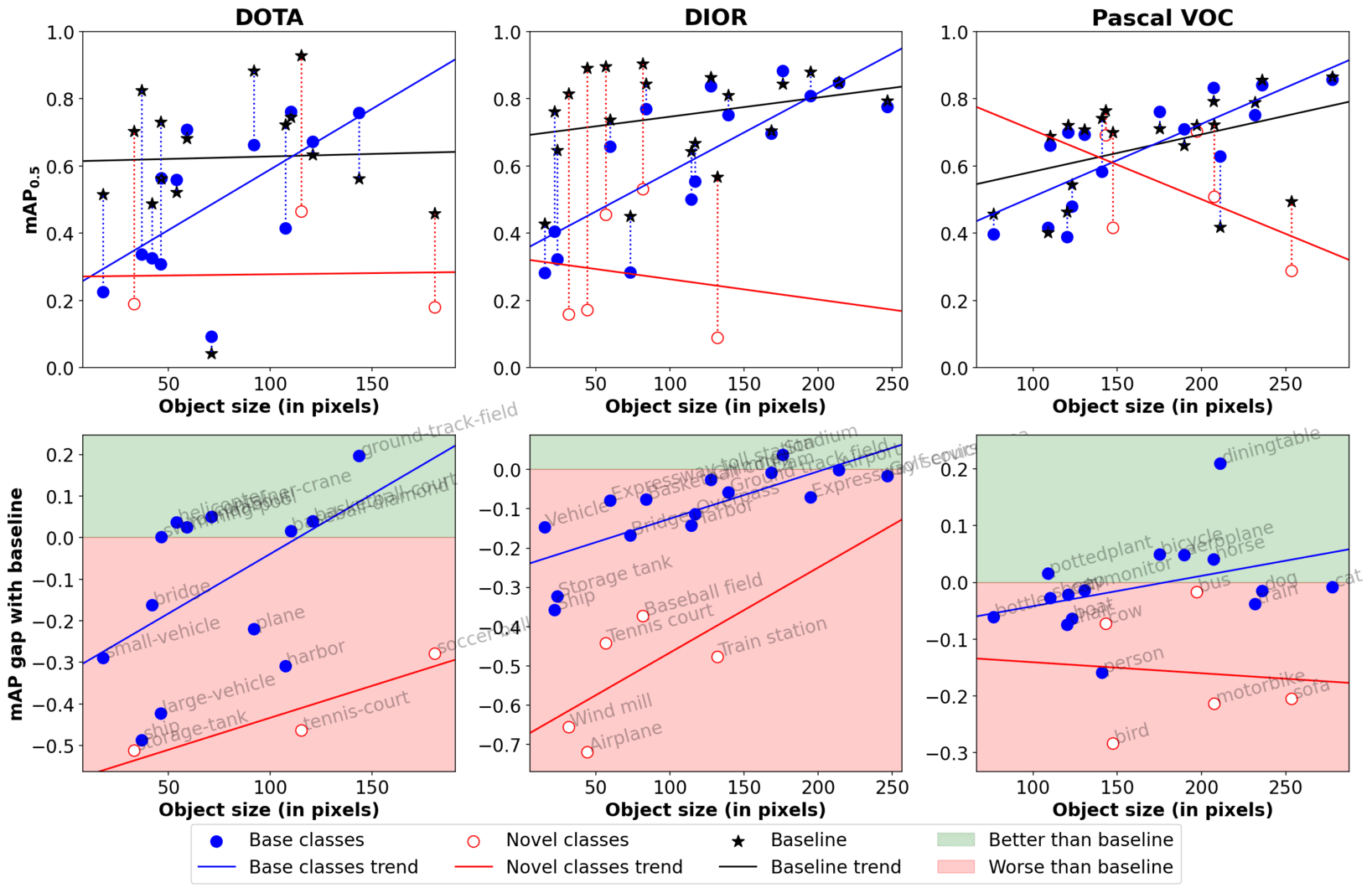}
    \caption[Per class performance analysis for FRW on DOTA, DIOR and Pascal
     VOC against object size]{Performance comparison between FRW baseline -- with 10 shots -- (blue
     and red dots) and regular baseline (black stars) on three different
     datasets: DOTA, DIOR and Pascal VOC. \textbf{(top)} Mean average
     performance of the two methods plotted per class against average object
     size. \textbf{(bottom)} gap between FRW baseline and regular baseline, per
     class. Positive values indicate better performance than the regular
     baseline.}
    \label{fig:perf_by_class}
\end{figure}

\begin{figure}[]
    \centering
    \includegraphics[width=\textwidth]{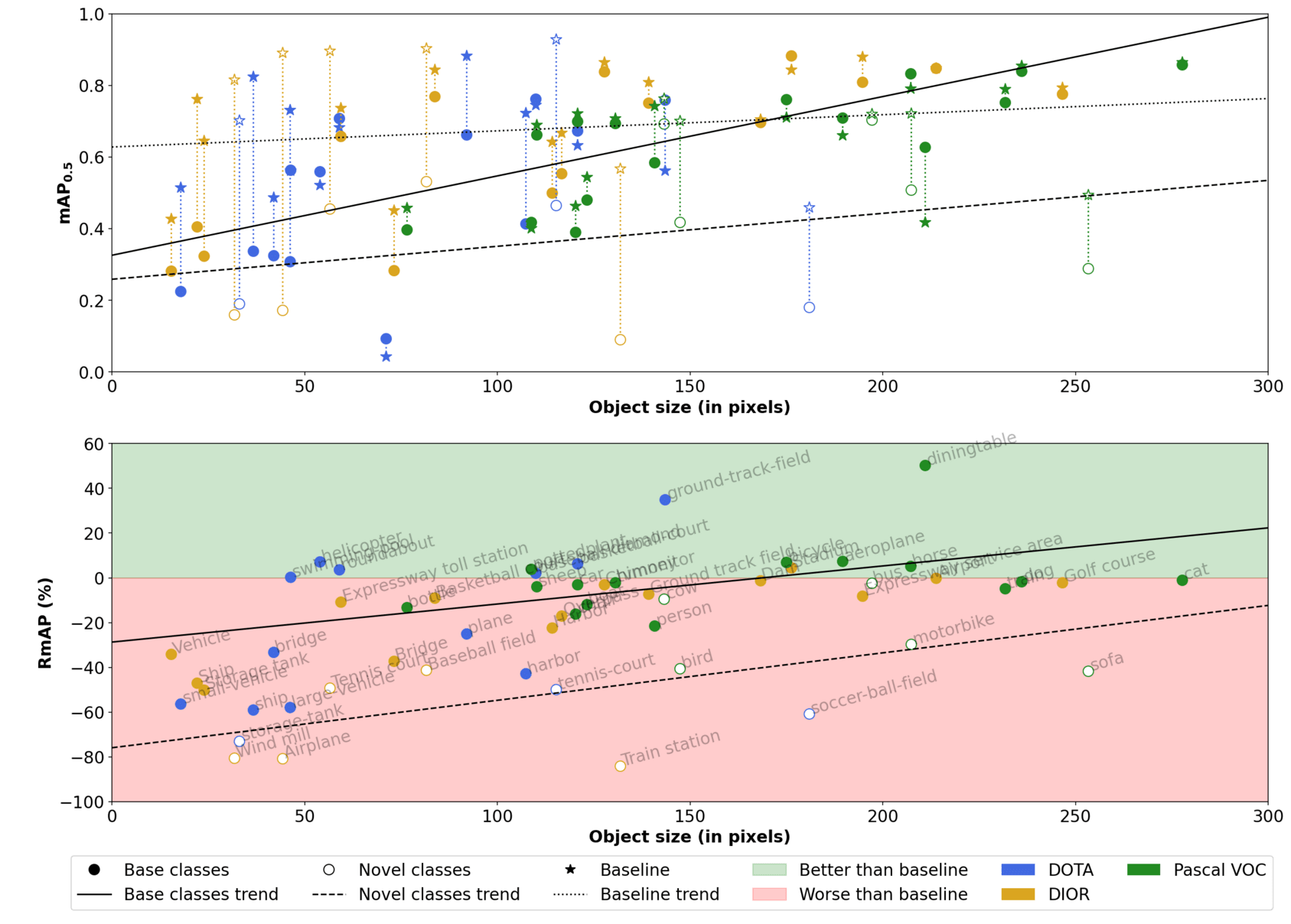}
    \caption[Per class performance analysis for FRW on all datasets]{Comparison
    of FRW (FSOD) and FCOS (baseline) performance against object size on the
    three datasets DOTA, DIOR and Pascal VOC together. \textbf{(top)} absolute
    $\text{mAP}_{0.5}$ values. \textbf{(bottom)} RmAP computed against the
    regular baseline. The RmAP plot has been cropped for visualization reasons.
    Only the class \textit{container-crane} from DOTA dataset is not visible
    (with a RmAP of 150\%).}
    \label{fig:fsod_rmap}
\end{figure}

\part{Improving Few-Shot Object Detection through Various Approaches}%
\label{part:contributions_fsod}

\makeatletter
\let\savedchap\@makechapterhead
\def\@makechapterhead{\vspace*{-1.5cm}\savedchap}
\chapter{Experience Feedback about Metric Learning for FSOD}
\let\@makechapterhead\savedchap
\makeatletter
\label{chap:prcnn}

\vspace{-1em}
\chapabstract{Prototypical Faster R-CNN (PFRCNN) is a novel approach for FSOD
based on metric learning. It embeds prototypical networks inside the Faster
R-CNN detection framework, specifically in place of the classification layers in
the RPN and the detection head. PFRCNN is applied to synthetic images generated
from the MNIST dataset and to real aerial images with DOTA dataset. The
detection performance of PFRCNN is slightly disappointing but sets a first
baseline on DOTA. However, the experiments conducted with PFRCNN provide
relevant information about the design choices for FSOD approaches.}
\vspace{-0.5em}
\textit{
\begin{itemize}[noitemsep]
    \item[\faFileTextO] P. L. Jeune, M. Lebbah, A. Mokraoui and H. Azzag, "Experience feedback using Representation Learning for Few-Shot Object Detection on Aerial Images," 2021 20th IEEE International Conference on Machine Learning and Applications (ICMLA), Pasadena, CA, USA, 2021, pp. 662-667, doi: 10.1109/ICMLA52953.2021.00110.
\end{itemize}
}

\PartialToC
\pagebreak

As a first step into the Few-Shot Object Detection field, we proposed a naive
approach to solve the detection task in the few-shot regime. To give some
context, at the beginning of this project, FSOD was a very recent domain and
very few articles tackled this challenging task, especially applied to aerial
images. Therefore, we took inspiration from the Object Detection and Few-Shot
Classification literature embodied respectively by Faster R-CNN
\cite{ren2015faster} and Prototypical Networks \cite{snell2017prototypical}.
This chapter presents Prototypical Faster R-CNN, our first attempt at solving
FSOD. We begin by presenting the motivation behind this contribution and its
main principle. Then, the training procedure and several tricks are proposed to
improve training stability and detection quality. Finally, Prototypical Faster
R-CNN is applied to synthetic and aerial images to assess its generalization
capabilities and understand its limitations. 

\section{Motivation and Principle}
\vspace{-1em}
In 2020, most of the FSOD literature was focused on attention-based approaches
(see \cref{tab:fsod_comparison}); however, the simplicity and success of the
metric learning classification models was tempting. Thus, we proposed
Prototypical Faster R-CNN (PFRCNN), an extension of Faster R-CNN based on metric
learning. The key idea is to replace the classification layers from Faster R-CNN
(\ie in the Region Proposal Network (RPN) and in the Classification head) with
prototypical networks. It is similar to RepMet \cite{karlinsky2019repmet} that
leverages class-representative vectors in the classification head. However,
there are two major differences with PFRCNN. First, RepMet only replaces the
classification layer in the second stage of Faster R-CNN, not in the RPN. Hence,
the adaptation to novel classes is only done in the second stage. Even if the
RPN is presented as a class-agnostic detector, it specializes in the classes
seen during training. As only base classes are annotated during the first phase
of training, objects from novel classes will be filtered out by the RPN, leaving
no chance for the second stage to detect them. Even if it is trained to have a
high recall, the RPN will mostly generate proposals on base classes, which is
harmful in a few-shot regime. Second, RepMet learns the class-generative vectors
from fine-tuning on the few available examples of the novel classes. Instead, a
prototypical network computes its \textit{prototypes} directly from the few
available examples. Finally, Prototypical Networks can adapt to novel classes
without any fine-tuning. Hopefully, this property would transfer to Faster R-CNN
by replacing its classification layer with such malleable modules. For COSE's
application, this would be ideal as the detection model could adapt “on the
fly“ at a low cost. 

\section{Prototypical Faster R-CNN for FSOD}
\label{sec:proto_faster}
\vspace{-1em}
Before explaining in detail how the prototypical networks can be embedded into
Faster R-CNN, let us define a few notations and detail the functioning of Faster
R-CNN. The backbone, RPN and detection head are respectively denoted as $f$,
$g$, and $h$. The backbone extracts feature $F_q$ from the input -- or query --
image $I_q$:
\begin{equation}
    f(I_q)= F_q.
\end{equation}

The backbone extracts features at multiple scales using an FPN, but for
simplicity, we regroup all these features into one notation: $F_q$. The backbone
is a ResNet-50, the FPN extracts features from 3 different levels with
respective strides 8, 16, and 32. The RPN takes $F_q$ as input and computes both
proposals boxes $\bar{b}_i$ and objectness scores $o_i$ for all locations in the
feature maps:
\begin{equation}
    g(F_q) = \left\{(\bar{b}_i, o_i)\right\}_{i=1}^M,
\end{equation}

where $M$ is the number of generated boxes. It changes with the
number of anchor boxes defined per location, in our case, it is set to 3. Hence,
$M$ is three times the number of locations in the feature map. Then, the top
1000 boxes with the highest objectness scores are selected to extract the proposals
features $\xi_i$ with the RoI Align layer:
\begin{equation}
    \text{RoIAlign}(F_q, \bar{b}_i) = \xi_i.
\end{equation}

Finally, the detection head $h$ outputs classification scores for each proposal
from its features and refines its box coordinates:
\begin{equation}
    h(\xi_i) = \hat{\boldsymbol{\mathrm{y}}}_i = (\hat{b_i}, \hat{l}_i),
\end{equation}

where $l_i \in [0,1]^{|\mathcal{C}| + 1}$ is a vector of classification
scores. There is one more element in $l_i$ than in $\mathcal{C}$ because Faster
R-CNN deals with background as a class. 

\subsection{Extending Faster R-CNN with Prototypical Networks}
To replace the classification layer in Faster R-CNN by prototypical networks, we
propose to change the output dimension of the last layer in the classification
branches of both the RPN and the head. That way, instead of producing a
classification (or objectness) score per box, these networks output embedding
vectors. Each vector represents the information contained inside the
corresponding box. We denote these embedding vectors of the RPN and the
classification head $z_i^{\text{RPN}}$ and $z_i^{\text{head}}$ respectively.
Their dimension is set to 128 ($z_i^{\cdot} \in \mathbb{R}^{128}$) and is kept
fixed in all our experiments. Hence, the outputs of the RPN and the detection
head become: 
\begin{align}
    g(F_q) &= \left\{(\bar{b}_i, z_i^{\text{RPN}})\right\}, \\
    h(\xi_i) &= (\hat{b_i}, z_i^{\text{head}}).
\end{align}

Then, the objectness and classification scores for each proposal are computed
with prototypical networks based on class prototypes computed from support
examples. Prototypes are computed from the support set $\left\{(I_k^c,
b_k^c)\right\}_{\substack{1 \leq k\leq K \\ c \in \mathcal{C}_{\text{novel}}}}$.
Specifically, each support image is fed into the backbone to extract its
features $F_k^c$ and then the example features are extracted with RoI Align:
\begin{align}
    z_{k,c}^{\text{RPN}} &= g(F_k^c), \\
    \Phi_k^c &= \text{RoIAlign}(z_{k,c}^{\text{RPN}}, b_k^c), \\
    z_{k,c}^{\text{head}} &= h(F_k^c), \\
    \Psi_k^c &= \text{RoIAlign}(z_{k,c}^{\text{head}}, b_k^c).
\end{align}

This gives RPN features and classification features for each support image,
denoted $\Phi_k^c$ and $\Psi_k^c$ respectively. Note a slight abuse of notation
here, when only the embedding part of $g$ and $h$ is used to project the
features extracted by the backbone (\ie not the regression part). When multiple
examples are available for a class (\ie $K\geq 1$), their embeddings are
averaged to get one prototype per class:
\begin{align}
    \Phi^c &= \frac{1}{K} \sum\limits_{k=1}^{K} \Phi_k^c,\\
    \Psi^c &= \frac{1}{K} \sum\limits_{k=1}^{K} \Psi_k^c.
\end{align}

\begin{figure}
    \centering
    \includegraphics[width=\textwidth]{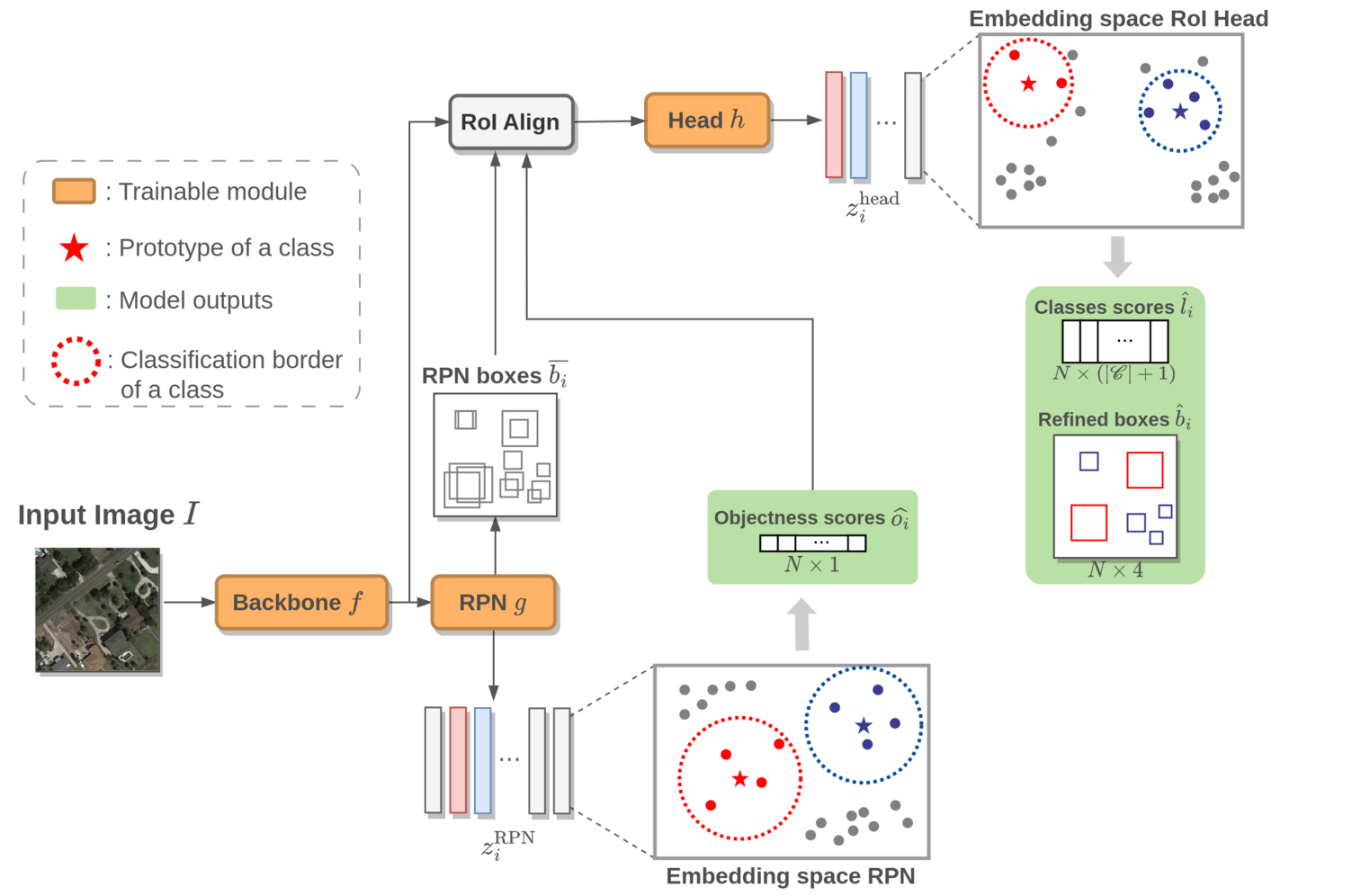}
    \caption[Prototypical Faster R-CNN architecture]{Illustration of the
    architecture of Prototypical Faster R-CNN.}
    \label{fig:proto_faster_rcnn}
\end{figure}

For each proposal, we compute the classification score for class $c$ as the
likelihood of the region in the input image representing an object of class $c$.
To do so, we suppose that the class distributions over the embedding space are
Gaussian distributions centered on the class prototypes. Hence, the
classification score of the proposal $i$ for class $c$ is:  
\begin{equation}
    \hat{l}_i^c = p(z_i^{\text{head}}|c) = \exp\Big(\frac{-\mathrm{d}(z_i^{\text{head}}, \Psi^c)^2}{2\sigma^2}\Big),
\end{equation}

where $\mathrm{d}$ is a distance measure over the representation space, in our
experiments, $\mathrm{d}$ is the Euclidean distance. Note that in our case, the
embeddings are normalized after their computation, therefore the Euclidean
distance is equivalent to the Cosine Similarity. $\sigma$ is the standard
deviation of the distribution and is set to $0.5$ in our experiments. In Faster
R-CNN, the background is considered as a class as well, the corresponding score
can be derived from the other class scores as follows:
\begin{equation}
    \hat{l}_i^{\varnothing} = p(z_i^{\text{head}}|\varnothing) = 1 - \max\limits_{c \in \mathcal{C}} \hat{l}_i^c, 
\end{equation}

where $\varnothing$ denotes the background class. 

In the RPN the objectness computation is very similar to the classification
score in the head. However, only two classes are considered: foreground and
background. The foreground class is seen as a mixture of Gaussians (\ie a
mixture of all foreground classes) and is approximated as the maximum score
among all classes for stability reasons:
\begin{equation}
    \hat{o}_i = \max\limits_{c \in \mathcal{C}} \hat{l}_i^c. 
\end{equation}

These modifications make Faster R-CNN able to adapt to novel classes. Computing
prototypes for novel classes allows direct adaptation of the whole detection
model and not simply the detection head as in RepMet. However, with these
changes, the model also requires a different training scheme to ensure that the
prototypes are properly leveraged and classes are not only memorized.

\subsection{Training Procedure}
Before presenting the changes with the Faster R-CNN training procedure, we present
here what remains unchanged: the loss functions and the example selection.
Faster R-CNN is trained using four distinct loss functions, two for the RPN and
two for the detection head:
\begin{align}
    & \mathcal{L}_{reg}^{\text{RPN}}(b_i^{\text{RPN}}, \bar{b}_i^{\text{RPN}}) &  & = \text{SmoothL1Loss}(b_i^{\text{RPN}}, \bar{b}_i^{\text{RPN}}), \\
    & \mathcal{L}_{obj}^{\text{RPN}}(o_i, \hat{o}_i)                       &  & = \hat{o}_i \log(o_i) + (1-\hat{o}_i) \log (1-o_i),          \\
    & \mathcal{L}_{reg}^{\text{head}}(b_i^{\text{head}}, \hat{b}_i^{\text{head}}) &  & = \text{SmoothL1Loss}(b_j^{\text{head}}, \hat{b}_j^{\text{head}}), \\
    & \mathcal{L}_{cls}^{\text{head}}(c_i, \hat{c}_i)                       &  & = - \log(l_i^c),
 \end{align}

where $b_i^{\text{RPN}}$ and $o_i$ are the ground truth targets for the
regression and classification branches of the RPN. Similarly,
$b_i^{\text{head}}$ and $c_i$ are the target for the detection head. During
training, not all boxes are selected for computing the losses. The generated
boxes (or proposals for the RPN) are separated into two groups: positive
examples, i.e. boxes with an overlap of at least 0.7 with a ground truth
annotation, and negative examples which represent the background class. The
classification losses are computed over all examples, while the regression
losses only take into account the positive boxes. This remains unchanged for
Prototypical Faster R-CNN.

However, the training of PFRCNN is done episodically, following the
Meta-Learning paradigm and the training scheme proposed in Prototypical Networks
\cite{snell2017prototypical}. The motivation behind such training is to mimic
the setup that will be encountered at test time and prevent base classes
memorization. Indeed, during base training, only annotations from the base
classes are available. Training the model with all base classes at the same time
could lead to overfit the base class set, at the cost of adaptation. The
episodic training consists in sampling a subset of classes
$\mathcal{C}_{\text{ep}} \subset \mathcal{C}_{\text{base}}$ and train the model
to detect only these classes for a few training steps. Such a training phase is
called an episode. The episodes are then repeated over and over until
convergence. During each episode, a \textit{query set} and a \textit{support
set} are sampled from the original dataset. The support set contains the
examples that will be leveraged for the prototypes computation. On the other
hand, the query set is exploited as a small training set. The loss is computed
on the query set and between each update of the model, the prototypes are
re-computed from the same support set. The update of the prototype is not
necessary between each training step, but since the model's weights are updated,
the class representations also change. Additionally, the episodic strategy
allows for mimicking the test time setting. If there are $N$ novel classes with
$K$ support images at test time, the episodes can reproduce this even though the
dataset has a lot more classes and data. Episode after episode, the model will
encounter new class combinations and support examples, in the end, it should
learn to generalize to novel classes from a few examples, according to
Meta-Learning claims. 

To build the support set, for each class $c \in \mathcal{C}_{\text{ep}}$, we
select images containing objects of class $c$ and disregard all other objects
(i.e. their annotations are not included in the support set but the image is not
masked, so they are still visible). If there is more than one object $c$ in the
image, only one is selected randomly as the annotated example. This prevents
having more than $K$ examples per class. The query set contains
$K_{\text{query}}$ images for each of its $N$ classes, this means at least
$K_{\text{query}}$ examples for each class, but this number can be larger as
more than one object is present in the images. As for the support set, the
annotations with class labels not in $\mathcal{C}_{\text{ep}}$ are discarded.
This sampling procedure prevents the occurrence of \textit{self-distractors} but
not co-occurrence distractors (see \cref{sec:fsod_distractors}).

Once the base training is done, the network can directly be applied to novel
classes through direct adaptation from the prototypes (see
\cref{fig:fsod_evaluation_strategies}). However, the adaptation is only
performed in the classification parts of the model, regression branches are not
modified. This is certainly sub-optimal and therefore, we provide a fine-tuning
scheme to remedy this. This fine-tuning is done exactly as the base training
phase, in an episodic manner except that the episode classes are sampled from
both base and novel class sets: $\mathcal{C}_{\text{ep}} \subset
(\mathcal{C}_{\text{base}} \cup \mathcal{C}_{\text{novel}})$. In this case, the
examples of novel classes are the same in the query and support sets so that
the total number of support examples remains fixed.   

\subsection{Iterative improvements}
\label{sec:improvements}
PFRCNN, as described in the previous section, denoted the baseline, does not
perform well on aerial images (see \cref{tab:pfrcnn_ablation}). Therefore, we
introduce a series of improvements to improve the performance of the model.

In this section, we propose a series of improvements on top of the PFRCNN
baseline described above. Indeed, when tested on aerial images, vanilla PFRCNN
yields relatively poor performance (see \textit{baseline} performance in
\cref{tab:pfrcnn_ablation}). To remedy this, we introduce several training tricks. 

\paragraph*{Hard negative example mining}
One issue encountered with the baseline is the detection of the base classes
regardless of support examples. Basically, it detects base classes even though
no prototypes are provided for these classes: this is base class memorization.
Although this improves performance when base class prototypes are provided, it
produces lots of false positive detections when novel classes are wanted. To
address this, we propose to sample hard negative examples to encourage the model
to detect support classes only. The main idea is to take advantage of the
annotations for classes not selected in the current task to find hard negative
examples, i.e. classes that the network could have memorized from previous tasks
but should not be detected during this episode. When starting a new episode, it
is likely that the model still produces detection for objects annotated in one
of the previous episodes if it does not rely on the support information. Even
though these objects are not annotated in the new task, their annotations are
available in the dataset (because they belong to base classes). Therefore, these
annotations can be used to find examples that should be considered as background
for the current task. They are different from the background examples that
do not contain any class of the dataset, which are referred to as easy negative
examples and are much more numerous. Explicitly sampling these hard negative
examples encourages the network to detect only objects annotated in the support
set.

\paragraph*{Moving average prototypes}
Another issue with the baseline is that the prototypes can change abruptly,
either when the network is updated or when the support set changes. We argue
that this causes some training instabilities. To prevent such rapid
modification of the prototypes, an exponential moving average is introduced to
smooth the disruption. Hence, $\bar{\Phi}^c_{t+1} = \alpha \Phi^c_t + (1 - \alpha)
\bar{\Phi}^c_{t}$. $\alpha$ is set to $0.1$ in our experiments. $\bar{\Phi}^c_{t}$
is the averaged prototype for class $c$ at iteration $t$, while $\Phi^c_t$ is
the prototype computed from the support set, for class $c$ at iteration $t$. 

\paragraph*{Background clustering}
Lastly, the baseline shows a poor separation of novel class representations (see
\cref{fig:pfrcnn_dota_tsne}). This leads to poor performance with novel classes at
test time. In order to solve this, inspiration is drawn from
\cite{caron2018deep}. At each iteration, they fit a K-means on the learned
representations. This gives pseudo-labels to train the network for
classification in a self-supervised manner. Similarly, we propose to fit a
K-means on the negative embeddings (i.e. representing boxes not matched by any
ground truth object). From the resulting pseudo-labels a contrastive loss
function (Triplet Loss \cite{hoffer2015deep}) is computed. The triplets are
formed with embeddings labeled identically by the K-means. It encourages the
network to organize the negative examples into tight and separated clusters.
This will eventually discover semantic clusters that represent novel objects.

\paragraph*{Ablation study} 
In order to assess the relevance of the tricks formulated in
previous paragraphs, a small ablation study is conducted on DOTA dataset. The
results of this analysis can be found in \cref{tab:pfrcnn_ablation}. On the one hand,
the introduction of hard examples mining and moving average prototypes improves
consistently the novel classes mAP in the one-shot setting. On the other hand,
background clustering greatly reduces the performance on base classes, while
achieving similar results on novel classes. According to this analysis, we chose
to keep only hard example mining and the moving average as it combines the best
base and novel classes performance.

\begin{table}[h]
    \centering
    \resizebox{0.75\textwidth}{!}{%
    \begin{tabular}{@{\hspace{3mm}}ccccccccccc@{\hspace{3mm}}}
    \toprule[1pt]
       & \multicolumn{2}{c}{\textbf{PFRCNN Baseline}} & \multicolumn{2}{c}{\textbf{+HEM}}& & \multicolumn{2}{c}{\textbf{+MA}}& & \multicolumn{2}{c}{\textbf{+BC}} \\ \midrule
    \textbf{K}  & Base                 & Novel                 & Base            & Novel          & & Base           & Novel          & & Base           & Novel           \\ \midrule
    1  & \bbf{35.5}                 & 2.1                   & 31.2            & 4              & & 26.5           & \rbf{6.9}            & & 13.3           & 4.3             \\
    3  & \bbf{35.9}                 & 2.7                   & 35.6            & 2.3            & & 33.9           & 3.5            & & 14.5           & \rbf{4.1}             \\
    5  & 34.3                 & 3.8                   & \bbf{41.2}            & 3.3            & & 37             & 4.2            & & 18.2           & \rbf{4.7}             \\
    10 & 30.4                 & 4.1                   & \bbf{34.3}            & 2.6            & & 35.1           & \rbf{5.9}           & & 14.8           & 2.6             \\ \bottomrule[1pt]
    \end{tabular}%
    } \caption[PFRCNN training tricks ablation study]{Ablation study about the
    training tricks described in section \ref{sec:improvements}. Each column
    corresponds to the addition of each trick on top of the previous one. HEM, MA and BC
    correspond respectively to Hard Example Mining (HEM), Moving Average (MA) prototypes
    and Background Clustering (BC). Detection performance is reported as mAP with a
    0.5 IoU threshold. Blue and red values represent the best performance on
    base and novel classes respectively.}
    \label{tab:pfrcnn_ablation}
\end{table}

\section{Performance on Artificial Data}
\vspace{-1em}
Before applying PFRCNN on aerial images, we test it on an artificial dataset
with reduced difficulty. This gives a hint about the capacities of the model
on real data.

\subsection{MNIST-LOC Dataset}
As an artificial dataset, we leveraged MNIST-LOC. This dataset is not a
published work but rather a toy example sometimes mentioned in the literature.
It consists in creating artificial images with the handwritten digit images from
the original MNIST dataset \cite{deng2012mnist}. For each image in MNIST-LOC, a
random number of MNIST digits are sampled and placed randomly in the image with
a random scale. This creates a potentially infinite dataset but with limited
variability. For our experiments, we build a dataset with 20k images in the
training split and 2k images both for the test and validation splits. The
dataset has around 120k annotated objects, which translates to approximately 12k
instances per class. An overview of the dataset is provided in
\cref{fig:mnist_fig}.

\begin{figure}
    \centering
    \includegraphics[width=\textwidth]{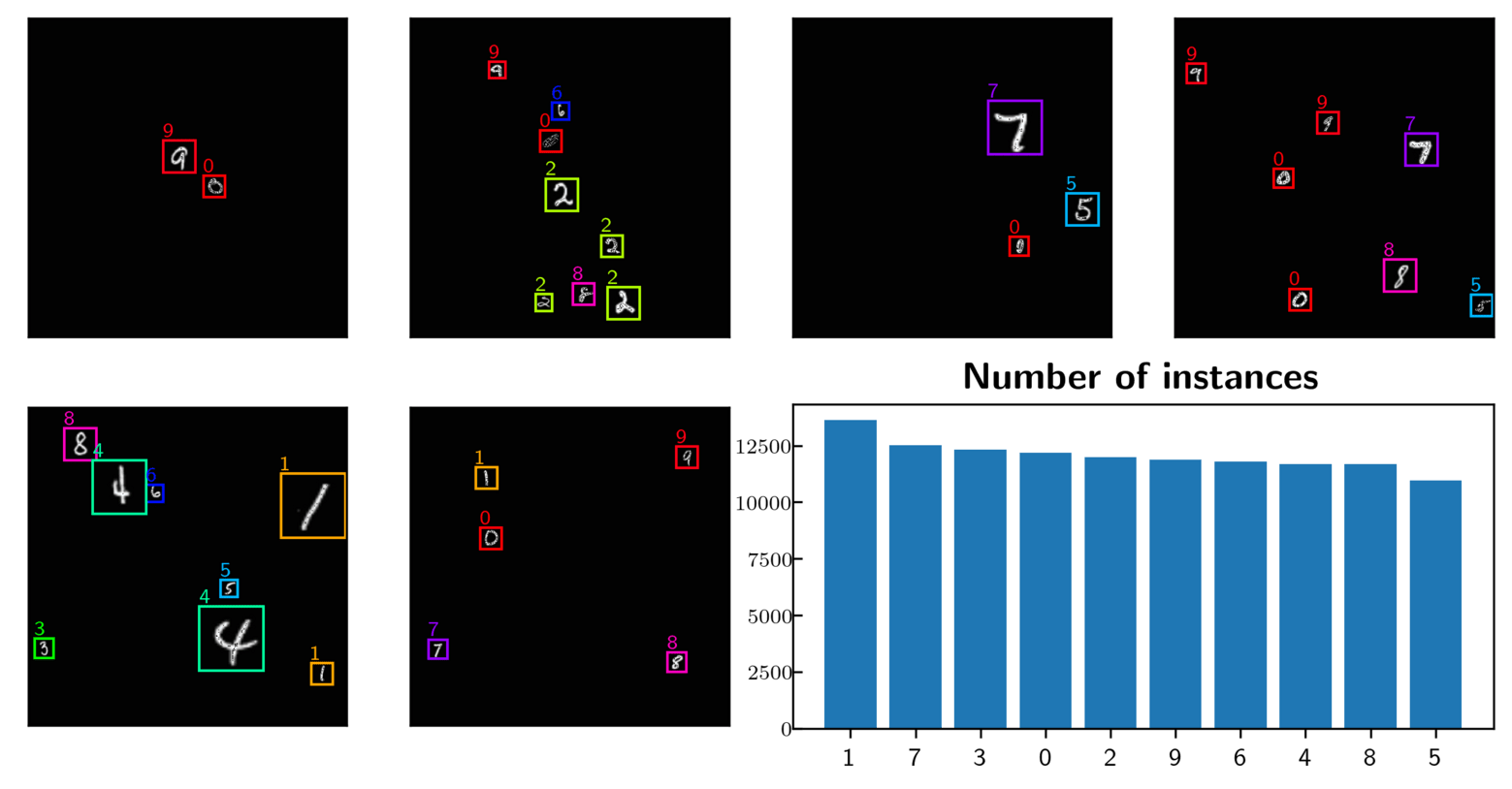}
    \caption{Images of MNIST-LOC dataset and classes repartition.}
    \label{fig:mnist_fig}
\end{figure}

Compared to a real dataset, MNIST-LOC is far more simple. The background is
uniform which simplifies the localization of the objects. Then, the
class occurrences are uniformly distributed. And finally, the intra-class
variance is reduced as MNIST is an easy dataset for the classification task.

\subsection{Implementation details}
We provide here some of the implementation details for training Prototypical
Faster R-CNN, but a complete list of the hyperparameters and their values is
available in our
GitHub\footnote{\href{https://github.com/pierlj/proto_faster_rcnn}{Link to Prototypical
Faster R-CNN repository.}}.
The optimization is done with Adam optimizer \cite{kingma2014adam} and a
learning rate of $1e-4$. The backbone network is pre-trained on ImageNet and its
first two layers are kept frozen during training. Three classes are selected as
novel classes and are reserved for evaluation, the 7 others are kept as base
classes. Each episode is constituted of $K_{\text{query}} = 5$ images per class,
\ie 15 images per episode.

\subsection{Detection performance on MNIST-LOC}
We present the performance results on MNIST-LOC in \cref{tab:pfrcnn_mnist}. This
table reports the mean Average Precision (mAP) with an IoU threshold of 0.5 (see
\cref{sec:od_evaluation} for more details about mAP). The results are given with
multiple values of $K$, the number of support examples, and two distinct
splits of the base and novel classes. The evaluation is done on an unseen test
set, from which the support examples are sampled as well. The table provides the
mAP both for base and novel classes separately as we do not consider the
generalized few-shot setting. 

First, it can be seen from this table that the performance in a regular data
regime (\ie vanilla Faster R-CNN with all annotations) is high. This confirms
that MNIST-LOC is a fairly simple dataset and that the detection task is way
easier on this dataset than on real ones. It is important to note that these
values cannot be directly compared with the performance values in the few-shot
regime as the number of classes is different. In the regular regime, the
classification problem has 10 classes whereas, in the few-shot regime, it only
has three (3-ways $K$ shots setting, even for base classes). Then, the few-shot
performance of PFRCNN on base classes is also quite high, approaching one as the
number of shots grows. However, for novel classes, this is different, the mAP
values are way lower in this case and fall below an acceptable threshold for any
industrial use case. To get a better grasp on these results,
\cref{fig:pfrcnn_mnist_quant} gives detection examples on MNIST-LOC dataset for
base and novel classes. For base classes, two distinct support sets are
leveraged between rows 1 and 2 (with different classes, \ie
$\mathcal{C}_{\text{ep}}^1 \neq \mathcal{C}_{\text{ep}}^2$). For base classes,
the detection is almost perfect, which represents well the scores from
\cref{tab:pfrcnn_mnist}. However, for novel classes, there are undesired
detections of base classes and a lot of confusion between novel classes. 

\begin{table}[]
    \centering
    \resizebox{0.7\textwidth}{!}{%
    \begin{tabular}{@{\hspace{3mm}}ccccc@{\hspace{3mm}}}
    \toprule[1pt]
                         & \multicolumn{2}{c}{\textbf{Split 1: [0, 1, 4]}} & \multicolumn{2}{c}{\textbf{Split 2: [3, 5, 8]}} \\ \midrule
    \textbf{Data regime} & \textbf{Base Classes}     & \textbf{Novel Classes}    & \textbf{Base Classes}     & \textbf{Novel Classes}    \\ \midrule
    
    1 shot      & 94.86             & 21.86             & 92.46             & 19.43             \\
    3 shots     & 95.70             & 20.39             & 94.82             & 21.22             \\
    5 shots     & 95.10             & 24.34             & 94.95             & 21.73             \\
    10 shots    & 95.86             & 23.19             & 93.11             & 20.17             \\ \midrule
    \textit{Faster R-CNN}     & \textit{76.86}             & \textit{96.33}             & \textit{84.29}             & \textit{79.01}             \\ \bottomrule[1pt]
    \end{tabular}%
    } \caption[PFRCNN performance on MNIST-LOC]{PFRCNN performance on MNIST-LOC
    dataset with two distinct class splits. On the left, novel classes are 0, 1
    and 4, while on the right novel classes are 3, 5, and 8. In both cases, all
    other classes belong to the base class set. Performance is reported as
    $\text{mAP}_{0.5}$. The last row reports the performance of a vanilla Faster
    R-CNN trained in a regular data regime, \ie with all available annotations
    in the dataset. For Faster R-CNN, per-class performance is averaged over
    base and novel classes separately to compare with the few-shot techniques.}
    \label{tab:pfrcnn_mnist}
    \end{table}

\begin{figure}
    \centering
    \includegraphics[width=\textwidth]{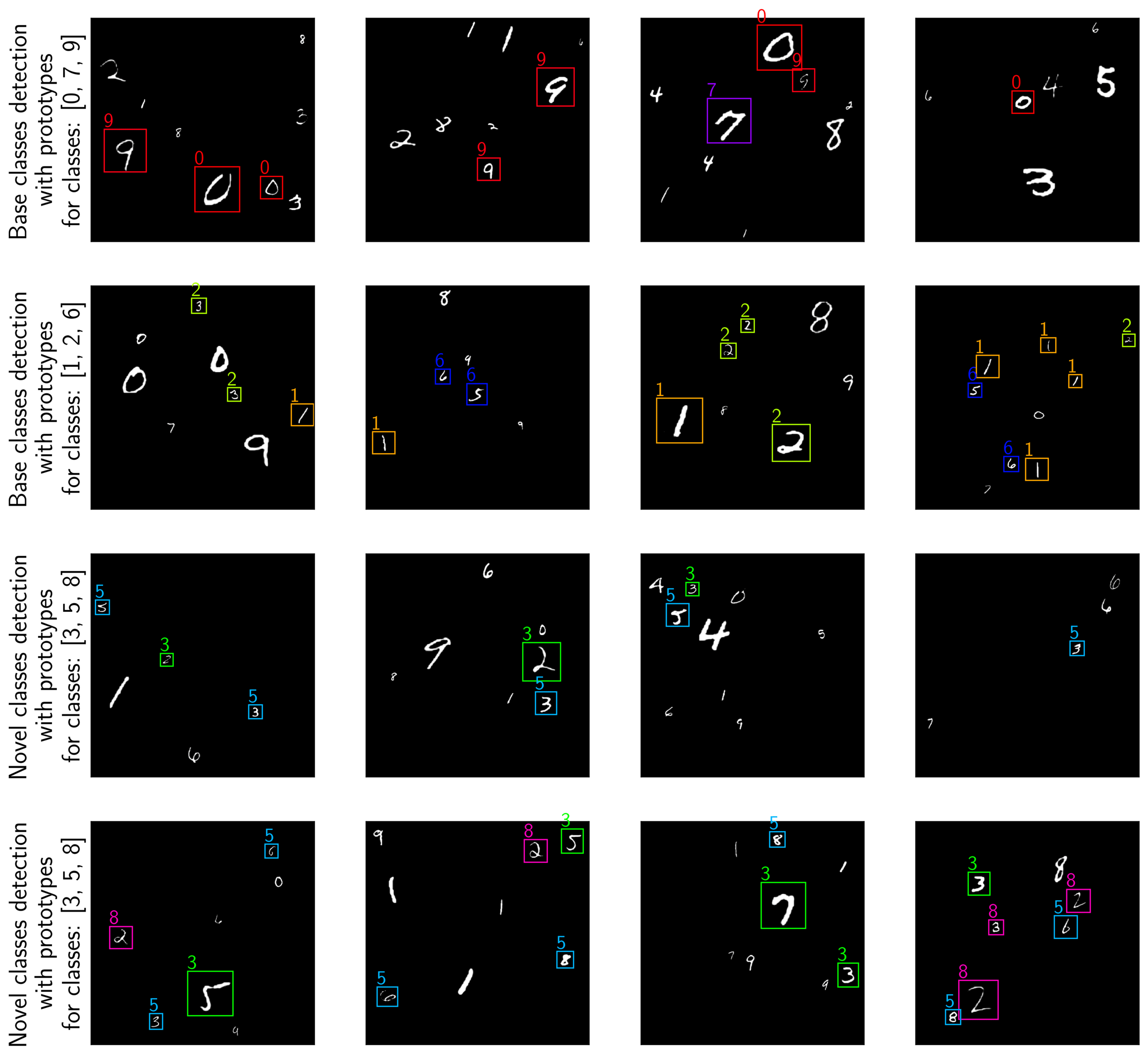}
    \caption[Prototypical Faster R-CNN results on MNIST-LOC dataset]{Prototypical Faster
     R-CNN qualitative detection results on MNIST-LOC dataset, on base and novel
     classes. Predictions are done without fine-tuning and with $K=1$.}
    \label{fig:pfrcnn_mnist_quant}
\end{figure}

\vspace{1em}
\section{Difficulties on Aerial Images}
\vspace{-1em}
While the Prototypical Faster R-CNN is challenged on synthetic images, it has
more serious difficulties with real images. In this section, we present the
detection result of PFRCNN applied on aerial images, specifically on DOTA and
DIOR datasets.

First, \cref{tab:pfrcnn_dota} gathers the performance results of PFRCNN on DOTA
dataset for base and novel classes. As for MNIST-LOC, two distinct class splits
are experimented: Split A with \textit{plane}, \textit{ship} and
\textit{tennis-court} and Split B with \textit{harbor}, \textit{roundabout} and
\textit{helicopter}. Following the same configuration as in the previous
section, we report the performance with $\text{mAP}_{0.5}$ for base and novel
classes independently. The results on base classes are much lower than with
MNIST-LOC, but it makes sense as the detection task in DOTA is also much more
difficult. Nevertheless, the base classes' performance is much lower than the
regular setup (\ie Faster R-CNN trained on the whole DOTA). For novel
classes, a similar performance drop is observed, making PFRCNN unfit for any
industrial application. 

\begin{table}[]
    \centering
    \resizebox{0.7\textwidth}{!}{%
    \begin{tabular}{@{\hspace{3mm}}ccccc@{\hspace{3mm}}}
    \toprule[1pt]
                                & \multicolumn{2}{c}{\textbf{Split A}}  & \multicolumn{2}{c}{\textbf{Split B}}  \\
    \multicolumn{1}{c}{$K$ shots} & Base classes & Novel Classes & Base classes & Novel Classes \\ \midrule
    
    1                           & 27.5 $\pm$ 1.0   & 4.7 $\pm$ 2.0     & 41.5 $\pm$ 3.0   & 8.0  $\pm$ 1.0    \\
    3                           & 35.2 $\pm$ 2.0   & 2.4 $\pm$ 1.0     & 39.2 $\pm$ 3.0   & 10.1 $\pm$ 2.0    \\
    5                           & 39.0 $\pm$ 1.0   & 3.8 $\pm$ 1.0     & 43.4 $\pm$ 2.0   & 12.1 $\pm$ 1.0    \\
    10                          & 38.4 $\pm$ 2.0   & 4.1 $\pm$ 1.0     & 41.4 $\pm$ 3.0   & 10.1 $\pm$ 2.0    \\ \midrule
    \textit{Faster R-CNN}            & \textit{65.62}   & \textit{90.96}    & \textit{73.21}   & \textit{69.77}    \\ \bottomrule[1pt]
    \end{tabular}%
    } \caption[PFRCNN performance on DOTA dataset]{PFRCNN performance on DOTA dataset with two distinct class
    splits. Split A has classes \textit{plane}, \textit{ship} and
    \textit{tennis-court} and Split B has \textit{harbor}, \textit{roundabout}
    and \textit{helicopter}. In both cases, all other classes belong to the base
    class set. Performance is reported as $\text{mAP}_{0.5}$. The last row reports the performance of a vanilla Faster
    R-CNN trained in a regular data regime, \ie with all available annotations
    in the dataset. For Faster R-CNN, per-class performance is averaged over
    base and novel classes separately to compare with the few-shot techniques.}
    \label{tab:pfrcnn_dota}
    \end{table}

Nonetheless, these experiments are not useless and provide relevant insights
about the FSOD task and its difficulties. For instance, with the MNIST-LOC
dataset, almost no difference could be seen between splits. With DOTA, much
better performance is achieved on Split B than Split A. It indicates some
interactions between classes, some combinations are more difficult than others.
These considerations were not taken into account in the design of PFRCNN and
should be overcome to achieve reasonable few-shot detection.

Despite its limited performance, Prototypical Faster R-CNN is one of the first
approaches to tackle FSOD on remote sensing images from a metric learning
perspective. In addition, this method does not need any fine-tuning.
All previous results were given from a simple adaptation to the novel classes at
inference time with novel prototypes. We also experimented with an additional
fine-tuning step, especially to refine the regression branches of the model.
This was performed on DOTA with Split A and the results are available in
\cref{tab:pfrcnn_ft}. Fine-tuning with the few available support examples helps
significantly to boost the detection quality on novel classes, but it remains
insufficient for COSE's application. Interestingly, after fine-tuning a common
property of few-shot methods emerges: the more examples are provided, the higher
the performance. It was not the case without fine-tuning. With Split A, the best
performance is achieved with $K=1$, with Split B, it increases until $K=5$ and
then decreases with $K=10$. This indicates that the management of more shots is
difficult within PFRCNN. It suggests that support examples features may not be
trivially aggregated as it can produce irrelevant prototypes. This can
happen when a class has a great variety and thus a multimodal distribution in
the embedding space.

\begin{table}[]
    \centering
    \resizebox{0.6\textwidth}{!}{%
    \begin{tabular}{@{\hspace{3mm}}cccc@{\hspace{3mm}}}
    \toprule[1pt]
                                & \multicolumn{2}{c}{\textbf{Without fine-tuning}}   & \textbf{With fine-tuning} \\ \midrule
    \multicolumn{1}{c}{K shots} & Base classes      & Novel Classes         & Novel Classes \\\midrule
    1                           & 27.5              & 4.7                   & \textbf{7.5 }       \\
    3                           & 35.2              & 2.4                   & \textbf{9.3 }       \\
    5                           & 39.0              & 3.8                   & \textbf{11.3}       \\
    10                          & 38.4              & 4.1                   & \textbf{11.6}       \\ \bottomrule[1pt]
    \end{tabular}%
    }
    \caption{PFRCNN performance comparison with and without fine-tuning.}
    \label{tab:pfrcnn_ft}
    \end{table}

\begin{figure}
    \centering
    \includegraphics[width=\textwidth]{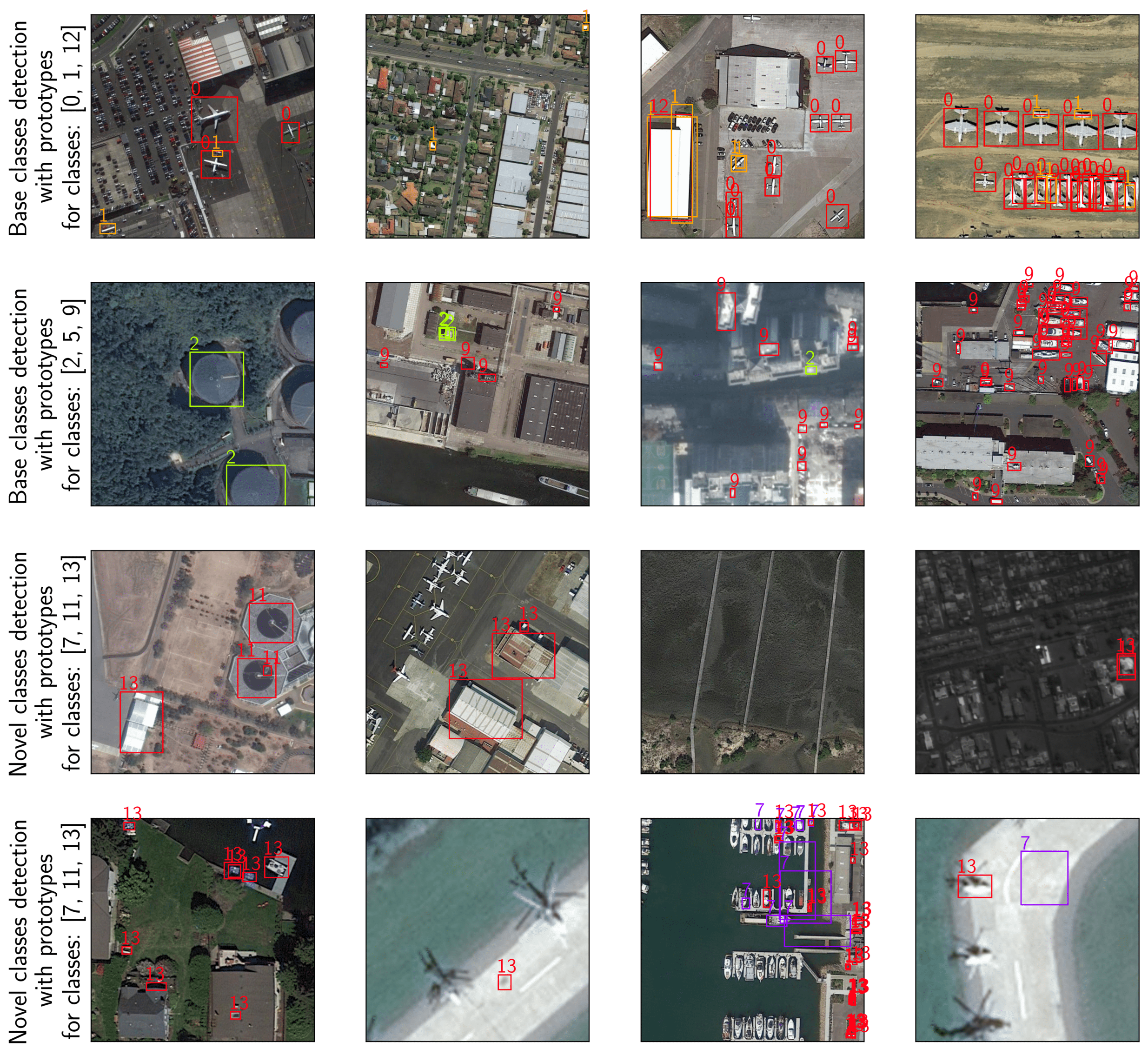}
    \caption[Prototypical Faster R-CNN results on DOTA]{Prototypical Faster
        R-CNN qualitative detection results on DOTA dataset, on base and novel
        classes. Predictions are done without fine-tuning and with $K=1$.}
    \label{fig:pfrcnn_dota_quant}
\end{figure}

Just as for MNIST-LOC, we provide qualitative results of the FSOD on DOTA with
PFRCNN. These are available in \cref{fig:pfrcnn_dota_quant}. The detection is
satisfactory (but not perfect) on the base classes. However, the bounding boxes
and labels for novel classes (bottom 2 rows) are mostly incorrect. Some
confusion between base and novel classes occurs. For instance, in the left-most
image in the third row, water tanks are mistaken as roundabouts. Of course,
these two classes look similar in practice and that makes them difficult to
distinguish. To better understand why this confusion happens, we investigate the
embedding space of PFRCNN through TSNE visualization (see
\cref{fig:pfrcnn_dota_tsne}). This figure is made by collecting the embedding
vectors of all proposals over an entire query set, and then by reducing their
dimension using the TSNE algorithm \cite{van2008visualizing}. Class-specific
clusters are well-formed in the representation space, but some classes overlap
which explains the confusion. Representations of these classes may be close to
another class prototype and get misclassified. This is especially true for novel
classes which overlap over base classes, explaining their poor
performance. For the example above, the misclassification of the two water tanks
is easily understood from the TSNE plot as these two classes almost perfectly
overlap (class \textit{storage-tank} in dark green and \textit{roundabout} in
pink).

\begin{figure}
    \centering
    \includegraphics[width=0.9\textwidth]{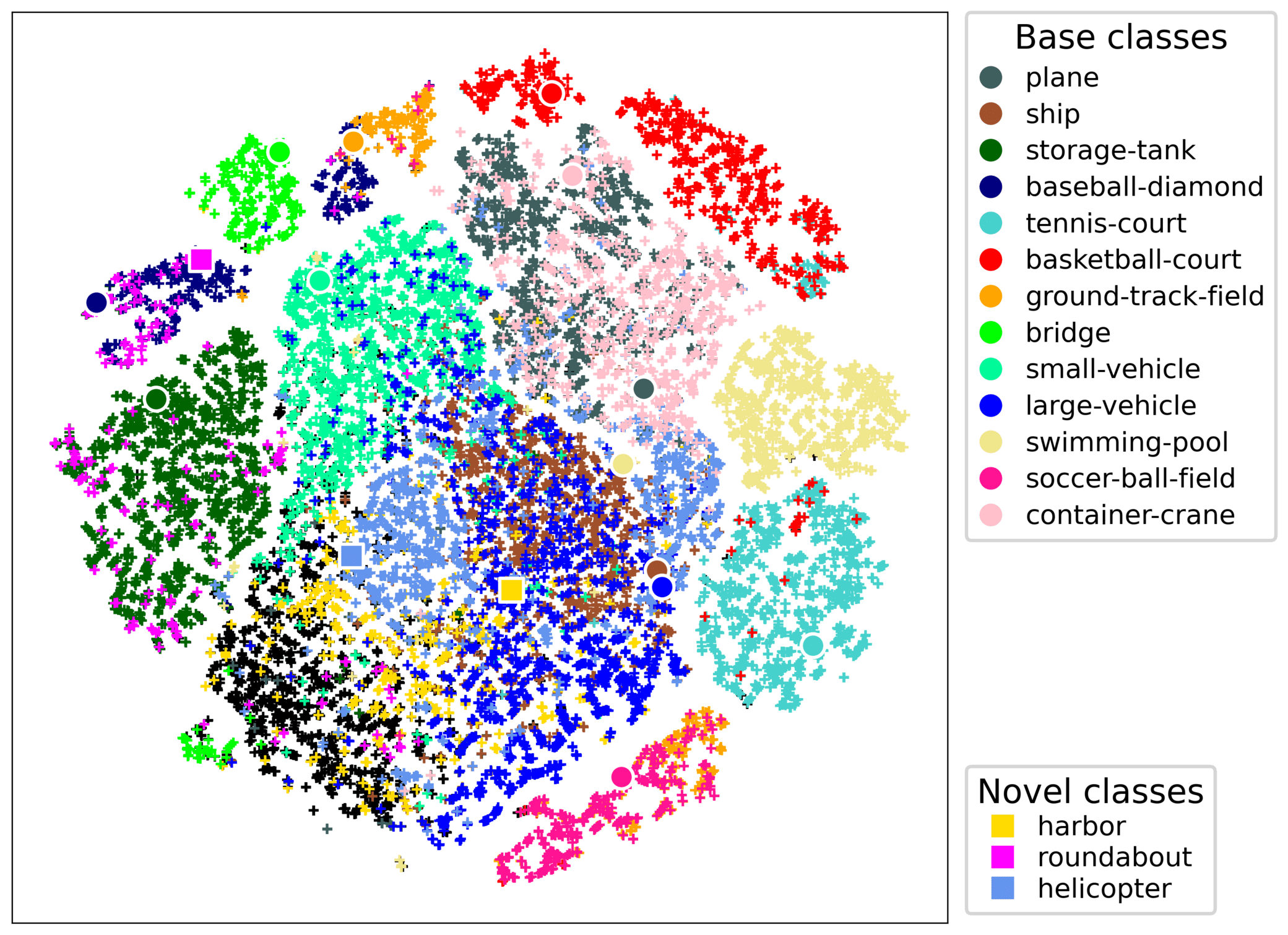}
    \caption[TSNE visualization of the embedding space of PFRCNN on DOTA]{TSNE
    visualization of the trained embedding space of PFRCNN on DOTA. Each point
    represents the projection of RoI in the embedding space. Large circles and
    squares respectively denote the prototypes of base and novel classes. Black
    points denote background proposals.}
    \label{fig:pfrcnn_dota_tsne}
\end{figure}

\section{Insights and conclusion}
\vspace{-1em}
From the results presented above, one question arises: is representation
learning a suitable choice for few-shot object detection? Metric-learning
methods are competitive with state-of-the-art for few-shot classification but
seem less appropriate for FSOD. Prototypical Faster R-CNN is
a first attempt to apply prototypical networks to FSOD. The very few
FSOD approaches based on Metric-Learning often leverage other tricks such as
carefully designed fine-tuning or attention mechanisms. Furthermore, at the
beginning of this PhD, there were only two contributions solving FSOD with
metric learning: RepMet \cite{karlinsky2019repmet} and RN-FSOD
\cite{yang2020restoring}. More investigation was therefore needed in this
direction. Of course, the poor results of PFRCNN alone are not enough to
conclude that all metric-learning-based approaches are inappropriate.
Nevertheless, metric-learning FSOD methods are now in the minority in the current
literature, which indirectly confirms their inadequacy. 

Despite the relatively poor performance of PFRCNN, our experiments provide
useful insights for future designs of FSOD methods. First fine-tuning is crucial
for FSOD. It yields significant performance gains compared to models only trained
on base classes. This makes sense as the adaptation of the model with the
prototypes is only performed in the classification branches. The regression
branches are therefore unprepared for the localization of novel classes. Of
course, having a method that does not require any fine-tuning is highly
desirable from an industrial perspective, but that should not come at the cost
of poor performance. Then, Faster R-CNN may not be the best detector choice for
few-shot extensions. Indeed, its two-stage structure duplicates the number of
modifications required for the adaptation to novel classes. Even if some works
argue that the RPN is class-agnostic, it is still trained to only detect base
classes while discarding everything else, including potential novel classes. The
RPN must then be adapted to novel classes as well. It makes the few-shot
extension more cumbersome, with more parameters and more causes for failure.
One-stage detectors certainly are a more sensible choice. Finally, the episodic
training strategy may also be inadequate for detection. It complexifies greatly
the training and introduces distractors (this concept is explained in
\cref{sec:fsod_distractors}). At the beginning of each episode, a subset of
classes (either base or novel depending on the training phase) is sampled.
Annotations from all other classes are discarded during the episode, yet the
training images still contain instances of other classes. These distractors are
confusing for the model. Of course, for classification, the episodic strategy
forces the model to establish connections between support examples and the query
images. But it is much simpler as the query images only contain one object
belonging to one of the episode classes. The presence of already-seen classes
(and potential future classes) inside query images certainly makes the episodic
training strategy suboptimal for the detection task. 

The latter paragraph formulates a few assumptions based on our observations of
the design and training of PFRCNN. Of course, it would be wise to conduct
dedicated experiments to confirm these hypotheses. For instance, carefully
designed synthetic images could help to experiment with the distractors'
influence in a controlled way. 

To conclude, this chapter presents Prototypical Faster R-CNN, a fully
metric-learning-based approach for the Few-Shot Object Detection task. This is
one of the first methods proposed in this category. Despite its relatively poor
performance on real images, it can adapt to novel classes without any
fine-tuning, which is still a rare property in the current literature. Finally,
the experiments conducted with PFRCNN provide relevant insights about the FSOD
task and will help in the design of future approaches.

\chapter{Attention Framework for Fair FSOD Comparison}
\label{chap:aaf}

\chapabstract{ Fair comparison is extremely challenging in the Few-Shot Object Detection task
as plenty of architectural choices differ from one method to another.
Attention-based approaches are no exception, and it is difficult to assess which
mechanisms are the most efficient for FSOD. In this chapter, we propose a highly
modular framework to implement existing techniques and design new ones. It
allows for fixing all hyperparameters except for the choice of the attention
mechanism. Hence, a fair comparison between various mechanisms can be made.
Using the framework, we also propose a novel attention mechanism specifically
designed for small objects.}
    \textit{
    \begin{itemize}[noitemsep]
        \item[\faPaperPlaneO] P. Le Jeune and A. Mokraoui, "A Comparative Attention Framework for Better Few-Shot Object Detection on Aerial Images", Submitted at the Elsevier Pattern Recognition journal.
        \item[\faFileTextO] P. Le Jeune and A. Mokraoui, "Cross-Scale Query-Support Alignment Approach for Small Object Detection in the Few-Shot Regime", Accepted at the IEEE International Conference on Image Processing 2023 (ICIP).
    \end{itemize}
    }
\vspace{2em}

\PartialToC

\section{Framework Presentation and Motivation}
\vspace{-1em}
As seen in \cref{chap:prcnn}, metric-learning approaches are not the optimal
choice for the FSOD task. The early FSOD literature has been dominated by
attention-based methods, which probably are a more sensible alternative. Plenty
of contributions in this domain leverage attention mechanisms for solving the
detection task. However, it is difficult to make fair comparisons between the
various mechanisms. Each method is proposed with its own choice of detection
framework, backbone, hyperparameters, loss function, augmentations and training
strategy. Thus, it is difficult to demonstrate the superiority of one attention
mechanism over others. Furthermore, there is no consensus in the FSOD field
about a proper way to evaluate the models. This can change from one work to
another and is also a source of variance preventing fair comparison in the
literature. To this end, we propose a modular framework called the
Alignment-Attention-Fusion (AAF) framework. The goal of this framework is to
allow the implementation of various attention mechanisms while keeping all other
parameters fixed. Looking closely at the existing attention-based method in the
literature (see \cref{sec:fsod_attention}), three main types of attention
mechanisms can be observed: Spatial Alignment, Global Attention and Direct
Feature Fusion. Therefore the AAF framework is structured around these three
components. The framework proposes first a mathematical formalism to present and
define existing and future mechanisms. Second, a modular Python
package\footnote{\url{https://github.com/pierlj/aaf_framework}} allows easy
implementation of attention-based methods inside a controlled detection
environment to ensure fair comparisons. In the following sections, we will
present the framework in detail and conduct fair comparison experiments with it.
Finally, a novel attention mechanism will be presented, it is designed through
the AAF framework and specifically tackles the small objects to improve
the detection performance on aerial images.

\section{Alignment Attention Fusion Framework}
\label{sec:framework}
\vspace{-1em}
In \cref{sec:fsod_attention}, three main components of attention mechanisms for
FSOD have been identified: Spatial Alignment, Global Attention and Direct
Feature Fusion. Most attention-based FSOD methods rely on one or more of these
components. Thi section will cover the Alignment, Attention and Fusion (AAF) framework,
whose purpose is to provide a flexible environment to implement attention-based
FSOD methods. Before jumping into the definition of the AAF framework, let's
recall briefly the main principle of attention-based FSOD, illustrated by
\cref{fig:aaf_attention_principle}. The goal of the attention module is to combine
the information from the query image and the support examples. Specifically, the
query features are compared with class-specific features computed from the
support set. This comparison highlights similar parts in the query image and the
support examples, yielding class-specific query features. The detection is then
performed separately for each class.   

\begin{figure}[]
    \centering
    \includegraphics[width=\textwidth]{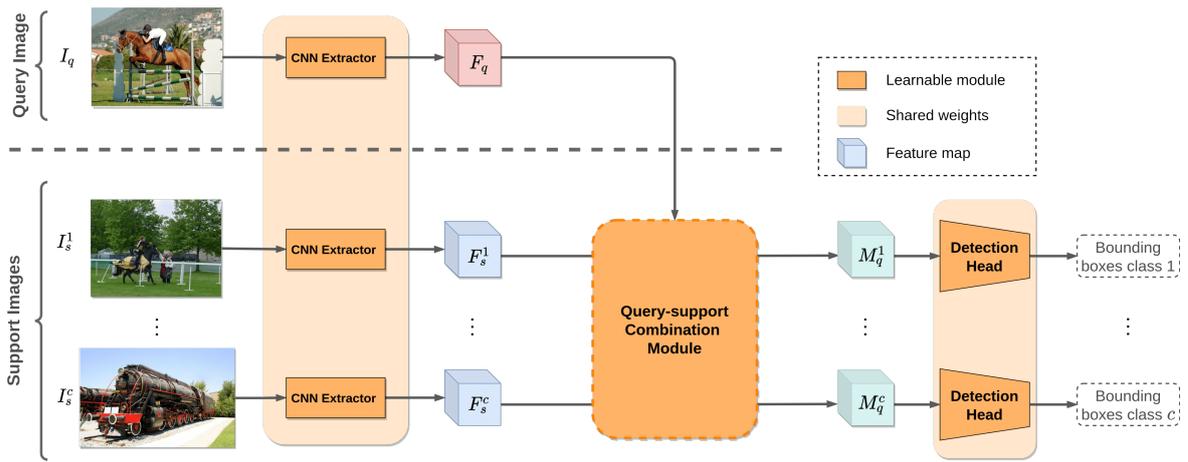}
    \caption[Attention-based Few-Shot Object Detection
    principle]{Attention-based Few-Shot Object Detection principle. Query and
    support images are processed by the backbone before being fed to the
    query-support combination block. Detection is then performed independently
    on each class.}
    \label{fig:aaf_attention_principle}
\end{figure}

The AAF framework takes as input the features from the query image $F_q$ as well
as the features extracted from every support image $F_s^c$ for $c \in
\mathcal{C}$ (if more than one example is available per class -- $K > 1$ --,
their features are averaged). It outputs class-specific query features $M_q^c$
in which features relative to class $c$ are highlighted. The framework is
divided into three parts as shown in \cref{fig:aaf_framework}, which provides an overview
of the framework. Each component is described below in dedicated sections with
examples of possible design choices. Even though this framework is presented
from the perspective of object detection, it could be applied to any kind of
few-shot visual tasks (\eg classification or segmentation).

\begin{figure}[h]
    \centering
    \includegraphics[width=\textwidth]{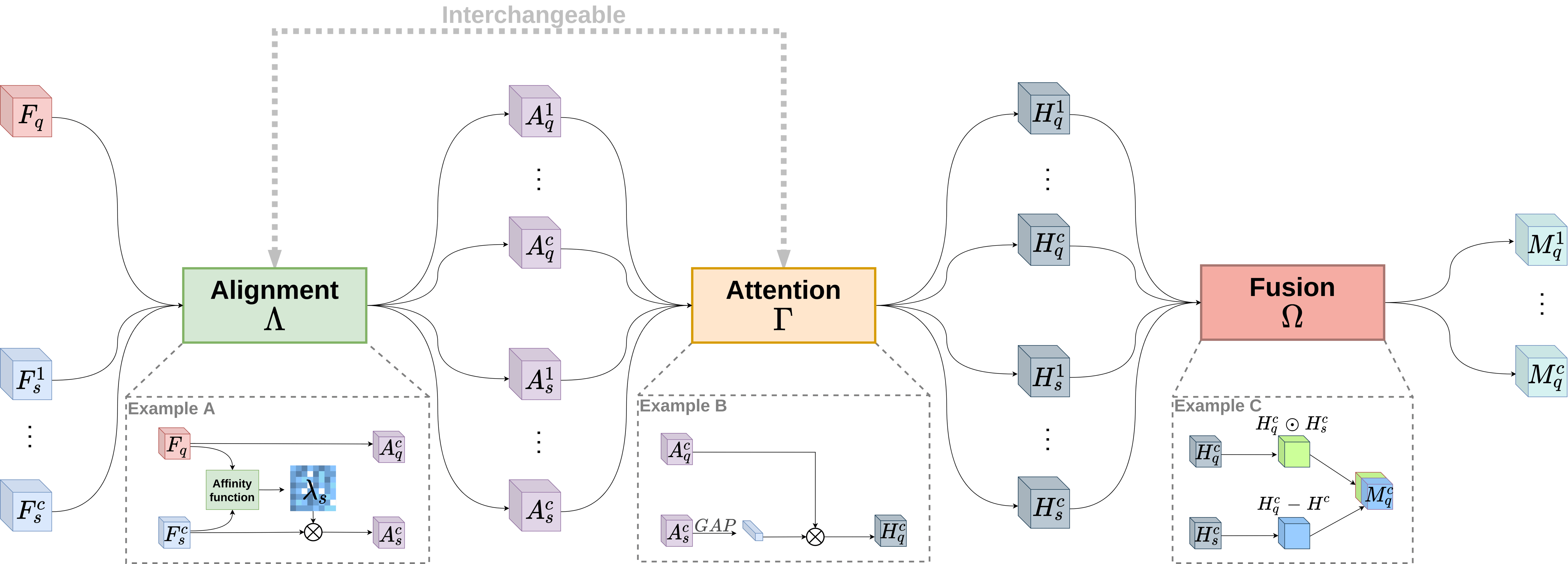}
    \caption[Overview of the AAF Framework]{The Alignment Attention Fusion (AAF)
    framework is composed of three components: spatial alignment $\Lambda$,
    global attention $\Gamma$ and a fusion layer $\Omega$. Examples for each
    module are depicted, these come from FSOD methods in the literature. Ex.
    A is presented in \cite{han2021meta}, Ex. B in \cite{kang2019few} and
    Ex. C in \cite{liu2021dynamic}. The colors chosen in this diagram match
    the colors used in
    \cref{eq:align1,eq:align2,eq:global1,eq:global2,eq:fusion}.}
    \label{fig:aaf_framework}
    \vspace{-1em}
\end{figure}

\subsection{Query-Support Alignment}
The alignment module, denoted $\Lambda$, spatially aligns the features from the
query and the support. It is unlikely that objects of the same class appear at
the same position inside query and support images (in comparison to the
classification task where objects to classify are in the center of the image).
This issue is commonly avoided by pooling the support map and using it as a
class-specific reweighting vector. But as discussed in
\cref{sec:fsod_attention}, this trick loses the spatial information about the
support object, which can be detrimental for detection. Instead, an alignment
based on similarity can be done between query and support feature maps. The idea
is to re-organize one feature map by comparing it with the other so that similar
features are spatially close in the maps (see \cref{fig:fsod_spatial_align}).
The alignment module is defined as follows:
\vspace{2em}
\begin{align}
    \tikzmarknode{g}{\highlight{Purple}{$A_q^c$}} &= \tikzmarknode{l}{\highlight{ForestGreen}{$\lambda_q(F_q , F_s^c)$}} \tikzmarknode{q}{\highlight{red}{$F_q$}},\label{eq:align1}\\
    \tikzmarknode{t}{\highlight{Purple}{$A_s^c$}} &= \tikzmarknode{l}{\highlight{ForestGreen}{$\lambda_s(F_q, F_s^c)$}} \tikzmarknode{s}{\highlight{blue}{$F_s^c$}}.\label{eq:align2}
\end{align}
\begin{tikzpicture}[overlay,remember picture,>=stealth,nodes={align=left,inner ysep=1pt},<-]
    \path (q.north) ++ (0em,1em) node[anchor=south west,color=red!67] (query){$\substack{\text{Query Features } \\ \in \, \mathbb{R}^{n\times d}}$};
    \draw [color=red!57](q.north) |- ([xshift=-0.3ex,color=red]query.north east);
    \path (s.south) ++ (0,-1em) node[anchor=north west,color=blue!67] (support){$\substack{\text{Support Features for class } c \\ \in \, \mathbb{R}^{m\times d}}$};
    \draw [color=blue!57](s.south) |- ([xshift=-0.3ex,color=blue]support.south east);
    \path (l.south) ++ (0,-1em) node[anchor=north east,color=ForestGreen!80] (lambda){$\substack{\text{Affinity matrices } \\ \in \, \mathbb{R}^{n\times m}}$};
    \draw [color=ForestGreen!80](l.south) |- ([xshift=-0.3ex,color=ForestGreen]lambda.south west);
    \path (g.north) ++ (0,1em) node[anchor=south east,color=Purple!80] (gfeat){$\substack{\text{Aligned features }}$};
    \draw [color=Purple!80](g.north) |- ([xshift=-0.3ex,color=Purple]gfeat.north west);
\end{tikzpicture}
\vspace{0.5em}

The definition of the matrices $\lambda_q$ and $\lambda_s$ determines how
features are aligned. They are mostly derived from a similarity measure between
query and support features. This formulation is close to the non-local blocks
described in \cite{wang2018non} and is at the heart of visual transformers
\cite{vaswani2017attention}. Transformers attention can be understood as an
alignment of the value to match the query-key similarity. This formulation
allows easy implementation of different feature alignments by changing the
expression of the affinity matrices. As an example, Meta Faster R-CNN
\cite{han2021meta} leverages an alignment module with affinity matrices
$\lambda_s(F_q , F_s^c) = F_q\cdot (F_s^q)^T$ and $\lambda_q(F_q , F_s^c) = I$
(see Example A in \cref{fig:aaf_framework}). Only the support features are
aligned so that they match query features. It is important to mention that
alignment alone does not combine query and support features. It rather
reorganizes spatially the query or support features. However, once the features
are aligned, their sizes match, which allows direct comparison through
element-wise operations (within the fusion layer).

\subsection{Global Attention}
The global attention module, denoted  $\Gamma$, combines global information of
the supports and the query. It highlights class-specific features and softens
irrelevant information for the task. This module is defined as follows:
\vspace{-0.5em}
\begin{align}
    \tikzmarknode{g}{\highlight{darkgray}{$H_q^c$}} &= \tikzmarknode{l}{\highlight{RedOrange}{$\gamma_q$}}(A_q^c , A_s^c),\label{eq:global1}\\
    \tikzmarknode{g}{\highlight{darkgray}{$H_s^c$}} &= \tikzmarknode{l}{\highlight{RedOrange}{$\gamma_s$}}(A_q^c, A_s^c).\label{eq:global2}
\end{align}
\begin{tikzpicture}[overlay,remember picture,>=stealth,nodes={align=left,inner ysep=1pt},<-]
    \path (l.south) ++ (0,-1em) node[anchor=north west,color=RedOrange!80] (lambda){\footnotesize Global attention operators};
    \draw [color=RedOrange!80](l.south) |- ([xshift=-0.3ex,color=RedOrange]lambda.south east);
    \path (g.south) ++ (0,-1em) node[anchor=north east,color=darkgray!80] (gfeat){\footnotesize Highlighted features};
    \draw [color=darkgray!80](g.south) |- ([xshift=-0.3ex,color=darkgray]gfeat.south west);
\end{tikzpicture}

The global attention operators $\gamma_q$ and $\gamma_s$ combine the global
information from their inputs and highlight features accordingly. This is
generally done through channel-wise multiplication. In this way, class-specific
features are highlighted, while features not relevant to the class are softened.
Changing the definition of $\gamma_q$ and $\gamma_s$ allows the implementation
of a wide variety of global attention mechanisms. This technique largely
resembles the principle of the Learnet \cite{bertinetto2016learning} introduced
for FSC. For instance, reference \cite{kang2019few} pools the support maps with
a global max pooling operation ($\text{GP}$) into a reweighting vector and
reweights the query features through channel-wise multiplication: $\gamma_q(A_q^c, A_s^c) = A_q^c
\circledast GP(A_s^c)$  and $\gamma_s(A_q^c, A_s^c) = A_s^c$ (see Example B in
\cref{fig:aaf_framework}).

\subsection{Fusion Layer}
The purpose of the fusion component is to combine highlighted query and support
maps. This is only applicable when the maps have the same spatial dimension. It
is mostly used alongside the alignment module as it does not combine the
information from the support and the query but only reorganizes the maps. In
particular, when support and query maps do not have the same spatial dimension,
aligning support maps with query maps fixes the size discrepancy. The fusion
module is then defined as follows:
\begin{equation}
    \label{eq:fusion}
    \tikzmarknode{g}{\highlight{TealBlue}{$M_q^c$}} = \tikzmarknode{l}{\highlight{Mahogany}{$\Omega$}}(H_q^c , H_s^c).
\end{equation}
\begin{tikzpicture}[overlay,remember picture,>=stealth,nodes={align=left,inner ysep=1pt},<-]
    \path (l.south) ++ (0,-1em) node[anchor=north west,color=Mahogany!80] (lambda){\footnotesize Fusion operator};
    \draw [color=Mahogany!80](l.south) |- ([xshift=-0.3ex,color=Mahogany]lambda.south east);
    \path (g.south) ++ (0,-0.9em) node[anchor=north east,color=TealBlue!80] (gfeat){\footnotesize Merged query features};
    \draw [color=TealBlue!80](g.south) |- ([xshift=-0.3ex,color=TealBlue]gfeat.south west);
\end{tikzpicture}

The highlighted maps can be combined through any point-wise operation, addition
$\oplus$, multiplication~$\odot$, subtraction $\ominus$, concatenation $[\cdot,
\cdot ]$ or more sophisticated ones. An example of such a fusion module is
presented in \cite{liu2021dynamic}. The fusion operator concatenates the results
of the addition and the subtraction of the highlighted features: $\Omega(H_q^c,
H_s^c) = [H_q^c \oplus H_s^c, H_q^c \ominus H_s^c]$ (see Example C in
\cref{fig:aaf_framework}). The point-wise operators can also contain small
trainable models such as in \cite{han2021meta}, where small CNNs (\eg
$\psi_{dot}$, $\psi_{sub}$, and $\psi_{cat}$) are applied after the point-wise
operators, but before the concatenation: $\Omega(H_q^c, H_s^c) =
[\psi_{dot}(H_q^c \odot H_s^c), \psi_{sub}(H_q^c \ominus H_s^c),
\psi_{cat}([H_q^c, H_s^c])]$.

Except for the fusion layer which must be applied last, spatial alignment and
global attention can be applied in any order. This flexibility is required to
cover methods that apply global attention before alignment, such as DANA
\cite{chen2021should}. The whole architecture of the AAF framework is
illustrated in \cref{fig:aaf_framework}, in which examples from the previous
sections are also depicted. The modular structure of the framework enables the
implementation of a wide variety of attention mechanisms while keeping all other
hyperparameters fixed. In this way, it is a useful tool for FSOD methods
comparison.

\subsection{Implementation details}
\label{sec:aaf_implementation_details}
Before presenting the results of a fair comparison between several FSOD
approaches re-implemented in the AAF framework, we must review its
implementation. To make the comparisons fair, some design choices are kept fixed
in the framework. The backbone is a ResNet-50 with a 3-layers Feature Pyramid
Network on top. It extracts features at 3 different levels, which helps the
network to detect objects of various sizes. The detection head is based on FCOS
\cite{tian2019fcos}, a one-stage detector (a choice motivated by insights from
\cref{chap:prcnn}). The head is composed of a few convolutional layers with
ReLU activations followed by two branches (convolutional as well) for
classification and regression respectively. The AAF framework is applied between
the backbone and the detection head. As features are extracted at multiple
levels, attention mechanisms are also implemented to work at different scales.
This may differ from the original implementations, but most methods are already
designed to work at multiscale (see \cref{tab:fsod_comparison}). The model is
trained in an episodic manner. During each episode, a subset $\mathcal{C}_{ep}
\subset \mathcal{C}$ of the classes is randomly sampled. A query set is sampled
for each episode, containing 100 images per class. This set only contains
annotations of the episode classes and is leveraged for the loss computation and
the optimization of the model. A support set is also sampled at the beginning of
each episode containing $K$ examples for each episode class. The support
examples are used through the attention mechanisms to condition the detection on
the episode classes. 

The training is divided into two phases \textit{base training} and
\textit{fine-tuning}. During base training, only base classes can be sampled
($\mathcal{C}_{ep} \subset \mathcal{C}_{base}$) and one image per class is
drawn for the support set ($K=1$). The optimization is done with SGD and a
learning rate of $\num{1e-3}$ for 1000 episodes. During \textit{fine-tuning},
the backbone is frozen, the learning rate is divided by 10, and the episode
classes can be sampled from $\mathcal{C}_{base} \cup \mathcal{C}_{novel}$, with
at least one novel class per episode. Examples from novel classes are selected
among the $K$ examples sampled once before fine-tuning. Each model is fine-tuned
separately for different values of $K \in {1,3,5,10}$. During both training
phases, the same loss function is optimized, as defined in FCOS (see
\cref{tab:od_losses}).

\subsubsection{Augmentations and Cropping Strategies}
\label{sec:aaf_aug_crop}
Some existing works leverage sophisticated training strategies (e.g. auxiliary
loss functions \cite{fan2020few}, hard examples mining
\cite{zhang2021hallucination} or multiscale training \cite{deng2020few}). While
this certainly improves the quality of the detections, it introduces new
parameters to tune and makes the comparison with other works difficult. As the
focus of this study is on attention mechanisms, we choose not to reimplement all
these improvements. However, to remain competitive with existing works, we
propose a novel augmentation pipeline specifically designed for object
detection. It is defined in the next paragraph which includes a cumulative study
of its different components on DOTA. In addition, we discuss the choice of the
support extraction strategy. Basically, this refers to how the support examples
are extracted from the support images since most parts of these images are
irrelevant for the task. From our analysis, it seems that this design choice
significantly influences the model performance (see
\cref{tab:cropping_methods}). However, it is barely discussed in the FSOD
literature. We find that the best strategy is to crop the support example and
resize it to a fixed-size patch. This strategy is thus fixed for all our
experiments.

\paragraph*{Augmentations}
To improve the performance of the methods implemented in the AAF framework and
be competitive with existing works, we propose an augmentation pipeline
specifically designed for detection. Some regular augmentation techniques cannot
be directly applied for object detection as they can completely mask objects from
the image. This deteriorates the training as the model will not be able to
detect hidden objects, but it will be penalized anyway. 

First, we apply random horizontal and vertical flips (only for aerial images)
and color jitter. As it does not remove entire objects, these can be applied
directly to the images. However, some other classical techniques such as random
crop-resize and random cut-out cannot be applied directly. Therefore, we developed
object-preserving random crop-resize and cut-out. The main idea is
to apply these transformations at the object level and not at the image level.
This ensures that objects of interest are still visible in the transformed
image. For crop-resize, a non-empty subset of the objects in the image is
randomly sampled. An overall bounding box is computed around all these objects
and the cropped area is randomly drawn between this box and the image borders.
Hence, it guarantees the presence of at least one object inside the cropped
image. For cut-out, the principle is similar, instead of cutting out a random
part of the image, the cut is applied at the object level so that it does not
hide out entire objects. \cref{fig:aaf_augmentations} compares the two
proposed augmentations with their naive implementations. 

\begin{figure}[h]
    \centering
    \includegraphics[width=0.8\textwidth]{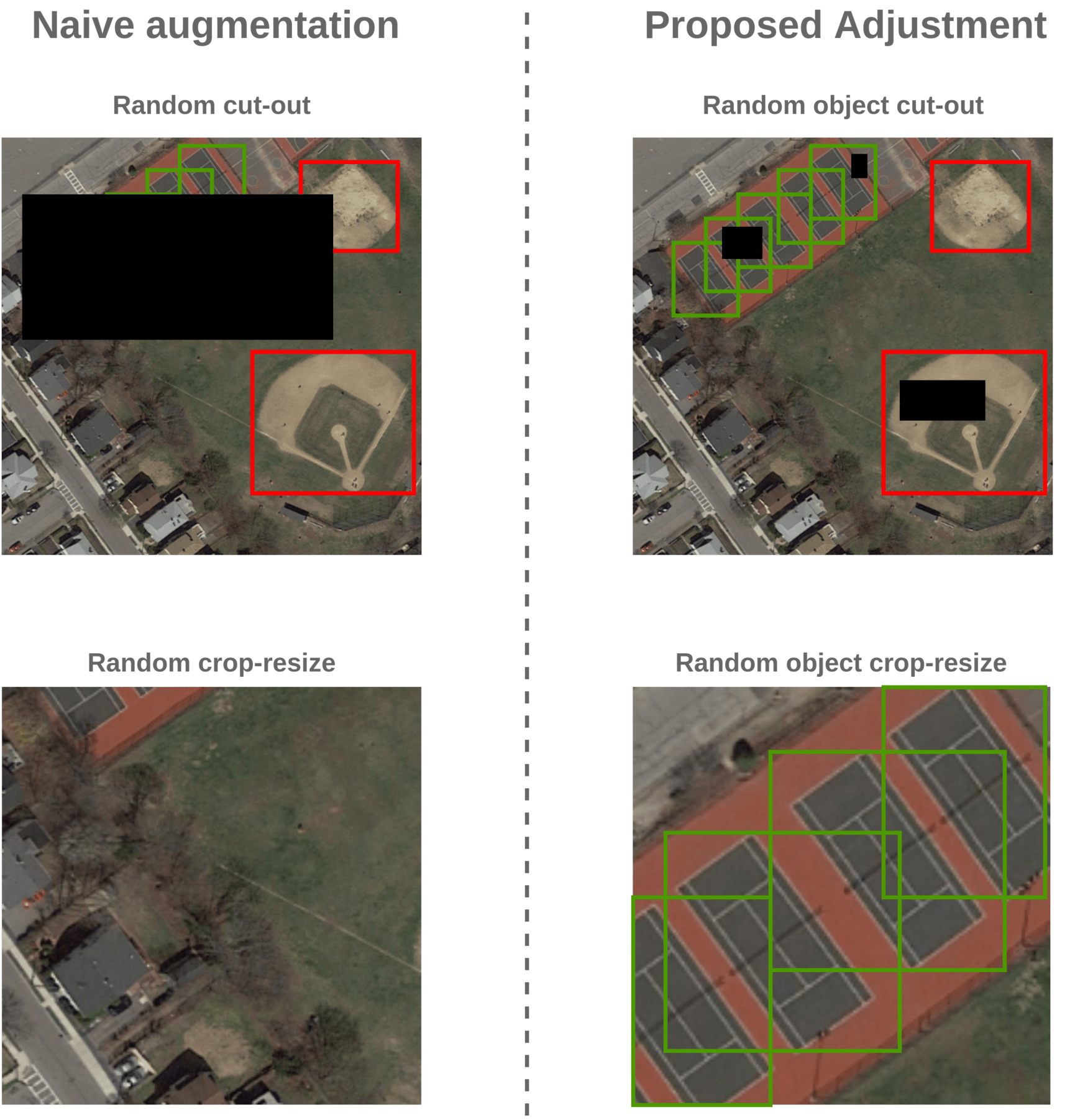}
    \caption[Object-level augmentation techniques]{Difference between naive
    augmentation techniques (left) and our adaptation to object detection
    (right). The proposed transformations are applied at the object-level to
    preserve objects integrity.}
    \label{fig:aaf_augmentations}
    \vspace{-1em}
\end{figure}

We performed a cumulative study to assess the benefits of each component of the
augmentation pipeline. To do so, we implemented Feature ReWeighting (FRW)
\cite{kang2019few}, a well-known FSOD technique, within the AAF framework. FRW
is then trained on DOTA dataset. This experiment is summarized in
\cref{tab:augmentation-performance}. It shows that the augmentation is
beneficial for the performance on novel classes but detrimental for base
classes. Surprisingly, performance drops on base classes with augmentation. 
More specifically, it seems that image flips are responsible for the performance
loss on base classes (see first and second columns in
\cref{tab:augmentation-performance}). Base classes performance drops when adding
flips but remains mostly constant when adding other types of augmentations. One
crucial difference between flips and other augmentations is that we choose to
apply flips also on support examples. This choice was made to increase the
diversity of the support set during fine-tuning. For novel classes, only a few
images are available as support during fine-tuning, and we want to avoid
overfitting these examples. Although other types of augmentations could have
been employed for this, we wanted to avoid disrupting too much of the
information in the support. This choice may be the cause of the performance drop
on base classes. To verify this hypothesis, we conduct a few more experiments
disabling the flip in the support set. With the \textit{default} cropping
strategy (see next paragraph), the experiments confirm the hypothesis: no
performance drop is observed when supports are not flipped. However, support
augmentation certainly interacts with the support cropping strategy, thus we
also tried with the \textit{same-size} cropping strategy. Surprisingly, it does
not produce similar results, and in this case, flipping support examples is
actually beneficial for base classes performance. This suggests a complex
interaction between augmentation on the support set and the cropping strategy.
The choice made in our experiments may not be optimal in this regard, and a
deeper analysis of this interaction should be conducted in future work (\eg
studying the effect of various augmentations on a synthetic dataset to have
better control over the images). Finally, the base class performance loss is
compensated by clear improvements on novel classes. As this is the main goal of
FSOD, we choose to adopt the original augmentation pipeline, including flips in
the support set, for all our experiments. Other augmentations are not applied to
the support set to prevent disrupting their representations and therefore the
conditioning of the model.

\begin{table}[]
    \centering
    
    \resizebox{0.7\textwidth}{!}{%
    \begin{tabular}{@{\hspace{3mm}}ccccccc@{\hspace{3mm}}}
        \toprule[1pt]
        \textbf{\# Shots}   & \textbf{}      & \textbf{Baseline} & \textbf{+ Flip} & \textbf{+ Color} & \textbf{+ Cutout} & \textbf{+ Crop} \\ \midrule
        \multirow{2}{*}{1}  & \textbf{Base}  & \bbf{48.83}    & 45.80           & 45.96	         &47.20	             &45.68 \\
                            & \textbf{Novel} & 6.15             & 5.25             & 6.92            &6.44	            &\rbf{10.03}\\ \addlinespace[1mm]
        \multirow{2}{*}{3}  & \textbf{Base}  & \bbf{51.06}    & 47.70           & 47.03	         &46.10	             &45.22 \\
                            & \textbf{Novel} & 14.41             & 18.59           & 18.59	         &19.74	             &\rbf{21.95} \\ \addlinespace[1mm]
        \multirow{2}{*}{5}  & \textbf{Base}  & \bbf{52.66}    & 49.38           & 50.09	         &50.28	             &48.74 \\
                            & \textbf{Novel} & 19.25             & 23.71           & 25.08	         &25.01	             &\rbf{25.95} \\ \addlinespace[1mm]
        \multirow{2}{*}{10} & \textbf{Base}  & \bbf{53.84}    & 50.80           & 50.77	         &50.41	             &50.27 \\
                            & \textbf{Novel} & 28.56             & 31.23           & 28.08	         &34.13	             &\rbf{35.95} \\ \bottomrule[1pt]
        \end{tabular}
    } \caption[Cumulative study about augmentations]{Cumulative study of the
    proposed augmentation techniques on DOTA using FRW \cite{kang2019few}.
    $\text{mAP}_{0.5}$ is reported for different number of shots.}
    \label{tab:augmentation-performance}
    \end{table}

\paragraph*{Support example cropping strategy}
The support information is only located inside the example bounding box.  The
remaining part of the support image mostly contains irrelevant information
concerning the object class. Therefore, extracting features from the whole
support images is not necessary. But features contained only inside the object's
bounding box might not be sufficient as well. Context can be extremely valuable
in certain cases, especially for small objects. For instance, a car and a small
boat could easily be mistaken without context. Close surroundings of the objects
can help for recognition. 

A common strategy for support information extraction is proposed by
\cite{kang2019few}. They concatenate in the channel direction the support image with the support
object's binary mask (rectangular, computed from the bounding box) and pass this
to an extractor network. This has two main drawbacks. First, it is necessary to
compute features from the entire support image, which is a loss of resources.
Second, the same network cannot be used for extracting features in query and
support images as it needs an additional input channel to process the mask.
Hence, the network cannot be pretrained beforehand. This design choice is rarely
discussed, if ever mentioned, in the literature.

In this section, we explore this design choice by implementing several
extraction strategies. We did not reimplement the technique from
\cite{kang2019few} as it requires to have two different networks for support and
query feature extraction. However, some of our techniques are quite close to
what they proposed. These techniques are described below and
\cref{fig:aaf_cropping} illustrates most of them:

\begin{itemize}[nolistsep]
    \item[-]  \textbf{Default}: the most naive extraction technique. It consists in
    cropping the support image around the support object at a fixed size (e.g. $128\times128$).
    Objects larger than this are simply resized to fit in the patch.
    \item[-]  \textbf{Context-padding}: the cropping occurs exactly as with the
    default strategy, but areas around the objects are masked out. This is close
    to what was proposed by \cite{kang2019few}.
    \item[-]  \textbf{Reflection}: context is replaced by reflection of the
    object. In the case of small objects, the support patch can easily be dominated either
    by irrelevant information or by zeros when using the latter two extraction methods.
    \item[-]  \textbf{Same-size}: all objects are resized to fill the
    support patch entirely (preserving the aspect ratio). It does not change the process
    for large objects, but it prevents smaller objects from being dominated by
    irrelevant information.
    \item[-] \textbf{Multi-scale}: the object is resized and cropped at 3
    different scales. Each scale is responsible for matching small, medium and
    large objects in query images.
    \item[-] \textbf{Mixed}: it is a combination of the default strategy and
    \textit{same-size}. Small objects (i.e. $\sqrt{wh} < 32$) are extracted
    using the default strategy. Larger objects ($\sqrt{wh} \geq 32$) are resized
    into a patch of $128 \times 128$ pixels. Therefore, small objects are not
    upscaled more than 4 times, as they are using the \textit{resize} strategy.
\end{itemize}

\begin{figure}[]
    \centering
    \includegraphics[width=\textwidth]{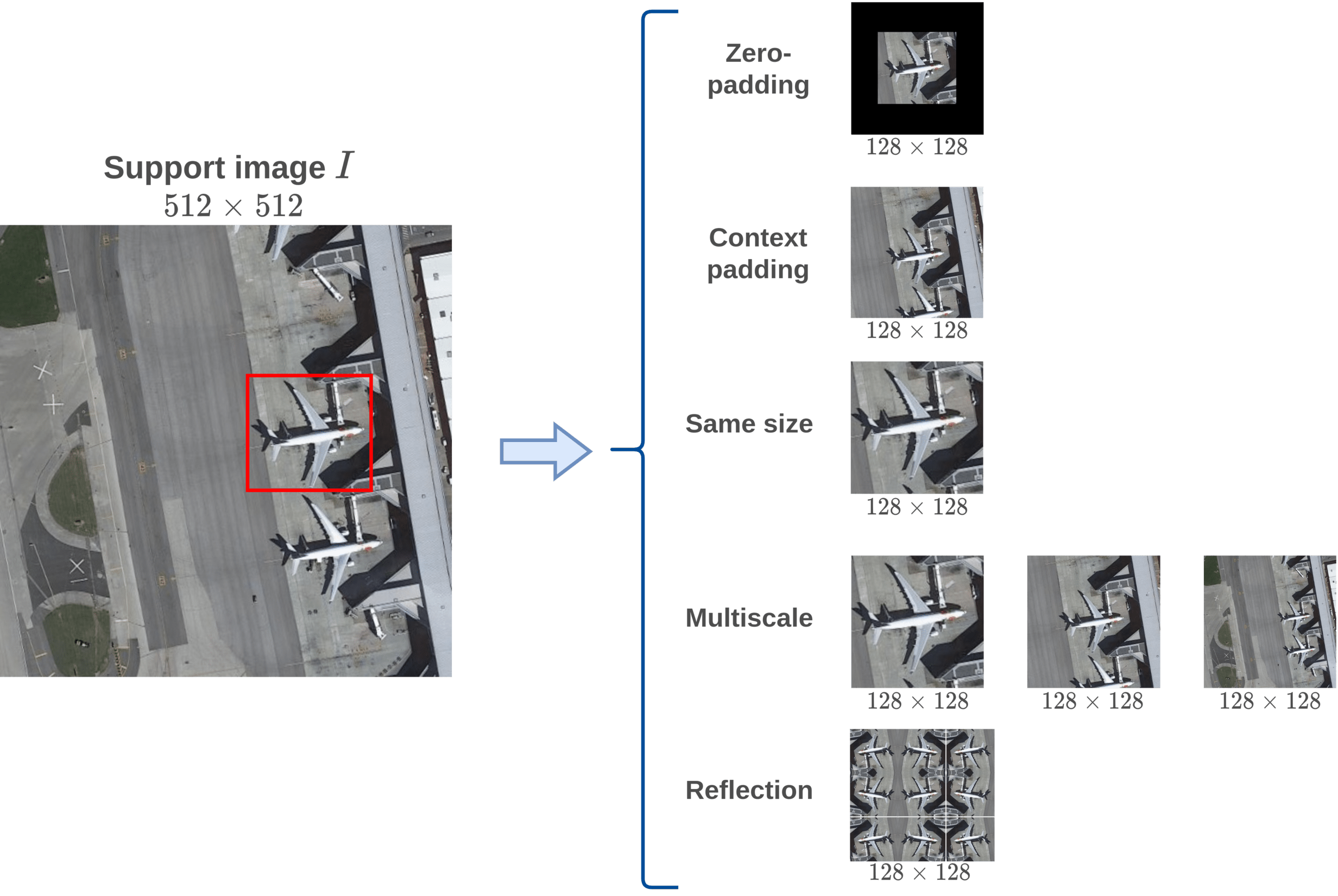}
    \caption[Support example cropping strategies]{Illustration of the different cropping strategies tested. The
    mixed strategy is not illustrated as it is a combination of \textit{default}
    and \textit{same-size}.}
    \label{fig:aaf_cropping}
\end{figure}

These strategies are compared in Table \ref{tab:cropping_methods}. Even though
\textit{same-size} gets the best overall results on novel classes (regardless of
object sizes), there is no clear conclusion. It is outperformed by both
\textit{reflection} and \textit{mixed} for base classes. No method outperforms
the others on all object sizes, not even the ones designed to be more robust to
size (\textit{multiscale} and \textit{mixed}). The latter two techniques
introduce discrepancies in the features: objects of similar size (\eg from both
sides of the size limit) are processed differently, resulting in really different
features. As \textit{same-size} gives the best results on novel classes, we
choose to use this strategy for all our experiments. Yet, in the light of our
performance analysis in \cref{chap:aerial_diff}, we can understand some results
from \cref{tab:cropping_methods}. The \textit{multiscale} strategy does not
perform very well as it introduces small objects features which seems
detrimental for the good conditioning of the network. On the contrary,
\textit{same-size} only generates large objects as support which is a better
strategy. Finally, \textit{reflection} performs surprisingly well for small
objects while preserving their small size. The redundancy generated by the
reflection of such small objects certainly reinforces the object's features.

\begin{table}[]
    \centering
    
    \resizebox{0.8\textwidth}{!}{%
    \begin{tabular}{@{\hspace{3mm}}lccccccccc@{\hspace{3mm}}}
        \toprule
        \multirow{2}{*}{\textbf{Cropping Strategy}}  & \multicolumn{4}{c}{\textbf{Base classes}}                                &  & \multicolumn{4}{c}{\textbf{Novel classes}}                          \\ \cmidrule(lr){2-5} \cmidrule(lr){7-10}
                                 & \textbf{All}    & \textbf{S}      & \textbf{M}      & \textbf{L}    &  & \textbf{All}    & \textbf{S}      & \textbf{M}      & \textbf{L}                \\ \midrule
        \textbf{Default}         & 49.80	     & 24.26	      & 57.36          & \bbf{63.90}            &  & 24.97          &	7.66	     &24.31           &34.48   \\
        \textbf{Context padding} & 50.03	     & 24.96	      & 59.72          & 63.03                  &  & 29.60          & \rbf{10.71}	         &26.76           &50.91   \\
        \textbf{Same-size}       & 50.63	     & \bbf{30.83}	      & 59.44          & 62.47              &  & \rbf{32.19}          &	8.32	     &33.21           &\rbf{56.54}   \\
        \textbf{Multiscale}      & \bbf{51.44}	     & 29.03	      & 59.78          & 63.27              &  & 26.95          &	8.44	     &\rbf{33.48}           &45.63   \\
        \textbf{Reflection}      & 50.28	     & 26.05	      & 59.36          & 62.49                  &  & 25.50          &	7.22	     &20.66           &44.29   \\
        \textbf{Mixed}           & 50.95	     & 27.16	      & \bbf{60.48}          & 60.67            &  & 27.96          &	9.51	     &26.52           &48.89   \\ \bottomrule
        \end{tabular}%
    } \caption[Influence of support cropping strategies on FSOD
    performance]{Comparison of support extraction strategies on base and novel
    classes with DOTA dataset and FRW method with 10 shots. $\text{mAP}_{0.5}$
    is reported on all objects and separately on objects of different sizes:
    small (S), medium (M) and large (L).}
    \label{tab:cropping_methods}
    \end{table}

\subsection{Fair comparison of Few-Shot Object Detection Methods with AAF}
\label{sec:aaf_fair_comparison}

To showcase the flexibility of the proposed AAF framework, we reimplement and
compare multiple existing works. Some methods described in \cref{chap:fsod} are
selected: FRW \cite{kang2019few}, WSAAN \cite{xiao2020fsod}, DANA
\cite{chen2021should}, Meta R-CNN \cite{han2021meta} and DRL
\cite{liu2021dynamic} (see \cref{tab:fsod_comparison}). They have been chosen
because they represent well the variety of attention mechanisms available in the
literature and according to their popularity. FRW is based on class-specific
reweighting vectors, WSAAN has a more sophisticated global attention and
computes reweighting vectors inside a graph structure. DANA and Meta R-CNN
leverage query-support alignment in different manners and DRL only uses a
sophisticated fusion layer. Each of these methods has been reimplemented within
the AAF framework. Of course, some details differ from the original
implementations, but the purpose of this comparison is to compare only the
query-support combination module. In particular, the backbone and the training
strategy (losses and episode tasks) may differ. We first conduct such a
comparative experiment on Pascal VOC \cite{everingham2010pascal} and MS COCO
\cite{lin2014microsoft} datasets. On these datasets, the performance achieved by
our implementations is close (\ie within 2 points of mAP$_{0.5}$) to the values reported
in the original papers. Then, we use the framework to compare the performance of
some methods on two aerial datasets:  DOTA \cite{xia2018dota} and DIOR
\cite{li2020object}.
\subsubsection{Evaluation protocol}
The evaluation is also conducted in an episodic manner, following
recommendations from \cite{huang2021survey}. The performance is averaged over
multiple episodes, each containing 500 examples for each class and this
operation is repeated multiple times with randomly sampled support sets. The
query and support examples are drawn from the test set, thus the support
examples are different from the ones used during fine-tuning. This prevents
overestimations of the performance due to overfitting on the support examples.
To ensure a fair comparison between the various methods, the same random seed is
used for all evaluations, thus the support and query examples are the same. 

\subsubsection{Natural images}
\label{sec:aaf_natural_images}
\cref{tab:aaf_result_voc} gather the results on Pascal VOC. First, compared to
the FCOS baseline, a slight performance drop on base classes is observed. This
is expected, even if the model has seen a lot of examples of these classes
during training, its predictions are still conditioned on a few examples, which
can sometimes be misleading. On the other hand, performance on novel classes is
significantly lower than the FCOS baseline, especially for low numbers of shots.
The number of shots is crucial for performance on novel classes. The higher the
number of shots, the better the network performs. On average, with 10 examples
per class, the network achieves $0.2$ higher mAP$_{0.5}$ than with 1 example. More
examples provide more precise and robust class representations, improving the
detection. The same phenomenon is observed with base classes to a lesser extent
($+ 0.04$ mAP from 1 to 10 shots). \cref{fig:aaf_method_compare}  displays these
trends clearly, both for base and novel classes. In addition,
\cref{fig:aaf_map_per_class} provides the same results split by class. An
interesting observation from this last figure is the very good detection
performance for the novel class \textit{sheep}. This can be explained easily
from the presence of the class \textit{horse} in the base set. The model has
seen a lot of examples of horses during base training, which makes it learn
visual attributes common with a sheep (\eg four legs, hair and grassy
background). Such a class similarity makes the novel class detection much easier.
Some authors do leverage this fact, as for instance \cite{cao2021few} which
first associates a base class to each novel class before learning to
discriminate between them.

The behavior just described is expected from any few-shot object detection
method. Moreover, performance values are close to what is reported in the
original papers of the reimplemented methods. These results are not the same as
many architectural choices differ from the proposed methods (e.g. backbone,
class splits, etc.). Nevertheless, it confirms that the proposed AAF framework
is flexible enough to implement a wide variety of attention mechanisms. It is
therefore an appropriate tool to compare and design query-support attention
mechanisms.

\begin{table}[]
    \centering
    \resizebox{\textwidth}{!}{%
    \begin{tabular}{@{\hspace{1mm}}cccccccccccccc@{}}
    \toprule[1pt]
                      & \multicolumn{2}{c}{\textbf{FRW} \cite{kang2019few}} &   \multicolumn{2}{c}{\textbf{WSAAN} \cite{xiao2020fsod}} &   \multicolumn{2}{c}{\textbf{DANA} \cite{chen2021should}} &   \multicolumn{2}{c}{\textbf{Meta R-CNN} \cite{han2021meta}} &   \multicolumn{2}{c}{\textbf{DRL} \cite{liu2021dynamic}}&  &  \multicolumn{2}{c}{\textit{\textbf{FCOS Baseline}}} \\ \hline
    $\boldsymbol{K}$  & \underline{Base}         & \underline{Novel}        &   \underline{Base}        & \underline{Novel}            &   \underline{Base}      & \underline{Novel}               &   \underline{Base}         & \underline{Novel}          &   \underline{Base}    & \underline{Novel}               &  &  \underline{\textit{Base}}& \underline{\textit{Novel}}        \\
    1                 & 59.92	                 & 28.22	                &61.70	                    &30.95	                       &62.58	                 &\rbf{32.82}	                           &57.85	                    &30.16	                     &\bbf{64.18}	                 &27.05                            &  &  \multirow{4}{*}{\textit{65.47}}  &   \multirow{4}{*}{\textit{68.02}}                   \\
    3                 & 63.33	                 & 31.12	                &63.52	                    &\rbf{42.19}	                       &\bbf{64.18}	             &33.95	                           &58.70	                    &36.79	                     &61.74	                 &29.64                            &  &                           &                          \\
    5                 & 64.35	                 & \rbf{46.33}	                &64.68	                    &46.16	                       &65.20	                 &42.58	                           &62.14	                    &40.75	                     &\bbf{66.45}	                 &37.34                            &  &                           &                          \\
    10                & 63.16	                 & 48.71	                &65.27	                    &\rbf{51.70}	                       &65.03	                 &50.30	                           &63.38	                    &49.45	                     &\bbf{66.98}	                 &47.99                            &  &                           &  \\ \bottomrule[1pt]
    \end{tabular}%
    } \caption[FSOD performance of 5 existing methods in AAF
    framework on Pascal VOC]{Performance comparison between five selected methods on Pascal
    VOC. All are reimplemented with the proposed AAF framework. Mean average
    precision is reported for each method on base and novel classes separately
    and for various numbers of shots ($K \in \{1,3,5,10\}$).}
    \label{tab:aaf_result_voc}
    \end{table}

\begin{figure}
    \centering
    \includegraphics[width=0.6\textwidth, trim=10 0 0 10,clip]{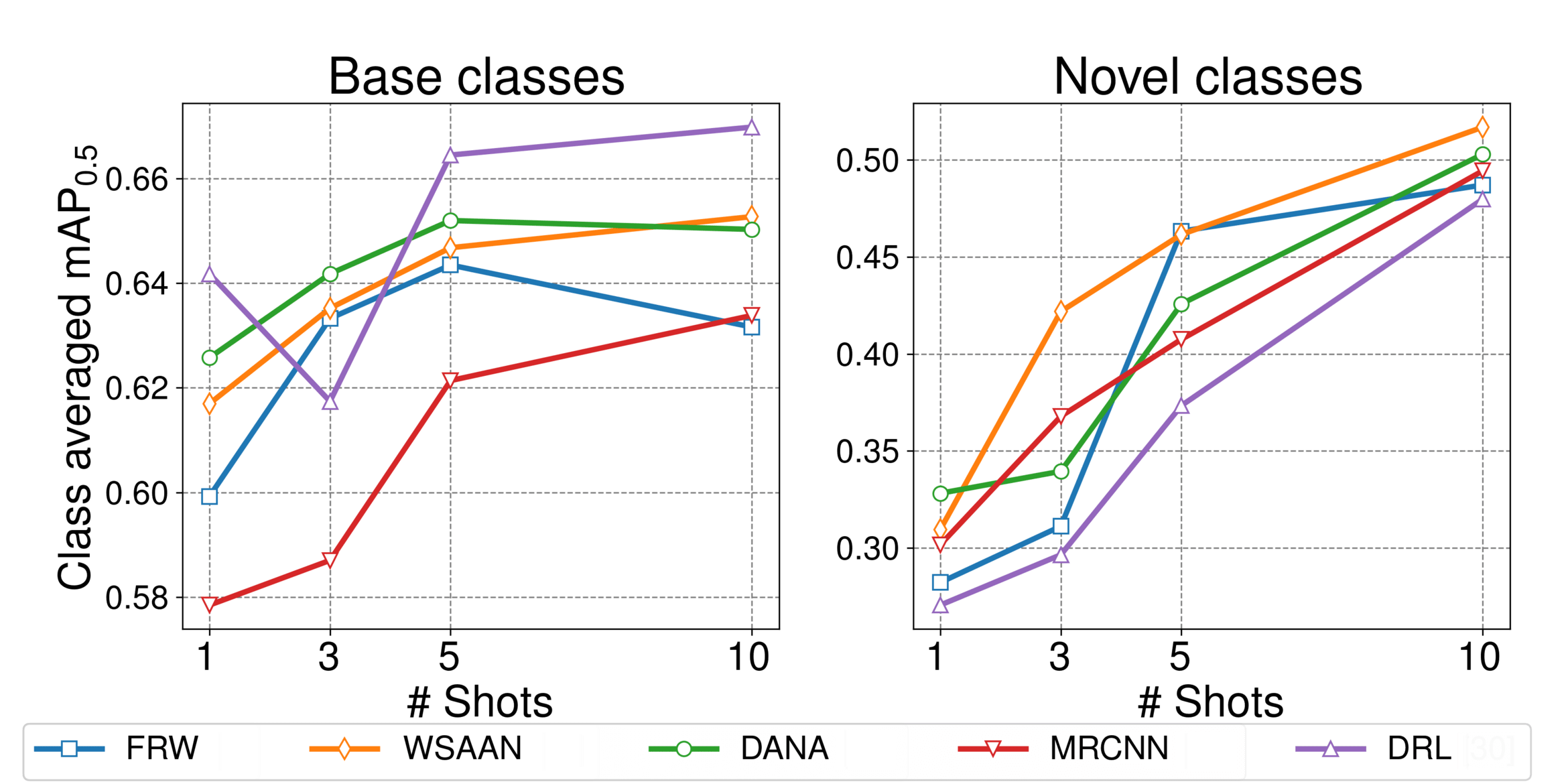}
    \caption[Influence of the number of shots on FSOD performance on Pascal
            VOC]{Evolution of $\text{mAP}_{0.5}$ with the number of shots
            averaged on base and novel classes separately. Each line represents
            one of the reimplemented methods.}
    \label{fig:aaf_method_compare}
\end{figure}

DRL is arguably the simplest method among the five selected as it leverages only
a fusion layer. It combines query features with the features of each support
image through concatenation and point-wise operations, creating class-specific
query features. It is therefore the closest to the regular FCOS functioning.
This explains the very good performance on base classes and lower mAP on novel
classes, compared to the baseline. Regarding the other methods, FRW and WSAAN
can be easily compared as both are based on global attention. The only
difference is how the class-specific vectors are computed. In FRW, they are
globally pooled from the support feature map. However, WSAAN combines the same
vectors with query features in a graph. This certainly provides better
class-specific features and in the end, better results both on base and novel
sets. The remaining methods, DANA and Meta R-CNN both leverage spatial alignment.
While it seems to bring quite an improvement for DANA over FRW and DRL, the gain
is smaller for Meta R-CNN. In both methods, spatial alignment is not used alone. It
is combined with other attention mechanisms. In DANA, a Background Attenuation
block (i.e. a global self-attention mechanism) is applied to the support features to
highlight class-relevant features and soften background ones. In Meta R-CNN, aligned
features are reweighted with global vectors computed from the similarity matrix
between query and support features. This last operation may be redundant as the
similarity information is already embedded into the aligned features, whereas
background attenuation leverages new information. 

\begin{figure}
    \centering
    \includegraphics[width=\textwidth, trim=0 0 0 5,clip]{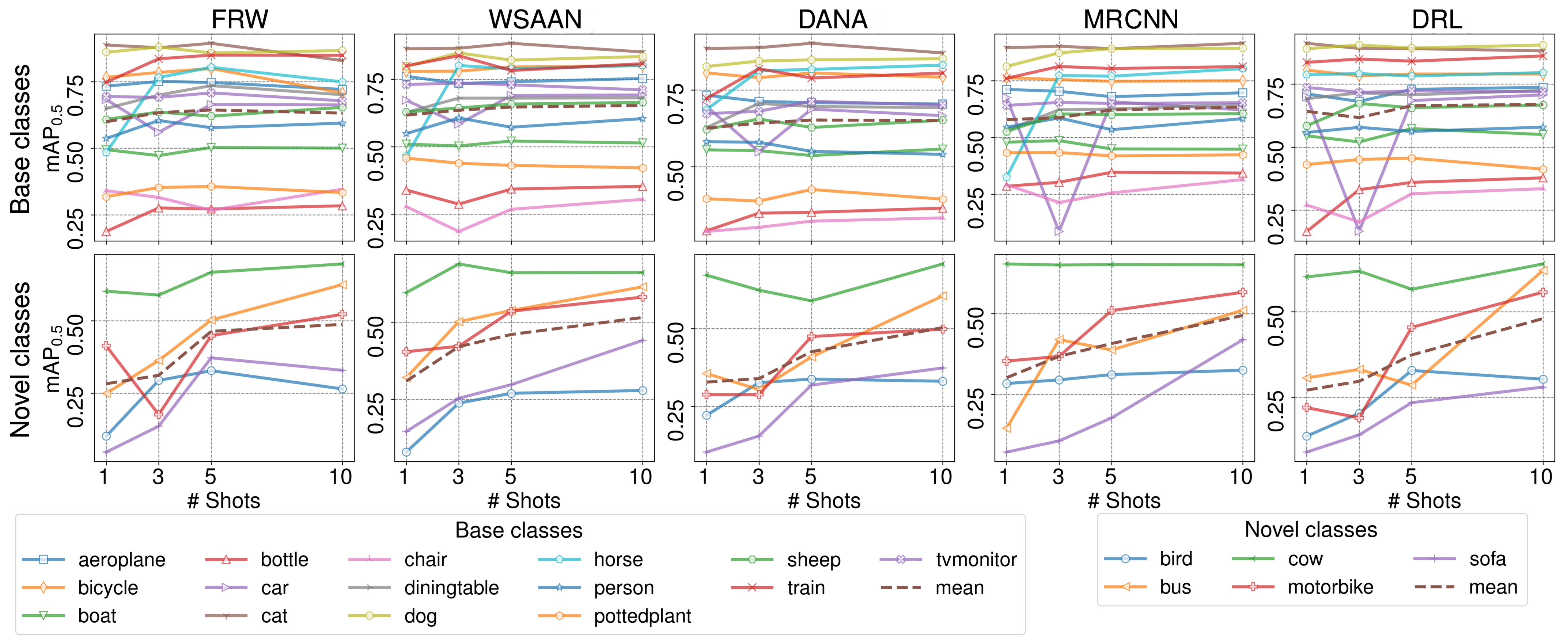}
    \caption[Influence of the number of shots on FSOD performance on Pascal VOC
    per class]{$\text{mAP}_{0.5}$ on Pascal VOC against the number of shots for
    each class and each method. Dashed lines represent average performance on
    all classes, either base classes \textbf{(top)} or novel classes
    \textbf{(bottom)}.}
    \label{fig:aaf_map_per_class}
\end{figure}

From this comparison, one can conclude that both global attention and spatial
alignment are beneficial for FSOD. However, these improvements may not always be
compatible, as shown by the results of Meta R-CNN. Hence, the design of each
component must be done carefully so that spatial alignment, global attention,
and fusion work in unison.

Another set of experiments is conducted on MS COCO dataset. Only the two
best-performing methods on Pascal VOC are selected and trained on MS COCO
following the same experimental setup. The results are summarized in
\cref{tab:aaf_result_coco}. The mAP values are reported following standards from
Pascal VOC ($\text{mAP}_{0.5}$ with one IoU threshold), and MS COCO
($\text{mAP}_{0.5:0.95}$ with several thresholds). MS COCO is a much more
difficult detection benchmark; therefore the numbers of shots is adjusted to 1,
5, 10, and 30 shots. These results comfort the conclusion obtained on Pascal
VOC: the framework is flexible enough to implement various FSOD techniques that
achieve competitive results with state-of-the-art. As for Pascal VOC the models
achieve better performance with more shots. However, unlike on Pascal VOC, base
classes also benefit significantly from a larger number of examples on MS COCO.
MS COCO is more difficult, therefore, the information extracted from the
supports better helps the models. Finally, WSAAN outperforms DANA on Pascal VOC
but performs slightly worse on MS COCO. It can be noted that the results
obtained on a dataset cannot be extrapolated to another without taking into
account the characteristics of the datasets. A method that performs best on a
dataset is not guaranteed to do so on another dataset. This reinforces the need
of a flexible framework that allows fair and easy comparison between FSOD
methods. That way, the best-performing method can be easily selected for a given
problem. Without such a framework, it is difficult to find out from the
literature which method is the most promising for a given application as most
FSOD works focus on natural images. For COSE, this framework is highly valuable
as it will serve as an objective comparison tool for attention-based FSOD
methods.

\begin{table}[]
    \centering
    \resizebox{0.7\textwidth}{!}{%
    \begin{tabular}{@{\hspace{2mm}}ccccccccccccc@{\hspace{2mm}}}
    \toprule[1pt]
    &                        &  & \multicolumn{4}{c}{\textbf{WSAAN} \cite{xiao2020fsod}}                                   &  & \multicolumn{4}{c}{\textbf{DANA} \cite{chen2021should}}                                     \\
                             &  & \multicolumn{2}{c}{$\text{mAP}_{0.5}$} &  &  \multicolumn{2}{c}{$\text{mAP}_{0.5:0.95}$} &  & \multicolumn{2}{c}{$\text{mAP}_{0.5}$}  &  & \multicolumn{2}{c}{$\text{mAP}_{0.5:0.95}$}    \\ \midrule
    $\boldsymbol{K}$         &  & \underline{Base}  &  \underline{Novel} &  & \underline{Base}        & \underline{Novel}  &  & \underline{Base}   & \underline{Novel}  &  & \underline{Base}   & \underline{Novel}         \\
    1                        &  & 33.51	            & 11.97	             &  & 20.12                   &	6.57	           &  & \bbf{35.52}	           & \rbf{14.54}	            & & \bbf{21.30}	            & \rbf{7.77}\\
    5                        &  & 39.88	            & 19.86	             &  & 23.62                   &	10.54              &  &	\bbf{42.79}              & 	\rbf{22.17}           & & 	\bbf{25.19}           & 	\rbf{11.90}\\
    10                       &  & 40.87	            & 21.42	             &  & 24.38                   &	11.49              &  &	\bbf{43.00}              & 	\rbf{23.70}           & & 	\bbf{25.57}           & 	\rbf{12.89}\\
    30                       &  & 41.54	            & 22.21	             &  & 24.74                   &	12.08              &  &	\bbf{43.54}              & 	\rbf{24.36}           & & 	\bbf{25.96}           & 	\rbf{13.35}\\ \bottomrule[1pt]
    \end{tabular}%
    } \caption[Comparison of FSOD performance on MS COCO between WSAAN and
    DANA]{Performance comparison between WSAAN \cite{xiao2020fsod} and DANA
    \cite{chen2021should} on MS COCO. $\text{mAP}_{0.5:0.95}$ (MS COCO mAP, with
    IoU thresholds ranging from 0.5 to 0.95) and $\text{mAP}_{0.5}$ values are
    reported for base and novel classes separately and for different numbers of
    shots: $K \in \{1, 5, 10, 30\}$.}
    \label{tab:aaf_result_coco}
    \end{table}

From these experiments on natural images, it seems clear that DANA performs
best. Therefore, it highlights the importance of feature alignment for
query-support matching. Global attention loses spatial information in support
features which is detrimental to detection. However, global attention methods
should not be overlooked. WSAAN shows impressive performance and even
outperforms slightly DANA on Pascal VOC. It could be interesting to combine both
methods, but this does not seem to be trivial as demonstrated by the results of
Meta R-CNN, which leverages the alignment from DANA and the attention
from FRW, but does not yield better results.

\subsubsection{Aerial images}

To our knowledge, very few works evaluate FSOD methods on aerial images at the
time we proposed the AAF framework. Among those we select FSOD-RSI
\cite{xiao2020few}, which simply applies FRW to aerial images (we will refer to
it as FRW), WSAAN \cite{deng2020few} and our PFRCNN.
In addition, we include DANA inside this comparison as it was the
best-performing technique on natural images. All these methods are evaluated on
different datasets, making their performance comparison challenging. Using the
proposed AAF framework, we compare the performance of these methods on both DOTA
and DIOR. These methods are reimplemented inside the framework and all other
design choices are fixed during the experiments (as described in
\cref{sec:aaf_implementation_details}). \cref{tab:aaf_result_aerial} regroups the
results of the comparison. These results show a slight improvement over the
state-of-the-art on DIOR (WSAAN \cite{deng2020few}). Our implementation of WSAAN
outperforms (8 mAP$_{0.5}$ points on base classes and 0.4 on novel classes) the result
reported in the original paper. However, the attention mechanism employed in
WSAAN is not optimal for aerial images. WSAAN is outperformed by both FRW and
DANA. While this was expected for DANA in the light of results from
\cref{sec:aaf_natural_images}, it was not for FRW. The superiority of DANA over the other
methods on DOTA and DIOR is clear and coherent with the results on natural
images. The more sophisticated attention mechanism, in particular the alignment,
from DANA is better at extracting and leveraging the information from the
support examples. Hence, the detection performance is higher. It is particularly
beneficial for small numbers of shots: the extracted information is semantically
robust.

\begin{table}[]
    \centering
    \resizebox{\textwidth}{!}{%
    \begin{tabular}{@{}ccccccccccccccccccccccccccc@{}}
        \toprule[1pt]
        \multicolumn{1}{l}{} & \multicolumn{11}{c}{\textbf{DOTA}}                                                                                                                                                                                              &                      & \multicolumn{14}{c}{\textbf{DIOR}}                                                                                                                                                                                                                     \\ \cmidrule(lr){2-12} \cmidrule(lr){14-27}
                             & \multicolumn{2}{c}{\textbf{FRW}}                      & \textbf{} & \multicolumn{2}{c}{\textbf{WSAAN}}                                        & \textbf{} & \multicolumn{2}{c}{\textbf{DANA}}                                       & \textbf{} & \multicolumn{2}{c}{\textbf{PFRCN}}   &                      & \multicolumn{2}{c}{\textbf{FRW}}                     & \textbf{} & \multicolumn{2}{c}{\textbf{WSAAN}}                     & \textbf{} & \multicolumn{2}{c}{\textbf{DANA}}                                       & \textbf{} & \multicolumn{2}{c}{\textbf{PFRCN}}  &  & \multicolumn{2}{c}{\textbf{WSAAN$^\dagger$}}                       \\ \midrule
        $\boldsymbol{K}$     & \underline{Base} & \underline{Novel}                  &           & \underline{Base}                     & \underline{Novel}                  &           & \underline{Base}                   & \underline{Novel}                  &           & \underline{Base} & \underline{Novel}                            &                      & \underline{Base} & \underline{Novel}                 &           & \underline{Base}                    & \underline{Novel}&           & \underline{Base}                   & \underline{Novel}                  &           & \underline{Base} & \underline{Novel}                           &  & \underline{Base}   & \underline{Novel}                                     \\
        1                    & 47.24                    & \rbf{13.35}                      &           & 45.55                    & 12.19                                          &           & \bbf{49.38}                              & 12.52                              & \textbf{} & 34.98                     & 7.51                                &                      & 56.67                    & 16.92    &           & 56.41                    & 15.48              &           & \bbf{58.78}                    & \rbf{20.64}      &           & 40.66                     & 6.07                                     &  & -                  & -                                                     \\
        3                    & 46.50                    & \rbf{25.32}                      &           & 44.18                    & 24.42                                          &           & \bbf{49.67}                              & 20.70                              &           & 34.58                     & 9.33                                &                      & 58.05                    & 25.08    &           & 51.72                    & 13.84              &           & \bbf{59.14}                    & \rbf{27.26}      &           & 40.48                     & 7.51                                     &  & -                  & -                                                     \\
        5                    & 48.60                    & 29.57                      &           & 47.56                    & \rbf{31.44}                                          &           & \bbf{52.49}                              & 24.96                              &           & 36.09                     & 11.33                               &                      & 60.75                    & 32.58    &           & 60.79                    & 30.38              &           & \bbf{62.12}                    & \rbf{34.16}      &           & 41.97                     & 8.55                                     &  & -                  & 0.25                                                  \\
        10                   & 49.04                    & 35.29                      &           & 46.72                    & 35.12                                          &           & \bbf{53.99}                              & \rbf{36.50}                              &           & 36.32                     & 11.55                               &                      & 61.30                    & 37.29    &           & \bbf{62.79}                    & 32.38              &           & 62.71                    & \rbf{38.18}      &           & 42.37                     & 9.16                                     &  & 0.54               & 0.32                                                  \\ \bottomrule[1pt]
        \end{tabular}%
    } \caption[FSOD performance of AAF Framework Implementation on aerial images]{Comparison of $\text{mAP}_{0.5}$ of several methods on DOTA and
    DIOR datasets. For each method, mAP is reported for different number of
    shots $K \in \{1, 3, 5, 10\}$ and separately for base and novel classes.
    Blue and red values represent the best performance on base and novel classes
    respectively, for each dataset. $^\dagger$ denotes results taken directly
    from the original papers.}
    \label{tab:aaf_result_aerial}
    \end{table}

These results confirm the analysis conducted in \cref{chap:aerial_diff}, the
performance gap between the classical baseline (i.e. FCOS) and the few-shot
approaches is larger on aerial images. On natural images, the performance drop
between the few-shot approach and the regular baseline is nearly inexistent for
base classes and around 25\% for novel classes. On aerial images these drops are
largely increased: $\sim 15\%$ and $\sim 50\%$ for base and novel classes
respectively. This can be guessed from
\cref{tab:aaf_result_voc,tab:aaf_result_aerial}, but detailed gaps are provided
in \cref{tab:aaf_rmap_values}. Following the analysis from
\cref{chap:aerial_diff}, the main reason behind this performance gap between
natural and aerial images is the size of the objects in the image. Small objects
are much more difficult to detect and also are poor representatives of a
semantic class as they contain little information. Therefore, the most sensible
direction to pursue is to design new attention mechanisms specifically built for
small objects. As it happens, this will be discussed in the next section.

\begin{table}[!h]
    \centering
    \resizebox{0.7\textwidth}{!}{%
    \begin{tabular}{@{\hspace{2mm}}llccccccccl@{\hspace{2mm}}}
    \toprule[1pt]
                                    &               & \multicolumn{2}{c}{\textbf{DOTA}} &  & \multicolumn{2}{c}{\textbf{DIOR}} & \textbf{} & \multicolumn{2}{c}{\textbf{Pascal VOC}} &  \\
                                    &               & \textbf{Base}   & \textbf{Novel}  &  & \textbf{Base}   & \textbf{Novel}  &           & \textbf{Base}      & \textbf{Novel}     &  \\ \cmidrule(r){1-10}
    \multirow{5}{*}{\rotatebox[origin=c]{90}{\textbf{$\text{mAP}_{\mathbf{0.5}}$}}}  & FCOS baseline & 60.87            & 69.69            &  & 72.82	&81.48 & &	65.47	&68.02   &  \\ \cmidrule(lr){2-10}
                                    & FRW           & 49.04	          &35.29	        &  &61.30	         &37.29            &           &63.16	            &48.71               &  \\
                                    & WSAAN         & 46.72	          &35.12	        &  &\bbf{62.79}	         &32.38            &           &\bbf{65.27}	            &51.70               &  \\
                                    & DANA          & \bbf{53.99}	          &\rbf{36.50}	        &  &62.71	         &\rbf{38.18}            &           &65.17	            &\rbf{52.26}               &  \\
    \multirow{5}{*}{\rotatebox[origin=c]{90}{\textbf{RmAP (\%)}}}                               &               &                 &                 &  &                 &                 &           &                    &                    &  \\
                                    & FRW           & -19.43          & -49.36          &  & -15.83          & -54.23          &           & -3.52              & -28.39             &  \\
                                    & WSAAN         & -23.24          & -49.60          &  & -13.78          & -60.26          &           & -0.30              & -24.00             &  \\
                                    & DANA          & -11.30          & -47.63          &  & -13.88          & -53.14          &           & -0.46              & -23.17             &  \\ 
                                    &               &                 &                 &  &                 &                 &           &                    &                    &  \\\bottomrule[1pt]
    \end{tabular}%
    }
    \caption{$\text{mAP}_{0.5}$ and RmAP values for some reimplemented methods
    and XQSA with $K=10$ shots.}
    \label{tab:aaf_rmap_values}
    \end{table}

\section{Cross-Scales Query-Support Alignment for Small FSOD}
\label{sec:xscale}
\vspace{-1em}

From the analysis in \cref{chap:aerial_diff} and the previous section, it is
clear that a new attention mechanism specifically designed for small objects is
required to get reasonable performance on aerial images. To this end, we propose
a novel alignment method that combines information from multiple scales:
Cross-Scales Query-Support Alignment (XQSA). This differs from existing methods
which often work independently at different scales. Conversely, XQSA combines
the information from various scales and sources (\ie query and support images).
Its original motivation is to unlock matching support examples with query
objects belonging to the same class even though their sizes differ. With
existing methods, this was prohibited as same-class objects with different sizes
have non-similar features and are not matched by similarity-based attention
mechanisms.

\subsection{XQSA definition}
\label{sec:xqsa_definition}
In this section, we detail the functioning of our proposed Cross-Scales
Query-Support Alignment module. First, features are extracted from the query and
support images with a backbone network $f$. In our implementation, $f$ is a
ResNet-50 with an FPN attached. It outputs feature maps at three distinct
resolution levels:
\begin{align}
    \{F_{q,0}, F_{q,1}, F_{q,2}\}  &= f(I_q), \\
    \{F^c_{s,0}, F^c_{s,1}, F^c_{s,2}\}  &= f(I_s^c).
\end{align}

All query features $F_{q,i} \in \mathbb{R}^{w_{q,i}\times h_{q,i} \times d}$,
for $i \in \{0,1,2\}$ (i.e. from different levels) are flattened and
concatenated into a unique representation $F_q \in \mathbb{R}^{n_q\times d}$, with
$n_q = \sum_i w_{q,i} h_{q,i}$. Here, $w_{q,i}$ and $h_{q,i}$ denote the size of the
query feature map at level $i$, this size depends on the query image size and
the stride of the corresponding level in the backbone. The same operation is
performed for all support features $F_{s,i}^c$. When more than one shot is
available per class, support features are average at each level $i$:
\begin{equation}
    F_{s,i}^c = \frac{1}{K} \sum\limits_{k=1}^K F_{s,i}^{c, k}.
\end{equation}

Then, following the ViT paradigm, the support and query features are linearly
projected into the \textit{queries}, \textit{keys} and \textit{values} matrices
$Q$, $K$ and $V$. Specifically, the query features are used to produce the
\textit{queries} while \textit{keys} and \textit{values} are computed from the
support features:

\vspace{8mm}
\begin{align}
    Q = \tikzmarknode{g}{\highlight{Brown}{$F_q$}} W_Q &= [\tikzmarknode{s}{\highlight{Violet}{$F_{q,0}, F_{q,1}, F_{q,2}$}}] \tikzmarknode{n}{\highlight{Green}{$W_Q$}}, \\
    K^c = \tikzmarknode{r}{\highlight{Brown}{$F_s^c$}} W_K &= [\tikzmarknode{l}{\highlight{Violet}{$F_{s,0}^{c}, F_{s,1}^{c}, F_{s,2}^{c}$}}] \tikzmarknode{n}{\highlight{Green}{$W_K$}}, \\
    V^c = \tikzmarknode{r}{\highlight{Brown}{$F_s^c$}} W_V &= [\tikzmarknode{l}{\highlight{Violet}{$F_{s,0}^{c}, F_{s,1}^{c}, F_{s,2}^{c}$}}] \tikzmarknode{t}{\highlight{Green}{$W_V$}},
\end{align}
\begin{tikzpicture}[overlay,remember picture,>=stealth,nodes={align=left,inner ysep=1pt},<-]
    \path (s.north) ++ (0,1em) node[anchor=south west,color=Violet!80] (lambda){\footnotesize Per level features};
    \draw [color=Violet!80](s.north) |- ([xshift=-0.3ex,color=Violet]lambda.north east);
    \path (g.north) ++ (0,1em) node[anchor=south east,color=Brown!80] (gfeat){\footnotesize Concatenated \\\footnotesize multiscale features};
    \draw [color=Brown!80](g.north) |- ([xshift=-0.3ex,color=Brown]gfeat.north west);
    \path (t.south) ++ (0,-1em) node[anchor=north east,color=Green!80] (mat){\footnotesize Learned projection matrices};
    \draw [color=Green!80](t.south) |- ([xshift=-0.3ex,color=Green]mat.south west);
\end{tikzpicture}

where $W_Q$, $W_K$ and $W_V$ are learnable projection matrices, which are
implemented as linear layers in practice. From this, an affinity matrix is
computed between the queries and the keys, and then the aligned support features $A_s^c$
are computed as: 
\begin{align}
    \lambda^c_s &= \text{Softmax}(\frac{Q{K^c}^T}{\sqrt{d}}), \\
    A_s^c &= \lambda^c_s V^c.
\end{align}

\begin{figure}
    \centering
    \includegraphics[width=\textwidth]{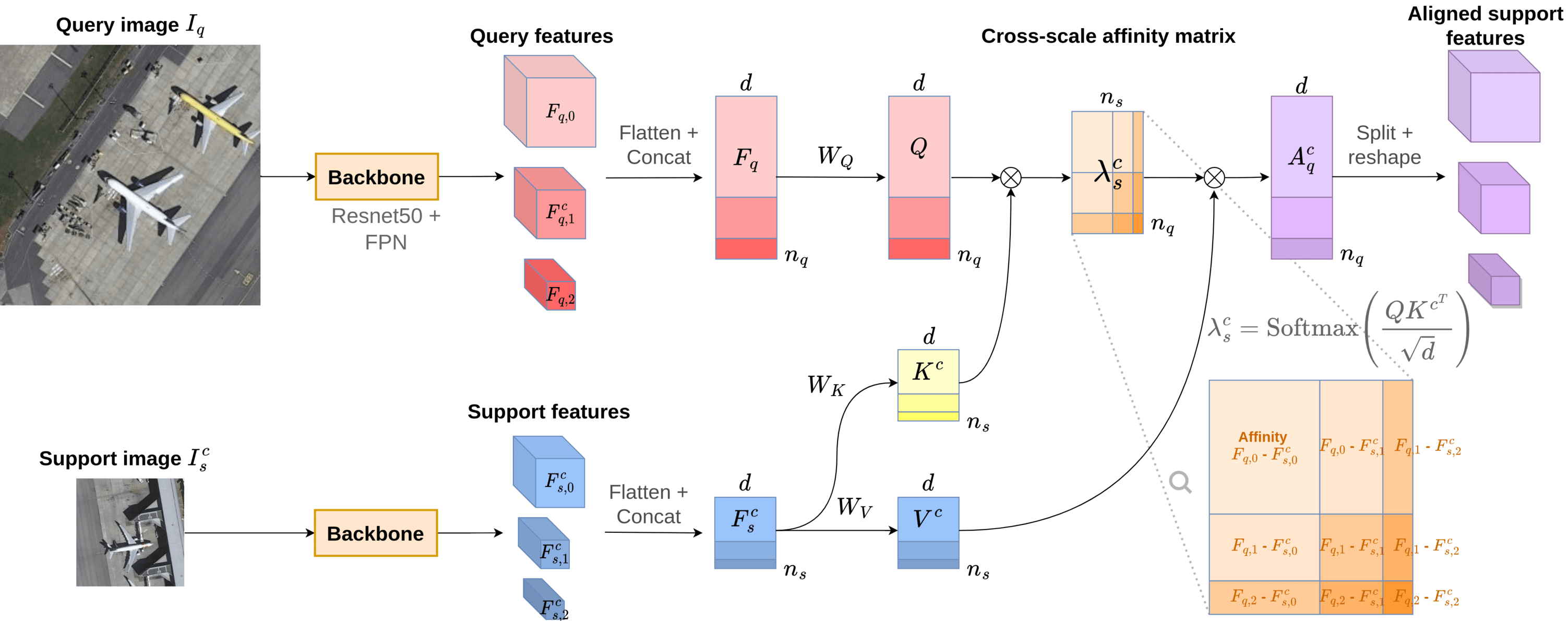}
    \caption[Cross-scales Query-Support Alignment diagram]{Diagram illustrating
    the proposed cross-scales query-support alignment method. Features are
    extracted from the query and support images at multiple scales and combined
    to form an affinity matrix. For each query feature position, the affinity is
    computed with any position in the support features. This allows object
    matching across different feature levels.}
    \label{fig:aaf_xscale}
\end{figure}

For completeness with the definition of the AAF framework, $\lambda_q = \textrm{I}$,
meaning that the query features are not modified. The aligned features are
finally processed by a two-layer MLP with skip connections. LayerNorm
\cite{ba2016layer} is applied before alignment and the MLP. These supplementary
computations can be seen as fusion operations in the AAF framework,
similar to what was proposed in \cite{li2020one, han2021meta,wu2021universal}.
This resembles the ViT attention, but with a major difference, it combines
features from different images and different levels (see \cref{fig:aaf_xscale}).
This allows better object matching when there are size discrepancies between
support and query images.

Small objects have a limited footprint in feature maps which make them hard to
detect but also hard to match with support examples. XQSA's multiscale
alignment enhances the chances of matching as each query feature is compared with
support features at all scales. 
Finally, in order to fairly compare XQSA with DANA, we leverage their BackGround
Attenuation block (BGA) on the support features before alignment. They conduct a
thorough ablation study which shows the positive impact of BGA on the few-shot
performance of their method. We also carry out an ablation study about our
cross-scales method (see \cref{tab:xqsa_ablation}) and find that BGA also improves
performance in this case. XQSA is implemented inside the AAF framework, split
into three modules: alignment, attention and fusion, following the description
from \cref{sec:framework}. BGA is implemented within the global attention block,
which can handle as well self-attention module as well and is applied before
alignment.

\subsection{Ablation study XQSA}
\label{sec:xqsa_ablation}
To confirm the benefits of each component of our attention methods, we conduct a
brief ablation experiment, adding separately the different modules of our
proposed attention mechanism. The ablation is conducted on DOTA and the results
are available in \cref{tab:xqsa_ablation}. From this table, it is clear that
each component plays a role in the improved performance of our method. Both the
fusion (with the MLP) and the skip connections around fusion and alignment are
beneficial for the performance on novel classes. It is worth noting that
Background Attenuation proposed by \cite{chen2021should} helps both for base and
novel classes, which confirms the experiments conducted by the authors of this
work.

\begin{table}[h]
    \centering
    \resizebox{0.7\textwidth}{!}{%
    \begin{tabular}{@{\hspace{2mm}}llcccc@{}}
    \toprule[1pt]
    Baseline      &  & \checkmark                & \checkmark                & \checkmark                & \checkmark                         \\
    Cross-scale Alignment                &  &                           & \checkmark                & \checkmark                & \checkmark                         \\
    Fusion Layer          &  &                           &                           & \checkmark                & \checkmark                         \\
    Query-Support Self-Attention     &  &                           &                           &                           & \checkmark                         \\ \midrule
    \textbf{Base classes}     &  & \multicolumn{1}{r}{49.20} & \multicolumn{1}{r}{49.46} & \multicolumn{1}{r}{49.13} & \multicolumn{1}{r}{\bbf{51.11}} \\
    \textbf{Novel classes}    &  & \multicolumn{1}{r}{36.52} & \multicolumn{1}{r}{38.84} & \multicolumn{1}{r}{40.31} & \multicolumn{1}{r}{\rbf{41.01}} \\ \bottomrule[1pt]
    \end{tabular}%
    } \caption[XQSA components ablation study]{Ablation study of the XQSA attention method on DOTA dataset.
    $\text{mAP}_{0.5}$ scores are reported for base and novel classes with
    $K=10$ shots.}
    \label{tab:xqsa_ablation}
    \end{table}

\subsection{Application to aerial and natural images}
To assess the capabilities of the proposed method, we compare it with the best
methods from \cref{sec:aaf_fair_comparison}: FRW and DANA on DOTA, DIOR, Pascal
VOC and MS COCO. The results of these experiments are available in
\cref{tab:xscale_res}. The mAP values are reported separately for small
($\sqrt{wh} < 32$), medium ($32 \leq \sqrt{wh} < 96$) and large ($\sqrt{wh} \geq
96$) objects. Hence, the methods can be compared specifically on small objects.
XQSA performs consistently better on small and medium novel objects, compared
with FRW and DANA. This performance gain comes at the expense of slightly lower
detection quality for larger objects. In XQSA, the shallow query features are
compared to all support features (\ie not only shallow support maps). As deeper
maps are smaller, this increases moderately the number of potential detections
for small objects. However, for large objects, the deep query feature map is
compared with all support maps, including the shallow ones. It increases a lot
the number of potential matches between query and support features (see an
illustration of this phenomenon in \cref{fig:xqsa_asymmetry}). For large objects
that are already well detected, this mostly adds wrong matches and deteriorates
the performance. For small objects, however, this is useful as very few correct
matches are found by current FSOD methods. A potential solution for this issue
would be to down-weight the contributions of shallower features in the affinity
matrix's bottom rows (\ie the left and bottom blocks of the matrix). The
affinity matrix could even be made upper triangular to avoid taking into account the
contributions of shallower levels at all. For similar reasons, XQSA demonstrates
a slight drop in base classes performance. However, the actual goal of few-shot
learning is to maximize performance on novel classes. The large number of
available examples during base training is enough to learn robust query-support
matching even for small objects. However, our goal here is to improve the
generalization capabilities of the model on novel objects. The performance on
base classes is simply a safety check and relates more to the Generalized FSOD
problem (see \cref{sec:fsod_extensions}).

\begin{table}[]
    \centering
    
    \resizebox{\textwidth}{!}{%
    \begin{tabular}{@{}ccccccccccccccccccccc@{}}
    \toprule[1pt]
                                   &      & \multicolumn{4}{c}{\textbf{DOTA}} & \textbf{} & \multicolumn{4}{c}{\textbf{DIOR}} &  & \multicolumn{4}{c}{\textbf{Pascal VOC}} &  & \multicolumn{4}{c}{\textbf{MS COCO}} \\ \cmidrule(lr){3-6} \cmidrule(lr){8-11} \cmidrule(lr){13-16} \cmidrule(l){18-21} 
                                   &      & \textbf{All}    & \textbf{S}      & \textbf{M}      & \textbf{L}      &           & \textbf{All}    & \textbf{S}      & \textbf{M}      & \textbf{L}      &  & \textbf{All}    & \textbf{S}      & \textbf{M}      & \textbf{L}       &  & \textbf{All}    & \textbf{S}      & \textbf{M}      & \textbf{L}      \\ \midrule
    \multirow{3}{*}{\thead{\large Base\\ \large Classes}}  & \textbf{FRW}  & 49.04  & 25.48  & 59.17  & 63.37  &           & 62.20  & 8.21   & 48.66  & 80.67  &  & 63.21    & 15.67    & 47.94   & \bbf{81.73}   &  & 29.03   & 13.08   & 35.87   & 48.00  \\
                                   & \textbf{DANA} & \bbf{53.99}  & \bbf{36.98}  & \bbf{62.33}  & \bbf{70.39}  &           & \bbf{62.71}  & \bbf{10.92}  & \bbf{49.34}  & \bbf{83.17}  &  & \bbf{65.17}    & \bbf{18.14}    & \bbf{50.58}   & 80.11   &  & \bbf{38.14}   & \bbf{23.30}   & \bbf{51.85}   & \bbf{56.38}  \\
                                   & \textbf{XQSA} & 51.11  & 26.10  & 59.41  & 64.30  &           & 59.88  & 10.64  & 45.69  & 82.34  &  & 62.13    & 15.60    & 48.64   & 75.94   &  & 31.56   & 16.13   & 40.13   & 49.83\\ \midrule
     \multirow{3}{*}{\thead{\large Novel\\ \large Classes}} & \textbf{FRW}  & 35.29  & 13.99  & 34.11  & 59.31  &           & 37.29  & 2.48   & 33.74  & 59.38  &  & 48.72    & 16.44    & 26.71   & \rbf{68.27}   &  & 24.09   & 11.53   & 22.45   & \rbf{38.69}  \\
                                   & \textbf{DANA} & 36.58  & 14.32  & 40.28  & \rbf{64.65}  &           & 38.18  & 3.21   & 34.91  & \rbf{60.99}  &  & 52.26    & 10.05    & 24.67   & 67.23   &  & 24.75   & 12.01   & \rbf{29.40}   & 37.95  \\
                                   & \textbf{XQSA} & \rbf{41.00}  & \rbf{17.84}  & \rbf{44.57}  & 54.46  &           & \rbf{41.51}  & \rbf{4.12}   & \rbf{40.69}  & 58.21  &  & \rbf{53.94}    & \rbf{19.46}    & \rbf{34.86}   & 66.14   &  & \rbf{25.03}   & \rbf{12.57}   & 26.05   & 38.55   \\ \bottomrule[1pt]
    \end{tabular}%
    } \caption[Performance comparison between XQSA, FRW, and DANA on DOTA, DIOR,
    Pascal VOC and MS COCO]{Performance comparison between XQSA, FRW, and DANA.
    $\text{mAP}_{0.5}$ values are reported separately for base (top) and novel
    (bottom) classes on DOTA, DIOR, Pascal VOC, and MS COCO with $K=10$ shots.
    mAP values are reported for All, Small ($\sqrt{wh} < 32$), Medium ($32 \leq
    \sqrt{wh} < 96$) and Large ($\sqrt{wh} \geq 96$) objects.}
    \label{tab:xscale_res}
    \end{table}

\begin{figure}
    \centering
    \includegraphics[width=\textwidth]{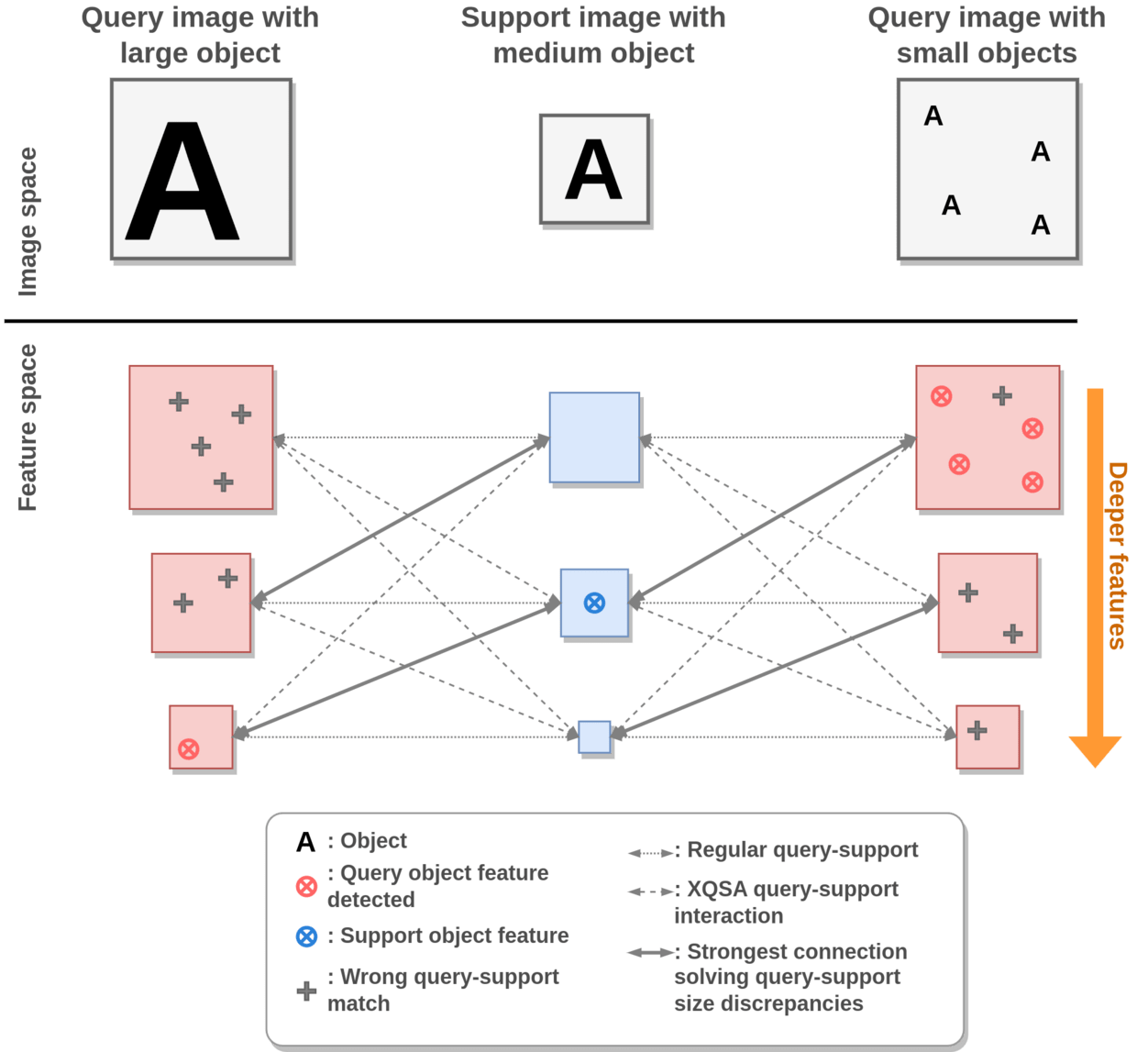}
    \caption{XQSA small and large objects matching asymmetry.}
    \label{fig:xqsa_asymmetry}
\end{figure}

\begin{table}[]
    \centering
    
    \resizebox{\textwidth}{!}{%
    \begin{tabular}{@{}ccccccccccccccccccccc@{}}
    \toprule[1pt]
                                                            &      & \multicolumn{4}{c}{\textbf{DOTA}} & \textbf{} & \multicolumn{4}{c}{\textbf{DIOR}} &  & \multicolumn{4}{c}{\textbf{Pascal VOC}} &  & \multicolumn{4}{c}{\textbf{MS COCO}} \\ \cmidrule(lr){3-6} \cmidrule(lr){8-11} \cmidrule(lr){13-16} \cmidrule(l){18-21} 
                                                            &      & \textbf{All}    & \textbf{S}      & \textbf{M}      & \textbf{L}      &           & \textbf{All}    & \textbf{S}      & \textbf{M}      & \textbf{L}      &  & \textbf{All}    & \textbf{S}      & \textbf{M}      & \textbf{L}       &  & \textbf{All}    & \textbf{S}      & \textbf{M}      & \textbf{L}      \\ \midrule
    \multirow{3}{*}{\thead{\large Base\\ \large Classes}}   & \textbf{FRW}  &  23.18	& 8.60	 &27.84	  &32.22   &           & 35.60	& 2.60	 & 23.04  & 50.82  &  & 37.93	 & 6.54	    & 22.84	  & 50.49   &  & 15.60	 & 5.47	   & 18.84	 &27.83   \\
                                                            & \textbf{DANA} &  \bbf{26.63}	& \bbf{11.43}  &\bbf{30.73}	  &\bbf{37.62}   &           & \bbf{36.39}	& 3.48	 & \bbf{24.93}  & \bbf{52.31}  &  & \bbf{39.12}	 & \bbf{7.28	}    & \bbf{25.37}	  & \bbf{51.39}   &  & \bbf{22.46}	 & \bbf{10.22}   & \bbf{29.72}	 &\bbf{36.51}   \\
                                                            & \textbf{XQSA} &  25.30	& 8.85	 &28.78	  &34.64   &           & 34.80	& \bbf{3.54	} & 22.90  & 51.47  &  & 27.45	 & 3.18	    & 16.60	  & 36.76   &  & 11.37	 & 4.44	   & 14.18	 &31.97   \\ \midrule
     \multirow{3}{*}{\thead{\large Novel\\ \large Classes}} & \textbf{FRW}  &  15.99	& 4.25	 &14.09	  &29.65   &           & 20.00	& 0.48	 & 17.00  & 33.30  &  & 29.09	 & 5.64	    & 12.21	  & 40.03   &  & 12.41	 & 4.84	   & 10.90	 &20.82    \\
                                                            & \textbf{DANA} &  17.17	& 5.60	 &20.44	  &\rbf{32.40}   &           & 20.35	& 0.78	 & 17.49  & 34.01  &  & \rbf{31.75}	 & 5.23	    & 11.09	  & \rbf{43.38}   &  & \rbf{13.44}	 & \rbf{5.30	}   & \rbf{15.03}	 &\rbf{21.47}    \\
                                                            & \textbf{XQSA} &  \rbf{21.04}	& \rbf{7.91	} &\rbf{25.18}	  &26.49   &           & \rbf{22.78}	& \rbf{0.97	} & \rbf{20.97}  & \rbf{34.78}  &  & 25.07	 & \rbf{6.40	}    & \rbf{12.74}	  & 35.15   &  & 10.33	 & 4.87	   & 10.04	 &16.72    \\ \bottomrule[1pt]
    \end{tabular}%
    } \caption[Performance comparison between XQSA, FRW, and DANA on DOTA, DIOR,
    Pascal VOC and MS COCO]{Performance comparison between XQSA, FRW, and DANA.
    $\text{mAP}_{0.5:0.95}$ values are reported separately for base (top) and
    novel (bottom) classes on DOTA, DIOR, Pascal VOC, and MS COCO with $K=10$
    shots. mAP values are reported for All, Small ($\sqrt{wh} < 32$), Medium
    ($32 \leq \sqrt{wh} < 96$) and Large ($\sqrt{wh} \geq 96$) objects.}
    \label{tab:xscale_res_095}
    \end{table}

Looking at the performance on all objects, disregarding their size, the proposed
alignment technique significantly improves the detection quality for aerial
images. Using XQSA in the AAF framework increased novel class mAP by 5 and 4
points on DOTA and DIOR, respectively. As it works better on small objects but
worse on large objects, it is less appropriate for natural images. As a
consequence, it shows lower improvements for Pascal VOC and MS COCO. Overall,
XQSA largely improves on existing works for aerial images. On DIOR, this
corresponds to a 10 mAP point increase compared to previous state-of-the-art
\cite{deng2020few}. However, this is not sufficient to fill the performance gap
with natural images as presented in \cref{fig:xqsa_rmap_all_hist}. This figure
extends \cref{fig:fsod_baseline_res} with XQSA results. While XQSA improves upon
other methods on aerial images, it is still far behind the performance of the
regular baseline. XQSA is better for small and medium objects but at the cost of lower
performance on large objects and base classes. Progress is still required to get
more versatile FSOD solutions able to handle small, medium, and large objects at
the same time. 

\begin{figure}[]
    \centering
    \includegraphics[width=\columnwidth]{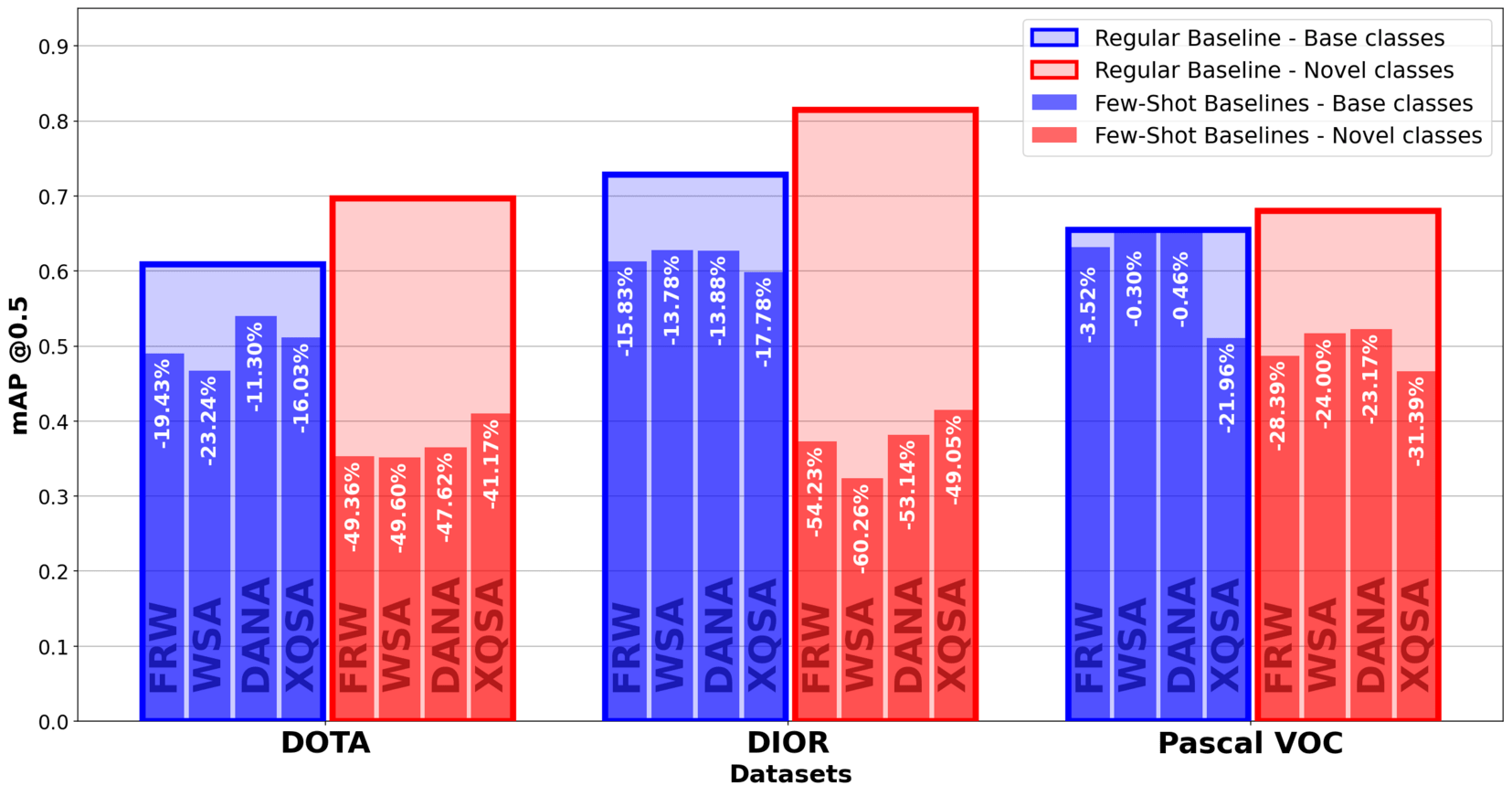}
    \caption[FSOD performance comparison with regular baseline on DOTA, DIOR and
    Pascal VOC.]{$\text{mAP}_{0.5}$ and corresponding RmAP values of the four
    best performing methods from all our experiments. All methods are trained
    within our proposed AAF framework with data augmentation which explains
    slightly higher performance for FRW and WSA. 10 shots are available for each
    class at inference time.}
    \label{fig:xqsa_rmap_all_hist}
\end{figure}

In complement to \cref{tab:xscale_res}, we also provide the comparison between
XQSA, DANA and FRW with $\text{mAP}_{0.5:0.95}$ metric and on the four datasets.
These results are provided in \cref{tab:xscale_res_095}. $\text{mAP}_{0.5:0.95}$
is a more demanding metric for object detection. It is especially hard for small
objects as a few pixels shift from ground truth can greatly reduce the IoU and
therefore lead to a missed detection. This intensifies as the IoU threshold
increases in the mAP computation.

For DOTA and DIOR, results with
$\text{mAP}_{0.5:0.95}$ are in agreement with results from \cref{tab:xscale_res}
(i.e. with $\text{mAP}_{0.5}$). However, XQSA does not perform better than DANA
on Pascal VOC and MS COCO novel classes with $\text{mAP}_{0.5:0.95}$. This is
mainly due to the metric being too strict on small objects. This questions the
soundness of these metrics for FSOD, especially when dealing with small objects.
We will tackle this question in \cref{chap:siou_metric}.

\subsection{XQSA Implementation Challenges and Extensions}
XQSA is inspired by the ViT and resembles some FSOD techniques that leverage
transformer attention as well. It is well-known that these mechanisms are
computationally heavy and scale quadratically in terms of the number of
locations in the feature maps. Here, it is slightly different as we combine the
features from the support and query images. Support images are $4 \times$
smaller than query images (see \cref{sec:aaf_aug_crop}), therefore the
complexity of the attention module is greatly reduced. According to the
notations introduced in \cref{sec:xqsa_definition}, the complexity of the XQSA
alignment block is $O(n_q n_s)$. As $n_s$ is way smaller than $n_q$ (roughly 16
times), this is way better than a self-attention mechanism which would be
$O(n_q^2)$. However, with XQSA, as features from all scales are concatenated,
this remains computationally heavy, especially as this process must be
repeated for each support image. Unfortunately, this does not scale well as we
increase the number of support examples. Time complexity rapidly becomes
prohibitive, but memory complexity is more limiting. The gradients of the
Softmax used for the computation of the similarity matrices are extremely large
($O(n_qn_s^2)$) and do not fit on GPU memory when the number of support examples
increases. To bypass these limitations, we propose several tricks.

\begin{itemize}[nolistsep]
    \item[-] \textbf{Pytorch manual gradient computation}: automatic
    differentiation in Pytorch is not always optimal. When successive
    computations involve the same gradients, Pytorch often computes and stores
    them separately, wasting precious resources. To this end, we re-implemented
    the XQSA block as a self-contained operation, with custom gradients
    computation to prevent duplicated gradients. This results in slight memory
    and performance gains, but it is still not enough to scale efficiently with
    the number of support examples. 
    \item[-] \textbf{Pytorch gradient checkpointing}: Pytorch has an API to
    checkpoint the gradients during the backward pass. It copies back the
    gradients on the CPU memory to prevent overflowing the GPU memory. It
    solves the out-of-memory issues, but makes the training much slower.
    \item[-] \textbf{Deformable XQSA block}: The alignment block is expensive
    due to the comparison between all query feature locations and all support
    locations. Thus, we extend the XQSA block with a deformable attention
    mechanism, inspired from ConViT \cite{d2021convit}. Specifically, we
    introduce an inductive bias inside the attention module by adding a
    layer that selects the locations that will be compared between support and
    query feature maps. This resembles Deformable Convolutions
    \cite{dai2017deformable} and Deformable-DETR \cite{zhu2021deformable}. While
    this was solving both the memory overflow and training slowness, we were not
    able to achieve reasonable detection performance with it.
    \item[-] \textbf{Support class averaging}: while this solution seems
    sub-optimal, it saves a lot of time and memory by avoiding a lot
    of computation. Of course, it does not completely solve the scaling issues
    (\eg as the number of support classes increases). However, it allows
    adding a large number of support images per class without any issues and
    performs well on the FSOD task. 
\end{itemize}

Finally, we only keep the support class averaging as it is the simplest and
best-performing alternative tested. 

\section{Qualitative Comparison within the AAF framework}
\vspace{-1em}
In this section, we provide a qualitative comparison of the four attention-based
methods that we compared previously in \cref{fig:xqsa_rmap_all_hist}: FRW,
WSAAN, DANA and ours XQSA. To get a fair comparison, we sampled the same support
examples for each method and performed the detection on the same query images. For
convenience, we split the comparison in two, first on base classes and then on
novel classes. In both cases, we provide the support examples used for the
detection in the first figure and the detection on a handful of images in the
second figure. These comparisons are visible in
\cref{fig:xqsa_base_support,fig:xqsa_base_detection} for base classes and in
\cref{fig:xqsa_novel_support,fig:xqsa_novel_detection} for novel classes. 

\subsection{Base Classes Detection Quality}

From \cref{fig:xqsa_base_detection}, it is quite difficult to assess which method
is superior to the other. It seems quite obvious that DANA and XQSA perform
slightly better than FRW and WSAAN. However, there are cases where neither DANA
nor XQSA is the best (see the second row for instance). This is in line with the
quantitative results from the previous sections. The gap between these methods
on base classes is tight, it is therefore quite difficult to correctly assess
the quality of these methods from only a handful of examples. It is worth noting
that these techniques work quite well on the base classes. Of course, there are
some false positives, but the number of false negatives is limited, which is
quite important for intelligence applications.  

\begin{figure}
    \centering
    \includegraphics[width=\textwidth, trim=100 40 100 40, clip]{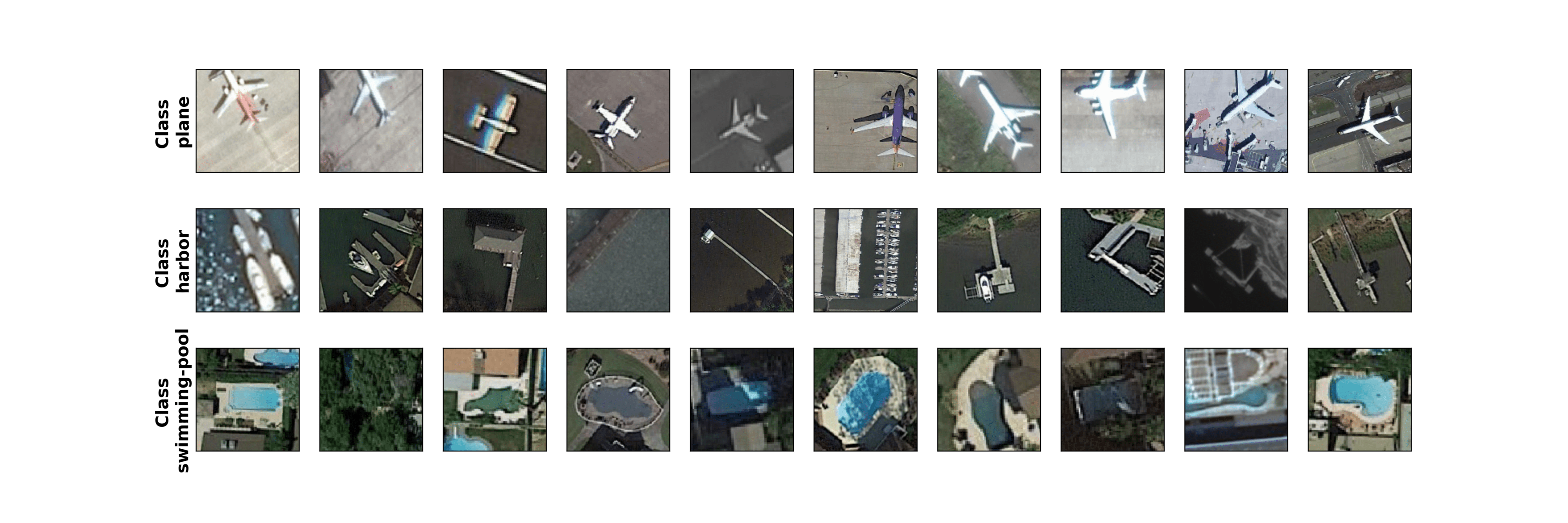}
    \caption[Support examples for base classes for qualitative assessment of
    attention-based methods]{Support examples for base classes ($K=10$). These
    are the examples used during the detection inference that produced results
    from \cref{fig:xqsa_novel_detection}.}
    \label{fig:xqsa_base_support}
\end{figure}

\begin{figure}
    \centering
    \includegraphics[width=\textwidth, trim=120 150 110 150, clip]{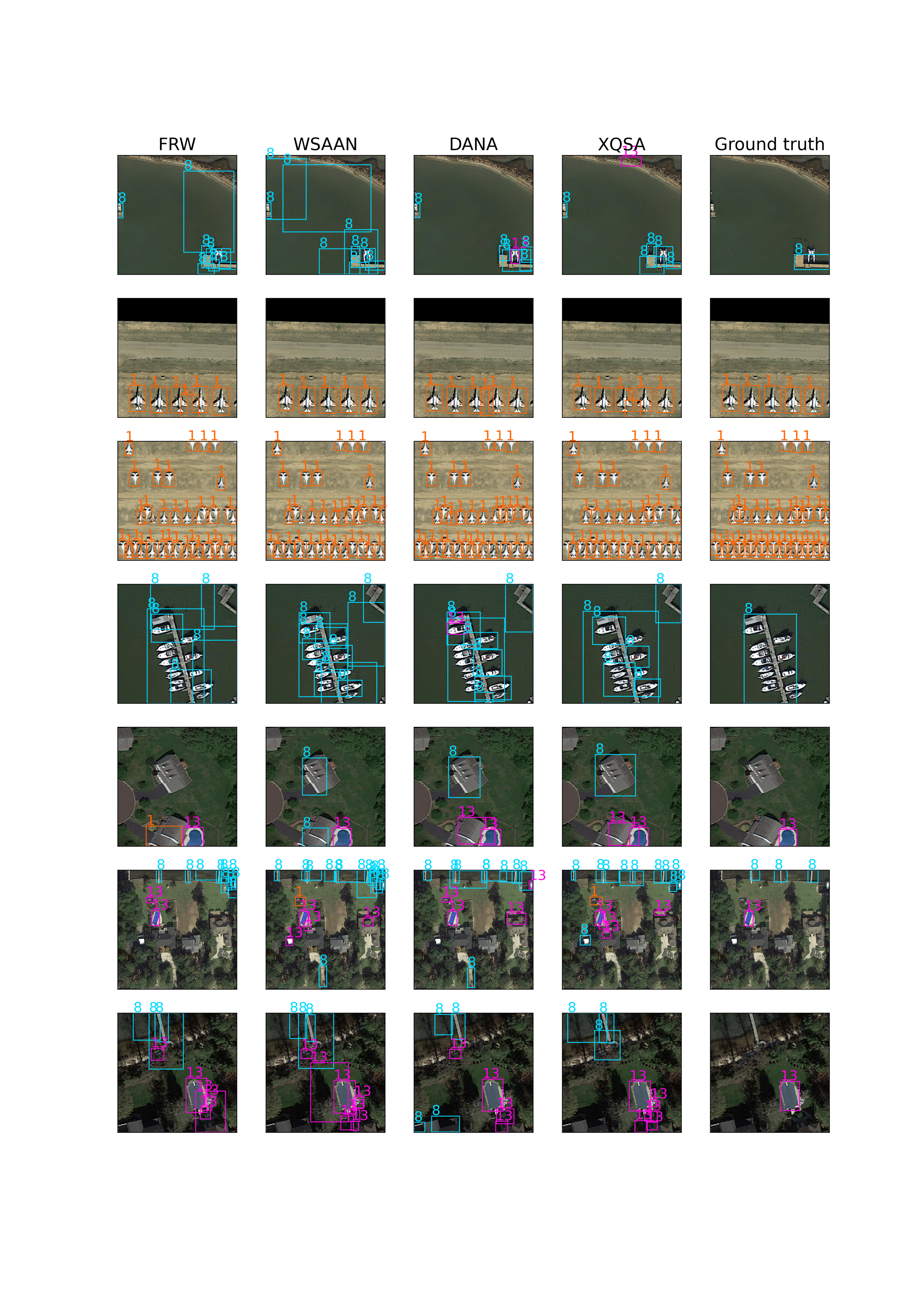}
    \caption{Qualitative assessment of the detection quality of FRW, WSAAN,
    DANA and XQSA on DOTA with $K=10$ shots on base classes.}
    \label{fig:xqsa_base_detection}
\end{figure}

\subsection{Novel Classes Detection Quality}
With novel classes, the performance differences are more visible than with base
classes. First, it can be seen that XQSA produces less false positives which
indicates a higher accuracy. Then, XQSA also provides more small detection than
other methods. It sometimes gives false positives, but overall it improves the
detection of small targets (see the last row for instance). Interestingly, XQSA
and DANA are less sensitive to partial objects in the images (see first row).
This may be explained by the spatial information kept in the query-support
combination compared to FRW and WSAAN. As the whole object is available in the
support examples, DANA and XQSA match all parts of the support features with the
query features and detect only entire objects. It seems that this kind of
matching is much more robust than the global attention alone (FRW and WSAAN) as
they showcase many more false positives overall. Of course, the detection of
small objects is still very challenging in the FSOD setting which explains the
relatively poor detection quality in these images. It is not easy to objectively
determine which method is the best at this, but a slight advantage seems in
favor of XQSA, confirming quantitative results.

\section{Conclusion}
\vspace{-1em}
In this chapter, we have introduced a highly modular framework for implementing
attention-based FSOD methods. First, this framework allows fair comparison
between the various attention mechanisms proposed in the literature. From our
analysis, it seems that spatial alignment is crucial to achieving high-quality
FSOD, mostly because it does not lose the spatial information contained in the
support examples. Secondly, the AAF framework is a practical tool to design new
attention mechanisms. For that matter, we developed a novel cross-scales
alignment layer within the framework to specifically increase the detection
performance on the small objects. The so-called XQSA alignment allows us to
achieve large improvements compared to the contemporary literature on several
datasets. It works especially well on aerial images as they contain smaller and
more objects than natural images. Specifically, XQSA outperforms the
state-of-the-art on DOTA and DIOR datasets at that time. 

Nevertheless, the attention-based methods are not completely satisfactory from
an industrial perspective. While they achieve reasonable performance on aerial
images, they have some disadvantages. First, even if they can adapt to novel
classes from a few support examples at test time, they still require extensive
fine-tuning to perform correctly. This fine-tuning can take up to several hours,
which is not convenient for "on-the-fly" adaptation. Then, the episodic training
strategy is somewhat cumbersome and generates unrealistic scenarios. Indeed,
during each episode, query images are sampled so that they contain at least one
instance of one of the episode classes. In real-case applications, no object of
interest can be visible in an image which makes the detection task more
challenging. But most importantly, only the episode classes are detected during
the episode, the detection task is, therefore, simpler as it only classifies
objects among a smaller number of classes. Given these drawbacks, we investigate
in the next chapter FSOD methods that do not employ the episodic training
strategy and therefore solely rely on a fine-tuning scheme.   

\begin{figure}
    \centering
    \includegraphics[width=\textwidth, trim=100 40 100 40, clip]{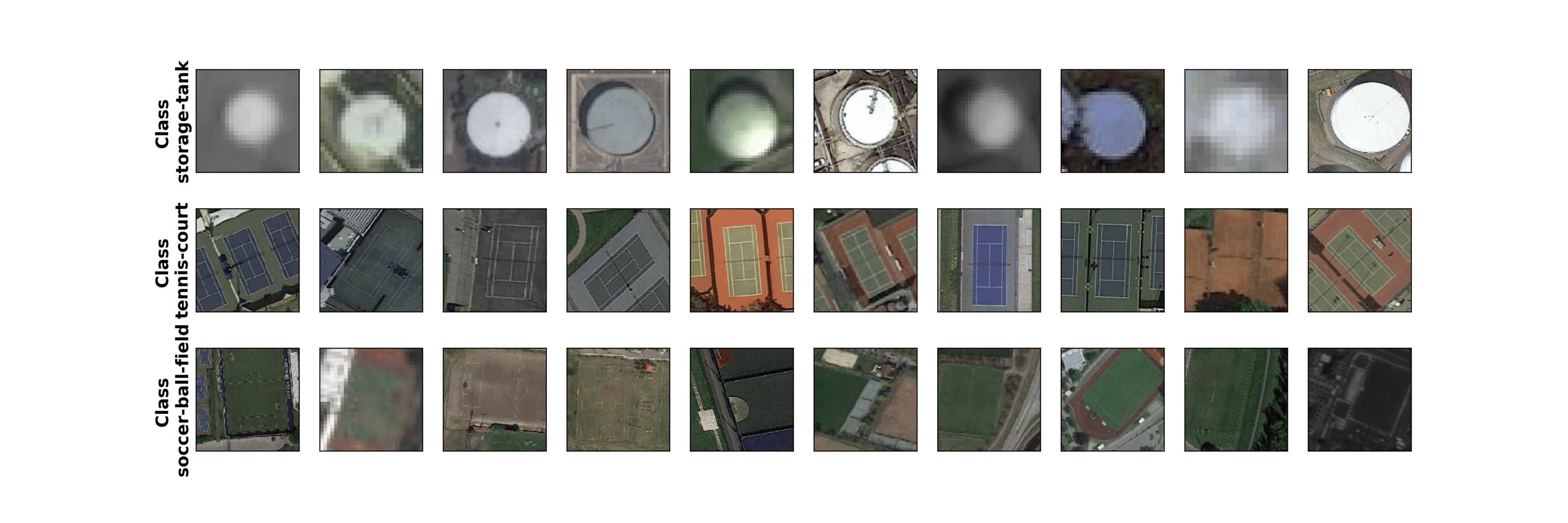}
    \caption[Support examples for novel classes for qualitative assessment of attention-based methods]{Support examples for novel classes ($K=10$). These are the examples used 
    during the detection inference that produced results from \cref{fig:xqsa_novel_detection}.}
    \label{fig:xqsa_novel_support}
\end{figure}

\begin{figure}
    \centering
    \includegraphics[width=\textwidth, trim=120 150 110 150, clip]{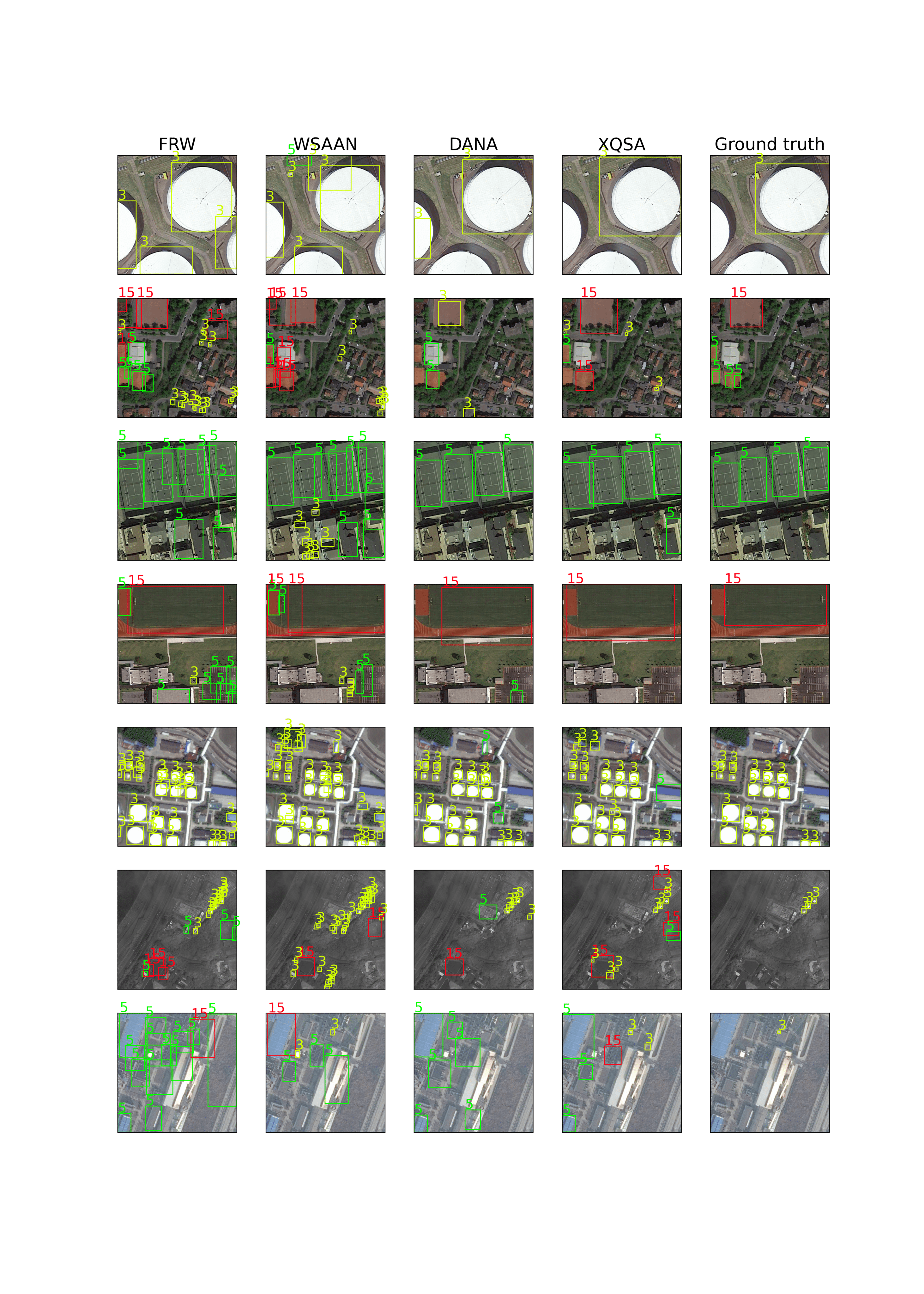}
    \caption{Qualitative assessment of the detection quality of FRW, WSAAN,
     DANA and XQSA on DOTA with $K=10$ shots on novel classes.}
    \label{fig:xqsa_novel_detection}
\end{figure}

\makeatletter
\let\savedchap\@makechapterhead
\def\@makechapterhead{\vspace*{-2cm}\savedchap}
\chapter{Few-Shot Diffusion Detector via Fine-Tuning}%
\let\@makechapterhead\savedchap
\makeatletter
\label{chap:diffusion}

\chapabstract{Previous chapters explore few-shot object detection with metric
learning and attention-based techniques. This chapter focuses on the
last major approach for FSOD: fine-tuning. Based on DiffusionDet, a recent
detection framework leveraging diffusion models, we build a simple but efficient
fine-tuning strategy. The resulting method, called FSDiffusionDet, achieves
state-of-the-art FSOD on aerial datasets and competitive performance on natural
images. Extensive experimental studies explore the design choices of the
fine-tuning strategy to better understand the key components required to achieve
such quality. Finally, these impressive results allow considering more complex
settings such as cross-domain scenarios, which are especially relevant for COSE.} 
\vspace{1em}

\PartialToC

In \cref{chap:prcnn,chap:aaf} we have proposed respectively metric-learning and
attention-based approaches to tackle the FSOD problem. Both of these directions
were sensible choices given the state of the FSOD literature at the beginning of
this project. Since then, however, fine-tuning approaches have gained a lot of
interest with competitive performance and reduced complexity. Following this
trend, we explore in this chapter a simple fine-tuning strategy for FSOD. Based
on the recent DiffusionDet \cite{chen2022diffusiondet} model, we propose an
effective fine-tuning scheme for FSOD which outperforms all previous methods on
DOTA and DIOR datasets while being competitive with state-of-the-art on natural
images. We begin with a brief presentation of the Diffusion Probabilistic Models
(DPM) and their recent progress in various generative tasks. Then we present in
detail DiffusionDet, which tackles OD with a refreshing perspective, as a
box-denoising problem. Following this, we present our fine-tuning strategy
called Few-Shot DiffusionDet (FSDiffusionDet) and the results of multiple
experiments conducted to improve our strategy. Given the impressive performance
of FSDiffusionDet in the few-shot regime, we broaden the scope of our analysis
and study the more challenging Cross-Domain FSOD task. We emphasize that the
first two sections of this chapter present existing works in the literature,
while the last three sections discuss our contributions: the FSDiffusionDet
strategy, thorough experimental analysis of the strategy on several datasets,
and its application in Cross-Domain scenarios.

\section{Diffusion Probabilistic Models Principle}
\label{sec:ddpm_principle}
\vspace{-1em}
Diffusion Probabilistic Models have been introduced in 2015 by
Sohl-Dickstein et al. \cite{sohl2015deep}. Their principle is simple, to
approximate a complex and intractable probability distribution, they model a
diffusion process from the original distribution to a normal distribution as
gradual Gaussian noise addition. Then, the goal is to find the reverse process
to approximate the original distribution by iterative denoising. In this
section, we present the main concepts of the DPM \cite{sohl2015deep} and their
recent advances in generative tasks, mostly led by Denoising Diffusion
Probabilistic Models (DDPM) \cite{ho2020denoising}. 

\subsection{Forward and Reverse Diffusion Processes}
First, let's introduce a few notations. Our objective is to be able to
efficiently sample elements $x$ from a distribution $\mathcal{P}$. When
$\mathcal{P}$ is an arbitrary distribution, its Probability Density Function
(PDF) is intractable and sampling often relies on expensive Monte Carlo
techniques. Here, we suppose that there exists a random process $q$ that
transforms $\mathcal{P}$ into a normal distribution: 
\begin{equation}
    q(x) \sim \mathcal{N}(\mathbf{0}, \mathbf{I}).
\end{equation}

This is called the Diffusion Process and refers to the eponym physical
phenomenon. The main hypothesis is to assume that this process is a Markov Chain
that adds Gaussian noise progressively:
\begin{equation}
    x_T = q(x_{1:T} | x_0), 
\end{equation}

where $x_0 \sim \mathcal{P}$ is an element sampled from the original
distribution and $x_T \sim \mathcal{N}(0, \mathbf{I})$ is sampled from
a normal distribution. $T$ denotes here the number of steps in the Markov chain
and $q(x_{1:T})$ represents the joint distribution of variables $x_1$ to $x_T$.
The diffusion process adds Gaussian noise iteratively and therefore is defined as: 
\begin{equation}
    q(x_{1:T} | x_0) := \prod_{t=1}^T q(x_t|x_{t-1}) \quad \text{with } q(x_t|x_{t-1}) := \mathcal{N}(\sqrt{1 - \beta_t }x_t, \beta_t \mathbf{I}).
\end{equation}

The $\beta_t$ denote the variance schedule, \ie the amount of gaussian noise
added at each step. For convenience, we also define $\alpha_t$ and $\bar{\alpha}_t$: 
\begin{align}
    \alpha_t &= 1 - \beta_t, \\
    \bar{\alpha}_t &= \prod_{s=1}^t \alpha_s.
\end{align}

$q$ is called the \textit{forward} diffusion process as it
transforms $x_0$ into noise (this is true only asymptotically when $T
\to \infty$). Thus, we can write for $1 \leq t \leq T$:
\begin{align}
    x_t &= \sqrt{\alpha_t}x_{t-1} + \sqrt{1 - \alpha_t} \epsilon_{t-1}, \\
        &= \sqrt{\alpha_t\alpha_{t-1}}x_{t-2} + \sqrt{1 - \alpha_t\alpha_{t-1}} \epsilon_{t-2}, \\
        &= \dots \nonumber\\ 
        &= \sqrt{\bar{\alpha_t}}x_0 + \sqrt{1 - \bar{\alpha_t}} \epsilon_0, \label{eq:diff_xt}
\end{align}

where the $\epsilon_{i}$ ($0\leq i \leq t-1$) are sampled from a normal distribution. 

The \textit{reverse process} instead transforms Gaussian noise into
elements sampled from $\mathcal{P}$. It is also a Markov chain
with Gaussian transitions (this is ensured for sufficiently small $\beta_t$): 
\begin{equation}
    q(x_{0:T}) := \prod_{t=1}^T q(x_{t-1}|x_t) \;\; \text{with} \;\; q(x_{t-1}|x_t) := \mathcal{N}(\mu_t, \mathbf{\Sigma_t}).
\end{equation}

Unfortunately, the reverse process, or rather $\mu_t$ and $\mathbf{\Sigma_t}$,
are highly intractable and cannot be easily estimated. However, we can
approximate this process with a parametrized model $p_{\theta}$:
\begin{equation}
    p_{\theta}(x_{0:T}) := p(x_T) \prod_{t=1}^T p_{\theta} (x_t|x_{t-1}) \;\; \text{with} \;\; p_{\theta}(x_t|x_{t-1}) := \mathcal{N}(\mu_{\theta}(x_t, t), \mathbf{\Sigma_{\theta}}(x_t, t)).
\end{equation}

Hence, if we find an optimal set of parameters $\theta$ so that the model is
able to approximate the reverse process expectation $\mu_t$ and variance
$\mathbf{\Sigma_t}$ from variable $x_t$ and timestep $t$, then the reverse
process can be computed. Now, we must derive efficient ways to estimate the
reverse process. 

One solution is to leverage deep neural networks which are well
suited for these kinds of tasks. We will explain how such models can be trained
to approximate the reverse process conditional probabilities $q(x_{t-1}|x_t)$.
Note that the reverse process is initialized with a normal distribution: $p(x_T)
= \mathcal{N}(\mathbf{0}, \mathbf{I})$. The forward and reverse diffusion
processes are illustrated in \cref{fig:diff_processes}, in the context of image
denoising. Before jumping to the next section to see how we can estimate such
models, we can observe that the reverse process distribution is tractable when
conditioned on $x_0$, it will be useful for training:
\begin{align}
    \label{eq:reverse_conditioned}
    q(x_{t-1} | x_t, x_0) &:= \mathcal{N}(\tilde{\mu}_t(x_t, x_0), \tilde{\beta}_t\mathbf{I}), \\[2mm]
    \text{with} \; &\tilde{\mu}_t(x_t, x_0) = \frac{\sqrt{\bar{\alpha}_{t-1}} \beta_t}{1- \bar{\alpha}_{t}} x_0 + \frac{\sqrt{\alpha_t}(1- \bar{\alpha}_{t-1})}{1- \bar{\alpha}_{t}} x_t,\\[2mm]
    \text{and} \; &\tilde{\beta}_t =  \frac{1- \hat{\alpha}_{t-1}}{1- \hat{\alpha}_{t}} \beta_t.
\end{align}

\begin{figure}[h]
    \centering
    \includegraphics[width=\textwidth]{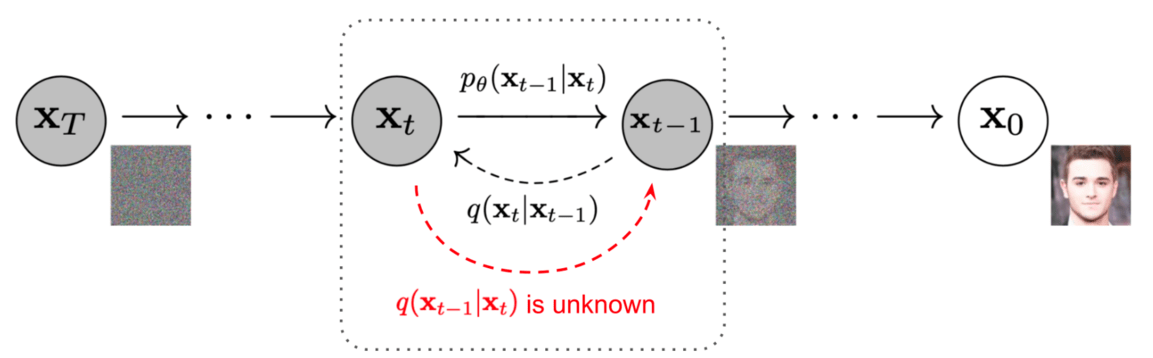}
    \vspace{2mm}
    \caption[Diffusion processes illustration]{Diffusion Processes illustration in the context of image
    denoising. Image taken from \cite{ho2020denoising,weng2021diffusion}.}
    \label{fig:diff_processes}
\end{figure}

\subsection{DDPM Training and Sampling}
Now that we have introduced the forward and reverse diffusion processes, we can
dive into the core of DDPM and see how we can train models to approximate the
reverse process. The overall goal is to maximize the log-likelihood of the
modeled data distribution $p_{\theta}(x_0)$: 
\begin{equation}
    \theta^* = \argmax_{\theta} \log(p_{\theta}(x_0)).
\end{equation}

Unfortunately, the log-likelihood is not easily optimizable and several tricks
are required to get a more tractable objective. First, \cite{sohl2015deep} makes
use of the well-known Evidence Lower BOund \cite{kingma2013auto} (ELBO), which
is lower bound to log-likelihood objective and more easily computable. In
practice, we minimize the negative log-likelihood and leverage the evidence upper
bound:
\begin{align}
    - \log(p_{\theta}(x_{1:T})) &= -\log \left( \int p_{\theta}(x_{0:T}) dx_{1:T} \right), \\
                                &= -\log \left( \int p_{\theta}(x_{0:T}) \frac{q(x_{1:T}|x_0)}{q(x_{1:T}|x_0)} dx_{1:T} \right),\\
                                &= -\log \mathbb{E}_q \left[\frac{p_{\theta}(x_{0:T})}{q(x_{1:T}|x_0)}\right], \\
                                &\leq \mathbb{E}_q \left[-\log \frac{p_{\theta}(x_{0:T})}{q(x_{1:T}|x_0)}\right] = \mathcal{L}_{\text{ELBO}}.
\end{align}

Now, using the definition of the process $p_{\theta}(x_{0:T})$ and
$q(x_{1:T}|x_0)$, and Bayes' rule, we can split the objective for each timestep
$t$ and make it tractable: 
\begin{equation}
    \begin{aligned}
        \mathcal{L}_{\text{ELBO}} = \mathbb{E}_q \left[\tikzmarknode{g}{\highlight{darkgray}{$-\log \frac{p(x_T)}{q(x_T|x_0)}$}} + \sum\limits_{t>1} \tikzmarknode{l}{\highlight{MidnightBlue}{$-\log \frac{p_{\theta}(x_{t-1}|x_t)}{q(x_{t}|x_{t-1}, x_0)}$}} \tikzmarknode{m}{\highlight{Bittersweet}{$- \log p_{\theta}(x_{0}|x_1)$}}\right]. \label{eq:diff_objective}
    \end{aligned}
\end{equation}
\begin{tikzpicture}[overlay,remember picture,>=stealth,nodes={align=left,inner ysep=1pt},<-]
    \path (g.south) ++ (0,-1em) node[anchor=north east,color=darkgray!80] (gfeat){\footnotesize Objective for timestep $T$, $\mathcal{L}_T$ };
    \draw [color=darkgray!80](g.south) |- ([xshift=-0.3ex,color=darkgray]gfeat.south west);
    \path (l.south) ++ (0,-2.5em) node[anchor=north east,color=MidnightBlue!80] (lfeat){\footnotesize Objective for timestep $t-1$, $\mathcal{L}_{t-1}$};
    \draw [color=MidnightBlue!80](l.south) |- ([xshift=-0.3ex,color=MidnightBlue]lfeat.south west);
    \path (m.south) ++ (0,-1em) node[anchor=north east,color=Bittersweet!80] (mfeat){\footnotesize $\mathcal{L}_0$ };
    \draw [color=Bittersweet!80](m.south) |- ([xshift=-0.3ex,color=Bittersweet]mfeat.south west);
\end{tikzpicture}
\vspace{1.5em}

The previous equation is only valid because $q(x_{0:T})$ if a Markov chain and
because the Markov Property states that $q(x_t|x_{t-1}, x_0) = q(x_t|x_{t-1})$.
This is crucial, otherwise, the Bayes' rule introduces intractable terms
($q(x_t)$ and $q(x_{t-1})$). We refer the reader to Appendix A from
\cite{ho2020denoising} and Appendix B from \cite{sohl2015deep} for the detailed
derivation of \cref{eq:diff_objective}. Finally, the terms highlighted in gray
and blue in the above \cref{eq:diff_objective} can be interpreted as
Kullback-Leiber Divergence terms:
\begin{equation}
    \begin{aligned}
        \mathcal{L}_{\text{ELBO}} = \tikzmarknode{g}{\highlight{darkgray}{$D_{\text{KL}}\left(q(x_T|x_0)||p(x_T)\right)$}} &+ \sum\limits_{t>1} \tikzmarknode{l}{\highlight{MidnightBlue}{$D_{\text{KL}}\left(q(x_{t}|x_{t-1}, x_0)||p_{\theta}(x_{t-1}|x_t)\right)$}} \label{eq:diff_objective_2}\\
                     &+\tikzmarknode{m}{\highlight{Bittersweet}{$\mathbb{E}_q \left[ - \log p_{\theta}(x_{0}|x_1)\right]$}}.
    \end{aligned}
\end{equation}
\begin{tikzpicture}[overlay,remember picture,>=stealth,nodes={align=left,inner ysep=1pt},<-]
    \path (g.south) ++ (0,-1em) node[anchor=north east,color=darkgray!80] (gfeat){\footnotesize $\mathcal{L}_T$ };
    \draw [color=darkgray!80](g.south) |- ([xshift=-0.3ex,color=darkgray]gfeat.south west);
    \path (l.south) ++ (1em,-1em) node[anchor=north west,color=MidnightBlue!80] (lfeat){\footnotesize $\mathcal{L}_{t-1}$};
    \draw [color=MidnightBlue!80]([xshift=1em]l.south) |- ([xshift=-0.3ex,color=MidnightBlue]lfeat.south east);
    \path (m.south) ++ (0,-1em) node[anchor=north west,color=Bittersweet!80] (mfeat){\footnotesize $\mathcal{L}_0$ };
    \draw [color=Bittersweet!80](m.south) |- ([xshift=-0.3ex,color=Bittersweet]mfeat.south east);
\end{tikzpicture}

These KL divergence terms are easy to compute as they compare only Gaussian
distributions. This gives an easy-to-optimize upper bound to train the diffusion
models. Diffusion models are meant to approximate the reverse diffusion process.
One way to achieve this with neural network models is to use the
reparametrization trick introduced in \cite{kingma2013auto}. The idea is to
train a neural network to output the mean and variance parameters of a Gaussian
distribution and sample elements from the estimated distribution. That way, the
gradients can be computed through the stochastic sampling operation. Here, we
specifically learn two models able to predict the mean and variance parameters
$\mu_{\theta}(x_t, t)$ and $\mathbf{\Sigma_{\theta}}(x_t, t)$ conditioned on
$x_t$ and timestep $t$. These two estimators are trained following the ELBO
objective which compares the estimated and reverse process distributions at each
timestep. In practice, this is achieved by randomly sampling a timestep and
optimizing the model with the corresponding loss function.  

To stabilize the training, authors from \cite{ho2020denoising} propose to fix
$\mathbf{\Sigma_{\theta}}(x_t, t) = \beta_t^2\mathbf{I}$ and introduce a few
simplifications in the loss. First, as the loss only involve KL divergence
between Gaussians, it can be written analytically:
\begin{align}
    \mathcal{L}_t = \mathbb{E}_{t, x_0, \epsilon} \left[\frac{1}{2 \|\Sigma_{\theta}\|_2^2} \| \tilde{\mu}_t(x_t, x_0) - \mu_{\theta}(x_t, t)\|^2 \right].
\end{align}

Another simplification follows from observing that $\tilde{\mu}_t(x_t,
x_0)$ and $\mu_{\theta}(x_t, t)$ can be re-written as:
\begin{align}
    \tilde{\mu}_t(x_t,x_0) &= \frac{1}{\sqrt{\alpha_t}}\left(x_t - \frac{1 - \alpha_t}{\sqrt{1 - \bar{\alpha_t}}} \epsilon_t\right), \\
    \mu_{\theta}(x_t, t) &= \frac{1}{\sqrt{\alpha_t}}\left(x_t - \frac{1 - \alpha_t}{\sqrt{1 - \bar{\alpha_t}}} \epsilon_{\theta}(x_t, t)\right).
\end{align}

This follows from \cref{eq:diff_xt,eq:reverse_conditioned}. Given this
expression of $\mu_{\theta}(x_t, t)$, it is only necessary for the model to
estimate $\epsilon_{\theta}(x_t, t)$, the amount of gaussian noise added to
$x_{t-1}$ to produce $x_t$. This explains why diffusion models are especially
well suited for denoising applications. Therefore, the loss can be further
simplified as (using \cref{eq:diff_xt}): 
\begin{align}
    \mathcal{L}_t^{\text{simple}} &= \mathbb{E}_{t, x_0, \epsilon} \left[ \| \epsilon_t - \epsilon_{\theta}(x_t, t)\|^2 \right],\\
                                  &= \mathbb{E}_{t, x_0, \epsilon} \left[ \| \epsilon_t - \epsilon_{\theta}(\sqrt{\bar{\alpha_t}}x_0 + \sqrt{1 - \bar{\alpha_t}} \epsilon_t , t)\|^2 \right].
\end{align}

Note that the scaling term has been omitted from the last equation as the
authors from \cite{ho2020denoising} obtained better results without it. 
The training procedure can be summarized in \cref{alg:diff_training}.  
Finally, once the model is trained, the sampling can be done from random noise
and repeatedly applying the reverse process. Using the model to estimate the
noise added at each time step, we have: 

\begin{equation}
    x_{t-1} = \frac{1}{\sqrt{\alpha_t}} \left(x_t - \frac{1-\alpha_t}{\sqrt{1-\bar{\alpha_t}}} \epsilon_{\theta}(x_t, t)\right).
\end{equation}

The sampling procedure is then defined in \cref{alg:diff_sampling}. Please note
that the formalism employed in this section is identical to the one used in
\cite{ho2020denoising}. We could have referred the reader directly to this
article, but it seemed essential to recall the basic principles of diffusion
models. For completeness, we also cite the excellent
blogpost \cite{weng2021diffusion} from which we drew some inspiration for the two previous
sections. 

\begin{algorithm}[]
    \caption{Diffusion Training procedure}\label{alg:diff_training}
    \begin{algorithmic}
    \While{not converged}    
    \State $x_0 \sim q(x_0)$
    \State $t \sim \text{Uniform}(\{1,..., T\})$
    \State $\epsilon \sim \mathcal{N}(0, \mathbf{I})$
    Take a gradient descent step on $\nabla_{\theta} \| \epsilon - \epsilon_{\theta}(\sqrt{\bar{\alpha_t}}x_0 + \sqrt{1 - \bar{\alpha_t}} \epsilon , t)\|^2$
    \EndWhile
 
    \end{algorithmic}
    \label{alg:diff_training}
\end{algorithm}

\begin{algorithm}
    \caption{Diffusion Sampling Procedure}\label{alg:diff_sampling}
    \begin{algorithmic}
    \State $x_T \sim \mathcal{N}(0, \mathbf{I})$
    \For{$t=T,...,1$}    
    \State $z \sim \mathcal{N}(0, \mathbf{I})$ 
    \State $x_{t-1} = \frac{1}{\sqrt{\alpha_t}} \left(x_t - \frac{1-\alpha_t}{\sqrt{1-\bar{\alpha_t}}} \epsilon_{\theta}(x_t, t)\right) + \beta_t^2 z$
    \EndFor
    \State \textbf{return} $x_0$
    \end{algorithmic}
    \label{alg:diff_sampling}
\end{algorithm}

\subsection{Recent Advances with Diffusion Models}
In the previous sections, we have presented the diffusion models and how they
can be trained to learn complex data distributions. It was originally leveraged
for image generation in \cite{ho2020denoising}, which samples 2D random noise
and progressively generates a sensible image. Plenty of consecutive works have
done the same with various improvements. In DDPM, the authors employ a denoising
U-Net to estimate the noise at each time step. This U-Net is replaced with
visual transformers in recent diffusion models \cite{rombach2022high}. With the
above formulation, the sampling is expensive as it requires iterative
application of the denoising model to the whole image. Instead, Latent Diffusion
Models (LDM) \cite{rombach2022high} proposes to apply the diffusion process to
the latent space to greatly reduce sampling time. The authors leverage an
encoder-decoder scheme to map the image space onto the latent space and back. In
addition, their formulation is well suited for latent manipulation and
conditioning the generation process with additional information such as text,
other images, layout, etc. Other approaches speed-up DMs with improved sampling
such as strided sampling schedule \cite{nichol2021improved}, ODE-based sampling
\cite{lu2022dpm,lu2022dpm_plus}, and careful variance scheduling
\cite{kong2021fast}. Alternatively, some contributions reconsider the denoising
diffusion process and leverage other corruption processes such as blurring
\cite{hoogeboom2022blurring} or masking \cite{daras2022soft}. Another approach
is to leverage non-Markovian diffusion process with for
instance Denoising Diffusion Implicit Models (DDIM) \cite{song2020denoising}.
Similarly to LDM, \cite{ho2022cascaded} derives a cascaded framework to scale up
the generated image size. With these iterative improvements, DMs largely
outperform the state-of-the-art image generation in terms of quality. Up to now,
this field was mostly dominated by GANs (\eg
\cite{karras2019style,Karras2019stylegan2}). GANs run faster, but the gap is
getting smaller and DMs overcome the GANs main issues: lack of
diversity, training instabilities and mode collapse. 

These techniques recently got a lot of attention out of the computer vision
field with their association with Large-Language-Models (LLMs) (\eg CLIP
\cite{radford2021learning}, GPT \cite{brown2020language} or T5
\cite{raffel2020exploring}). These models are referred to as Visual Language
Models (VLM), and combine the rich semantic latent space of LLMs with image
representation to perform text-to-image generation. They are embodied by Dall-E
\cite{ramesh2021zero}, Flamingo \cite{alayrac2022flamingo} and Imagen
\cite{saharia2022photorealistic}, among others. These models are able to
generate almost indistinguishable images (at least for the human eye) in an
extremely controllable way. It is great for plenty of applications, including for
creative purposes. However, it also has a large societal impact as such models can
easily be misused (\eg for deepfake generation) and are subject to questionable
biases. As an example for the previous claims, \cref{fig:diffusion_examples}
provides a few examples of real and fake images generated with various VLMs,
guessing correctly which pictures are fake is quite challenging. 

\begin{figure}
    \centering
    \includegraphics[width=\textwidth]{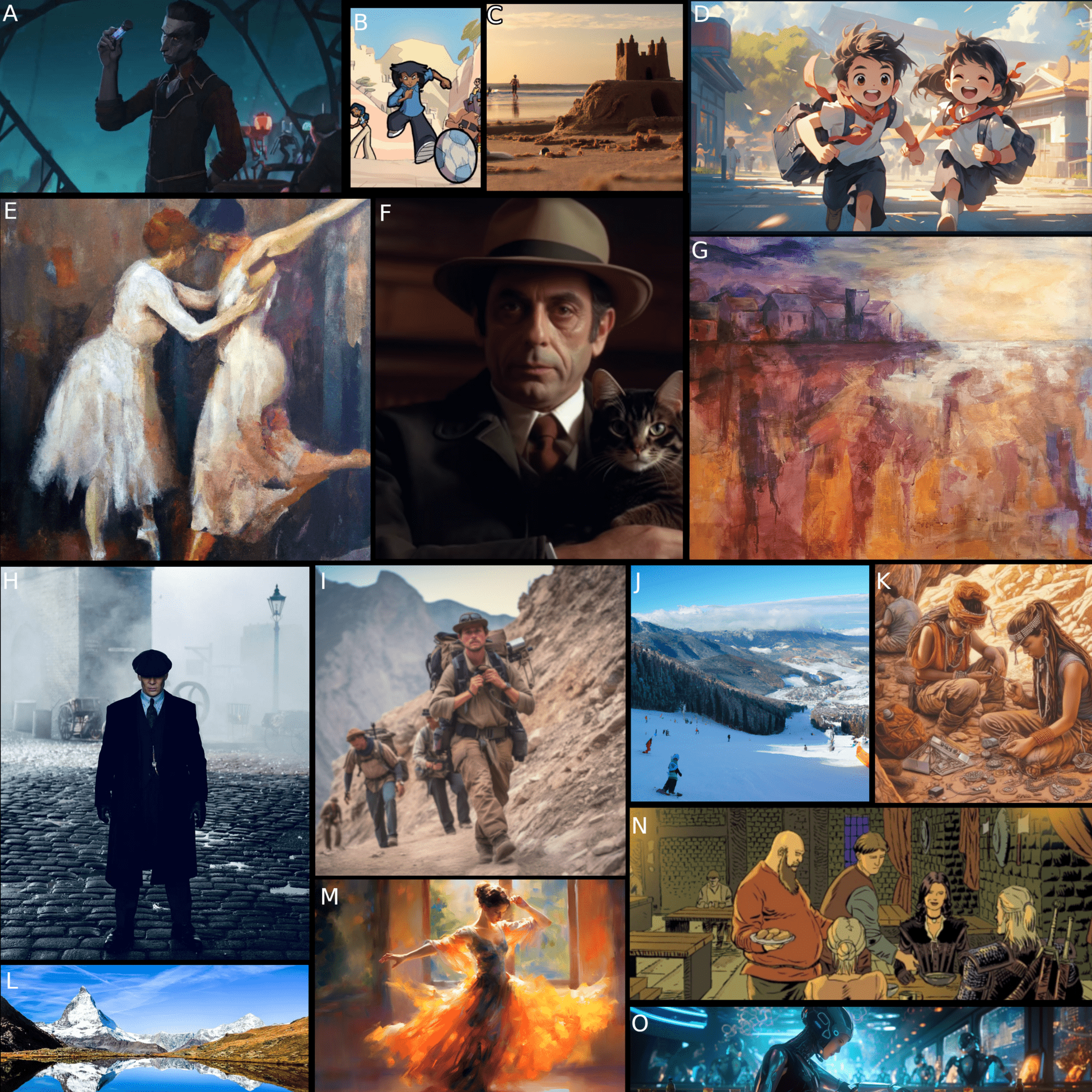}
    \caption[Examples of images generated with Diffusion Models]{Examples of
    real and fake images generated with diffusion approaches. We encourage the
    reader to guess which images are real and which are fake. We provide the list
    of answers in a footnote\footnotemark on the next page to prevent any confirmation bias.}
    \label{fig:diffusion_examples}
\end{figure}

Of course, image generation is not the only task that can be handled with
diffusion models. Generation of all kinds of modalities can be performed with
DMs: music \cite{mittal2021symbolic, liu2022diffsinger}, voice
\cite{huang2022prodiff}, text \cite{lovelace2022latent}, time series
\cite{li2022generative} or graphs \cite{vignac2022digress}. In addition, DMs can
also be leveraged for non-generative tasks, such as image translation
\cite{sasaki2021unit}, inverse image problems \cite{rombach2022high}
(application to inpainting), 3D modelling \cite{karnewar2023holodiffusion}. Last
but not least, they can also be used for predictive tasks such as segmentation
\cite{rahman2023ambiguous}, and, of course, detection with DiffusionDet
\cite{chen2022diffusiondet}. Thus, the next section will be dedicated to
explaining the principle behind DiffusionDet.

\footnotetext{Real images are images:
A, B, G, H, L and N, others have been generated with Midjourney or Dall-E.}

\section{DiffusionDet for Object Detection}
\vspace{-1em}
DiffusionDet \cite{chen2022diffusiondet} is a recently proposed model for object
detection. It tackles the OD task using a generative approach instead of seeing
it as a regression task. The latter predicts the box coordinates from the input
image while the former generates the box coordinates conditioned on the image.
The difference is subtle, but not seeing the detection as a regression problem
unlocks new designs. The main idea of DiffusionDet is to apply the diffusion
principle to the box generation. Random boxes are first sampled, and a model is
trained to refine iteratively the size and position of the boxes so that they
localize the objects in the input image, this is illustrated in
\cref{fig:diffusiondet_principle}. Specifically, the boxes are iteratively
denoised by the model. The diffusion process considered here is the same as in
Denoising Diffusion Implicit Models (DDIM) \cite{song2020denoising}, which as
mentioned in the previous section, proposes a non-Markovian forward process
that leads to the same objective as DDPM. The non-Markovian property of the
novel diffusion allows for much faster denoising. DDIM sampling is then
leveraged in DiffusionDet to iteratively denoise the boxes. 

\begin{figure}
    \centering
    \includegraphics[width=\textwidth]{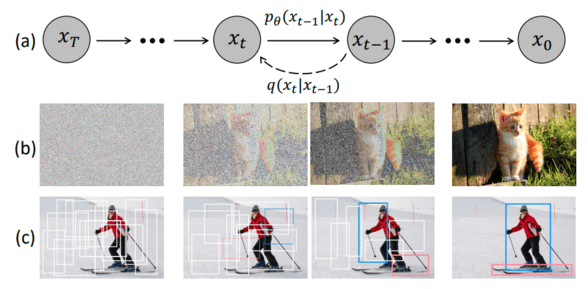}
    \caption[DiffusionDet principle]{Illustration of DiffusionDet principle,
     figure taken from \cite{chen2022diffusiondet}. (a) diffusion process, (b)
     diffusion process for image generation with DDPM and (c) DiffusionDet
     principle, random boxes are denoised to locate objects in the image.}
     \label{fig:diffusiondet_principle}
\end{figure}

Specifically, the denoising part of DiffusionDet is a lightweight hybrid
network, it consists of a self-attention layer (transformer-like) followed by a
dynamic layer (called an Instance Interaction layer). The diffusion/detection head
is finally split into two branches, one for classification and one for
regression. Both branches are implemented as small MLPs. The input to the head
is computed from the input images features extracted with a backbone network.
The backbone is a ResNet-50 with a three level FPN attached on top. Before
being fed to the detection head, object features are pooled from the entire
feature map with RoI Align module. The detection head processes object features
independently, but the Instance Interaction layer enables interactions between
instances. The detection head is applied iteratively to refine the bounding
boxes. The initial bounding boxes are sampled randomly from a normal
distribution. The regression branch of the head is trained to predict the noise
between the true boxes and the current boxes. After each iteration, the boxes
are updated following the DDIM sampling strategy. Only a small number of
iterations is required to get satisfactory boxes (the original paper provides
experiments with between 1 and 8 iterations). A renewal process also replaces
boxes with small confidence scores with random ones after each iteration to
prevent duplicated or erroneous boxes. The dynamic layer injects features from
the previous iteration into the computation of the adjusted boxes. The current
time step is encoded into a time embedding using a lightweight MLP. These
embeddings are then used to compute scale and shift vectors to transform the
object features and condition the model. We provide a detailed architecture
diagram in \cref{fig:diffusiondet_architecture}.

\begin{figure}
    \centering
    \includegraphics[width=\textwidth]{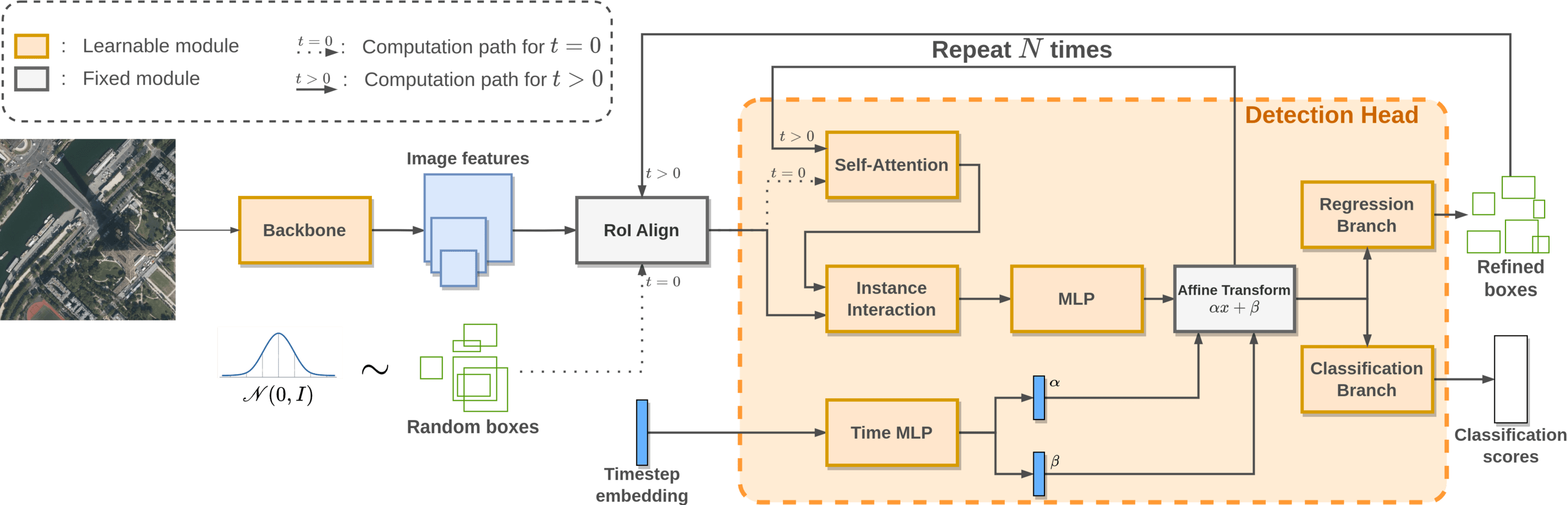}
    \caption[DiffusionDet architecture]{DiffusionDet architecture and detailed detection head design.}
    \label{fig:diffusiondet_architecture}
\end{figure}

The training is done in a similar fashion as in DDIM, except for the loss
function which is designed for object detection. First, a timestep is sampled
randomly, then the right amount of noise is added to the ground truth boxes and
the model is optimized using a classical loss function for detection (a
combination of Generalized IoU, L1 loss on box coordinates and cross-entropy for
the labels). As the number of predicted boxes is fixed, a set-to-set matcher is
employed to build target-prediction pairs with similar functioning as in DETR
\cite{carion2020end}. The loss is then computed on the selected pairs as done in
any detection framework.

DiffusionDet has a hybrid structure, it only has one stage, yet it does not
predict boxes and labels densely as common one-stage detectors. The box
denoising formulation allows for replacing the first stage (\ie the RPN) with a
much more naive approach: random box sampling. The iterative refinement of the
boxes is able to compensate for the poor initial positioning of the boxes. In a
sense, DiffusionDet resembles two-stage detection frameworks that leverage
iterative bounding box regression such as CRAFT \cite{yang2016craft}, Object
Detection via Multi-region model \cite{gidaris2015object}, or Cascade R-CNN
\cite{cai2018cascade}. The main difference between these models and DiffusionDet
is the direct box prediction. Instead of outputting refined box location,
DiffusionDet is trained to predict the shift between the current boxes and the
corresponding ground truth. This does not seem significant, but it is much more
adapted to the iterative regression procedure. First iterative methods propose
using the same detection head repeatedly to get better and better boxes.
However, the head is trained directly to output correct boxes, no matter how off
they are in the first place. DiffusionDet instead conditions the model with the
timestep embedding so that it knows how much noise should be removed from the
boxes. With this trick, it can reuse the same head without any issues.
Conversely, Cascade R-CNN makes use of decoupled heads for each iteration to
account for different refinement magnitudes; however, it significantly increases
the model size.

\section{Few-Shot DiffusionDet}
\vspace{-1em}
Now that we have reviewed the basic principles of diffusion models and presented
DiffusionDet we can see how it can be leveraged for FSOD. In this chapter, we
propose an adaptation of DiffusionDet in the few-shot setting, based on
fine-tuning. Fine-tuning has become increasingly popular in FSOD throughout the
last two years (see \cref{tab:fsod_comparison}). Simple fine-tuning strategies
are now competitive with elaborated attention mechanisms. Another motivation for
trying a fine-tuning approach is to study techniques from the three main
directions in FSOD. \cref{chap:prcnn,chap:aaf} respectively focus on
metric-learning and attention-based methods. The last kind of FSOD approach in
the literature relies on fine-tuning. While such a strategy is not very
innovative, this is one of the first applications of a diffusion-based approach
to a few-shot predictive task. In addition to the fine-tuning strategy, we
propose a transductive inference scheme to boost the performance of the
fine-tuned model. However, these are only preliminary work and do not yield the
expected results yet. Finally, we also investigate an attention-based extension
of DiffusionDet, while promising on paper the experimental study demonstrates
poor results.

\subsection{Fine-tuning Strategy for Few-Shot DiffusionDet}
We present in this section the fine-tuning strategy that we propose to
adapt DiffusionDet to the few-shot regime:
\vspace{-1em}
\begin{enumerate}[nolistsep]
    \item Train DiffusionDet in a regular fashion on a dataset containing only
    examples of the base classes.
    \item Once base training is done, replace the classification layer with a
    randomly initialized layer with as many output neurons as the number of
    novel classes.
    \item Freeze the entire backbone and let only the detection head be
    updatable. This choice is not optimal and will be discussed in the next
    section, however, we present here the baseline configuration. 
    \item Reset the learning rate scheduler, so it goes again through a warmup
    phase. The scheduling is a simple linear warmup starting at $\frac{1}{1000}$
    of the base Learning Rate (LR) and linearly increasing up to the base LR
    value during 1000 iterations. 
    \item Fine-tune the model with $K$ images for each novel class. All
    instances of the novel classes are kept while instances from other classes
    are discarded. This corresponds to the distractor-free sampling scheme
    discussed in \cref{sec:fsod_distractors}.

\end{enumerate}

As our goal is not to tackle the Generalized Few-Shot setting, we are mostly
interested in the performance on novel classes. Of course, one might want to
detect base classes as well, in this case, it is possible to keep a
version of the model after base training and leverage it for base classes
detection. Of course, it would require twice as much time to perform the
inference to detect both base and novel classes, but this is a mild compromise
compared to common issues raised by G-FSOD.

Fine-tuning is part of most FSOD methods as the adaptation of the regression
part of the models cannot be easily done on the fly (conversely to the
classification part). However, fine-tuning attention-based or metric learning
models is often quite long in comparison with "simple" fine-tuning strategies
which directly fine-tune object detectors on the support set without expensive
additional components (\eg a query-support attention block). This makes the
fine-tuning faster and unlocks much quicker iterations and experiments.
Nevertheless, fine-tuning approaches cannot be adapted at inference time and
therefore, it is difficult to measure the robustness to various support
examples. Thus, multiple fine-tunings are required to get a relevant evaluation
of a model, otherwise, the randomness of the support set can introduce some
variance and the comparison is less reliable. In practice, fine-tuning with
different support does not significantly change the performance of FSDiffusionDet.

While the proposed strategy is fairly simple, it yields impressive results. We
provide in \cref{tab:diff_baseline_comp} a comparison between the FSDiffusionDet
baseline and the discussed methods from previous chapters. It outperforms
largely the metric-learning and attention-based methods on aerial images. On
natural images, the gains are reduced but FSDiffusionDet is still superior
(especially for MS COCO where the problem is now a 20-ways detection problem and
not a 5-ways task as in previous chapters). A detailed analysis is conducted in
the next section to understand why it performs so well and how it can be
improved further. 

\begin{table}[]
    \centering
    \resizebox{0.80\textwidth}{!}{%
    \begin{tabular}{@{\hspace{2mm}}ccccccccc@{\hspace{2mm}}}
    \toprule[1pt]
              & \multicolumn{2}{c}{\textbf{DOTA}}        & \multicolumn{2}{c}{\textbf{DIOR}}        & \multicolumn{2}{c}{\textbf{Pascal VOC}} & \multicolumn{2}{c}{\textbf{MS COCO}}     \\ \midrule
    \textbf{Method}    & Base           & Novel          & Base           & Novel          & Base           & Novel         & Base           & Novel          \\ \midrule
    FRW       & 49.04          & 35.29          & 61.30          & 37.29          & 63.21          & 48.72         & 29.03          & 24.09          \\
    DANA      & 53.99          & 36.50          & 62.71          & 38.18          & 65.17          & 52.26         & 38.14          & 24.75          \\
    WSAAN     & 46.72          & 35.12          & 62.79          & 32.38          & 65.27          & 51.70         & 40.87          & 21.42          \\
    PFRCNN    & 36.32          & 11.55          & 42.37          & 9.16           & -              & -             & -              & -              \\
    XQSA      & 51.11          & 41.00          & 59.88          & 41.51          & 62.13          & \rbf{53.94}         & 31.56          & \rbf{25.03} \\ \midrule
    FSDiffDet & \bbf{69.58} & \rbf{52.05} & \bbf{81.71} & \rbf{54.32} & \bbf{74.63}          & 52.64         & \bbf{51.91} & 24.99          \\ \bottomrule[1pt]
    \end{tabular}%
    } \caption[FSDiffusionDet baseline compared with other FSOD
    methods]{FSDiffusionDet baseline compared with other FSOD methods. mAP is
    reported with a 0.5 IoU threshold and all methods leverage 10 shots.}
    \label{tab:diff_baseline_comp}
    \end{table}

Another advantage of FSDiffusionDet compared with attention methods is its
memory efficiency. Indeed, query-support combination blocks and support
embedding models require a lot of memory while training and often scale linearly
with the number of classes ($N$) and the number of shots $K$. FSDiffusionDet is
not limited by the number of shots and therefore, we can explore much higher
shot settings than with metric-learning or attention-based methods.
\cref{tab:fsdiffdet_shots} provides the novel classes performance on our four
datasets of interest. The base class performance is not reported here as they do
not depend on the number of shots, they can be found in the last row of
\cref{tab:diff_baseline_comp}. One can observe a smooth increase in performance
with the number of shots with a plateau above 50 shots. To better visualize this
trend and compare it with attention-based methods studied in \cref{chap:aaf},
we plot in \cref{fig:diff_perf_vs_shot} the performance against the number of
shots. From this, it can be seen that the performance is much lower in the
one-shot setting with FSDiffusionDet compared to attention-based approaches.
However, FSDiffusionDet quickly catches up and outperforms largely other methods
in higher shots settings. In addition, we can observe a much quicker increase in
performance as the number of shots increases with FSDiffusionDet. This is a
highly desirable property in an industrial application because this means that
the model has more potential for improvements. On the contrary, attention-based
approaches do not display such a strong trend, they are better suited for extremely
low-shot regimes, but become less effective with higher shots. 

\begin{table}[]
    \centering
    \resizebox{0.5\textwidth}{!}{%
    \begin{tabular}{@{\hspace{2mm}}ccccc@{\hspace{2mm}}}
    \toprule[1pt]
    $\bm{K}$ & \textbf{DOTA} & \textbf{DIOR} & \textbf{Pascal VOC} & \textbf{MS COCO} \\ \midrule
    1                & 4.19          & 27.17         & 22.24               & 7.43             \\
    2                & 9.83          & 40.31         & 31.98               & 12.45            \\
    3                & 27.61         & 43.54         & 29.52               & 15.75            \\
    5                & 39.00         & 46.92         & 38.08               & 19.33            \\
    10               & 52.05         & 54.32         & 52.64               & 24.99            \\
    20               & 62.79         & 60.24         & 59.26               & 28.76            \\
    30               & 67.32         & 65.28         & 64.19               & 31.19            \\
    50               & 71.91         & 71.21         & 67.81               & 34.64            \\
    100              & 72.27         & 77.05         & 71.31               & 38.77            \\ \bottomrule[1pt]
    \end{tabular}%
    } \caption[FSDiffusionDet performance against shots]{Influence of the number of
    shots on the few-shot object detection performance of FSDiffusionDet on
    DOTA, DIOR, Pascal VOC and MS COCO. Performance is reported with mAP$_{0.5}$.}
    \label{tab:fsdiffdet_shots}
\end{table}

\begin{figure}
    \centering
    \includegraphics[width=0.85\textwidth]{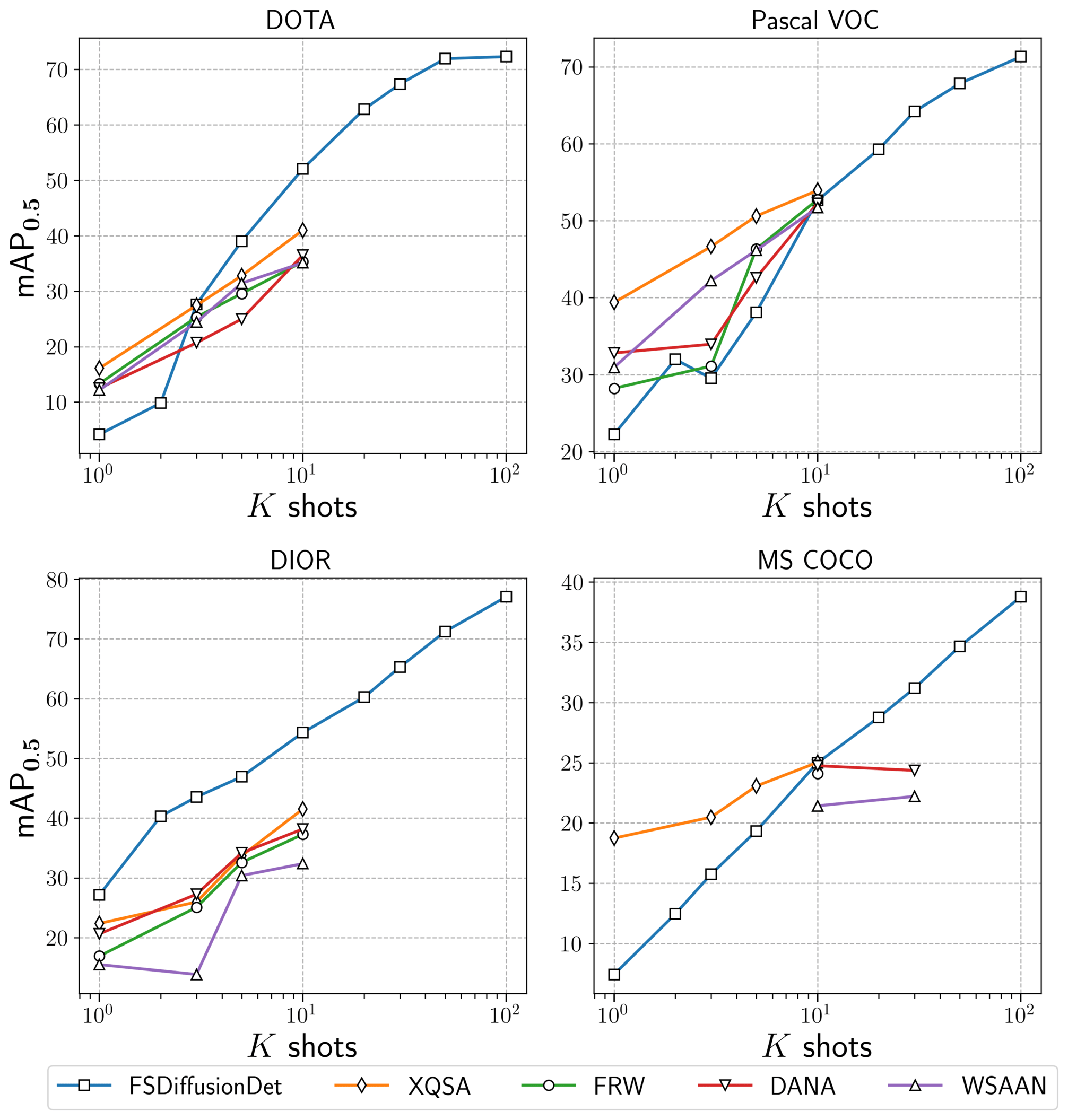}
    \caption[Influence of the number of shots on performance]{Performance of
    FSDiffusionDet, XQSA, FRW, DANA and WSAAN on DOTA, DIOR, Pascal VOC and MS
    COCO against the number of shots. Performance is reported with mAP$_{0.5}$.}
    \label{fig:diff_perf_vs_shot}
\end{figure}

\subsection{Experimental Study of FSDiffusionDet}
In the previous section, we have presented an efficient fine-tuning strategy for
DiffusionDet along with an analysis of its few-shot performance on several
datasets. However, this is only a baseline for FSDiffusionDet and its
performance can be improved further. Its fast training time allows for
conducting more experiments, which was too expensive with metric-learning and
attention-based methods. Thus, we present in this section a series of
experiments that we conducted to explore the capabilities of FSDiffusionDet but
also to answer more general questions about FSOD.

\subsubsection{Backbone Weights Initialization}
First, in classification, it is now well-known that using self-supervised
pre-trained backbones often boost a lot the few-shot performance of a method
\cite{chen2020simple,gidaris2019boosting,caron2021emerging,radford2021learning}.
While plenty of studies show this for classification, for detection, the
transferability of the learned features is not so evident. As a matter of fact,
a few contributions actually show that using such backbones is sub-optimal.  For
instance, InsLoc \cite{yang2021instance} and SoCO \cite{wei2021aligning} propose
object-level self-supervised techniques and prove empirically that image-level
SSL is not optimal and in some cases can even be detrimental to the detection
task. In this section, we study empirically the influence of using SSL
pre-trained weights for the backbone initialization, before base training and
fine-tuning. We do not consider the recent object-level techniques and instead
leverage four different initialization strategies for the backbone weights:
\vspace{-0.5em}
\begin{itemize}[nolistsep]
    \item[-] \textbf{Scratch}: weights are randomly initialized.
    \item[-] \textbf{ImageNet}: weights are initialized from a ResNet-50 trained in a supervised manner for ImageNet classification.
    \item[-] \textbf{DINO}: weights from the DINO \cite{caron2021emerging} pretraining on ImageNet.
    \item[-] \textbf{CLIP}: weights taken from the CLIP \cite{radford2021learning} model, trained in a contrastive way on a 400 million image-text pairs dataset.
\end{itemize}

Then, FSDiffusionDet is trained following our two steps training scheme (base
training and fine-tuning). The results are available in
\cref{tab:diff_backbone_weights}. From this, it is quite clear that training
from scratch is not a sensible option, even though base training is correct, the
fine-tuning on base classes provides really poor detection performance. Then,
between Imagenet, DINO and CLIP the differences are tight. Of course, CLIP's
weights are slightly worse than ImageNet and DINO, but it is still a strong
baseline. Between ImageNet and DINO, however, it is difficult to conclude as
both achieve the best performance on one aerial and one natural dataset. As the
performance gap is thin between ImageNet and DINO, we choose to conduct our next
experiments with ImageNet weights which have stood the test of time and are now
the default choice in computer vision.   

\begin{table}[]
    \centering
    \resizebox{0.6\textwidth}{!}{%
    \begin{tabular}{@{\hspace{2mm}}ccccc@{\hspace{2mm}}}
    \toprule[1pt]
    \textbf{Backbones} & \textbf{DOTA} & \textbf{DIOR} & \textbf{Pascal VOC} & \textbf{MS COCO} \\ \midrule
    Scratch            & 7.28          & 8.72          & 13.72               & 0.38             \\
    ImageNet           & \textbf{52.05}& 54.32         & 52.64               & \textbf{24.99}   \\
    DINO               & 46.84         & \textbf{55.88}& \textbf{54.58}      & 23.94            \\
    CLIP               & 40.36         & 51.61         & 49.81               & 19.83            \\ \bottomrule[1pt]
    \end{tabular}%
    } \caption[Study of the influence of the backbone pre-training]{Study of the
    influence of the backbone pre-training. mAP$_{0.5}$ is provided only for base
    classes, therefore the blue and red colors to distinguish between base and
    novel classes are no longer required.}
    \label{tab:diff_backbone_weights}
    \end{table}

\subsubsection{Plasticity Analysis}
The number of parameters frozen in the model is sometimes called the
\textit{plasticity} of the model in the continual learning field, but this
concept may also be useful in the few-shot setting. For simplicity, we measure
the plasticity of the model as the ratio between the number of trainable
parameters over the total number of parameters. Plasticity close to 1 means
that the model is malleable and could learn new complex tasks. However, when it
is close to 0, the model can barely change and learning new tasks may be
difficult. 

The plasticity is commonly explored in fine-tuning strategies for few-shot
tasks. The underlying principle is that the task is learned during base
training, and fine-tuning is only used to adapt the task to novel
classes. Hence, the behavior of the entire model should not change
dramatically. Therefore, the plasticity of the models is often quite low in the
FS literature. In practice, the early stages of the model are kept frozen while
only the deeper layers are trained. This trick is well-motivated as it
drastically reduces the capacity of the model and thus prevents overfitting,
which is particularly severe in low-shot regimes. In addition, it also reduces
catastrophic forgetting, which can be quite a challenge in G-FSOD. 

However, this may be inadequate for the detection task. As the detection is
primarily a problem of finding what is and what is not an object of interest,
the backbone is trained as a feature filter. Features from classes of interest
are highlighted while others are faded out. Conversely, for classification,
backbones are not required to learn such a filtering process as all classes are
"of interest", and there is always one object of interest in the image. In
addition, some recent experiments \cite{lee2022rethinking} about freezing
settings in cross-domain scenarios show that improved performance is achieved
with increased plasticity. In this work, the authors only study three freezing
settings: fine-tuning only the last layer (as proposed in TFA
\cite{wang2020frustratingly}), fine-tuning only the detection head (proposed in
FSCE \cite{sun2021fsce}), and fine-tuning the whole model (their proposition).
Here, we investigate the freezing setting in a more detailed manner with several
intermediary setups, but the main difference is that we conduct this analysis on
the same image domain: 
\vspace{-0.5em}
\begin{itemize}[nolistsep]
    \item[-] \textbf{Fine-Tune last layer only}: fine-tune only the last regression and classification layers.
    \item[-] \textbf{Fine-Tune head only}: fine-tune only the detection head.
    \item[-] \textbf{Up to stage $i$}: Freeze backbone up to stage number $i$ ($i \in
    \llbracket 1, 5 \rrbracket$ as ResNets have 5 stages).
    \item[-] \textbf{Fine-Tune whole}: fine-tune the whole model.
    \item[-] \textbf{Bias only}: fine-tune only the backbone biases.
    \item[-] \textbf{BatchNorm only}: fine-tune only the backbone BatchNorm parameters.
\end{itemize}

The results of this comparison can be found in
\cref{tab:diff_freezing_sweetspot}. It reports the mAP with different freezing
strategies on DOTA, DIOR, Pascal VOC and MS COCO. Additionally, the plasticity
rate is reported for each freezing strategy. Two distinct behaviors are observed
here. First, on DOTA, as the plasticity increases, the FSOD performance
increases as well. On DIOR, Pascal VOC and MS COCO, lower plasticity is
optimal (fine-tuning the detection head and the last stage of the backbone).
Therefore, the fine-tuning strategy cannot be set once for all datasets. It is
therefore crucial to understand what differs in DOTA from the other datasets. As
the task and the images remain similar between base training and fine-tuning,
the only source of variability comes from the class splits. Our hypothesis here
is that base and novel classes in DOTA are less compatible (\ie less alike) than
in the other datasets. For Pascal VOC, we briefly discuss this aspect in
\cref{sec:aaf_natural_images}, where we observed surprisingly high mAP for the
novel class \textit{sheep} as the class \textit{horse} was in the base set. A
more quantitative way of measuring the compatibility between the base and novel
class sets would be required to draw reliable conclusions about this. We are
currently working on this.

In addition, we observe that fine-tuning the backbone entirely is often a
sub-optimal choice. Instead, higher (or at least competitive) results are
achieved by fine-tuning only the biases or the batch normalization parameters of
the backbone. Fine-tuning only the biases or the batch normalization parameters
in the backbone does not change much the plasticity as only a few parameters are
concerned, yet it seems to provide a beneficial adaptability to the entire
backbone. On DIOR, Pascal VOC and MS COCO, it provides very high mAP compared to
other settings with similar plasticity. Finally, fine-tuning only the very last
layer of the classification and regression branches is completely sub-optimal.
Strangely, this contradicts some FSOD models that adopt this strategy and
achieve reasonable performance (\eg TFA \cite{wang2020frustratingly}). With
FSDiffusionDet, this strategy achieves extremely poor detection, having too
small plasticity must be avoided. Thus, a plasticity compromise must be found
depending on the dataset and its split compatibility.

\begin{table}[]
    \centering
    \resizebox{0.85\textwidth}{!}{%
    \begin{tabular}{@{\hskip 2mm}lccccc@{\hskip 2mm}}
    \toprule[1pt]
    \textbf{Freezing point}& \textbf{Plasticity rate}   & \textbf{DOTA} & \textbf{DIOR} & \textbf{Pascal VOC} & \textbf{MS COCO} \\ \midrule
    FT whole               & 100.00 \%             & 60.09         &  52.17             & 43.10           &  17.15             \\
    Bias only              & 35.98 \%              & \textbf{60.45}&  55.12             & 49.90           &  20.19             \\
    BatchNorm only         & 35.97 \%              & 59.35         &  55.63             & 51.96           &  19.70             \\
    Up to stage 1          & 99.98 \%              & 58.85         &  53.37             & 43.81           &  17.72             \\
    Up to stage 2          & 99.47 \%              & 57.41         &  53.21             & 41.23           &  17.73             \\
    Up to stage 3          & 96.57 \%              & 59.88         &  54.36             & 47.57           &  19.49             \\
    Up to stage 4          & 79.66 \%              & 56.13         &  \textbf{57.51}    & \textbf{53.72}  &  \textbf{21.88}    \\
    FT head only           & 35.97 \%              & 51.82         &  55.70             & 51.72           &  19.96             \\
    FT last layer only     & 0.03 \%               & 0.05          &  0.11              & 0.53            &   0.01             \\ \bottomrule[1pt]
    \end{tabular}%
    } \caption[Plasticity Analysis]{Influence of the amount of plasticity on the
    FS performance on DOTA, DIOR, Pascal VOC and MS COCO. mAP is reported with a 0.5 IoU threshold.}
    \label{tab:diff_freezing_sweetspot}
    \end{table}

\subsubsection{Number of Proposals}
Another set of experiments explores the influence of the number of
\textit{proposals} for FSOD. The proposals are the boxes sampled at the
beginning of the diffusion process. The number of proposals $N_p$ represents the
maximum number of objects that the model can detect in one image. This number is
chosen large compared to the average number of objects in the images.
Intuitively, sampling more random boxes reduces the chances of missing an
object. However, having a higher number of proposals generates more duplicates
which can be detrimental as well. More proposals also lead to a higher training
time and memory usage as the denoising process is applied on all boxes. The
critical parts are the self-attention layers that scale in $O(N_p^2)$. Thus, we
investigate the few-shot performance of FSDiffusionDet with various numbers of
proposals. The results of these experiments are available in
\cref{tab:fsdiff_proposals}. We notice two different behaviors between natural
and aerial images. For natural images (Pascal VOC and MS COCO), it seems better
to set the number of proposals relatively low compared to aerial images. This
makes sense as there are more objects in aerial images. For natural images, the
detection quality increases as the number of proposals is reduced, and it may be
relevant to test what happens with even fewer proposals. However, with aerial
images, the performance does not seem to correlate well with the number of
proposals. It is relevant to mention that the results on MS COCO are opposite to
what the authors of DiffusionDet found in the regular data regime (increasing
the number of proposals increases the mAP). This could be explained by the
reduced number of objects in the images, as in the few-shot regime we consider
only the novel classes, many instances are discarded and fewer proposals are
required to detect the objects.

\begin{table}[]
    \centering
    \resizebox{0.69\textwidth}{!}{%
    \begin{tabular}{@{\hspace{2mm}}ccccc@{\hspace{2mm}}}
    \toprule[1pt]
    \textbf{\# of Proposals} & \textbf{DOTA} & \textbf{DIOR} & \textbf{Pascal VOC} & \textbf{MS COCO} \\ \midrule
    \textbf{200}             & 41.57         & 52.92         & \textbf{52.86}           & \textbf{23.24}         \\
    \textbf{250}             & 47.97         & 47.62         & 52.28           & 22.61         \\
    \textbf{300}             & \textbf{55.76}         & 51.77         & 51.81           & 22.46         \\
    \textbf{350}             & 52.27         & 50.41         & 50.63           & 22.13         \\
    \textbf{400}             & 46.49         & 49.98         & 50.55           & 20.04         \\
    \textbf{450}             & 53.11         & 53.07         & 51.06           & 20.48         \\
    \textbf{500}             & 52.03         & \textbf{55.31}         & 51.44           & 20.25         \\ \bottomrule[1pt]
    \end{tabular}%
    } \caption[Number of proposals study]{Analysis of
    FSDiffusionDet performance (mAP$_{0.5}$) against the number of proposals on DOTA, DIOR,
    Pascal VOC and MS COCO datasets.}
    \label{tab:fsdiff_proposals}
    \end{table}

\subsubsection{Other Experiments and Future Directions}
\label{sec:diff_experiments}
In addition to the previous experiments, we conduct several other studies to
further improve the detection capabilities of FSDiffusionDet. However, some of
these studies did not yield very relevant insights, some others were not
explored deeply enough due to the time constraint of this PhD project. We briefly
present these experiments that will pave the way for future improvements of
FSDiffusionDet.

\textbf{Learning Rate Sweeping.}\\
First, just as proposals, freezing sweet spot and backbone pre-training, we
studied the influence of the Learning Rate (LR) and its schedule on the FSOD
performance. Indeed, the choice of the LR value during the fine-tuning is not
trivial. Therefore, to make sure we get a good fit for our experiments we
conduct a LR sweeping, \ie we try several different values for the LR. This
experiment is conducted only on DOTA with $K=10$ for simplicity, but in theory,
it should be done for every new experiment. Indeed, following the nomenclature
from the recent Deep Learning Tuning Playbook \cite{tuningplaybookgithub}, the
learning rate is a nuisance parameter, meaning that to make a fair comparison
between various settings, the optimal LR should be found for all runs
individually. As we change some hyperparameters, it is likely that the optimal
learning rate changes as well, therefore fair comparison can only be achieved if
the LR is optimal for all runs, \eg the optimal learning rate for $K=1$ or
$K=100$ shots may not be the same. Even though fine-tuning methods are fast to
adapt, running such an LR analysis is very expensive. Nevertheless, running
an LR sweeping on DOTA provides insights into how it influences the FSOD
performance. The results can be found in \cref{tab:diff_lr_sweep}, and show an
optimal value around $5\mathrm{e}{-5}$. But most importantly, it shows a
relatively large area where performance is satisfactory. This comforts us in our
choice of fixing the LR for all our experiments. While this is probably not the
optimal choice, it is reasonable. We also tried a cosine annealing scheduler,
but it yields consistently inferior results and was then rejected. Its only
advantage is that it seems to deal better with higher LR, which makes the
training slightly faster.

\begin{table}[]
    \centering
    \resizebox{0.6\textwidth}{!}{%
    \begin{tabular}{@{\hspace{2mm}}ccc@{\hspace{2mm}}}
        \toprule[1pt]
        Learning rate         & \textbf{Constant Schedule} & \textbf{Cosine Annealing} \\ \midrule
        $1\mathrm{e}{-6}$ & 39.04                      & 29.45                     \\
        $5\mathrm{e}{-6}$ & 49.06                      & 45.32                     \\
        $1\mathrm{e}{-5}$  & 52.31                      & 49.33                     \\
        $5\mathrm{e}{-5}$  & \textbf{53.46}             & \textbf{52.99}            \\
        $1\mathrm{e}{-4}$   & 53.25                      & 51.96                     \\
        $5\mathrm{e}{-4}$   & 49.33                      & 52.51                     \\
        $1\mathrm{e}{-3}$    & 47.45                      & 47.82                     \\ \bottomrule[1pt]
        \end{tabular}%
        } \caption[Learning rate sweeping on DOTA]{Learning rate sweeping on DOTA
    dataset with $K=10$ shots. Two distinct schedulers are considered: constant
    and cosine annealing. Performance is reported with mAP$_{0.5}$.}
    \label{tab:diff_lr_sweep}
    \end{table}

\textbf{Proposal Prior Distribution.}\\
In DiffusionDet, the coordinates of the proposals are sampled randomly following
a normal distribution. The coordinates of the boxes are clamped with a scale
parameter $\varsigma$ to make sure the center of each box remains within the
image limits. Specifically, we have: 
\begin{align}
    w &= \left(\text{clamp}(\epsilon_w, -\varsigma, \varsigma) / \varsigma + 1 \right) / 2, \\
    h &= \left(\text{clamp}(\epsilon_h, -\varsigma, \varsigma) / \varsigma + 1 \right) / 2, \\
    x &= \left(\text{clamp}(\epsilon_x, -\varsigma, \varsigma) / \varsigma + 1 \right) / 2 - \frac{w}{2}, \\
    y &= \left(\text{clamp}(\epsilon_y, -\varsigma, \varsigma) / \varsigma + 1 \right) / 2 - \frac{h}{2}, \\
    &\text{with} \; \epsilon_x, \epsilon_y, \epsilon_w, \epsilon_h \sim \mathcal{N}(0,1).
\end{align}

By default, $\varsigma$ is set to 2, but it would be interesting to explore how
it changes the FSOD performance. Indeed, as $\varsigma \to 0$ the boxes are more
and more identical and their centers tend to approach the image corners, as
$\varsigma \to \infty$, the boxes are more and more aligned with the image
center. It could also be relevant to explore the use of a uniform sampling
instead of a Gaussian distribution. This would prevent having a bias toward the
image center as it happens with high values of $\varsigma$. In fact, the use of
$\varsigma$ close to 1, is relatively close to a uniform distribution. This is
illustrated in \cref{fig:diff_ddim_initial_distribution}.

The setting of $\varsigma$ slightly changes the diffusion process as the boxes
are generated from clamped Gaussian distributions and not regular Gaussians.
Most of the derivations detailed in \cref{sec:ddpm_principle} hold only for
gaussian distributions and therefore using small values of $\varsigma$ or
uniform prior may disrupt the diffusion process. We did not have time to conduct
these experiments yet, but this is planned as future work. Another consideration
is the size of the generated boxes. In the above, the size of the proposal is
randomly sampled and has an expected value of half the image size. This may not
be optimal, especially when applying FSDiffusionDet on aerial images with small
objects. Hence, we propose to introduce a proposal scaling parameter $\varpi$
that divides the width and height of the proposals: 
\begin{align}
    w' &= \frac{w}{\varpi}, \\
    h' &= \frac{h}{\varpi},
\end{align}

where $w'$ and $h'$ are the scaled width and height of the sampled boxes. Of
course, as small proposals may not cover the whole image, their number must be
increased to prevent missing objects. Going further, we can also imagine a
mixture of width and height distribution to sample proposals with significant
size differences, which is not achieved in practice yet as shown in
\cref{fig:diff_ddim_initial_distribution}. \cref{fig:diff_ddim_scales_proposals}
shows how the proposals would change with $\varpi$, with $\varsigma=2$ fixed.
\begin{figure}
    \centering
    \includegraphics[width=\textwidth]{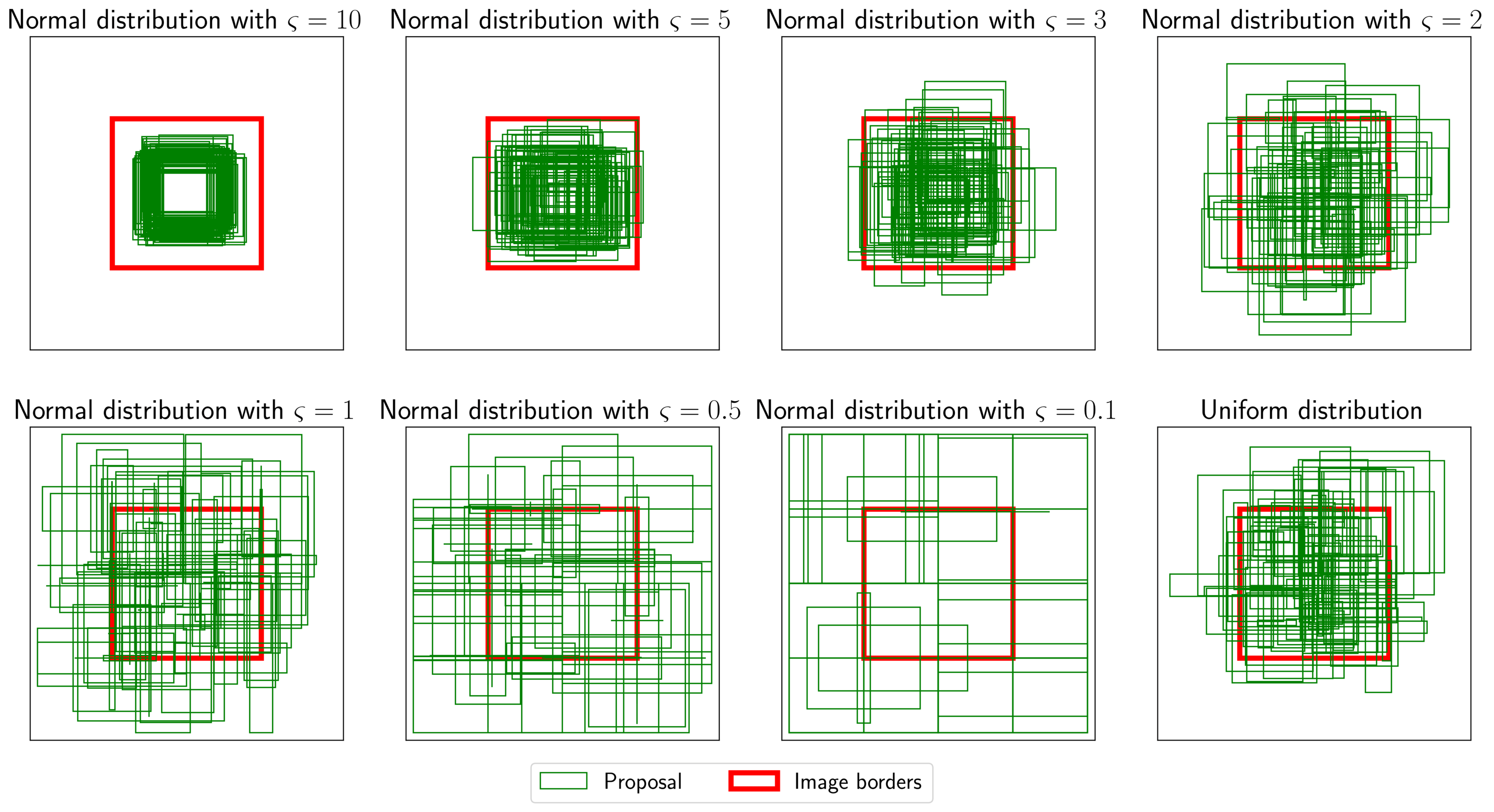}
    \caption[DiffusionDet initial random boxes]{DiffusionDet initial random
    boxes with various values of $\varsigma$, the parameter that controls the
    spread over the images. 75 proposals are sampled per image.}
    \label{fig:diff_ddim_initial_distribution}
    \vspace{1cm}
    \includegraphics[width=\textwidth]{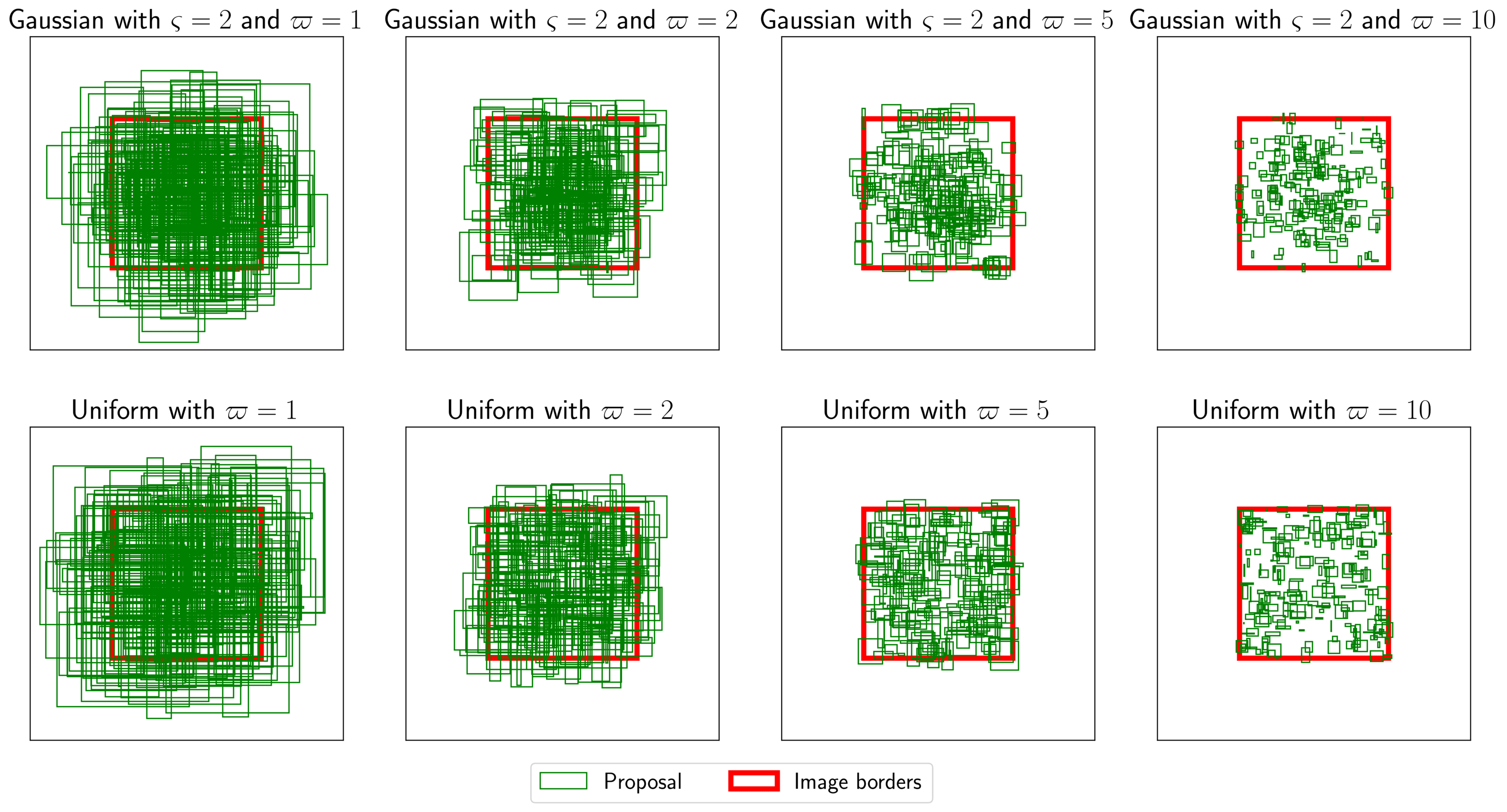}
    \caption[DiffusionDet scaled proposals]{Influence of $\varpi$ on the size of
    the proposals. Note that here 200 proposals are sampled, for visualization purposes.}
    \label{fig:diff_ddim_scales_proposals}
\end{figure}

\textbf{Transductive Inference.}\\
To further improve the FSOD performance of FSDiffusionDet, we also consider
designing a transductive inference scheme inside the detection framework. To our
knowledge, this would be a first in the FSOD domain. Of course, the transductive
setting is slightly different from the regular few-shot inference as it requires
access to a large set of query images during the inference. The goal is to
detect objects in these images, but these unlabelled images can be leveraged to
improve the detection on the entire set. This setup makes a lot of sense for
COSE's application. Indeed, the very large images of COSE cannot be processed as
a whole, instead, they must be cropped into smaller patches. This means that a
relatively large number of images are to be processed at the same time ($ 11600
\times 8700$ pixels images can be cropped into roughly 400 patches of $512
\times 512$ pixels). Hence, studying transductive inference is of particular
interest to the company.

First of all, in classification, the transductive inference is often used in
replacement of a fine-tuning step and allows for direct adaptation of a model to a
new task or domain (see \cref{sec:transductive_cls} for more details). This is
prohibited in the detection context as the regression branch must be fine-tuned
anyway. Thus, our goal is to improve the classification part once the model has
been fine-tuned. To this end, we propose to adapt LaplacianShot
\cite{ziko2020laplacian} to work with representations of objects instead of
representations of entire images. Specifically, the Laplacian Shot Module (LSM)
replaces the classification layer of the model. As input, LSM receives a set of
all objects representation detected in the query set (using the boxes produced
by the regression branch) and the representation of the annotated support
examples. Then, LSM optimizes an objective function to find an optimal label
assignment $\mathcal{Y}$ of all query examples (in our case all objects found in the query
set):
\begin{equation}
    \mathcal{L}_{\text{LSM}}(\mathcal{Y}) = \sum\limits_{i=1}^{|Q|}\sum\limits_{c=1}^{|\mathcal{C}_{\text{novel}}|} l_i^c d(z_i, m_c) + \frac{1}{2} \sum\limits_{i,j} \eta(z_i, z_j) l_i^Tl_j,
\end{equation}

where $z_q$ and $l_q$ are the extracted features and classification score vector
for object $q$, respectively. $Q$ represents the set of object representations
in the whole query set, $d$ is a distance measure (\eg the euclidean distance
between objects representations), and $\eta$ is a similarity function in the
embedding space (in practice, it is chosen as a binary $k$-NN, \ie a vector has
a similarity of 1 with its $k$ nearest neighbors and 0 with all others).
Finally, $m_c$ is the representation of class $c$, it is computed as the average
over the support representations of that class. Intuitively, this objective
function finds a compromise between assigning to an unlabelled object the label
of the closest support example and assigning the same label as its neighbors. 

As a first comparison, we leverage four distinct inference setups for detection: 
\begin{itemize}[nolistsep]
    \item[-] \textbf{Fine-tuning Inference (FI)}: boxes and classification
    scores output by the fine-tuned model.
    \item[-] \textbf{Transductive Inference (TI)}: boxes from fine-tuned model
    and classification score from the Laplacian Shot Module.
    \item[-] \textbf{Hybrid Transductive Inference (HTI)}: boxes from fine-tuned
    model and classification scores as a combination (\eg element-wise
    multiplication) of LSM and fine-tuned model.
    \item[-] \textbf{Optimal Classification (OC)}: boxes from the model and
    optimally matched labels from the ground truth. It can be seen as an
    \textbf{oracle}, it is a performance upper-bound given the quality of the
    regression.
\end{itemize}

We assume above that for the detection task, the regression branch must be
fine-tuned otherwise performance is highly degraded. To confirm this assumption,
we compare the four inference settings described above using a model that has
only been base-trained against a model that was fine-tuned on the novel classes.
This is done on DOTA with $K=10$ shots. The results are available in
\cref{tab:diff_transductive_naive}. From that table, it is clear that the
fine-tuning of the regression branch is crucial to achieving reasonable
performance. In particular, the oracle (OC) is highly degraded when the
regression branch is not fine-tuned. Interestingly, in this case, TI achieves
higher mAP than the non-fine-tuned classification branch of the model. It makes
sense as this layer is only initialized with random weights. However, one can
see that the TI is largely under the FI when using the fine-tuned regression
branch. The classification made by the LSM is therefore worse than the
fine-tuned classification branch. A quick investigation of the classification
scores shows that the scores output by TI differ significantly from FI (see
\cref{fig:diff_transductive_score_hist}). The fine-tuned model outputs a large
number of very small classification scores which mostly correspond to background
objects. Hence, they are filtered out by the post-processing (score thresholding
and NMS), and the remaining ones will have a negligible impact on the mAP
computation. TI, however, outputs much higher scores, with a greater variance.
It struggles to distinguish foreground and background objects, and for good
reason, it was designed for classification and not detection. To this end, we
propose HTI, a hybrid classification inference that leverages both the score
from the fine-tuned model and the transductive inference. Hopefully, it will fix
the mistakes from the fine-tuned classification layer while avoiding the TI's
pitfall. To do so, we simply multiply the scores from FI and TI together. Thus,
the good foreground/background distinction from FI is embedded in the new score
distribution. This helps a lot for the classification; however, it is still
under the FI performance. From this, it seems that TI is only detrimental to
classification. TSNE representations of the embedding space help to make sense
of these results. These can be found in \cref{fig:diff_t_tsne} for FI, TI and
HTI, and the Oracle. 

One can see large patches of the cluster representing the class 2 misclassified
as class 14 by both TI and HTI. The prototype of class 2 seems closer to these
misclassified points in the TSNE visualizations, but the distances must be
interpreted carefully as two dimensions are not enough to represent the entire
complexity of the representation space (dimension 256). Yet, it seems that the
distributions of the classes in the embedding space are multimodal, therefore it
may be impossible to accurately classify the objects with only one prototype per
class. Leveraging multiple prototypes per class should be investigated in future
work. 

\begin{figure}
    \centering
    \includegraphics[width=\textwidth]{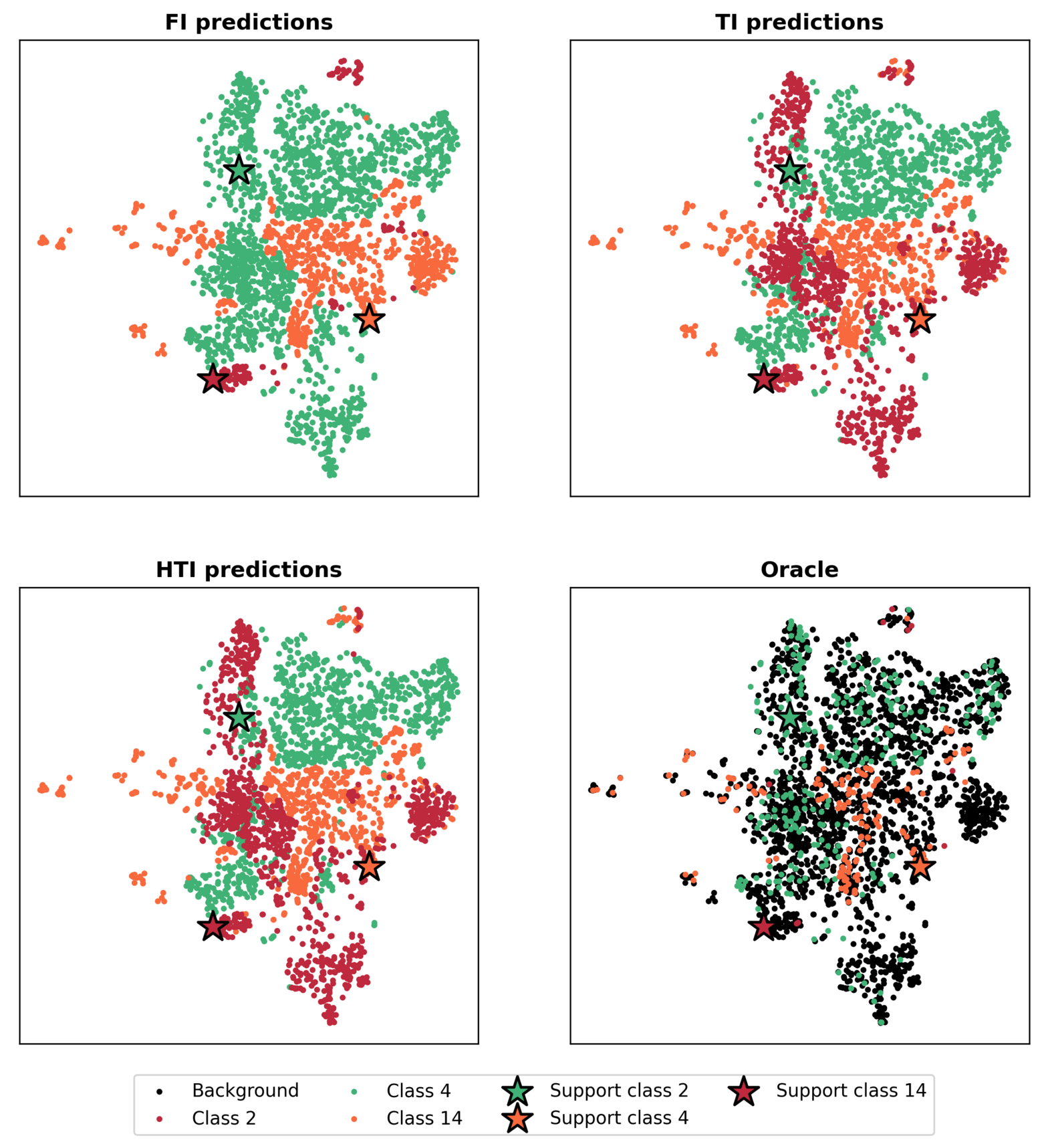}
    \caption[TSNE visualizations of transductive inference]{Comparisons of the
    TSNE visualizations of the fine-tuned model predictions (FI), transductive
    inference (TI), hybrid inference (HTI) and the oracle. Note that background
    predictions are only available for the oracle as the models' inferences only
    provide class scores.}
    \label{fig:diff_t_tsne}
\end{figure}

\begin{figure}
    \centering
    \includegraphics[width=\textwidth]{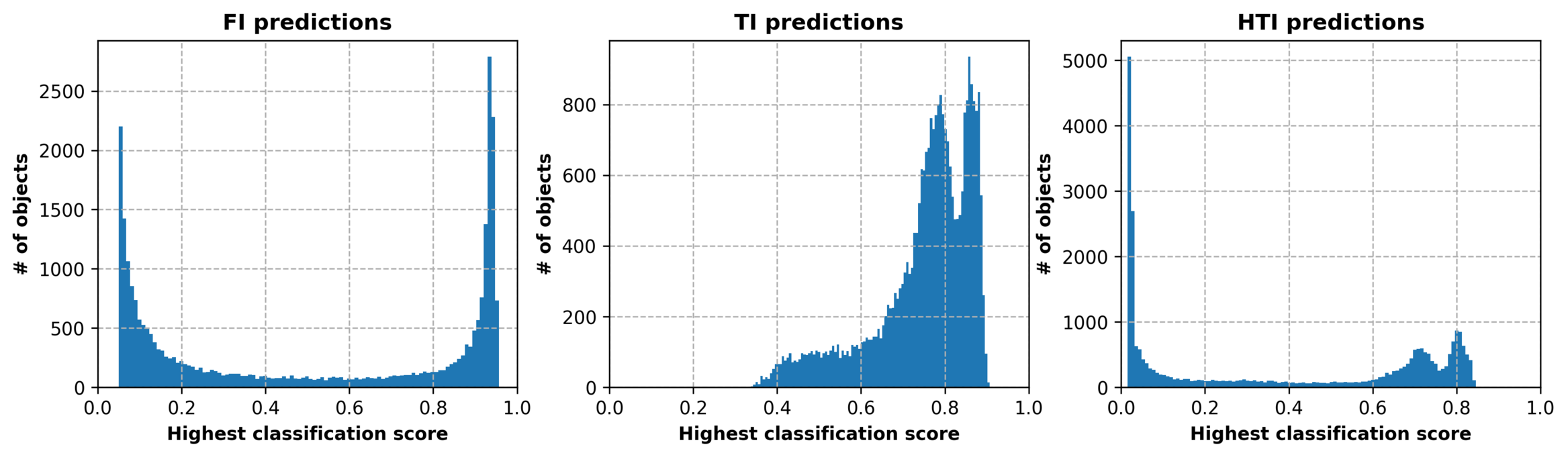}
    \caption[Transductive cores histograms comparison]{Comparisons of the
    scores histograms of fine-tuned model predictions (FI), transductive
    inference (TI), and hybrid inference (HTI).}
    \label{fig:diff_transductive_score_hist}
\end{figure}

In addition to HTI, we also tried to filter the objects with low FI scores (with
a threshold at 0.05), this greatly helps for the transductive inference although
it reduces slightly the performance of FI (some objects are correctly detected
but have a low score but are filtered anyway). With filtering and HTI, we
almost reach the same performance as the FI which is encouraging. However, the
goal is to benefit from the transductive inference and this is not achieved yet.
More analysis needs to be conducted to better understand the reason behind the
poor classification score of TI. In addition, the transductive inference should
be extended to account for the detection challenges: a great
foreground/background imbalance and an increased intra-class diversity. To this
end, we have a few ideas that we did not have time to explore yet: 

\begin{itemize}[nolistsep]
    \item[-] \textbf{Geometric priors}: leverage the geometrical features of the
    objects (\eg size, aspect ratio, etc) to find outliers and exclude them
    without using a hard score thresholding. This could significantly reduce the
    number of objects and help to filter ill-formed boxes.
    \item[-] \textbf{Multiple prototypes per class}: instead of aggregating all support
    examples of one class as one prototype, use multiple prototypes per class
    and extend LSM as a mixture model.
    \item[-] \textbf{Background class}: Introduce a background class within the LSM module to prevent the
    poor foreground/background distinction. 
\end{itemize}

These tracks will be explored during the last months of this PhD project and
will hopefully improve further the detection capabilities of FSDiffusionDet. In
addition, we plan to apply the transductive inference in Cross-Domain scenarios,
as it could help against the performance drop in these settings. 

\begin{table}[]
    \centering
    \resizebox{0.6\textwidth}{!}{%
    \begin{tabular}{@{\hspace{2mm}}ccccc@{\hspace{2mm}}}
    \toprule[1pt]
    \multicolumn{1}{c}{\textbf{Training Strategy}}            & \textbf{FI} & \textbf{TI} & \textbf{HTI} & \textbf{OC} \\ \midrule
    Base training only              & 0.04        & \textbf{3.79 }       & 2.56          & 25.42       \\
    Base training + FT              & \textbf{58.98}       & 33.61       & 53.90         & 67.35       \\ 
    Base training + FT + Filtering  & \textbf{57.95}       & 43.63       & 57.00         & 67.35       \\\bottomrule[1pt]
    \end{tabular}%
    } \caption[Transductive inference against fine-tuned model]{Naive comparison
    between a model only trained on base classes against a model that has been
    fine-tuned as well. All 4 inference setups are compared as well. Bold values
    represent the best-performing method between FI, TI and HTI, the Oracle
    (OC) is not included. Performance is reported as the mAP$_{0.5}$}
    \label{tab:diff_transductive_naive}
    \end{table}

\textbf{Support Attention.}\\
Finally, we also try to extend FSDiffusionDet with an attention mechanism (\eg
XQSA). The main motivation is to be able to compare the influence of the
detection framework on attention-based FSOD methods (in \cref{chap:aaf}, we
studied the influence of the attention mechanism with a fixed detection
framework). Given the impressive results of FSDiffusionDet with a simple
fine-tuning strategy, this is promising. 

Thus, we extend the detection head with a query-support block which is meant to
incorporate the support features within the detection head. The head is then
split and boxes are produced for each class independently (following the
attention-based FSOD principle, see \cref{fig:aaf_attention_principle}).
Unfortunately, this does not yield satisfactory results and slows down the
training a lot. Considering the time already spent on attention mechanisms since
the beginning of the project and the very good performance of FSDiffusionDet
with the fine-tuning, we decided not to explore this direction further.
Nonetheless, this remains an interesting direction for future work.

\subsection{Comparison with existing FSOD Methods}
In the previous section, we explored several design choices for FSDiffusionDet
and analyzed how they influence the detection performance on novel classes. We
compare here the best settings for FSDiffusionDet according to our experiments
conducted in the previous section. These experiments are averaged over 5
distinct seeds to get more reliable results. This contrasts with the above
experiments which are mostly done with one seed only. However, the limited
variance observed over the multiple runs confirms that previous results are
reliable as well. FSDiffusionDet is compared with PFRCNN and XQSA that we
proposed in \cref{chap:prcnn,chap:aaf} and some relevant works from the
literature. This comparison can be found in \cref{tab:diff_main_comparison}.
This table also includes the Small, Medium, and Large size distinctions from the
previous chapters. FSDiffusionDet largely outperforms other methods disregarding
the object sizes. For small objects, FSDiffusionDet achieves impressive
performance on aerial images but lags slightly behind XQSA on natural images. It
is particularly noteworthy as it was not designed specifically for small object
detection. Another surprising result can be observed on DOTA where medium size
objects are better detected than large ones, which is not the case for other
datasets. This is unusual compared to all our previous experiments, including
attention-based methods. It might result from having too few proposals boxes
($N_p=300$ in \cref{tab:diff_main_comparison} for DOTA), then the model can only
focus on small and medium objects as they are more numerous than larger ones.
Given the size distribution in DOTA, this is a better compromise as it yields
higher overall mAP. From \cref{tab:diff_main_comparison}, it seems that
FSDiffusionDet performs slightly worse than XQSA, DANA and FRW on MS COCO. This
is not true as FSDiffusionDet tackles MS COCO as a 20-ways detection problem
whereas other approaches only consider an easier 5-ways problem. It is not
possible to perform 5-ways episodic evaluation with FSDiffusionDet as all
classes must be included during fine-tuning. However, it could be interesting to
observe how well attention-based methods perform in the 20-ways settings (it is
often challenging to do so with attention-based methods due to memory
constraints and long inference time).

\begin{table}[]
    \centering
    \resizebox{\textwidth}{!}{%
    \begin{tabular}{@{\hspace{1mm}}lcccccccccccccccc@{\hspace{1mm}}}
    \toprule[1pt]
                   & \multicolumn{4}{c}{\textbf{DOTA}}                                 & \multicolumn{4}{c}{\textbf{DIOR}}                                 & \multicolumn{4}{c}{\textbf{Pascal VOC}}                               & \multicolumn{4}{c}{\textbf{MS COCO}}                                 \\
                   & All            & S              & M              & L              & All            & S              & M              & L              & All            & S              & M              & L              & All            & S              & M              & L              \\ \midrule
    FRW            & 35.29          & 13.99          & 34.11          & 59.31          & 37.29          & 2.48           & 33.74          & 59.38          & 48.72          & 16.44          & 26.71          & 68.27          & 24.09          & 11.53          & 22.45          & \textbf{38.69} \\
    DANA           & 36.50          & 14.32          & 40.28          & \textbf{64.65} & 38.18          & 3.21           & 34.91          & 60.99          & 52.26          & 10.05          & 24.67          & 67.23          & 24.75          & 12.01          & 29.40          & 37.95          \\
    WSAAN          & 35.12          & -              & -              & -              & 32.38          & -              & -              & -              & 51.70          & -              & -              & -              & 21.42          & -              & -              & -              \\ \midrule
    PFRCNN         & 11.55          & -              & -              & -              & 9.16           & -              & -              & -              & -              & -              & -              & -              & -              & --             & -              & -              \\
    XQSA           & 41.00          & 17.84          & 44.57          & 54.46          & 41.51          & 4.12           & 40.69          & 58.21          & 53.94          & \textbf{19.46} & \textbf{34.86} & 66.14          & \textbf{25.03} & \textbf{12.57} & \textbf{26.05} & 38.55          \\
    FSDiffusionDet & \textbf{57.93} & \textbf{45.99} & \textbf{61.33} & 53.25          & \textbf{55.80} & \textbf{14.66} & \textbf{54.14} & \textbf{72.82} & \textbf{55.80} & 15.05          & 30.20          & \textbf{69.64} & 24.03          & 5.17           & 19.23          & 38.62          \\ \bottomrule[1pt]
    \end{tabular}%
    } \caption[Detection results of FSDiffusionDet on DOTA, DIOR, Pascal VOC and
    MS COCO datasets]{Detection results of FSDiffusionDet on DOTA, DIOR, Pascal
    VOC and MS COCO datasets. The models employed to produce this figure have
    been finetuned with $K=10$ shots and following the best fine-tuning strategy
    found in \cref{sec:diff_experiments} for each dataset. The mAP$_{0.5}$ is reported
    as an average of over 5 distinct runs. The top rows include methods from the
    literature while the bottom rows designate our proposed methods.}
    \label{tab:diff_main_comparison}
    \end{table}

\begin{figure}
    \centering
    \includegraphics[width=\textwidth]{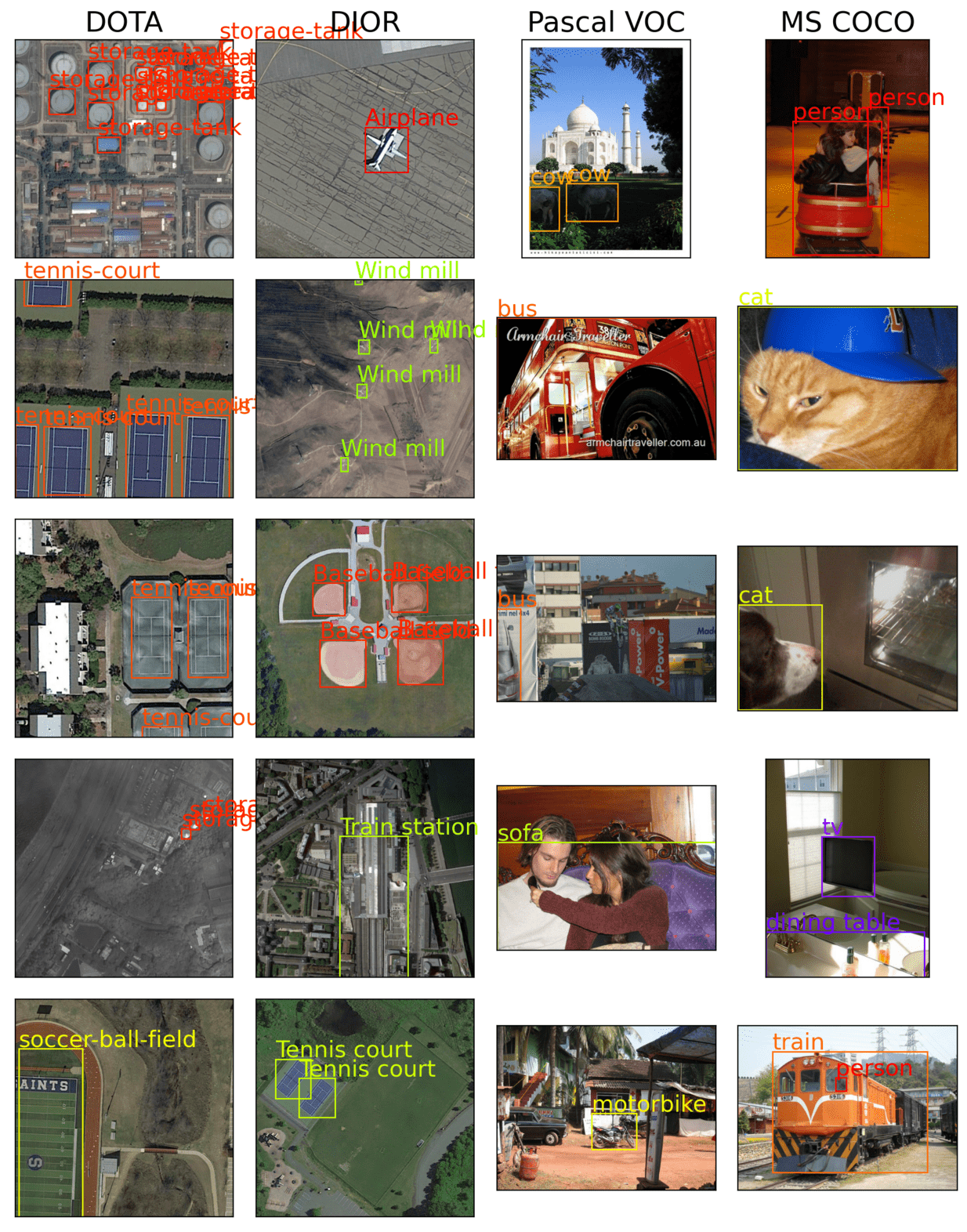}
    \caption[FSDiffusionDet qualitative detection results]{Qualitative detection results of FSDiffusionDet on DOTA,
    DIOR, Pascal VOC and MS COCO datasets. The models employed to produce this
    figure have been fine-tuned with $K=10$ shots and following the best
    fine-tuning strategy found in \cref{sec:diff_experiments} for each dataset.}
    \label{fig:diff_quantitative_res}
\end{figure}

Finally, we also provide a qualitative assessment of the performance of
FSDiffusionDet on DOTA, DIOR, Pascal VOC and MS COCO in
\cref{fig:diff_quantitative_res}. This figure presents novel class detection
results on 5 images from each dataset. It is certainly stronger than all
previously studied methods (see \cref{fig:xqsa_novel_detection}). However, it is
not perfect. Some objects are misclassified (see the third row in MS COCO
column), some are not detected (DOTA, first row), and there are still false
detections (Pascal VOC, 4th row). Nonetheless, these qualitative results are
much better than our other approaches and can be considered for actual
industrial applications. It strengthens the need for more elaborate fine-tuning
strategies and insight into how to design them without too much trial and error.
FSDiffusionDet achieves impressive FSOD results, especially on DOTA and DIOR,
but this performance was achieved through expensive exploration. It would be of
great help to know in advance how a strategy will perform on a given dataset. We
started some investigation in this direction with the design of a compatibility
score between base and novel classes, taking into account both the overall shift
and discrepancies in the class structures. Lastly, FSDiffusionDet's results
are strong enough to tackle more complex scenarios such as Few-Shot Cross-Domain
Adaptation. This will be explored in \cref{sec:cda_analysis} and should be
continued in future work as well.

\section{Application to Cross-Domain FSOD}
\label{sec:cda_analysis}
\vspace{-1em}
Given the impressive performance of FSDiffusionDet on DOTA and DIOR, it seems
tempting to try more difficult setups. Up to now, the methods studied in this
project were barely reaching a satisfactory point from an applicative
perspective. With FSDiffusionDet, we are past that, and can now consider the
Cross-Domain setting. Cross-Domain is especially important for COSE, given the
prohibited access to test-time images. The ability to adapt to new domains would
be an extremely valuable property for a surveillance system such as CAMELEON. Of
course, the domain change would be limited in COSE's applications as the only
change between two missions would be the general aspect of the image (\ie weather,
GSD, luminosity, etc.). However, the images will always be aerial taken pointing
nadir. 

In this section, we tackle the challenging Cross-Domain Few-Shot Object
Detection (CD-FSOD) task which is barely untouched in the literature. To this
end, we focus on two distinct scenarios, one introduced by
\cite{lee2022rethinking} with a first training on MS COCO and one specifically
designed for COSE's applications where both the source and target domains are
aerial datasets. For both scenarios, we first present the dataset used as source
and target domains and the experimental setup. Then we provide some experimental
results with the FSDiffusionDet baseline. These results are preliminary and
promising, further experimentation in this direction is required to better
understand this task and further improve FSDiffusionDet in this context.
Therefore, we end this section with a summary of the future work that is
planned.

\subsection{MS COCO $\to$ Anything}

First, we study a general Cross-Domain (CD) setting introduced in the literature
by \cite{lee2022rethinking}. It consists of training first on MS COCO and then
fine-tuning on another dataset with a restricted number of shots. Unlike in the
FSOD setting, there is no separation between base and novel classes in CD, all
classes of the target domain are considered novel. The benchmark introduced by
\cite{lee2022rethinking} contains a list of 10 datasets (VisDrone2019
\cite{pengfei2021visdrone}, DeepFruits, iWildCam \cite{beery2021iwildcam},
SIXray \cite{miao2019sixray}, Fashionpedia \cite{jia2020fashionpedia},
Oktoberfest \cite{tum2019oktoberfest}, LogoDet-3K \cite{wang2022logodet},
CrowdHuman \cite{shao2018crowdhuman}, ClipArt \cite{inoue2018cross}, KITTI
\cite{Geiger2012CVPR}).

\subsubsection{Cross-Domain Scenarios}
The pool of datasets proposed by \cite{lee2022rethinking} has a large variety of
images, therefore it constitutes a relevant benchmark for CD-FSOD. However, it
does not contain any aerial dataset. VisDrone is an aerial image dataset but
differs greatly from DOTA or DIOR as its images are taken from a much lower
altitude and contain perspective. In addition, 10 datasets make the experiments
expensive to run. Thus, we propose a lighter benchmark using VisDrone2019,
DeepFruits, SixRay, DOTA, DIOR and ClipArt. We emphasize that dealing with that
many datasets is quite challenging as almost every dataset has its own
annotation format and data structure. When \cite{lee2022rethinking} proposed
this benchmark, the authors only provided the list of datasets, without any
information about their preparation and split, which makes their experiments
very hard to reproduce. On the contrary, we propose for the convenience of
future research on CD-FSOD a prepared version of this
"meta-dataset”\footnote{\href{https://gitlab.sorbonne-paris-nord.fr/labcom-iriser/public/super_pycocotools}{Link
to the Meta-Dataset and Python API package}} under the same format (the MS
COCO format, which is relatively common in the OD community). In addition to the
prepared meta-dataset, we also
extend the popular Python package \textit{pycocotools} to help
load and explore the datasets. 

As mentioned above, we study here 6 cross-domain scenarios with a common base
training on MS-COCO. Every scenario has a different target domain represented
with one of the following datasets. \cref{fig:cd_fsod_coco_scenarios} provides
some image examples (without annotations) for VisDrone2019, DeepFruits, SixRay,
and ClipArt. We refer the reader to \cref{fig:od_dataset_aerial} for a
presentation of DOTA and DIOR. For convenience, we denote these 6 scenarios as
COCO $\to$ X scenarios. 

\begin{figure}
    \centering
    \includegraphics[width=\textwidth]{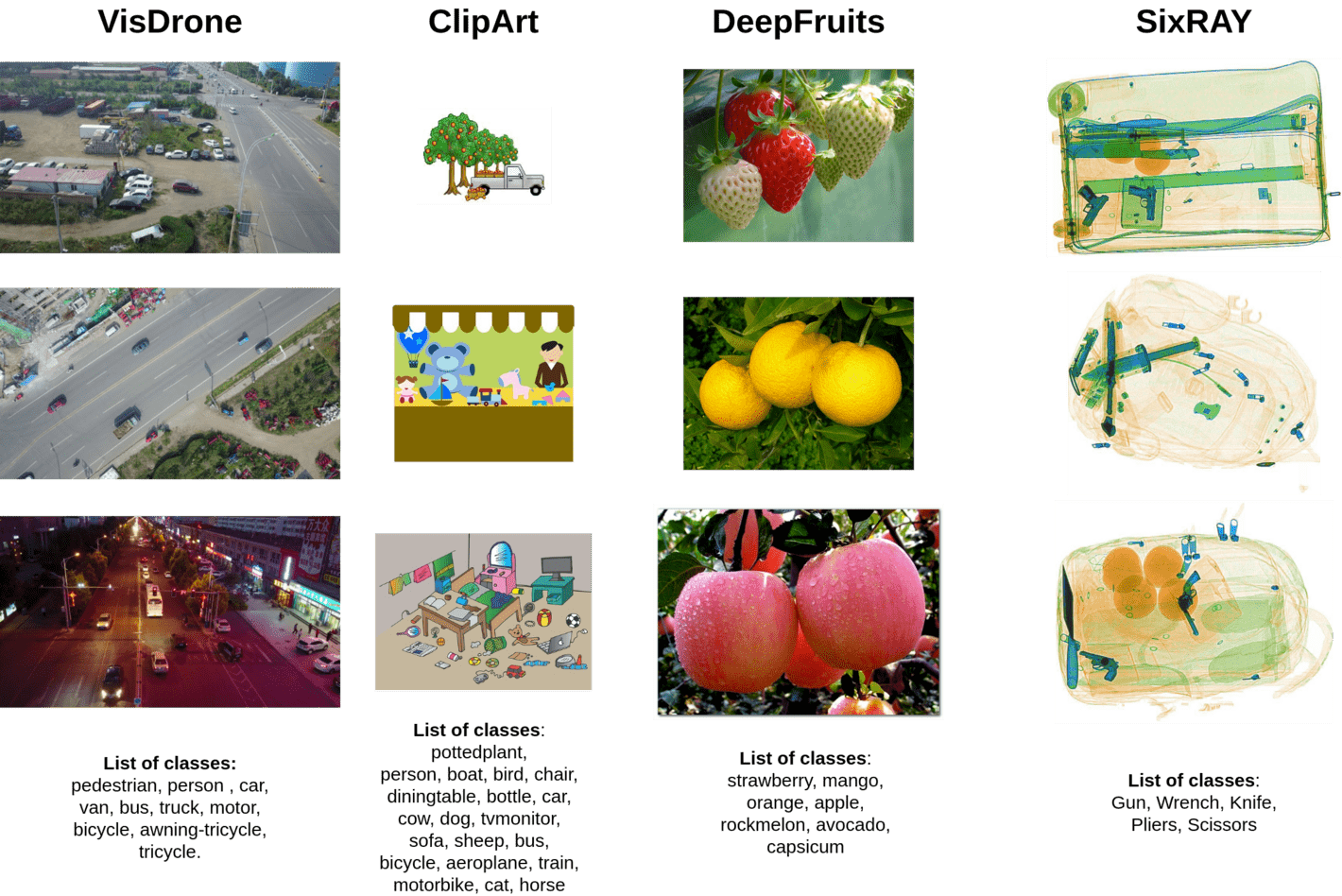}
    \caption[Presentation of Cross-Domain datasets]{Presentation of the datasets
     used in our Cross-Domain benchmark: VisDrone2019, DeepFruits, SixRay,
     and ClipArt.}
    \label{fig:cd_fsod_coco_scenarios}
\end{figure}

\subsubsection{Experimental Results}
We only experimented with the CD scenario with the baseline of FSDiffusionDet.
Specifically, only the detection head is fine-tuned, with 500 proposals. The
weights of the model are initialized using the pre-trained weights available on
the DiffusionDet repository, except for the last layer of the classification
branch which is initialized randomly with the right number of outputs. While
this is certainly not optimal for every scenario, this gives a strong baseline
to compare with in future experiments. 

The results of our experimentation with the cross-domain scenarios can be found
in \cref{tab:cd_fsod_results} and \cref{fig:cd_coco_performance}. As for the
regular FS experiments, a clear pattern is visible as the number of shots
increases. Obviously, the more shots, the better the detection. However, this
pattern differs from one scenario to another. For instance, COCO $\to$ ClipArt
scenario sees very little improvements as the number of shots increases unlike
COCO $\to$ DOTA and COCO $\to$ SIXRay. It is also noteworthy to observe the
different behaviors between DOTA and DIOR. Even if DIOR is an "easier" dataset
than DOTA (in the sense that higher performance is achieved on DIOR in a regular
detection setting), there is a larger difference in the COCO $\to$ X
cross-domain scenario. Relatively low performance is observed for ClipArt and
VisDrone, this is probably due to differences in data preparation compared with
\cite{lee2022rethinking}. As the author did not provide any information about all
the datasets' splits and preparation, we can only guess what they did. For ClipArt,
it is slightly different as they leveraged a GAN-augmented version of the
dataset which might sensibly boost the detection performance.  
Finally, for each scenario 5 distinct training were done with varying seeds to
check the consistency of our results. \cref{tab:cd_fsod_results} gives the
average over the 5 runs and a 95\% confidence interval. A limited variance
between different runs is observed, this means that FSDiffusionDet is not very
sensitive to the examples chosen in the support set. This is a crucial property
as some few-shot methods depend a lot on the choice of the support set. Of
course, most of our experiments should be repeated the same way to strengthen
the results, but this quickly becomes expensive in terms of computing resources.  

These preliminary results are promising, FSDiffusionDet achieves satisfactory
performance with only a few-annotated examples on various datasets. These
datasets are constituted of various kinds of images, therefore it demonstrates
well the adaptation capabilities of FSDiffusionDet. Now for COSE's application,
these results on aerial datasets are particularly encouraging. FSDiffusionDet
achieves impressive performance with only a base training on MS COCO and few
examples of either DOTA or DIOR. Thus, this model could be rapidly fine-tuned
for a specific mission by the forces without declassifying any image. Of course,
this is only a baseline and FSDiffusionDet can surely be improved further. In
addition, this scenario starts with a base training on natural images, which is
probably not optimal, instead, we could leverage an aerial dataset as a source
model as well. Incidentally, this will be the subject of the next section. 

\begin{table}[]
    \centering
    \resizebox{0.9\textwidth}{!}{%
    \begin{tabular}{@{\hspace{2mm}}ccccccc@{\hspace{2mm}}}
    \toprule[1pt]
    \multicolumn{1}{l}{\textbf{$K$ Shots}} & \textbf{DIOR} & \textbf{DOTA} & \textbf{DeepFruits} & \textbf{SIXRay} & \textbf{ClipArt} & \textbf{VisDrone}\\ \midrule
    \textbf{1}                           & 11.10 $\pm$ 0.32      & 4.03  $\pm$ 0.26        & 38.47 $\pm$ 1.42         & 4.80  $\pm$ 0.87      &  2.09$\pm$  0.19 & 2.83$\pm$ 0.17       \\
    \textbf{5}                           & 30.42 $\pm$ 0.69      & 14.45 $\pm$ 0.43        & 55.58 $\pm$ 1.36         & 13.25  $\pm$1.14       & 5.26 $\pm$ 0.15 & 5.74$\pm$ 0.22       \\
    \textbf{10}                          & 38.73 $\pm$ 0.65      & 25.02 $\pm$ 0.65        & 68.37 $\pm$ 2.01         & 21.26  $\pm$1.33       & 5.69 $\pm$ 0.10 & 7.50$\pm$ 0.10       \\
    \textbf{20}                          & 48.23 $\pm$ 0.33      & 33.31 $\pm$ 0.46        & 73.95 $\pm$ 0.53         & 30.06  $\pm$1.09       & 6.10 $\pm$ 0.22 & 9.14$\pm$ 0.35       \\
    \textbf{50}                          & 56.97 $\pm$ 0.60      & 43.23 $\pm$ 0.68        & 76.65 $\pm$ 0.78         & 41.93  $\pm$1.02       & 6.44 $\pm$ 0.16 & 11.47$\pm$ 0.27       \\ \bottomrule[1pt]
    \end{tabular}%
    } \caption[COCO $\to$ X cross-domain FSDiffusionDet performance]{Cross-domain performance
    results on 6 scenarios COCO $\to$ DIOR / DOTA / DeepFruits / SIXRay / ClipArt / VisDrone.
    Results are given for different numbers of shots. Experiments are repeated 5
    times for each scenario and shot setting. The average mAP$_{0.5}$ is reported with a 95\% confidence interval.}
    \label{tab:cd_fsod_results}
    \end{table}

\begin{figure}
    \centering
    \includegraphics[width=0.7\textwidth]{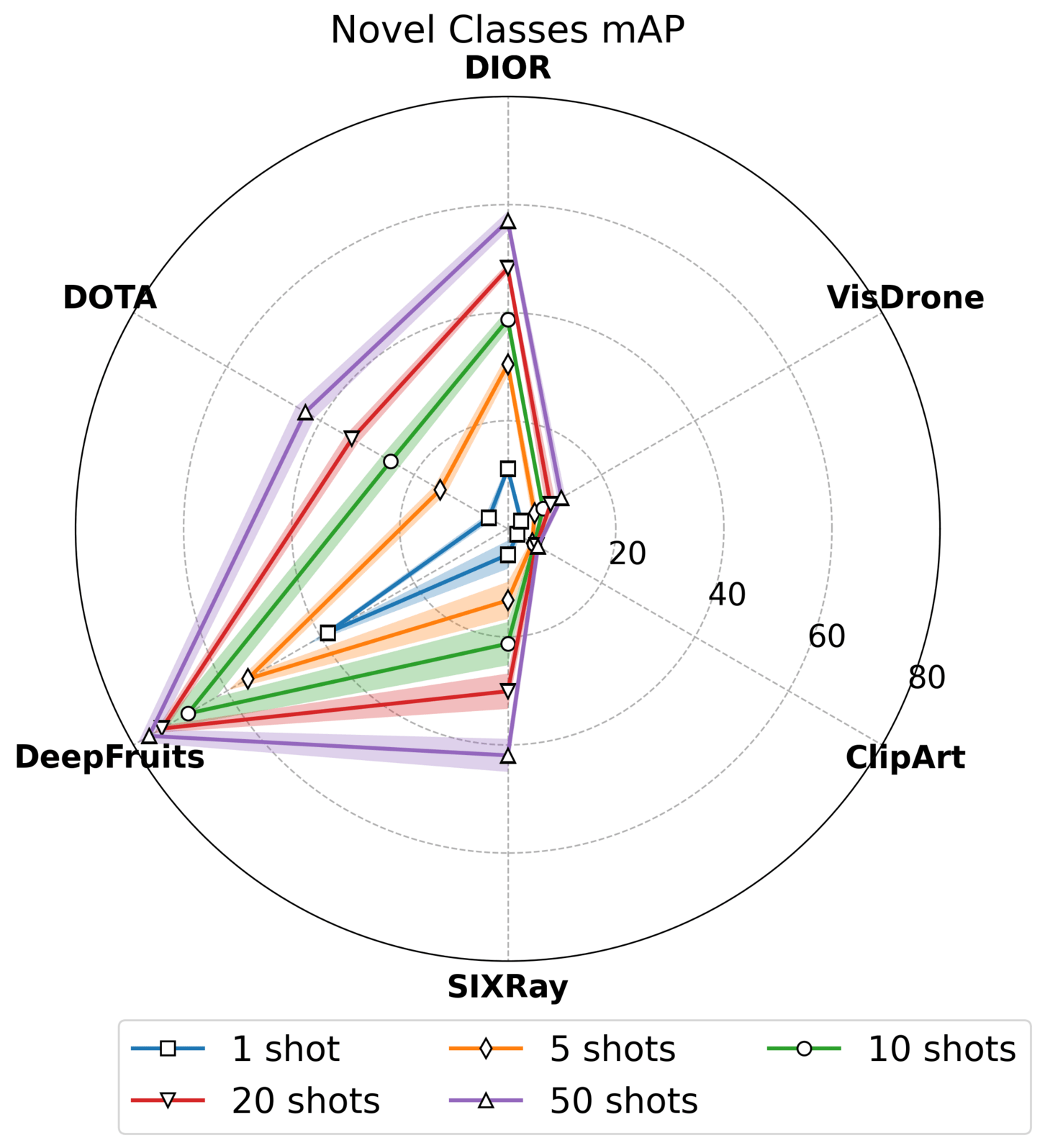}
    \caption[COCO $\to$ X cross-domain performance spider plot]{Cross-domain
    performance of FSDiffusionDet on multiple scenarios with MS COCO as the
    source domain. Light areas denote the 95\% confidence interval. Concentric
    circles indicate mAP$_{0.5}$ levels.}
    \label{fig:cd_coco_performance}
\end{figure}

\subsection{Aerial Cross-Domain}

Besides experimenting with COCO $\to$ X scenario, we propose another setting
specifically designed for aerial images and COSE's application. The idea is to
leverage two distinct aerial datasets as source and target domains. In
particular, we use DOTA and DIOR to get two scenarios: DOTA $\to$ DIOR and DIOR
$\to$ DOTA. Of course, it would be interesting to leverage other kinds of
datasets as well (\eg xView, VisDrone, etc.), especially as DOTA and DIOR are
very similar (mostly overhead urban images and shared classes). Yet, this gives
insights into how FSDiffusionDet behaves in fairly simple cross-domain
scenarios. We report experiments with these two scenarios in
\cref{tab:cd_fsod_dior2dota,tab:cd_fsod_dota2dior}. The results are given for
multiple numbers of shots ranging from 1 to 50. In addition, we studied two
freezing strategies by fine-tuning only the detection head (\ie frozen backbone)
or the whole model (fully fine-tuned).

\begin{table}[]
    \centering
    \resizebox{0.75\textwidth}{!}{%
    \begin{tabular}{@{\hspace{2mm}}cccccccccc@{\hspace{2mm}}}
    \multicolumn{1}{l}{} & \multicolumn{8}{c}{{\ul \textbf{DIOR $\to$ DOTA}}}                                                           \\
    \multicolumn{1}{l}{} & \multicolumn{4}{c}{\textbf{Backbone frozen}}              & \multicolumn{4}{c}{\textbf{Fully fine-tuned}}                \\ \midrule[1pt]
    \textbf{$K$ shots}     & \textbf{All} & \textbf{S} & \textbf{M} & \textbf{L}& & \textbf{All} & \textbf{S} & \textbf{M} & \textbf{L} \\ \midrule
    \textbf{1}           & \textbf{5.41 }        & 2.72       & 6.28       & 4.51      & & 5.09         & 3.08       & 6.72       & 4.07       \\
    \textbf{5}           & \textbf{25.88}        & 16.99      & 31.47      & 22.50     & & 24.90        & 15.85      & 29.67      & 22.27      \\
    \textbf{10}          & 31.99                 & 17.64      & 36.90      & 31.23     & & \textbf{33.30}        & 15.97      & 37.13      & 32.45      \\
    \textbf{20}          & 38.77                 & 21.68      & 46.49      & 34.79     & & \textbf{41.30}        & 21.97      & 45.90      & 41.08      \\
    \textbf{50}          & 44.07                 & 29.22      & 52.66      & 41.00     & & \textbf{49.22}        & 29.41      & 55.94      & 52.82      \\ \bottomrule[1pt]
    \end{tabular}%
    } \caption[DIOR to DOTA Cross-Domain results]{FSDiffusionDet Cross-domain
    results on the scenario DIOR $\to$ DOTA. Two settings are compared: with the
    backbone frozen (left) and the backbone fully fine-tuned (right). Bold
    values denote the best setting for overall performance on novel classes.
    Performance is reported with mAP$_{0.5}$ values.}
    \label{tab:cd_fsod_dior2dota}
    \end{table}

The key takeaway from this experiment is that higher performance is achieved in
the aerial cross-domain scenarios than with the COCO $\to$ X scenario. It seems
more profitable to perform base training on a source domain that is similar to
the target domain. Of course, compared to the FS performance on DOTA and DIOR,
lower quality is achieved in cross-domain scenarios. This is explained first
because the images from the two datasets differ, but also because the task is
now slightly more complex as all classes of the target dataset are novel. The
detection task becomes a 16-way $K$-shots problem in DIOR $\to$ DOTA scenario
for instance. In the regular FS setting studied throughout this project, only
three classes were selected as novel classes for DOTA, making the classification
much easier. Then, from \cref{tab:cd_fsod_dior2dota,tab:cd_fsod_dota2dior}, a
contradiction arises, in DIOR $\to$ DOTA scenario, the fully fine-tuned model
outperforms the model with the backbone frozen, which agrees with the
experiments from \cref{sec:diff_experiments}. However, in the DOTA $\to$ DIOR
scenario, the inverse is observed. This clearly shows that the freezing sweet
spot depends on the source and target domains and cannot be set once and for
all. It works in the case of regular few-shot when the source and target domains
are identical. So, one could expect the same behavior in cross-domain scenarios
with similar source and target domains as it is with DOTA and DIOR. However, in
the regular few-shot setting, the number of classes in the source domain (\ie
the base classes) is always larger than the number of classes in the target
domain (\ie the novel classes). Here, DIOR has more classes than DOTA and this
difference may explain the opposite results between the two CD scenarios.
Specifically, fine-tuning the model entirely may be beneficial only when the
target domain contains fewer classes than the source domain. It could also be
caused by different class separations between the datasets. If classes are
easily differentiable in DIOR but not in DOTA, it might be difficult to transfer
from DIOR to DOTA. These are only conjectures, and they should be taken carefully
especially as complex interactions between source and target classes may also
cause such behavior. More experiments would be required to analyze and
understand this surprising result.

\begin{table}[]
    \centering
    \resizebox{0.75\textwidth}{!}{%
    \begin{tabular}{@{\hspace{2mm}}cccccccccc@{\hspace{2mm}}}
    \multicolumn{1}{l}{} & \multicolumn{8}{c}{{\ul \textbf{DOTA $\to$ DIOR}}}                                                           \\
    \multicolumn{1}{l}{} & \multicolumn{4}{c}{\textbf{Backbone frozen}}              & \multicolumn{4}{c}{\textbf{Fully fine-tuned}}                \\ \midrule[1pt]
    \textbf{$K$ shots}     & \textbf{All} & \textbf{S} & \textbf{M} & \textbf{L}& & \textbf{All} & \textbf{S} & \textbf{M} & \textbf{L} \\ \midrule
    \textbf{1}           & \textbf{20.18}        & 5.53       & 16.96      & 23.43    &  & 9.40         & 3.86       & 9.15       & 8.95       \\
    \textbf{5}           & \textbf{34.43}        & 9.99       & 31.12      & 47.03    &  & 29.57        & 8.70       & 25.80      & 35.76      \\
    \textbf{10}          & \textbf{41.48}        & 12.85      & 36.62      & 53.85    &  & 38.44        & 10.50      & 32.58      & 47.27      \\
    \textbf{20}          & \textbf{49.00}        & 16.39      & 40.23      & 62.79    &  & 45.36        & 15.29      & 36.51      & 55.05      \\
    \textbf{50}          & \textbf{54.07}        & 18.70      & 43.83      & 67.58    &  & 53.51        & 19.49      & 41.27      & 63.04      \\ \bottomrule[1pt]
    \end{tabular}%
    } \caption[DOTA to DIOR Cross-Domain results]{FSDiffusionDet Cross-domain
    results on the scenario DOTA $\to$ DIOR. Two settings are compared: with the
    backbone frozen (left) and the backbone fully fine-tuned (right). Bold
    values denote the best setting for overall performance on novel classes.
    Performance is reported with mAP$_{0.5}$ values.}
    \label{tab:cd_fsod_dota2dior}
    \end{table}

\subsection{Cross-Domain Perspectives}
The previous sections have been devoted to cross-domain experiments. These are
preliminary but interesting results. They give insight into how difficult this
setup is and how fine-tuning strategies can perform. However, plenty of
experiments are still necessary. We detail here some of the most relevant
perspectives for future CD-FSOD research that we briefly hinted in the previous sections:

\begin{enumerate}[nolistsep]
    \item \textbf{Comparison with other FSOD methods}: it would be interesting
    to compare with other existing FSOD methods, in particular, with
    attention-based techniques that we studied in depth in \cref{chap:aaf}. In
    addition, studying other fine-tuning approaches is required to validate the
    results found in our experiments.
    \item \textbf{Transductive inference}: Even though our naive transductive
    detection did not outperform the fine-tuning strategy in FSOD, it could help
    in cross-domain scenarios. Indeed, leveraging query images during inference
    can reduce the discrepancies between source and target domains and improve
    performance. This has been empirically shown for the classification task,
    but it remains to be adapted for detection. 
    \item \textbf{Source-target domain compatibility score}:
    \cite{lee2022rethinking} proposes to choose the fine-tuning sweet spot
    according to the distance between the source and target domains.
    Specifically, more plasticity is required when domains are farther apart.
    They compute such distance as the recall of a pre-trained detection model on
    MS COCO, applied to the target dataset in a class-agnostic manner. This
    could be generalized to any source and target domains with a detection model
    trained on the source domain. However, we would like to emphasize that this
    should not be called a distance measure between domains as it does not
    satisfy the symmetry property. Instead, it is a \textit{compatibility
    measure} as it evaluates how beneficial the source domain is for the
    adaptation to the target domain.   
    Our cross-domain scenarios on aerial images clearly demonstrate this, as we
    obtain contradictory conclusions for DOTA $\to$ DIOR and DIOR $\to$ DOTA
    scenarios. More plasticity is required for DIOR $\to$ DOTA than for DOTA
    $\to$ DIOR, hence, the \textit{compatibility measure} cannot be the same for
    these two scenarios. In addition, this distance is highly influenced by the
    detector chosen in the first place, in particular, some models are known to
    output a lot of duplicate boxes which often boost the recall significantly
    (see such an analysis in \cite{jena2023beyond}). It would be helpful to come
    up with a properly defined compatibility measure for a given scenario that
    does not rely on a detection model and gives coherent hints to obtain an
    optimal fine-tuning strategy. This measure should also be able to assess the
    compatibility of the base and novel class set in the regular FSOD setting,
    as a special case of the cross-domain scenario (source and target domains
    are identical but the classes change). We are currently working on such a
    compatibility score based on an overall discrepancy measure between source
    and target domains and a source-target classes compatibility score.   
\end{enumerate}

\section{Conclusion}
\vspace{-1em}
In this chapter, we have presented thoroughly the basic principle of diffusion
models and how they can be leveraged for detection. Then, we proposed a
simple fine-tuning strategy to apply DiffusionDet in the few-shot setting.
FSDiffusionDet achieves sensibly higher performance than all previous methods
studied in this PhD on aerial images. To understand why, we conducted extensive
experimental studies on crucial design choices of our strategy. It highlighted a
strong but complex connection between the plasticity of the model and the detection
performance. Finally, we applied FSDiffusionDet in several cross-domain
scenarios and observed promising results. Again, the plasticity has a great
influence on the performance and more experiments must be conducted to
understand this relation completely. A possible direction would be to design a
compatibility measure between domains and between sets of classes to determine
the optimal amount of plasticity required for a given scenario.

\addtocontents{toc}{\protect\pagebreak}
\part{Rethinking Intersection Over Union}%
\label{part:iou}

\makeatletter
\let\savedchap\@makechapterhead
\def\@makechapterhead{\vspace*{-2.8cm}\savedchap}
\chapter[Scale-Adaptative Intersection Over Union]{\resizebox{\linewidth}{!}{Scale-Adaptative Intersection Over Union}}
\let\@makechapterhead\savedchap
\makeatletter
\label{chap:siou_metric}

\chapabstract{Intersection over Union (IoU) is not an optimal box similarity measure
for evaluating and training object detectors. For evaluation, it is too strict
with small objects and does not align well with human perception. For training,
it provides a poor balance between small and large objects to the detriment of
small ones. We propose Scale-adaptative Intersection over Union (SIoU), a
parametric alternative that solves the shortcomings of IoU. We provide
empirical and theoretical arguments for the superiority of SIoU through in-depth
analysis of various criteria.} 
\textit{
\begin{itemize}[noitemsep]
    \item[\faPaperPlaneO] P. Le Jeune and A. Mokraoui, "Rethinking Intersection Over Union for Small Object Detection in Few-Shot Regime", Submitted at the International Conference on Computer Vision 2023 (ICCV).
    \item[\faFileTextO] P. Le Jeune and A. Mokraoui, "Extension de l'\textit{Intersection over Union} pour améliorer la détection d'objets de petite taille en régime d'apprentissage few-shot", GRETSI 2023, XXIXème Colloque Francophone de Traitement du Signal et des Images, Grenoble, France.
\end{itemize}
}
\vspace{0.5em}
\PartialToCCh

Intersection over Union (IoU) is a box similarity criterion, it measures how
well two bounding boxes overlap each other. We already defined it in
\cref{chap:od}, but in this chapter we explore its properties thoroughly and
highlight some of its flaws when employed as a loss function or as a cornerstone
of the evaluation process of detection models. These flaws mainly concern small
objects for which IoU is too strict. Therefore, it is particularly relevant to
tackle these issues for aerial images and COSE's applications. To address these
weaknesses, we propose Scale-Adaptive Intersection over Union (SIoU), a
parameterizable criterion that can be set to favor small objects as needed. We
start by defining and analyzing the IoU and its variants. Then, we propose our
novel criterion SIoU and its properties. \cref{sec:criterion_analysis} presents
an original empirical and theoretical study of several box similarity criteria
and argues for the superiority of SIoU. Finally, we conduct a user study and
experimental analysis to further consolidate the advantages of SIoU over IoU. 

\section{Analysis of Intersection over Union}
\vspace{-1em}
In this section, we first review the definition of IoU and present some of its
variants that are available in the literature. Then, we analyze why IoU is not
optimal for small objects.

\subsection{Intersection over Union and its Variants}
To begin, let us review the definition of existing criteria for box similarity.
Originally, the IoU is defined as the
intersection area of two sets divided by the area of their union: 
\begin{equation}
    \text{IoU}(A, B) = \frac{|A \cap B|}{|A \cup B|},
\end{equation}

\noindent
where $A$ and $B$ are two sets. Even if there are plenty of applications where
IoU is useful (\eg in statistics where IoU is better known as the Jaccard
index), we are mostly interested here in its application in computer vision. In
this case, $A$ and $B$ are sets of pixels, and the IoU measures how close they
are. When $A$ and $B$ are rectangular boxes, IoU can be computed easily with
simple operations on box coordinates (see \cref{eq:iou_bbox}). This explains why
IoU is such a widespread criterion for object detection. It is used as a loss
function ($\mathcal{L}_{\text{reg}} = 1 - \text{IoU}$) by several well
established detection frameworks (\eg \cite{yu2016unitbox, tian2019fcos}). IoU
is also involved in the process of example selection during training of most
detection methods, \ie all the ones inspired either by Faster R-CNN
\cite{ren2015faster} or YOLO \cite{redmon2016you}. In these frameworks,
regression loss is computed from the coordinates of proposed boxes and ground
truth. Not all pairs of proposals and ground truth are kept for the computation.
Only proposals with a sufficient IoU with a ground truth box are selected.
Finally, IoU is also used at the heart of the evaluation process. A proposed box
is considered a positive detection if it meets two conditions: 1) an IoU greater
than a given threshold with a ground truth box, and 2) the same label as this
ground truth (see \cref{sec:od_evaluation}). 

Several attempts were made to improve IoU but existing works mostly focus on the
regression loss part, disregarding the other IoU uses in the detection
task. First, \cite{rezatofighi2019generalized} proposed a generalized version of
IoU which yields negative values when boxes do not overlap:
\begin{equation}
    \text{GIoU}(A,B) = \text{IoU}(A,B) - \frac{|C \backslash (A \cup B)|}{|C|},
\end{equation}

\noindent
where $C$ is the convex hull around $A$ and $B$. This criterion is employed as a
loss function by several detection frameworks \cite{tian2019fcos,
bochkovskiy2020yolov4, zhang2020bridging}. It is sometimes also combined with
other regression loss as in \cite{li2020generalized, carion2020end}, which both
combine it with an L1 regression on box coordinates. Combining IoU loss with
other regression terms was also proposed by \cite{zheng2020distance}. They
introduce two losses Distance-IoU (DIoU) and Complete-IoU which respectively add
an L2 regression term and an aspect ratio penalty to the IoU loss. Recently,
$\alpha$-IoU \cite{he2021alpha} extends DIoU \cite{zheng2020distance} by
proposing a family of losses following the same structure as DIoU with the IoU
term raised to the power $\alpha$:
\begin{equation}
    \alpha\text{-IoU}(A, B) = \text{IoU}(A, B)^{\alpha}.
\end{equation}
Balanced IoU (BIoU) also extends upon DIoU by
measuring shifts between the corners of the boxes instead of their centers.     
Alternatively, Bounded IoU \cite{tychsen2018improving} computes an IoU upper
bound between a proposal and a ground truth. Other approaches, such as Scale
Balanced Loss \cite{sun2020scale}, try to design distance-based loss functions
which share properties with IoU, especially its scale-invariance.   

All these IoU variants are proposed to improve the regression part of the
models. However, IoU is involved in other parts of the framework including
example selection, Non-Maximal Suppression, and evaluation. A recent user
study \cite{strafforello2022humans} indicates that IoU does not completely align
with human perception. Humans have strong positional and size preferences based on
conceptual information contained in the boxes. It suggests that IoU is not
an optimal choice either for example selection or for evaluation as it
will lead to detections that do not satisfy human users.

\subsection{Inadequation of IoU for Small Objects and Few-Shot Regime}
Object detection is a fundamental task in industry and has applications in many
domains such as medical imaging, agriculture, or autonomous driving. However,
it is often impracticable or too expensive to build sufficiently large annotated
datasets to train detection models. It is therefore crucial to improve
data-efficient approaches and particularly Few-Shot Object Detection (FSOD)
methods. However, the limited number of examples provides poor supervision and
prevents the model to learn accurate localization, which is especially
problematic for small objects. This difficulty greatly intensifies in the
few-shot regime as shown by \cref{chap:aerial_diff}. Designing FSOD methods
specifically for the detection of small objects partially solves this issue (see
\cref{sec:xqsa_definition}), but is not enough. One of the reasons for the poor
FSOD performance on small objects is the extensive use of the IoU. Just as for
detection, most FSOD pipelines employ IoU as a regression loss
\cite{yu2016unitbox,tian2019fcos}; for example selection
\cite{ren2015faster,redmon2016you,liu2016ssd}; or as an evaluation criterion,
but IoU is not an optimal choice when dealing with small objects.

\begin{figure}
    \centering
    \begin{subfigure}[t]{0.46\textwidth}
        \vskip 0pt
        \centering
        \includegraphics[trim=0 5 0 5, clip,width=\textwidth]{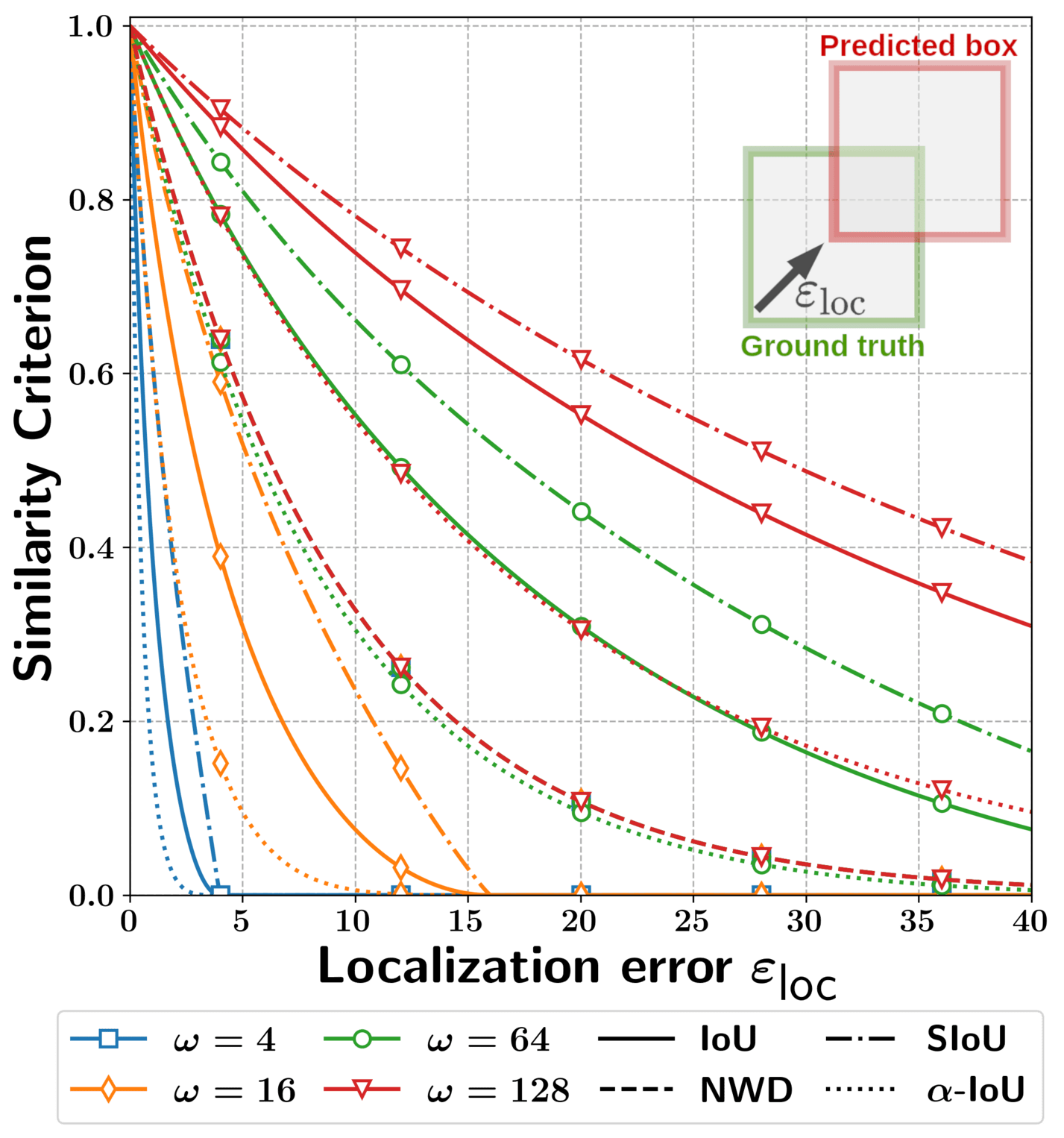}
        \label{fig:pixel_shift}
    \end{subfigure}
    \begin{subfigure}[t]{0.48\textwidth}
        \vskip 0pt
        \centering
        \includegraphics[trim=0 0 0 7, clip,width=0.99\textwidth]{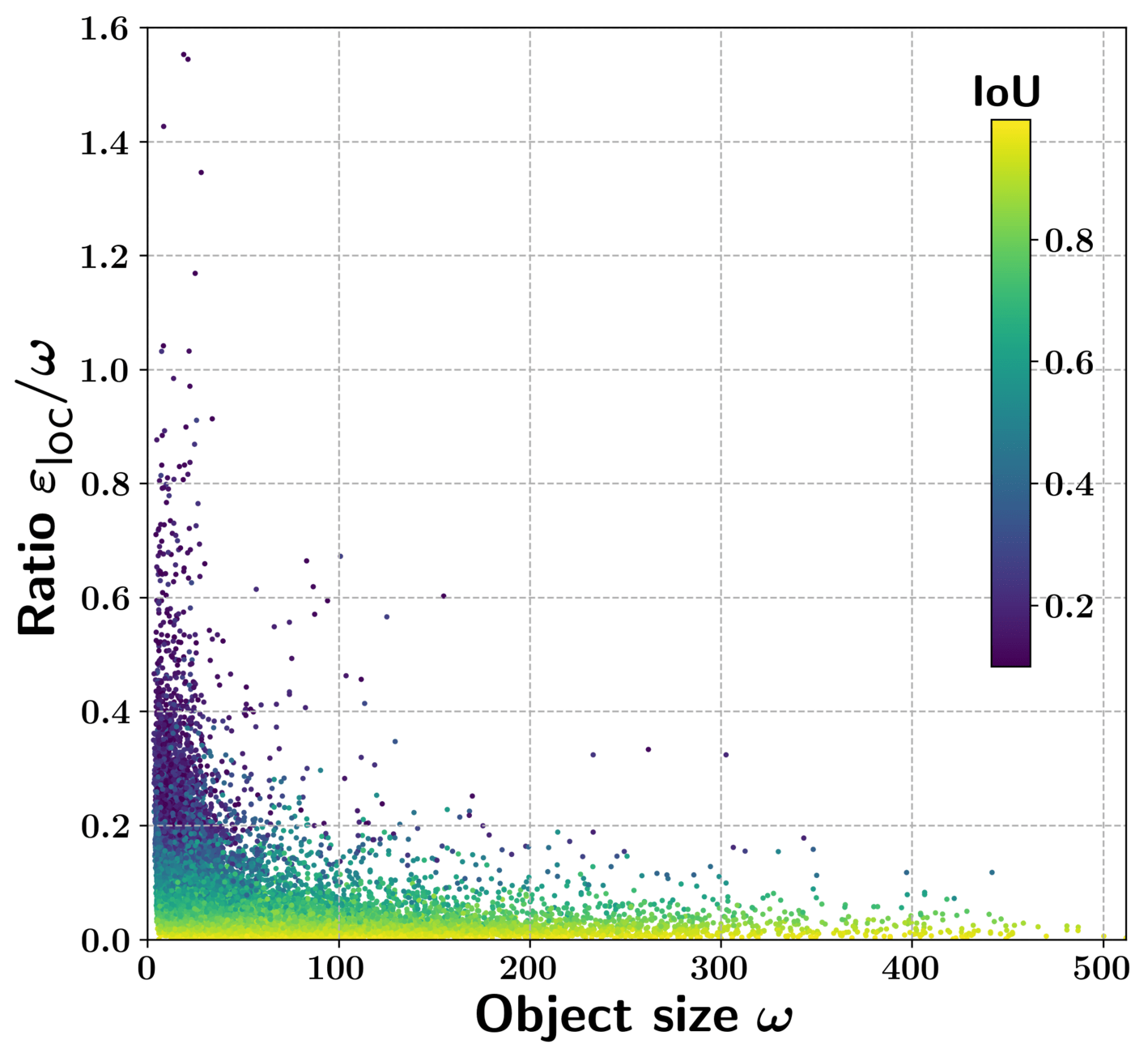}
        \vskip 6pt
        \label{fig:model_inaccuracy}
    \end{subfigure}
    \vspace{-4mm}
    \caption[IoU small object inadequacy]{\textbf{(Left)} Evolution of IoU, NWD
    \cite{wang2021nwd}, the proposed SIoU and $\alpha$-IoU \cite{he2021alpha}
    when a box is shifted from the ground truth box by
    $\varepsilon_{\text{loc}}$ pixels, for various box width $\omega \in \{4,
    16, 64, 128\}$ (boxes are squares). \textbf{(Right)} Ratio between pixel
    localization error $\varepsilon_{\text{loc}}$ and object size $\omega$ for a
    trained detection model on DOTA dataset. Each point represents the
    localization error of one object in DOTA test set.}
    \label{fig:loc_error_evo}
\end{figure}

IoU has a remarkable property: scale invariance. It means that scaling all
coordinates of two bounding boxes by the same amount will not change their IoU.
At first glance, this seems a desirable property as all objects will be treated
identically no matter their size. In practice, it has a fundamental drawback:
small boxes are prone to large IoU changes from only small position or size
modifications. To clarify, let us consider a simple example. Two
square boxes of width $\omega$ are shifted diagonally by
$\varepsilon_{\text{loc}}$ pixels. In this setup, a 1-pixel shift leads to a
larger decrease in IoU when boxes are smaller. This comes from the scale
invariance property, IoU stays constant as the ratio
$\frac{\varepsilon_{\text{loc}}}{\omega}$ remains fixed. 
However, this ratio is not constant for trained detection models, it increases
as objects get smaller (see \cref{fig:loc_error_evo}, right), leading to lower
IoU values for smaller objects. Hence, small objects are much more likely to
fall under the IoU thresholds which decide if a box is a true or false
detection, even though being satisfactory from a human perspective (see the user
study in \cref{sec:user_study}). In addition, \cref{sec:shift_analysis,sec:inaccuracy}
explore the resilience of various criteria to localization inaccuracies and
confirm that IoU is not an optimal box similarity measure. 

Only a handful of works question the adequation of IoU for object detection.
Among those, \cite{rezatofighi2019generalized} proposed a generalization of IoU
when boxes do not overlap, \cite{wang2021nwd} introduced a novel loss function
to target small objects, and \cite{du2021scale} proposed a Scale-Sensitive IoU
which extended CIoU with an area regulatory factor. In addition,
\cite{strafforello2022humans} showed that human perception and IoU are not fully
aligned. This lack of interest in new criterion design is explained by the great
detection performance in the regular setting (\ie natural images with sufficient
annotations). In the few-shot regime, and when targets are small, the flaws of
IoU become critical. Therefore,         
we revisit IoU to improve FSOD methods and focus on aerial images which mostly
contain small objects. We propose Scale-adaptive Intersection over Union (SIoU),
a novel criterion that can replace IoU for training and evaluating detection
models. However, for training we mostly aim at few-shot detection models as small
objects are particularly difficult for them. To demonstrate the superiority of
the proposed SIoU, \cref{sec:criterion_analysis} compares it with various
existing criteria. This section analyzes criteria distributions when
exposed to randomly shifted boxes. To our knowledge, this is the first attempt
to empirically and theoretically study the distributions of these criteria. The
conclusions of this analysis are then compared with human perception through a
user study which shows that SIoU aligns better with human appraisal than IoU
(see \cref{sec:user_study}). The comparison of these criteria also highlights that
SIoU as a loss function can guide training towards small objects better than
other criteria and in a more controlled fashion. SIoU loss can be tuned to
improve the detection of small objects just as it can be tuned to align with
human perception. Finally, these analyses are confirmed by extensive experiments
on both aerial images (DOTA and DIOR datasets) and natural images (Pascal VOC
and COCO datasets).

\section{Scale-Adaptive Intersection over Union}
\label{sec:siou}
\vspace{-1em}
\subsection{Definition of the novel box similarity criterion}
Before introducing the proposed criterion, let us define two bounding boxes
$b_1=[x_1, y_1, w_1, h_1]^T$ and $b_2 = [x_2, y_2, w_2, h_2]^T$ (the prediction
box and ground truth respectively), following the box definition from
\cref{chap:od}. Similarly, the adjectives small, medium, and large keep the same
meaning as in previous chapters: the box $b_i$ is \textit{small} if
$\sqrt{w_ih_i} \leq 32$ pixels, \textit{medium} if $32 < \sqrt{w_ih_i} \leq 96$,
and \textit{large} if $\sqrt{w_ih_i} > 96$.

IoU is scale-invariant, hence if $\text{IoU}(b_1,b_2) =
u$, scaling all coordinates of both boxes by the same factor $k$ will produce
the same IoU: 
\begin{equation}
    \text{IoU}(b_1, b_2) =\text{IoU}(kb_1, kb_2) = u.
\end{equation} 

However, detection models are not scale-invariant, they do not localize equally
well small and large objects. \cref{fig:loc_error_evo} (right) clearly shows
that the ratio between the localization error ($\varepsilon_{\text{loc}} = \|
b_1 - b_2\|_1 $) and the object size ($\omega=\sqrt{w_2h_2}$) increases as the
object becomes smaller. This figure is made with a detection model trained on
DOTA with all annotations. Each point represents the ratio
$\frac{\varepsilon_{\text{loc}}}{\omega}$ for one object in the test set. If the
detection model was indeed scale-invariant, the ratio should not change
significantly with the object sizes. Hence, because of the scale-invariance
property, IoU scores are lower for small objects. It then has several consequences:

\begin{enumerate}[nolistsep]
    \item Bounding boxes output by the model are not considered positive examples during evaluation.
    \item Bounding boxes are not selected as positive examples for loss
    computation, which biases the training towards larger objects.
    \item NMS does not filter duplicates of small boxes as their overlap is not high enough.
\end{enumerate}

A way to alleviate these issues is to relax the invariance property of the
IoU so it favors more small objects without penalizing large ones. To this end,
we propose a novel criterion called Scale-adaptive Intersection over Union
(SIoU): 
\begin{equation}
    \begin{aligned}
        \text{SIoU}(b_1, b_2) &= \text{IoU}(b_1, b_2) ^ p  \label{eq:p}\\
        \text{with} \quad\quad p &= 1 - \gamma \exp\left({-\frac{\sqrt{w_1h_1 + w_2h_2}}{\sqrt{2}\kappa}}\right),
    \end{aligned}
\end{equation}

\noindent
where $p$ is a function of the object sizes. Thus, the scores are rescaled
according to the object size. $\gamma \in ]-\infty, 1]$ and $\kappa > 0$ are two
parameters that control the strength and direction of the rescaling (hence, $p
\geq 0$). $\gamma$ governs the scaling for small objects while $\kappa$ controls
how fast the behavior of regular IoU is recovered for large objects.
\cref{fig:influence_gamma_kappa} (left) in \cref{sec:gamma_kappa} shows the
evolution of $p$ with object size for various $\gamma$ and $\kappa$. For
convenience, we will denote the average object size \ie, the average size of boxes
$b_1$ and $b_2$, by $\tau = \frac{w_1h_1 + w_2h_2}{2}$.

Of course, there are many valid choices for the exponent $p$. However, we want
to ensure some properties for SIoU, which translate into constraints for $p$:
\begin{itemize}[nolistsep]
    \item[-] SIoU should either be higher or lower than IoU when objects are
    small, but should remain finite, so $p(0) \in \mathbb{R}_+^*$.
    \item[-] For large objects, SIoU should behave like IoU, $\lim\limits_{\tau
    \to \infty}p(\tau) = 1$.
    \item[-] To prevent complete inversion of the order and smooth changes, p
    should be positive, continuous, and monotonic.   
\end{itemize}

Thus, an exponential response is a natural choice for the design of $p$.
Similar forms could be achieved with hyperbolic functions. For instance,
$p(\tau) = 1 - \frac{\gamma}{1+\kappa\tau}$ would be a sensible alternative. An
inconvenient of these designs is the possibility to only focus on either small
or large objects. This is mainly due to the monotonicity of $p$. It can be
relaxed to unlock the possibility of targeting objects of a specific size, for
instance, with a bell-shaped exponent \eg $p(\tau) = 1 - \gamma
\exp(-\kappa(\tau - \tau_{\text{target}})^2)$. Where $\kappa$ could be understood
as a bandwidth parameter around objects of size $\tau_{\text{target}}$. We did
not investigate the design of $p$, but experimenting with it would be relevant to
better understand the balance between small and large objects during training.

\subsection{SIoU Properties}
This new criterion follows the same structure as $\alpha$-IoU
\cite{he2021alpha}, but differs greatly as it sets different powers for
different object sizes. SIoU provides a solution for small object detection in
the few-shot regime while $\alpha$-IoU only aims to improve general detection.
However, SIoU inherits a few properties from $\alpha$-IoU. 
\vspace{1em}
\begin{property}[SIoU Relaxation]
    \label{property:relaxation}
    Let $b_1$ and $b_2$ be two bounding boxes and introduce $\tau = \frac{w_1h_1 +
    w_2h_2}{2}$ their average area. SIoU preserves the behavior of IoU in
    certain cases such as:
   \begin{itemize}[nolistsep]
        \item[-] $\textup{IoU}(b_1, b_2) = 0 \Rightarrow \textup{SIoU}(b_1, b_2) =\textup{IoU}(b_1, b_2) = 0$
        \item[-] $\textup{IoU}(b_1, b_2) = 1 \Rightarrow\textup{SIoU}(b_1, b_2) =\textup{IoU}(b_1, b_2) = 1$
        \item[-] $\lim\limits_{\tau \to +\infty} \textup{SIoU}(b_1, b_2) =\textup{IoU}(b_1,b_2)$
        \item[-] $\lim\limits_{\kappa\to 0} \textup{SIoU}(b_1, b_2) =\textup{IoU}(b_1,b_2)$
   \end{itemize}
\end{property}

\cref{property:relaxation} shows that SIoU is sound: it equals IoU when boxes have no
intersection and when they perfectly overlap. Therefore, the associated loss
function (see \cref{property:l_g_reweight}) will take maximal values for boxes that
do not overlap and minimum values for identical boxes. In addition, SIoU
behaves similarly to IoU when dealing with large objects (\ie when $\tau \to \infty$).
When boxes are large, the power $p$ that rescales the IoU is close to 1. Hence,
this change of criterion only impacts small objects. However, when discussing
the properties of SIoU, the limit between small/medium/large objects is relative
to the choice of $\kappa$. If $\kappa \gg \sqrt{wh}$, even large objects will be
rescaled. On the contrary, when $\kappa \xrightarrow {}0$, all objects are
treated as large and are not rescaled. In practice, $\kappa$ and $\gamma$ are
chosen empirically, but \cref{sec:criterion_analysis} provides useful insights
for the choice of these parameters.  
\vspace{1em}
\begin{property}[Loss and gradients reweighting]
    \label{property:l_g_reweight}
    Let $\mathcal{L}_{\textup{IoU}}(b_1, b_2) = 1 - \textup{IoU}(b_1, b_2)$ and
    $\mathcal{L}_{\textup{SIoU}}(b_1, b_2) = 1 - \textup{SIoU}(b_1, b_2)$ be the
    loss functions associated respectively with IoU and SIoU. Let us denote the
    ratio between SIoU and IoU losses by 
    $\mathcal{W}_{\mathcal{L}}(b_1, b_2) =
    \frac{\mathcal{L}_{\textup{SIoU}}(b_1, b_2)}{\mathcal{L}_{\textup{IoU}}(b_1,
    b_2)}$. 
    Similarly, 
    $\mathcal{W}_{\mathcal{\nabla}}(b_1, b_2) =
    \frac{|\nabla\mathcal{L}_{\textup{SIoU}}(b_1, b_2)|}{|\nabla\mathcal{L}_{\textup{IoU}}(b_1,
    b_2)|}$ 
    denotes the ratio of gradients generated from SIoU and IoU losses: 
    \begin{align}
        \mathcal{W}_{\mathcal{L}}(b_1, b_2) &= \frac{1- \textup{IoU}(b_1, b_2)^p}{1-\textup{IoU}(b_1, b_2)}, \\
        \mathcal{W}_{\mathcal{\nabla}}(b_1, b_2) &= p\textup{IoU}(b_1, b_2)^{p-1},
    \end{align}

    \noindent
    $\mathcal{W}_{\mathcal{L}}$ and $\mathcal{W}_{\mathcal{\nabla}}$ are
    increasing (resp. decreasing) functions of IoU when $p\geq 1$ (resp. $p <
    1$) which is satisfied when $\gamma \leq 0$ (resp. $\gamma > 0$). As the IoU
    goes to 1, $\mathcal{W}_{\mathcal{L}}$ and $\mathcal{W}_{\mathcal{\nabla}}$
    approaches $p$: 
    \begin{align}
        \lim\limits_{\textup{IoU}(b_1, b_2) \to 1}\mathcal{W}_{\mathcal{L}}(b_1, b_2) &=  p, \\
        \lim\limits_{\textup{IoU}(b_1, b_2) \to 1}\mathcal{W}_{\mathcal{\nabla}}(b_1, b_2) &=  p.
    \end{align}
\end{property}

We employ the same tools as in \cite{he2021alpha} to analyze how SIoU affects
the losses and associated gradients. We show in property 2 that their results
hold for a non-constant power $p$ as well. From this, it can be observed that
when IoU is close to 1, losses and gradients are both rescaled by $p$. Hence,
the gradients coming from objects of different sizes will be rescaled
differently. The setting of $\gamma$ and $\kappa$ allows to balance the training
towards specific object sizes. Experimental results are provided in
\cref{sec:results} to support these findings. Proofs for properties 1 and 2 are
available in \cref{app:properties}.   

However, \textit{order preservingness} is not satisfied by using power value
changing with the size of the objects. This property ensures that the order
given by the IoU is preserved with the novel criterion, \eg $\text{IoU}(b_1,b_2) <
\text{IoU}(b_1,b_3) \Rightarrow \alpha\text{-IoU}(b_1,b_2) <
\alpha\text{-IoU}(b_1,b_3)$. $\alpha$-IoU preserves the order of IoU, but SIoU
does not. We show in \cref{app:properties} that even though this property is not
always satisfied, a large proportion of boxes meet the conditions for the order
to hold. \\


\subsection{Extensions and Generalization of SIoU}
Finally, SIoU can very well be extended as IoU was with GIoU or DIoU. Note that
we only focus on GIoU extension here as DIoU and its variants are composite loss
(\ie sum of multiple loss functions). We provide here an extension following
GIoU as it appears especially well-designed for small object detection. When
detecting small targets, it is easier for a model to completely miss the object,
producing an IoU of 0 no matter how far the predicted box is. On the contrary,
GIoU yields negative values for non-intersecting boxes. This produces more
relevant guidance during the early phase of training when the model outputs
poorly located boxes. Therefore, we extend SIoU by raising GIoU to the same
power $p$ as in \cref{eq:p}:%
\begin{equation}
    \text{GSIoU}(b_1, b_2) = \begin{cases}\text{GIoU}(b_1,b_2) ^ p  &\text{if } \text{GIoU}(b_1,b_2) \geq 0 \\
                                            -|\text{GIoU}(b_1,b_2)| ^ p  &\text{if } \text{GIoU}(b_1,b_2) < 0 \end{cases}. 
\end{equation}

\section{Scale-Adaptive Criteria Analysis}
\label{sec:criterion_analysis}
\vspace{-1em}

This section analyzes both empirically and theoretically the behaviors of IoU,
GIoU \cite{rezatofighi2019generalized}, $\alpha$-IoU \cite{he2021alpha}, NWD
\cite{wang2021nwd}, SIoU and GSIoU. We investigate the desirable properties of
such criteria for model training and performance evaluation.

\subsection{Response Analysis to Box Shifting}
\label{sec:shift_analysis}
As mentioned in \cref{sec:siou}, IoU drops dramatically when the localization
error increases for small objects. Shifting a box a few pixels off the ground
truth can result in a large decrease in IoU, without diminishing the quality of
the detection from a human perspective. This is depicted in
\cref{fig:loc_error_evo} (left), where plain lines represent the evolution of
IoU for various object sizes. These curves are generated by diagonally shifting
a box away from the ground truth. Boxes are squares, but similar curves would be
observed otherwise. In this plot, boxes have the same size, so when
there is no shift in between ($\varepsilon_{\text{loc}}=0$), IoU equals 1.
However, if the sizes of the boxes differ by a ratio $r$, IoU would peak at
$1/r^2$. Other line types represent other criteria. SIoU decreases slower than
IoU when $\varepsilon_{\text{loc}}$ increases and this is especially true when
boxes are small. This holds because $\gamma>0$, if it was negative, SIoU would
adopt the opposite behavior. In addition, the gap between IoU and SIoU is
larger when objects are small. Only NWD shares this property, but it only
appears when boxes have different sizes (all lines coincide for NWD). Hence,
SIoU is the only criterion that allows controlling its decreasing rate, \ie how
much SIoU is lost for a 1-pixel shift. As GIoU and GSIoU values range in
$[-1,1]$, they were not included in \cref{fig:loc_error_evo}, but for
completeness, they are plotted in \cref{fig:shift_gsiou} along with other
criteria. 

\begin{figure}[h]
    \centering
    \includegraphics[width=\columnwidth]{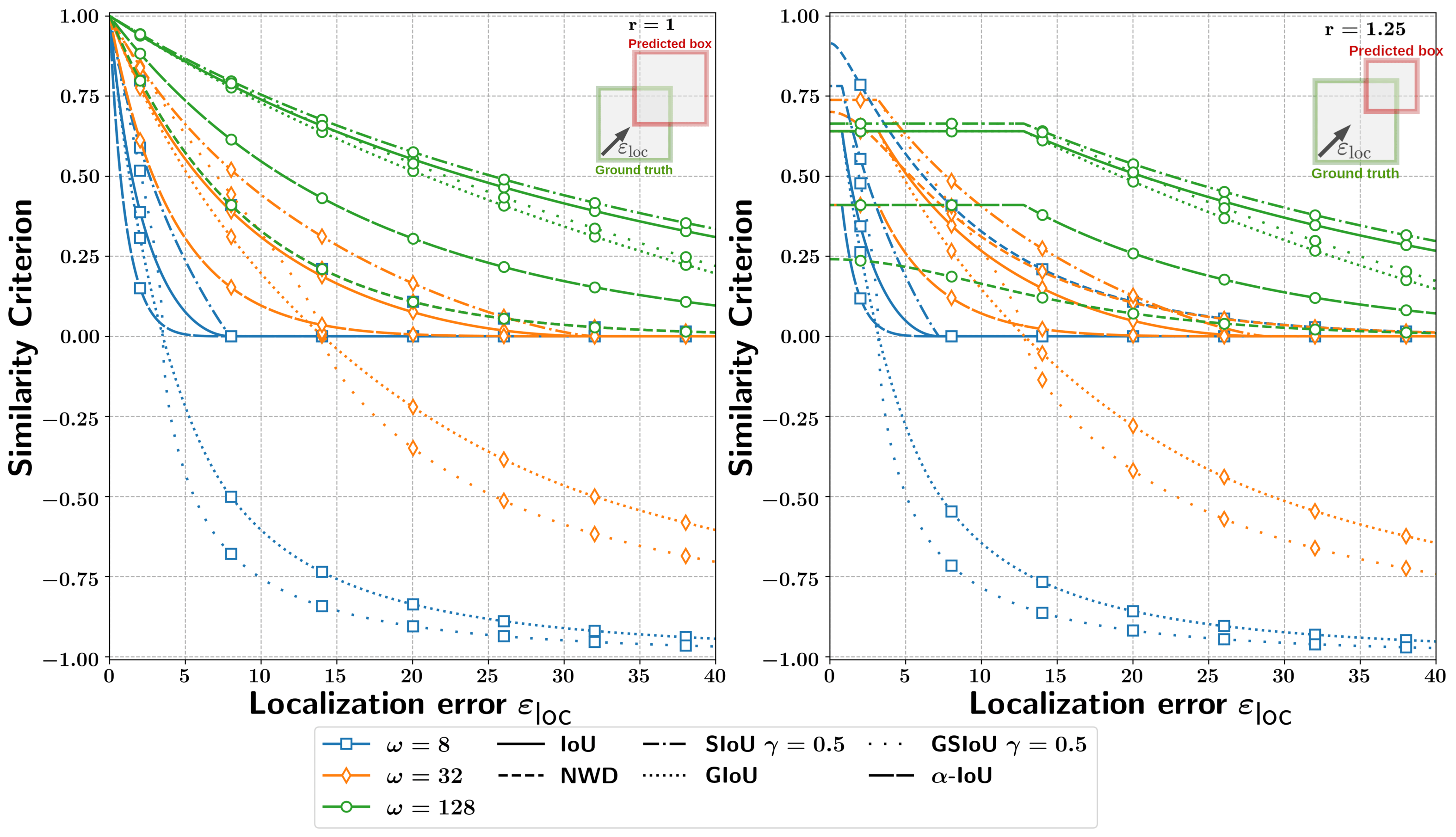}
    \caption[Critera's values against box localization error]{Evolution of
    various criteria (IoU, GIoU, and GSIoU) when a box is shifted from the
    ground truth box by $\rho$ pixels for various box sizes $\omega \in \{4, 16,
    64, 128\}$. With boxes of the same size \textbf{(left)} and different sizes
    \textbf{(right)}.}
    \label{fig:shift_gsiou}
\end{figure}

\subsection{Resilience Analysis to Detector Inaccuracy}
\label{sec:inaccuracy}
Knowing how a criterion responds to shifts and size variations is important to
understand what makes a sensible box similarity measure. Pushing beyond the shift
analysis, we study empirically and theoretically the criteria's distributions
when exposed to detector inaccuracies, \ie randomly shifted boxes. This setting
mimics the inaccuracy of the model either during training or at test time.

\subsubsection{Empirical Protocol}
\label{sec:empirical_analysis}
To simplify, let us suppose that all boxes are squares of the same size $\omega$
and can be shifted only horizontally. Similar results are observed by relaxing
these constraints, see \cref{sec:inaccuracy_assumptions}. A box is then entirely
defined by its horizontal position $x$ and its width $\omega$. If a detector is
not perfect, it will produce bounding boxes slightly shifted horizontally from
the ground truth. To model the detector's inaccuracy, we suppose that the
predicted box position is randomly sampled from a Gaussian distribution centered
on the ground truth location (which is chosen as 0 without loss of generality):
$X \sim \mathcal{N}(0, \sigma^2)$ where $\sigma$ controls how inaccurate the
model is. We are interested in the distribution of $\mathfrak{C} \in
\{\text{IoU}, \text{GIoU}, \text{SIoU}, \text{GSIoU}, \alpha\text{-IoU},
\text{NWD}\}$ and how it changes with $\omega$. To this end, let $Z =
\mathfrak{C}(X)$. More precisely, we are interested in the Probability Density
Function (PDF) of $Z$ and its two first moments (which exist because
$\mathfrak{C}$ is continuous and bounded). 

\cref{fig:criteria_analysis} gathers the results of this analysis. It shows the
pdf of each criterion for various box sizes (left) along with the evolution of
the expectation and standard deviation of $Z$ against $\omega$ (middle and
right). Specifically, we randomly sample a large number of boxes and compute the
associated criteria values for all $\mathfrak{C}$ and boxes. Then, the average
and standard deviation are computed to estimate the moment of the criteria'
pdfs. This process is repeated for various box sizes $\omega$ to understand how
it changes the behaviors of the criteria. From this, it can be noted that the
size of the boxes has a large influence on the distributions of all criteria.
The expected values of all criteria are monotonically increasing with object
size. In particular, small objects have lower expected IoU values than larger
ones. This is consistent with the initial assessment from
\cref{fig:loc_error_evo} (right) and it validates the choice of $\sigma$
constant for this study (although \cref{sec:inaccuracy} discusses this
assumption).

\begin{figure}
    \centering
    \includegraphics[width=\textwidth]{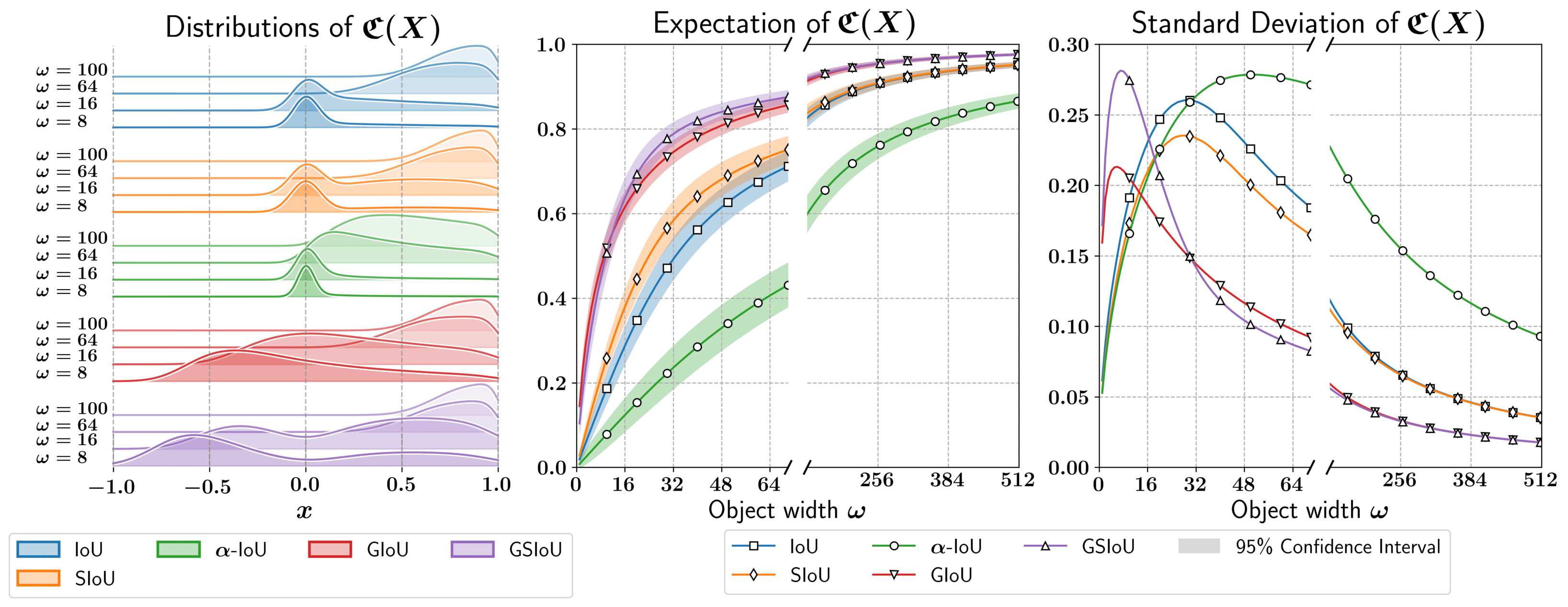}
    \caption[Criteria's distribution comparison ($\gamma=-4$ and $\kappa=16$ for SIoU and GSIoU)]{Analysis of the distribution of IoU, SIoU, GIoU, GSIoU and
            $\alpha$-IoU when computed on inaccurately positioned boxes. This is
            done by observing the probability distribution functions for various
            $\omega$ values \textbf{(left)}, the expectation \textbf{(middle)}
            and standard deviation \textbf{(right)} for all criteria. For SIoU
            and GSIoU, we fixed $\gamma=0.5$ and $\kappa=64$, for $\alpha$-IoU,
            $\alpha=3$ (as recommended in the original paper
            \cite{he2021alpha}). The inaccuracy of the detector is set to
            $\sigma=16$. Note that the empirical pdfs were smoothed using a
            Kernel Density Estimator method. This affects particularly IoU, SIoU
            , and $\alpha$-IoU for which the actual pdf is defined only on $[0,1]$. For
            the sake of visualization, GIoU and GSIoU were rescaled between 0
            and 1 for the expectation and standard deviation plots. }
    \label{fig:criteria_analysis}
    \vspace{-1em}
\end{figure}

When building detection models, we hope to detect equally well objects of all
sizes, this means having a constant expected IoU, no matter the objects' size.
This would require the localization error to be an affine function of $\omega$.
Of course, the localization error of the detector is likely to depend on
$\omega$. However, it cannot be an affine function, otherwise, small objects
would be perfectly detected, which is not observed (see
\cref{fig:loc_error_evo}, right). As SIoU has larger expected values than IoU
for small objects, it can compensate for their larger localization errors. The
setting of $\gamma$ and $\kappa$ allows controlling how much small objects are
favored (see \cref{fig:influence_gamma_kappa}). NWD
is not included in these plots as its expected value and variance are constant
when dealing with same-size boxes.

\subsubsection{Influence Analysis on the Performance Evaluation}
If the expected value of a criterion is too small, it is likely that the boxes
will be considered negative detections during evaluation and therefore reduce
the performance. Therefore, having a criterion with larger expected values for
small objects would better reflect the true performance of a detector. One might
think that it would be equivalent to scale-adaptive IoU thresholds during the
evaluation, but this is not completely true as the variance of the criteria also
differs.

Having an accurate criterion (\ie with low variance) is crucial for evaluation.
Let us take a detector that produces well-localized boxes on average, \ie on
average the criterion computed between the boxes and their corresponding ground
truths is above a certain threshold. As the detector is not perfect, it will
randomly produce boxes slightly better or slightly worse than the average. If
the criterion has a high variance, it will be more likely that poor boxes get
scores below the criterion threshold and therefore will be considered negative
detections. This will reduce the performance of the detector even though on
average, it meets the localization requirements. In addition, a criterion with a
higher variance will be less reliable and would produce more inconsistent
evaluations of a model. The fact that the IoU variance is high for small objects
partly explains why detectors have much lower performance on these objects.
Hence, SIoU seems more adapted for evaluation. Of course, using this criterion
for evaluation will attribute higher scores for less precise localization of
small objects. However, this aligns better with human perception as demonstrated
in \cref{sec:user_study}. Employing SIoU in the evaluation process also allows
tweaking it for the needs of a specific application. 

\vspace{1em}
\subsubsection{Influence Analysis on Training}
All criteria discussed above are employed as regression losses in the
literature. The loss associated with each criterion $\mathfrak{C}$ is
$\mathcal{L}_{\mathfrak{C}}(b_1,b_2) = 1 - \mathfrak{C}(b_1,b_2)$. Therefore,
the expected value of the criterion determines the expected value of the loss
and thus the magnitude of the gradients. Large values of the criterion give low
values for the loss. Now, as the expected values of the criteria change with the
object size, the expected values of the losses also change. Small objects
generate greater loss values than larger ones on average. However, this is
balanced by the fact that fewer small objects are selected as positive examples
because the IoU is involved in the selection process. To achieve better
detection, training must focus more on small objects. One way to ensure this is
to set larger loss values for small objects. Thus, the equilibrium is
shifted toward smaller objects and gradients will point to regions where the
loss of small objects is lower. As shown in \cref{fig:criteria_analysis_4}
(\cref{sec:gamma_kappa}), with the right parameters, SIoU can do that. It
attributes lower values for small objects while keeping similar values for large
ones. The contrast between small and large objects is accentuated and
optimization naturally focuses on smaller objects. SIoU's parameters control
which object size gets more emphasis. This is closely linked to
\cref{property:l_g_reweight} which states that employing SIoU (compared to IoU)
reweights the loss and the gradient by $p$. If $\gamma < 0$, $p$ decreases with
the size of the objects and thus the optimization focuses on small objects. This
also explains why generalizations of existing criteria (\ie with negative values
for non-overlapping boxes) often outperform their vanilla version. Taking IoU
and GIoU as examples, the gap between their expected values for small and
large objects is greater with GIoU. It nudges the optimization towards small
objects.  

\subsubsection{Inaccuracy Tolerance Assumptions}
\label{sec:inaccuracy_assumptions}
\noindent Several assumptions were made in \cref{sec:inaccuracy} to analyze the
criteria for box similarity:
\begin{enumerate}[nolistsep, topsep=-3mm]
    \item Boxes are shifted only horizontally. 
    \item Boxes have the same size.
    \item The detector's inaccuracy is fixed and does not depend on the object size.
\end{enumerate}
The first two assumptions are relatively harmless. Allowing diagonal shifts
simply accelerates the IoU drop rate. A 1-pixel diagonal shift is equivalent to
a vertical and a horizontal shift. Intuitively, this is similar to a 2-pixels
horizontal shift. However, this is not true because, with a 1-pixel diagonal
shift, the area of intersection decreases slower than with a 2-pixels horizontal
shift. Following the notations from \cref{sec:inaccuracy}, the intersection
between two boxes of width $\omega$ diagonally shifted by $\rho$ pixels is
$(\omega - \rho)^2 = \omega^2 - 2\omega\rho + \rho^2$ while the intersection
between same boxes horizontally shifted by $2\rho$ pixels is $\omega(\omega -
2\rho) = \omega^2 - 2\omega\rho$. To ensure that this does not question the
conclusions of \cref{sec:empirical_analysis}, \cref{fig:diagonal_shift_distri}
compares the expected values and variances of IoU and GIoU with horizontal and
diagonal shifts. Similar behaviors are observed with and without diagonal
shifting. The only difference is that the expected values of the criteria for
diagonally shifted boxes are lower as the shifts get larger. It also increases
the variances as the distributions are more spread. Then, relaxing the second
constraint results in slightly different distributions, but with similar
behavior. Having boxes of different sizes only changes the maximum value of the
criteria. If boxes have different sizes, the maximum value must be smaller than
1. Therefore, the expected values approach smaller values than 1 as objects get
larger. The variance is reduced as the range of criteria values is smaller (see
\cref{fig:different_size_boxes}). 

\begin{figure}
    \centering
    \begin{subfigure}[t]{0.7\columnwidth}
        \centering
        \includegraphics[trim=490 0 0 0, clip,width=\columnwidth]{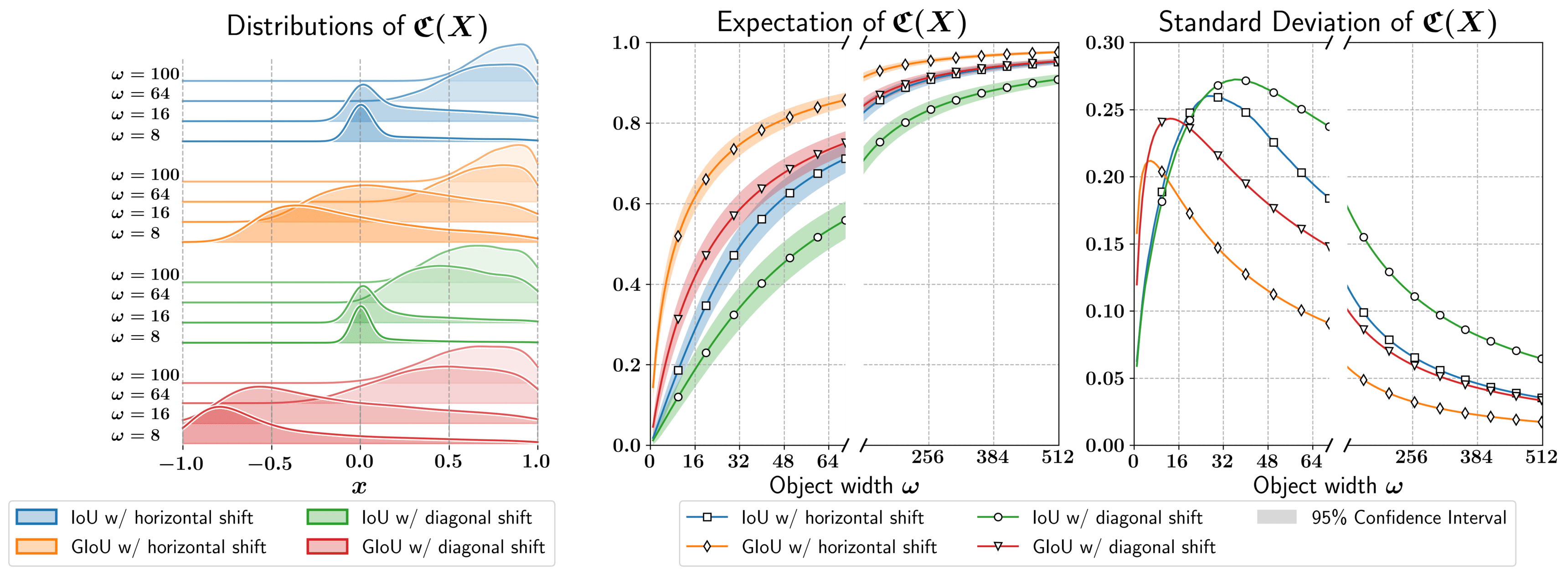}
        \caption{IoU and GIoU expected values and standard deviation with horizontally and diagonally shifted boxes.}
        \label{fig:diagonal_shift_distri}
    \end{subfigure}%
    
    \begin{subfigure}[b]{0.7\columnwidth}
        \centering
        \includegraphics[trim=430 0 0 0, clip,width=\columnwidth]{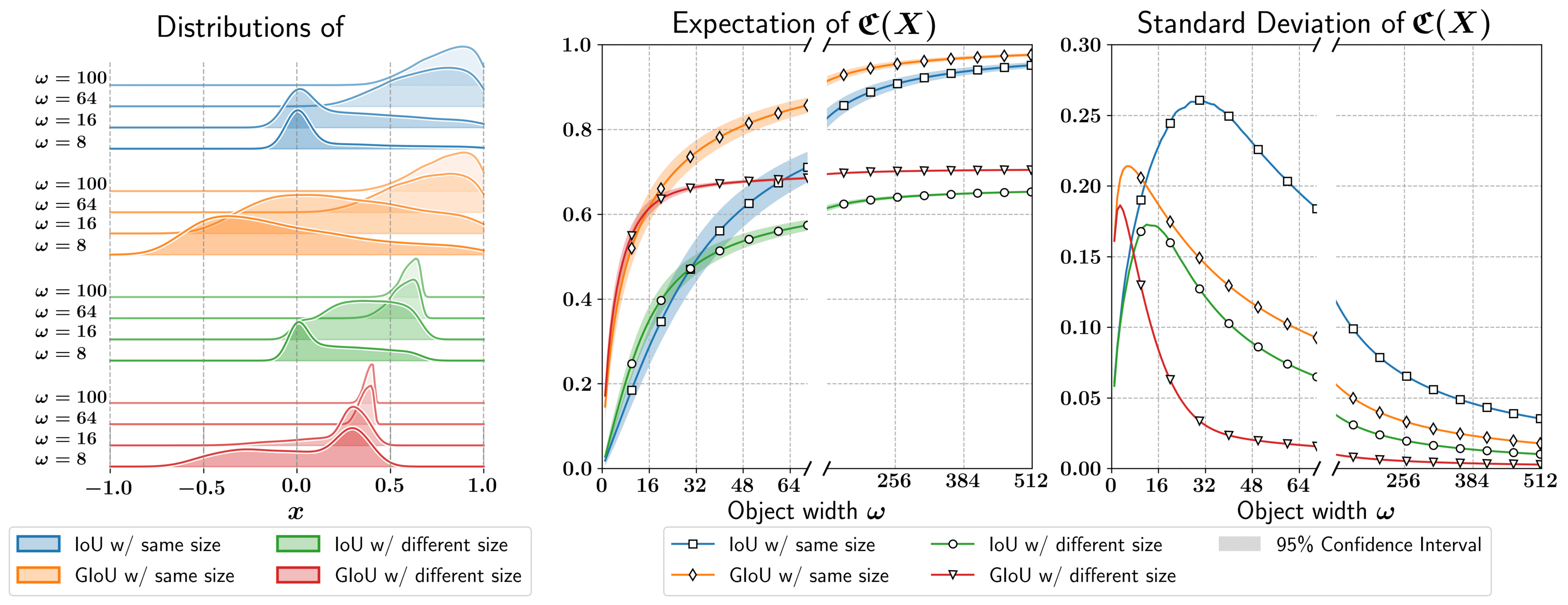}
        \caption{IoU and GIoU expected values and standard deviation with and without boxes of the same size.}
        \label{fig:different_size_boxes}
    \end{subfigure}
    \begin{subfigure}[b]{0.7\columnwidth}
        \centering
        \includegraphics[trim=490 0 0 0, clip,width=\columnwidth]{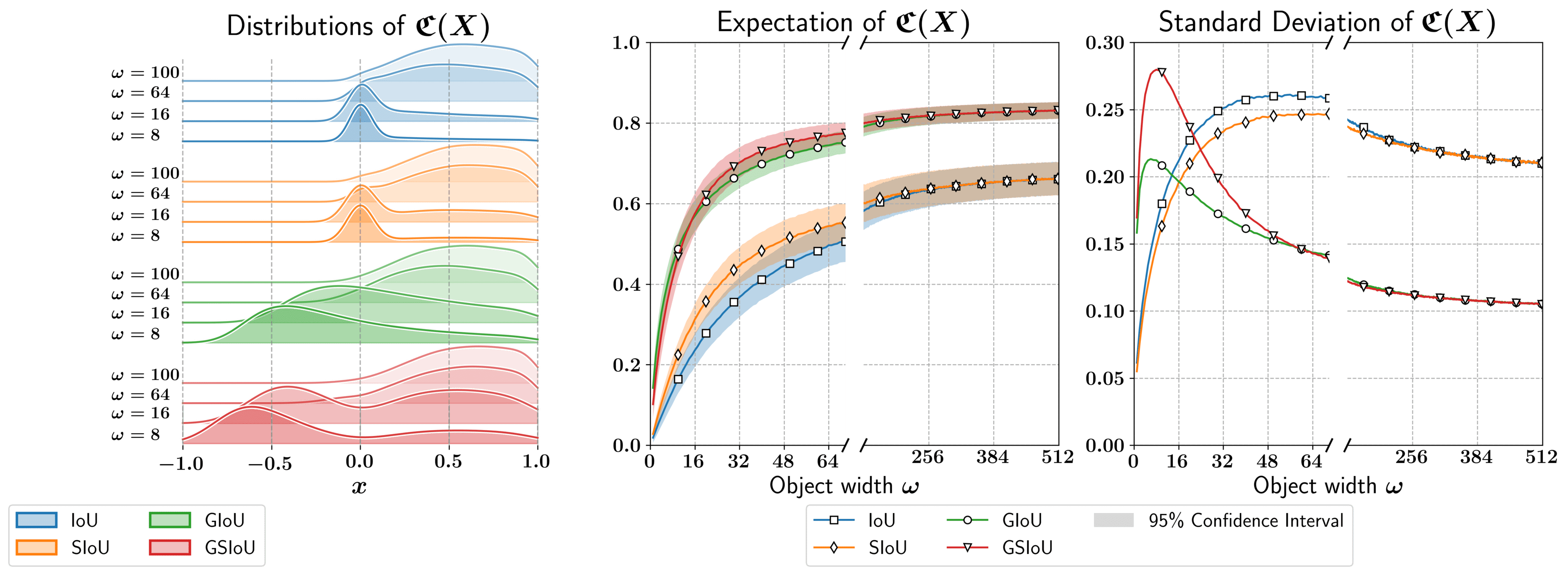}
        \caption{IoU, SIoU, GIoU, and GSIoU expected values and standard
        deviation with the detector's inaccuracy modeled as an affine function,
        $\sigma(\omega) = \sigma_0 + \lambda \omega$ ($ \sigma_0 = 16$,
        $\lambda=\frac{1}{4}$).}
        \label{fig:affine_inaccuracy}
    \end{subfigure}
    \caption{Relaxing the constraints for criteria' distribution analysis.}
    \label{fig:relax_assumptions}
\end{figure}

It is less straightforward that the conclusions hold without the last
assumption. In the analysis from \cref{sec:empirical_analysis}, we assume that
the inaccuracy of the detector is fixed. This means that the detector generates
randomly shifted boxes by the same number of pixels on average no matter the
size of the object. This is certainly false, in practice, in terms of absolute
distance, detectors are better with smaller objects. However, the inaccuracy
cannot simply be proportional to object sizes because small objects would then
be perfectly detected. Thus, we tried to change the inaccuracy of the detector
as an affine function of the box width: $\sigma(\omega) = \sigma_0 + \lambda
\omega$. We choose to set $\sigma_0$ to the fixed value of $\sigma$ used in
\cref{sec:empirical_analysis} and $\lambda=1/4$. This setting reflects better
the inaccuracy of a true detector. The expected values and standard deviations
of IoU, SIoU, GIoU, and GSIoU with this inaccuracy setting are plotted in
\cref{fig:affine_inaccuracy}. The main difference with fixed inaccuracy is that
expected values do not approach 1 as object size gets larger, instead they tend
towards lower values. It also leads to non-zero variance for large objects.
However, for small objects, the curves of the different criteria are mostly
unchanged, and the conclusions formulated in \cref{sec:empirical_analysis} are
still valid.

\subsection{Theoretical study of GIoU}
In the previous section, we derived the statistics of several box similarity
criteria from empirical simulations. However, the criteria probability
distribution functions and first moments can also be derived theoretically. We
provide such results for GIoU in \cref{prop:distribution_giou}. 
\begin{proposition}[GIoU's distribution]
    \label{prop:distribution_giou}
    Let $b_1 = (0,y_1,w_1,h_1)$ be a bounding box horizontally centered and $b_2
    = (X,y_2,w_2,h_2)$ another bounding box randomly positioned, with $X\sim
    \mathcal{N}(0, \sigma^2)$ and $\sigma \in \mathbb{R}^*_+$. Let's suppose that
    the boxes are identical squares, shifted only horizontally (\ie $w_1 = w_2 = h_1 =
    h_2$ and $y_1 = y_2$).
    \noindent
    Let $Z = \mathfrak{C}(X)$, where $\mathfrak{C}$ is the generalized intersection over union. The
    probability density function of $Z$ is given by:
    \vspace{-0.5em}
    \begin{align}
        d_Z(z) &= \frac{2\omega}{(1+z)^2\sqrt{2\pi} \sigma} \exp\left(-\frac{1}{2}\left[\frac{\omega(1-z)}{\sigma(1+z)}\right]^2\right).
    \end{align}
    \vspace{-0.5em}
    \noindent
    The two first moments of $Z$ exist and are given by:
    \begin{align}
        \mathbb{E}[Z] &= \frac{2}{\pi^{3/2}} G^{2,3}_{3,2}\left(2a^2 \bigg\rvert \begin{matrix}0 & \frac{1}{2} & \frac{1}{2} \\ \frac{1}{2}& 0 \end{matrix}\right), \\
        \mathbb{E}[Z^2] &= 1- \frac{8a}{\sqrt{2\pi}} + \frac{16a^2}{\pi^{3/2}} G^{2,3}_{3,2}\left(2a^2 \bigg\rvert \begin{matrix}-1 & \frac{1}{2} & -\frac{1}{2} \\ \frac{1}{2}  & 0 \end{matrix}\right),
    \end{align}

    \noindent
    where $G$ is the Meijer G-function \cite{meijer2013} (see its definition in
    \cref{eq:meijer_g}, \cref{app:theoretical_giou_proof}) and
    $a=\frac{\sigma}{\omega}$.
\end{proposition}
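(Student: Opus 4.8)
The plan is to exploit a geometric coincidence that collapses $\mathfrak{C}$ (the generalized IoU) into a single elementary function of the shift, and then to handle the resulting one-dimensional integrals with Mellin--Barnes techniques. First I would establish this simplification. For two identical squares of side $\omega$ separated horizontally by $|X|$, the smallest enclosing box has width $\omega+|X|$ and height $\omega$, so $|C|=\omega(\omega+|X|)$. When the boxes overlap ($|X|<\omega$) the union has exactly the same area, $|A\cup B|=\omega(\omega+|X|)=|C|$, so the penalty term $|C\setminus(A\cup B)|/|C|$ vanishes and $\mathfrak{C}(X)=\mathrm{IoU}(X)=\tfrac{\omega-|X|}{\omega+|X|}$. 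When they are disjoint ($|X|\geq\omega$) the intersection is empty, $\mathrm{IoU}=0$, but the penalty equals $\tfrac{|X|-\omega}{|X|+\omega}$, giving $\mathfrak{C}(X)=-\tfrac{|X|-\omega}{|X|+\omega}=\tfrac{\omega-|X|}{\omega+|X|}$ once more. Hence on the whole line $Z=g(|X|)$ with $g(x)=\tfrac{\omega-x}{\omega+x}$, a single smooth, strictly decreasing map onto $(-1,1]$. I expect this unification of the overlapping and disjoint regimes to be the linchpin of the whole proposition: without it one would be stuck with a piecewise integrand.

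Next I would derive the density. Since $X\sim\mathcal{N}(0,\sigma^2)$, the magnitude $|X|$ is half-normal, and inverting $z=g(x)$ gives $x(z)=\tfrac{\omega(1-z)}{1+z}$ with $\bigl|\tfrac{dx}{dz}\bigr|=\tfrac{2\omega}{(1+z)^2}$. Applying the change-of-variables formula $d_Z(z)=f_{|X|}(x(z))\,\bigl|\tfrac{dx}{dz}\bigr|$ and substituting $x(z)^2/(2\sigma^2)=\tfrac12\bigl[\tfrac{\omega(1-z)}{\sigma(1+z)}\bigr]^2$ reproduces the stated pdf immediately; the only delicate point is the normalizing constant coming from folding the Gaussian onto $[0,\infty)$. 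Existence of the two first moments is then trivial, since $Z$ is bounded in $(-1,1]$.

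For the moments I would pass to the normalized shift $T=|X|/\omega$, which is half-normal of scale $a=\sigma/\omega$, so that $\mathbb{E}[Z]=\int_0^\infty\tfrac{1-t}{1+t}f_T(t)\,dt$ and $\mathbb{E}[Z^2]=\int_0^\infty\tfrac{(1-t)^2}{(1+t)^2}f_T(t)\,dt$. Writing $\tfrac{1-t}{1+t}=-1+\tfrac{2}{1+t}$ and $\tfrac{(1-t)^2}{(1+t)^2}=1-\tfrac{4}{1+t}+\tfrac{4}{(1+t)^2}$ reduces everything to Gaussian-weighted integrals of $(1+t)^{-\nu}$. For each such piece I would invoke the Mellin transform of the Gaussian,
\[
\int_0^\infty t^{s-1}e^{-t^2/(2a^2)}\,dt=\tfrac12(2a^2)^{s/2}\Gamma(s/2),
\]
together with the Mellin transform of $(1+t)^{-\nu}$, namely $\tfrac{\Gamma(s)\Gamma(\nu-s)}{\Gamma(\nu)}$, and the Mellin convolution (Parseval) theorem, which rewrites each integral as a Mellin--Barnes contour integral whose $\Gamma$-factor structure is exactly that of a Meijer $G$-function. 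Matching orders and parameter rows then yields $G^{2,3}_{3,2}$ evaluated at $2a^2$, and tracking prefactors produces the $\tfrac{2}{\pi^{3/2}}$ for $\mathbb{E}[Z]$ and the $\tfrac{16a^2}{\pi^{3/2}}$ for $\mathbb{E}[Z^2]$. The elementary term $-\tfrac{8a}{\sqrt{2\pi}}=-4\,\mathbb{E}[T]$ would fall out of the component of the decomposition whose integrand reduces to the bare mean of the half-normal variable $T$ (recall $\mathbb{E}[T]=a\sqrt{2/\pi}$).

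The hard part will be the final identification step: matching the Mellin--Barnes integrand to the specific parameter matrices of $G^{2,3}_{3,2}$, verifying the contour placement and the convergence strips for the $\Gamma$-factor products, and keeping the constant bookkeeping exact — in particular the factor-of-two folding of the Gaussian and the separation of the elementary mean term from the genuinely special-function part. Everything else is a routine change of variables once the geometric collapse of $\mathfrak{C}$ to $g(|X|)$ is in hand.
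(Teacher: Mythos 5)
Your route is essentially the paper's: the same collapse of $\mathfrak{C}$ to $\frac{\omega-|x|}{\omega+|x|}$ on the whole real line, a one-dimensional change of variables for the density (the paper goes through the CDF, $F_Z(z)=2\bigl(1-F_X\bigl(\omega\tfrac{1-z}{1+z}\bigr)\bigr)$, and differentiates, which is equivalent to your half-normal-times-Jacobian computation), and Mellin-type identification of Gaussian-weighted integrals of $(1+t)^{-\nu}$ with Meijer $G$-functions. Your explicit check that the GIoU penalty term yields the same formula in the disjoint regime is in fact more careful than the paper, which merely asserts the expression for $\mathfrak{C}$. Two bookkeeping remarks, however: executed correctly, your computation gives the prefactor $\frac{4\omega}{(1+z)^2\sqrt{2\pi}\sigma}$ (half-normal factor $2$ times Jacobian $\frac{2\omega}{(1+z)^2}$), not the $2\omega$ printed in the statement --- the printed density integrates to $1/2$, and the paper's own appendix derivation indeed obtains $4\omega$. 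Similarly, your decomposition $\frac{1-t}{1+t}=-1+\frac{2}{1+t}$ (identical to the paper's) yields $\mathbb{E}[Z]=\frac{2}{\pi^{3/2}}G^{2,3}_{3,2}(\,\cdot\,)-1$, with an additive $-1$ that the paper's derivation also produces but the proposition omits. So your claims of reproducing the stated pdf and first moment are really matches to the corrected formulas; you should flag these discrepancies rather than assert exact agreement.

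The genuine gap is in $\mathbb{E}[Z^2]$. Your expansion $\frac{(1-t)^2}{(1+t)^2}=1-\frac{4}{1+t}+\frac{4}{(1+t)^2}$ leads to \emph{two} Gaussian-weighted integrals and hence two Meijer $G$-functions, and no component of it equals $\int_0^\infty t\,f_T(t)\,dt$; the elementary term $-\frac{8a}{\sqrt{2\pi}}=-4\,\mathbb{E}[T]$ therefore cannot ``fall out of the component of the decomposition whose integrand reduces to the bare mean of $T$'' --- no such component exists. In the paper this term comes from an integration by parts: writing $\frac{(1-t)^2}{(1+t)^2}=1-\frac{4t}{(1+t)^2}$ and using that $t\,e^{-t^2/(2a^2)}\,dt$ is an exact differential,
\begin{equation*}
\int_0^\infty \frac{t}{(1+t)^2}\,e^{-t^2/(2a^2)}\,dt \;=\; a^2 \;-\; 2a^2\int_0^\infty \frac{e^{-t^2/(2a^2)}}{(1+t)^3}\,dt,
\end{equation*}
whose boundary term $a^2$ becomes $-\frac{8a}{\sqrt{2\pi}}$ after the prefactor, while the remaining $(1+t)^{-3}$ integral is the one identified with the single $G^{2,3}_{3,2}$ of the statement. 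To complete your argument you must either perform this integration by parts before invoking the Mellin--Parseval step, or prove a contiguity relation collapsing your two $G$-functions into the stated single $G$ plus the elementary term; as written, the proposal does not reach the stated form of $\mathbb{E}[Z^2]$.
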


The proof of this proposition and derivations for other criteria are available
in \cref{app:theoretical_giou_proof}. The theoretical expressions completely
agree with empirical results, which confirms the soundness of our simulations. 

Other criteria do not have closed forms for their first and second moments.
Nonetheless, we provide in \cref{tab:pdf_crit} their expressions keeping the
integrals as simple as possible, which allows relatively easy evaluation. In
addition, we provide the expression of the pdf for each criterion. The setup
remains identical as in \cref{prop:distribution_giou}, the boxes are only
horizontally shifted and have the same width $\omega$. For clarity, we also give
simple expressions of each criterion in such a setup (see \cref{tab:pdf_crit}).  

\begin{sidewaystable}[]
    \centering
    \resizebox{\textwidth}{!}{%
    \begin{tabular}{@{}ccccccccccccccccc@{}}
    \toprule[1.5pt]
                    & \textbf{Criterion Expression} & \textbf{Probability Density Function} & $\boldsymbol{\mathbb{E}[Z]}$ & $\boldsymbol{\mathbb{E}[Z^2]}$ \\ \midrule[1.5pt]
    \textbf{IoU}             & \parbox{0.25\textwidth}{\begin{align*}\mathfrak{C} \colon \mathbb{R} &\to [0,1] \\x &\mapsto \max\left(0, \frac{\omega- |x|}{\omega + |x|}\right)\end{align*}} &\parbox{0.25\textwidth}{\begin{align*}d_Z(z) = 2\bigg[(&1 - F_X(\omega\frac{1-z}{1+z}) \delta_0(z) \\&+ \mathds{1}_{\mathbb{R}_+}(z) \frac{4\omega}{(1+z)^2} d_X(\omega\frac{1-z}{1+z}))\bigg]\end{align*}  } &     \parbox{0.25\textwidth}{\begin{align*}\mathbb{E}[Z] = \frac{4}{\sqrt{2\pi}a}\int_{0}^{1} \frac{1}{1+u}e^{-\frac{u^2}{2a^2}}\,du  - \text{erf}(\frac{1}{\sqrt{2}a})\\\end{align*}} &     \parbox{0.25\textwidth}{\begin{align*}\mathbb{E}[Z^2]= \text{erf}(\frac{1}{\sqrt{2}a}) - \frac{8a}{\sqrt{2\pi}a}\bigg[&1- \frac{1}{2}e^{-\frac{1}{2a^2}}\\&-2\int_{0}^{1} \frac{1}{(1+u)^3}e^{-\frac{u^2}{2a^2}}\,du \bigg]\end{align*}} &\\\bottomrule[1pt]
    \textbf{GIoU}           & \parbox{0.25\textwidth}{\begin{align*}\mathfrak{C} \colon \mathbb{R} &\to [-1,1] \\x &\mapsto \frac{\omega- |x|}{\omega + |x|}\end{align*}} &\parbox{0.25\textwidth}{\begin{align*}d_Z(z) &= \frac{4\omega}{(1+z)^2\sqrt{2\pi} \sigma} \exp\left(-\frac{1}{2}\left[\frac{\omega(1-z)}{\sigma(1+z)}\right]^2\right)\end{align*}} &     \parbox{0.25\textwidth}{\begin{align*}\mathbb{E}[Z] &= \frac{2}{\pi^{3/2}} G^{2,3}_{3,2}\left(2a^2 \bigg\rvert \begin{matrix}0 & \frac{1}{2} & \frac{1}{2} \\ \frac{1}{2}& 0 \end{matrix}\right) \\\end{align*}} &     \parbox{0.25\textwidth}{\begin{align*}\mathbb{E}[Z^2] &= 1- \frac{8a}{\sqrt{2\pi}} + \frac{16a^2}{\pi^{3/2}} G^{2,3}_{3,2}\left(2a^2 \bigg\rvert \begin{matrix}-1 & \frac{1}{2} & -\frac{1}{2} \\ \frac{1}{2}  & 0 \end{matrix}\right)\end{align*}} &\\\bottomrule[1pt]
    \textbf{SIoU}            & \parbox{0.25\textwidth}{\begin{align*}\mathfrak{C} \colon \mathbb{R} &\to [0,1] \\x &\mapsto \begin{cases} \left(\frac{\omega- |x|}{\omega + |x|}\right)^p \,&\text{if  } \omega- |x| \geq 0\\ 0\,&\text{otherwise}\end{cases}\end{align*}} &\parbox{0.25\textwidth}{\begin{align*}d_Z(z) = 2\bigg[(&1 - F_X(\omega\frac{1-z^{1/p}}{1+z^{1/p}}) \delta_0(z) \\&+ \mathds{1}_{\mathbb{R}_+}(z) \frac{4\omega z^{1/p -1 }}{(1+z^{1/p})^2} d_X(\omega\frac{1-z^{1/p}}{1+z^{1/p}}))\bigg]\end{align*}} &     \parbox{0.25\textwidth}{\begin{align*}\mathbb{E}[Z] &= 2\omega\int_{0}^{1} \big(\frac{1-u}{1+u}\big)^p e^{-\frac{u^2}{2a^2}}\,du\end{align*}} &     \parbox{0.25\textwidth}{\begin{align*}\mathbb{E}[Z^2] &= 2\omega\int_{0}^{1} \big(\frac{1-u}{1+u}\big)^{2p} e^{-\frac{u^2}{2a^2}}\,du\end{align*}} &\\\bottomrule[1pt]
    \textbf{GSIoU}            & \parbox{0.25\textwidth}{\begin{align*}\mathfrak{C} \colon \mathbb{R} &\to [-1,1] \\x &\mapsto \begin{cases} \left(\frac{\omega- |x|}{\omega + |x|}\right)^p \,&\text{if  } \omega- |x| \geq 0\\ -\left(\frac{|x| - \omega}{\omega + |x|}\right)^p \,&\text{otherwise}\end{cases}\end{align*}} &\parbox{0.25\textwidth}{\begin{align*}d_Z(z) = 2\bigg[(&1 - F_X(\omega\frac{1-z^{1/p}}{1+z^{1/p}}) \delta_0(z) \\&+ \mathds{1}_{\mathbb{R}_+}(z) \frac{4\omega z^{1/p -1 }}{(1+z^{1/p})^2} d_X(\omega\frac{1-z^{1/p}}{1+z^{1/p}}))\\&-( 1 - F_X(\omega\frac{1+|z|^{1/p}}{1-|z|^{1/p}}) \delta_0(z)) \\&+ \mathds{1}_{\mathbb{R}_-}(z) \frac{4\omega |z|^{1/p -1 }}{(1-|z|^{1/p})^2} d_X(\omega\frac{1+|z|^{1/p}}{1-|z|^{1/p}}))\bigg]\end{align*}} &     \parbox{0.25\textwidth}{\begin{align*}\mathbb{E}[Z] = 2\omega\bigg[&\int_{0}^{1} \big(\frac{1 - u}{1 +u }\big)^p e^{-\frac{u^2}{2a^2}}\,du \\- &\int_{1}^{+\infty} \big(\frac{1-u}{1+u}\big)^p e^{-\frac{u^2}{2a^2}}\,du \bigg]\end{align*} } &     \parbox{0.25\textwidth}{\begin{align*}\mathbb{E}[Z^2] = 2\omega\bigg[&\int_{0}^{1} \big(\frac{1 - u}{1 +u  }\big)^{2p} e^{-\frac{u^2}{2a^2}}\,du \\&- \int_{1}^{+\infty} \big(\frac{1-u}{1+u}\big)^{2p} e^{-\frac{u^2}{2a^2}}\,du \bigg]\\\end{align*}} &\\\bottomrule[1.5pt]
                \end{tabular}%
    } \caption[Criteria statistical characteristics]{Criteria expression,
    probability distribution and first two moments for IoU, GIoU, SIoU, and
    GSIoU. These are valid for the comparison of same size square boxes of width
    $\omega$ randomly shifted horizontally. Random shifts are sampled from a
    centered Gaussian distribution of variance $\sigma^2$ and
    $a=\sigma/\omega$. $F_X$ and $d_X$ are respectively the cumulative and
    probability density function of $X$.}
    \label{tab:pdf_crit}
    \end{sidewaystable}

\subsection{Influence of $\gamma$ and $\kappa$ on SIoU and GSIoU}
\label{sec:gamma_kappa}

In the previous discussion, SIoU and GSIoU are parametrized with $\gamma=0.5$
and $\kappa=64$; however, these two parameters have an influence on the
distribution and moments of SIoU and GSIoU. First, following the analysis from
\cref{sec:empirical_analysis}, \cref{fig:influence_gamma_kappa} investigates the
influence of $\gamma$ and $\kappa$ on SIoU behavior. \cref{fig:gamma_influence}
shows the value of $p$, the expectation and variance of SIoU against object size
for $\gamma \in \{1.0, 0.75, 0.5, 0.1, 0.0, -0.25, -2, -4\}$. $p$ is a function
of the average area of the boxes ($\tau = \frac{w_1h_1 + w_2h_2}{2})$ and for
simplicity we suppose here that the boxes are squares of the same width
$\omega$, hence $\sqrt{\tau} = \omega$. Then $p$ can be viewed as a function of
$\omega$: $p(\omega) = 1 - \gamma\exp(-\omega / \kappa)$. For negative values of
$\gamma$, $p$ decreases from $p(0) = 1 - \gamma$ to 1, small objects get higher
exponents in comparison with larger objects. On the contrary, when $\gamma > 0$,
$p$ increases from $p(0) = 1 - \gamma$ to 1. Changing $\gamma$ also influences
the distribution of SIoU. As $\gamma$ increases, the expected value for small
objects increases as well, while the variance decreases. 

\begin{figure}
    \centering
    \begin{subfigure}[t]{\textwidth}
        \centering
        \includegraphics[width=\textwidth]{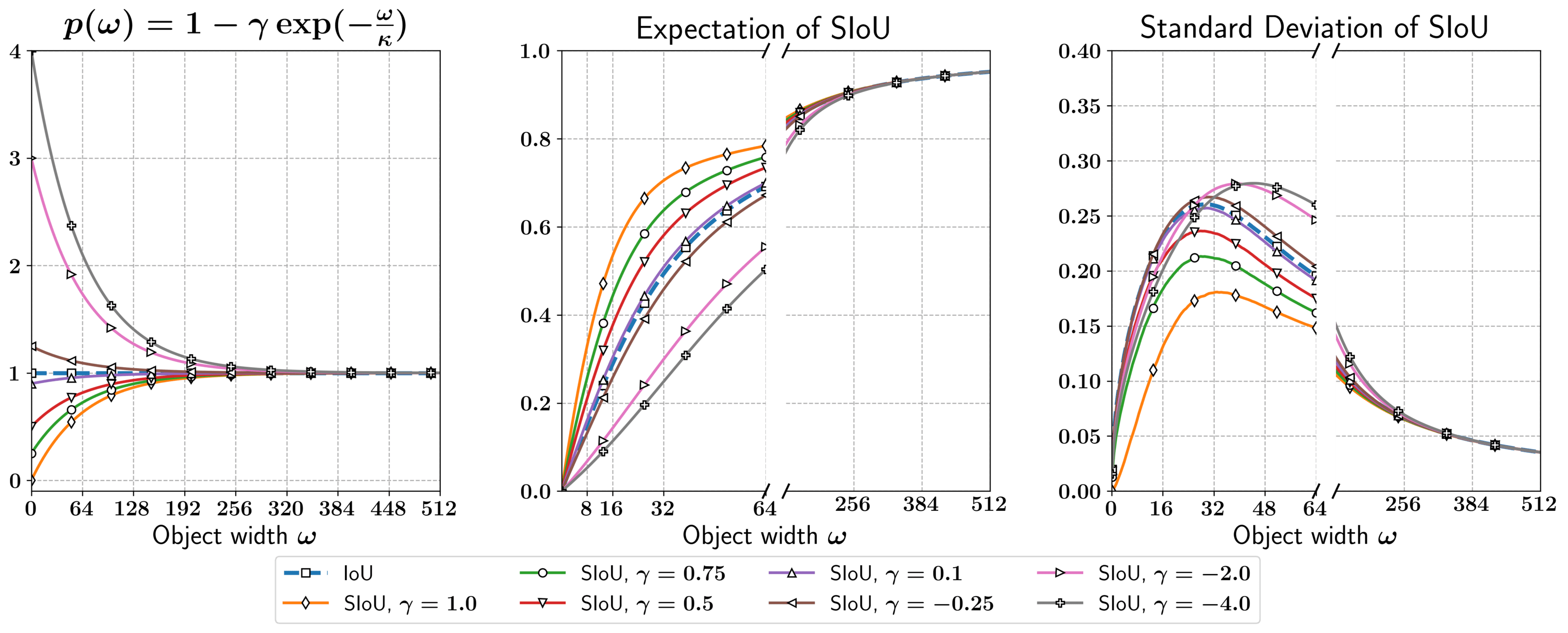}
        \caption{$\gamma$'s influence, with $\kappa=64$}
        \label{fig:gamma_influence}
    \end{subfigure}%
    
    \begin{subfigure}[b]{\textwidth}
        \centering
        \includegraphics[width=\textwidth]{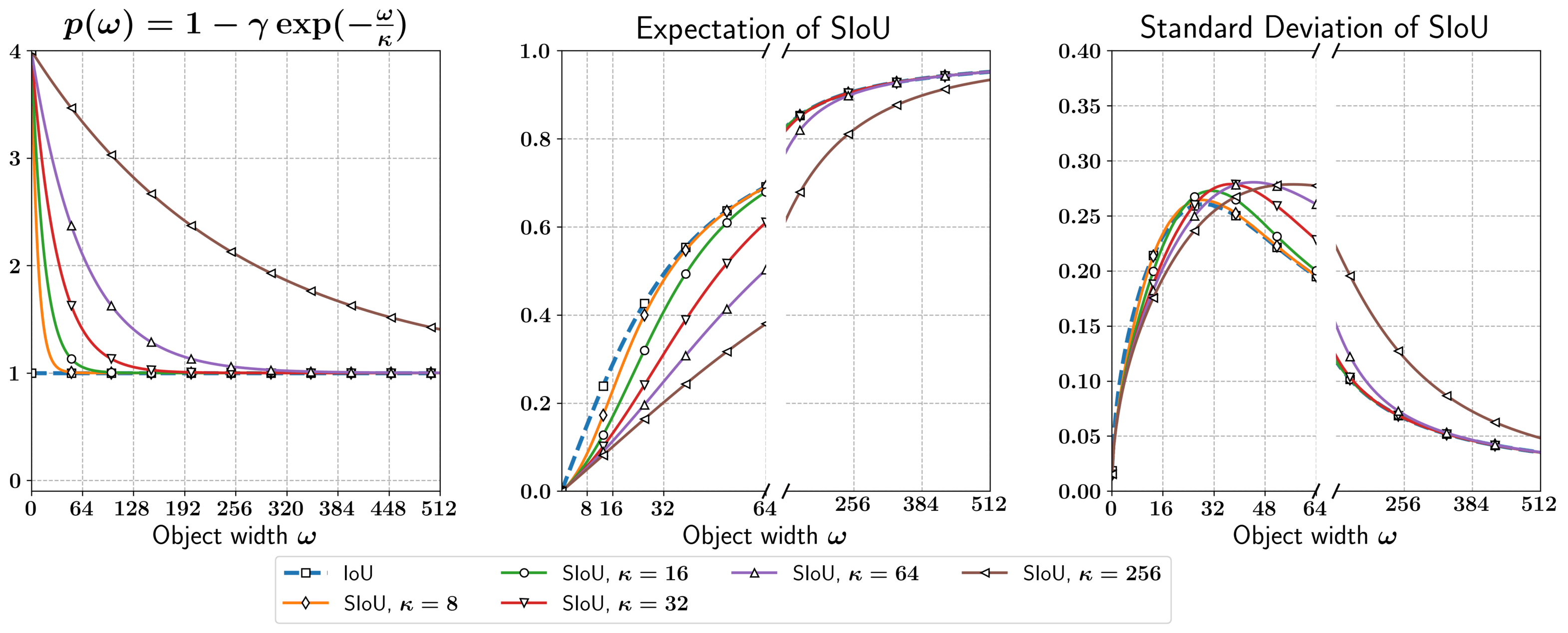}
        \caption{$\kappa$'s influence, with $\gamma=-3$}
        \label{fig:kappa_influence}
    \end{subfigure}
    \caption{Influence of $\gamma$ and $\kappa$ on the expected value and standard deviation of SIoU.}
    \label{fig:influence_gamma_kappa}
    \end{figure}

\cref{fig:kappa_influence} shows the same curves for $\kappa \in
\{8,16,32,64,256\}$. $\kappa$ controls how fast $p$ approaches 1 and
therefore, changing $\kappa$ simply shifts the curves of expectation and
variance accordingly. As $\kappa$ increases, IoU's behavior is retrieved for
larger objects reducing the expected value of SIoU. The variance is not changed
much by $\kappa$, but it slightly shifts the maximum of the curve, \ie the object
size for which SIoU's variance is maximum.

\cref{fig:criteria_analysis_4} also provides pdfs plots for various object
sizes for SIoU and GSIoU, in addition to expectation and variance comparison
between existing criteria. For this figure, $\gamma=-3$ and $\kappa=16$. This
figure echoes \cref{fig:criteria_analysis} which plots the same curves but with
$\gamma=0.5$ and $\kappa=64$.

\begin{figure}
    \centering
    \includegraphics[width=\textwidth]{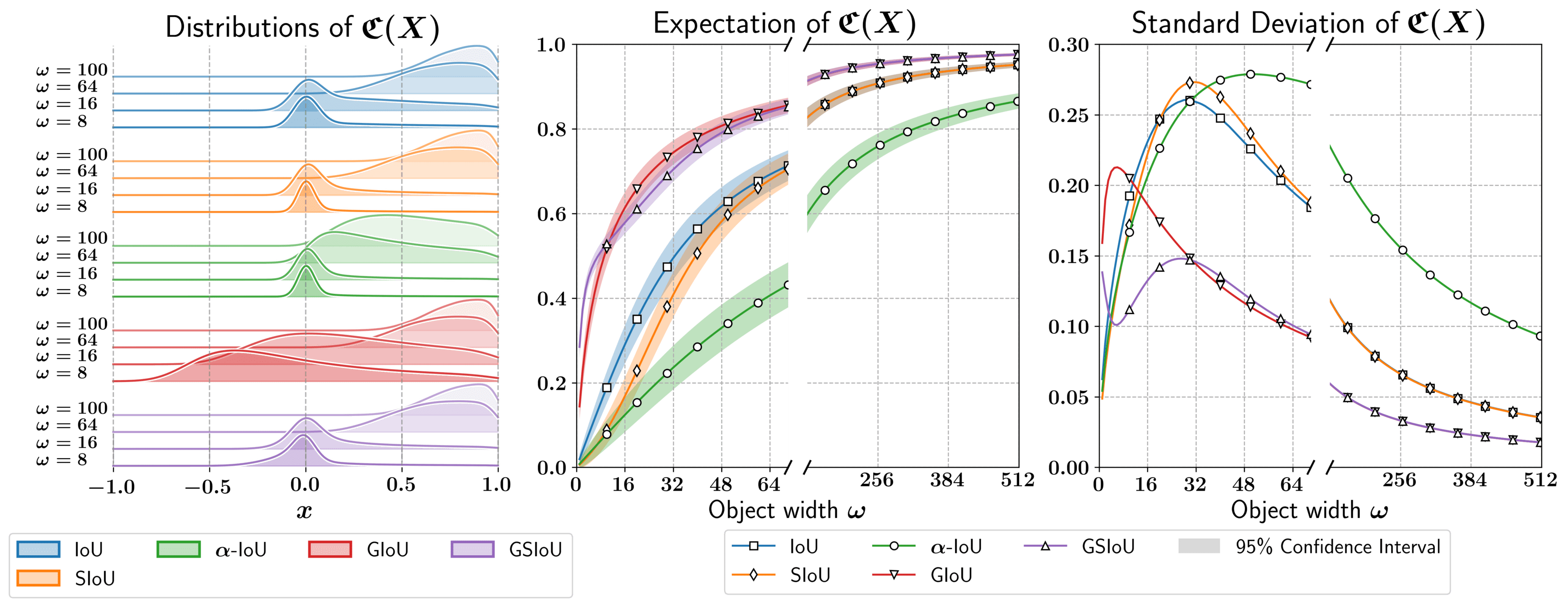}
    \caption[Criteria's distribution comparison ($\gamma=-4$ and $\kappa=16$ for
            SIoU and GSIoU)]{Analysis of the distribution of IoU, SIoU, GIoU,
            GSIoU and
            $\alpha$-IoU when computed on inaccurately positioned boxes. This is
            done by observing the probability distribution functions (pdfs) for various
            $\omega$ values \textbf{(left)}, the expectation \textbf{(middle)}
            and standard deviation \textbf{(right)} for all criteria. For SIoU
            and GSIoU, we fixed $\gamma=-4$ and $\kappa=16$, for $\alpha$-IoU,
            $\alpha=3$ (as recommended in the original paper
            \cite{he2021alpha}). The inaccuracy of the detector is set to
            $\sigma=16$. Note that the empirical pdfs were smoothed using a
            Kernel Density Estimator method. This affects particularly IoU, SIoU
            and $\alpha$-IoU as the actual PDF is defined only on $[0,1]$. For
            the sake of visualization, GIoU and GSIoU were rescaled between 0
            and 1 for the expectation and standard deviation plots. }
    \label{fig:criteria_analysis_4}
\end{figure}

\section{SIoU Alignment with Human Perception}
\label{sec:user_study}
\vspace{-1em}
As discussed in \cref{sec:empirical_analysis}, having an accurate criterion \ie
one with low variance, is crucial for evaluation. However, such a criterion must
also align with human perception. Most image processing models are destined to
assist human users. Thus, to maximize the usefulness of such models, the
evaluation process should align as closely as possible with human perception. To
assess the agreement between the criteria and human perception, we conducted a
user study in which participants had to rate on a 1 to 5 scale (\ie from
\textit{very poor} to \textit{very good}) how a bounding box localizes an object.
Specifically, an object is designated by a green ground truth box and a red box
is randomly sampled around the object (\ie with random IoU with the ground
truth). Then, the participants rate how well the red box localizes the object
within the green one. The study gathered 75 different participants and more than
3000 individual answers. We present here the main conclusion of this study.

\subsection{User Study Presentation}
\subsubsection{Experimental Protocol}
To carry out the user study about detection preferences, we developed a Web
App\footnote{The web app is available
\href{http://pierlj.pythonanywhere.com/}{here}.} to gather participants'
answers. Each participant had to sign in with a brief form. They are asked about
their age and whether they are familiar with image analysis. This information is
only meant to detect rating differences between different population groups (see
\cref{fig:hr_vs_crit}, last two rows). After completing the form, participants
are brought to the rating page (see \cref{fig:user_study_example}). On this
page, one image is visible with two bounding boxes drawn on it. A green one,
which represents the ground truth annotation of an object, and a red one
randomly shifted and deformed. Each participant must rate how well the red box
is detecting the object inside the green box. The rating is done on a 5-levels
scale, going from \textit{very poor} to \textit{very good}. A set of 50
different images is shown to each participant. After 25 images, the experiment
changes slightly: the background image is replaced by a completely black image.
This would remove any contextual bias coming from the variety of objects inside
the green box. We refer to the two phases of the experiment as respectively, the
phases with and without context. The red boxes are sampled around the green box,
but to enforce a uniform distribution of the IoU with the green box, a random
IoU value $u $ is first uniformly sampled between 0 and 1. Then, we randomly
generate a red box that has an IoU $u$ with the green box (direct box
sampling does not produce uniformly distributed IoU values). Participants are
instructed to answer quickly and are provided with examples for each possible
rating (see \cref{fig:user_study_example}). The images and the annotations are
randomly picked from the DOTA dataset.

\begin{figure}
    \centering
    \includegraphics[width=\textwidth]{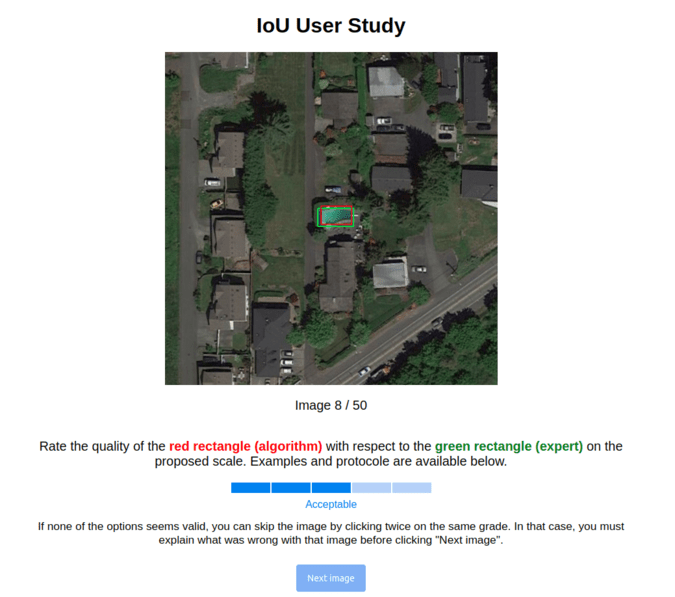}
    \caption[User study rating page]{Rating page of the Web App with an example
    of an image and two rectangular bounding boxes. The user is asked to rate the
    quality of the red box compared to the green one on a 5-levels scale
    going from \textit{very poor} to \textit{very good}.}
    \label{fig:user_study_example}
\end{figure}

\begin{figure}
    \centering
    \includegraphics[width=\textwidth]{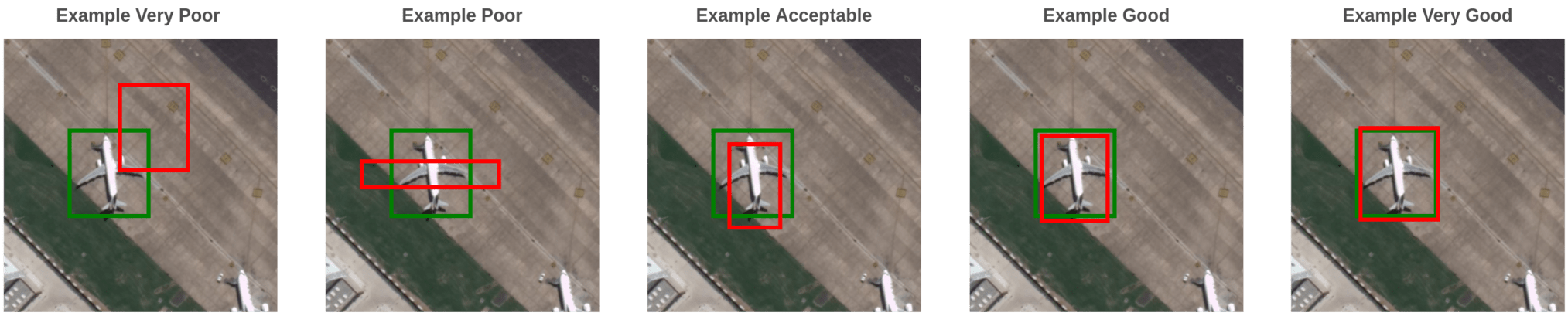}
    \caption[User study examples]{Examples given to the participants of the user
    study. The IoU between the green and red boxes are 0.1, 0.25, 0.5, 0.75, and
    0.9 for the ratings from very poor to very good respectively.}
    \label{fig:user_study_example}
\end{figure}

\subsubsection{General Statistics about Participants}
The study gathered 75 participants and a total of 3136 individual answers
(because some participants did not complete the entire experiment). The age of
the participants ranges from 21 to 64 years old with an average of 31.
Approximately half (37) of the participants are versed in computer vision
or image analysis, we will refer to this group as the \textit{expert group}. On
average, the response time is 10.3s per evaluated image during the first phase
of the experiment (when a background image is visible). It drops to 7.2s when
the image is replaced with a uniform background during the second phase. This
time difference suggests that humans do take into account the contextual
information of the image inside their decision-making process, which agrees with
the findings of \cite{strafforello2022humans}.

\subsection{User Study Insights} 
Human perception does not fully align with IoU. People tend to be more lenient
than IoU towards small objects. Specifically, comparing a small and a large box
with the same IoU with respect to their own ground truth, people will rate the
small one better. This suggests that IoU is too strict for small objects in
comparison with human perception. From a human perspective, precise localization
seems less important for small objects. \cref{fig:rating_vs_crit} represents the
relative gap of IoU (left) and SIoU (right) values for each object size and
rating. The relative differences $c_{s,r}$ are computed against the average IoU (or SIoU)
value per rating: 
\begin{equation}
    c_{s,r} =
\frac{\mathfrak{C}_{s,r} -
\sum\limits_s\mathfrak{C}_{s,r}}{\sum\limits_s\mathfrak{C}_{s,r}},
\end{equation}
\noindent
where $\mathfrak{C}_{s,r}$ is the average criterion value ($\mathfrak{C} \in
\{\text{IoU}, \text{SIoU}\}$) for objects of size $s$ and rating $r$. IoU values
for small objects (in orange) are lower than for large objects (in red) for all
rating $r$. For a human to give a rating $r$ to a box, it requires that a box
overlaps less with the ground truth (according to IoU) if the boxes are small.
SIoU compensates for this trend (see \cref{fig:rating_vs_crit}). 

However, according to SIoU, the same overlapping value triggers the same human
rating no matter the object sizes (see \cref{fig:rating_vs_crit}). Or at least
the required criterion value gap between small and large objects is reduced. We
can observe that it is more difficult to compensate the gaps for higher rating
values (especially with $r=4$). This means that human appraisal is not
identically in favor of small objects. Instead, it seems more and more in favor
of the small objects until the boxes are nearly identical (rating $r=5$). It
would be relevant to extend SIoU further to take this into account and achieve
even better alignment with human perception. Anyway, SIoU is much better aligned
than IoU with the human rating as for a rating $r$, the difference of SIoU for
small and large objects is below 5\% while with IoU, it is always above 15\%.
This means that with IoU, if we have two predicted boxes for one ground truth,
the IoUs of the predicted boxes with the ground truth can vary from about 15\%
without changing the human rating. Such a difference in SIoU would result in
different human ratings for the two predicted boxes. This phenomenon is much
more problematic with $\alpha$-IoU and NWD. This makes them poor choices for
the evaluation process as they are poorly aligned with human perception.

Similar charts are available in \cref{fig:rating_vs_crit_siou_gamma}, only with SIoU
but with various values of $\gamma$. In the previous paragraph, we set
$\gamma=0.2$. Choosing higher $\gamma$ values would reverse the trend and
produce a criterion even more lenient than humans for small objects. It will
also decrease further SIoU's variance. However, this setting has been chosen to
maximize the alignment with human perception. SIoU with $\gamma=0.2$ correlates
better with human rating compared with other criteria. As the rating is an
ordered categorical variable, we use the Kendall rank correlation to make the
comparison. The correlation between the human rating $r$ and each criterion can
be found in \cref{tab:correlation}. SIoU with $\gamma=0.2$ and $\kappa=64$
aligns best with human perception and has a low variance (see
\cref{sec:gamma_kappa}). This showcases the superiority of SIoU over existing
criteria. It should be preferred over IoU to assess the performance of models on
all visual tasks that employ IoU within their evaluation process. It
supports recent findings that show misalignment between IoU and human preference
\cite{strafforello2022humans}.

\begin{figure}[t]
    \centering
    \includegraphics[width=\textwidth]{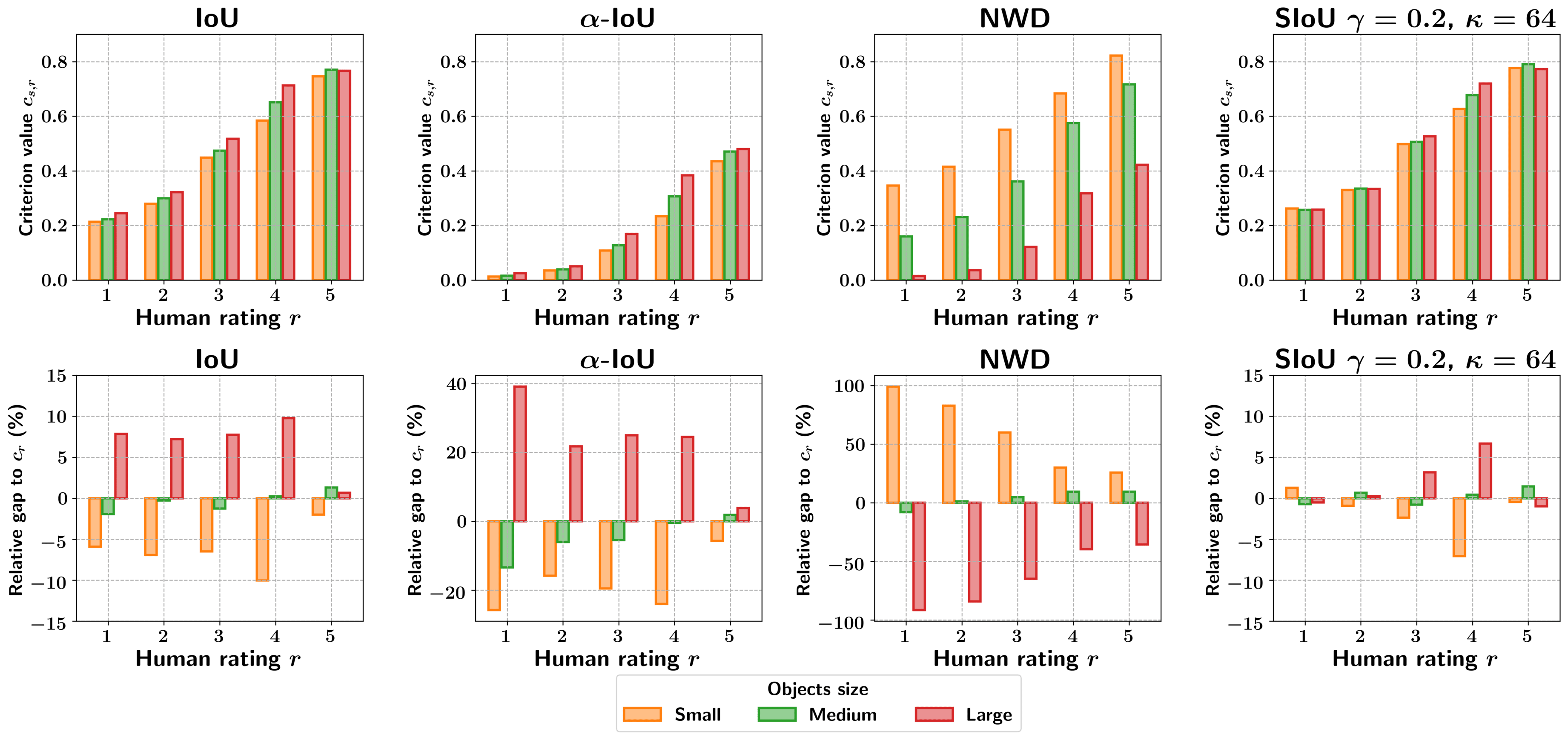}
    \caption[Criteria' alignment with human ratings]{Criteria' scores for different object sizes and human ratings $r
    \in \{1,2,3,4,5\}$ (\textbf{top}). Relative gap with the criterion value
    averaged over the object sizes ($c_{r} = 1/3(c_{S,r} + c_{M,r} + c_{L,r})$) (\textbf{bottom}).}
    \label{fig:rating_vs_crit}
\end{figure}

\begin{figure}[t]
    \centering
    \includegraphics[width=\textwidth]{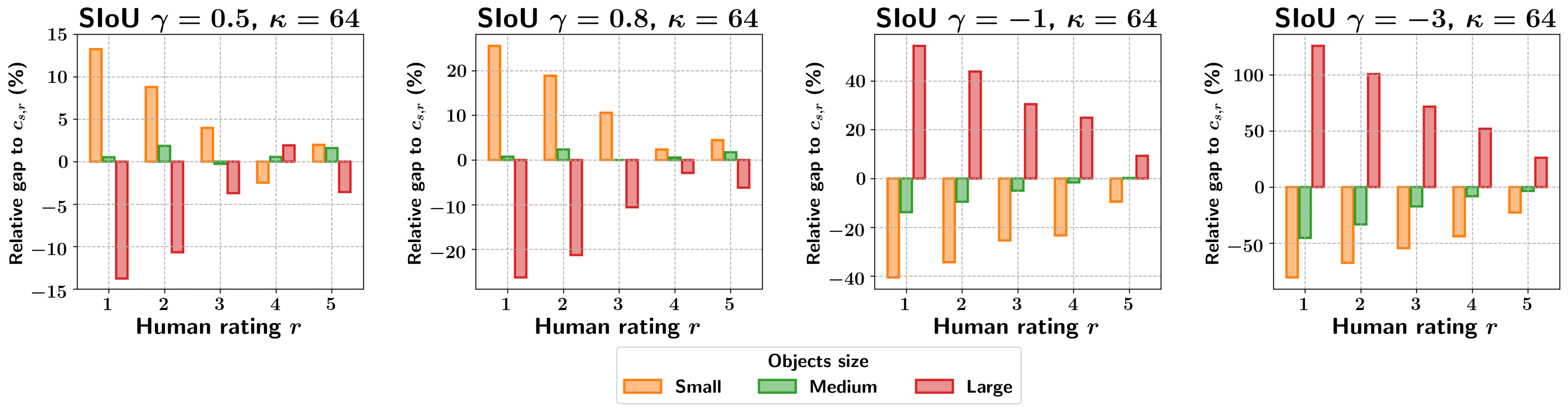}
    \caption[SIoU alignment with various $\gamma$ values]{Relative gap with the SIoU values averaged over the object sizes,
    for various $\gamma$ values.}
    \label{fig:rating_vs_crit_siou_gamma}
\end{figure}

\begin{table}[]
    \centering
    \resizebox{0.5\textwidth}{!}{%
    \begin{tabular}{@{\hspace{2mm}}cccccc@{\hspace{2mm}}}
    \toprule[1pt]
    \textbf{}                            &$\boldsymbol{r}$ & \textbf{IoU}  & \textbf{SIoU}       & $\boldsymbol{\alpha}$\textbf{-IoU} & \textbf{NWD} \\ \bottomrule
    $\boldsymbol{r}$                     & 1.000           & 0.674         & \textbf{0.701}      & 0.674                              & 0.550        \\ \midrule
    \textbf{IoU}                         & 0.674           & 1.000         & 0.892               & 0.997                              & 0.474        \\
    \textbf{SIoU}                        & 0.701           & 0.892         & 1.000               & 0.892                              & 0.576        \\
    $\boldsymbol{\alpha}$\textbf{-IoU}   & 0.674           & 0.997         & 0.892               & 1.000                              & 0.472        \\
    \textbf{NWD}                         & 0.550           & 0.474         & 0.576               & 0.472                              & 1.000        \\ \bottomrule[1pt]
    \end{tabular}%
    }
    \caption[Criteria' correlation with human rating]{Kendall's $\tau$
    correlation between various criteria and human rating $r$. For SIoU,
    $\gamma=0.2$ and $\kappa=64$, for $\alpha$-IoU, $\alpha=3$.}
    \label{tab:correlation}
\end{table}

\subsubsection{Factor Analysis}
To validate our previous experiments, we investigate the potential
influence of external factors on human ratings which could bias the results.
Specifically, we are interested here in several variables: the object size, the
presence of contextual information, the expertise, and the age of the
participants. Of course, the object size seems to have an influence on the human
rating (according to the previous section), but the idea is to show it
quantitatively. To this end, \cref{tab:avg_rating} gathers the average rating
$r$ under different groupings (by object size, presence of contextual
information, the expertise of the participants, and age of the participants). In
addition, the average value for each criterion is given for each group. IoU
value is close to 0.5 for every group as expected (boxes were chosen to have a
uniform IoU distribution). However, values of other criteria vary from one group
to another. This is especially true for scale-dependent criteria (SIoU and NWD)
on different object size groups. To check whether the different groups are
statistically different, we conducted one-way ANOVA tests on the four variables
from \cref{tab:avg_rating}. The results confirm that the mean ratings for
various object sizes are statistically different ($p<8.4\times 10^{-26}$
\footnote{$p$ stands here for the $p$-value of the statistical test here, not
the exponent from SIoU's definition.}). The tests find no statistical differences
for the participant expertise ($p<0.47$), and the presence of contextual
information ($p<0.28$). However, there is a significant difference between age
groups ($p<0.02$), but its influence on the ratings is limited compared
to object sizes. This confirms values inside \cref{tab:avg_rating} as the older
age groups tend to give lower ratings. Our goal here is not to infer anything
about the reasons for this difference, but this fact should be kept in mind
before drawing any conclusion. In addition, this suggests the need of distinct
alignments given what population group is the end user of a system. Having a
parameterizable evaluation process (\eg with SIoU) could help design models that
better satisfy their users.

\begin{table}[]
    \centering
    \resizebox{0.70\textwidth}{!}{%
    \begin{tabular}{@{\hspace{2mm}}ccccccc@{\hspace{2mm}}}
    \toprule[1pt]
    \textbf{}                             & \textbf{}     & $\boldsymbol{r}$ & \textbf{IoU} & \textbf{SIoU} & \textbf{NWD} & $\boldsymbol \alpha$\textbf{-IoU} \\ \midrule[1pt]
    \multirow{3}{*}{\textbf{Object size}} & Small         & 3.406                            & 0.507                             & 0.550                            & 0.610                                   & 0.203                \\
                                          & Medium        & 3.158                            & 0.502                             & 0.532                            & 0.424                                   & 0.199                \\
                                          & Large         & 2.824                            & 0.491                             & 0.500                            & 0.151                                   & 0.189                \\\midrule
    \multirow{2}{*}{\textbf{Context}}     & w/o context   & 3.144                            & 0.504                             & 0.531                            & 0.397                                   & 0.197                \\
                                          & w/ context    & 3.112                            & 0.496                             & 0.523                            & 0.390                                   & 0.197                \\\midrule
    \multirow{2}{*}{\textbf{Expertise}}   & Inexperienced & 3.104                            & 0.493                             & 0.520                            & 0.392                                   & 0.194                \\
                                          & Expert        & 3.152                            & 0.507                             & 0.535                            & 0.395                                   & 0.200                \\\midrule
    \multirow{3}{*}{\textbf{Age}}         & (10, 25]      & 3.215                            & 0.504                             & 0.531                            & 0.397                                   & 0.196                \\
                                          & (25, 40]      & 3.078                            & 0.496                             & 0.524                            & 0.390                                   & 0.198                \\
                                          & (40, 65]      & 3.085                            & 0.501                             & 0.529                            & 0.394                                   & 0.197                \\ \bottomrule[1pt]
    \end{tabular}%
    }
    \caption[Influence of external variables on human rating]{Average rating and criteria values for different groupings of the
    variables of interest (object size, presence of contextual information,
    expertise and age of the participants).}
    \label{tab:avg_rating}
    \end{table}

To visualize better the alignment of the various criteria with the human
perception, \cref{fig:hr_vs_crit} plots the rating values against the criteria
value. For clarity, random vertical shifts are added to rating
values to distinguish between the values of each variable and data
points. From this figure, it is clear that the IoU is not a perfect criterion as
a wide range of IoU values is attributed to the same rating value. It seems also
clear that contextual information and participant expertise do not introduce
much change in the human rating. However, age does have a small influence on the
rating, but this is more blatant with the object size: the average IoU value for
a rating $r$ decreases with the object size (this is visible with the black
vertical lines in \cref{fig:rating_vs_crit}). This completely agrees with the
statistical test results. SIoU compensates for this trend and produces more
aligned averages for the different object sizes. NWD has the same effect but
largely reverses the trend in the other direction. 

\begin{figure}
    \centering
    \includegraphics[width=\textwidth]{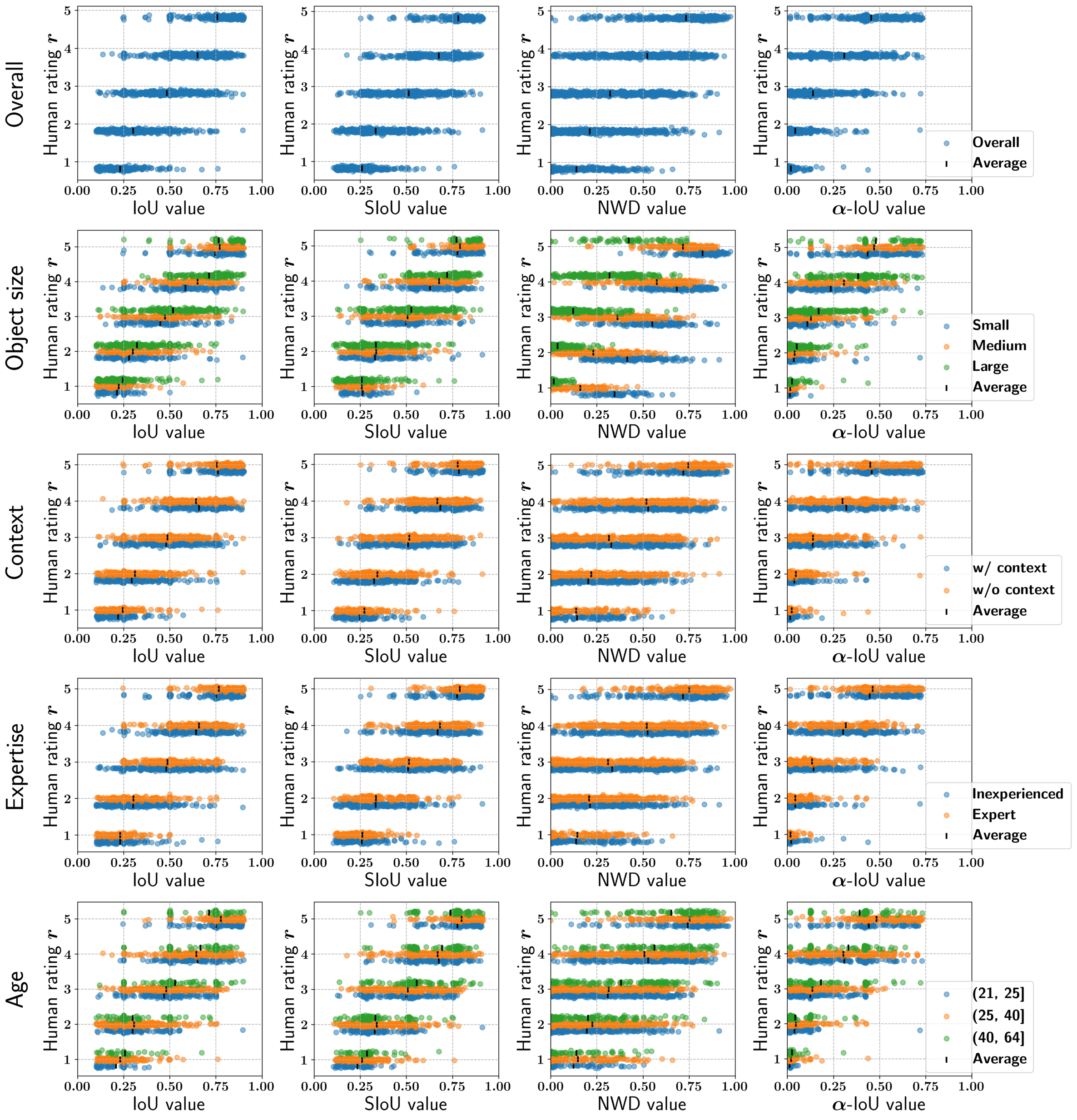}
    \caption[User study factor analysis]{Rating against IoU, SIoU ($\gamma=0.2$, $\kappa=64$), NWD and
    $\alpha$-IoU ($\alpha=3$) values, overall and for different groupings of the
    variables of interest (object size, presence of contextual information,
    expertise and age of the participants). Colors represent different values for
    each variable. A legend for each row is included in the right-most column
    of the figure. For the \textit{Age} variable, the participants have been
    separated into three groups of the same size. }
    \label{fig:hr_vs_crit}
    \vspace{2cm}
\end{figure}

\begin{table}[h]
    \centering
    \resizebox{0.5\columnwidth}{!}{%
    \begin{tabular}{@{\hspace{2mm}}cccccc@{\hspace{2mm}}}
    \toprule[1pt]
    \textbf{}                        & $\boldsymbol{r}$ & \textbf{IoU} & \textbf{SIoU} & \textbf{NWD} & $\boldsymbol \alpha$\textbf{-IoU} \\ \midrule[1pt]
    \multirow{5}{*}{\parbox{2cm}{\textbf{Small \\Objects}}}  & 1         & 0.214        & 0.262         & 0.346        & 0.013               \\
                                     & 2         & 0.279        & 0.330         & 0.415        & 0.035               \\
                                     & 3         & 0.449        & 0.498         & 0.551        & 0.108               \\
                                     & 4         & 0.584        & 0.627         & 0.683        & 0.234               \\
                                     & 5         & 0.746        & 0.776         & 0.822        & 0.435               \\ \midrule
    \multirow{5}{*}{\parbox{2cm}{\textbf{Medium \\Objects}}} & 1         & 0.223        & 0.257         & 0.160        & 0.016               \\
                                     & 2         & 0.299        & 0.335         & 0.230        & 0.039               \\
                                     & 3         & 0.474        & 0.506         & 0.361        & 0.127               \\
                                     & 4         & 0.651        & 0.677         & 0.575        & 0.306               \\
                                     & 5         & 0.771        & 0.791         & 0.716        & 0.470               \\\midrule
    \multirow{5}{*}{\parbox{2cm}{\textbf{Large \\Objects}}}  & 1         & 0.245        & 0.258         & 0.015        & 0.025               \\
                                     & 2         & 0.322        & 0.334         & 0.036        & 0.051               \\
                                     & 3         & 0.517        & 0.527         & 0.121        & 0.168               \\
                                     & 4         & 0.713        & 0.720         & 0.318        & 0.383               \\
                                     & 5         & 0.766        & 0.772         & 0.422        & 0.480               \\\bottomrule[1pt]
    \end{tabular}%
    } \caption[Average criteria values for various object size]{Average criteria
    (IoU, SIoU, NWD and $\alpha$-IoU) values for different object sizes and
     ratings.}
    \label{tab:avg_iou_object_size}
    \end{table}

\section{Experimental Results}
\label{sec:results}
\vspace{-1.5em}
To support our analysis from \cref{sec:criterion_analysis}, we conduct various
experiments, mainly on aerial images with DOTA and DIOR datasets. To showcase
the versatility of SIoU, we also experiment with natural images on Pascal VOC
and COCO datasets. We would like to emphasize that the goal of SIoU, as a loss, is
primarily to improve the performance of FSOD and not regular object detection.
While SIoU is beneficial for any detection task for evaluation purposes, it is
designed to address the extreme challenge of detecting small objects in the
few-shot regime. Therefore, most of our experiments focus on the few-shot
setting. However, we also report results in regular object detection to display
the potential of SIoU. For the few-shot experiments, we choose Cross-Scale Query
Support Alignment (XQSA) (see \cref{sec:xscale}) as a baseline, but a comparison
with the other attention methods from \cref{chap:aaf} is available in
\cref{sec:fs_experiment_siou}. 

\subsection{Comparison with Existing Criteria}
To begin, we compare the few-shot performance on DOTA with various loss
functions based on the criteria discussed in \cref{sec:criterion_analysis}. The
result of these experiments is available in \cref{tab:criterion_comparison}. The
criteria are divided into two groups, generalized (\ie which is not 0 when boxes
do not overlap; it therefore includes NWD) and vanilla criteria. As discussed
in \cref{sec:empirical_analysis}, the generalized versions of the criteria
outperform their original counterparts and therefore should be compared
separately. Scale-adaptive criteria (SIoU and GSIoU) largely outperform other
losses on novel classes and especially on small objects. For SIoU and GSIoU, we
choose $\gamma=-3$ and $\kappa=16$ according to a series of experiments
conducted on DOTA to determine their optimal values (see
\cref{sec:gamma_kappa}). It is important to point out the relatively good
performance of NWD despite not checking all the desirable properties highlighted
in \cref{sec:criterion_analysis}.

\begin{table}[]
    \centering
    \resizebox{0.75\textwidth}{!}{%
    \begin{tabular}{@{\hspace{2mm}}cccccccccc@{\hspace{2mm}}}
    \toprule[1pt]
    \multicolumn{1}{l}{\textbf{}} & \multicolumn{4}{c}{\textbf{Base classes}}          & & \multicolumn{4}{c}{\textbf{Novel Classes}}          \\ 
    \multicolumn{1}{l}{\textbf{Loss}} & \textbf{All} & \textbf{S} & \textbf{M} & \textbf{L}& & \textbf{All} & \textbf{S} & \textbf{M} & \textbf{L} \\ \midrule[1pt]
    \textbf{IoU}                  & 50.67        & \bbf{25.83}      & 57.49      & 68.24     & & 32.41        & 10.06      & 47.87      & 67.09      \\
    $\alpha$\textbf{-IoU}         & 46.72        & 13.24      & 55.21      & \bbf{69.94}     & & 33.95        & 12.58      & 46.58      & \rbf{74.50}      \\ 
    \textbf{SIoU}                 & \bbf{53.62}        & 24.07      & \bbf{61.91}      & 67.34     & & \rbf{39.05}        & \rbf{16.59}      & \rbf{54.42}      & \rbf{74.49}      \\ \midrule
    \textbf{NWD}                  & 50.79        & 19.19      & 58.90      & 67.90     & & 41.65        & 28.26      & 50.16      & 65.06      \\
    \textbf{GIoU}                 & 52.41        & \bbf{26.94}      & 61.17      & 63.00     & & 41.03        & 24.01      & \rbf{52.13}      & 69.78      \\
    \textbf{GSIoU}                & \bbf{52.91}        & 22.14      & \bbf{61.19}      & \bbf{66.02}     & & \rbf{45.88}        & \rbf{34.83}      & 51.26      & \rbf{70.78}      \\ \bottomrule[1pt]
    \end{tabular}%
    }
    \caption[Experimental comparison of SIoU with other criteria]{Few-shot performance comparison between several criteria: IoU,
    $\alpha$-IoU, SIoU, NWD, GIoU, and GSIoU trained on DOTA. mAP is reported
    with a 0.5 IoU threshold for small (S), medium (M), large (L), and all objects.}
    \label{tab:criterion_comparison}
    \vspace{-0.5em}
    \end{table}

\subsection{Application on Aerial and Natural Images}
As the previous set of experiments was only carried out on DOTA, we showcase the
versatility of GSIoU on three other datasets: DIOR, Pascal VOC and COCO. As it
is clear that generalized criteria achieve higher performance, the comparison
here is only done between GIoU and GSIoU. The results of this comparison are
available in \cref{tab:siou_dataset_comparison}.

The large improvements found for DOTA translate to DIOR as well, especially for
small objects. For Pascal VOC and COCO, similar gains are observed for small
objects, but the improvement is limited overall (\ie disregarding the object
size) as natural datasets contain larger objects. It is worth mentioning that
these improvements on Pascal VOC and COCO require a different tuning of SIoU.
$\gamma=-3$ and $\kappa=16$ produce mitigated results with these datasets, and
$\gamma=-1$ and $\kappa=64$ is a more sensible choice. This was predictable as
the objects in Pascal VOC and COCO are substantially larger than in DOTA and
DIOR. This can also explain the slightly smaller gains on DIOR compared to DOTA.
Finding optimal values of $\gamma$ and $\kappa$ could yield slightly better
performance on DIOR. The right balance depends on the proportion of small,
medium and large objects in the datasets. With natural images which contain
fewer small objects, the training balance does not need to be shifted as much as
for aerial images. 

Obviously, one may see this as a constraint: SIoU
introduces two novel hyper-parameters and their values change depending on which
dataset is used. However, the tuning of SIoU is straightforward, as lower values
of $\gamma$ and $\kappa$ skew the training towards smaller objects, and in
practice few experiments are enough to find near-optimal values. Given the
impressive gains obtained on small objects in the few-shot regime, it is
certainly worth it. Clearly, it would be even better to extend SIoU to be
parameter-free or to have a pre-defined way of setting $\gamma$ and $\kappa$,
for instance, based on the object size distribution inside a dataset.

\begin{table}[]
    \centering
    \resizebox{0.8\textwidth}{!}{%
    \begin{tabular}{@{\hspace{2mm}}ccccccccccc@{\hspace{2mm}}}
    \toprule[1pt]
    \textbf{}                        & \textbf{} & \multicolumn{4}{c}{\textbf{Base classes}}          & & \multicolumn{4}{c}{\textbf{Novel Classes}} \\ 
    \textbf{}                        & \textbf{XQSA} & \textbf{All} & \textbf{S} & \textbf{M} & \textbf{L}& & \textbf{All}  & \textbf{S}  & M     & L    \\\midrule[1pt]
    \multirow{2}{*}{\textbf{DOTA}}   &  w/ GIoU      & 52.41        & \bbf{26.94}      & 61.17      & 63.00     & & 41.03          & 24.01         & \rbf{52.13}  & 69.78 \\
                                     &  w/ GSIoU     & \bbf{52.91}        & 22.14      & \bbf{61.19}      & \bbf{66.02}     & & \rbf{45.88}          & \rbf{34.83}         & 51.26  & \rbf{70.78} \\ \midrule
    \multirow{2}{*}{\textbf{DIOR}}   &  w/ GIoU      & 58.90        & 10.38      & 40.76      & 80.44     & & 47.93          & 9.85          & 47.61  & 68.40 \\
                                     &  w/ GSIoU     & \bbf{60.29}        & \bbf{11.28}      & \bbf{43.24}      & \bbf{81.63}     & & \rbf{52.85}          & \rbf{13.78}         & \rbf{53.73}  & \rbf{71.22} \\ \midrule
    \multirow{2}{*}{\textbf{Pascal}} &  w/ GIoU      & 51.09        & \bbf{13.93}      & \bbf{40.26}      & 62.01     & & 48.42          & 18.44         & 36.06  & 59.99 \\
                                     &  w/ GSIoU     & \bbf{54.47}        & 13.88      & 40.13      & \bbf{66.82}     & & \rbf{55.16}          & \rbf{22.94}         & \rbf{36.24}  & \rbf{67.40} \\ \midrule
    \multirow{2}{*}{\textbf{COCO}}&  w/ GIoU      & 19.15        & \bbf{8.72}     & 22.50      & 30.59                        & & 26.25          & 11.96         & 23.95  & 38.60 \\
                                     &  w/ GSIoU     & \bbf{19.57}        & 8.41      & \bbf{23.02}      & \bbf{31.07}     & & \rbf{27.11}          & \rbf{12.81}         & \rbf{26.02}  & \rbf{39.20} \\ \bottomrule[1pt]
    \end{tabular}%
    } \caption[GSIoU Loss performance on DOTA, DIOR, Pascal VOC and
    COCO]{Few-shot performance on four datasets: DOTA, DIOR, Pascal VOC and
    COCO. GIoU and GSIoU losses are compared. mAP is reported with a 0.5 IoU
    threshold and for all object sizes.}
    \vspace{-1.5em}
    \label{tab:siou_dataset_comparison}
    \end{table}

\subsection{$\gamma$ and $\kappa$ influence on FSOD Performance}
As discussed above, the setting of $\gamma$ and $\kappa$ is
crucial for training. Therefore, we conducted various experiments on DOTA to
find the best parameters for GSIoU loss. We intentionally include extreme values
of $\gamma$ and $\kappa$ to demonstrate the behavior of SIoU. The results can be
found in \cref{tab:gamma_perf,tab:kappa_perf}. This shows that the optimal
values for DOTA dataset are $\gamma=-3$ and $\kappa=8$. However, $\kappa=16$ is
also a good choice and is more consistent across datasets. Thus, we choose to
keep $\gamma=-3$ and $\kappa=16$ for other experiments. An exhaustive grid
search should be done to find even better settings. Our search was sparse and a
better combination of $\gamma$ and $\kappa$ probably exists, yet our sub-optimal
setup already yields significant improvement for small object detection in the
FS regime.
    
    \begin{table}[h]
        \centering
        \resizebox{0.73\textwidth}{!}{%
        \begin{tabular}{@{\hspace{2mm}}ccccccccc@{\hspace{2mm}}}
            \toprule[1pt]
            \multicolumn{1}{l}{}      & \multicolumn{4}{c}{\textbf{Base classes}}                         & \multicolumn{4}{c}{\textbf{Novel Classes}}                                                                 \\ \midrule
            $\bm{\gamma}$ & \textbf{All}   & \textbf{S}     & \textbf{M}     & \textbf{L}     & \textbf{All}   & \textbf{S}                  & \textbf{M}                   & \textbf{L}                   \\
            \textbf{0.5}                       & 47.09          & 21.29          & 54.67          & 65.48          & 30.50          & 8.83                        &44.97                         & 65.89 \\
            \textbf{0.25}                      & 45.94          & 21.60          & 54.39          & 63.40          & 30.96          & 12.53                       & 42.37                        & 64.14                        \\
            \textbf{0}                         & 52.41          & 26.94          & 61.17          & 63.00          & 41.03          & 24.01                       & 52.13                        & 69.78                        \\
            \textbf{-0.5}                      & 52.80          & \bbf{27.16} & 61.19          & 64.61          & 41.06          & 25.20                       & 50.18                        & \rbf{72.04}               \\
            \textbf{-1}                        & 53.03          & 23.20          & 61.53          & 66.68          & 42.77          & 27.55                       & \rbf{52.01}               & 70.76                        \\
            \textbf{-2}                        & \bbf{54.06} & 23.68          & \bbf{62.69} & 66.62          & 43.67          & 30.04                       & 51.69                        & 69.66                        \\
            \textbf{-3}                        & 52.91          & 22.14          & 61.19          & 66.02          & \rbf{45.88} & \rbf{34.83}              & 51.26                        & 70.78                        \\
            \textbf{-4}                        & 53.59          & 22.50          & 62.48          & 66.18          & 42.43          & 27.56                       & 51.79                        & 68.70                        \\
            \textbf{-9}                        & 53.11          & 20.98          & 62.13          & \bbf{67.00} & 42.63          & 30.53                       & 48.89                        & 68.62                        \\ \bottomrule[1pt]
            \end{tabular}%
        } \caption[Influence of $\gamma$ on the FSOD performance]{Evolution of
        the few-shot performance (XQSA with GSIoU loss) on DOTA for various
        values of $\gamma$ ($\kappa=16$ is fixed). mAP is reported with a 0.5 IoU
        threshold and for all object sizes.}
        \label{tab:gamma_perf}
        \end{table}

     \begin{table}[h]
        \centering
        \resizebox{0.73\textwidth}{!}{%
        \begin{tabular}{@{\hspace{2mm}}ccccccccc@{\hspace{2mm}}}
        \toprule[1pt]
        \multicolumn{1}{l}{\textbf{}} & \multicolumn{4}{c}{\textbf{Base classes}}           & \multicolumn{4}{c}{\textbf{Novel Classes}}          \\ \midrule
        \textbf{$\boldsymbol \kappa$}                & \textbf{All} & \textbf{S} & \textbf{M} & \textbf{L} & \textbf{All} & \textbf{S} & \textbf{M} & \textbf{L} \\
        \textbf{4}                    & 51.65        & 21.50      & 59.76      & 65.85      & 42.98        & 30.33      & 48.57      & \rbf{73.41}      \\
        \textbf{8}                    & 52.70        & 21.96      & 61.49      & 66.43      & \rbf{44.16}        & \rbf{31.35}      & 50.70      & 71.99      \\
        \textbf{16}                   & \bbf{54.06}        & \bbf{23.68}      & 62.69      & 66.62      & 43.67        & 30.04      & 51.69      & 69.66      \\
        \textbf{32}                   & 53.88        & 22.33      & \bbf{63.00}      & \bbf{67.35}      & 37.36        & 23.65      & 44.60      & 66.29      \\
        \textbf{64}                   & 52.82        & 21.79      & 61.46      & 66.77      & 43.68        & 29.43      & \rbf{52.47}      & 69.46      \\
        \textbf{128}                  & 53.42        & 21.73      & 62.90      & 66.75      & 41.32        & 26.85      & 49.40      & 70.38      \\ \bottomrule[1pt]
        \end{tabular}%
        } \caption[Influence of $\kappa$ on the FSOD performance]{Evolution of
        the few-shot performance (XQSA with GSIoU loss) for various values of
        $\kappa$ ($\gamma= -2$ is fixed).}
        \label{tab:kappa_perf}
        \vspace{-2mm}
        \end{table}

    \paragraph*{Influence of $\boldsymbol \gamma$ and $\boldsymbol \kappa$ on FSOD performance on Pascal VOC dataset}
    The search conducted above was carried out on DOTA dataset. It transposes nicely
    on DIOR dataset as well. Yet, these two datasets are similar. They
    both contain aerial images with some classes in common, but most importantly,
    the size of their objects are highly similar. When applied to different
    datasets, these results may not hold. For instance, Pascal VOC requires other
    combinations of $\gamma$ and $\kappa$ to outperform the training with GIoU. This is
    shown in \cref{tab:pascal_search}.

    \begin{table}[h]
        \centering
        \resizebox{0.88\textwidth}{!}{%
        \begin{tabular}{@{\hspace{2mm}}ccccccccc@{\hspace{2mm}}}
        \toprule[1pt]
                                        & \multicolumn{4}{c}{\textbf{Base Classes}} & \multicolumn{4}{c}{\textbf{Novel Classes}} \\ \midrule
        \textbf{Loss function}          & \textbf{All}    & \textbf{S}      & \textbf{M}      & \textbf{L}     & \textbf{All}    & \textbf{S}      & \textbf{M}      & \textbf{L}      \\
        \textbf{GIoU}                           & 51.09  & \bbf{13.93}  & 40.26  & 62.01 & 48.42  & 18.44  & 36.06  & 59.99  \\
        \textbf{GSIoU} $\boldsymbol \gamma=-3$, $\boldsymbol \kappa=16$  & 45.22  & 10.06  & 34.85  & 57.10 & 43.16  & 14.89  & 33.92  & 54.16  \\
        \textbf{GSIoU} $\boldsymbol \gamma=-1$, $\boldsymbol \kappa=64$  & 54.47  & 13.88  & 40.13  & 66.82 & 55.16  & \rbf{22.94}  & 36.24  & 67.40  \\
        \textbf{GSIoU} $\boldsymbol \gamma=0.5$, $\boldsymbol \kappa=64$ & \bbf{56.97}  & 13.88  & \bbf{40.75}  & \bbf{70.31} & \rbf{55.36}  & 20.25  & \rbf{36.85}  & \rbf{68.05}  \\ \bottomrule[1pt]
        \end{tabular}
        } \caption[Influence of $\gamma$ and $\kappa$ on Pascal VOC]{Few-shot
        performance on Pascal VOC dataset with different values of $\gamma$ and
        $\kappa$.}
        \label{tab:pascal_search}
        \vspace{-2mm}
        \end{table}
    
\subsection{Additional Experiments with GSIoU Loss}
\label{sec:experimental_results}

\subsubsection{Changing the Few-Shot Approach}
\label{sec:fs_experiment_siou}
To support the versatility of GSIoU, we also experiment with several few-shot
approaches. We selected three FSOD techniques that we implemented within the AAF
framework: Feature Reweighting \cite{kang2019few} (FRW),  Dual-Awareness Atention
\cite{chen2021should} (DANA) and our Cross-scale  Query-Support Alignment
(XQSA). We train all of them with GIoU and GSIoU as regression losses and
provide the results in \cref{tab:fsod_method_com}. Except for DANA, GSIoU
sensibly improves detection performance, especially for small objects. The
results with DANA are surprising, and it would be of great interest to
investigate the reasons behind this below-par performance. 
        
\begin{table}[ht]
    \centering
    \resizebox{0.85\textwidth}{!}{%
    \begin{tabular}{@{\hspace{2mm}}ccccccccccc@{\hspace{2mm}}}
    \toprule[1pt]
    \textbf{}                      & \textbf{} & \multicolumn{4}{c}{\textbf{Base classes}}          & & \multicolumn{4}{c}{\textbf{Novel Classes}}          \\ 
    \textbf{}                      & \textbf{XQSA} & \textbf{All} & \textbf{S} & \textbf{M} & \textbf{L}& & \textbf{All} & \textbf{S} & \textbf{M} & \textbf{L} \\ \midrule[1pt]
    \multirow{2}{*}{\textbf{FRW}}  & w/ GIoU      & \bbf{34.60}        & \bbf{16.15}      & \bbf{48.61}      & \bbf{59.00}     & & 32.00        & 15.29      & \rbf{44.50}      & 54.77      \\
                                    & w/ GSIoU     & 30.36        & 11.94      & 44.30      & 54.87     & & \rbf{32.94}        & \rbf{16.69}      & 42.87      & \rbf{62.64}      \\ \midrule
    \multirow{2}{*}{\textbf{DANA}} & w/ GIoU      & 48.09        & 27.34      & \bbf{66.06}      & \bbf{68.00}     & & \rbf{44.49}        & \rbf{30.10}      & 52.24      & 74.40      \\
                                    & w/ GSIoU     & \bbf{50.10}        & \bbf{32.19}      & 65.46      & 67.77     & & 41.40        & 21.07      & \rbf{54.80}      & \rbf{75.23}      \\ \midrule
    \multirow{2}{*}{\textbf{XQSA}} & w/ GIoU      & \bbf{45.30}        & \bbf{26.94}      & 61.17      & 63.00     & & 41.03        & 24.01      & \rbf{52.13}      & 69.78      \\
                                    & w/ GSIoU     & 43.42        & 22.14      & \bbf{61.19}      & \bbf{66.02}     & & \rbf{45.88}        & \rbf{34.83}      & 51.26      & \rbf{70.78}      \\ \bottomrule[1pt]
    \end{tabular}%
    } \caption[Application of SIoU loss with various FSOD methods]{Performance
    comparison with three different FSOD methods: Feature Reweighting
    \cite{kang2019few} (FRW), Dual Awareness Attention \cite{chen2021should}
    (DANA) and our Cross-scale Query-Support Alignment (XQSA), trained with GIoU
    and GSIoU. mAP is reported with a 0.5 \textbf{IoU threshold} for small (S),
    medium (M), large (L) and all objects.}
    \label{tab:fsod_method_com}
    \end{table}

\subsubsection{Regular Object Detection on DOTA and DIOR}
GSIoU is not only beneficial for FSOD, but it also improves the
performance of regular object detection methods. \cref{tab:regular_od} compares
the performance of FCOS \cite{tian2019fcos} trained on DOTA and DIOR with GIoU
and GSIoU. The same pattern is visible as we get better performance with GSIoU.
However, the gain for small objects is not as large as for FSOD. Nevertheless,
it suggests that other tasks relying on IoU could also benefit from GSIoU. 

Obviously, further experiments are required to showcase the superiority of
SIoU/GSIoU in regular data settings, especially with other detection frameworks
and datasets. We did investigate with YOLO and DiffusionDet, but we only
achieved mitigated results, even sometimes in favor of GIoU. One possibility
for this is the use of IoU in the example selection process. In most detection
frameworks, only the best predicted bounding boxes, according to the IoU, are
selected to compute the loss (this is detailed in \cref{sec:training_od} and
especially \cref{tab:od_matching_strat}). As mentioned previously, it would be
relevant to study the impact of SIoU on this process as well. However, FCOS, on
which the AAF framework is based, does not rely on IoU for the example
selection. Instead, it selects as positive examples all predicted boxes whose
center falls into a ground truth box. Hence, an IoU-based example selector
probably hinders the benefits of SIoU while the FCOS's selection is likely a
better match. This should be investigated in depth in future work. 

\begin{table}[]
    \centering
    \resizebox{0.65\columnwidth}{!}{%
    \begin{tabular}{@{\hspace{2mm}}cccccccccc@{\hspace{2mm}}}
    \toprule[1pt]
    \textbf{}      & \multicolumn{4}{c}{\textbf{DOTA}}                  & & \multicolumn{4}{c}{\textbf{DIOR}}          \\ 
    \textbf{FCOS}      & \textbf{All} & \textbf{S} & \textbf{M} & \textbf{L}& & \textbf{All} & \textbf{S} & \textbf{M} & L \\ \midrule
    \textbf{w/ GIoU}  & 34.9         & 17.4       & 36.6       & 43.3  & & 48.1        &10.1       &40.3       &63.2   \\
    \textbf{w/ GSIoU} & \textbf{36.8}         & \textbf{17.5}       & \textbf{40.4}       & \textbf{45.2}  & & \textbf{49.2}        & \textbf{11.0}      & \textbf{41.2}     &\textbf{66.1}   \\ \bottomrule[1pt]
    \end{tabular}%
    } \caption[SIoU for regular object detection]{Regular Object Detection
    performance on DOTA and DIOR datasets with GIoU and GSIoU ($\gamma=-3$ and
    $\kappa=16$) losses. mAP is computed with several IoU thresholds (0.5 to
    0.95) as it is commonly done in regular detection.}
    \label{tab:regular_od}
    \end{table}

\subsection{Evaluation with SIoU}
\label{sec:siou_eval}
In this section, we present some of the results reported in previous section using SIoU as
the evaluation criterion. Specifically, instead of choosing an IoU threshold to
decide if a box is a positive or negative detection, an SIoU threshold is
employed. For the sake of comparison, we kept the same thresholds as in
\cref{tab:criterion_comparison,tab:regular_od,tab:siou_dataset_comparison}, \ie 0.5
for Few-Shot methods and 0.5:0.95 for regular object detection. The results are
available in
\cref{tab:criteria_comp_siou,tab:dataset_comparison_siou,tab:regular_od_siou}.
The conclusions from \cref{sec:results} still hold, and the superiority of GSIoU
over other criteria is clear. However, a few changes are noticeable. First, SIoU loss
seems to perform better than IoU. This is expected since the model is directly
optimized to satisfy this criterion.
Then, when evaluated with SIoU, models trained with NWD perform well.
Indeed, NWD puts a lot of emphasis on size matching during
training, and less on position. Therefore, it is logical to observe better
performance compared to other losses when using SIoU as the evaluation criterion. 

One crucial point is that SIoU evaluation mostly changes the score for small
objects. SIoU behaves like IoU for large objects, therefore relatively small
changes are visible for medium and large objects. Overall, the scores are higher
than with IoU as the expected value of SIoU is higher than IoU. The important
point to note is that the gap between small and large objects performance is
reduced and aligns better with human perception.

\begin{table}[]
    \centering
    \resizebox{0.7\textwidth}{!}{%
    \begin{tabular}{@{\hspace{2mm}}cccccccccc@{\hspace{2mm}}}
    \toprule[1pt]
                       & \multicolumn{4}{c}{\textbf{Base classes}}          & & \multicolumn{4}{c}{\textbf{Novel Classes}}                           \\
    \textbf{Loss}      & \textbf{All} & \textbf{S} & \textbf{M} & \textbf{L}& & \textbf{All} & \textbf{S} & \textbf{M} & \textbf{L}  \\  \midrule[1pt]
    \textbf{IoU}       & 55.81        & 35.03      & 62.57      & 70.05     & & 39.10        & 18.58      & 53.93      & 68.83           \\
    \textbf{$\boldsymbol \alpha$-IoU} & 53.05        & 20.60      & 61.05     & \bbf{72.41}     & & 41.93        & 20.99      & 55.74   & 76.79         \\
    \textbf{SIoU}      & \bbf{59.77}        & 36.38      & 67.29      & 70.06     & & 49.51        & 31.06      & 62.53      & \rbf{77.24}           \\ \midrule
    \textbf{NWD}       & 58.80        & 34.16      & 66.81      & 70.05     & & 53.66        & 42.02      & 62.53      & 68.92           \\
    \textbf{GIoU}      & 59.27        & \bbf{44.07}      & \bbf{66.91}      & 65.46     & & 49.02        & 35.10      & 57.58      & 74.30           \\
    \textbf{GSIoU}     & 59.32        & 35.32      & 66.29      & 69.03     & & \rbf{57.70}        & \rbf{46.77}      & \rbf{65.56}      & 73.67           \\ \bottomrule[1pt]
    \end{tabular}%
    }
    \caption[Criteria comparison with SIoU threshold during evaluation]{Few-shot performance comparison between several criteria: IoU,
    $\alpha$-IoU, SIoU, NWD, GIoU and GSIoU trained on DOTA. mAP is reported
    with a 0.5 \textbf{SIoU threshold} for small (S), medium (M), large (L), and all objects.}
    \label{tab:criteria_comp_siou}
    \end{table}

\begin{table}[]
        \centering
        \resizebox{0.65\textwidth}{!}{%
        \begin{tabular}{@{\hspace{2mm}}cccccccccc@{\hspace{2mm}}}
        \toprule[1pt]
                                           & \multicolumn{4}{c}{\textbf{DOTA}}                                                                                           & \multicolumn{4}{c}{\textbf{DIOR}}                                                                                           \\ \midrule
        \textbf{FCOS}                          & \textbf{All}             & \multicolumn{1}{c}{\textbf{S}} & \multicolumn{1}{c}{\textbf{M}} & \multicolumn{1}{c}{\textbf{L}}& & \textbf{All}             & \multicolumn{1}{c}{\textbf{S}} & \multicolumn{1}{c}{\textbf{M}} & \multicolumn{1}{c}{\textbf{L}} \\
        \multicolumn{1}{c}{\textbf{w/ GIoU}}  & \multicolumn{1}{r}{43.9} & 27.4                           & 46.5                           & 47.2                          & & \multicolumn{1}{r}{54.5} & 17.6                           & 49.8                           & 66.4                           \\
        \multicolumn{1}{c}{\textbf{w/ GSIoU}} & \multicolumn{1}{r}{\textbf{45.4}} & \textbf{27.7}                           & \textbf{50.2}                           & \textbf{49.2}                          & & \multicolumn{1}{r}{\textbf{55.4}} & \textbf{18.0}                             & \textbf{50.1}                           & \textbf{69.2}                           \\ \bottomrule[1pt]
        \end{tabular}%
        } \caption[Regular object detection performance with SIoU threshold
        during evaluation]{Regular Object Detection performance on DOTA and DIOR
        datasets with GIoU and GSIoU ($\gamma=-3$ and $\kappa=16$) losses. mAP
        is computed with several \textbf{SIoU thresholds} (0.5 to 0.95) as it is
        commonly done in regular detection.}
        \label{tab:regular_od_siou}
        \end{table}

\begin{table}[]
    \centering
    \resizebox{0.85\textwidth}{!}{%
    \begin{tabular}{@{\hspace{2mm}}ccccccccccc@{\hspace{2mm}}}
    \toprule[1pt]
    \textbf{}                        & \textbf{} & \multicolumn{4}{c}{\textbf{Base classes}}          & & \multicolumn{4}{c}{\textbf{Novel Classes}} \\ 
    \textbf{}                        & \textbf{XQSA} & \textbf{All} & \textbf{S} & \textbf{M} & \textbf{L}& & \textbf{All}  & \textbf{S}  & M     & L    \\\midrule[1pt]
    \multirow{2}{*}{\textbf{DOTA}}   &  w/ GIoU      & 59.27        & \bbf{44.07}      & 66.91      & 65.46     & & 49.02        & 35.10      & 57.58      & \rbf{74.30}      \\
                                     &  w/ GSIoU     & \bbf{59.32} & 35.32      & \bbf{66.29}      & \bbf{69.03}     & & \rbf{57.70}        & \rbf{46.77}      & \rbf{65.56}      & 73.67      \\\midrule
    \multirow{2}{*}{\textbf{DIOR}}   &  w/ GIoU      & 62.06        & 17.49      & 45.55      & 82.22     & & 53.81        & 23.79      & 53.46      & 71.63      \\
                                     &  w/ GSIoU     & \bbf{63.81}        & \bbf{17.77}      & \bbf{49.62}      & \bbf{82.53}     & & \rbf{58.79}        & \rbf{25.60}      & \rbf{59.28}      & \rbf{73.78}      \\\midrule
    \multirow{2}{*}{\textbf{Pascal}} &  w/ GIoU      & 55.51        & 26.10      & \bbf{46.82}      & 64.31     & & 52.43        & 28.97      & 40.73      & 62.58      \\
                                     &  w/ GSIoU     & \bbf{58.74}        & \bbf{27.47}      & 46.56      & \bbf{68.93}     & & \rbf{58.92}        & \rbf{31.36}      & \rbf{41.65}      & \rbf{69.71}      \\ \midrule
    \multirow{2}{*}{\textbf{COCO}} &  w/ GIoU      & 21.46                 & 12.77               & 24.79               & 31.86              & & 29.21          & 17.36         & 27.62         & 40.05   \\
                                   &  w/ GSIoU     & \bbf{21.97}        & \bbf{12.80}      & \bbf{25.72}      & \bbf{32.35}     & & \rbf{29.94} & \rbf{18.87}& \rbf{29.93}& \rbf{40.47}   \\ \bottomrule[1pt]
    \end{tabular}%
    } \caption[Application on DOTA, DIOR, Pascal VOC and COCO with SIoU
    threshold during evaluation]{Few-shot performance on three datasets: DOTA,
    DIOR, Pascal VOC and COCO. GIoU and GSIoU losses are compared. mAP is
    reported with a 0.5 \textbf{SIoU threshold} and for various object sizes.}
    \label{tab:dataset_comparison_siou}
    \end{table}

\subsection{Discussions and Limitations}
As mentioned in \cref{sec:empirical_analysis} SIoU is a better choice for
performance analysis. However, as IoU is almost the only choice in literature
for evaluation, we must use it as well for a fair comparison with existing
works. Nonetheless,
\cref{tab:criteria_comp_siou,tab:dataset_comparison_siou,tab:regular_od_siou}
provide results from
\cref{tab:criterion_comparison,tab:regular_od,tab:siou_dataset_comparison} using
SIoU as the evaluation criterion in \cref{sec:siou_eval}. They agree with the
IoU evaluation and strengthen the conclusions of our experiments. While these
results are promising, we must emphasize a few limitations of SIoU and our
study. First, SIoU requires a slight tuning to get the best performance, even if
that tuning is quite straightforward and mostly depends on the size distribution
in the target images. SIoU allows being more lenient with small objects for
evaluation ($\gamma \geq 0$), and stricter for training ($\gamma \leq 0$) to
prioritize the detection of small targets. However, this is a small price to pay
compared to the performance gains obtained on aerial datasets and especially on
small objects. Second, another limitation is its application to regular object
detection. While this works relatively well with FCOS, it does not show
consistent results with other frameworks. It could likely be explained as, in
these frameworks, IoU plays a crucial role in the example selection and loss
computation. More investigation is required to answer this question. Similarly,
Non-Maximal Suppression leveraged IoU as well, and therefore, an implementation
of NMS with SIoU instead could also help greatly for the detection task,
especially for small objects. Finally, we discussed the alignment of SIoU with
human perception for evaluation, \ie inside the computation of the mAP. However,
recent works \cite{oksuz2018localization,jena2023beyond} question the soundness
of this metric and propose alternatives that are not necessarily based on IoU.
It would be relevant to study them as well and understand how well they align
with human perception in order to design more user-oriented detection models.

\section{Conclusion}
\vspace{-1em}
In this chapter, we highlighted the weaknesses of Intersection over Union both
for training and evaluating few-shot object detection models. As an alternative,
we proposed Scale-adaptative Intersection over Union (SIoU), a criterion that
changes with the object size. We performed an in-depth empirical and theoretical
study of several criteria and showed that SIoU has desirable properties for model
evaluation that other criteria have not. This is confirmed by a user study that
shows a better alignment of SIoU with human appreciation. In addition, we
experimented thoroughly with SIoU as a loss function and obtained impressive
performance gains on small object detection in the few-shot regime. This is
particularly helpful for applications on aerial images, especially as it is
compatible with the attention mechanism that we presented in \cref{chap:aaf} to
improve small object detection in the few-shot regime.

\part{Prototyping and Industrial Application}%
\label{part:application}

\chapter{Integration in COSE Prototypes}
\label{chap:integration}

\chapabstract{Detection models are often heavy and are not well suited for
    COSE's application. In this chapter, we first present in detail the CAMELEON
    system and its constraints. Then, we study the influence of the model size
    on the performance and present useful tools and tricks to accelerate the
    inference. Finally, we explain how the detection models are deployed inside
    the CAMELEON prototype and how they perform on aerial images.} \PartialToC

Up to this point, the contributions of this project have been mostly
research-oriented and a significant amount of work is still required to apply
the developed techniques on real-case scenarios. Therefore, in this chapter, we
present the engineering part of this project, which focuses on applying object
detection algorithms inside the CAMELEON system. In particular, we detail the
architecture of the CAMELEON system and the constraints associated. In light of
these constraints, we can adapt object detection algorithms for COSE's
applications. This mainly involves reducing the size of the models and their
inference speed while preserving their detection quality. Of course, this is no
easy task, but fortunately, there exist tools to help with this process. We
present these tools and their principle briefly, before explaining how they can
be leveraged for deploying object detection models on edge devices within the
CAMELEON system. Finally, we provide some guidelines for future improvements of
optimized detection models and how to deploy few-shot models as well. 

\section{CAMELEON Aerial Intelligence System}
\vspace{-1em}
\subsection{Presentation of the system}
CAMELEON is a high-resolution aerial camera system that aims to be embedded on
various types of carriers such as tactical helicopters (\eg Airbus H215), patrol
or intelligence aircraft (\eg Airbus A400M or ATL-2), and tactical drones (\eg
Safran's Patroller). Its main objectives are to provide precise 2D and 3D models
of geographical areas at needs. Often, satellite imagery is insufficient as it
has a low ground resolution, can be outdated, or simply unavailable (\eg due to
weather conditions). In such situations, airborne reconnaissance systems are
vital. In practice, these kinds of systems have military applications (theater
cartography, tactical intelligence, etc.) but also civil and commercial
(maritime surveillance, search and rescue, fire monitoring, etc.). Such aerial
surveillance systems already exist (\eg GlobalScanner product by COSE as well).
However, their specifications are often limited in light of the recent progress
in sensor resolution and quality. As an example, optronics systems often provide
extreme ground resolution but are limited to a small area. CAMELEON aims at
improving the compromise between the swath and ground resolution. Specifically,
CAMELEON will embed up to 6 camera sensors (each producing $\sim$ 100M pixels
images). This allows for an extra wide field of view and significant
overlapping between images (which is crucial for 3D modelization). This involves
dealing with huge amounts of data, which requires carefully designed hardware
and software. The CAMELEON system is constituted of two major components, the
on-board components and the ground component. Both will be presented in the
following sections.

The on-board segment of CAMELEON is the heart of the system, it includes a
sensor block, a dedicated computer and a user interface:
\vspace{-0.5em}
\begin{itemize}[nolistsep]
    \item[-] \textbf{Sensor Block}: this unit gathers all the sensors of the
    system along with the mechanical parts and motors used for controlling their
    orientations. All the sensors are attached at the core of a 3-axis gimbal
    suspension. These three axes are controlled by motors to compensate for
    parasite motions (low and high frequencies) of the carrier and to be able to
    orient the sensors in any direction. In addition, a forward motion
    compensation module equips each sensor to improve image quality. Without
    this, the image quality is degraded due to motion blur induced by the
    displacement of the carrier during the exposition of the sensor. The Sensor
    Block also contains an Inertial Navigation System (\ie an Inertial
    Measurement Unit coupled with a GNSS sensor), whose goal is to precisely
    measure the position and orientation of the camera to determine accurately
    the position of pointed objects on the ground.   
    \item[-] \textbf{On-board Computer}: it manages the data stream from the sensors to
    the User Interface and memory storage. Specifically, it consists of an
    \textit{host} system that controls an FPGA programmed to read into the
    camera memory buffer and retrieves the images in the host memory. The host is
    then in charge of storing the images and their associated metadata in a
    geospatial database. It also feeds the user interface with the latest image
    for visualization. Finally, it also controls a set of lightweight GPUs
    (Nvidia Xavier or Orin) to execute complex treatments such as object
    detection. A special memory allocation allows for fast data transfer from
    the host to the kernel to process the image from the system in real-time.
    This is illustrated in \cref{fig:cam_host_system}, which depicts a
    simplified overview of the system's architecture.
    \item[-] \textbf{User Interface}: it is an application that displays the images and
    metadata acquired by the system to the user. It also monitors a set of
    variables about the flight and the mission. 
\end{itemize}

\begin{figure}
    \centering
    \includegraphics[width=0.7\textwidth]{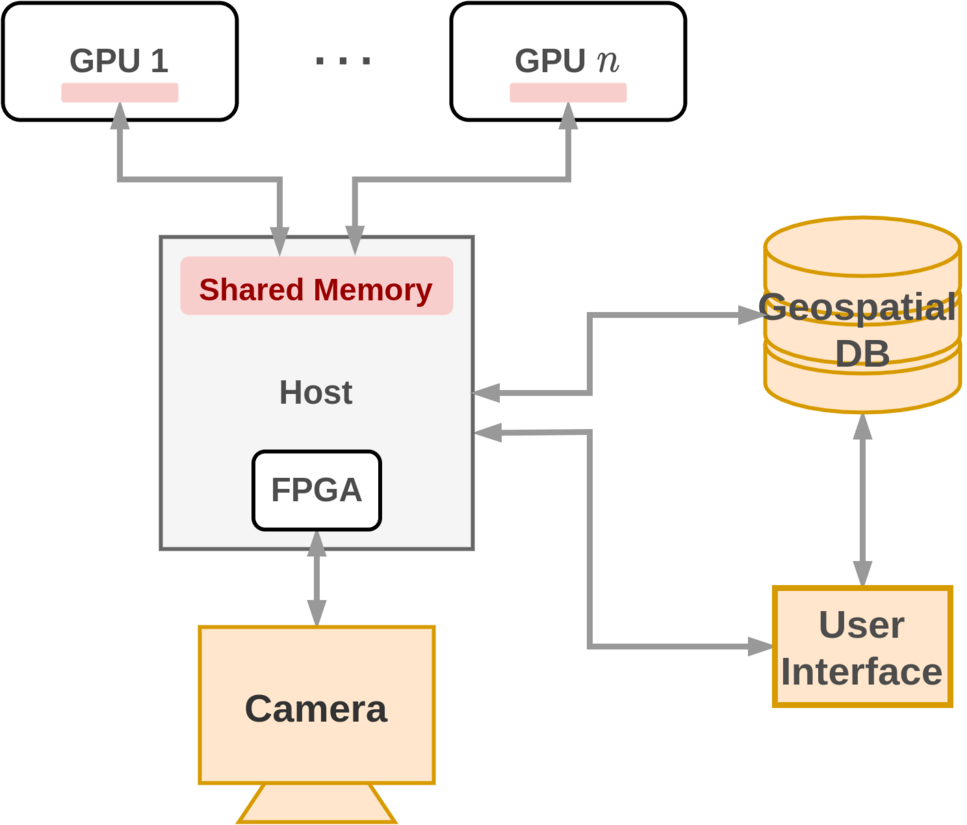}
    \caption[On-board computer architecture]{Simplified illustration of the on-board
    computer architecture in CAMELEON.}
    \label{fig:cam_host_system}
\end{figure}

The Graphical Processing Units (GPUs) selected to be part of the system are the
Nvidia edge computing devices Xavier and Orin (see \cref{fig:cam_gpus}). These
are lightweight and power-efficient GPUs that can be used as an independent
device or as an end-point within a more elaborate system. These GPUs have the
computation capacity of mid-range commercial GPU (\eg Nvidia RTX 3060) while
consuming less than 50W, which is roughly four times as efficient. That makes
them particularly well-suited for COSE application as only limited resources are
available inside the carrier but a strong computation power is still required.
Nevertheless, such an efficiency is not enough to run regular detection models
in real-time on 100 of megapixels images. Thus, a lot of effort is required to
adapt the models and their inference, we will elaborate this in
\cref{sec:on_edge_detection}.   

\begin{figure}
    \centering
    \includegraphics[width=0.7\textwidth]{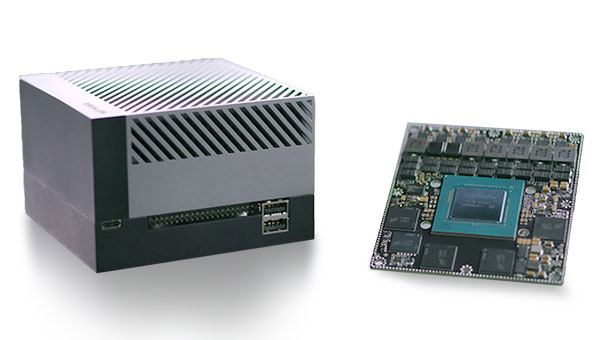}
    \caption[Nvidia AGX Orin]{Nvidia AGX Orin, development kit (left) and production (right) GPUs.}
    \label{fig:cam_gpus}
\end{figure}

The second component of the CAMELEON system is mainly constituted of the ground
station. It is a high-power workstation, and its goal is to store and process
data between missions. The low resources available during the flight are not
sufficient for heavy treatments such as 3D modelization of the overflown areas.
Mission disks filled during the flight inside the on-board segment can be
transferred to the ground station for further analysis of the collected data.
The ground station can also be used to execute more demanding detection
algorithms or advanced treatments (\eg segmentation, change detection, etc.).

\subsection{CAMELEON Image Specifications and Constraints}
In addition to the limited computation resource available on-board, CAMELEON has
requirements that make object detection even more challenging. To assist the
operators on-board efficiently, objects must be detected in real-time. The
cameras are set to acquire at least one image every second, and each image is
$11600\times 8700$. These images are 16-bit Bayer matrices, which amount to
roughly 192 megabytes per image. This means that more than 1 GB of data is
produced every second in the system when all the cameras are used. However, the
current prototype is designed with only one camera, which makes the real-time
detection a little less challenging. Aside from the model inference, dealing
with such a large amount of data requires carefully designed datastreams between
the various parts of the system to retrieve, store and display the images
produced by the cameras. This is what motivated the use of a Field Programmable
Gate Array (FPGA) to orchestrate the transfer between the camera and the host. 

Now, achieving real-time detection, \ie processing one image per second, under
such constraint is challenging and cannot be done with regular-size detection
models (\eg with ResNet backbone) implemented in Python. The fastest
implementations in Pytorch (\eg YOLOv5 \cite{yolov5}) are able to process one
$640\times 640$ pixels image in about 20ms on a high-end server GPU (Nvidia
V100), with a backbone comparable in size to a ResNet-50. One image of CAMELEON
is equivalent to 250 images$640\times 640$ pixels, which would require about 5s
to process. Besides, this does not count data transfer time, pre- and
post-processing which must also fit under the one-second time limit.
Furthermore, the high-end GPU used for these benchmarks (available on YOLOv5's
repository\footnote{\href{https://github.com/ultralytics/yolov5}{https://github.com/ultralytics/yolov5}}
) have much more computing power than the Nvidia Xavier selected for CAMELEON
(10 times more according to the theoretical capabilities on Nvidia's website).
Therefore, one must adapt the detection models significantly to fulfill
CAMELEON's requirements. This can be done in two different ways: first reducing
the model size and second accelerating the inference. Both approaches are
employed in CAMELEON's prototype, and they will be discussed in the following
two sections.

\section{Reducing Object Detection Model Size for Edge Computing}
\label{sec:on_edge_detection}
\vspace{-1em}
The most straightforward way to increase the throughput of a detection model is
to reduce its size. However, this often comes at the cost of lower accuracy. In
this section, we analyze to find the speed/accuracy tradeoff of
YOLOv5 \cite{yolov5} on aerial images. Then we propose a simple knowledge
distillation approach to improve this tradeoff and achieve higher detection
quality at a fixed size. 

\subsection{Object Detection Accuracy/Speed Tradeoff}

First, we compare the speed/accuracy tradeoff for multiple detection frameworks
on natural images. The results of this comparison can be found in
\cref{fig:depl_tradeoff_natural}. To make this figure, we collected the
performance metrics and model size information directly from the articles
presenting the various detection methods. We select the mAP with various IoU
thresholds on MS COCO. This is rather simple as this is the most common
evaluation benchmark in the detection literature. Then, the most relevant way to
assess the speed of the models is to measure the latency (in ms) which
represents the time required to process one image. However there are multiple
complications with this measure: it depends greatly on the hardware used, the
size of the input images and even the version of the library used during
inference (in a less pronounced manner). In addition, the latency is not always
reported in the articles which makes the task even more difficult. Thus, we
leverage a surrogate for the latency: the total number of parameters in the
model. Of course, it does not correlate fully with the latency, but it is a
sound approximation, and it is easier to collect. In
\cref{fig:depl_tradeoff_natural}, we plot the performance against the number of
parameters (left) and against the latency (right). There are missing values in
the latency plot as this measure was not reported in the original articles. We
compare 8 distinct detection frameworks: YOLOv5 \cite{yolov5}, CornerNet
\cite{law2018cornernet}, CenterNet \cite{duan2019centernet}, Mask R-CNN
\cite{he2017mask}, DETR \cite{carion2020end}, FCOS \cite{tian2019fcos},
SwinTransformers \cite{liu2021swin} (based on Cascade R-CNN), DynamicHeads
\cite{dai2021dynamic}, and DiffusionDet \cite{chen2022diffusiondet}. The
conclusion of this comparison is blatant: YOLOv5 has a much better
speed/accuracy tradeoff than other detection frameworks. Of course, our
comparison is not exhaustive, yet it includes recent one-stage detectors that
are fast and perform well. We are aware that very recent developments in the
YOLO family outperform YOLOv5 (\eg YOLOv8
\footnote{\href{https://github.com/ultralytics/ultralytics}{Link to YOLOv8
repository}}); however, we could not include them in our comparison. 

Another reason that nudges our choice toward YOLOv5 is the greater variety of
model sizes that they propose. Most detection models are proposed with two
different sizes, usually by employing two distinct backbones (\eg ResNet-50 and
ResNet-101), but keeping the detection head unchanged. With YOLOv5, the whole
network is modified accordingly, including the head. This provides smoother
model size modifications and greater flexibility.     

\begin{figure}
    \centering
    \includegraphics[width=\textwidth]{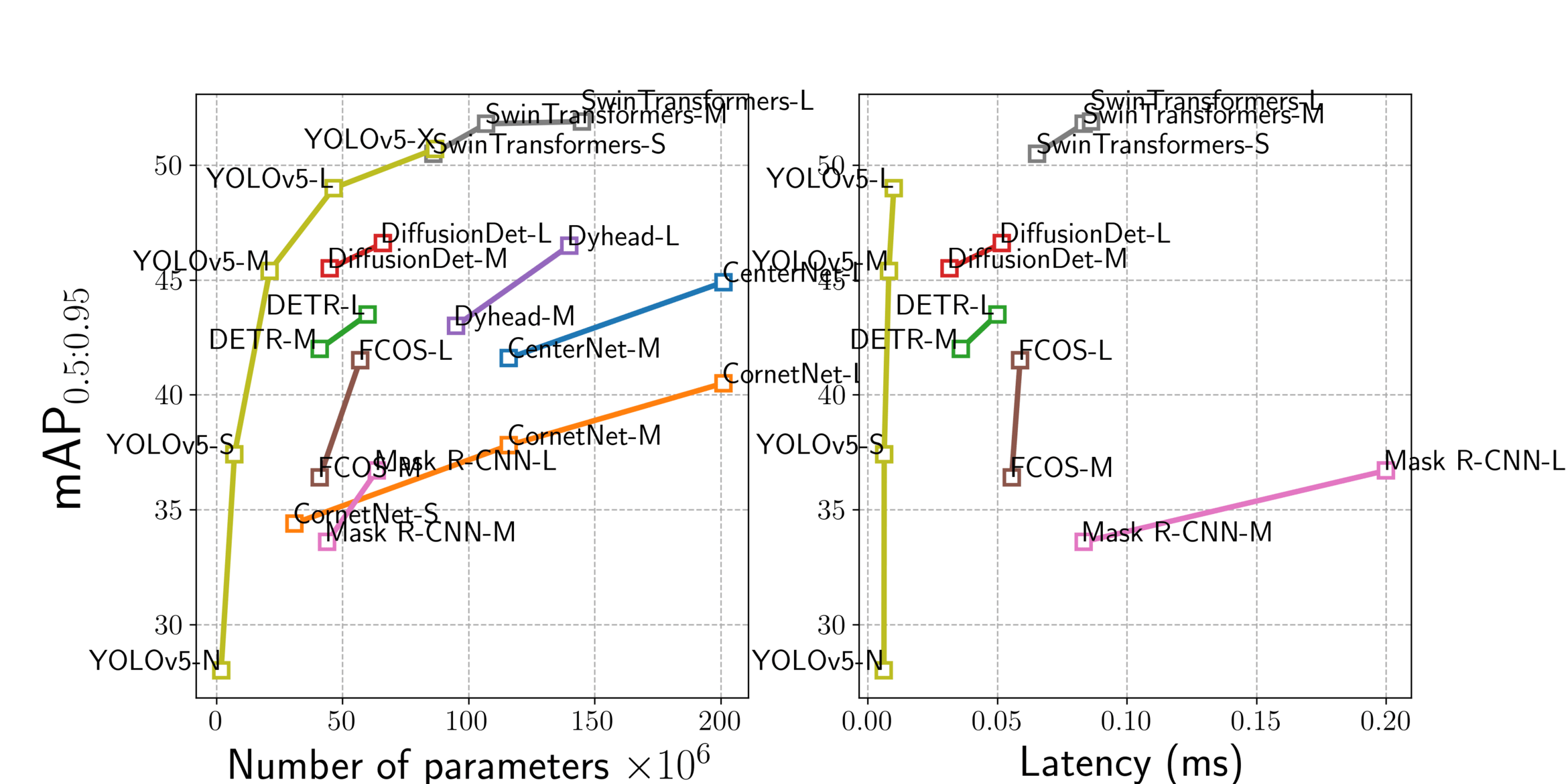}
    \caption[Inference speed / detection accuracy tradeoffs
    comparison]{Inference speed / detection accuracy tradeoffs comparison for
    multiple detection frameworks on MS COCO. Latency, \ie wall-clock time for inference of
    one image is directly used (right) but this measure is not always reported
    in the literature. As a surrogate, the number of parameters is used to have a
    more thorough overview (left). N, S, M, L and X designate the various model
    sizes.}
    \label{fig:depl_tradeoff_natural}
\end{figure}

The comparison from \cref{fig:depl_tradeoff_natural} demonstrates the
superiority of YOLOv5 over other approaches on MS COCO. However, it is necessary
to confirm that the same behavior is observed on aerial images as well. To this
end, we train the different versions of YOLOv5 on DOTA and DIOR datasets and
plot similar curves as in \cref{fig:depl_tradeoff_natural}. The resulting plots
are available in \cref{fig:depl_tradeoff_aerial}. In this analysis, we include
even smaller models than the \textit{nano} version of YOLOv5 (YOLOv5-N). We call
these models YOLOv5-P and YOLOv5-F (for Pico and Femto following the
nomenclature from YOLOv5). These models have respectively 0.68 and 0.32 millions
of parameters which is much lower compared to traditional detection models.
Nevertheless, strong performance is achieved on DOTA and DIOR datasets (see
\cref{tab:depl_aerial_yolo}). Also, YOLOv5-X is not included in the analysis as
it is too large for COSE's application.

\begin{table}[]
    \centering
    \resizebox{0.7\textwidth}{!}{%
    \begin{tabular}{@{\hspace{2mm}}ccccc@{\hspace{2mm}}}
    \toprule[1pt]
    \textbf{Model} & \textbf{\# params} & \textbf{Latency (ms)} & \textbf{mAP DOTA} & \textbf{mAP DIOR} \\ \midrule
    YOLOv5-F        & 3.19e+5           & 5.2                   & 42.8             & 41.4             \\
    YOLOv5-P        & 6.75e+5           & 5.4                   & 49.1             & 63.2             \\
    YOLOv5-N        & 1.78e+6           & 6.0                   & 68.5             & 80.3             \\
    YOLOv5-S        & 7.05e+6           & 6.6                   & 72.7             & 85.2             \\
    YOLOv5-M        & 2.09e+7           & 8.7                   & 74.6             & 88.1             \\
    YOLOv5-L        & 4.62e+7           & 11.1                  & 75.4             & 89.1             \\ \bottomrule[1pt]
    \end{tabular}%
    } \caption[Performance on DOTA and DIOR for various YOLOv5
    sizes]{Performance (mAP$_{0.5}$) on DOTA and DIOR for various YOLOv5
    sizes. Numbers of parameters and latency are provided along with the
    performance measures. Latency is computed with 512$\times$ 512 images on a
    RTX 3090.}
    \label{tab:depl_aerial_yolo}
    \end{table}

\begin{figure}
    \centering
    \includegraphics[width=\textwidth]{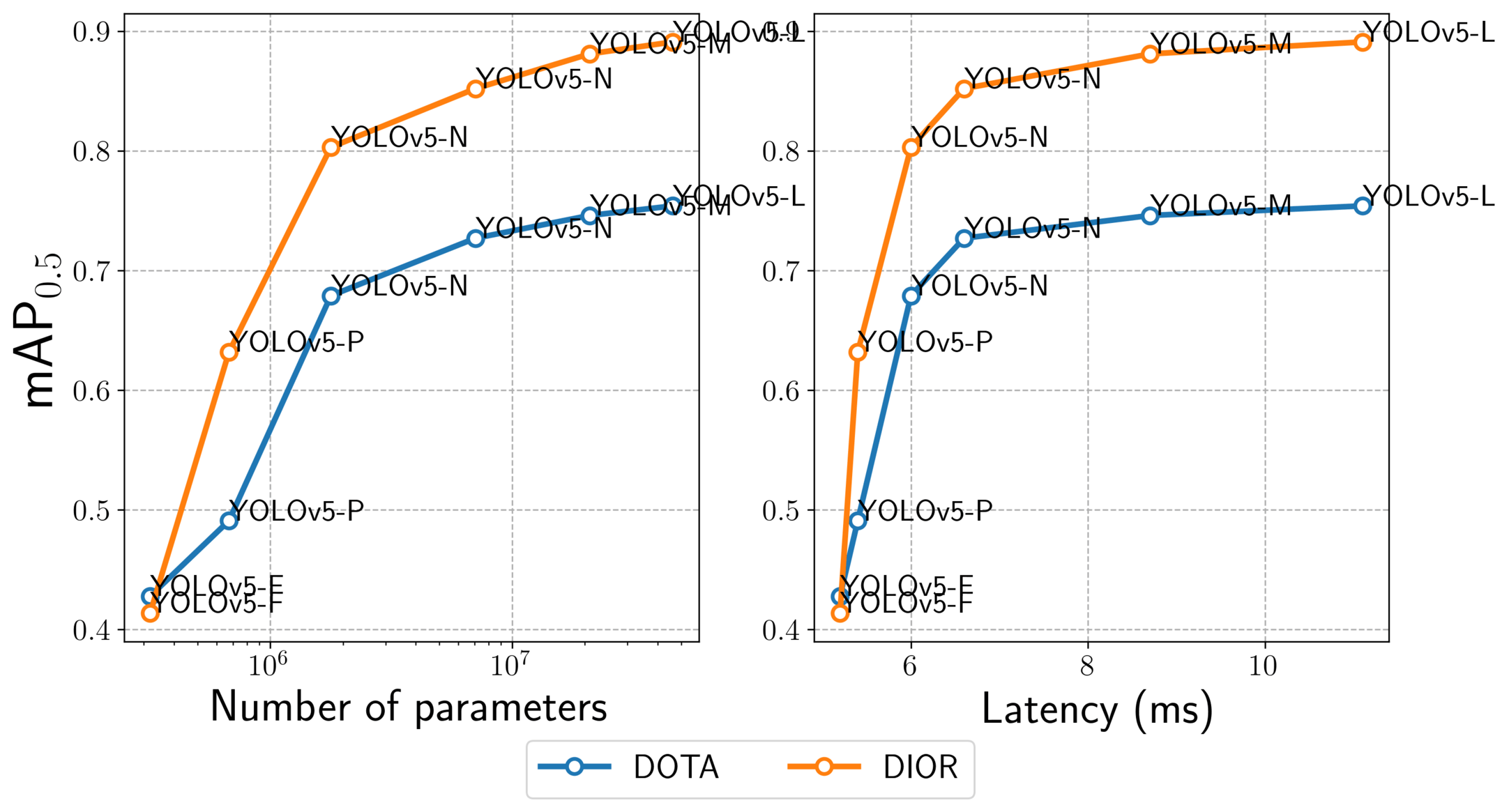}
    \caption[YOLOv5 speed/accuracy tradeoff on aerial images]{YOLOv5 tradeoff
     between model size and performance (left). Between latency and performance
     (right). 6 model sizes are compared: Femto, Pico, Nano, Small, Medium and
     Large on DOTA and DIOR. The horizontal axis of the left plot is in
     log-scale to better distinguish between the smallest models. Latency is
     computed with 512$\times$ 512 images on a RTX 3090.}
     \label{fig:depl_tradeoff_aerial}
\end{figure}

From \cref{fig:depl_tradeoff_aerial,tab:depl_aerial_yolo}, it is clear that the
detection performance is strongly correlated with the number of parameters of
the model and that this connection holds also for very small models (< 1M
parameters). However, it seems that the performance drops faster under a certain
model size. This is observed both for DOTA and DIOR, even though it is more
pronounced with DIOR. We also observed that the latency plot does not follow
identically the parameters plot. For instance, with YOLOv5-P, the latency does
not follow the scaling down of the model size exactly. This is probably due to
computation overhead and synchronization inside the model which prevents faster
inference. Given these tradeoff curves, YOLOv5-N is the most promising model as
it is closest to the top-left corner of the plots. However, it is relevant to
investigate other model sizes and verify how well they comply with COSE's
constraints.

\subsection{Knowledge Distillation}

Of course, the smaller the models, the lower the detection performance. However,
there exist techniques to boost the performance of any network when we have
access to a similar but larger model with increased performance. This is called
\textit{Knowledge Distillation} (KD). The main principle behind this technique
is to train a \textit{student} model to mimic a \textit{teacher} which is often
larger and has better performance. It was first introduced by Hinton et al.
\cite{hinton2015distilling} in 2015. Originally, it consisted in training the
student model with an additional loss measuring how close the logits from the
student and the teacher are. Then, it was extended multiple times, with for
instance intermediary layer activation distillation \cite{romero2015fitnets},
relational distillation \cite{park2019relational}, adversarial distillation
\cite{wang2018kdgan} or multiple teachers \cite{liu2020adaptive}. Most of these
techniques are designed for classification applications and provide limited
performance boosts for detection models. Fortunately, knowledge distillation can
be extended for fine-grained tasks and in particular for object detection
\cite{wang2019distilling,li2017mimicking,chawla2021data,wei2018quantization,wang2020gan}.
For more details about existing knowledge distillation methods, we defer the
reader to this complete survey \cite{gou2021knowledge}. 

As a first try with KD, we applied Fine-grained Feature Imitation (FFI)
\cite{wang2019distilling} to improve the training of YOLOv5-P and YOLOv5-N with
larger teachers trained on DOTA and DIOR. FFI proposes an additional loss
function that measures the disparity between the teacher and student feature
maps, but the computation only includes regions close to a ground truth annotation:

\begin{align}
    \mathcal{L}_{FFI} &= \frac{1}{2m} \sum\limits_{i=1}^W\sum\limits_{j=1}^H \mathcal{M}_{i,j}\| f_{\text{adapt}}(s_{i,j}) - t_{i,j} \|_2^2, \, \text{where  }m =  \sum\limits_{i=1}^W\sum\limits_{j=1}^H \mathcal{M}_{i,j}.
\end{align}

Here, $\mathcal{M}$ is the imitation mask which is 1 where in neighboring
regions of each ground truth annotation and 0 elsewhere. $H$ and $W$ represent
the height and width of the feature map respectively.  $f_{\text{adapt}}$ is a
mapping function that converts the feature of the student, denoted $s_{i,j}$, to
the same size as the teacher's ones (in terms of the number of channels),
denoted $t_{i,j}$. Indeed, features maps of the student often have fewer channel,
which prevents direct comparison. The intuition behind this loss is that student
outputs should only match teacher outputs in regions where there is an object of
interest. The background is too noisy and the student model may not have the
capacity to mimic the teacher everywhere, it should focus only in relevant
regions.

The results obtained with aerial images do not agree with the performance gains
reported in the original paper \cite{wang2019distilling} for natural images (see
\cref{tab:kd_res}). Significant performance drops are observed with distillation
on aerial images. Similar drops are observed for YOLOv5-P and YOLOv5-N both on
DOTA and DIOR datasets. KD methods designed for natural images may not be well
adapted for aerial images, one reason for this could be the presence of much
smaller objects which have smaller and noisier representations inside the
feature maps. In a sense, that could be linked to the difficulty of applying
FSOD methods on aerial images, as described in \cref{chap:aerial_diff}. 
These results deserve to be investigated further in future work as
distillation is a promising direction for detection improvements.

\begin{table}[]
    \centering
    \resizebox{0.75\textwidth}{!}{%
    \begin{tabular}{@{\hspace{2mm}}clcccclcccc@{\hspace{2mm}}}
    \toprule[1pt]
                              & \multicolumn{1}{c}{} & \multicolumn{4}{c}{\textbf{DOTA}} &  & \multicolumn{4}{c}{\textbf{DIOR}} \\ \cmidrule(lr){3-6} \cmidrule(l){8-11} 
                              & \multicolumn{1}{c}{} & \textbf{All}    & \textbf{S}      & \textbf{M}      & \textbf{L}      &  & \textbf{All}    & \textbf{S}      & \textbf{M}      & \textbf{L}      \\ \midrule
    \multirow{2}{*}{\textbf{YOLOv5-P}} & w/o KD               & \textbf{49.1}   & \textbf{30.8}   & \textbf{53.8}   & \textbf{48.9}   &  & \textbf{63.2}    & \textbf{21.3}    & \textbf{50.0}   & \textbf{76.6}   \\
                              & w/ KD                         & 46.4   & 27.9   & 51.7   & 45.9   &  & 48.5    & 17.5    & 43.0   & 56.4   \\ \midrule
    \multirow{2}{*}{\textbf{YOLOv5-N}} & w/o KD               & \textbf{67.9}   & \textbf{51.0}   & \textbf{70.9}   & \textbf{69.8}   &  & \textbf{80.6}    & \textbf{35.9}    & \textbf{69.2}   & \textbf{90.8}      \\
                              & w/ KD                         & 47.5   & 30.3   & 52.4   & 45.5   &  & 55.9    & 22.0    & 51.4   & 65.3      \\ \bottomrule[1pt]
    \end{tabular}%
    } \caption[Knowledge Distillation performance comparison on DOTA and DIOR]{Detection performance (mAP$_{0.5}$) comparison with and without
    Knowledge Distillation (KD). Performance is reported for different objects
    sizes and on two datasets, DOTA and DIOR. Both YOLOv5-P and YOLOv5-N are
    compared.}
    \label{tab:kd_res}
    \end{table}

\section{Inference Acceleration with TensorRT}
\vspace{-1em}
TensorRT is a tool provided by Nvidia to optimize the inference of deep learning
models on Nvidia's hardware. Recent Nvidia GPUs have dedicated modules for deep
learning inference, called \textit{Tensor Cores}. They contrast from
\textit{CUDA cores}, which are principally made for parallel computing. TensorRT
unlocks the potential of the tensor cores by generating an \textit{engine} that
can be run most efficiently on a specific GPU. This differs from the
vanilla Python inference (\ie with any deep learning library), which calls CUDA
kernels that are executed on the CUDA cores of the GPU. TensorRT also leverages
the CUDA cores and thus leverages the maximum GPU capabilities. In addition,
TensorRT proposes several optimization tricks for increasing the
inference speed even more. We detail such tricks in the following paragraphs and
explain how they can be used for object detection models. 

Specifically, TensorRT takes as input a neural network model from any common
library in a suitable format such as Open Neural Network Exchange (ONNX) and
converts it into an \textit{engine}. This engine contains the initial network
along with specific instructions about how to run inference in an efficient way
given the available hardware. TensorRT provides APIs for various programming
languages (\eg Python and C++) to run the engine directly from any application.

\paragraph*{Operation Fusion and Scheduling}
TensorRT merges different layers and re-organizes the forward pass of the model.
Specifically, the forward pass of a model can be represented as a computation
graph (see \cref{fig:trt_fusion}). Each node represents a layer and the edges
represent the data stream between layers. One of the objectives of TensorRT is
to optimize the computation graph of the model. First, when several consecutive
layers are composed (\ie the output of the first layer is the only input of the
second layer), they can be fused vertically. Instead of creating distinct CUDA
kernels for each layer, only one kernel is created, saving a lot of time,
particularly in data transfer. This is illustrated in \cref{fig:trt_fusion}
(right) where, convolution, bias, and activation layers are merged as "CBR"
blocks. Then, layers can also be merged horizontally. Horizontal fusion happens
when similar operations are in parallel and have the same input (\eg three
distinct parallel $1\times 1$ convolutions as illustrated in
\cref{fig:trt_fusion}, right). This also reduces the creation of unnecessary
CUDA kernels and maximizes the utilization of the resources. Finally, when there
are parallel paths in the computation graph, the order in which the different
branches can be irrelevant. In this case, some clever scheduling can maximize
the utilization of the GPU and speed-up inference. This scheduling is not
trivial as it is subject to the memory constraint of the hardware, and all
parallel operations cannot be done at once. 

TensorRT searches for optimal scheduling and fusion according to the available
hardware and the input size that will be used at inference (\eg batch size and
image size). This implies that the input size is fixed for a given engine, but
it is a small price compared to the benefits of the TensorRT conversion.
TensorRT does have a variable size option, but it does not seem to be compatible
with the reduced precision that we will detail in the following paragraphs.

\begin{figure}
    \centering
    \includegraphics[width=\textwidth]{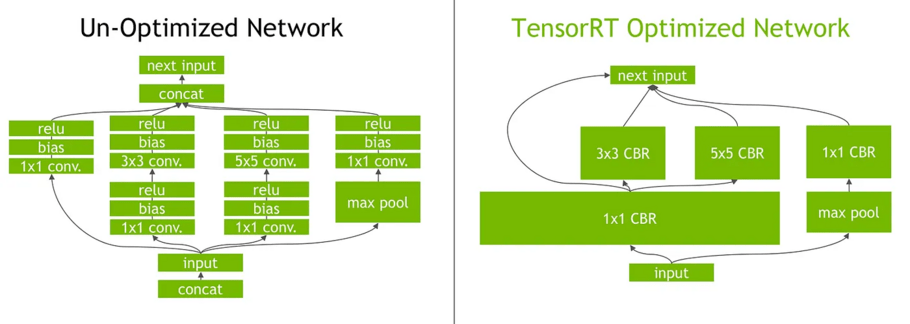}
    \caption[Illustration of a computation graph and its optimization by
    TensorRT]{Illustration of a computation graph and its optimization by
    TensorRT. Figure taken from an Nvidia technical
    \href{https://developer.nvidia.com/blog/production-deep-learning-nvidia-gpu-inference-engine/}{blog
    post}.}
    \label{fig:trt_fusion}
\end{figure}

\paragraph*{Floating point precision}
Then, TensorRT reduces the precision of the model's weights and activations. In
Python, common libraries store models and images as \textit{float} numbers which
take 4 bytes of memory (\ie 32 bits, denoted F32). TensorRT can convert a model with
half-precision floats (denoted F16), which only take 16 bits of memory. This reduces the size
of the model by a factor of two, but most importantly it reduces the computation
power required to perform a forward pass as well. Indeed, arithmetic operations
are faster with half-precision numbers as they require fewer basic operations.
Thus, it sensibly reduces the overall inference time. This trick is becoming
quite common as well for training neural networks; however, mixed precision (\ie
keeping some parts of the models as regular floats) is required to prevent
convergence issues \cite{micikevicius2017mixed}. TensorRT instead converts the
whole model into half-precision. In practice, this loss of information is not an
issue for inference and performance is almost unchanged.  

\paragraph*{Integer Quantization and Calibration}
Pushing even further, TensorRT enables the conversion of the model in 8-bit
integers. This again reduces the amount of memory and computation power required
to store the model and perform inference. However, the model's weights and
intermediary computation can only take 256 different values with integer
precision. This is certainly not enough to represent the entire range of values
found in a model. A solution for this is to map the weights and activations of
the model into the $\llbracket -2^{b-1}, 2^{b-1}-1\rrbracket$ interval, where
$b$ is the number of bits used for the quantization (generally $b=8$). If the
dynamic range of the weights is $[\alpha, \beta]$, this can be achieved with a
linear transformation:

\begin{equation}
    S(x) = ax + b, \quad\text{with } a= \frac{2^b-1}{\beta-\alpha}, \quad\text{and }  b= -\frac{\alpha(2^b-1)}{\beta-\alpha} - 2^{b-1}.
\end{equation}

Hence, the quantization and dequantization functions can be written as: 

\begin{align}
    \bar{x} = Q(x) &= \left\lfloor S(x)\right\rfloor = \left\lfloor ax + b\right\rfloor,\\
    x &\approx S^{-1}(\bar{x}) = \frac{1}{a}(\bar{x}-b).
\end{align}

This can be further simplified when both the weights interval and the quantized
interval are centered (\eg $\alpha = \beta$): $S^{-1}(\bar{x}) =
\frac{1}{a}\bar{x}$. In this case, one can easily see that the matrix
multiplication, the basic operation of neural networks' forward pass can be
computed using mostly integer multiplications and additions. If we define three
matrices $X \in \mathbb{R}^{n\times p}$, $Y \in \mathbb{R}^{p\times m}$, and $Z
\in \mathbb{R}^{n\times m}$, such that $Z = XY$, then we have:

\begin{align}
    z_{i,j} = \sum\limits_k x_{i,k}y_{k,j} \approx \sum\limits_k S_x^{-1}(\bar{x}_{i,k})S_y^{-1}(\bar{y}_{k,j}) = \frac{1}{a_xa_y} \sum\limits_k \bar{x_{i,k}}\bar{y_{k,j}},
\end{align}

where $a_x$ and $a_y$ are the quantization scale parameters for the respective
quantization of matrices $X$ and $Y$. As hinted in the previous equation,
different quantization functions are necessary to keep the flexibility of the
model. The number of parameters inside a model is large compared to the number
of quantized values (256 for integer quantization). Quantizing all weights of
the model at once would result in a much looser approximation of the actual
weight values. Instead, quantization is performed at the layer-level or even
lower (at the column or channel level). 

Similarly, activations of the model are also quantized per layer. However, the
range of the activation highly depends on the input of the model and cannot be
known in advance. To alleviate this, a calibration cache can be computed from a
\textit{calibration set}. TensorRT automatically builds this cache by running a
forward pass on all images of the calibration set. Of course, this set must
contain images that are similar to the images on which the model will be
deployed. Concerning COSE, this complexifies the domain adaptation problem as
the calibration of the model should be adapted to the domain. Even if it only
requires non-annotated images to compute the calibration cache, it can sometimes
be challenging to obtain an appropriate calibration set due to confidentiality
constraints. The resulting calibration cache gathers information about the
quantization of the activation of the different layers of the model. It is then
used during inference, which slightly increases memory usage during inference. 

We briefly presented the linear quantization technique, yet TensorRT also
provides more elaborated quantization to minimize the loss of information. For
instance, it has an Entropy Calibration technique that adapts the density of
bins according to the density of weights values. It is very similar to entropy
coding techniques that assign smaller code to the most frequent symbols.

\paragraph*{C++ Implementation}
Finally, the TensorRT conversion allows for using C++ as a backend instead of
Python. This is not strictly speaking a TensorRT trick, but it significantly reduces
the inference time. As TensorRT provides a C++ API, the whole
inference pipeline can be written in C++ as well. C++ is known to be much faster
than Python and implementing pre- and post-processing in this language can speed
up the inference. 

\section{Object Detection Pipeline for CAMELEON Prototype}
\vspace{-1em}
Now that we have discussed various improvements for accelerating the inference
time of the detection model, we present how we make use of them specifically for
COSE application. First, we select the most interesting models from our analysis
in \cref{sec:on_edge_detection}: YOLOv5-N and YOLOv5-P. Both achieve fast
inference while preserving high detection performance. We then perform the
conversion in a TensorRT engine to speed up the inference. This was done only
for the model trained with DOTA dataset, but it would be identical with
DIOR.

\subsection{Deployment with TensorRT}

We started with the nano version as the goal is to satisfy the time constraint
of processing one image per second. If it does not fit under the time
constraint, we would do the same with the pico version. With the help of
TensorRT, we reduce the precision of the model into 8-bits integers. The
calibration is performed with DOTA validation images. However, to find the most
efficient utilization of the GPUs, it was necessary to explore what were the
best combinations of batch size and input image size. CAMELEON's images have
11600 $\times$ 8700 pixels and doing the inference on the entire image at once
demands too much GPU memory, even with our smallest models. The images must be
tiled before being fed to the model. The tiling has a major downside: it can cut
object in half which make them more difficult to detect. Adding some overlapping
between tiles can prevent this but it also increases exponentially the amount of
data to process (see \cref{fig:overlap_influence}). In addition, having to many
tiles requires processing multiple batches. This is equivalent to performing the
inference multiple times adding a small overhead at each iteration. Thus, we
want to minimize the overlap and have tiles as large as possible. 

\begin{figure}
    \centering
    \includegraphics[width=\textwidth]{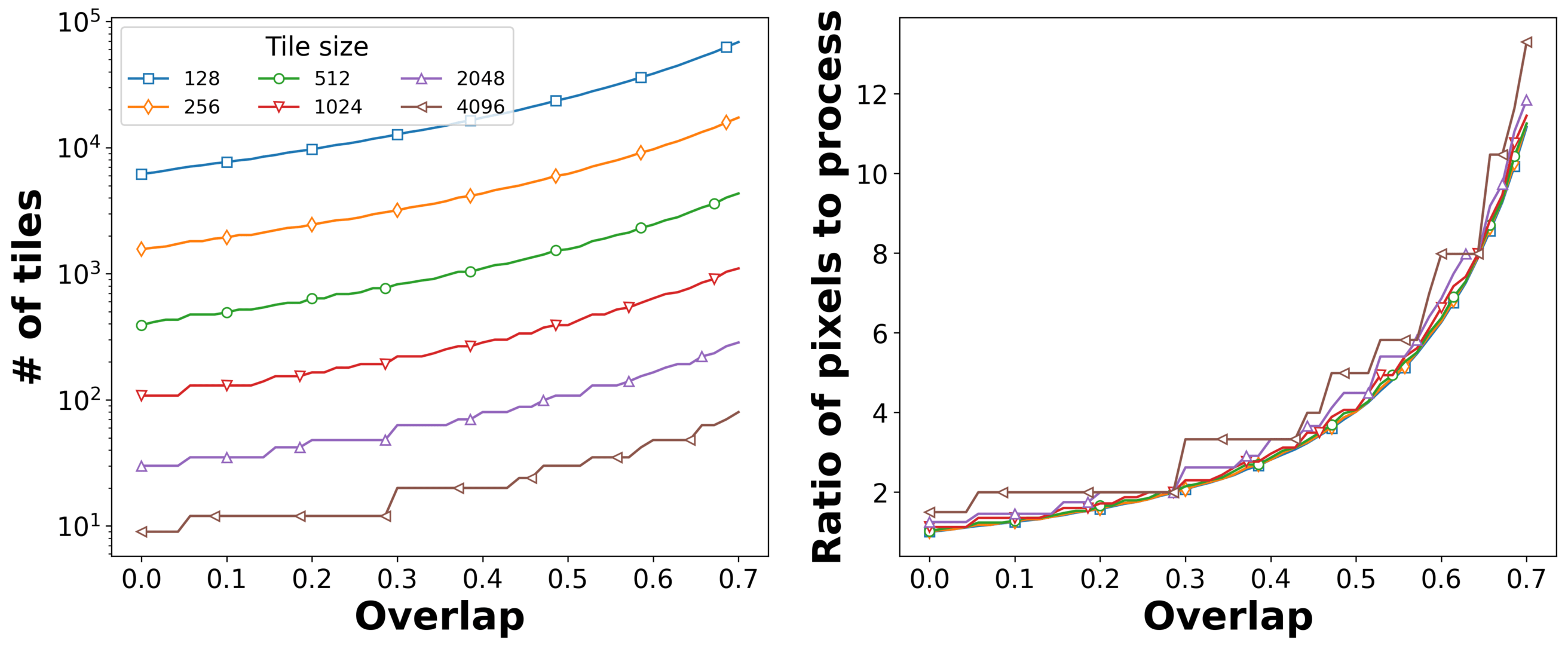}
    \caption[Influence of overlap on the amount of data to process]{Show the number of tiles against the overlap, for different
    tile size (left) and the ratio of pixels that need to be processed given the
    overlap (right). As an example, choosing an overlap of 0.7 will produce 10
    times more data to process compared with the original image. The
    computations were done with CAMELEON image size (11600 $\times$ 8700 pixels).}
    \label{fig:overlap_influence}
\end{figure}

We tried several combinations of batch size and tile size, but this is a tedious
task as an entire engine must be created for each setting. We did not mention it
before, but the engine generation is a heavy optimization process that requires
several hours on the Xavier GPUs. Based on these insights and multiple practical
trials, we set the tile size to half the height of a CAMELEON image: 4350, with
a batch size of 1. This maximizes the utilization of the GPU, requires few
inferences and limits object splitting. This results in 6 tiles organized as
shown in \cref{fig:depl_tiling} with slight horizontal overlap.      

\begin{figure}
    \centering
    \includegraphics[width=\textwidth]{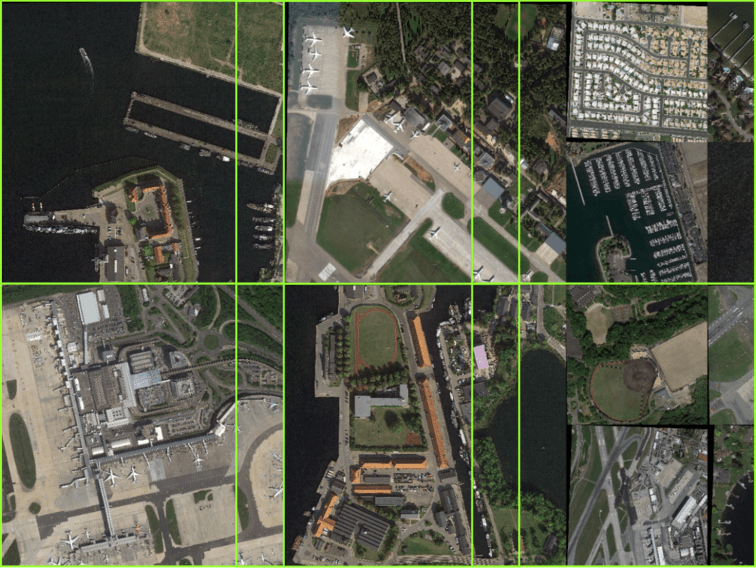}
    \caption[CAMELEON tiling]{Illustration of the tiling implemented in our
    prototype, the background image is a test image, it is constituted of DOTA
    images pasted together to get the right dimensions. Green squares represent the tiles.}
    \label{fig:depl_tiling}
\end{figure}

Overlapping also induces the need for NMS after the detection. Indeed, in areas
that will be processed multiple times, detections can be duplicated. To remove
these undesired boxes, an NMS operation must be performed at the whole image
level. As it scales in $O(M^2)$ with $M$ the number of detected boxes, it can
become expensive. In addition, it depends on the number of detected objects in
the image, which can be considerable in CAMELEON-size images. Fortunately, there
exist fast GPU implementations of the NMS that can be leveraged easily. It could
be further optimized by first performing the NMS on each tile individually and
then only in the overlapping areas to drastically reduce the number of boxes.
Nonetheless, the NMS computation time is not prohibitive as it is and the test
images contain far more objects than what will be encountered in real-case
applications. Therefore, we choose not to spend effort on improving the NMS
process. The pre- and post-processing are less expensive than the inference
itself, but they still consume a significant amount of time. To accelerate them,
we choose to implement them in C++ as it is much faster than Python, especially
for data management and transfer.  

\subsection{Detection Performance Evaluation and Profiling}
Now that we have a TensorRT engine, we must validate its capacity, both in terms
of inference speed and detection quality. Indeed, the precision reduction of the
model often leads to reduced performance. 

First, we check if the inference is fast enough to comply with COSE's
constraints. To this end, a simple profiling of the various steps of the
execution is realized. Specifically, we isolate the data copy, tiling,
pre-processing, inference and post-processing. The results of this profiling can
be found in \cref{tab:depl_profiling}. The timings are averaged over 100 runs on
an Nvidia AGX Xavier in high consumption mode (30W + overclocking). It results
in the overall processing of one image in roughly 750ms. This falls under the 1s
barrier and thus fulfills the application constraints. Larger versions of YOLOv5
do not respect this constraint. However, it would be relevant to investigate
smaller ones to get degraded performance mode to process more than one image per
second. The current prototype has only one sensor, but the final system will fly
with 6 identical cameras. While it may not be realistic to process all these
images in real-time it might be useful to have access to faster models able to
manage the image from more than one sensor. We also experiment with a new
generation of embedded GPU: the Nvidia AGX Orin, which recently replaced the
Xavier. Compatibility issues between TensorRT and the new GPU prevent us from
experimenting more with it, but we were able to create a similar engine for the
Orin. This produces significant inference speed gains as we process an entire image
in a little less than 500ms. Yet, the Orin consumes more power than the Xavier
(50W). 

\begin{table}[]
    \centering
    \resizebox{0.65\textwidth}{!}{%
    \begin{tabular}{@{\hspace{2mm}}llcl@{\hspace{2mm}}}
    \toprule
    \multicolumn{2}{l}{\textbf{Processing step}}      &  & \textbf{Time (ms)}      \\ \midrule
    CPU to GPU copy    &                     &  & 87 $\pm$ 0.5         \\
    Tiling             &                     &  & 1  $\pm$ 0.1         \\[3mm]
    \multirow{4}{*}{\parbox{3.5cm}{Inference per tile \newline $\times 6$}}  & \textit{Pre-processing }     &  & \textit{24 $\pm$ 3 }          \\
            & \textit{Inference  }         &  & \textit{73 $\pm$ 1 }          \\
                       & \textit{GPU to CPU copy }    &  & \textit{10 $\pm$ 0.5 }        \\
                       & \textbf{Subtotal}               &  & \textbf{642}            \\[3mm]
    Post-processing    &                     &  & 16 $\pm$ 1           \\ \midrule
    \multicolumn{2}{l}{\textbf{Total}} &  & \textbf{745} $\pm$ \textbf{10} \\ \bottomrule
    \end{tabular}%
    } 
    \vspace{3mm}
    \caption[CAMELEON inference profiling]{Profiling for the inference of YOLOv5-N engine with integer
    precision on one CAMELEON image. Timings are measured on a Nvidia AGX
    Xavier.}
    \label{tab:depl_profiling}
    \end{table}

To assess how the precision reduction changes both the inference speed and
detection performance, we compare multiple deployment strategies with the
YOLOv5-N model. Specifically, we compare the inference speed of the Pytorch
model (with F32 and F16 precisions) with the TensorRT engines (F32, F16 and INT8
precisions). The inference times are measured both on an RTX 3090 and an Nvidia
AGX Xavier, with distinct image widths: 512 and 4096 pixels. The results of this
comparison are available in \cref{tab:model_precision_comparison}. First, there
is an impressive speed gap between the TensorRT engines and Pytorch models:
TensorRT engines are much faster. This is expected given all the optimizations
conducted and the relative slowness of Python. However, it is worth noting
different behaviors between the two GPUs. Of course, the RTX 3090 is faster, but
changing the engine precision from F32 to F16 does not produce similar gains for
the AGX Xavier. This is explained by the change of microarchitecture between
these two GPUs. The AGX Xavier is based on the Volta architecture while the RTX
3090 leverages the newer Ampere architecture. In particular, Ampere introduces a
new generation of Tensor core, specifically designed for F32 computations.
Reducing the precision of the models generates a slight overhead and higher gains
are observed when the GPU utilization is higher, \ie when using larger images or
increased batch size. This is why we report the results both for 512 and 4096
image widths. Inference gains are more important with larger image sizes,
fortunately for our application. Then, we also report the mAP for each model. We
observe no performance drop when switching from F32 to F16. But, there is a
slight decrease of mAP with the quantization in INT8. This is expected as a lot
of approximations are made in the computations. On the other hand, we also
observe a performance drop with the TensorRT conversion, with the F32 engine
this is not expected as no approximation should be done. While the drop is
acceptable, it should be investigated in depth to better understand its origin.
Finally, we report some qualitative results of the INT8 inference on a
CAMELEON-size image in \cref{fig:inference_xavier}. The inference on the whole
image was performed in less than a second on an AGX Xavier. 

\begin{table}[]
    \centering
    \vspace{8mm}
    \resizebox{0.9\textwidth}{!}{%
    \begin{tabular}{@{\hspace{2mm}}ccccclc@{\hspace{2mm}}}
    \toprule[1pt]
         & \multicolumn{2}{c}{\textbf{RTX 3090 Latency}} & \multicolumn{2}{c}{\textbf{AGX Xavier Latency}}       &  &    \\ \midrule 
    Image Size (pixels)    & 512          & 4096          & 512         & 4096         &  &  \textbf{mAP}$_{\bm{0.5}}$     \\ \midrule 
    Pytorch F32   & 6            & 36.3          & 87.5        & 2638.6       &  & 0.679 \\
    Pytorch F16   & 8.3          & 20.4          & 90.7        & 2616.5       &  & 0.679 \\
    TensorRT F32  & 0.75         & 17.16         & 4.65        & 199.22       &  & 0.565 \\
    TensorRT F16  & 0.44         & 7.22          & 4.64        & 106.2        &  & 0.565 \\
    TensorRT INT8 & 0.43         & 5.77          & 2.48        & 68.65        &  & 0.523 \\ \bottomrule[1pt]
    \end{tabular}%
    } 
    \vspace{2mm}
    \caption[Engine inference speed comparison]{Inference speed comparison (in ms) between various
    deployment strategies and precisions for YOLOv5-N. Inference times are
    measured with two image widths and on two GPUs. A high-end desktop GPU: RTX
    3090, and an embedded GPU: AGX Xavier. mAP with a 0.5 IoU threshold is also
    reported on DOTA.}
    \label{tab:model_precision_comparison}
    \end{table}

\begin{figure}
    \centering
    \includegraphics[width=\textwidth]{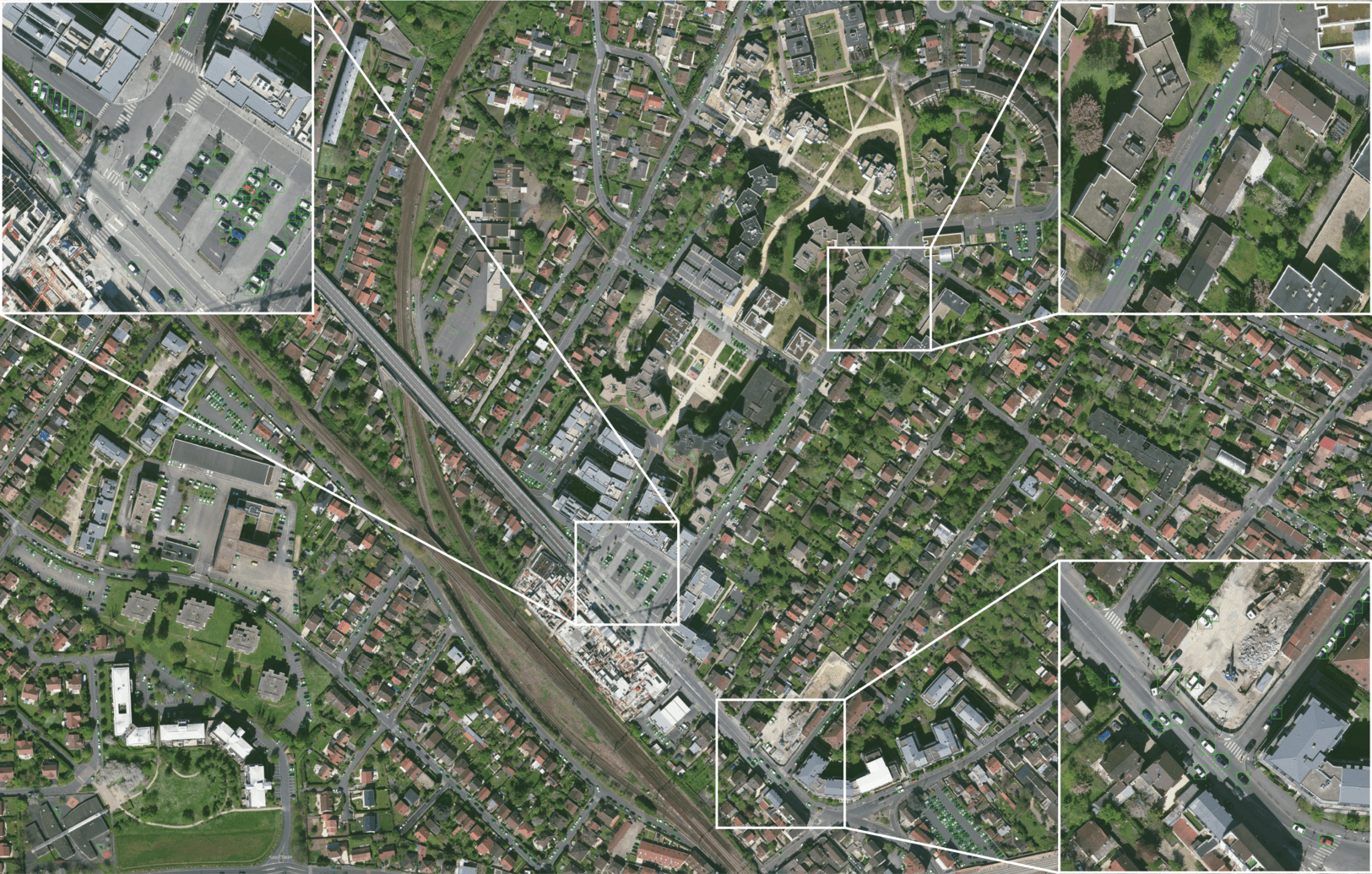}
    \caption[Qualitative detection performance of the deployed detection
    engine]{Qualitative detection performance on CAMELEON-size image with
    YOLOv5-N INT8 engine.}
    \label{fig:inference_xavier}
\end{figure}

More experiments are planned in future work to find an even better compromise
between inference speed and detection quality, especially with the Orin GPUs.
However, the compromise attained with YOLOv5-N is already satisfactory from an
industrial perspective and will be integrated into the first CAMELEON prototype.
The connection with the CAMELEON user interface \cref{fig:cameleon_ihm} has
already been done and the models wait for the first flight to be tested on real
images.   

\begin{figure}
    \centering
    \includegraphics[width=\textwidth]{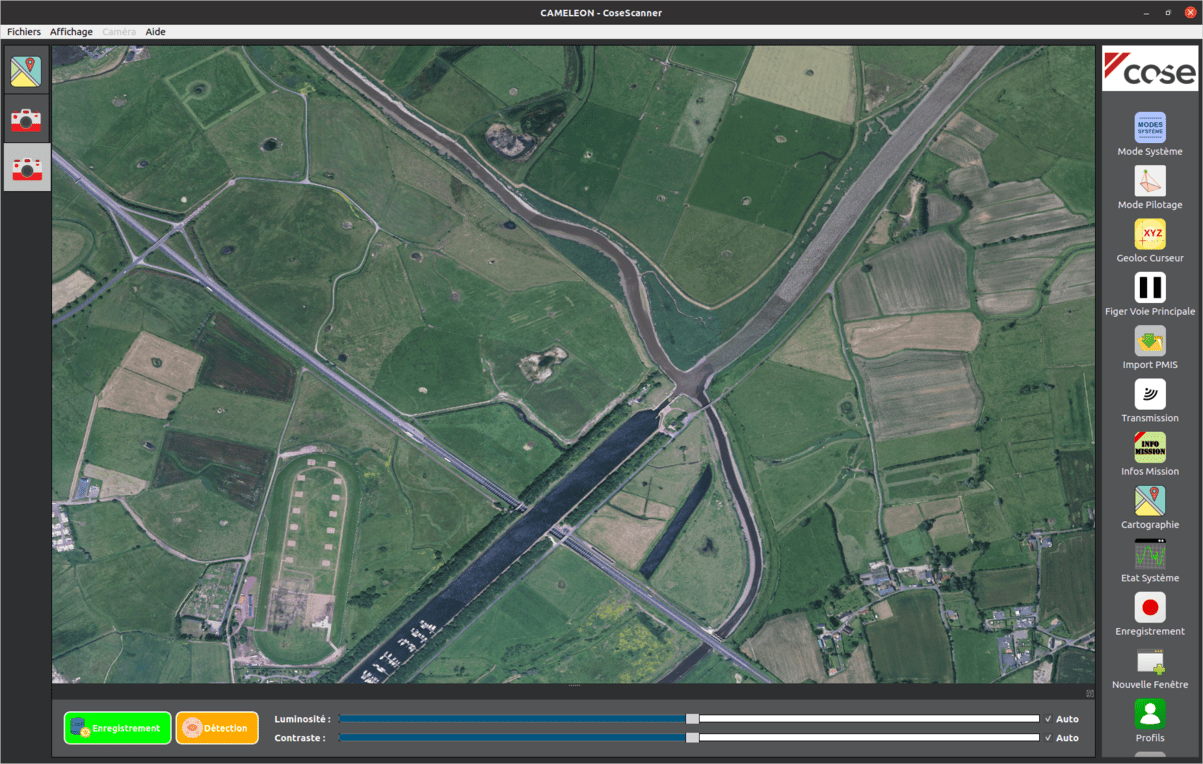}
    \caption{CAMELEON user interface.}
    \label{fig:cameleon_ihm}
\end{figure}

\section{Conclusion}
\vspace{-1em}
In this chapter, we have presented some tools and tricks that are leveraged to
greatly accelerate the inference of neural networks. Thanks to these tools, we
are able to achieve real-time detection on images of hundreds of megapixels,
embedded on a low-consumption GPU. Of course, better compromises could be found
with a more complete study of the YOLOv5 framework and especially by exploring
the influence of the model size on the performance with greater granularity.
Aside from finding a better compromise, it would be relevant to investigate
deeper knowledge distillation and unlock its full potential for object
detection. It is also required to explore new detection frameworks (\eg YOLOv8),
in particular with the recent AGX Orin. Its improved computation power could
allow for larger model sizes and increased performance while satisfying the time
constraint. Finally, this deployment has been conducted for regular detection
models, it should be done as well for Few-Shot detection models, even though it
complexifies greatly the analysis and the creation of TensorRT engines. While it
is unrealistic to deploy attention-based models in this manner (due to increased
inference time), fine-tuned models are much more adapted. It would increase the
adaptation time by adding the engine creation after fine-tuning, but the
inference will remain unchanged. Of course, it would be necessary to study the
influence of the model size on the performance and see if that is compatible
with CAMELEON's requirements.

\bookmarksetup{startatroot}
\makeatletter
\@addtoreset{table}{section}
\AtBeginEnvironment{table}{\ifnumequal{\value{table}}{0}{\addtocontents{lot}{\protect\addvspace{10pt}}}{}}
\makeatother
\stopcontents[chapters]
\chapter*{Conclusion and Perspectives}
\addcontentsline{toc}{chapter}{Conclusion and Perspectives}
\stepcounter{chapter}


In a nutshell, we first summarize the various contributions made
to the Few-Shot Object Detection field. Then, we take a step back from these
contributions and discuss what are the most promising research directions that
should be followed in future work. Finally, we present the remaining industrial
challenges that COSE will face before achieving robust and adaptable object
detection in an embedded environment.

\section*{Contribution Summary}
\addcontentsline{toc}{section}{Contribution summary}
\vspace{-1em}
In the first part of the manuscript we have reviewed thoroughly the literature
about Object Detection, Few-Shot Learning and especially Few-Shot Object
Detection. These three domains are the foundations of this PhD project. The
knowledge presented in the corresponding chapters is crucial for the development
of future FSOD algorithms and to meet the industrial needs of COSE. These three
fields are growing rapidly and it is necessary to remain up-to-date with
state-of-the-art both for academic and industrial research. It is especially
important for FSOD as it is a very recent problem and lacks consensus on how to
address it. Even if the FSOD literature is growing, it is still a small field
and most contributions are primarily designed for natural images. However, it is
not guaranteed that such techniques will perform well on other kinds of images.
In \cref{chap:aerial_diff}, we highlight especially that with an in-depth
analysis of the FSOD performance on aerial images, the performance drops
significantly when the methods are applied to these kinds of images. The main
reason behind this phenomenon is the smaller object size in aerial images. Even
if small objects are already difficult for regular object detection, the
challenge is much greater in the few-shot regime. In addition,
\cref{part:literature} highlights the organization of the FSOD literature into
three kinds of approaches. We divided \cref{part:contributions_fsod} into three
chapters accordingly, each focusing on a different FSOD approach: metric
learning, attention-based methods and fine-tuning strategies. Specifically, in
\cref{chap:prcnn}, we have proposed an original FSOD method entirely based on
metric learning. It embeds prototypical networks into the well-known Faster
R-CNN detection model. This naive approach achieves mitigated results but is
highly adaptable, it can adapt to novel classes without fine-tuning or heavy
computation. The experiments conducted with this model are instructive for the
development of future FSOD methods. Then, in \cref{chap:aaf}, we focused on the
attention-based mechanisms for FSOD. We proposed a modular framework to
implement and compare existing attention-based FSOD methods. Thorough
experiments on both aerial and natural images showed the superiority of local
attention mechanisms, called alignment. To reduce the performance gap between
natural and aerial images, a novel alignment technique is developed within the
framework to specifically address the detection of small objects. The resulting
FSOD approach outperforms existing work in the literature on aerial image
datasets as it improves largely the detection of small targets. Finally,
\cref{chap:diffusion} studied the last FSOD approach: the fine-tuning strategy.
Leveraging the recent DiffusionDet detector based on diffusion models, we
propose a simple fine-tuning approach that outperforms significantly previous
techniques. The simplicity of this approach and its impressive detection quality
allows for tackling more difficult scenarios such as Cross-Domain FSOD.
Fine-tuning based FSOD is also better suited for transductive inference than
metric-learning or attention approaches as it does not change the detection
model much. Both transductive inference and Cross-Domain scenarios are of
particular interest to COSE as they match better the real application. Our
experiments in these directions show promising results and should be extended in
future work. All detection models heavily rely on the Intersection over Union
(IoU) for training and evaluation. However, we showed in \cref{chap:siou_metric}
that it is not an optimal choice, especially when dealing with small objects.
Thus, it becomes critical when applying FSOD to aerial images. Therefore, we
proposed Scale-adaptative Intersection over Union to replace IoU. It
significantly improves the training of few-shot object detectors, as it allows
for a better balance between small and large objects. In addition, it aligns
better with human perception and is then a better choice for the evaluation of
object detectors. Then, with a more industrial mindset, we optimized and
deployed several object detectors inside the CAMELEON prototype. We explained
our process, the tools leveraged, and the compromises made in
\cref{chap:integration}. This chapter focuses only on regular object detection
as it is a first step before deploying more complex models able to generalize
either to new classes or new domains.

\begin{table}[h]
    \resizebox{\textwidth}{!}{%
    \begin{tabular}{@{\hspace{2mm}}crccc@{\hspace{2mm}}}
    \toprule[1pt]
                                 &                                                                                 & \textbf{Prototypical Faster R-CNN} & \textbf{XQSA}      & \textbf{FSDiffusionDet} \\ \midrule
                                 & \textbf{Approach}                                                               & Metric-Learning           & Attention & Fine-tuning    \\ \midrule
    \multirow{4}{*}{\textbf{Performance}} & Novel classes performance                                              & \rtd \rtd                 & \gtu      & \gtu\gtu              \\
                                 & Base classes performance                                                        & \rtd                      & \gtu      & \gtu\gtu              \\
                                 & \begin{tabular}[c]{@{}r@{}}Extremely low-shot \\[-1.5mm] performance\end{tabular}       & \rtd                      & \gtu      & \rtd              \\
                                 & Inference speed                                                                 & \gtu                      & \rtd      & \gtu              \\ \midrule
    \multirow{6}{*}{\textbf{Flexibility}} & Class scalability                                                      & \gtu                      & \rtd      & \gtu              \\
                                 & Shot scalability                                                                & \rtd                      & \gtu      & \gtu\gtu              \\
                                 & Fixed class number                                                              & \textcolor{OliveGreen}{\textbf{No}}& \textcolor{OliveGreen}{\textbf{No}}        & \textcolor{BrickRed}{\textbf{Yes}}            \\
                                 & Test-time adaptation                                                            & \textcolor{OliveGreen}{\textbf{Yes}} & \textcolor{BrickRed}{\textbf{No}}       & \textcolor{BrickRed}{\textbf{No}}             \\
                                 & \begin{tabular}[c]{@{}r@{}}Cross Domain Adaptation \\[-1.1mm] Capabilities\end{tabular} & Not tested                & Not tested         & \begin{tabular}[c]{@{}c@{}}\textcolor{OliveGreen}{\textbf{Promising}} \\[-1.5mm] \textcolor{OliveGreen}{\textbf{results}}\end{tabular}            \\ \midrule
    \multirow{4}{*}{\textbf{Training}}    & Base training time                                                     & \rtd                      & \rtd      & \gtu              \\
                                 & Fine-tuning required                                                            & \textcolor{OliveGreen}{\textbf{No}}& \textcolor{BrickRed}{\textbf{Yes}}       & \textcolor{BrickRed}{\textbf{Yes}}            \\
                                 & Fine-tuning time                                                                & \gtu                      & \rtd \rtd & \rtd              \\ 
                                 & Episodic training$^{\ast }$                                                               & \textcolor{BrickRed}{\textbf{Yes}} & \textcolor{BrickRed}{\textbf{Yes}} & \textcolor{OliveGreen}{\textbf{No}}             \\ \midrule
    \multirow{2}{*}{\textbf{Deployment}}  & Technical complexity$^{\star}$                                         & \gtu                      & \rtd         & \gtu\gtu              \\
                                 & TensorRT optimization$^{\dagger}$                                               & \rtd                      & \rtd\rtd         & \gtu              \\ \bottomrule[1pt]
    \end{tabular}%
    } \caption[Proposed methods overall comparison]{Comparison table between the
    three proposed approaches in this thesis: Prototypical Faster R-CNN, XQSA
    and FSDiffusionDet. The ratings are given according to the experiments
    conducted throughout this project and the insights generated. Green and red
    colors denote pros and cons, respectively, best viewed in colors. $^{\ast
    }$Episodic training is more complex and often subject to instabilities.
    $^{\star}$Technical complexity of the methods is a subjective criterion that
    measures how intricate a method is. $^{\dagger}$TensorRT optimization is not
    compatible with custom layers or complex data streams which greatly
    complexifies the optimization of more elaborated methods such as
    attention-based approaches.}
    \label{tab:conclusion_tab}
    \end{table}

Finally, we take a step back and compare the three approaches that we proposed
in this thesis: Prototypical Faster R-CNN, XQSA and FSDiffusionDet. Each has its
pros and cons, even if PFRCNN does not perform well it can be adapted at
test time which is a highly desirable property. On the contrary, FSDiffusionDet
is less flexible but achieves significantly higher performance. This comparison
is available in \cref{tab:conclusion_tab}, it compares the three methods on
various criteria, grouped into four categories: Performance, Flexibility,
Training and Deployment. These categories encompass the desired capabilities for
a detector model inside the CAMELEON system. 

\vspace{-0.5em}
\section*{Future Research Tracks}
\addcontentsline{toc}{section}{Future Research Tracks}
\vspace{-1.3em}
In this PhD, various directions have been explored for improving Few-Shot Object
Detection. We believe that we have provided significant contributions and
answered relevant questions in the FSOD field. However, our analysis raises new
questions and problems. First, we have proposed several improvements toward the
detection of small objects in the few-shot regime. Nonetheless, there is still a
significant performance gap between small and large objects and more effort
should be put in this direction. To this end, we have explored the attention
mechanisms and loss function design, but we have not looked into the training
examples selection strategy even though it may have a significant influence on
the small and large objects' balance during training. Incidentally, it would be
relevant to replace IoU inside the example selection process with SIoU. This
would favor smaller objects and certainly improves overall performance on aerial
images. At least, it would offer more control over the training balance between
small and large objects. SIoU is a controllable criterion, but it has a
drawback, it requires the setting of two parameters and their optimal choices
require some trials and errors. It would be of great use to come up with an
automatic strategy that could provide the optimal parameters for a given problem
or dataset. 

The main motivation of Few-Shot Learning is to mimic the human ability to learn
from very few examples. Of course, we are still far from solving FSOD, and a lot
of efforts are still needed to close the gap with fully supervised learning, not
to mention human-level perception. Nevertheless, the improvements made through
this project encourage the study of more complex yet more realistic settings.
One is particularly relevant for COSE's use case: Cross-Domain FSOD. This
problem is still mostly untouched in the literature. We conducted some early
experiments in this direction in \cref{chap:diffusion} and demonstrated
promising results. However, dedicated designs are required to get real
improvements as applying FSOD methods directly on this task is certainly not
optimal. Many directions are worth a try, taking inspiration from classification
methods, \eg generative modeling or discrepancy-based adaptation. The
transductive inference is another promising approach, it makes some additional
assumptions, but it is still realistic at least from COSE's perspective The
transductive approach has not been explored for detection and could bring
desirable properties to the FSOD field, especially test-time adaptability which
is still very challenging.

\vspace{-0.5em}
\section*{Remaining Industrial Challenges}
\addcontentsline{toc}{section}{Remaining Industrial Challenges}
\vspace{-1.3em}
Now from COSE's perspective, plenty of challenges are still to be addressed.
First, the current CAMELEON prototype only has a single camera but the final
product is meant to have five more. It would probably be quite difficult to achieve
real-time detection on all images. Of course, we could continue optimizing the
models and find better speed/accuracy tradeoffs. The recent advances in the YOLO
family are promising, and substantial gains are achieved with newer GPUs.
However, it may not be sufficient to multiply the throughput of the detection
pipeline by six. Fortunately, as there is some overlapping between the images, it
is not required to process all six data streams independently. Instead, we could
determine the overlapping areas to avoid processing them multiple times. But
that is not trivial and requires image registration techniques which are also
expensive, especially at the size of CAMELEON's images. 

While the current prototype achieves satisfactory detection performance, there
is still room for improvement. Our attempt to apply Knowledge Distillation (KD)
to aerial images has been fruitless. Nevertheless, distillation is a promising
direction to improve detection performance while maintaining a fixed computation
budget. Some effort should be spent on developing novel KD methods that are also
beneficial with aerial images and small objects.  

Currently, only regular object detection algorithms have been deployed inside
the prototype. While this is certainly a major step, it is still limited by the
training classes and domain. As it is not possible to know in advance the kind
of images that will be encountered during missions, mainly because of
confidentiality constraints, we cannot guarantee the quality of the detections
in operation. It would be extremely valuable to achieve real-time class and
domain adaptation from a few examples. It would result in a single model that
could be adapted on-the-fly (literally) according to the operation's needs.
However, state-of-the-art FSOD is still far from this and COSE should probably
not invest in this direction in the short run, especially as such adaptative
methods will probably come at the cost of lower detection quality and slower
inference. Instead, COSE should focus on developing an efficient fine-tuning
platform that could be used by the forces to train their own models for specific
missions. It could not be done during a mission, but afterward using the images
collected and a few annotations. It would help to analyze the mission data
faster and could produce new real-time models for subsequent similar missions.
The key would be to conceive a unique application that can handle image
annotation, model fine-tuning and deployment, all at once.

\cleardoublepage
\phantomsection
\addcontentsline{toc}{chapter}{Bibliography}
\printbibliography

\cleardoublepage
\appendix

\renewcommand{\chaptermark}[1]{\markboth{Appendix~\thechapter~-~ #1}{}}
\phantomsection
\addcontentsline{toc}{part}{Appendices}
\chapter{Proofs of SIoU's Properties}
\label{app:properties}

In this appendix, we provide the proofs for
\cref{property:relaxation,property:l_g_reweight}, and discuss the
\textit{order-preservigness} of SIoU.

\begin{appproperty}[SIoU Relaxation]
    \label{app_property:app_relaxation}
    Let $b_1$ and $b_2$ be two bounding boxes and introduce $\tau = \frac{w_1h_1 +
    w_2h_2}{2}$ their average area. SIoU preserves the behavior of IoU in
    certain cases such as:
   \begin{itemize}
    \setlength\itemsep{0em}
        \item[-]$\textup{IoU}(b_1, b_2) = 0 \Rightarrow \textup{SIoU}(b_1, b_2) =\textup{IoU}(b_1, b_2) = 0$
        \item[-]$\textup{IoU}(b_1, b_2) = 1 \Rightarrow\textup{SIoU}(b_1, b_2) =\textup{IoU}(b_1, b_2) = 1$
        \item[-]$\lim\limits_{\tau \to +\infty} \textup{SIoU}(b_1, b_2) =\textup{IoU}(b_1,b_2)$
        \item[-]$\lim\limits_{\kappa\to 0} \textup{SIoU}(b_1, b_2) =\textup{IoU}(b_1,b_2)$
   \end{itemize}
\end{appproperty}

\begin{proof}
    First let recall the expression of SIoU, $\text{SIoU}(b_1, b_2) =
    \text{IoU}(b_1, b_2) ^ p$ with $p = 1 - \gamma
    \exp\left({-\frac{\sqrt{\tau}}{\kappa}}\right)$. $\tau >0$ because boxes
    cannot be empty and as $\gamma \in ]-\infty, 1]$ and $\kappa \in
    \mathbb{R}_+^*$, we have $p>0$.\\

    \vspace{5mm}
    \noindent
    From this, the first two items of \cref*{property:relaxation} follow clearly.\\ 

    \noindent
    The two other points hold because the function $f \colon x \mapsto
    \text{IoU}(b_1, b_2)^x $ is continuous on $\mathbb{R}$ for any couple of
    boxes $b_1$ and $b_2$ ($\text{IoU}(b_1, b_2) \in [0,1]$) and  
    $\lim\limits_{\tau \to \infty} p = \lim\limits_{\kappa \to 0} p = 1$.

\end{proof}

\pagebreak
\begin{appproperty}[Loss and gradients reweighting]
    \label{property:app_l_g_reweight}
    Let $\mathcal{L}_{\textup{IoU}}(b_1, b_2) = 1 - \textup{IoU}(b_1, b_2)$ and
    $\mathcal{L}_{\textup{SIoU}}(b_1, b_2) = 1 - \textup{SIoU}(b_1, b_2)$ be the
    loss functions associated respectively with IoU and SIoU. Let denote the
    ratio between SIoU and IoU losses by 
    $\mathcal{W}_{\mathcal{L}}(b_1, b_2) =
    \frac{\mathcal{L}_{\textup{SIoU}}(b_1, b_2)}{\mathcal{L}_{\textup{IoU}}(b_1,
    b_2)}$. 
    Similarly, 
    $\mathcal{W}_{\mathcal{\nabla}}(b_1, b_2) =
    \frac{|\nabla\mathcal{L}_{\textup{SIoU}}(b_1, b_2)|}{|\nabla\mathcal{L}_{\textup{IoU}}(b_1,
    b_2)|}$ 
    denotes the ratio of gradients generated from SIoU and IoU losses: 
    \begin{align}
        \mathcal{W}_{\mathcal{L}}(b_1, b_2) &= \frac{1- \textup{IoU}(b_1, b_2)^p}{1-\textup{IoU}(b_1, b_2)}, \\
        \mathcal{W}_{\mathcal{\nabla}}(b_1, b_2) &= p\textup{IoU}(b_1, b_2)^{p-1},
    \end{align}

    \noindent
    $\mathcal{W}_{\mathcal{L}}$ and $\mathcal{W}_{\mathcal{\nabla}}$ are
    increasing (resp. decreasing) functions of IoU when $p\geq 1$ (resp. $p <
    1$) which is satisfied when $\gamma \leq 0$ (resp. $\gamma > 0$). As the IoU
    goes to 1, $\mathcal{W}_{\mathcal{L}}$ and $\mathcal{W}_{\mathcal{\nabla}}$
    approaches $p$: 
    \begin{align}
        \lim\limits_{\textup{IoU}(b_1, b_2) \to 1}\mathcal{W}_{\mathcal{L}}(b_1, b_2) &=  p, \\
        \lim\limits_{\textup{IoU}(b_1, b_2) \to 1}\mathcal{W}_{\mathcal{\nabla}}(b_1, b_2) &=  p.
    \end{align}

\end{appproperty}

\begin{proof}
    Let denote the $\text{IoU}(b_1, b_2)$ by $\mu$, and define two functions $ f
    \colon \mu \mapsto 1- \mu = \mathcal{L}_{\text{IoU}}(b_1, b_2)$ and $g
    \colon \mu \mapsto 1- \mu^p = \mathcal{L}_{\text{SIoU}}(b_1, b_2)$. \\
    
    \noindent
    $f$ and $g$ are differentiable on $[0,1]$ and $\lim\limits_{\mu \to 1}
    f(\mu) = \lim\limits_{\mu \to 1} g(\mu) = 0$. This holds because $p$ is
    independent of the IoU (\ie $\mu$).
    
    \noindent
    Therefore L'Hôpital's rule can be applied:\\
    $\lim\limits_{\mu \to 1}\mathcal{W}_{\mathcal{L}} = \lim\limits_{\mu \to
    1}\frac{g(\mu)}{f(\mu)} = \lim\limits_{\mu \to 1}\frac{g'(\mu)}{f'(\mu)} =
    \lim\limits_{\mu \to 1} p\mu^{p-1} = p$.
    
    \noindent
    The expression of the second ratio $\mathcal{W}_{\mathcal{\nabla}}(b_1,
    b_2)$ follows directly as $|\nabla\mathcal{L}_{\textup{SIoU}}(b_1, b_2)| =
    g'(\mu)$ and $|\nabla\mathcal{L}_{\textup{IoU}}(b_1, b_2)| = f'(\mu)$, hence
    $\lim\limits_{\mu \to 1}\mathcal{W}_{\nabla} = \lim\limits_{\mu \to 1}
    p\mu^{p-1} = p$.
\end{proof}

\noindent
\textbf{Order-preservigness} \\
Let us take three boxes $b_1$, $b_2$, and $b_3$. Order-preservigness does not
hold for SIoU. Therefore $\text{IoU}(b_1, b_2) \leq  \text{IoU}(b_1, b_3)$ does
not imply $\text{SIoU}(b_1, b_2) \leq  \text{SIoU}(b_1, b_3)$. However, this
property is often true in practice. Denoting by $\tau_{i,j}$ the average area
between boxes $i$ and $j$, we can study the conditions for the order to hold. We
will also note $p_{i,j} = 1- \gamma
\exp\left({-\frac{\sqrt{\tau_{i,j}}}{\kappa}}\right)$\\

\noindent
Let's suppose, without loss of generality, that $\tau_{1,2} \leq \tau_{1,3}$.
Otherwise, cases 1 and 2 would be swapped. \\

\noindent
\textbf{Case 1: } $\gamma \leq 0$\\
We have $p_{1,2} > p_{1,3}$ as $\tau_{1,2} \leq \tau_{1,3}$ and $\gamma \leq 0$.
Therefore, $p_{1,2} = p_{1,3} + \varepsilon$, with $\varepsilon > 0$. \\

\noindent
Then,
\begin{equation*}
    \begin{aligned}
        \text{IoU}(b_1,b_2)^{p_{1,2}} &= \text{IoU}(b_1,b_2)^{p_{1,3} +\varepsilon}\\
                                        & = \text{IoU}(b_1,b_2)^{p_{1,3}}\text{IoU}(b_1,b_2)^{\varepsilon}\\
                                        &\leq \text{IoU}(b_1,b_2)^{p_{1,3}} \\
                                        &\leq \text{IoU}(b_1,b_3)^{p_{1,3}}
    \end{aligned}
\end{equation*}

\noindent
Line 3 holds because $0 < \text{IoU}(b_1,b_2)^{\varepsilon} \leq 1$. Line 4 is
true because $\text{IoU}(b_1, b_2) \leq  \text{IoU}(b_1, b_3)$ and the function
$h\colon x \mapsto x^{p_{1,3}}$ is monotonically increasing. Hence, when $\gamma
\leq 0$ the order is preserved.\\

\noindent
\textbf{Case 2: } $\gamma > 0$\\
We have $p_{1,3} > p_{1,2}$ as $\tau_{1,3} \leq \tau_{1,2}$ and $\gamma > 0$.
Therefore, $p_{1,3} = p_{1,2} + \varepsilon$, with $\varepsilon > 0$. \\

\noindent
In this case, the order does not always hold, counter-examples can be found.
However, it is useful to study the conditions for it to hold:
$$\text{IoU}(b_1, b_2)^{p_{1,2}} \leq  \text{IoU}(b_1, b_3)^{p_{1,3}} \Leftrightarrow \frac{\ln
\big(\text{IoU}(b_1, b_2)\big)}{\ln\big(\text{IoU}(b_1, b_3)\big)} \geq \frac{p_{1,3}}{p_{1,2}}$$

\noindent
In practice, the right condition is often true as $p_{1,2}$ and $p_{1,3}$ are
close due to scale matching, a trick present in most detection frameworks to
prevent comparison of proposals and ground truth of very different sizes. In
addition, the ratio of log values gets large rapidly, even if the gap between
$\text{IoU}(b_1, b_2)$ and $\text{IoU}(b_1, b_3)$ is small, the ratio $\frac{\ln
\big(\text{IoU}(b_1, b_2)\big)}{\ln\big(\text{IoU}(b_1, b_3)\big)}$ can be
large.\\

\noindent
The order-preservigness property holds in many cases. This is sensible as
IoU is still a reliable metric that has been used extensively for the training
and evaluation of detection models. However, in the rare cases where this order
is broken, the $\text{IoU}(b_1, b_2)$ and $\text{IoU}(b_1, b_3)$ are close, so
IoU does not discriminate much between the boxes. On the contrary, SIoU prefers
the smallest one (or largest one, according to the choice of $\gamma$). This
stronger discrimination is probably beneficial for training and evaluation.

\chapter{Theoretical GIoU's Distribution Analysis}
\label{app:theoretical_giou_proof}
\vspace{-2em}
In this appendix, we give a proof of the formulas of the probability density
function, expected value, and variance of GIoU. We also provide some
non-closed-form expressions for other criteria.

\begin{appproposition}[GIoU's distribution]
    \label{prop:appdistribution_giou}
    Let $b_1 = (0,y_1,w_1,h_1)$ be a bounding box horizontally centered and $b_2
    = (X,y_2,w_2,h_2)$ another bounding box randomly positioned, with $X\sim
    \mathcal{N}(0, \sigma^2)$ and $\sigma \in \mathbb{R}^*_+$. Let's suppose that
    the boxes are identical squares, shifted only horizontally (\ie $w_1 = w_2 = h_1 =
    h_2$ and $y_1 = y_2$). \\
    \noindent
    Let $Z = \mathfrak{C}(X)$, where $\mathfrak{C}$ is the generalized intersection over union. The
    probability density function of $Z$ is given by:
    \begin{align}
        d_Z(z) &= \frac{4\omega}{(1+z)^2\sqrt{2\pi} \sigma} \exp\left(-\frac{1}{2}\left[\frac{\omega(1-z)}{\sigma(1+z)}\right]^2\right).
    \end{align}

    \noindent
    The two first moments of $Z$ exist and are given by:
    \begin{align}
        \mathbb{E}[Z] &= \frac{2}{\pi^{3/2}} G^{2,3}_{3,2}\left(2a^2 \bigg\rvert \begin{matrix}0 & \frac{1}{2} & \frac{1}{2} \\ \frac{1}{2}& 0 \end{matrix}\right), \\
        \mathbb{E}[Z^2] &= 1- \frac{8a}{\sqrt{2\pi}} + \frac{16a^2}{\pi^{3/2}} G^{2,3}_{3,2}\left(2a^2 \bigg\rvert \begin{matrix}-1 & \frac{1}{2} & -\frac{1}{2} \\ \frac{1}{2}  & 0 \end{matrix}\right),
    \end{align}

    \noindent
    where $G$ is the Meijer G-function \cite{meijer2013} and $a=\frac{\sigma}{\omega}$. 
\end{appproposition}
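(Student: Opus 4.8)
The plan is to proceed in three stages: first obtain the explicit form of GIoU for the two squares, then transport the Gaussian law of $X$ through it to get $d_Z$, and finally reduce the moments to Mellin--Barnes integrals that can be recognised as Meijer $G$-functions. First I would establish that, for two identical axis-aligned squares of side $\omega$ sharing the same vertical centre and offset horizontally by $x$, the intersection, union and convex-hull areas are elementary: the intersection is $\omega(\omega-|x|)_+$, the union is $\omega(\omega+|x|)$ when $|x|\le\omega$ and $2\omega^2$ otherwise, and the convex hull is the enclosing rectangle of area $\omega(\omega+|x|)$. A short case split on $|x|\lessgtr\omega$ collapses the definition of GIoU to the single expression $\mathfrak{C}(x)=\frac{\omega-|x|}{\omega+|x|}$ on all of $\mathbb{R}$, matching the entry of \cref{tab:pdf_crit}. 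Since $\mathfrak{C}$ is even and $X$ is symmetric, I would work with $|X|$, whose density is the half-normal $\frac{2}{\sqrt{2\pi}\sigma}e^{-x^2/2\sigma^2}$. On $[0,\infty)$ the map $x\mapsto z=\frac{\omega-x}{\omega+x}$ is a strictly decreasing bijection onto $(-1,1]$ with inverse $x=\omega\frac{1-z}{1+z}$ and $|dx/dz|=\frac{2\omega}{(1+z)^2}$; the change-of-variables formula then yields $d_Z$ directly, reproducing the stated PDF.

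Next I would compute the moments as $\mathbb{E}[Z^k]=\frac{2}{\sqrt{2\pi}\sigma}\int_0^\infty\big(\frac{\omega-x}{\omega+x}\big)^k e^{-x^2/2\sigma^2}\,dx$ and rescale by $u=x/\omega$, $a=\sigma/\omega$ to reach the dimensionless integrals $\frac{2}{\sqrt{2\pi}a}\int_0^\infty\big(\frac{1-u}{1+u}\big)^k e^{-u^2/2a^2}\,du$. Boundedness of $Z$ on $[-1,1]$ guarantees that the moments exist, so the only real work is the evaluation. The tool I would use is Parseval's identity for the Mellin transform: the Gaussian factor has Mellin transform $\frac12(2a^2)^{s/2}\Gamma(s/2)$, and the rational factor $\big(\frac{1-u}{1+u}\big)^k$ has a Mellin transform expressible through $\Gamma(s)\Gamma(1-s)=\pi/\sin(\pi s)$ and its shifts.

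Combining these produces a Mellin--Barnes contour integral in which, after the substitution $s=2w$ and the Legendre duplication formula $\Gamma(2w)=\frac{2^{2w-1}}{\sqrt{\pi}}\Gamma(w)\Gamma(w+\tfrac12)$ applied to both the $\Gamma(2w)$ and $\Gamma(1-2w)$ factors, the integrand becomes a pure product of Gamma functions of $\pm w$ times $(2a^2)^w$. That is exactly the kernel of a $G^{2,3}_{3,2}$ function, and reading off its parameters gives the claimed expression for $\mathbb{E}[Z]$. For $\mathbb{E}[Z^2]$ the expansion $\big(\frac{1-u}{1+u}\big)^2=1-\frac{4}{1+u}+\frac{4}{(1+u)^2}$ separates out the elementary contributions (the constant $1$ from the half-normal normalisation and the $-\frac{8a}{\sqrt{2\pi}}$ term via an integration by parts on the $(1+u)^{-2}$ piece), leaving a single Meijer-$G$ remainder with shifted parameters and prefactor $\frac{16a^2}{\pi^{3/2}}$.

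The hard part will be the bookkeeping in this last stage: obtaining the Mellin transform of $\big(\frac{1-u}{1+u}\big)^k$ on the correct vertical strip, tracking the convergence region so that Parseval applies, and then matching the resulting Gamma-product to the precise index set $\big(\begin{smallmatrix}0&\frac12&\frac12\\ \frac12&0\end{smallmatrix}\big)$. The duplication-formula rearrangement and the parameter-shift identity $z^{\rho}G^{m,n}_{p,q}(z\,|\,\{a_i\};\{b_j\})=G^{m,n}_{p,q}(z\,|\,\{a_i+\rho\};\{b_j+\rho\})$ are where sign and half-integer slips are most likely to occur, so I would cross-check the final closed forms numerically against the defining integrals for several values of $a$ before finalising.
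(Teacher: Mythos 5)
Your density derivation and the overall architecture for the moments essentially coincide with the paper's proof: the paper computes $F_Z(z)=2\bigl(1-F_X(\omega\tfrac{1-z}{1+z})\bigr)$ and differentiates, which is the same monotone change of variables you perform on the half-normal law of $|X|$, and the paper likewise obtains the Meijer functions by Mellin-transform recognition of the residual integrals. The genuine problems are two concrete steps in your moment computation. First, the Mellin--Parseval pairing cannot start from $\bigl(\tfrac{1-u}{1+u}\bigr)^k$ as you state: this function tends to $(-1)^k$ at infinity, so $\int_0^\infty u^{s-1}\bigl(\tfrac{1-u}{1+u}\bigr)^k\,du$ needs $\Re s>0$ for convergence at the origin and $\Re s<0$ at infinity, \ie its Mellin transform exists on no strip. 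You must first split off the limit value, $\tfrac{1-u}{1+u}=\tfrac{2}{1+u}-1$, and pair only $\tfrac{1}{1+u}$ (Mellin transform $\pi/\sin\pi s$ on $0<\Re s<1$) with the Gaussian; this split is exactly the paper's first algebraic move. Be aware that the split produces an additive constant: the paper's own derivation ends with $\frac{2}{\pi^{3/2}}G^{2,3}_{3,2}(\cdot)-1$ while the stated proposition omits the $-1$, so the numerical cross-check you plan will have to adjudicate that discrepancy.

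Second, the $\mathbb{E}[Z^2]$ bookkeeping does not come out as you claim. With your expansion $\bigl(\tfrac{1-u}{1+u}\bigr)^2=1-\tfrac{4}{1+u}+\tfrac{4}{(1+u)^2}$, the only workable integration by parts on the $(1+u)^{-2}$ piece differentiates the Gaussian (whose antiderivative is not elementary), giving
\begin{equation*}
\int_0^\infty \frac{e^{-u^2/2a^2}}{(1+u)^2}\,du \;=\; 1-\frac{1}{a}\sqrt{\tfrac{\pi}{2}}+\frac{1}{a^2}\int_0^\infty \frac{e^{-u^2/2a^2}}{1+u}\,du,
\end{equation*}
so the elementary part of your route is $1+\tfrac{8}{\sqrt{2\pi}a}-\tfrac{4}{a^2}$, not $1-\tfrac{8a}{\sqrt{2\pi}}$, and the transcendental remainder is a multiple of the \emph{first-moment} integral rather than a single Meijer function with prefactor $\tfrac{16a^2}{\pi^{3/2}}$. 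The paper reaches the stated form by the other decomposition, $\bigl(\tfrac{1-u}{1+u}\bigr)^2=1-\tfrac{4u}{(1+u)^2}$, and integrates by parts the factor $u\,e^{-u^2/2a^2}$, which \emph{is} elementarily integrable; the boundary term yields exactly $-\tfrac{8a}{\sqrt{2\pi}}$ and the surviving integral $\int_0^\infty (1+u)^{-3}e^{-u^2/2a^2}\,du$ is the one identified with $G^{2,3}_{3,2}\bigl(2a^2 \big\rvert \begin{smallmatrix}-1 & 1/2 & -1/2 \\ 1/2 & 0\end{smallmatrix}\bigr)$. Your representation is equivalent in value, but matching the stated parameter set from it would require additional contiguous-relation manipulations of the Meijer functions; adopt the paper's decomposition instead.
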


\begin{proof}
    First, in the setup described in \cref{prop:distribution_giou}, GIoU can be
    expressed in terms of the width of the boxes $\omega$ and the shift in
    between $x$. Let's call this function $\mathfrak{C}$: 
    \begin{align*}
        \mathfrak{C} \colon \mathbb{R} &\to [-1,1] \\
                 x &\mapsto \frac{\omega- |x|}{\omega + |x|}
    \end{align*}

    \noindent
    The shifts are sampled from a Gaussian distribution: $X \sim \mathcal{N}(0,
    \sigma^2)$, therefore we are interested in the distribution of the variable
    $Z=\mathfrak{C}(X)$.

    \begingroup
    \allowdisplaybreaks
    The cumulative density function of $Z$ is given by:
    \begin{align*}
        F_Z(z) = P(Z \leq z) &= P( \frac{\omega- |X|}{\omega + |X|} \leq y)\\
                            & = P( \omega\frac{1- z}{1+z} \leq |X|) \\
                            & = 2 P( \omega\frac{1- z}{1+z} \leq X) \\
                            & = 2 (1 - F_X(\omega\frac{1- z}{1+z})) \\
                            & = 2 (1 - F_X(g(z)))
    \end{align*}
    With $g(z) = \omega(\frac{1- z}{1+z})$.
    Hence, the density function of $Z$ can be derived by taking the derivative of $F_Z$:

    \begin{align*}
        d_Z(z) &= \frac{d}{dz} F_Z(z) \\
                &= -2 g'(z) F_X'(g(y)) \\
                &= \frac{4\omega}{(1+z)^2\sqrt{2\pi} \sigma} \exp\left(-\frac{1}{2}\left[\frac{\omega(1-z)}{\sigma(1+z)}\right]^2\right)
    \end{align*}

    To determine the first and second moments of $Z$, we make use of the change of
    variable formula: 
    \begin{align*}
        \mathbb{E}[Z] = \mathbb{E}[\mathfrak{C}(X)] = \int_{-\infty}^{+\infty} \mathfrak{C}(x)d_X(x)\,dx \\
        \mathbb{E}[Z^2] = \mathbb{E}[\mathfrak{C}(X)^2] = \int_{-\infty}^{+\infty} \mathfrak{C}(x)^2d_X(x)\,dx
    \end{align*}

    Let's start with $\mathbb{E}[Z]$:
    \begin{align*}
        \mathbb{E}[Z]& = \int_{-\infty}^{+\infty} \mathfrak{C}(x)d_X(x)\,dx \\
                    &= \int_{-\infty}^{+\infty} \frac{1}{\sqrt{2\pi} \sigma}\frac{\omega- |x|}{\omega + |x| } e^{-\frac{x^2}{2\sigma^2}}\,dx \\
                    &= \frac{2}{\sqrt{2\pi} \sigma}\int_{0}^{+\infty}  \frac{\omega- x}{\omega + x}e^{-\frac{x^2}{2\sigma^2}}\,dx \\
                    &= \frac{2}{\sqrt{2\pi} \sigma}\int_{0}^{+\infty}  \frac{2\omega- (\omega + x)}{\omega + x}e^{-\frac{x^2}{2\sigma^2}}\,dx \\
                    &= \frac{2}{\sqrt{2\pi} \sigma}\left[2\int_{0}^{+\infty}  \frac{\omega}{\omega + x}e^{-\frac{x^2}{2\sigma^2}}\,dx - \int_{0}^{+\infty}  e^{-\frac{x^2}{2\sigma^2}}\,dx\right] \\
                    &= \frac{4}{\sqrt{2\pi} \sigma}\int_{0}^{+\infty}  \frac{\omega}{\omega + x}e^{-\frac{x^2}{2\sigma^2}}\,dx - 1 \\
                    &= \frac{4}{\sqrt{2\pi} a}\int_{0}^{+\infty}  \frac{1}{1 + u}e^{-\frac{u^2}{2a^2}}\,du - 1 \\
                    &= \frac{4}{\sqrt{2\pi} a} \frac{\sqrt{2}a}{2\pi} G^{2,3}_{3,2}\left(2a^2 \bigg\rvert \begin{matrix}0 & \frac{1}{2} & \frac{1}{2} \\ \frac{1}{2}& 0 \end{matrix}\right) - 1 \\
                    &= \frac{2}{\pi^{3/2}} G^{2,3}_{3,2}\left(2a^2 \bigg\rvert \begin{matrix}0 & \frac{1}{2} & \frac{1}{2} \\ \frac{1}{2}& 0 \end{matrix}\right) -1
        \vspace{-0.5em}
    \end{align*}

    \noindent
    From line 2 to 3, we used the parity of function $\mathfrak{C}$, between 6
    and 7, a change of variable $u=x/\omega$ is done and $a$ is set to $\sigma /
    \omega$. Finally, in the second-to-last line, we identify a Meijer-G
    function \cite{meijer2013} evaluated at $2a^2$.  Unfortunately, there exist
    no closed-form of the integral $ \int_{0}^{+\infty}  \frac{1}{1 +
    u}e^{-\frac{u^2}{2a^2}}\,dx$. In this case, a Mellin transform of this
    integral allows recognizing a Meijer-G function. For other criteria, their
    first two moments cannot be expressed in a similar closed form. That is why we only
    provide the theoretical expressions of the expectation and variance of GIoU.

    A similar derivation leads to the expression of the second moment of $Z$:
    \begin{align*}
        \mathbb{E}[Z^2]& = \int_{-\infty}^{+\infty} \mathfrak{C}(x)^2d_X(x)\,dx \\
                    &= \int_{-\infty}^{+\infty} \frac{1}{\sqrt{2\pi} \sigma}\big(\frac{\omega- |x|}{\omega + |x| }\big)^2 e^{-\frac{x^2}{2\sigma^2}}\,dx \\
                    &= \frac{2}{\sqrt{2\pi} \sigma}\int_{0}^{+\infty}  \big(\frac{\omega- x}{\omega + x}\big)^2e^{-\frac{x^2}{2\sigma^2}}\,dx \\
                    &= 1 - \frac{8\omega}{\sqrt{2\pi} \sigma}\int_{0}^{+\infty}  \frac{x}{(\omega + x)^2}e^{-\frac{x^2}{2\sigma^2}}\,dx \\
                    &= 1 - \frac{8}{\sqrt{2\pi} a}\left[ a^2 -2\sigma^2\int_{0}^{+\infty}  \frac{1}{(\omega + x)^3}e^{-\frac{x^2}{2\sigma^2}}\,dx\right] \\
                    &= 1 - \frac{8}{\sqrt{2\pi} a}\left[ a^2 -2a^2\int_{0}^{+\infty}  \frac{1}{(1 + u)^3}e^{-\frac{u^2}{2a^2}}\,du\right] \\
                    &=1- \frac{8a}{\sqrt{2\pi}} + \frac{16a^2}{\pi^{3/2}} G^{2,3}_{3,2}\left(2a^2 \bigg\rvert \begin{matrix}-1 & \frac{1}{2} & -\frac{1}{2} \\ \frac{1}{2}  & 0 \end{matrix}\right)   
        \vspace{-0.5em}
    \end{align*}
    
    From line 2 to 3, we again use the parity of $\mathfrak{C}$, from 4 to 5, an
    integration by parts is done, and finally, from 5 to 6, we apply the change of
    variable $u=x/\omega$. Once again, we get an integral that does not have any
    closed form but can be expressed by another Meijer-G function.
    \endgroup

    \setlength{\belowdisplayskip}{3pt}
    For completeness, we recall here the definition of the Meijer-G function: 
    \begin{align}
        \label{eq:meijer_g}
        G^{m,n}_{p,q}\left(z \bigg\rvert \begin{matrix}a_1 & ... & a_p \\ b_1 & ... & b_q \end{matrix}\right) =\frac{1}{2 \pi i} \int_{L}\frac{\prod\limits_{j=1}^m\Gamma(b_j - s)\prod\limits_{j=1}^n\Gamma(1 - a_j + s)}{\prod\limits_{j=m+1}^q\Gamma(1 - b_j + s)\prod\limits_{j=n+1}^p\Gamma(a_j - s)} z^s\, ds,
        \vspace{-1em}
    \end{align}
    
    where $L$ is the integration path and $\Gamma$ is the gamma function. $m$,
    $n$, $p$ and $q$ are integers while $a_j$ and $b_j$ are real or complex
    numbers. There are some constraints on these parameters, but we do not detail
    them here, they can be found in \cite{meijer2013}. 
\end{proof}

\newpage\null\thispagestyle{empty}\newpage
\end{document}